\newtheorem{theorem}{Theorem}[section]
\def\BibTeX{{\rm B\kern-.05em{\sc i\kern-.025em b}\kern-.08em
    T\kern-.1667em\lower.7ex\hbox{E}\kern-.125emX}}
\newcommand*{\addFileDependency}[1]{% 
\typeout{(#1)}% 
\@addtofilelist{#1}

\IfFileExists{#1}{}{\typeout{No file #1.}}
}\makeatother
\newcommand*{\myexternaldocument}[1]{%
\externaldocument{#1}%
\addFileDependency{#1.tex}%
\addFileDependency{#1.aux}%
}
\begin{document}

\title{Semantic-Preserving Feature Partitioning for Multi-View Ensemble Learning}

\author{Mohammad~Sadegh~Khorshidi\orcidlink{0000-0001-6556-2926}% <-this % stops a space
, Navid~Yazdanjue\orcidlink{0000-0001-9670-8422}%
, Hassan~Gharoun\orcidlink{0000-0001-8298-7512}%
, Danial~Yazdani\orcidlink{0000-0002-7799-5013},~\IEEEmembership{Member,~IEEE}%
, Mohammad~Reza~Nikoo\orcidlink{0000-0002-3740-4389}%
, Fang~Chen\orcidlink{0000-0003-4971-8729}%
, and Amir~H.~Gandomi\orcidlink{0000-0002-2798-0104},~\IEEEmembership{Senior Member,~IEEE}
        % <-this % stops a space
\thanks{Mohammad Sadegh Khorshidi, Navid Yazdanjue, Hassan Gharoun, Danial Yazdani, Fang Chen, and Amir H. Gandomi are with the Faculty of Engineering \& Information Technology, University of Technology Sydney, Ultimo 2007, Australia. e-mails: ms.khorshidi@student.uts.edu.au, navid.yazdanjue@gmail.com, hassan.gharoun@student.uts.edu.au, danial.yazdani@gmail.com, fang.chen@uts.edu.au, gandomi@uts.edu.au}% 
\thanks{Mohammad Reza Nikoo is with the Department of Civil and Architectural Engineering, Sultan Qaboos University, Muscat, Oman. (e-mail: m.reza@squ.edu.om}%
\thanks{Amir H. Gandomi is also with the University Research and Innovation Center (EKIK), Obuda University, Budapest 1034, Hungary.}%
\thanks{This work was supported by the Australian Government through the Australian Research Council under Project DE210101808.}
\thanks{Corresponding author: Amir H. Gandomi}}

\maketitle

\begin{abstract}
In machine learning, the exponential growth of data and the associated ``curse of dimensionality'' pose significant challenges, particularly with expansive yet sparse datasets.
Addressing these challenges, multi-view ensemble learning (MEL) has emerged as a transformative approach, with feature partitioning (FP) playing a pivotal role in constructing artificial views for MEL.
Our study introduces the Semantic-Preserving Feature Partitioning (SPFP) algorithm, a novel method grounded in information theory.
The SPFP algorithm effectively partitions datasets into multiple semantically consistent views, enhancing the MEL process.
Through extensive experiments on eight real-world datasets, ranging from high-dimensional with limited instances to low-dimensional with high instances, our method demonstrates notable efficacy.
It maintains model accuracy while significantly improving uncertainty measures in scenarios where high generalization performance is achievable.
Conversely, it retains uncertainty metrics while enhancing accuracy where high generalization accuracy is less attainable.
An effect size analysis further reveals that the SPFP algorithm outperforms benchmark models by large effect size and reduces computational demands through effective dimensionality reduction.
The substantial effect sizes observed in most experiments underscore the algorithm's significant improvements in model performance.
\end{abstract}

\begin{IEEEkeywords}
Multi-view learning, multi-view ensemble learning, information theory, feature partitioning, dimensionality reduction.
\end{IEEEkeywords}

\section{Introduction}
\IEEEPARstart{T}{he} contemporary digital era is characterized by an explosive growth in data generation, facilitated by various applications \cite{Fan2014293, Sagiroglu201342}.
The popularity of the Internet of Things (IoT) and web-based platforms has amplified the data creation rate accompanied by a wave of noisy data, compromising the efficacy of machine-learning algorithms \cite{Bhadani20161, Tsai201477}.
This surge in data complexity, also known as ``big data'', not only challenges analytics but also escalates the burden on machine learning tools, especially for classification and pattern recognition tasks \cite{Verleysen2005758, Shanthamallu20171}.

The ``curse of dimensionality'' is ascribed to the dilemma of handling large yet sparse data in machine-learning tasks that undermine the performance of applied algorithms \cite{Donoho2000, Johnstone20094237}.
This phenomenon underlines the challenges associated with navigating large feature spaces.
As the dimensionality of these spaces expands, data tends to become increasingly sparse, rendering traditional algorithms ineffective \cite{Cui2021, Li20161230}.
Several dimensionality reduction approaches have been proposed in the literature to overcome the challenges posed by the curse of dimensionality \cite{khorshidi202310, gharoun2023noiseaugmented}.
Broadly, these strategies are categorized into feature selection (FS) and feature extraction (FE) \cite{Ghojogh2019}.
However, employing these techniques eliminates features and results in loss of information.
It is essential to note that even these seemingly trivial features in one feature set could hold intrinsic information value in another due to statistical interactions.
Machine learning algorithms can predominantly struggle to capture and generalize feature interactions.
Notably, this is often due to a mismatch between the mathematical complexity of these interactions and the structure of the algorithms \cite{panda20219}.

 To address this shortcoming, a distinctive machine-learning method called ``multi-view learning'' (MVL) has emerged in the literature \cite{Zhang20229258396}.
 This paradigm harnesses multiple distinct representations of data to enhance model performance.
 When data is sourced from various sources or observed from different angles, each offers a unique ``view''.
 Relying on the complementary aspects of the information provided by multiple views, the MVL technique leverages distinct views during a collaborative learning process, including nagging, boosting, or stacking to achieve a single model with enhanced performance \cite{xu2013survey}.
 Multi-view ensemble learning (MEL), on the other hand, combines the principles of MVL with ensemble learning.
 The MEL technique aggregates the insights from individual models trained for each view through consensus mechanisms, including model averaging, voting, and meta-learning, aiming to enhance performance and robustness \cite{vanLoon2020113}.
 Both techniques are potent methods to improve the robustness and generalization and reduce overfitting of learning algorithms for high-dimensional and complex data \cite{KUMAR2023101959}.

In the context of MVL (and MEL), views can be categorized into two primary types: natural and artificial views \cite{sun2011multi}.
Natural views arise intrinsically from the data acquisition sources that often result from different sensors, modalities, or feature extraction mechanisms \cite{Fortino201550}.
For instance, MRI and CT scans provide two distinct natural views of the same anatomical structure in medical imaging.
On the other hand, artificial views are constructed post-hoc, typically through various construction or transformation techniques applied to the original data \cite{Flynn20192362}.
These might involve dimensionality reduction, different preprocessing steps, or the application of domain-specific knowledge.
While natural views leverage inherent variability and complementary information from diverse sources, artificial views aim to uncover latent patterns or relationships within the data by introducing new perspectives. 

MVL (and MEL) has diverse applications spanning various domains.
It is employed in tasks including clustering \cite{liu2021multiview}, semi-supervised \cite{jia2020semi} and supervised learning \cite{ye2021multiview}, ensemble and active learning \cite{Di20121942}, regression \cite{yang2019adaptive}, dimensionality reduction \cite{liu2018multiview}, and representation learning \cite{li2018survey}. 

The FP is a key method for constructing artificial views in MVL (and MEL) applied to single-source data.
This technique involves vertically dividing the dataset into multiple views, forming the initial phase of the MEL application.
In FP, the quality of generated views, the quantity of the partitions, and computational efficiency are the three crucial factors significantly influencing MEL performance \cite{zhao2017multi}.
While multi-source data naturally provides diverse views, creating qualitatively rich and appropriately quantified artificial views from single-source data poses a substantial challenge.
Moreover, existing methods in the literature often rely on random feature search and model evaluation to construct artificial views, further intensifying the computational complexity.
 These aspects are the three primary challenges to address for effective MEL implementation.

This study introduces the novel Semantic-Preserving Feature Partitioning (SPFP) algorithm, a mathematically robust approach designed to tackle the three primary challenges in MEL effectively.
The SPFP algorithm offers a systematic method to arbitrarily determine the number of artificial views, ensuring each view maintains informational quality comparable to the entire dataset.
This approach effectively eliminates the need for recursive evaluation of the machine-learning algorithm, thereby enhancing computational efficiency.
Key contributions of the SPFP algorithm include:
\begin{itemize}
\item View quantification: The user can choose their desired number of artificial views for a given dataset while optimizing the balance between computational efficiency and model performance.

\item High-quality view Construction: Ensures that each generated view preserves the original dataset's semantic integrity and informational richness, contributing to more accurate and reliable model predictions.

\item Reduced computational complexity: Streamlines the model evaluation process by negating the need for repetitive algorithm testing, significantly reducing the computational demand and time required for model training and validation.
\end{itemize}

The remainder of this paper is organized as follows. 
Section~\ref{sec:background} briefly and concisely reviews the MEL method and highlights the most relevant research in this domain. 
Section~\ref{sec:spfp} introduces the proposed SPFP algorithm, detailing its mathematical underpinnings.
Section~\ref{sec:experiments} explores the specifics of the experiments conducted, including descriptions of the datasets used, the statistical analyses performed, and a comprehensive discussion of the results obtained. 
Finally, Section~\ref{sec:conclusion} concludes the paper, offering reflections on the study's findings and proposing potential avenues for future research to further enhance and expand upon the work presented here.

\section{Background}
\label{sec:background}
Since the focus of the present paper is FP in the MEL domain, the related works are mainly rooted in two domains of MEL and classical FS.
Thus, this section is divided into two main categories. 
In Section~\ref{sub:fpmethod}, we present and discuss relevant FP methods proposed for MEL. 
This discussion includes an exploration of key terminologies and common concepts from the field of Feature Selection (FS), contextualizing them within the domain of FP.
Section~\ref{sub:iffsmethod} then shifts focus to the foundational classical information-theoretic methods and concepts necessary for introducing our proposed SPFP algorithm, laying the groundwork for its detailed presentation and analysis.
It should be noted that although we aim to offer a brief yet thorough overview of current methodologies, the scope of existing approaches in the literature extends well beyond what is covered in this paper.
We recommend consulting state-of-the-art surveys and reviews in the field for those seeking a more in-depth exploration. Notable references include \cite{xu2013survey, Cai201870, Ghojogh2019, KUMAR2023101959} for comprehensive insights.
The mathematical notations used in this paper are listed in Appendix~\ref{appendix}.

\subsection{Feature Partitioning Methods}
\label{sub:fpmethod}
Numerous FP methods for MEL have been proposed in the literature. A summary of these methods is provided in the following:

Random-based FP methods, such as the Random Split approach, partition features arbitrarily \cite{ho1998random}.
Another notable method in this category is attribute bagging \cite{bryll2003attribute}. 
Being filter-based, they focus on the intrinsic properties of the data, offering simplicity and rapid partitioning.
However, their arbitrary nature often lacks assurance that the resultant views capture meaningful or complementary information.

Pattern-based FP methods, including theme-based FP methods \cite{guggari2019theme}, and round robin or zig-zag \cite{kumar2015multi}, with bell triangle-based FP \cite{guggari2018non} partition features based on specific patterns or themes.
As filter-based methods, they offer computational efficiency due to deterministic partitioning patterns, reducing computational requirements.
The challenge, however, is that fixed patterns might not always capture the underlying data structure or relationships optimally.

Clustering-based FP methods form either homogeneous or heterogeneous clusters of attributes.
Methods including graph coloring based FP \cite{zheng2021feature} and attribute clustering based on the k-mean clustering technique \cite{taheri2023collaboration} are representative.
Being filter-based, they capture inherent groupings or relationships within the data.
However, their effectiveness extensively relies on the clustering algorithm's performance.

Performance-based FP methods, such as the optimal feature set partitioning method \cite{kumar2016multi} and rough set based FP method \cite{saini2019multi}, are iterative and improve ensemble classification accuracy through multiple experimental runs.
As wrapper-based methods, the feature selection process is guided by the performance of a specific algorithm.
While they aim for optimal performance, they can be time-consuming, making them less feasible for large datasets.

Optimization-based FP methods utilize evaluation criteria optimized for the MEL framework, commonly employing metaheuristic optimization approaches.
Being wrapper-based, they fine-tune feature partitioning based on algorithm performance.
They often achieve high performance due to this optimization but are computationally demanding.
The role of optimization in FP is pivotal, aiming to derive subsets of features that are informative and less redundant for classification tasks.
Techniques including genetic algorithm \cite{rokach2008genetic}, particle swarm optimization \cite{kumar2022multi}, and simulated annealing \cite{husin2016ant} are employed to ascertain the best feature subsets.
Through this, FP ensures optimal partitioning of features and an ensemble of classifiers, leading to improved classification performance.
However, the challenge remains in the computational demand of the optimization process.

Examining various FP methods for MVL and MEL reveals that each method addresses the primary challenges—the quality of generated views, appropriate partitioning, and computational demands—to varying degrees.
While filter-based methods, renowned for their computational efficiency, fall short in generating comprehensive views, their wrapper-based counterparts, though proficient at generating high-quality views, struggle with computational intensity.
This intensity often renders the fine-tuning of machine-learning algorithms' hyper-parameters unfeasible due to the high-dimensional nature of the data.

\subsection{Information-based Feature Selection Methods}
\label{sub:iffsmethod}
Information theory is one of the widely used frameworks for FS due to its inherent capability to discern linear and non-linear inter-dependencies among variables.
Moreover, its applicability is independent of any machine-learning algorithm, thus categorized as a filter-based FS method.
The literature predominantly presents heuristic algorithms grounded in information theory, operationalizing three primary objectives for feature selection: maximizing relevance, minimizing redundancy, and maximizing complementarity \cite{li2017feature}.

Entropy is a metric that measures the amount of variability (or uncertainty) in a random variable.
For instance, in discrete form, the entropy, $H(X)$, of a random variable, $X$, can be calculated as follows:
\begin{equation}
\label{eq:ent}
H(X) = \sum_{x_i \in X} -p(x_i)log(p(x_i)),
\end{equation}
where, $x_i$, and $p(x_i)$ denote the $i$th observation of the random variable, $X$, and its probability, respectively.
Given another random variable, namely $Y$, the joint entropy, $H(X,Y)$, and conditional entropy, $H(X|Y)$, can be defined by replacing $p(x_i)$ with their joint probability, $p(x_i,y_j)$, and the conditional probability, $p(x_i|y_j)$, respectively, in Eq. \eqref{eq:ent} as follows:
\begin{equation}
\label{eq:jent}
H(X,Y) = \sum_{y_j \in Y} \sum_{x_i \in X} -p(x_i,y_j)log(p(x_i,y_j)),
\end{equation}
\begin{equation}
\label{eq:cent}
H(X|Y) = \sum_{y_j \in Y} \sum_{x_i \in X} -p(x_i|y_j)log(p(x_i|y_j)).
\end{equation}

The entropy, $H(\cdot)$, is a non-negative quantity.
This is because the probability, $0 \leq p(\cdot) \leq 1$, ensures the logarithmic term in Eqs. \eqref{eq:ent} to \eqref{eq:cent} remains negative. 
The conditional entropy $H(X|Y)$ is always less than or equal to the entropy $H(X)$, as knowing another variable $Y$ cannot increase the uncertainty about $X$, i.e., $H(X|Y) \leq H(X)$ \cite{mackay2003information}.
In contrast, the joint entropy $H(X,Y)$ is always greater than the entropy of each individual variable, signifying that the combined uncertainty of variables $X$ and $Y$ is higher than their individual uncertainties, i.e., $H(X) \leq H(X,Y)$ \cite{mackay2003information}.
Additionally, the relationship $H(X,Y)=H(X|Y)+H(Y)$ holds, and since $H(X|Y)<+H(X)$, it leads to the conclusion that $H(X,Y) \leq H(X)+H(Y)$ \cite{mackay2003information}. 
Hence, the following inequality universally holds due to the inherent properties of entropy in information theory:
\begin{equation}
\label{eq:entinq}
\begin{aligned}
0& \leq H(X|Y) \leq H(X)\\
& \leq H(X,Y) \leq H(X)+H(Y).
\end{aligned}
\end{equation}

Utilizing the three fundamental definitions articulated in Eqs. \eqref{eq:ent} to \eqref{eq:cent}, the criteria essential for Feature Selection (FS), including relevance, redundancy, and complementarity, can be derived directly or indirectly.

Relevance refers to the shared information, or mutual information (MI), $I(X;Y)$, between feature, $X$, and target variable, $Y$:
\begin{equation}
\label{eq:mi}
\begin{aligned}
I(X;Y) &= H(X) - H(X|Y) \\
       &= H(Y) - H(Y|X) \\
       &= H(X) + H(Y) - H(X,Y) \\
       &= \sum_{y_j \in Y} \sum_{x_i \in X} p(x_i,y_j) \log\left(\frac{p(x_i,y_j)}{p(x_i)p(y_j)}\right).
\end{aligned}
\end{equation}

Selecting a feature subset, $S$, solely based on individual MIs does not necessarily guarantee an improvement in the MI between the subset, 
$S$, and the target, $Y$, denoted as $I(S;Y)$.
Therefore, selecting a subset where individual features share minimum information is crucial.
Thus, redundancy pertains to the MI between a selected feature, $f_s$, and a candidate feature, $f_c$, within the pool of features $F$, denoted as $I(f_s;f_c)$.

While relevance and redundancy address pairwise dependencies, they do not capture the interaction among variables.
Therefore, The complementarity criterion pertains to the interaction gain (IG) or the degree of synergy between a selected feature, $f_s$, and a candidate feature, $f_c$, given $Y$, beyond their pairwise mutual information.
The IG, denoted as $I(f_s;f_c;Y)$ can be computed as follows:
\begin{equation}
\label{eq:inter}
I(f_s;f_c;Y) = I(f_s;f_c) - I(f_s;f_c|Y),
\end{equation}
where, $I(f_s;f_c|Y)$ is the conditional mutual information (CMI) between $f_s$ and $f_c$ given $Y$, and can be calculated as follows:
\begin{equation}
\label{eq:cmi}
\begin{aligned}
&I(f_s;f_c|Y)\\
&=H(f_s|Y)+H(f_c|Y)-H(f_s,f_c,Y)\\
&= H(f_s,Y) + H(f_c,Y) - H(f_s,f_c,Y) - H(Y)\\
&=\sum_{f_{s_i} \in S} \sum_{f_{c_j} \in F} \sum_{y_k \in Y} p(f_{s_i},f_{c_j},y_k) \log\left(\frac{p(f_{s_i},f_{c_j}|y_k)}{p(f_{s_i}|y_k)p(f_{c_j}|y_k)}\right)\\
&=\sum_{f_{s_i} \in S} \sum_{f_{c_j} \in F} \sum_{y_k \in Y} p(f_{s_i},f_{c_j},y_k) \log\left(\frac{p(f_{s_i},f_{c_j},y_k)p(y_k)}{p(f_{s_i},y_k)p(f_{c_j},y_k)}\right).
\end{aligned}
\end{equation}

It is worth noting that the inequalities presented in Eq. \eqref{eq:entinq} highlight specific essential properties of MI, CMI, and IG.
Specifically, both MI and CMI are non-negative quantities.
MI is bounded by $\min(H(f_s),H(Y))$, while CMI is confined within the boundaries determined by $\min(H(f_s),H(f_c),H(Y))$.
In contrast, IG can take both negative and positive values between $-\min(H(f_s),H(f_c),H(Y))$ and $\min(H(f_s),H(f_c),H(Y))$.
Negative IG values signify the presence of redundancy between variables, while positive values indicate the existence of interaction or synergy among the variables. 

The relevance, redundancy, and complementarity criteria are integrated within a framework known as the conditional likelihood framework (CLF) in the literature \cite{li2017feature}.
This integration is achieved by using either a linear or non-linear combination of three key elements: the relevance (MI) of a candidate feature, $f_c$, in predicting the target, denoted as $I(f_c;Y)$; the redundancy (MI) of $f_c$ relative to the features already present in the selected feature subset, $f_s$, represented by $I(f_s;f_j)$; and the complementarity (CMI) of $f_c$ and the previously selected features given the target, indicated as $I(f_s;f_c|Y)$. These elements are defined in Eqs. \eqref{eq:mi} and \eqref{eq:cmi}. The general formula for the linear combination is presented as follows \cite{li2017feature}:
\begin{equation}
\label{eq:generalobj}
\begin{aligned}
&J(f_c) = \\
&I(f_c;Y) - \alpha \sum_{f_s \in S} I(f_s;f_c) + \beta \sum_{f_s \in S} I(f_s;f_c|Y),
\end{aligned}
\end{equation}
where $J(f_c)$ represents the score of the candidate feature, $f_c$ is a candidate feature drawn from the pool of unselected features, $f_s$ is a feature already included within the selected feature set, $S$, $\alpha$ and $\beta$ are arbitrary weights for redundancy and complementarity, respectively.
Algorithms that implement the CLF as per Eq. \eqref{eq:generalobj} employ heuristic search strategies to identify the candidate feature, $f_c$, that maximizes the score. 
The unified CLF, as presented, has been acknowledged for its substantial capability in dimensionality reduction and reducing computational load, as evidenced in existing literature.
However, a prominent challenge with this framework is the absence of robust and well-defined stopping criteria, which is essential for effectively applying the algorithm \cite{yu2019simple}.

\section{Proposed Semantic-Preserving Feature Partitioning Method}
\label{sec:spfp}
This section presents the SPFP technique based on the concepts outlined in Section~\ref{sub:iffsmethod}, innovatively modifying the CLF to adeptly overcome the intrinsic challenges of FP, such as identifying the quality and quantity of constructed views.
A meticulously implemented stopping criterion is central to our approach that bridges the gap between conventional information-theoretic FS algorithms and artificial view construction for MEL.

This method systematically measures the information content of the dataset, strategically selecting feature subsets until a saturation point is reached, where either a predefined number of features are chosen, or the cumulative information content matches that of the complete dataset.
This process iterates, allowing for the extraction of multiple subsets until no features remain unselected or a predefined subset limit is attained.
Distinctively, our algorithm embodies a comprehensive insight into the contextual relevance of each feature, recognizing that a feature's significance may vary based on its associated feature subset.

This perspective facilitates a versatile partitioning strategy, enabling the decomposition of the dataset into multiple views, each infused with information content mirroring the original dataset, making them suitable for the MEL task.
The current section is structured into four sub-sections for clarity and depth.
Section~\ref{sub:spfpobj} details the objective function of the SPFP algorithm, which is a modified version of the CLF.
This will include an explanation of how the objective function aligns with the goals of the algorithm.
In Section~\ref{sub:spfpstcri}, we discuss the critical stopping criteria for the SPFP algorithm.
This includes a mathematical justification for these criteria, reinforcing their importance in the algorithm's design and execution.

Section~\ref{sub:spfp} presents the feature partitioning process proposed through the SPFP algorithm.
Here, we explain how the algorithm decomposes a feature set into multiple subsets, each offering informational parity with the complete feature set.
This process underpins the artificial view construction essential for MEL.
Finally, Section~\ref{sub:spfpcond} examines the mathematical aspect of conditional independence, a convention often assumed in MVL literature.
This exploration aims to provide a better understanding of the underlying assumptions in MVL and MEL methodologies.

\subsection{Objective Function}
\label{sub:spfpobj}
As previously mentioned, the SPFP algorithm employs a modified CLF objective function (Eq. \eqref{eq:generalobj}), where coefficients $\alpha$ and $\beta$ are defined as the reverse of the cardinality of the selected feature set, $S$, i.e., $\alpha=\beta=1/\left| S \right|$.

It is essential to highlight that information-theoretic metrics capture complex dependency relationships between a set of variables.
However, the ultimate objective of a learning task is to optimize a usually predefined mapping function between the chosen feature set and the target variable. 
Different machine learning algorithms might face challenges in fully interpreting and integrating the dependency patterns revealed by information-theoretic metrics within their structures \cite{khorshidi202310}.

To address this, in a further refinement of Eq. \eqref{eq:generalobj}, we incorporate a complexity measure into the objective function. 
This involves introducing the Pearson correlation coefficient, as illustrated below:
\begin{equation}
\label{eq:algobj}
\begin{aligned}
&J_{SPFP}(f_c) = \left | R(f_c,Y) \right | + I(f_c;Y) \\
&\quad - \frac{1}{\left | S \right |} \sum_{f_s \in S} I(f_s;f_c) + \frac{1}{\left | S \right |} \sum_{f_s \in S} I(f_s;f_c|Y),
\end{aligned}
\end{equation}
where $J_{SPFP}(f_c)$ is the SPFP objective function of the candidate feature $f_c$; $\left | R(f_c,Y) \right |$ represents the absolute value of the Pearson correlation between the candidate feature, $f_c$, and the target, $Y$; and $f_s$ is a selected feature from the set of selected features, $S$. 

\subsection{Stopping Criteria}
\label{sub:spfpstcri}
The stopping criteria, as previously articulated, can be outlined in mathematical constructs, as illustrated in Eq. \eqref{eq:criteria}:
\begin{equation}
\label{eq:criteria}
\begin{aligned}
&C_1 : \left | S \right | \geq N_F,\\
&C_2 : H(S) = H(F),~ \text{and} ~ \\
&C_3 : H(S,Y) = H(F,Y).
\end{aligned}
\end{equation}

Each criterion described in Eq. \eqref{eq:criteria} elucidates the following aspect of stopping criteria for SPFP:

\begin{enumerate}
    \item \(C_1\) mandates that the cardinality of the selected feature set \(S\), denoted as \(\left | S \right | \), should be at least as large as a predefined threshold \(N_F\).
    \item $C_2$ constrains the information content of the selected subset $S$ to be equivalent to that of the entire feature set $F$, and, 
    \item \(C_3\) implies that the joint entropy of the selected subset and the target variable should equal to that of the entire feature set and the target variable.
\end{enumerate}

Criterion \(C_1\) in Eq. \eqref{eq:criteria} controls the cardinality of the selected feature subset.
Criteria \(C_2\) and \(C_3\) are imperative for maximizing the information content of the selected subset, particularly in conjunction with the target variable.
These criteria derive their significance from Theorems \ref{th:ent}, and \ref{th:mi}, which affirm that the entropy of a dataset and the MI between the entire feature set and the target variable, $Y$, are inherently maximum.
Consequently, no subset of features can surpass the entire set, $F$, in terms of either entropy or MI with the target variable.

\begin{theorem}
\label{th:ent}
For two sets of features, \(F\) and \(S \), where \(S \subset F\), the entropy of the entire feature set \(F\), i.e., \(H(F)\), is always greater than or equal to the entropy of the subset \(S \), \(H(S)\).
\end{theorem}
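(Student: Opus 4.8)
The plan is to prove the statement by induction on the number of features removed from $F$ to obtain $S$, reducing everything to the single-feature case via the chain rule for entropy. First I would recall the two elementary facts established in the excerpt: conditioning cannot increase entropy, $H(X|Y)\le H(X)$, and the chain rule $H(X,Y)=H(X|Y)+H(Y)$, both cited from \cite{mackay2003information}. These are exactly the ingredients needed.

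For the base case, suppose $F = S \cup \{f\}$ with $f \notin S$. Treating the feature set $S$ as a single (vector-valued) random variable, the chain rule gives $H(F) = H(S, f) = H(S) + H(f \mid S)$. Since $H(f \mid S) \ge 0$ (entropy and conditional entropy are non-negative, as noted in the discussion following Eq.~\eqref{eq:entinq}), we immediately conclude $H(F) \ge H(S)$. For the inductive step, write $S \subset F$ with $|F \setminus S| = k \ge 1$, pick any $f \in F \setminus S$, and set $S' = S \cup \{f\}$. By the base-case argument $H(S') \ge H(S)$, and since $|F \setminus S'| = k-1$ the inductive hypothesis gives $H(F) \ge H(S')$; chaining the two inequalities yields $H(F) \ge H(S)$. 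Alternatively, and more directly, one can skip the induction entirely: order the features of $F \setminus S$ as $g_1, \dots, g_k$ and apply the chain rule repeatedly to get $H(F) = H(S) + \sum_{i=1}^{k} H(g_i \mid S, g_1, \dots, g_{i-1})$, where every summand is a non-negative conditional entropy, so the sum is $\ge 0$ and the claim follows.

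I would present the non-inductive telescoping version as the main line of argument since it is cleaner, and perhaps remark that it is really just the statement that "adding features never decreases entropy," a direct consequence of non-negativity of conditional entropy. The one subtlety worth flagging is notational rather than mathematical: the theorem refers to the \emph{entropy of a feature set}, so I should make explicit at the outset that $H(S)$ means the joint entropy of all random variables in $S$ (i.e., $S$ is identified with the random vector whose components are its member features), which makes the chain rule applicable. Equality holds precisely when each $H(g_i \mid S, g_1, \dots, g_{i-1}) = 0$, i.e., when the removed features are deterministic functions of $S$; this is worth a one-line remark but is not needed for the inequality itself.

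The main obstacle here is essentially nonexistent — this is a routine consequence of standard information-theoretic identities already quoted in the paper — so the only real work is choosing a clean presentation and being careful that the chain rule is invoked on the correct (possibly high-dimensional) joint distributions. I would not belabor the probability-space formalities.
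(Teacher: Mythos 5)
Your proposal is correct and rests on the same core identity as the paper's proof, namely the chain-rule decomposition $H(F)=H(S)+H(S'\mid S)$ for the complementary subset $S'=F\setminus S$ together with the non-negativity of conditional entropy; the paper merely packages this as a proof by contradiction (assuming $H(F)<H(S)$ forces $H(S'\mid S)<0$, which is impossible), while you state the non-negativity directly and optionally telescope one feature at a time. The substance is identical, and your direct version is arguably cleaner.
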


\begin{proof}
To prove this theorem, consider any disjoint and complementary subsets \(S \) and \(S' \) within the set \(F\) ($S' \cup S = F$), and examine their entropy relationship as follows:
\begin{align*}
p(f) = p(s) \cdot p(s'|s) \Rightarrow H(F) = H(S)+H(S'|S).
\end{align*}

Assume a proposition \(PR_1\) such that \(H(F) \geq H(S)\).
To challenge this, assume the contrary, where \(PR_1\) is false, denoted as \(\sim PR_1\), implying there exists a subset \(S\) such that \(H(F) < H(S)\).
\begin{align*}
\sim PR_1 & : H(F)<H(S) ~\text{and}~ H(F) = H(S)+H(S'|S), \\
& \Rightarrow H(S'|S)<0, \\
& \Rightarrow p(S'|S)>1, \\
& \Rightarrow \bot, \\
& \therefore PR_1: H(F) \geq H(S).
\end{align*}
holds for all $S \subset F$.
\end{proof}

\begin{theorem}
\label{th:mi}
For a target variable, \(Y\), and two sets of predictors, \(F\) and \(S\), where \(S \subset F\), the MI between the feature set \(F\) and \(Y\), \(I(F;Y)\), is always greater than or equal to the MI of the subset \(S\) and \(Y\), \(I(S;Y)\).
\end{theorem}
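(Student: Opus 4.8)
The plan is to mirror the proof of Theorem~\ref{th:ent}, replacing entropy with mutual information and exploiting the chain rule for conditional mutual information together with the non-negativity of CMI. First I would fix any disjoint decomposition $F = S \cup S'$ with $S \cap S' = \emptyset$, so that the feature set $F$ is obtained by adjoining the extra predictors $S'$ to $S$. The key structural identity is the chain rule
\begin{align*}
I(F;Y) = I(S,S';Y) = I(S;Y) + I(S';Y|S),
\end{align*}
which follows from the definition of CMI exactly as the entropy chain rule $H(F) = H(S) + H(S'|S)$ was used in the previous theorem.

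Next I would invoke the non-negativity of conditional mutual information: $I(S';Y|S) \geq 0$, which is itself a consequence of the entropy inequalities in Eq.~\eqref{eq:entinq} (specifically $H(Y|S) \geq H(Y|S,S')$, i.e. conditioning on more variables cannot increase conditional entropy). As in the proof of Theorem~\ref{th:ent}, I would set this up as a proof by contradiction: assume proposition $PR_2 : I(F;Y) \geq I(S;Y)$ is false, so that there exists a subset $S$ with $I(F;Y) < I(S;Y)$. Combining this with the chain rule forces $I(S';Y|S) < 0$, which in turn (tracing back through the entropy decomposition of CMI given in Eq.~\eqref{eq:cmi}) would require a conditional entropy to be negative, hence a conditional probability exceeding $1$ — a contradiction. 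Therefore $PR_2$ holds for all $S \subset F$, establishing $I(F;Y) \geq I(S;Y)$.

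The argument is essentially routine once the chain rule is in place, so there is no deep obstacle; the only point requiring care is justifying $I(S';Y|S) \geq 0$ cleanly. Because the paper's Eq.~\eqref{eq:entinq} is stated for scalar variables $X$ and $Y$, I would note that the same inequalities extend verbatim to random vectors (treating $S$, $S'$ as jointly distributed groups of features), so that $0 \leq H(Y|S,S') \leq H(Y|S)$ and hence $I(S';Y|S) = H(Y|S) - H(Y|S,S') \geq 0$. With that remark the contradiction step goes through identically to Theorem~\ref{th:ent}, and the proof concludes that no feature subset can exceed the full set $F$ in mutual information with the target, which is precisely what justifies stopping criterion $C_3$ in Eq.~\eqref{eq:criteria}.
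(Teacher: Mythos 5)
Your proof is correct and follows essentially the same route as the paper's: both decompose $F$ into $S$ and its disjoint complement $S'$, apply the chain rule $I(F;Y)=I(S;Y)+I(S';Y|S)$, and conclude from the non-negativity of the conditional mutual information (your contradiction framing is only a stylistic wrapper around this). The one caveat is your parenthetical suggestion that a negative $I(S';Y|S)$ would trace back to a conditional probability exceeding $1$ — unlike the entropy case in Theorem~\ref{th:ent}, CMI non-negativity does not reduce to that, but your primary justification via $H(Y|S,S')\leq H(Y|S)$ is sound and suffices.
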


\begin{proof}
Similar to the Theorem \ref{th:ent}, consider any disjoint and complementary subsets \(S\) and \(S'\) within the set \(F\) ($S \cup S' = F$), and examine their MI relationships with \(Y\) as follows:
\begin{align*}
I(F;Y) &= I(S,S';Y) \\
&= \sum p(s,s',y) \log\left(\frac{p(s,s',y)}{p(s,s')p(y)}\right) \\
&= \sum p(s,s',y) \log\left(\frac{p(s,y)p(s'|s,y)}{p(s)p(s'|s)p(y)}\right) \\
&= \sum p(s,y) \log\left(\frac{p(s,y)}{p(s)p(y)}\right) \\
&\quad + \sum p(s,s',y) \log\left(\frac{p(s,s',y)p(s')}{p(s,s')p(s,y)}\right) \\
&= I(S;Y) + I(S';Y|S).
\end{align*}

As we can see, the MI of set \(F\) and \(Y\) is equal to the MI of \(S\) and \(Y\) (\(I(S;Y)\)) plus the CMI of \(S'\) and \(Y\) conditional on \(S\) (\(I(S';Y|S)\)).
Since \(I(S';Y|S)\) is a non-negative quantity, we can conclude that:
\[ \therefore I(F;Y) \geq I(S;Y), \]
holds for all \(S \subset F\).
\end{proof}

From Theorem \ref{th:ent}, we deduce that the information content of a feature subset, \(S\), is maximized only when it equals the information content of the entire feature set, \(F\), as illustrated in criterion \(C_2\) in Eq. \eqref{eq:criteria}.
In essence, identifying a subset \(S\) where \(S \subset F\) that meets criterion \(C_2\) ensures that the maximum information is utilized for the learning task.

However, satisfying \(C_2\) alone does not ascertain that the subset \(S\) provides the maximum information about the target variable, \(Y\). 
According to the interpretation of Theorem \ref{th:mi}, for a subset \(S\) to effectively replace \(F\) in the learning task without losing pertinent information about \(Y\), it is imperative that \(H(F, Y) = H(S, Y)\).
This condition ensures that the MI \(I(F; Y) = I(S; Y)\), necessitating the fulfilment of criterion \(C_3\).

In essence, criterion \(C_2\) is designed to ensure that the selected subset \(S\) maintains the same semantics as \(F\).
On the other hand, criterion \(C_3\) is dedicated to preserving the quality of \(S\) in predicting the target, \(Y\), ensuring that the chosen subset is reflective of the underlying data structures in learning tasks.

\subsection{Prposed SPFP Algorithm}
\label{sub:spfp}
Algorithm \ref{alg:spfsp} outlines the pseudo-code for the proposed Semantic-Preserving Feature Partitioning (SPFP) algorithm.
The algorithm takes as input the dataset (comprised of $F$ and $Y$), the minimum number of features $N_F$ within the views, $\theta_g$, the number of artificial views $N_\theta$, and the fraction $r$ of features to be eliminated from the feature space, $U$. 

The algorithm initializes by setting up the necessary parameters and structures.
A series of nested loops are utilized for the main computational processes.
The outer loop is responsible for constructing the views until the predefined number of views, $N_\theta$, is achieved.
Within this loop, a ``while'' loop is utilized to manage the feature selection process, ensuring that the stopping criteria defined in Eq. \eqref{eq:criteria} are satisfied. 

An aspect of adaptability is incorporated within the algorithm, where a user-specified proportion of the selected features, determined by $r$, are randomly removed from the feature space $U$.
This is particularly consequential when $r=1$, where all selected features are removed from the feature space, promoting diversity in feature selection. 

The innermost loop calculates the objective function, $J_{SPFP}$ (refer to Eq. \eqref{eq:algobj}), for each candidate feature in a temporary feature space, $U_t$.
The feature, $f_s$, associated with the maximum value of the objective function is included in the view construct, $\theta_g$.
This iterative process continues until the conditions $C_1$, $C_2$, and $C_3$ as per Eq. \eqref{eq:criteria} are satisfied.

\begin{algorithm}
\caption{The pseudo-code for the Semantic-Preserving Feature Partitioning (SPFP) algorithm.}
\label{alg:spfsp}
\begin{algorithmic}[1]
\renewcommand{\algorithmicrequire}{\textbf{Input:}}
\renewcommand{\algorithmicensure}{\textbf{Output:}}
\REQUIRE $F$, $Y$, $N_F$, $N_\theta$, $r$
\ENSURE  $\Theta$
\STATE $\Theta \leftarrow \phi$
\STATE $U \leftarrow F$
\FOR{$g = 1$ to $N_\theta$}
    \STATE $S \leftarrow \phi$
    \STATE $U_t \leftarrow U$
    \WHILE{$\left | S \right | \leq N_F \vee H(S) \leq H(F) \vee H(S,Y) \leq H(F,Y)$}
        \FOR{each $f_c \in U_t$}
            \STATE Calculate $J_{SPFP}(f_c)$ using Eq. \eqref{eq:algobj}
        \ENDFOR
        \STATE $f_s \leftarrow argmax(J_{SPFP}(\cdot))$
        \STATE $S \leftarrow S \cup \{f_s \}$
        \STATE $U_t \leftarrow U_t - \{f_s \}$
    \ENDWHILE
    \STATE $\theta_g \leftarrow S$
    \STATE $\Theta \leftarrow \Theta \cup \{\theta_g\}$
    \STATE Randomly remove $r \times \left | \theta_g \right |$ features from $U$
\ENDFOR
\RETURN $\Theta$
\end{algorithmic}
\end{algorithm}

The SPFP method in Algorithm \ref{alg:spfsp} involves selecting multiple views \(\Theta = \{\theta_g | \theta_g \subset F, \theta_g \models C_1, C_2, C_3,~ \forall g=1,2,3,\ldots\}\).
Each subset \(\theta_g\) acts as an artificial construction of views of \(Y\), ensuring that they independently carry sufficient information for predicting \(Y\).

\subsection{Conditional Independence Assumption in MVL}
\label{sub:spfpcond}
To analyze the validity of the conditional independence of multiple views in MVL, let's consider the conventional assumption that any pair of views, such as \(\theta_1\) and \(\theta_2\), are conditionally independent given \(Y\) \cite{chen2012large, feuz2017collegial, chen2022asm2tv}:

\begin{equation}
\label{eq:assummultiv}
\begin{aligned}
p(\theta_1,\theta_2|y) &= p(\theta_1|y) \cdot p(\theta_2|y) \\
&\Rightarrow \frac{p(\theta_1,s_2|y)}{p(\theta_1|y) \cdot p(\theta_2|y)} = 1.
\end{aligned}
\end{equation}

Taking the logarithm and multiplying both sides by \(p(\theta_1,\theta_2,y)\) transforms Eq. \eqref{eq:assummultiv} into entropy form:
\begin{equation}
\label{eq:assumcmi}
\begin{aligned}
\Rightarrow I(\theta_1;\theta_2|Y) &= H(\theta_1,Y) + H(\theta_2,Y) \\
&- H(\theta_1,\theta_2,Y) - H(Y) = 0.
\end{aligned}
\end{equation}

Equation \eqref{eq:assumcmi} suggests that the CMI of \(\theta_1\) and \(\theta_2\) given \(Y\) is zero.
Using constraints \(C_2\) and \(C_3\) in Eq. \eqref{eq:criteria}, and referring to the Theorems \ref{th:ent}, and \ref{th:mi}, we further explore this assumption through the following equations:
\begin{equation}
\label{eq:inv1}
H(\theta_1,Y) = H(\theta_2,Y) = H(\theta_1,\theta_2,Y) = H(F,Y),
\end{equation}
\begin{equation}
\label{eq:inv2}
\Rightarrow I(\theta_1;\theta_2|Y) = H(F,Y) - H(Y) = 0,
\end{equation}
\begin{equation}
\label{eq:inv3}
\Rightarrow H(F,Y) = H(Y),
\end{equation}
\begin{equation}
\label{eq:inv4}
\Rightarrow I(F;Y) = H(F) + H(Y) - H(F,Y) = H(F).
\end{equation}
\begin{equation}
\label{eq:inv5}
\&~ 0 \leq I(F;Y) \leq \min(H(F),H(Y)),
\end{equation}
\begin{equation}
\label{eq:inv6}
\Rightarrow H(F) \leq H(Y).
\end{equation}

The equality in Eq. \eqref{eq:inv1}, which is evident from \(C_3\) as stated in Eq. \eqref{eq:criteria}, simplifies Eq. \eqref{eq:assumcmi} into Eqs. \eqref{eq:inv2} and \eqref{eq:inv3}.
Additionally, Eq. \eqref{eq:inv4} provides the definition of MI between the entire feature set, \(F\), and the target \(Y\) for a given dataset based on the assumption of conditional Independence. 

\(I(F;Y)\) is a non-negative quantity bounded by \(\min (H(F),H(Y))\) as previously mentioned and shown in Eq. \eqref{eq:inv5}. 
Thus, it is imperative for the equality in Eq. \eqref{eq:inv4} that the information content of the entire feature set should not exceed the entropy of the target variable \(Y\) for the assumption in Eq. \eqref{eq:assummultiv} to be valid, unless there is a compromise in the quality and semantics of the constructed views, as per the violation of constraint \(C_3\) in Eq. \eqref{eq:inv1}.
Such a compromise, however, is unlikely to be feasible in practical, real-world single-source datasets \cite{brefeld2015multi, wen2019unified}.

\section{Experiments}
\label{sec:experiments}
This section is organized into three distinct parts.
Section~\ref{sub:data} offers a detailed description of the datasets utilized in our experiments, which are designed to evaluate the performance of the SPFP algorithm.
Section~\ref{sub:setup} outlines the setup of these experiments, including the chosen performance metrics, the machine-learning algorithms used, and the statistical methods employed in the analysis.
Finally, Section~\ref{sub:analysis} explore the presentation and discussion of the results obtained from these experiments, examining their implications and significance.
Note that a substantial portion of the preliminary results are presented in tables and figures within the Supplementary Document.
Throughout this section, any tables and figures from the Supplementary Document are referenced with an ``S.'' prefix.

\subsection{Data Description}
\label{sub:data}
We evaluate the proposed SPFP algorithm using eight diverse datasets. These datasets are selected to represent a range of domains and complexity levels. The datasets include: APS Failure at Scania Trucks (APSF) \cite{apsf}, Activity Recognition Using Wearable Physiological Measurements (ARWPM) \cite{arwpm}, Gene Expression Cancer RNA-Sequence (GECR) \cite{gecr}, Grammatical Facial Expressions (GFE) \cite{gfe}, Gas Sensor Array Drift Dataset at Different Concentrations (GSAD) \cite{gsad}, Smartphone-Based Recognition of Human Activities and Postural Transitions (HAPT) \cite{hapt}, ISOLET \cite{isolet}, and Parkinson's Disease (PD) \cite{pd}. These datasets are publicly available at the UCI Machine Learning Repository. Table \ref{tab:data} details the number of instances, features, and classes for each dataset.

\begin{table}[!ht]
\caption{Datasets' description.}
\label{tab:data}
\centering
\begin{tabular}{|c|c|c|c|c|}
\hline
\textbf{Dataset} & \textbf{\# Instances} & \textbf{\# Features} & \textbf{\# Classes}\\
\hline
APSF    & \(75,994\)    & \(170\)       & \(2\)  \\
ARWPM   & \(4,480\)     & \(533\)       & \(5\)  \\
GECR    & \(801\)       & \(20,531\)    & \(5\)  \\
GFE     & \(27,965\)    & \(301\)       & \(2\)  \\
GSAD    & \(13,910\)    & \(129\)       & \(6\)  \\
HAPT    & \(10,929\)    & \(561\)       & \(12\) \\
ISOLET  & \(7,797\)     & \(617\)       & \(26\) \\
PD      & \(756\)       & \(753\)       & \(2\)  \\
\hline
\end{tabular}
\end{table}

As indicated in Table \ref{tab:data}, the datasets span binary and multi-class classification tasks. The number of instances in these datasets varies from 756 to 75,994, while the feature dimensionality extends from 129 to 20,531. 

\subsection{Experimental Setup}
\label{sub:setup}
We employed these eight benchmark datasets to rigorously assess the generalization capability of the proposed SPFP algorithm. 
Our evaluation protocol involved a multi-phase iterative process, repeated 30 times for each dataset and classifier to ensure statistical robustness.
Each iteration consists of the following phases: 
\begin{enumerate}
    \item Random partitioning of the dataset into training and testing sets,
    \item Construction of multiple views using the SPFP algorithm on the training set, 
    \item Hyperparameter tuning of the machine-learning models using cross-validation on the training set, 
    \item Training the models on these views, and,
    \item Evaluating the performance of these models on the testing set.
\end{enumerate} 
The algorithm's effectiveness was determined by comparing the generalization performance of models trained on individual views and their ensemble against those trained on the complete dataset.
This comprehensive approach not only tests the generalization capability of the models but also serves as an indirect measure of the SPFP algorithm's efficacy in enhancing model performance.

For experimental validation, we randomly partitioned each dataset into a 67\% training set and a 33\% testing set. 
The SPFP algorithm was then applied to the training data to generate multiple artificial views. 
The parameters for this process were set as follows: the number of artificial views \( N_\theta=5 \), the proportion of features to be randomly excluded from the feature space \( r=0.6 \), and the minimum number of features \( N_F=0.1 \times | F | \) for all datasets except GECR, and \(N_F=0.01 \times |F| \) for GECR dataset (for details, see Section \ref{sub:spfp}).
For the GECR dataset, which has a notably higher number of features compared to the other datasets, we opted for a significantly lower $N_F$ value.

The parameter values for the SPFP algorithm were selected based on informed estimations rather than exhaustive optimization, to reflect a practical scenario where users seek satisfactory generalization from an MEL framework without committing to extensive parameter tuning of the SPFP algorithm.

It is important to note that assigning a large number to $N_\theta$ and setting $r=1$ would lead to the partitioning of all dataset features into numerous artificial views, thereby allowing the SPFP algorithm to function at its full potential.
However, we intentionally selected a fixed number of views with significant potential for overlap ($N_\theta=5$ and $r=0.6$).
This decision was made to prevent the SPFP algorithm from dividing all features in a dataset while enabling it to construct views that represent similar dataset aspects.
Adopting this pragmatic approach allows us to explore and identify possible limitations of the SPFP algorithm and to more thoroughly examine its performance.

To fine-tune the hyperparameters, we utilized stratified 5-fold cross-validation on the training data for both the Extreme Gradient Boosting (XGBoost) and Logistic Regression (LR) models. 
For our study, we chose XGBoost, a complex model incorporating built-in FS and ensemble learning, to assess the capability of the SPFP algorithm in enhancing model performance.
Additionally, we selected LR, a simpler model, to examine the SPFP algorithm's impact on the performance of various types of models.
The range of hyperparameters investigated for the XGBoost and LR models, along with their respective search ranges, are detailed in Table \ref{tab:hyp}.

\begin{table}[!ht]
\caption{The hyperparameters and their range used for fine-tuning the XGBoost and LR Models.}
\label{tab:hyp}
\centering
\begin{tabular}{|c|c|c|}
\hline
\textbf{Model} & \textbf{Hyperparameter} & \textbf{Range}\\
\hline
\multirow{9}{*}{XGBoost} & Learning Rate             & $[0.01,0.2]$\\
                         & Gamma                     & $[0,10]$    \\
                         & Maximum Tree Depth        & $[3,12]$    \\
                         & Minimum Child Weight      & $[1,20]$    \\
                         & Sub-sample                & $[0.1,1]$   \\
                         & Feature Sample by Tree    & $[0.1,1]$   \\
                         & $L_1$ Regularization      & $[0.01,50]$ \\
                         & $L_2$ Regularization      & $[0.01,50]$ \\
                         & Estimators                & $[50,600]$  \\
\hline
\multirow{2}{*}{LR} & Penalty                   & $\{L_1, L_2\}$   \\
                    & C                         & $[0.001,1]$    \\
\hline
\end{tabular}
\end{table}

Subsequent to parameter optimization, the XGBoost and LR models were trained on the entire training set using the identified optimal hyperparameters.
The generalization performance of these models was then evaluated on the test set. 
The evaluation metrics included the micro-averaged \( F_1 \) score, area under the receiver operating characteristic curve (AUC), cross-entropy (or log-loss), the mean entropy of correct predictions (MEC), the mean entropy of wrong predictions (MEW), and models' execution time.

The \( F_1 \) score, combining precision and recall, is particularly effective in assessing model performance on imbalanced datasets by treating each instance equally, regardless of class.
The AUC, measuring the model's ability to discriminate between classes, provides insight into the overall classification effectiveness across various thresholds.  
The log-loss quantifies the model's prediction accuracy, penalizing significantly for confident yet incorrect predictions; it reflects how close the predicted probability distribution is to the true distribution.
In contrast, MEC and MEW focus on the model's certainty in its predictions. 
The MEC measures the average entropy (uncertainty) of the predictions that are correct, while MEW does the same for incorrect predictions. 
Achieving a higher correct confidence (i.e., a lower MEC) and a lower incorrect confidence (i.e., a higher MEW) is crucial, as it implies that the model not only accurately predicts outcomes but also does so with a high degree of certainty in correct predictions and skepticism in incorrect ones, thereby enhancing the reliability and trustworthiness of its decision-making process.

To statistically validate the differences in model performance across various metrics, we employed the Friedman test, a non-parametric statistical test used to detect differences in treatment effects across multiple treatments.
This test was applied to assess whether the median ranks of the evaluation metrics (\( F_1 \) score, AUC, log-loss, MEC, MEW, and execution time) differ significantly across the models at an \(\alpha=0.05\) significance level.
Upon rejection of the null hypothesis indicating no difference, the Conover post-hoc test was utilized as a follow-up analysis.
This test helps identify which specific models demonstrate superior performance by comparing the performance ranks of the models pairwise.

Additionally, we utilized Cliff's Delta (Cliff's $\delta$) analysis to assess the extent of difference in the metrics of the models obtained.
Within the context of Cliff's $\delta$, a difference between two models is categorized as negligible if $|\delta| < 0.147$, small if $0.147 \leq |\delta| < 0.333$, medium if $0.333 \leq |\delta| < 0.474$, and large if $0.474 \leq |\delta| < 1$.
Furthermore, the 95\% confidence intervals for Cliff's $\delta$ were determined using 10,000 bootstrap resampling. 
Detailed statistical analysis and discussions on the control of type I and II errors can be found in Section~\ref{sub:analysis}.
Such a comprehensive statistical approach ensures a rigorous assessment of the SPFP algorithm's impact on the models' performance.

\subsection{Results and Analysis}
\label{sub:analysis}
The performance of the SPFP algorithm across the eight benchmark datasets is summarized in Tables~\ref{tab:spfspresults}, and illustrated in Figures~\ref{fig:sppcommon} and~\ref{fig:sppall}. Table~\ref{tab:spfspresults} provides a detailed overview of the characteristics of the views constructed by the SPFP algorithm.
The average number of features, \( |\theta_g| \), in the constructed views varies between 25.4 and 206, corresponding to ASPF with the second lowest (170) and GECR with the highest (20,531) number of features (refer to Table~\ref{tab:data}).
Considering that \( N_F \) is set to \( 0.1 \times |F| \) and \( 0.01 \times |F| \) for GECR in the SPFP algorithm, the ratio of selected features to the total features, \( \frac{|\theta_g|}{|F|} \), varies between 0.01 and 0.27 for GECR and GASD datasets.
This elucidates the SPFP algorithm's efficiency in reducing dimensionality while maintaining a comprehensive representation of the original feature set.

Most of the obtained ratios exceed the specified \( N_F \), indicating that criteria \( C_2 \) and \( C_3 \) were the controlling factors for terminating the SPFP algorithm.
The standard deviation of the selected features is negligible for all cases except for ASPF (0.02), GFE (0.04), and GSAD (0.06).
This low standard deviation demonstrates the versatility of the SPFP algorithm in consistently identifying the underlying data structure within different feature sets across all runs.
During 30 runs with different training sets, SPFP selected a relatively equal number of features, even though the total number of unique (distinct) features, $\mathbf{\left| \bigcup_{\theta_g \in \Theta} \theta_g \right|}$, used for view construction shows much more diversity (as detailed in Table~\ref{tab:spfspresults}).

The ASPF, GFE, and GSAD datasets exhibit the highest ratio of used features in all views to the total number of features, \( \frac{|\bigcup_{\theta_g \in \Theta} \theta_g|}{|F|} \), among all datasets with 0.51, 0.73, and 0.89 of their total features used in the construction of views, respectively.
This variability highlights the algorithm's adaptability to different dataset complexities and sizes.
The total number of unique features across all views (\( |\bigcup_{\theta_g \in \Theta} \theta_g| \)) and the common features in all views (\( |\bigcap_{\theta_g \in \Theta} \theta_g| \)) provide insights into the diversity and overlap of the features selected by the SPFP algorithm in the construction of the five views.

Despite setting \( r = 0.6 \), implying that the constructed views could potentially have 40\% common features (\( |\bigcap_{\theta_g \in \Theta} \theta_g| \)), there are relatively low overlapping features among all views suggesting the existence of different patterns within the datasets.
However, the ISOLET dataset exhibits a significant number of common features (\(31.2 \pm 2.83\)), indicating a strong correlation between frequently selected features and other features consistently identified by the SPFP algorithm.
Additionally, the time required for the construction of views is reported in the table, with significant variation observed across datasets.
For example, the GECR dataset with 20,531 features required considerably more time (\(2087.2 \pm 327.06\) seconds) compared to other datasets, reflecting the increased complexity and computational demand of handling larger feature sets.

%%%%%%%%%%%%%%%%%%%%%%%%%%%%%%%%%%%%%%%%%%%%%%%%%%%%%%%%%

\begin{table*}
\centering
\caption{Summary of views' characteristics constructed by SPFP Algorithm (mean $\pm$ standard deviation), including the number of features in each view (\( |\theta_g| \)), the features across all views (\( |\cup_{\theta_g \in \Theta} \theta_g| \)), the features common to all views (\( |\cap_{\theta_g \in \Theta} \theta_g| \)), the ratio of features in each view to the original feature set (\( \frac{|\theta_g|}{|F|} \)), the ratio of the features used in all views to the original feature set (\( \frac{|\cup_{\theta_g \in \Theta} \theta_g|}{|F|} \)), and the time elapsed for the construction of views (Time in seconds).}
\label{tab:spfspresults}
\begin{tabular}{c|cccccc} 
\hline
Dataset & $\mathbf{\left| \theta_g \right|}$ & $\mathbf{\left| \bigcup_{\theta_g \in \Theta} \theta_g \right|}$ & $\mathbf{\left| \bigcap_{\theta_g \in \Theta} \theta_g \right|}$ & $\mathbf{\frac{\left| \theta_g \right|}{\left| F \right|}}$ & $\mathbf{\frac{\left| \bigcup_{\theta_g \in \Theta} \theta_g \right|}{\left| F \right|}}$ & \textbf{Time (sec)}  \\ 
\hline
APSF & $25.4 \pm 2.7$ & $86.1 \pm 2.92$ & $0.5 \pm 0.72$ & $0.15 \pm 0.02$ & $0.51 \pm 0.02$ & $51.1 \pm 17.3$ \\
ARWPM & $54.0 \pm 0.0$ & $182.7 \pm 1.09$ & $1.3 \pm 1.11$ & $0.1 \pm 0.0$ & $0.34 \pm 0.0$ & $95.5 \pm 9.7$ \\
GECR & $206.0 \pm 0.0$ & $702.1 \pm 0.25$ & $5.7 \pm 2.28$ & $0.01 \pm 0.0$ & $0.03 \pm 0.0$ & $2087.2 \pm 327.06$ \\
GFE & $60.7 \pm 12.48$ & $220.6 \pm 9.06$ & $1.0 \pm 0.84$ & $0.2 \pm 0.04$ & $0.73 \pm 0.03$ & $590.8 \pm 74.16$ \\
GSAD & $34.8 \pm 8.11$ & $115.3 \pm 5.14$ & $0.9 \pm 0.67$ & $0.27 \pm 0.06$ & $0.89 \pm 0.04$ & $53.3 \pm 26.93$ \\
HAPT & $68.0 \pm 0.0$ & $233.8 \pm 1.32$ & $2.1 \pm 1.24$ & $0.12 \pm 0.0$ & $0.42 \pm 0.0$ & $170.7 \pm 23.4$ \\
ISOLET & $124.0 \pm 0.0$ & $272.0 \pm 0.0$ & $31.2 \pm 2.83$ & $0.2 \pm 0.0$ & $0.44 \pm 0.0$ & $352.7 \pm 38.9$ \\
PD & $76.0 \pm 0.0$ & $260.3 \pm 0.68$ & $2.3 \pm 1.7$ & $0.1 \pm 0.0$ & $0.35 \pm 0.0$ & $169.5 \pm 18.18$ \\

\hline
\end{tabular}
\end{table*} 
%%%%%%%%%%%%%%%%%%%%%%%%%%%%%%%%%%%%%%%%%%%%%%%%%%%%%%%%%%%%%%%%%%%%%%%%%%%%%%%%%%%%%%%%%%%

%%%%%%%%%%%%%%%%%%%%%%%%%%%%%%%%%%%%%%%%%%%%%%%%%%%%%%%%%%%%%%%%%%%%%%%%%%%%%%%%%%%%%%%%%%%%%%%%
\begin{figure*}[!t] 
\centering

% First row
\subfloat[APSF]{\includegraphics[width=0.24\textwidth]{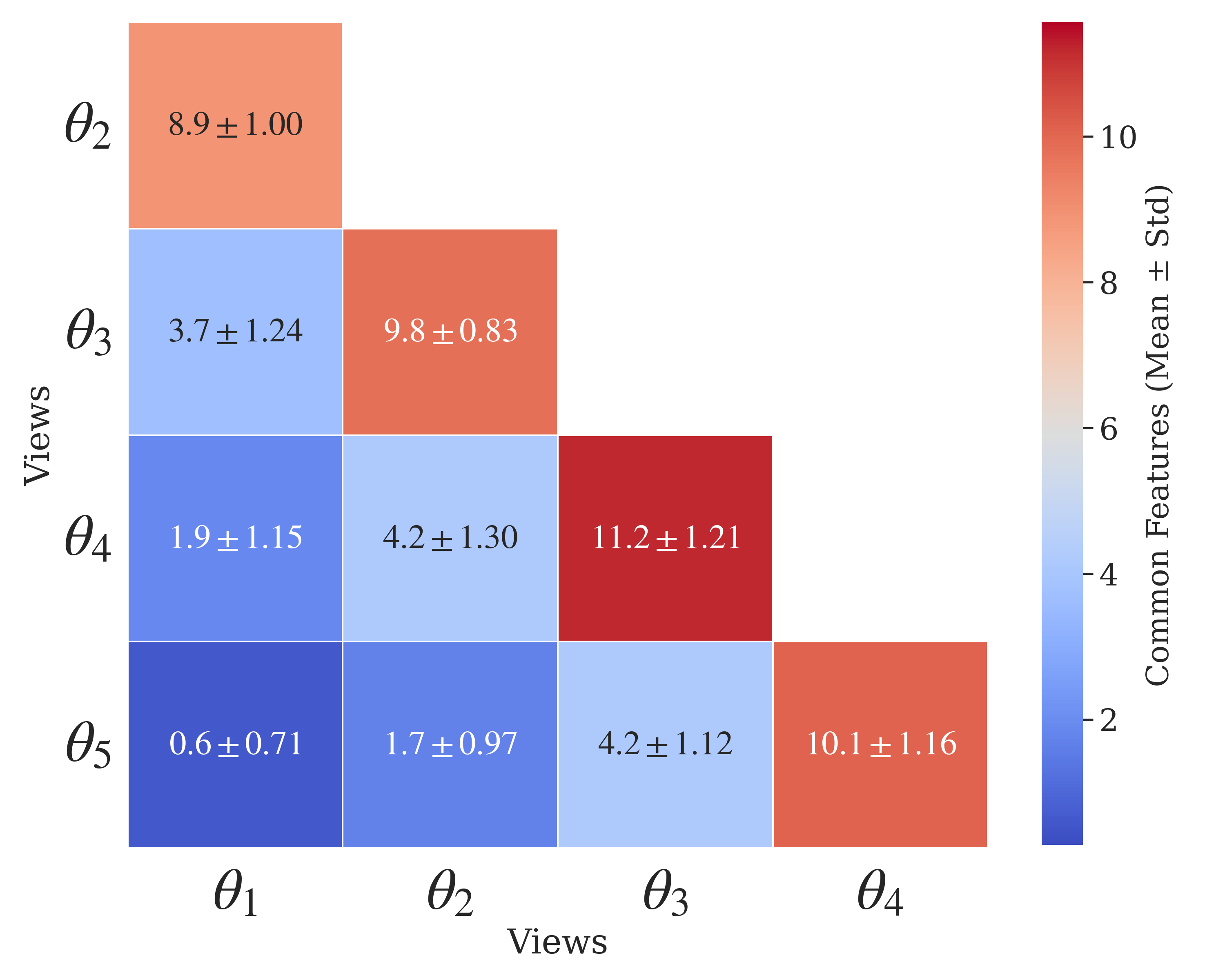}\label{fig:fpafps}}%
\hfill
\subfloat[ARWPM]{\includegraphics[width=0.24\textwidth]{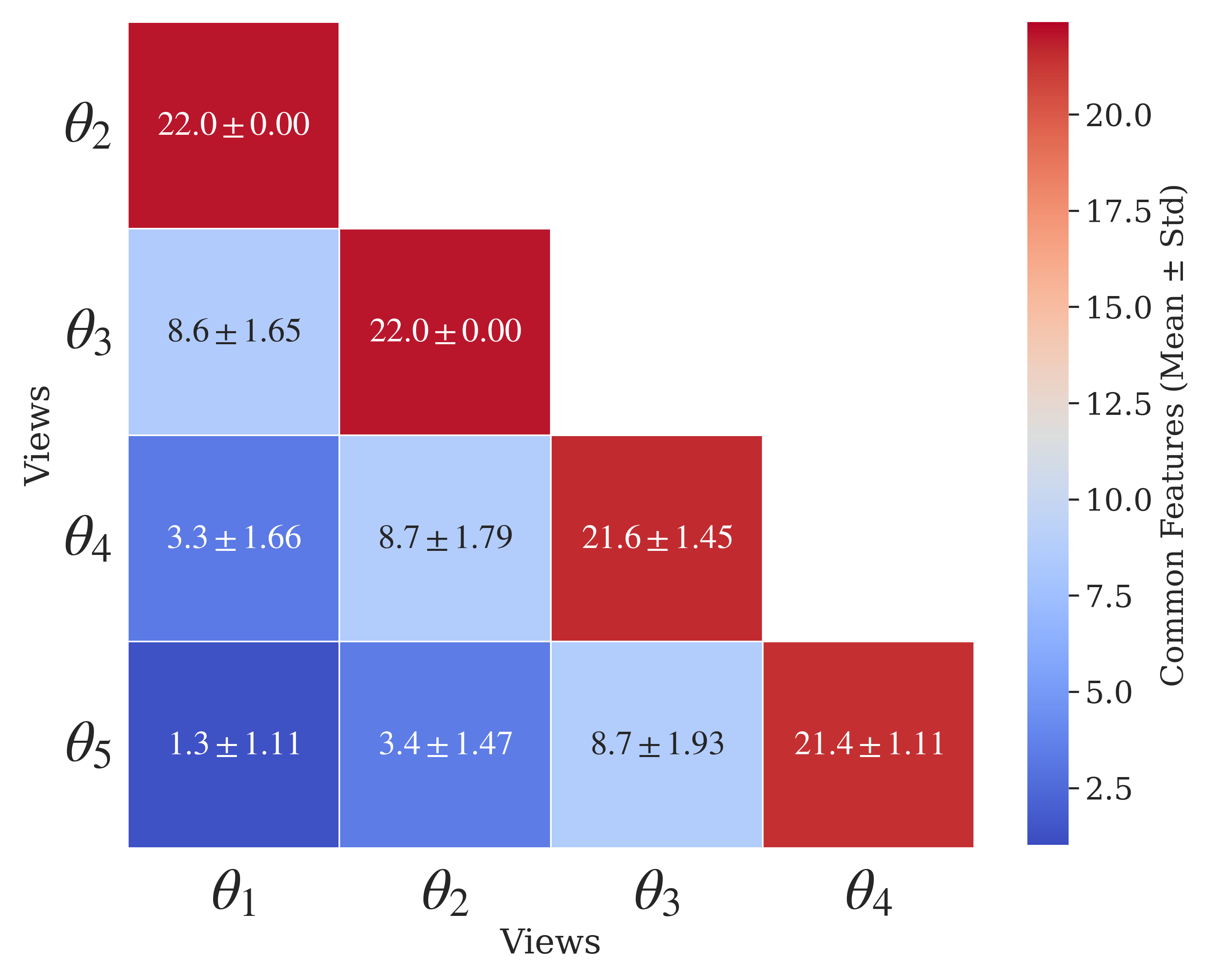}\label{fig:fparwpm}}%
\hfill
\subfloat[GECR]{\includegraphics[width=0.24\textwidth]{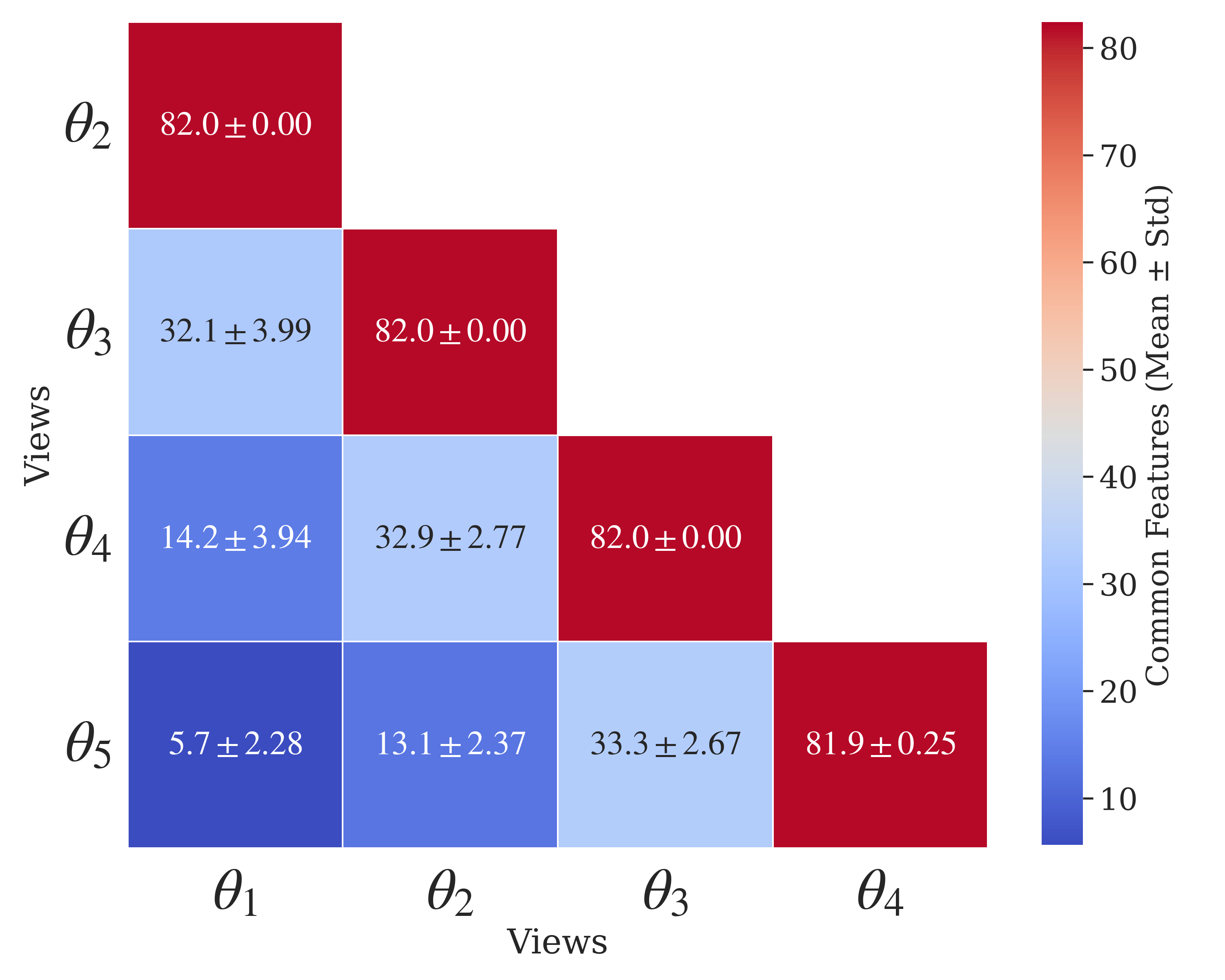}\label{fig:fpgecr}}%
\hfill
\subfloat[GFE]{\includegraphics[width=0.24\textwidth]{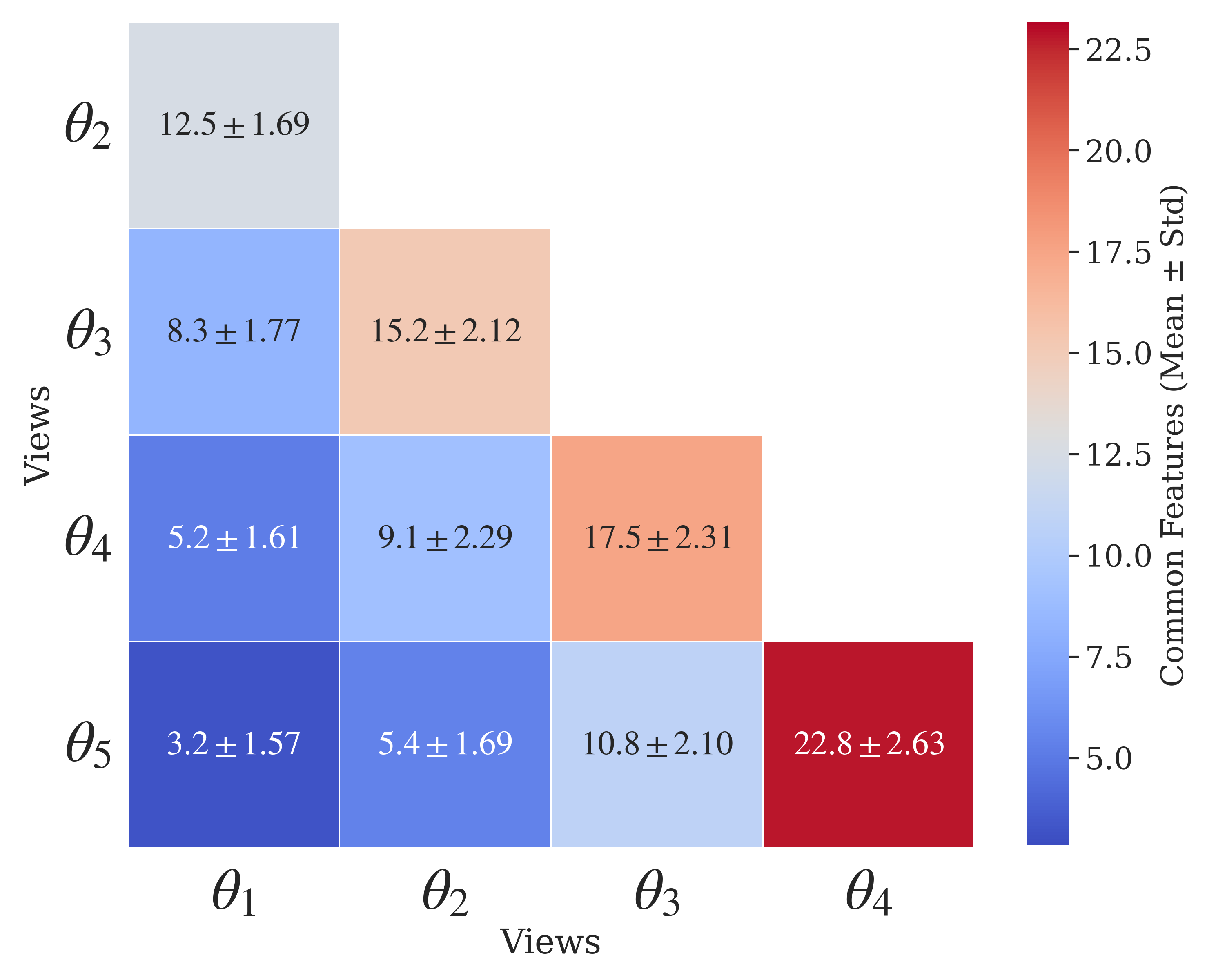}\label{fig:fpgfe}}

% Second row
\subfloat[GSAD]{\includegraphics[width=0.24\textwidth]{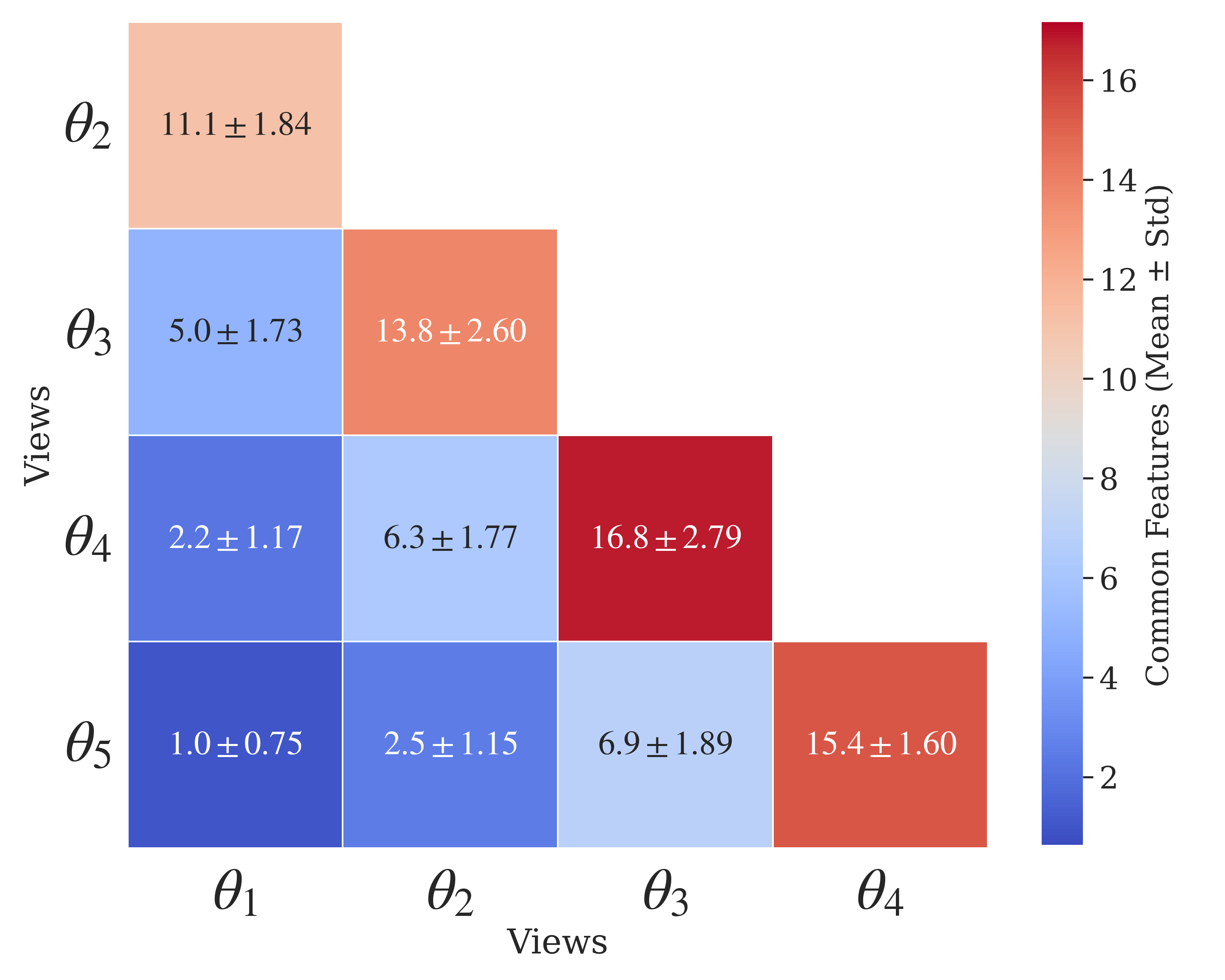}\label{fig:fpgsad}}%
\hfill
\subfloat[HAPT]{\includegraphics[width=0.24\textwidth]{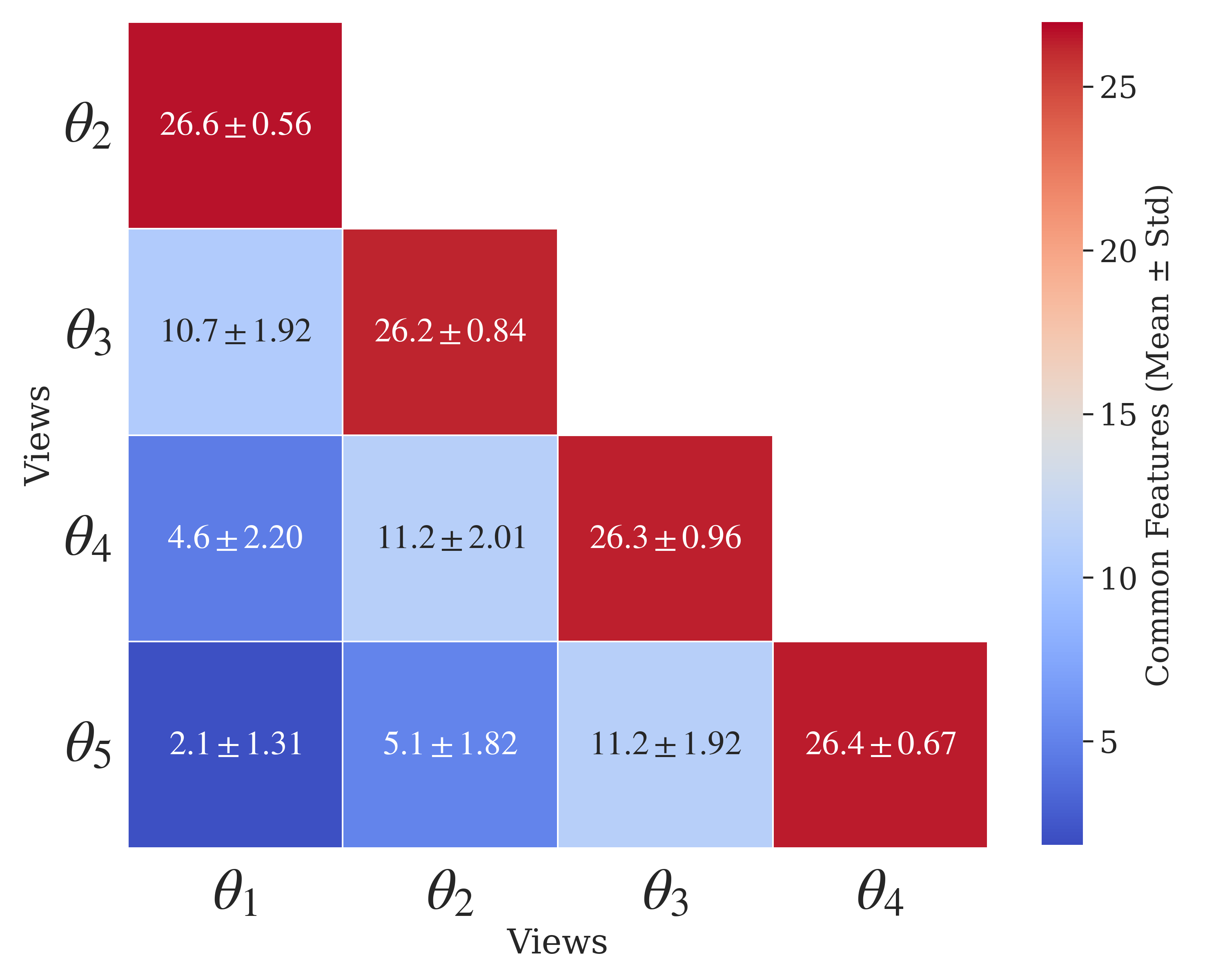}\label{fig:fphapt}}%
\hfill
\subfloat[ISOLET]{\includegraphics[width=0.24\textwidth]{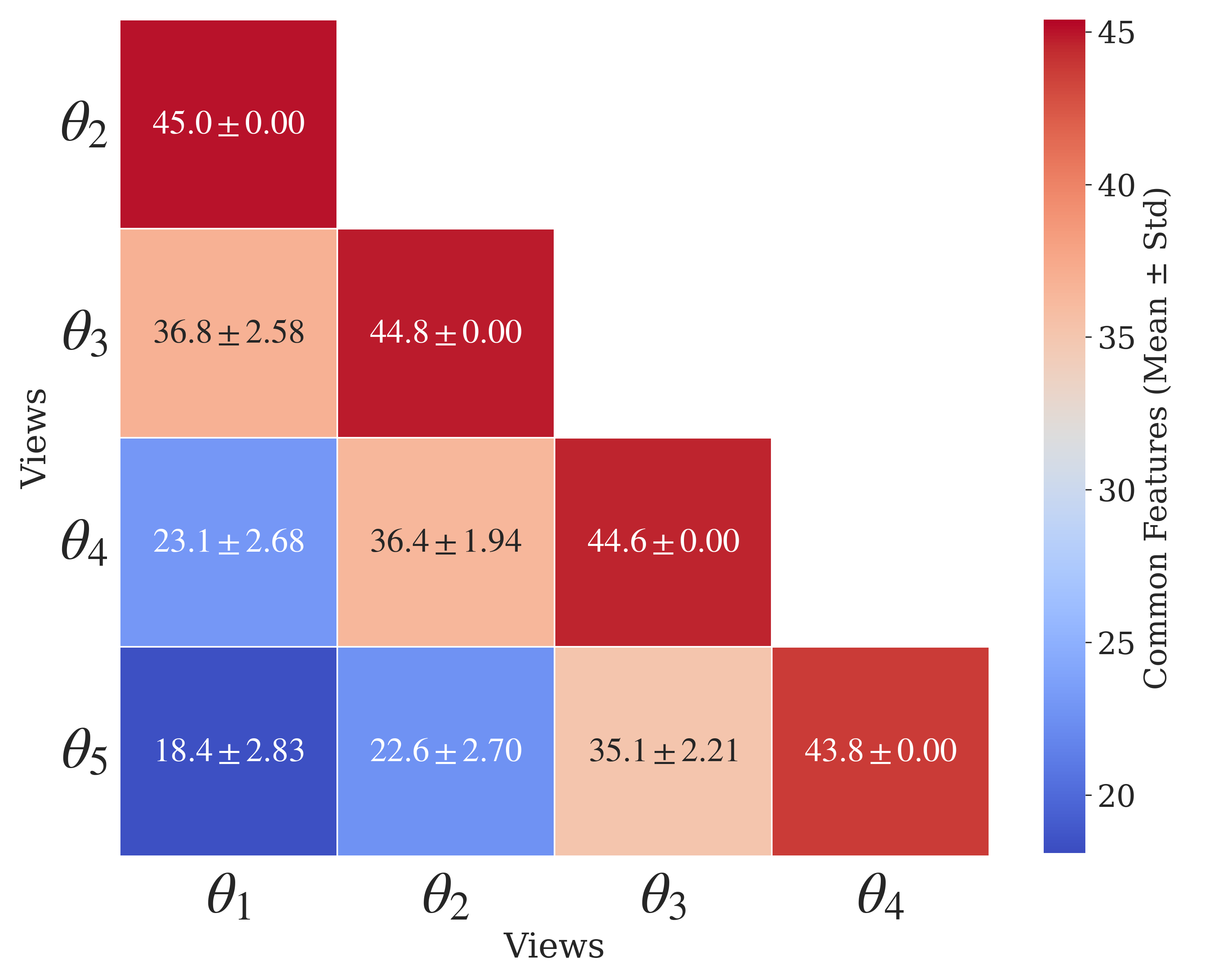}\label{fig:fpisolet}}%
\hfill
\subfloat[PD]{\includegraphics[width=0.24\textwidth]{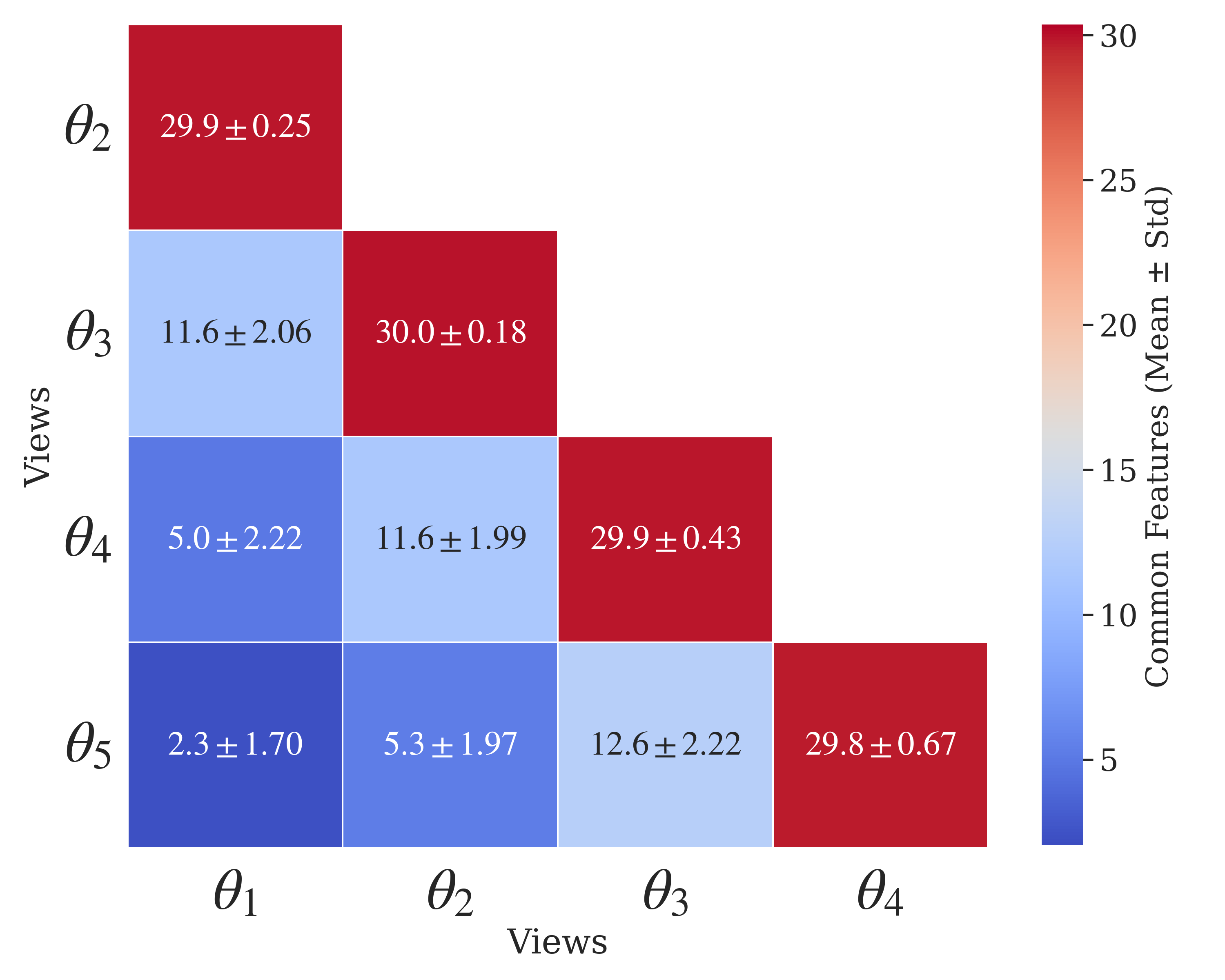}\label{fig:fppd}}

\caption{The number of common features among the artificial views generated by the SPFP algorithm, with parameters \( N_\theta = 5 \), \( N_F = 0.1 \times | F | \) and \( r = 0.6 \).}

\label{fig:sppcommon}
\end{figure*}
%%%%%%%%%%%%%%%%%%%%%%%%%%%%%%%%%%%%%%%%%%%%%%%%%%%%%%%%

%%%%%%%%%%%%%%%%%%%%%%%%%%%%%%%%%%%%%%%%%%%%%%%%%%%%%%%%%
\begin{figure*}[!t] 
\centering

% First row
\subfloat[APSF]{\includegraphics[width=0.24\textwidth]{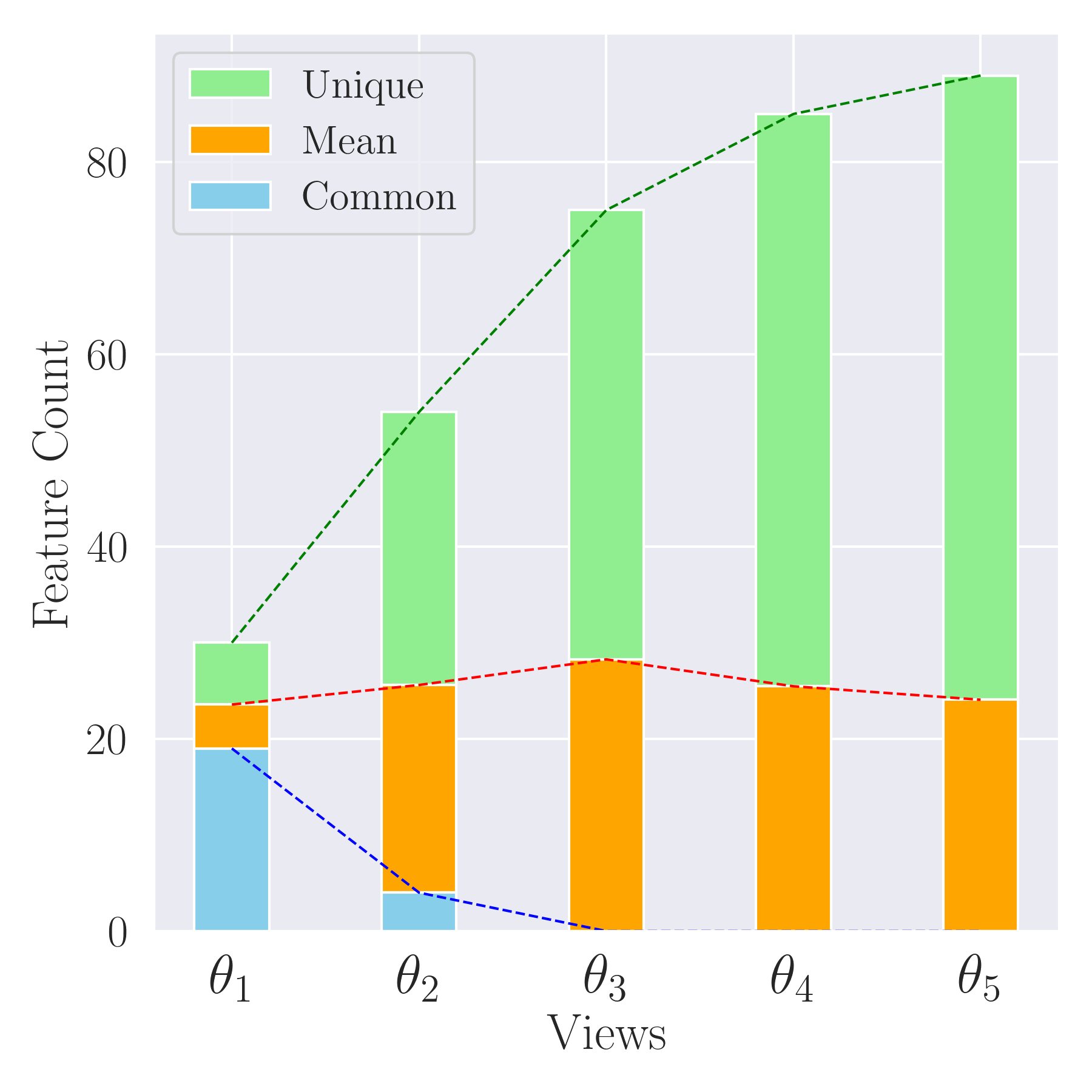}\label{fig:fpafps}}%
\hfill
\subfloat[ARWPM]{\includegraphics[width=0.24\textwidth]{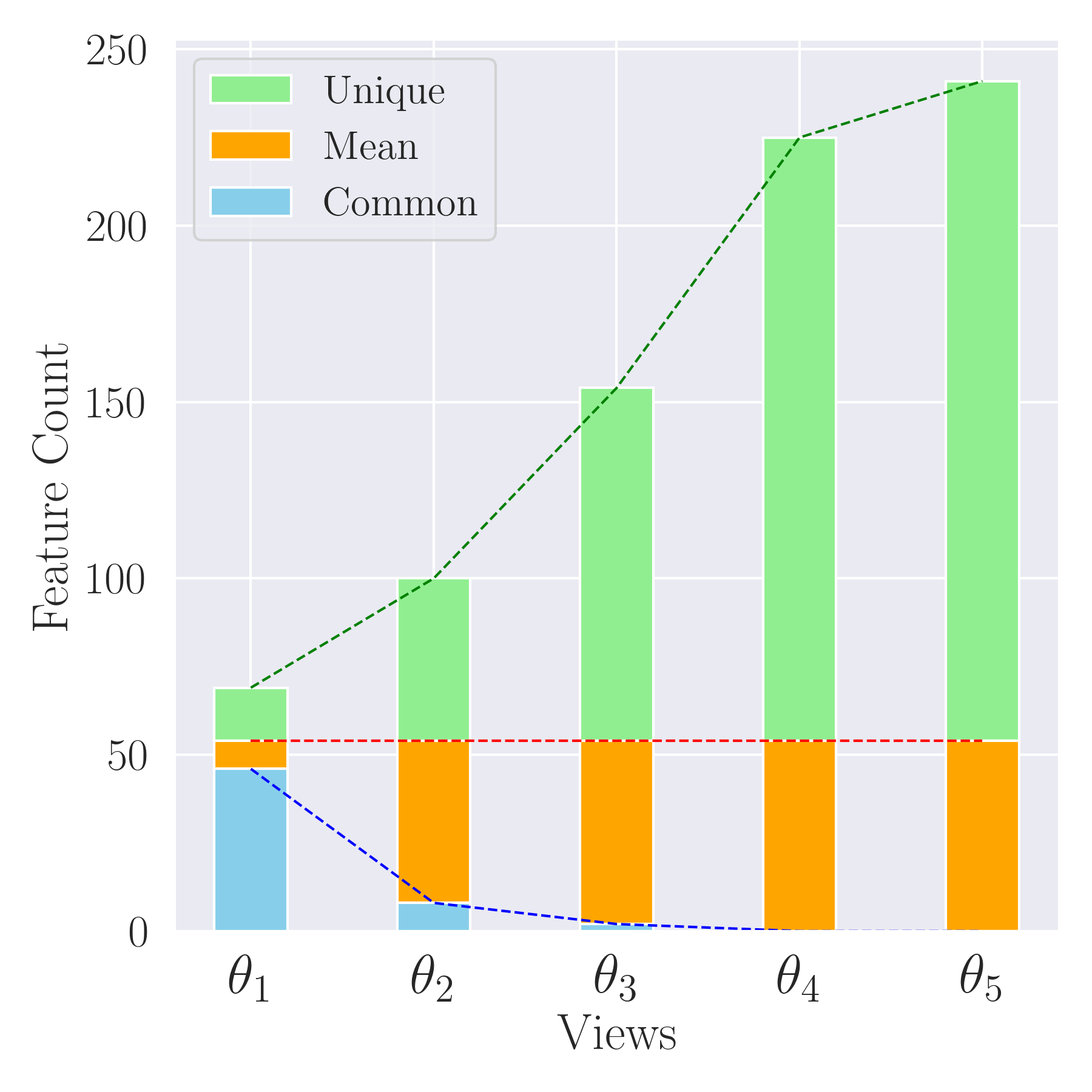}\label{fig:fparwpm}}%
\hfill
\subfloat[GECR]{\includegraphics[width=0.24\textwidth]{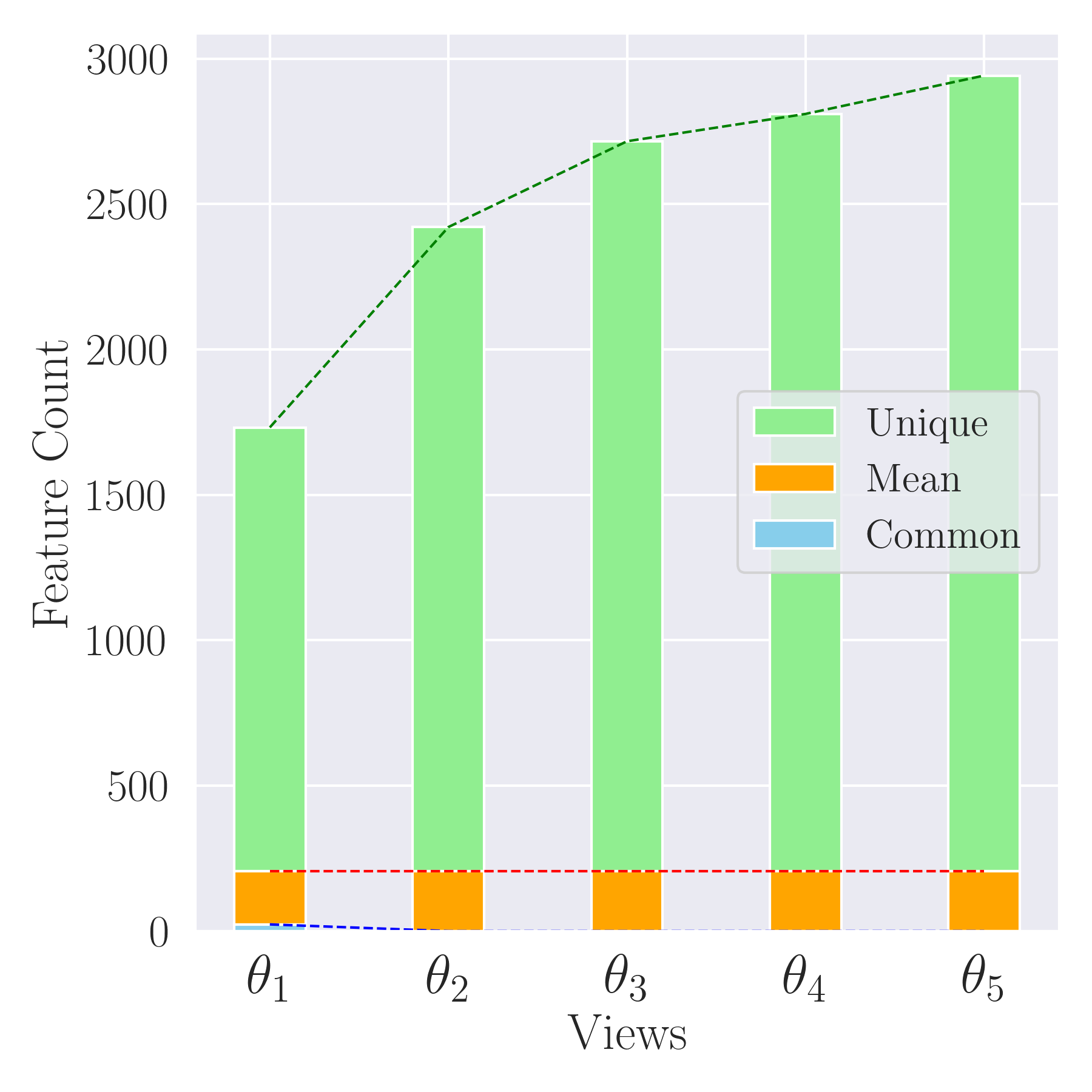}\label{fig:fpgecr}}%
\hfill
\subfloat[GFE]{\includegraphics[width=0.24\textwidth]{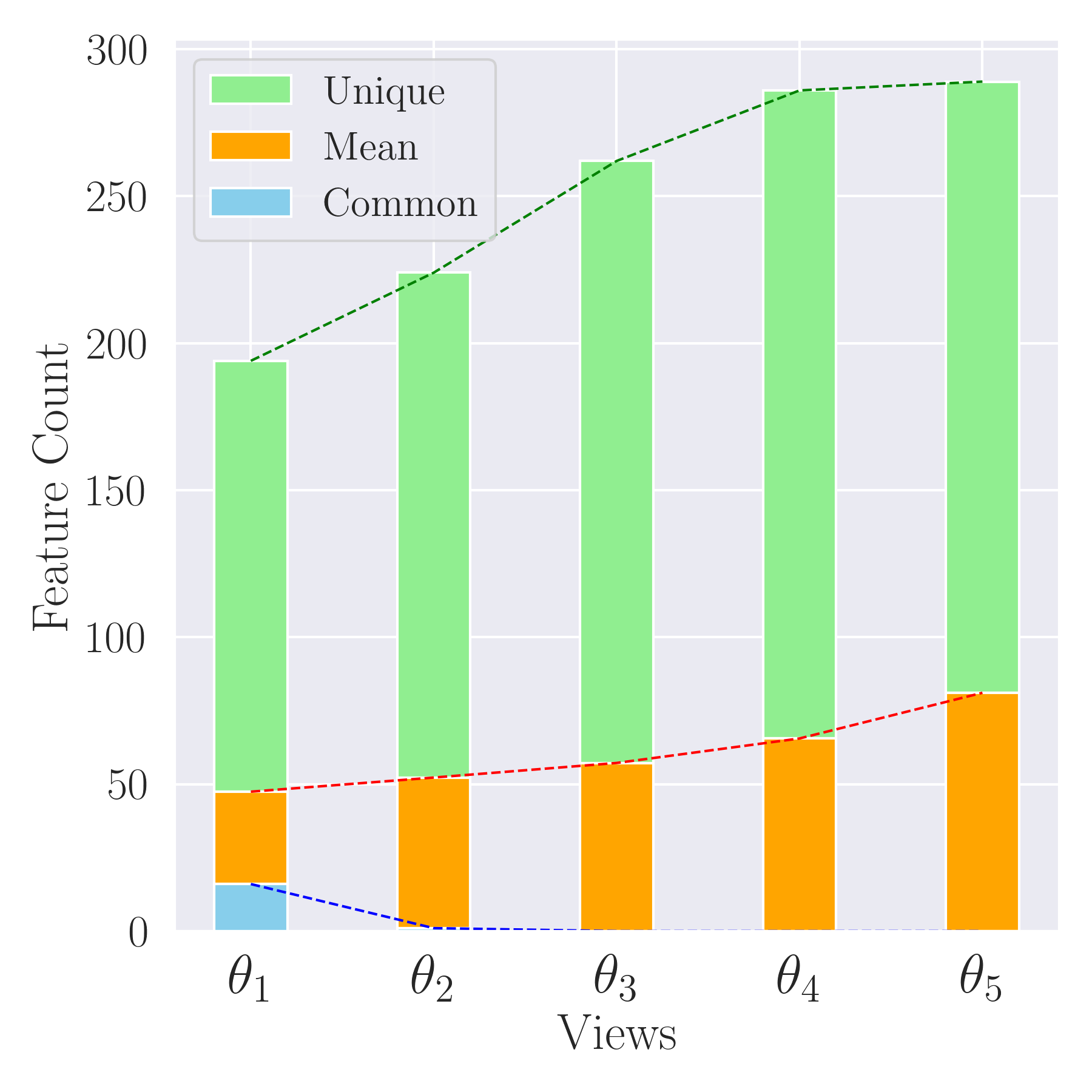}\label{fig:fpgfe}}

% Second row
\subfloat[GSAD]{\includegraphics[width=0.24\textwidth]{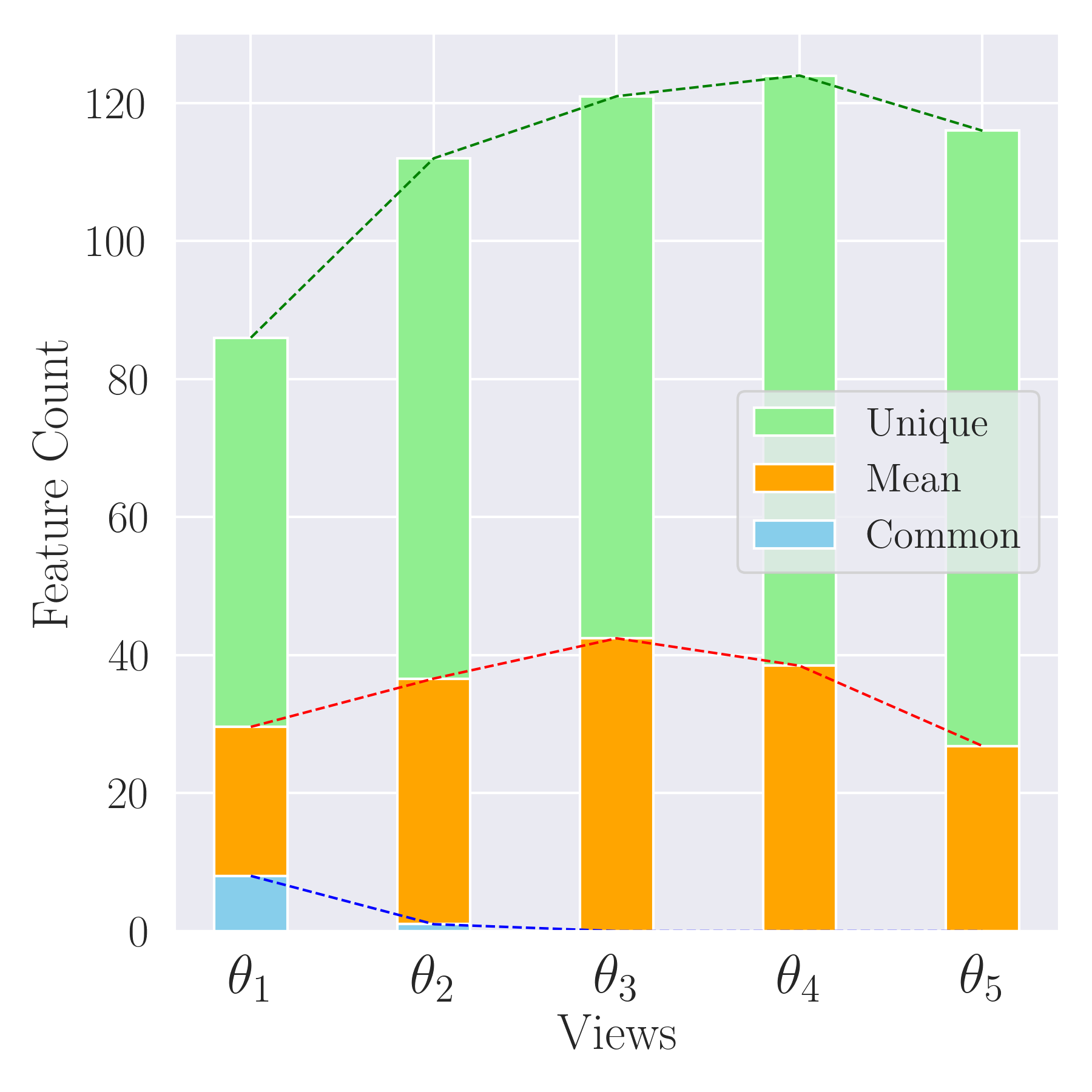}\label{fig:fpgsad}}%
\hfill
\subfloat[HAPT]{\includegraphics[width=0.24\textwidth]{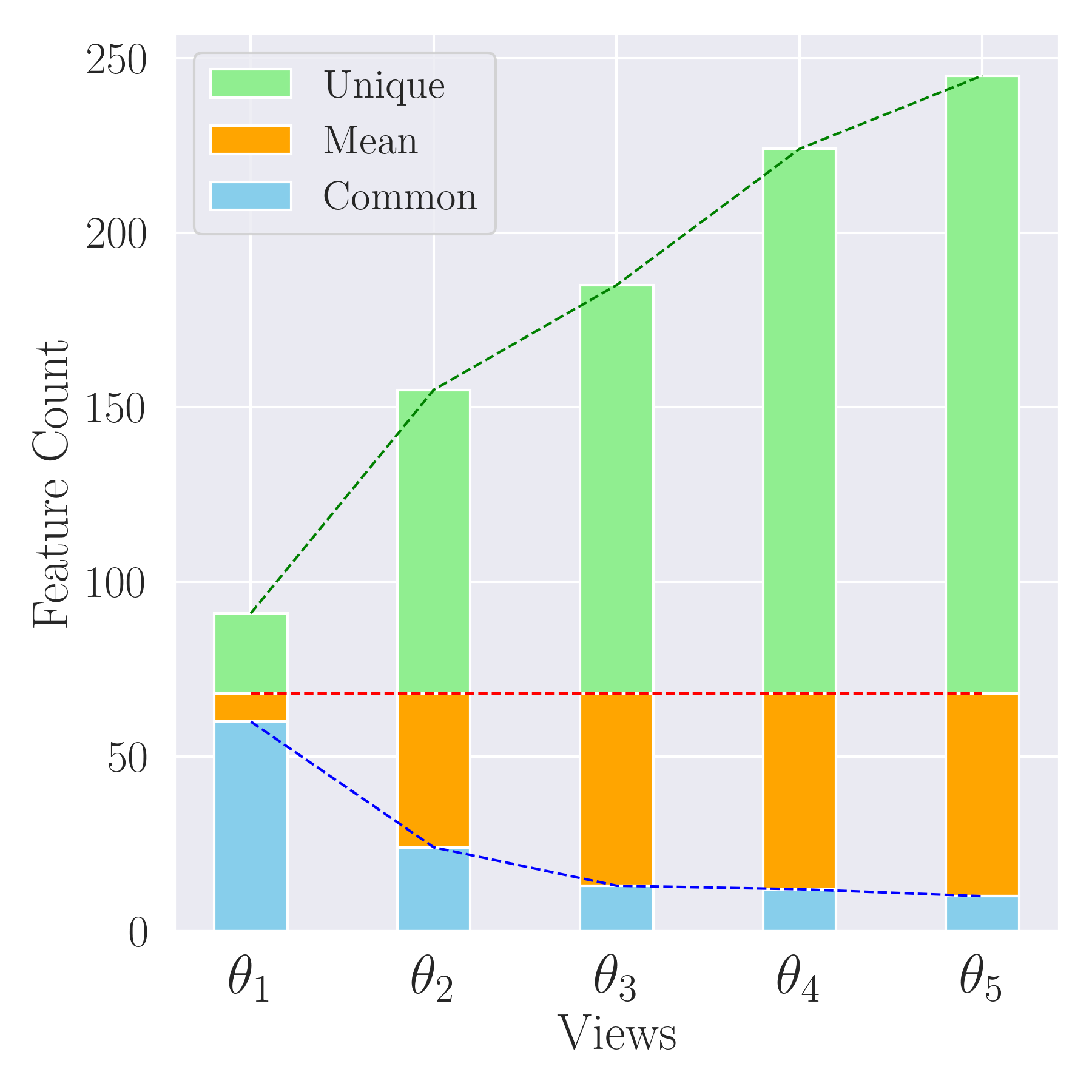}\label{fig:fphapt}}%
\hfill
\subfloat[ISOLET]{\includegraphics[width=0.24\textwidth]{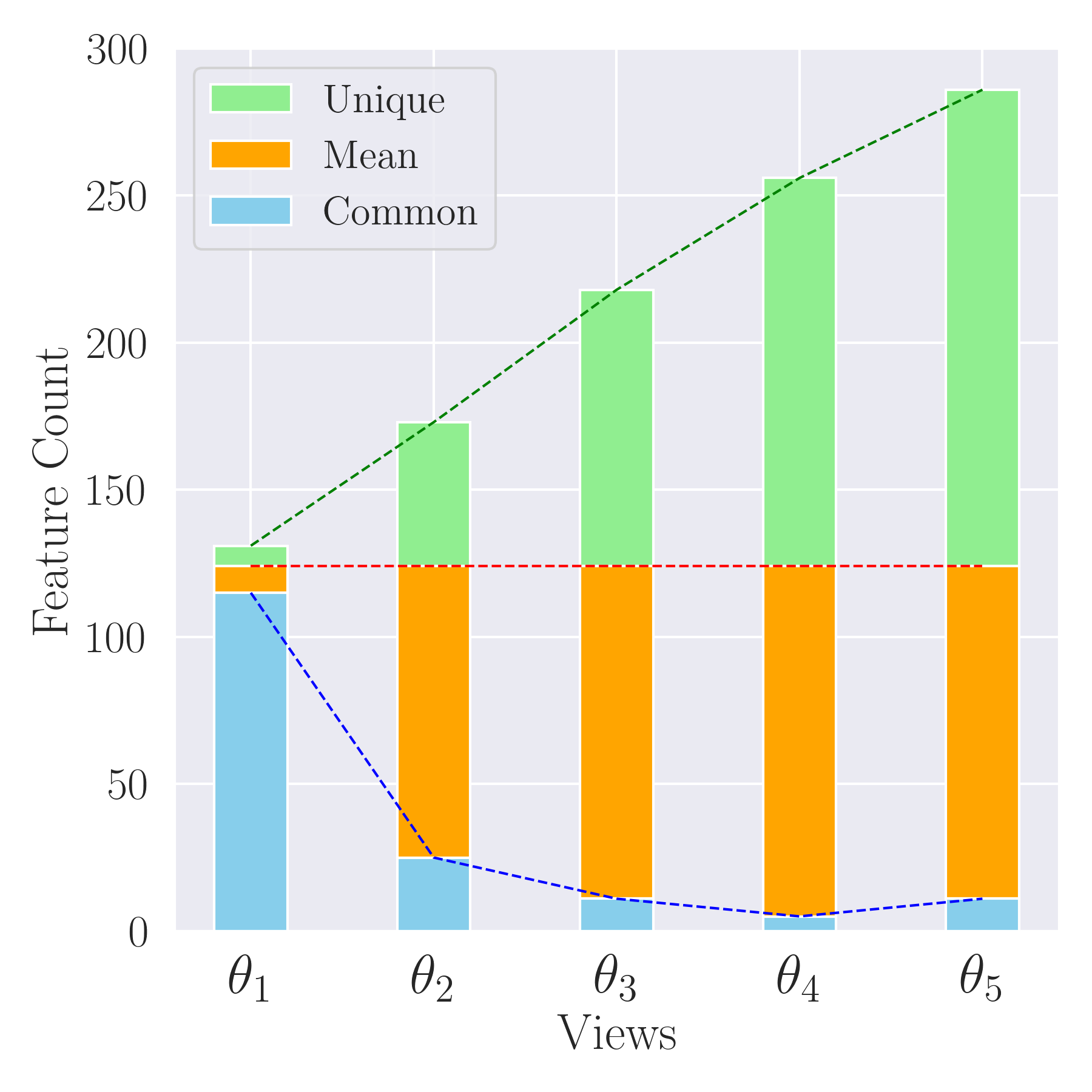}\label{fig:fpisolet}}%
\hfill
\subfloat[PD]{\includegraphics[width=0.24\textwidth]{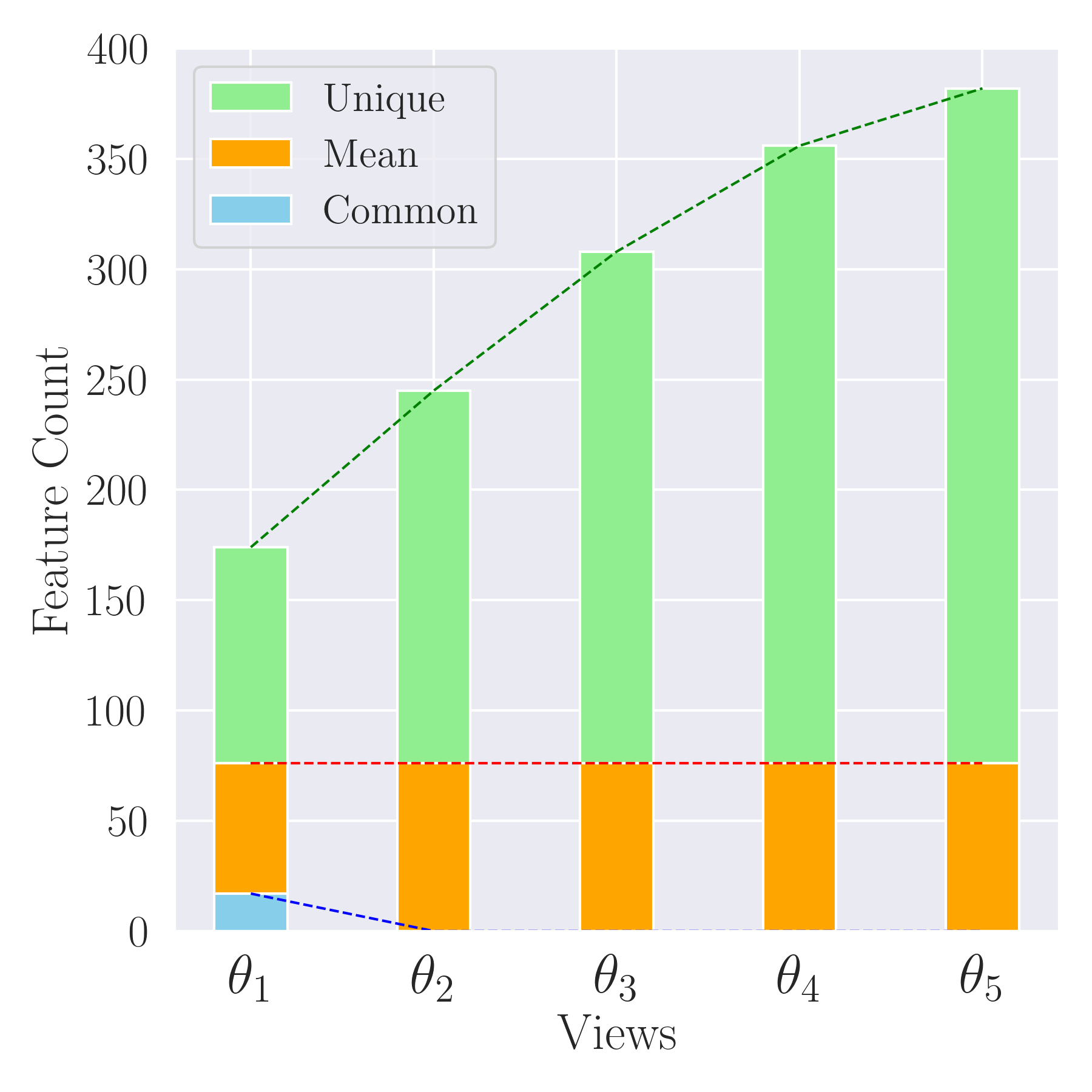}\label{fig:fppd}}

\caption{Overview of feature diversity results using the SPFP algorithm over 30 Runs. The figure displays, for each view, the average (mean) number of features selected per run (orange bars), the total count of unique features selected across all runs (green bars), and the number of common features across every run (blue bars).}

\label{fig:sppall}
\end{figure*}

%%%%%%%%%%%%%%%%%%%%%%%%%%%%%%%%%%%%%%%%%%%%%%%%%%%%%%%%

Figure~\ref{fig:sppcommon} offers an in-depth matrix visualization that elucidates the shared features between various views created by the SPFP algorithm across different datasets.
Each cell within this matrix corresponds to the intersection of features between two distinct views, with the intensity of the cell's color reflecting the volume of common features.
The pattern revealed in Figure~\ref{fig:sppcommon} is particularly noteworthy, illustrating a pronounced sharing of features between successive view pairs, as indicated by the darker cells.
This observation signifies a substantial overlap in features among adjacent views, as opposed to those further apart.
The underlying mechanism, governed by the parameter \( r = 0.6 \), which dictates the exclusion of random features from the feature space before constructing the views, critically affects the distinctiveness and variety of each view.

Except for the GFE dataset, which exhibits approximately a 20\% overlap in features between successive views, the successive pairs of views show over 35\% of overlapping features observed in other datasets. Figure~\ref{fig:sppall} further elucidates the SPFP algorithm's performance by illustrating the mean number of features per view (denoted by orange bars), alongside the quantification of common (blue bars) and unique features (green bars) across 30 executions.
This representation distinctly manifests the cumulative impact of random training set selection and feature elimination (guided by \( r=0.6 \)) on the characteristics of views constructed by the SPFP algorithm.

The mean feature count across different views generally demonstrates uniformity for most datasets, with notable deviations in APFS, GFE, and GSAD.
In the context of GECR, ARWPM, and PD datasets, the algorithm's termination was contingent upon reaching the \( N_F \) threshold.
Pertinently, the third view \( \theta_3 \) in APFS and GSAD, and the fifth view \( \theta_5 \) in GFE, exhibited the highest mean feature count, corroborating the algorithm's efficacy in capturing representative feature subsets that conform to the stipulated criteria \( C_1 \), \( C_2 \), and \( C_3 \).

The disparity in the distribution of unique and common features for each view underscores the algorithm's variable feature selection propensity across different runs and datasets. 
The initial view \( \theta_1 \), predominantly affected by the randomness of training data selection, demonstrates a significant aggregation of common features contrasted with a limited diversity of unique features.
This pattern indicates a tendency of the SPFP algorithm towards a recurrent selection of a core feature subset within this view and its robustness against randomness in instances of the data.
The constancy in feature selection for \( \theta_1 \) is particularly pronounced in datasets such as APSF, ARWPM, HAPT, and ISOLET. Conversely, in datasets with a lower instance count, i.e., GECR and PD, the \( \theta_1 \) view shows a broader diversity of unique features and a lower proportion of common features to the mean feature count.

As the algorithm progresses from \( \theta_2 \) to \( \theta_5 \), the synergistic effect of randomly selecting training data and reducing the feature space (attributable to \( r=0.6 \)) incrementally amplifies the diversity of the selected features for each view, while simultaneously attenuating their commonality. 
For instance, in constructing the \( \theta_5 \) view for the GECR dataset, about 3,000 distinct features were utilized throughout the runs, yet none emerged as a recurrently selected feature.
This phenomenon accentuates the SPFP algorithm's adaptability and robustness in discerning diverse data patterns, ensuring the retention of the intrinsic semantic structures of the original dataset.

%%%%%%%%%%%%%%%%%%%%%%%%%%%%%%%%%%%%%%%%%%%%%%%%%%%%%%%%%%%%%%%%%%%%%%%%%%%%%%
% \usepackage{graphicx}

\begin{table*}
\centering
\caption{The summary of statistical comparison of results for testing data obtained from 30 XGBoost Runs. W, T, and L denote win, tie, and loss based on Friedman, and Conover adjusted p-values, and Cliff's $\delta$ effect size analysis.}
\label{tabmain:xgbwtl}
%\resizebox{\textwidth}{!}{%
\begin{tabular}{cccccccccc} 
\hline
\multicolumn{10}{c}{XGBoost (Win - Tie - Loss)}                                                       \\ 
\hline
Metric   & $\theta_1$ & $\theta_2$ & $\theta_3$ & $\theta_4$ & $\theta_5$ & $E_{1:2}$ & $E_{1:3}$ & $E_{1:4}$ & $E_{1:5}$   \\ 
\hline
$F_1$    & 0 - 0 - 8     & 0 - 0 - 8     & 0 - 0 - 8     & 1 - 0 - 7     & 0 - 1 - 7     & 1 - 1 - 6     & 1 - 1 - 6     & 2 - 2 - 5     & 3 - 3 - 2  \\
AUC      & 0 - 0 - 8     & 0 - 0 - 8     & 0 - 0 - 8     & 1 - 0 - 7     & 0 - 1 - 7     & 1 - 1 - 6     & 1 - 2 - 5     & 4 - 1 - 3     & 6 - 1 - 1  \\
Log-Loss & 1 - 1 - 6     & 1 - 1 - 6     & 1 - 1 - 6     & 1 - 1 - 6     & 0 - 2 - 6     & 1 - 1 - 6     & 2 - 1 - 4     & 3 - 1 - 4     & 5 - 1 - 2  \\
MEC      & 0 - 0 - 8     & 0 - 0 - 8     & 0 - 0 - 8     & 1 - 0 - 7     & 0 - 1 - 7     & 0 - 3 - 5     & 1 - 2 - 5     & 2 - 2 - 4     & 3 - 3 - 2  \\
MEW      & 0 - 0 - 8     & 0 - 0 - 8     & 0 - 0 - 8     & 1 - 0 - 7     & 0 - 1 - 7     & 4 - 1 - 3     & 1 - 2 - 5     & 2 - 2 - 4     & 3 - 3 - 2  \\
Time     & 7 - 1 - 0     & 7 - 1 - 0     & 7 - 1 - 0     & 7 - 1 - 0     & 6 - 2 - 0     & --            & --            & --            & --         \\
\hline
\end{tabular}
%}
\end{table*}
%%%%%%%%%%%%%%%%%%%%%%%%%%%%%%%%%%%%%%%%%%%%%%%%%%%%%%%%%%%%%%%%%%%%%%%%%%%%%%

%%%%%%%%%%%%%%%%%%%%%%%%%%%%%%%%%%%%%%%%%%%%%%%%%%%%%%%%%%%%%%%%%%%%%%%%%%%%%%%%%%%%%%%%%%%%%%%%
\begin{figure*}[!t] 
\centering

% First row
\subfloat[$F_{1}$ Score]{\includegraphics[width=0.33\textwidth]{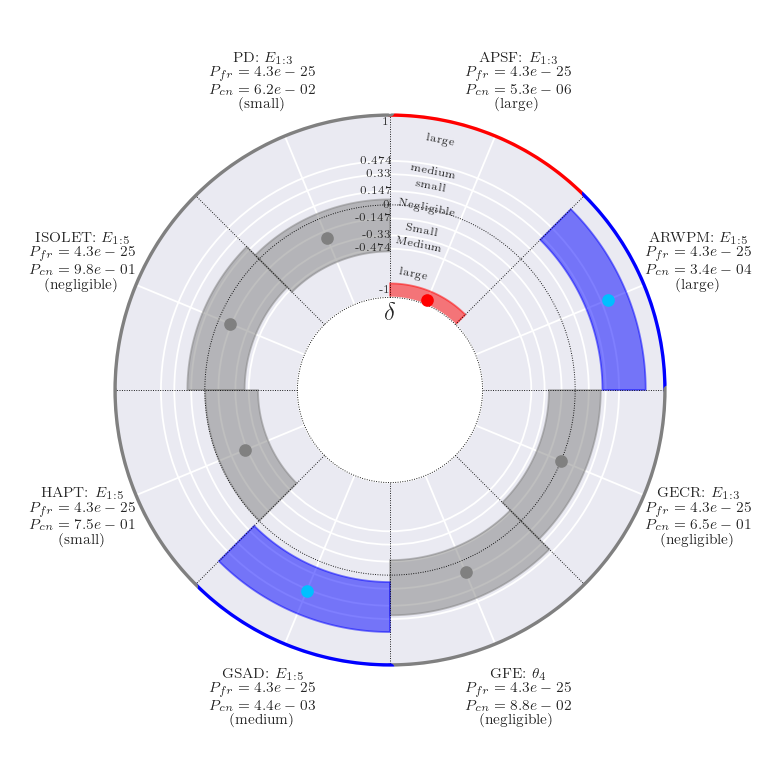}\label{fig:xgbwf1}}%
\hfill
\subfloat[AUC]{\includegraphics[width=0.33\textwidth]{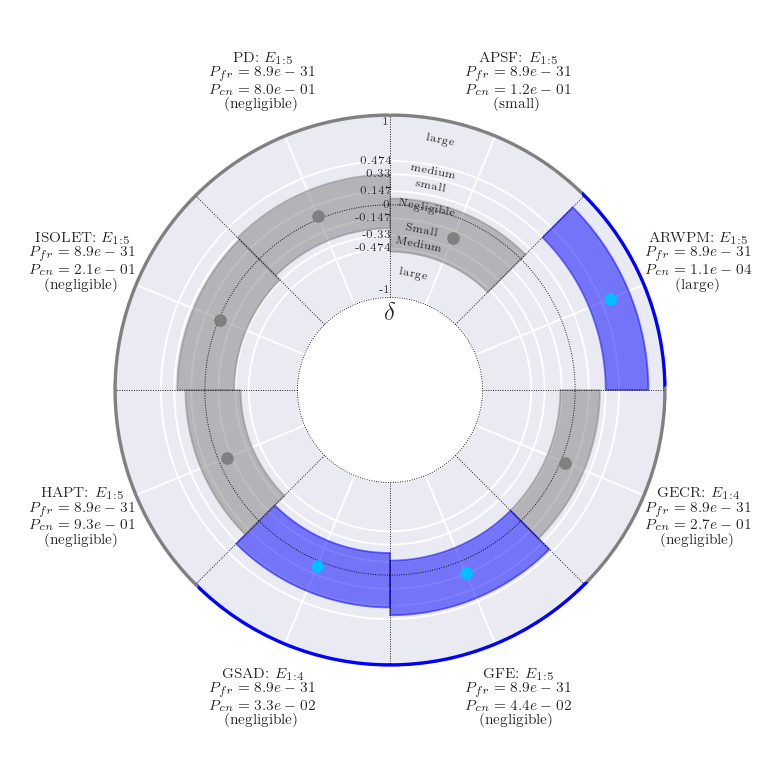}\label{fig:xgbwauc}}%
\hfill
\subfloat[Log-Loss]{\includegraphics[width=0.33\textwidth]{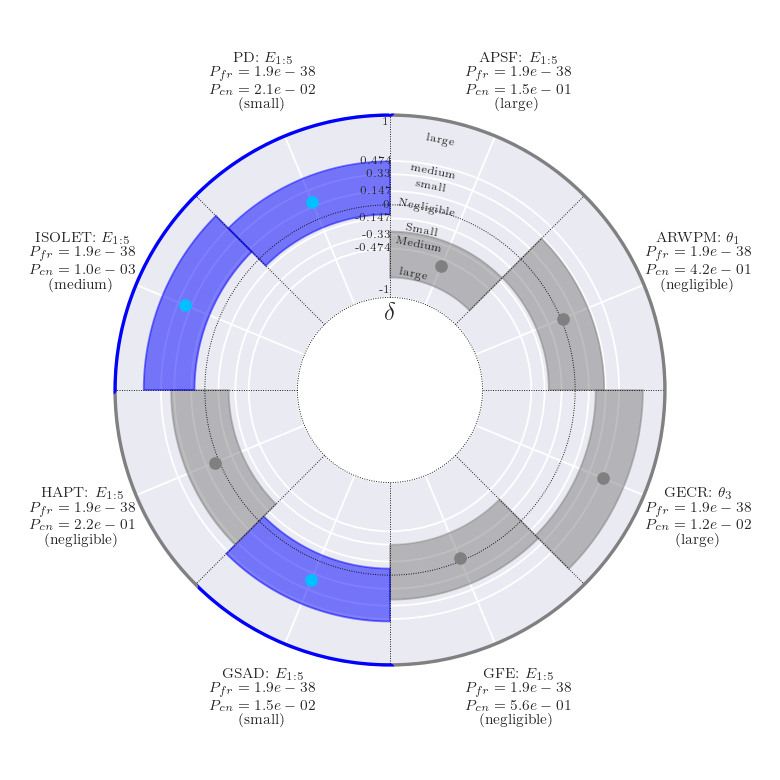}\label{fig:xgbwloss}}%

% Second row
\subfloat[MEC]{\includegraphics[width=0.33\textwidth]{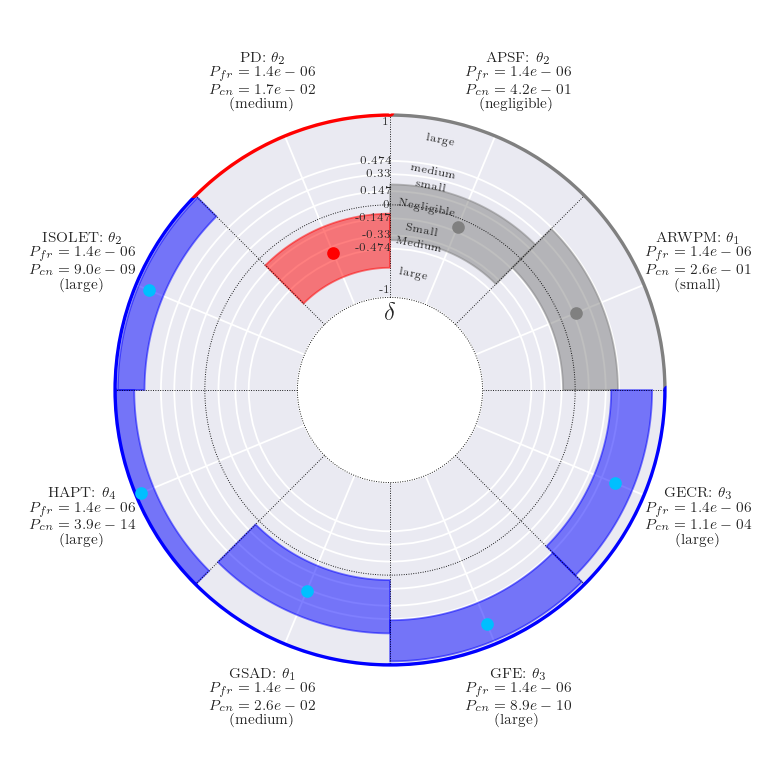}\label{fig:xgbwgsad}}%
\hfill
\subfloat[MEW]{\includegraphics[width=0.33\textwidth]{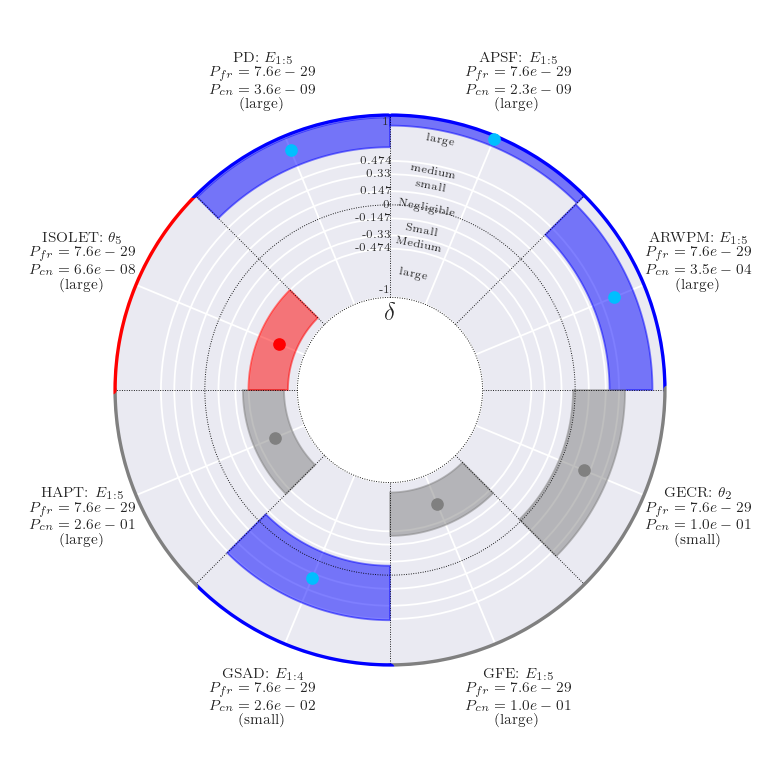}\label{fig:xgbwhapt}}%
\hfill
\subfloat[Time]{\includegraphics[width=0.33\textwidth]{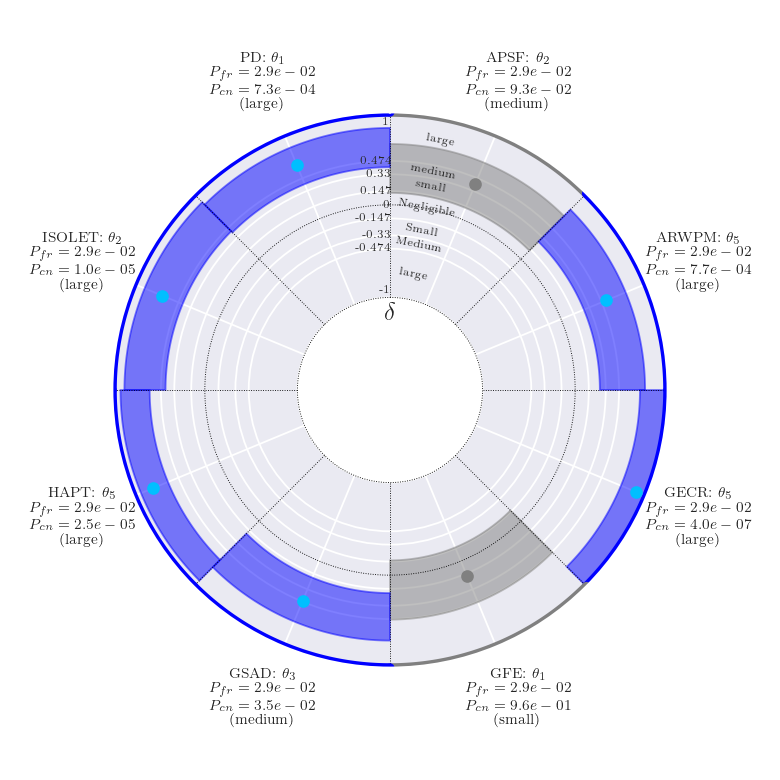}\label{fig:xgbwisolet}}%

\caption{The figure illustrates the best performing XGBoost models in comparison to the benchmark model, with the segments representing the 95\% confidence interval of Cliff's $\delta$ (center point). Grey segments indicate cases where either $P_{fr}>0.05$ or $P_{cn}>0.05$, suggesting no significant difference from the benchmark. Blue segments denote instances where the model outperforms the benchmark ($P_{fr}<0.05$, $P_{cn}<0.05$, and $\delta>0$), while red segments indicate that the benchmark model outperforms the corresponding XGBoost model ($P_{fr}<0.05$, $P_{cn}<0.05$, and $\delta<0$).}

\label{fig:xgbcliffwin}
\end{figure*}
%%%%%%%%%%%%%%%%%%%%%%%%%%%%%%%%%%%%%%%%%%%%%%%%%%%%%%%%

%%%%%%%%%%%%%%%%%%%%%%%%%%%%%%%%%%%%%%%%%%%%%%%%%%%%%%%%%%%%%%%%%%%%%%%%%%%%%%
\begin{table*}
\centering
\caption{The summary of statistical comparison of results for testing data obtained from Logistic Regression runs. W, T, and L denote win, tie, and loss based on Friedman and Conover's adjusted p-values, and Cliff's $\delta$ effect size analysis.}
\label{tabmain:lrwtl}
%\resizebox{\textwidth}{!}{%
\begin{tabular}{cccccccccc}
\hline
\multicolumn{10}{c}{Logistic Regression (Win - Tie - Loss)}
             \\
\hline
Metric & $\theta_1$ & $\theta_2$ & $\theta_3$ & $\theta_4$ & $\theta_5$ & $E_{1:2}$ & $E_{1:3}$ & $E_{1:4}$ & $E_{1:5}$ \\
\hline
$F_{1}$ Score & 1 - 2 - 5 & 1 - 3 - 4 & 0 - 3 - 5 & 0 - 2 - 6 & 0 - 1 - 7 & 1 - 3 - 4 & 3 - 2 - 3 & 3 - 2 - 3 & 4 - 4 - 0 \\
AUC & 1 - 1 - 6 & 0 - 1 - 7 & 0 - 2 - 6 & 0 - 1 - 7 & 0 - 0 - 8 & 1 - 2 - 5 & 2 - 3 - 3 & 2 - 3 - 3 & 2 - 6 - 0 \\
Loss & 2 - 1 - 5 & 1 - 1 - 6 & 0 - 3 - 5 & 0 - 3 - 5 & 0 - 2 - 6 & 2 - 2 - 4 & 3 - 1 - 4 & 4 - 0 - 4 & 4 - 1 - 3 \\
MEC & 3 - 0 - 5 & 3 - 0 - 5 & 2 - 1 - 5 & 2 - 1 - 5 & 1 - 2 - 5 & 3 - 0 - 5 & 3 - 0 - 5 & 1 - 2 - 5 & 1 - 2 - 5 \\
MEW & 4 - 1 - 3 & 3 - 2 - 3 & 4 - 2 - 2 & 2 - 4 - 2 & 3 - 5 - 0 & 5 - 1 - 2 & 4 - 2 - 2 & 4 - 2 - 2 & 4 - 2 - 2 \\
Time & 8 - 0 - 0 & 8 - 0 - 0 & 8 - 0 - 0 & 8 - 0 - 0 & 8 - 0 - 0 & -- & -- & -- & -- \\
\hline
\end{tabular}
%}
\end{table*}
%%%%%%%%%%%%%%%%%%%%%%%%%%%%%%%%%%%%%%%%%%%%%%%%%%%%%%%%%%%%%%%%%%%%%%%%%%%%%%

%%%%%%%%%%%%%%%%%%%%%%%%%%%%%%%%%%%%%%%%%%%%%%%%%%%%%%%%%%%%%%%%%%%%%%%%%%%%%%%%%%%%%%%%%%%%%%%%
\begin{figure*}[!t] 
\centering

% First row
\subfloat[$F_{1}$ Score]{\includegraphics[width=0.33\textwidth]{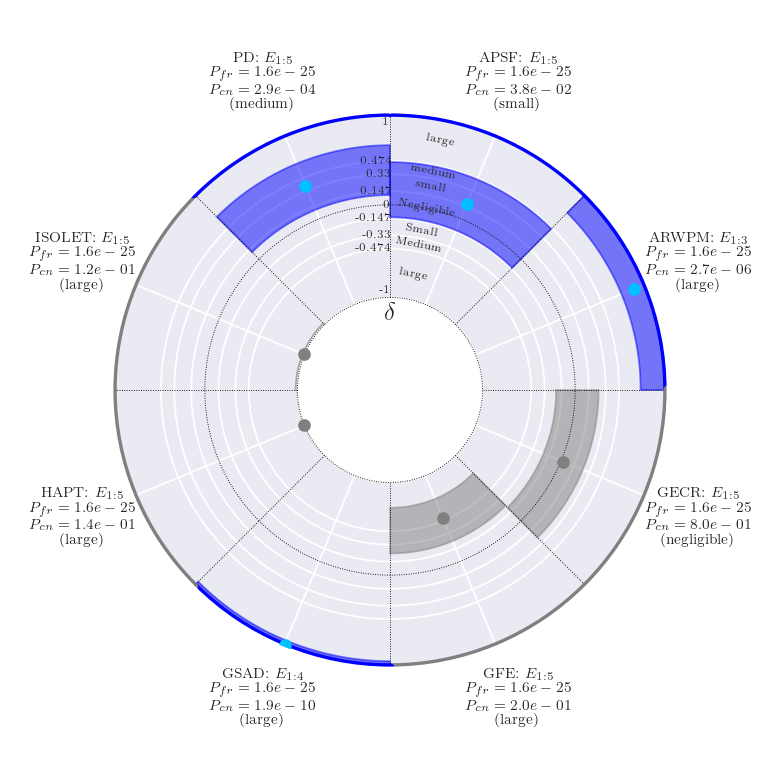}\label{fig:lrwf1}}%
\hfill
\subfloat[AUC]{\includegraphics[width=0.33\textwidth]{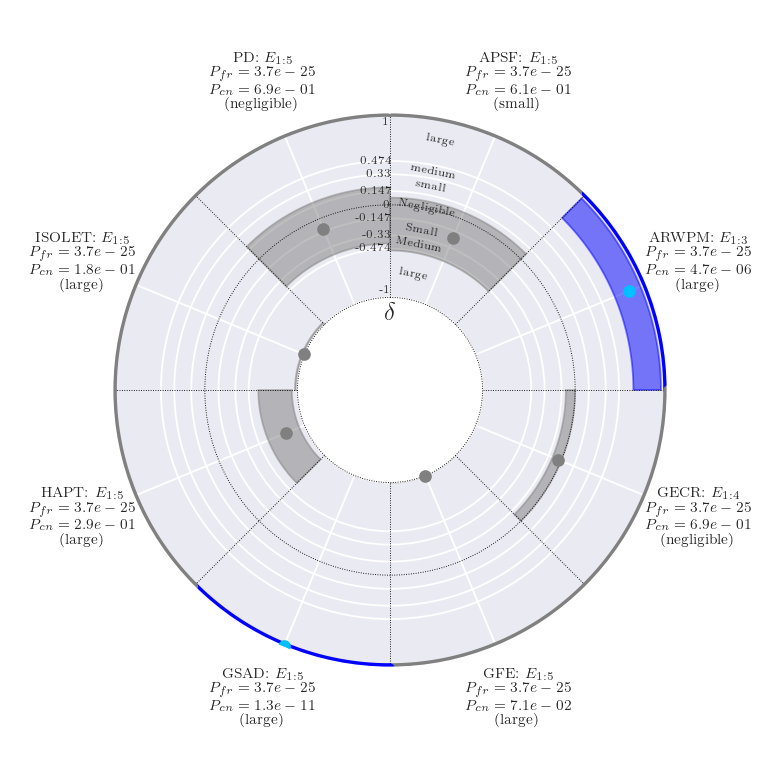}\label{fig:lrwauc}}%
\hfill
\subfloat[Log-Loss]{\includegraphics[width=0.33\textwidth]{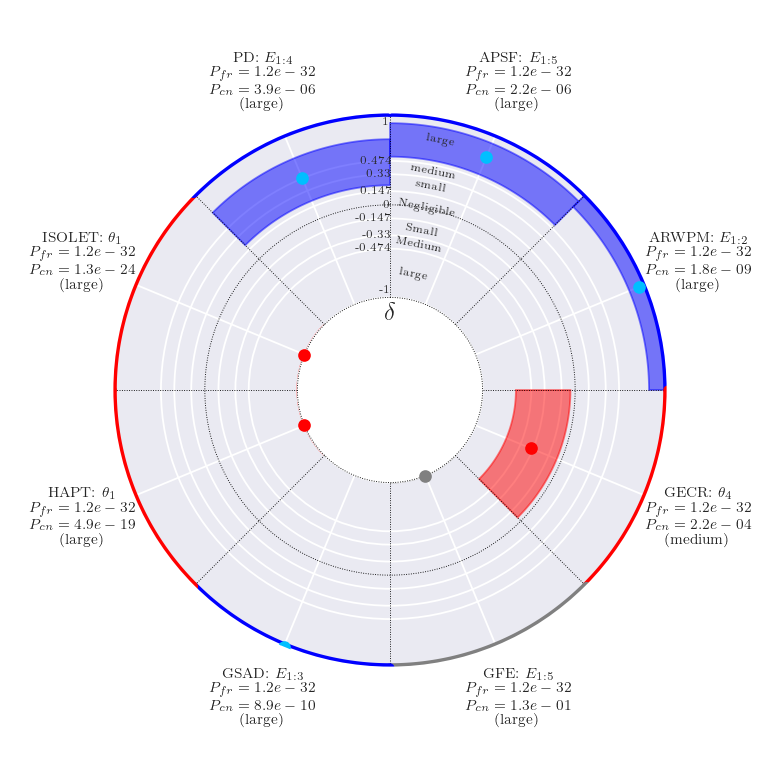}\label{fig:lrwloss}}%

% Second row
\subfloat[MEC]{\includegraphics[width=0.33\textwidth]{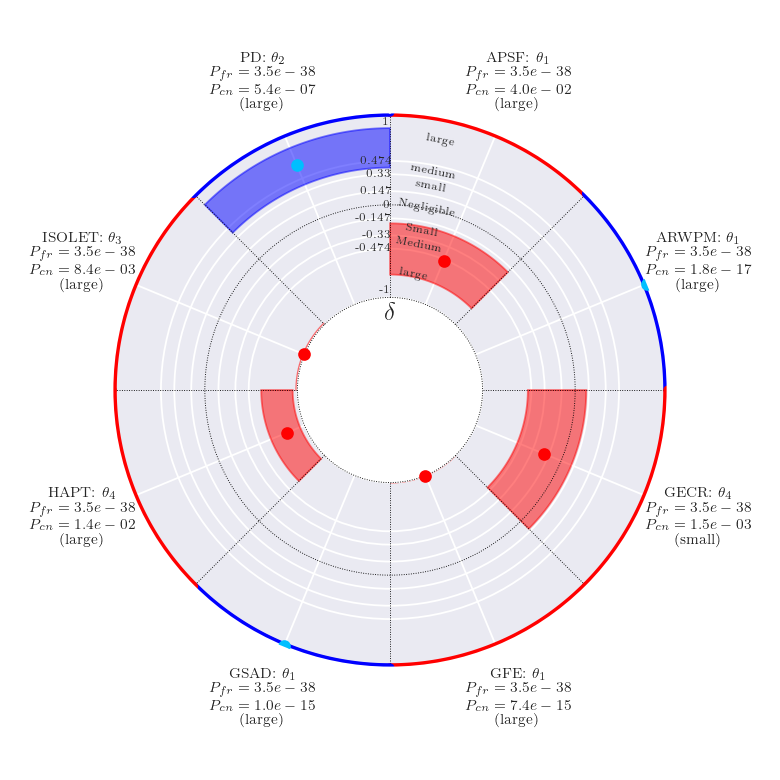}\label{fig:lrwgsad}}%
\hfill
\subfloat[MEW]{\includegraphics[width=0.33\textwidth]{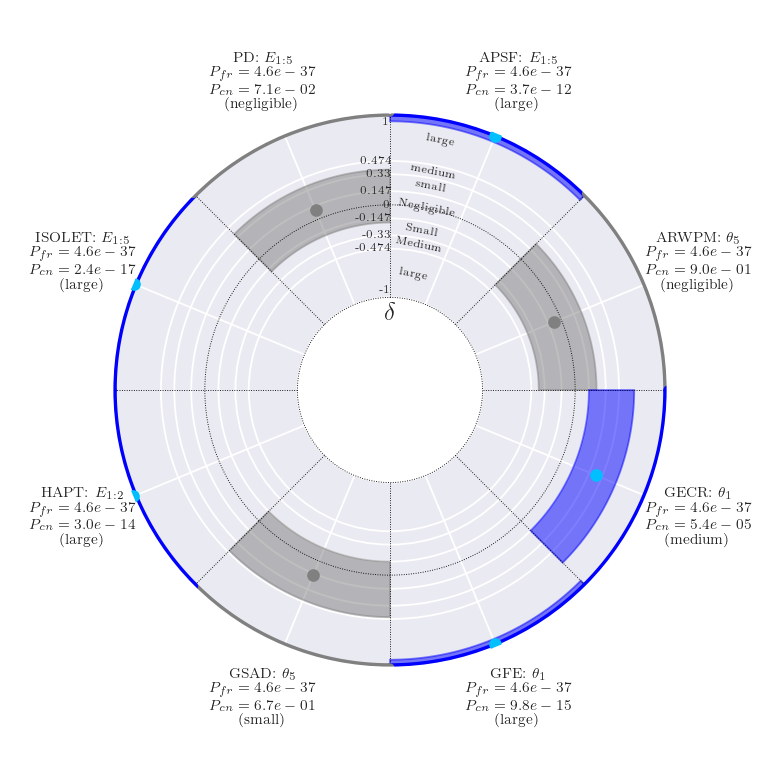}\label{fig:lrwhapt}}%
\hfill
\subfloat[Time]{\includegraphics[width=0.33\textwidth]{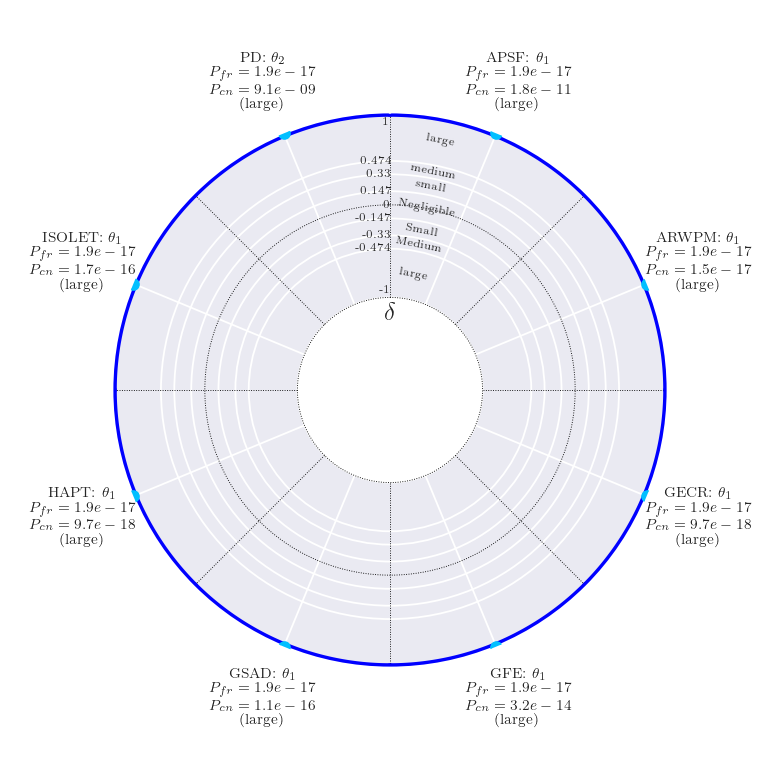}\label{fig:lrwisolet}}%

\caption{The figure illustrates the best performing LR models in comparison to the benchmark model, with the segments representing the 95\% confidence interval of Cliff's $\delta$ (center point). Grey segments indicate cases where either $P_{fr}>0.05$ or $P_{cn}>0.05$, suggesting no significant difference from the benchmark. Blue segments denote instances where the model outperforms the benchmark ($P_{fr}<0.05$, $P_{cn}<0.05$, and $\delta>0$), while red segments indicate that the benchmark model outperforms the corresponding LR model ($P_{fr}<0.05$, $P_{cn}<0.05$, and $\delta<0$).}

\label{fig:lrcliffwin}
\end{figure*}
%%%%%%%%%%%%%%%%%%%%%%%%%%%%%%%%%%%%%%%%%%%%%%%%%%%%%%%%

As stated before, we optimized the hyper-parameters of XGBoost and LR models on the training data of constructed views using 5-fold cross-validation. 
The models were subsequently trained on training data using these optimized hyper-parameters and evaluated on the testing data. 
We then constructed four ensembles of views, denoted as $E_{1:2}$, $E_{1:3}$, $E_{1:4}$, and $E_{1:5}$, based on the predicted class label probabilities from the first two, three, four, and five views, respectively. 
These ensembles were created using a weighted average method, with the view weights determined by their normalized AUC values. 
As a control mechanism in our experiments, both XGBoost and LR models were fine-tuned and evaluated on the testing data using all features of each dataset, which we refer to as the benchmark model and denoted by ‘’All’’ in Figures and Tables. 
This experimental design allows us to compare the results of the views and their ensembles against the benchmark model, serving as an effective measure to evaluate the SPFP algorithm's effectiveness.
For comparative analysis, we employed five accuracy metrics: $F_1$ score, AUC, log-loss, MEC, and MEW, in addition to the running time of the models, which encompassed both hyper-parameter tuning and training phases.

We utilized the Friedman test at a $\alpha=0.05$ significance level to determine whether there were statistically significant differences among the results obtained for each metric.
 To address the family-wise error rate (type I error), the Bonferroni correction method was applied, adjusting the p-values obtained from the Friedman test ($P_{fr}$) across the five accuracy metrics.
Upon rejection of the Friedman test's null hypothesis (indicated by $P_{fr}<0.05$), we conducted the Conover multiple comparison test to identify significant differences between the models.
 We preferred the Conover test over the more commonly used Nemenyi test due to its greater sensitivity; the Nemenyi test, being more conservative, tends to be less sensitive, especially given our large model count (ten).
 This conservatism, while controlling for false discovery (type I error), increases the risk of not detecting a true effect (type II error).
For the p-values obtained from the Conover test ($P_{cn}$), we employed the Benjamini-Hochberg correction, which is designed to control the false discovery proportion in scenarios with numerous tests.
 A result was deemed significantly different if Conover's hypothesis was rejected at the $\alpha=0.05$ level.
 Additionally, we performed Cliff's $\delta$ analysis to ascertain the magnitude of differences between the benchmark model and the views and ensemble models.
 The lower and upper bounds of 95\% confidence intervals for Cliff's $\delta$ were determined using 10,000 bootstrap resampling.
 To maintain consistency in our analysis, negative values of log-loss, MEC, and running time were utilized in the Cliff's $\delta$ calculations. 
Therefore, a model with a positive effect size ($\delta>0$) was declared superior (win) following observations of $P_{fr}<0.05$ and $P_{cn}<0.05$, while a negative effect size ($\delta<0$) indicated inferior performance (loss). 
In cases where $P_{fr}>0.05$ or $P_{cn}>0.05$, no significant difference between the views/ensembles and benchmark models was inferred, leading to a classification of a tie between the models.

The mean and standard deviations of these metrics for all models in 30 XGBoost runs, along with both original and adjusted $P_{fr}$ values, are presented in Table~\ref{tab:xgbres}.
 The XGBoost results indicate that at least one model demonstrated significantly different performance in nearly all cases, with the exception of GECR's Log-Loss and MEW, and GFE's running time.
 A closer examination of the mean and standard deviations in Table~\ref{tab:xgbres}, complemented by a composite visualization of box, violin, and strip plots (referred to as raincloud plots) in Figures ~\ref{fig:xgb_F1},~\ref{fig:xgb_AUC},~\ref{fig:xgb_Loss},~\ref{fig:xgb_MEC},~\ref{fig:xgb_MEW}, and~\ref{fig:xgb_Time}, reveals interesting patterns.

Notably, the finely tuned XGBoost models, encompassing individual views, ensembles, and the benchmark set, generally exhibited high generalization accuracy.
 For instance, the mean $F_1$ scores for all models on APSF, ARWPM, GECR, and GSAD datasets exceeded 0.99, with maximum absolute differences between the models and the benchmark model being quite minimal (below 0.003). 
This trend of high performance was similarly observed for the AUC metric (i.e., AUC$>0.99$) across ARWPM, GECR, GSAD, HAPT, and ISOLET datasets, with the maximum absolute difference remaining below 0.004. 
Comparable findings were noted for Log-Loss, MEC, and MEW metrics.

Despite these minimal differences, statistical analyses employing Friedman (Table~\ref{tab:xgbres}) and Conover p-values (Figures~\ref{fig:xgbnem_F1} to~\ref{fig:xgbnem_MEW}) indicated significant statistical disparities between the models.
 Moreover, Cliff's $\delta$ analysis (Figures~\ref{fig:xgbcliff_F1} to~\ref{fig:xgbnem_MEW}) revealed large effect sizes ($| \delta | >0.474$) in most cases, as illustrated in Tables~\ref{tab:xgbf1} to~\ref{tab:xgbmew}.
It is noteworthy that Figures~\ref{fig:xgb_F1},~\ref{fig:xgb_AUC},~\ref{fig:xgb_Loss},~\ref{fig:xgb_MEC},~\ref{fig:xgb_MEW}, and~\ref{fig:xgb_Time} show raincloud plots, Figures~\ref{fig:xgbnem_F1},~\ref{fig:xgbnem_AUC},~\ref{fig:xgbnem_Loss},~\ref{fig:xgbnem_MEC},~\ref{fig:xgbnem_MEW}, and~\ref{fig:xgbnem_Time} show adjusted p-values of Conover's test, Figures~\ref{fig:xgbcliff_F1},~\ref{fig:xgbcliff_AUC},~\ref{fig:xgbcliff_Loss},~\ref{fig:xgbcliff_MEC},~\ref{fig:xgbcliff_MEW}, and~\ref{fig:xgbcliff_Time} show Cliff's $\delta$ effect size and their 95\% confidence intervals, while Tables~\ref{tab:xgbf1},~\ref{tab:xgbauc},~\ref{tab:xgbloss},~\ref{tab:xgbmec},~\ref{tab:xgbmew}, and~\ref{tab:xgbtime} depicts wins, ties, and losses of XGBoost view models and their ensembles against the benchmark (All) models, respectively.

These observations suggest that while the differences in models' performance are statistically significant and are less likely due to the chance, their practical impact might be limited in scenarios where baseline performance is already high.
 In other words, even minor variations in such high-performing regions can attain statistical significance, yet may not translate into substantial differences in real-world applications.
 Therefore, although the models trained on views $\theta_1$ to $\theta_5$ are statistically distinguishable from those trained on the benchmark set, the practical implications of opting for one model over another may be less pronounced.

Nevertheless, when considering the statistical significance and effect size measures, it's evident that while computational efficiency in fine-tuning and training individual models surpasses that of the original dataset, creating an ensemble of views, particularly $E_{1:5}$, not only enhances prediction accuracy (evidenced by 3 wins, 3 ties, and 2 losses in $F_1$ score), but also significantly improves AUC and reduces overall uncertainty in predictions (Log-Loss), as demonstrated by the results of 6-1-1 and 5-1-2 in wins, ties, and losses, respectively, as shown in Table~\ref{tabmain:xgbwtl}.

Figure~\ref{fig:xgbcliffwin} presents a comparative analysis of the best-performing XGBoost view/ensemble models against the benchmark XGBoost model, covering six different metrics.
 This figure not only visualizes Cliff's $\delta$ values along with their 95\% confidence interval bounds, but it also includes the adjusted $P_{fr}$ and $P_{cn}$ values, as well as the names of the corresponding datasets and models. 
In the case of running time, individual view models generally outperformed the benchmark model.
 For the other metrics, at least one view or ensemble outperformed the benchmark model (indicated by blue segments) or demonstrated comparable performance (grey segments).
 However, there are notable exceptions: all models underperformed compared to the benchmark in the $F_1$ score for the APSF dataset, MEC for the PD dataset, and MEW for the ISOLET dataset, as highlighted by red segments in the figure. 

Furthermore, the ensemble comprising all five views, denoted as $E_{1:5}$, is frequently recognized as the top performer across several metrics. 
Yet, individual views also stood out in certain cases.
 Notably, they were the best performers in the $F_1$ score for the GFE dataset, Log-Loss for both ARWPM and GECR datasets, MEC across all datasets, and MEW for the GECR dataset. 

This observation suggests that the efficacy of view ensembles could potentially be enhanced beyond the weighted average method employed in our study, possibly by adopting a more sophisticated ensemble strategy.

The Cliff's $\delta$ values observed in our study suggest that the effect size is considerable in most instances, whether in scenarios of model superiority or inferiority.
 This finding underscores the effectiveness of the SPFP algorithm, which presents a mathematically robust and systematic approach to constructing multiple views and implementing ensemble learning. 
It particularly enhances the performance of complex models like XGBoost, known for its stochastic feature selection and boosting-based ensemble learning methods.

However, it is imperative to consider the practical significance of these findings within the context of specific applications.
 In fields such as medical diagnosis, fraud detection, and fault detection in engineering, where even minor improvements in precision and recall can be critical, the additional computational load incurred by the SPFP algorithm's ensemble learning approach might not translate into substantial gains in accuracy. 
The paramount concern in these applications is not just the model's uncertainty but its precision and recall capabilities.

Conversely, in high-stakes domains like high-frequency trading and climate-change risk assessment, where rapid changes in the market or environment are frequent yet the underlying structure of feature interactions remains relatively stable over time, the advantages of the SPFP algorithm become more pronounced. 
In these scenarios, the ability to rapidly update or retrain models and the probabilistic nature of their predictions are more critical than mere accuracy metrics.
 The SPFP algorithm's approach to ensemble learning can facilitate more robust risk assessment and management in such dynamic environments, underscoring its significant value.

Therefore, while the SPFP algorithm demonstrably enhances model performance, its utility should be evaluated against the backdrop of the specific requirements and constraints of each application domain. 
The choice of employing this algorithm should be guided by a balance between computational efficiency and the need for precision, recall, and adaptability in decision-making processes.

In addition to our findings with XGBoost, we further complemented our investigation by employing Logistic Regression (LR), a simpler classification model that, unlike XGBoost, does not inherently incorporate feature selection and ensemble learning algorithms.
 The mean and standard deviation for six metrics, obtained over 30 LR runs, are detailed in Table~\ref{tab:lrres}, alongside the Friedman test results and their adjusted p-values ($P_{fr}$).

Significantly, all obtained $P_{fr}$ values fell below the $\alpha=0.05$ significance threshold. 
This indicates that, within the LR model framework, at least one model exhibited performance significantly different from the others.
 Notably, in contrast to the XGBoost results, the LR model showcased more substantial absolute differences between the view/ensemble models and the benchmark models.
 However, exceptions were observed for AUC in the GECR, GSAD, HAPT, and ISOLET datasets, where AUC values exceeded 0.99, and for Log-Loss in these same datasets, where the values were predominantly below 0.3 bit.
Similarly, for the MEC metric in the APDF dataset, the MEC values remained below 0.1 bit.

Comprehensive visualizations of these results are provided through various figures: raincloud plots in Figures ~\ref{fig:lr_F1},~\ref{fig:lr_AUC},~\ref{fig:lr_Loss},~\ref{fig:lr_MEC},~\ref{fig:lr_MEW}, and~\ref{fig:lr_Time} illustrate the distribution of these metrics, while Figures ~\ref{fig:lrnem_F1},~\ref{fig:lrnem_AUC},~\ref{fig:lrnem_Loss},~\ref{fig:lrnem_MEC},~\ref{fig:lrnem_MEW}, and~\ref{fig:lrnem_Time} display the adjusted p-values of Conover's test.
 Additionally, Figures ~\ref{fig:lrcliff_F1},~\ref{fig:lrcliff_AUC},~\ref{fig:lrcliff_Loss},~\ref{fig:lrcliff_MEC},~\ref{fig:lrcliff_MEW}, and~\ref{fig:lrcliff_Time} present Cliff's $\delta$ effect sizes along with their 95\% confidence intervals.
 These visual aids are further complemented by Tables ~\ref{tab:lrf1},~\ref{tab:lrauc},~\ref{tab:lrloss},~\ref{tab:lrmec},~\ref{tab:lrmew}, and~\ref{tab:lrtime}, which enumerate the wins, ties, and losses of the LR view models and their ensembles against the benchmark models.

An analysis comparing the outcomes of the Friedman and Conover tests, as well as the effect sizes determined by Cliff's $\delta$, for 30 runs each of XGBoost and LR models, reveals that the SPFP algorithm's ensemble learning approach has distinct impacts on these models. 
XGBoost, being a more complex model, and LR, a simpler one, respond differently to the algorithm.

Specifically, as detailed in Table~\ref{tabmain:lrwtl}, the ensemble comprising all LR view models ($E_{1:5}$) demonstrated superior performance compared to the benchmark model in several aspects.
 For the $F_1$ score, AUC, Log-Loss, and MEW metrics, $E_{1:5}$ outperformed the benchmark on 4, 2, 4, and 4 datasets, respectively.
 Additionally, it showed comparable performance to the benchmark model on 4, 6, 1, and 2 datasets, respectively, for the same metrics.
 Notably, $E_{1:5}$ improved the $F_1$ score and AUC across all datasets but showed less favorable outcomes in Log-Loss and MEW on 3 and 2 datasets, respectively.

Another significant observation is the running times of the models, which were considerably shorter than those of the benchmark models across all datasets.
 However, the performance in the MEC metric was mixed, with the $E_{1:5}$ ensemble winning, tying, and losing against the benchmark model on 1, 2, and 5 datasets, respectively. 
This suggests that, despite its accuracy, the ensemble provides less confident correct predictions compared to the benchmark.

The statistical significance of the difference between the $E_{1:5}$ ensemble and the benchmark model was consistent across all datasets, as evidenced by large Cliff's $\delta$ effect sizes.
 Yet, the actual differences in the means of the MEC metrics were relatively minor, as previously discussed and illustrated in Figures~\ref{fig:lr_MEC} and~\ref{fig:lrcliff_MEC}.
 This subtlety implies that, while $E_{1:5}$ is statistically more accurate, its confidence in correct predictions is somewhat diminished compared to the benchmark model.

Figure~\ref{fig:lrcliffwin} offers a comprehensive view of the best-performing LR view/ensemble models in comparison with the benchmark LR model across six different metrics.
 This figure not only illustrates Cliff's $\delta$ values, along with their 95\% confidence interval bounds, but also includes the adjusted $P_{fr}$, and $P_{cn}$ values, as well as the names of the corresponding datasets and models.

Similar to the observations made with XGBoost, the ensemble model $E_{1:5}$ is frequently highlighted as the top performer among the LR models.
 For metrics such as the $F_1$ score, AUC, MEW, and running time, there is at least one model in each case that either outperforms (indicated by blue segments) or matches (grey segments) the benchmark model across all datasets.

However, in the contexts of Log-Loss and MEC metrics, particularly where the best-performing model is significantly outdone by the benchmark model (as shown by red segments), it is often an individual view model that stands out.
 This pattern suggests that the overall performance of ensemble models, while robust, still holds potential for further enhancement through more sophisticated ensemble learning techniques.

It's important to note, especially for the datasets where the view/ensemble models underperformed compared to the benchmark models, that the absolute differences in the means of the metrics were relatively small.
 This observation underscores the nuances in performance that may not be immediately apparent from statistical significance alone, but are crucial in understanding the real-world applicability and efficiency of these models.

The findings from our study lead to two pivotal conclusions regarding the efficacy of the SPFP algorithm's ensemble learning approach.
 Firstly, this approach has the capability to significantly enhance the accuracy of models, particularly in scenarios where achieving optimal generalization accuracy is challenging through standard model fine-tuning techniques.
 Secondly, it notably increases the confidence in correct predictions while simultaneously reducing the likelihood of confident incorrect predictions, especially in situations where near-perfect generalization accuracy is attainable with conventional fine-tuning methods.

\section{Conclusion}
\label{sec:conclusion}
This study introduces the SPFP algorithm, a novel approach to constructing artificial views from single-source data for Multi-view Ensemble Learning. Unlike previous algorithms, the SPFP algorithm eschews the random trial and error method for enhancing the accuracy or uncertainty of traditional machine-learning approaches. 
Instead, it employs a robust mathematical concept to generate diverse and complementary views from a single-source dataset.

Our comprehensive analysis of the SPFP algorithm focuses on its effectiveness in improving the performance of machine learning models.
This includes an in-depth examination of its impact on a complex model with built-in feature selection and ensemble-learning algorithms (XGBoost) and a simpler model (Logistic Regression).
The study encompasses a wide array of benchmark datasets with varying characteristics, from high-dimensional with limited instances to high-instance data with lower dimensions, across different real-world domains.
The analysis considers multiple metrics for accuracy, uncertainty, and computational efficiency.
To interpret the experimental results, we conducted non-parametric multiple related sample tests (Friedman test), multiple comparison tests (Conover test) with type I and II error controls, and effect size measurement (Cliff's $\delta$) analysis.

Our findings indicate that the SPFP algorithm significantly enhances the predictive accuracy and robustness of ensemble models.
By partitioning features into distinct views, it not only maintains the semantic integrity of the original dataset but also uncovers varied patterns within the data.
This is particularly evident in the ensemble models' enhanced performance in uncertainty metrics like MEC and MEW, while maintaining accuracy metrics in scenarios where high generalization performance is achievable through conventional fine-tuning of complex models like XGBoost.
Similarly, it preserves uncertainty metrics while improving accuracy metrics like the $F_1$ score and AUC in scenarios where high generalization performance is less attainable with simpler models like Logistic Regression.
This underscores the advantage of integrating multiple perspectives in model training, especially in applications where enhanced accuracy and reduced uncertainty are crucial.

Additionally, the SPFP algorithm effectively balances dimensionality reduction with information retention, a key factor in applications where computational efficiency is as critical as model accuracy.
The individual views, competitive with the complete dataset in isolation, contribute to a more effective ensemble model, demonstrating the SPFP algorithm's efficacy in creating meaningful and efficient feature subsets.

Future research may aim to refine the algorithm further, explore its applications in diverse contexts, and extend its principles to other data forms beyond this study's scope.
While the SPFP algorithm is currently more suited to supervised and semi-supervised learning (subject to data richness limitations), subsequent studies could broaden its application to unsupervised learning by enabling mathematical interactions among distinct views.

%%%%%%%%%%%%%%%%%%%%%%%%%%%%%%%%%%%%%%%%%%%%%%%%%%%%%%%%%%%%% List of notations
\nomenclature{$X$ and $Y$}{Dummy random variables.}
\nomenclature{$x_i$ and $y_j$}{The $i$th and $j$th observation of random variables, $X$ and $Y$.}
\nomenclature{$F$}{The entire feature set.}
\nomenclature{$S$}{The selected feature subset.}
\nomenclature{$S'$}{The disjoint and complementary subset for $S$, i.e., $S \cap S' = \phi$ and $S \cup S' = F$.}
\nomenclature{$\theta_g$}{The $g$th constructed artificial view.}
\nomenclature{$\Theta$}{The set of all constructed artificial view.}
\nomenclature{$U$}{The feature search space.}
\nomenclature{$U_t$}{A temporary feature search space.}
\nomenclature{$f_s$}{A feature within the selected feature subset, $f_s \in S$.}
\nomenclature{$f_{s_i}$}{The $i$th observation of the selected feature, $f_s$.}
\nomenclature{$f_c$}{A candidate feature within the entire feature set, $f_c \in F$, and $f_c \notin S$.}
\nomenclature{$f_{c_j}$}{The $j$th observation of the candidate feature, $f_c$.}
\nomenclature{$p(x_i)$ and $p(y_j)$}{The marginal probabilities of $X$'s $i$th, and $Y$'s $j$th observations.}
\nomenclature{$p(x_i,y_j)$}{The joint probability of $X$'s $i$th and $Y$'s $j$th observations.}
\nomenclature{$p(x_i|y_j)$}{The conditional probability of $X$'s $i$th observation given $Y$'s $j$th observation.}
\nomenclature{$H(X)$ and $H(Y)$}{The entropy of random variable, $X$, and $Y$.}
\nomenclature{$H(X,Y)$}{The joint entropy of $X$ and $Y$.}
\nomenclature{$H(S)$ and $H(F)$}{The joint entropy of selected feature subset, $S$, and the entire feature set, $F$, respectively.}
\nomenclature{$H(S,Y)$ and $H(F,Y)$}{The joint entropy of selected feature subset, $S$ and $Y$, and the entire feature set, $F$ and $Y$, respectively.}
\nomenclature{$H(X|Y)$}{The conditional entropy of $X$ given $Y$.}
\nomenclature{$I(X;Y)$ and $I(f_s;f_c)$}{The mutual information between the random variables, $X$ and $Y$, and a selected feature, $f_s$, and a candidate feature, $f_c$.}
\nomenclature{$I(S;Y)$ and $I(F;Y)$}{The mutual information between the selected feature subset, $S$ and $Y$, and the entire feature set $F$ and $Y$.}
\nomenclature{$I(f_s;f_c;Y)$}{The interaction gain between the selected feature, $f_s$, and the candidate feature, $f_c$, given $Y$.}
\nomenclature{$I(f_s;f_c|Y)$}{The conditional mutual information between the selected feature, $f_s$, and the candidate feature, $f_c$, given $Y$.}
\nomenclature{$J(f_c)$}{The conditional likelihood framework's objective of a candidate feature, $f_c$.}
\nomenclature{$J_{SPFP}(f_c)$}{The SPFP algorithm's objective of a candidate feature, $f_c$.}
\nomenclature{$\alpha$ and $\beta$}{The weights of the redundancy and complementarity criteria within the conditional likelihood framework's objective.}
\nomenclature{$R(f_c,Y)$}{The Pearson correlation between the candidate feature, $f_c$ and the target, $Y$.}
\nomenclature{$N_F$}{The minimum number of features' threshold within a view.}
\nomenclature{$N_\theta$}{The number of views threshold.}
\nomenclature{$r$}{The ratio of the features in view $\theta_g$ to be randomly eliminated from the feature search space of $\theta_{g+1}$.}
\nomenclature{$C_1$}{The first stopping criterion indicates that the number of features within a view should be greater than $N_F$.}
\nomenclature{$C_2$}{The second stopping criterion indicates that the information content of the selected feature subset should be equal to that of the entire feature set.}
\nomenclature{$C_3$}{The third stopping criterion indicates that the information content of the selected feature subset and $Y$ should be equal to that of the entire feature set and $Y$.}
\nomenclature{$E_{1:2}$, $E_{1:3}$, $E_{1:4}$, and $E_{1:5}$}{The ensemble of the models trained on the first two, three, four, and five views used in the experiments.}

\appendix
\section*{Appendix A}
\label{appendix}
\printnomenclature

\bibliography{Bibiliography}

\clearpage
\section*{Supplementary Material}
%\documentclass[journal]{IEEEtran}
%\usepackage{amsmath,amsfonts}
%\usepackage{array}
%\usepackage{amssymb}
%\usepackage{amsthm}
%\usepackage{algorithmic}
%\usepackage{algorithm}
%\usepackage{multirow}
%\newtheorem{theorem}{Theorem}[section]
%\newtheorem{lemma}[theorem]{Lemma}
%\usepackage[caption=false]{subfig}
%\usepackage{textcomp}
%\usepackage{stfloats}
%\usepackage{url}
%\usepackage{verbatim}
%\usepackage{academicons}
%\usepackage{graphicx}
%\usepackage{tabularx}
%\usepackage{makecell}
%\usepackage{multirow}
%\usepackage{comment}
%\usepackage{hyperref}
%\hypersetup{
%	colorlinks=true,
%	linkcolor=black,
%	citecolor=black,      
%	urlcolor=black,
%}
%\usepackage{orcidlink}
%\usepackage[numbers,sort&compress]{natbib}

%\usepackage{makecell}
%\usepackage{array}
%\usepackage{multirow}
%\usepackage{ragged2e}
%\usepackage{rotating}
%\usepackage{colortbl}
%\bibliographystyle{IEEEtran}
%\hyphenation{op-tical net-works semi-conduc-tor IEEE-Xplore}
%\def\BibTeX{{\rm B\kern-.05em{\sc i\kern-.025em b}\kern-.08em
%		T\kern-.1667em\lower.7ex\hbox{E}\kern-.125emX}}
%\usepackage{balance}
%\usepackage[none]{hyphenat}

%\usepackage{xr}
%\externaldocument[main-]{main}
%\usepackage{float}
%\usepackage{placeins}
%%\usepackage[section]{placeins}
%%\usepackage[subsection]{placeins}

%\begin{document}
	
	\renewcommand*{\thesection}{S.\Roman{section}}
	\renewcommand{\thetheorem}{S.\arabic{theorem}}
	\renewcommand*{\thetable}{S.\Roman{table}}
	\renewcommand*{\thefigure}{S.\arabic{figure}}
	\renewcommand*{\theequation}{S.\arabic{equation}}

	\title{Supplementary Document for ``Semantic-Preserving Feature Partitioning for Multi-View Ensemble Learning''}

	\author{Mohammad~Sadegh~Khorshidi\orcidlink{0000-0001-6556-2926}% 
		, Navid~Yazdanjue\orcidlink{0000-0001-9670-8422}%
		, Hassan~Gharoun\orcidlink{0000-0001-8298-7512}%
		, Danial~Yazdani\orcidlink{0000-0002-7799-5013},~\IEEEmembership{Member,~IEEE}%
		, Mohammad~Reza~Nikoo\orcidlink{0000-0002-3740-4389}%
		, Fang~Chen\orcidlink{0000-0003-4971-8729}%
		, and Amir~H.~Gandomi\orcidlink{0000-0002-2798-0104},~\IEEEmembership{Senior Member,~IEEE}
		% <-this % stops a space
		\thanks{Mohammad Sadegh Khorshidi, Navid Yazdanjue, Hassan Gharoun, Danial Yazdani, Fang Chen, and Amir H. Gandomi are with the Faculty of Engineering \& Information Technology, University of Technology Sydney, Ultimo 2007, Australia. e-mails: ms.khorshidi@student.uts.edu.au, navid.yazdanjue@gmail.com, hassan.gharoun@student.uts.edu.au, danial.yazdani@gmail.com, fang.chen@uts.edu.au, gandomi@uts.edu.au}% <-this % stops a space
		\thanks{Mohammad Reza Nikoo is with the Department of Civil and Architectural Engineering, Sultan Qaboos University, Muscat, Oman. (e-mail: m.reza@squ.edu.om}%
		\thanks{Amir H. Gandomi is also with the University Research and Innovation Center (EKIK), Obuda University, Budapest 1034, Hungary.}%
		\thanks{This work was supported by the Australian Government through the Australian Research Council under Project DE210101808.}
		\thanks{Corresponding author: Amir H. Gandomi}}
	
	\maketitle
	
	\listoffigures
	\listoftables
	\begin{sidewaystable*}
		\centering
		\caption[The obtained metrics' values and Friedman test results for 30 XGBoost runs.]{The obtained $F_1$ score, AUC, Log-Loss, MEC, MEW, and running time (sec), for the testing data in 30 XGBoost runs. The obtained p-values of Friedman's p-values and adjusted p-values using Bonferroni method. The bold values indicate at least one sample is significantly different from others.}
		\label{tab:xgbres}
		\resizebox{\linewidth}{!}{%
			\begin{tabular}{c|c|cccccccccccc}
				\hline
				\multicolumn{14}{c}{XGBoost} \\
				\hline
				Dataset & Metric & $\theta_1$ & $\theta_2$ & $\theta_3$ & $\theta_4$ & $\theta_5$ & $E_{1:2}$ & $E_{1:3}$ & $E_{1:4}$ & $E_{1:5}$ & All & Friedman's $P-value$ & Adjusted $P-value$ \\
				\hline
				\multirow{6}{*}{APSF} & $F_1$ & {\cellcolor[rgb]{0.753,0.753,0.753}}$0.994 \pm 0.001$ & {\cellcolor[rgb]{0.753,0.753,0.753}}$0.994 \pm 0.001$ & {\cellcolor[rgb]{0.753,0.753,0.753}}$0.994 \pm 0.001$ & {\cellcolor[rgb]{0.753,0.753,0.753}}$0.994 \pm 0.001$ & {\cellcolor[rgb]{0.753,0.753,0.753}}$0.994 \pm 0.001$ & {\cellcolor[rgb]{0.753,0.753,0.753}}$0.994 \pm 0.001$ & {\cellcolor[rgb]{0.753,0.753,0.753}}$0.994 \pm 0.001$ & {\cellcolor[rgb]{0.753,0.753,0.753}}$0.994 \pm 0.001$ & {\cellcolor[rgb]{0.753,0.753,0.753}}$0.995 \pm 0.001$ & {\cellcolor[rgb]{0.753,0.753,0.753}}$0.996 \pm 0.0$ & {\cellcolor[rgb]{0.753,0.753,0.753}}$8.59e-26$ & {\cellcolor[rgb]{0.753,0.753,0.753}}$\mathbf{4.29e-25}$ \\
				& AUC & $0.976 \pm 0.006$ & $0.976 \pm 0.007$ & $0.978 \pm 0.006$ & $0.978 \pm 0.007$ & $0.976 \pm 0.005$ & $0.978 \pm 0.006$ & $0.98 \pm 0.006$ & $0.981 \pm 0.006$ & $0.982 \pm 0.006$ & $0.984 \pm 0.006$ & $1.78e-31$ & $\mathbf{8.88e-31}$ \\
				& Log-Loss & {\cellcolor[rgb]{0.753,0.753,0.753}}$0.025 \pm 0.004$ & {\cellcolor[rgb]{0.753,0.753,0.753}}$0.024 \pm 0.004$ & {\cellcolor[rgb]{0.753,0.753,0.753}}$0.022 \pm 0.003$ & {\cellcolor[rgb]{0.753,0.753,0.753}}$0.023 \pm 0.004$ & {\cellcolor[rgb]{0.753,0.753,0.753}}$0.023 \pm 0.004$ & {\cellcolor[rgb]{0.753,0.753,0.753}}$0.021 \pm 0.003$ & {\cellcolor[rgb]{0.753,0.753,0.753}}$0.018 \pm 0.002$ & {\cellcolor[rgb]{0.753,0.753,0.753}}$0.018 \pm 0.002$ & {\cellcolor[rgb]{0.753,0.753,0.753}}$0.017 \pm 0.002$ & {\cellcolor[rgb]{0.753,0.753,0.753}}$0.015 \pm 0.002$ & {\cellcolor[rgb]{0.753,0.753,0.753}}$3.86e-39$ & {\cellcolor[rgb]{0.753,0.753,0.753}}$\mathbf{1.93e-38}$ \\
				& MEC & $0.009 \pm 0.004$ & $0.012 \pm 0.017$ & $0.011 \pm 0.007$ & $0.016 \pm 0.024$ & $0.009 \pm 0.004$ & $0.012 \pm 0.01$ & $0.012 \pm 0.004$ & $0.017 \pm 0.013$ & $0.017 \pm 0.012$ & $0.008 \pm 0.004$ & $2.74e-07$ & $\mathbf{1.37e-06}$ \\
				& MEW & {\cellcolor[rgb]{0.753,0.753,0.753}}$0.443 \pm 0.087$ & {\cellcolor[rgb]{0.753,0.753,0.753}}$0.436 \pm 0.092$ & {\cellcolor[rgb]{0.753,0.753,0.753}}$0.454 \pm 0.082$ & {\cellcolor[rgb]{0.753,0.753,0.753}}$0.453 \pm 0.096$ & {\cellcolor[rgb]{0.753,0.753,0.753}}$0.444 \pm 0.075$ & {\cellcolor[rgb]{0.753,0.753,0.753}}$0.521 \pm 0.064$ & {\cellcolor[rgb]{0.753,0.753,0.753}}$0.569 \pm 0.047$ & {\cellcolor[rgb]{0.753,0.753,0.753}}$0.594 \pm 0.041$ & {\cellcolor[rgb]{0.753,0.753,0.753}}$0.598 \pm 0.04$ & {\cellcolor[rgb]{0.753,0.753,0.753}}$0.467 \pm 0.065$ & {\cellcolor[rgb]{0.753,0.753,0.753}}$1.52e-29$ & {\cellcolor[rgb]{0.753,0.753,0.753}}$\mathbf{7.60e-29}$ \\
				& Time (sec) & $326.746 \pm 157.492$ & $340.675 \pm 143.944$ & $409.181 \pm 145.079$ & $389.499 \pm 81.0$ & $427.817 \pm 139.44$ & -- & -- & -- & -- & $475.405 \pm 210.27$ & $\mathbf{0.03}$ & -- \\
				\hline
				\multirow{6}{*}{ARWPM} & $F_1$ & {\cellcolor[rgb]{0.753,0.753,0.753}}$0.993 \pm 0.003$ & {\cellcolor[rgb]{0.753,0.753,0.753}}$0.992 \pm 0.004$ & {\cellcolor[rgb]{0.753,0.753,0.753}}$0.992 \pm 0.005$ & {\cellcolor[rgb]{0.753,0.753,0.753}}$0.991 \pm 0.005$ & {\cellcolor[rgb]{0.753,0.753,0.753}}$0.989 \pm 0.006$ & {\cellcolor[rgb]{0.753,0.753,0.753}}$0.994 \pm 0.003$ & {\cellcolor[rgb]{0.753,0.753,0.753}}$0.996 \pm 0.002$ & {\cellcolor[rgb]{0.753,0.753,0.753}}$0.996 \pm 0.002$ & {\cellcolor[rgb]{0.753,0.753,0.753}}$0.997 \pm 0.002$ & {\cellcolor[rgb]{0.753,0.753,0.753}}$0.994 \pm 0.003$ & {\cellcolor[rgb]{0.753,0.753,0.753}}$5.50e-23$ & {\cellcolor[rgb]{0.753,0.753,0.753}}$\mathbf{2.75e-22}$ \\
				& AUC & $1.0 \pm 0.0$ & $1.0 \pm 0.0$ & $1.0 \pm 0.001$ & $1.0 \pm 0.0$ & $1.0 \pm 0.0$ & $1.0 \pm 0.0$ & $1.0 \pm 0.0$ & $1.0 \pm 0.0$ & $1.0 \pm 0.0$ & $1.0 \pm 0.0$ & $7.78e-25$ & $\mathbf{3.89e-24}$ \\
				& Log-Loss & {\cellcolor[rgb]{0.753,0.753,0.753}}$0.026 \pm 0.007$ & {\cellcolor[rgb]{0.753,0.753,0.753}}$0.034 \pm 0.013$ & {\cellcolor[rgb]{0.753,0.753,0.753}}$0.058 \pm 0.049$ & {\cellcolor[rgb]{0.753,0.753,0.753}}$0.04 \pm 0.016$ & {\cellcolor[rgb]{0.753,0.753,0.753}}$0.052 \pm 0.019$ & {\cellcolor[rgb]{0.753,0.753,0.753}}$0.028 \pm 0.009$ & {\cellcolor[rgb]{0.753,0.753,0.753}}$0.035 \pm 0.015$ & {\cellcolor[rgb]{0.753,0.753,0.753}}$0.033 \pm 0.014$ & {\cellcolor[rgb]{0.753,0.753,0.753}}$0.035 \pm 0.011$ & {\cellcolor[rgb]{0.753,0.753,0.753}}$0.029 \pm 0.014$ & {\cellcolor[rgb]{0.753,0.753,0.753}}$1.14e-15$ & {\cellcolor[rgb]{0.753,0.753,0.753}}$\mathbf{5.68e-15}$ \\
				& MEC & $0.036 \pm 0.021$ & $0.045 \pm 0.039$ & $0.133 \pm 0.192$ & $0.063 \pm 0.057$ & $0.095 \pm 0.117$ & $0.045 \pm 0.036$ & $0.084 \pm 0.078$ & $0.076 \pm 0.072$ & $0.088 \pm 0.089$ & $0.057 \pm 0.048$ & $5.41e-09$ & $\mathbf{2.71e-08}$ \\
				& MEW & {\cellcolor[rgb]{0.753,0.753,0.753}}$0.892 \pm 0.149$ & {\cellcolor[rgb]{0.753,0.753,0.753}}$0.934 \pm 0.138$ & {\cellcolor[rgb]{0.753,0.753,0.753}}$1.126 \pm 0.219$ & {\cellcolor[rgb]{0.753,0.753,0.753}}$0.996 \pm 0.164$ & {\cellcolor[rgb]{0.753,0.753,0.753}}$1.031 \pm 0.126$ & {\cellcolor[rgb]{0.753,0.753,0.753}}$1.02 \pm 0.134$ & {\cellcolor[rgb]{0.753,0.753,0.753}}$1.074 \pm 0.268$ & {\cellcolor[rgb]{0.753,0.753,0.753}}$1.154 \pm 0.154$ & {\cellcolor[rgb]{0.753,0.753,0.753}}$1.189 \pm 0.166$ & {\cellcolor[rgb]{0.753,0.753,0.753}}$1.013 \pm 0.123$ & {\cellcolor[rgb]{0.753,0.753,0.753}}$4.31e-14$ & {\cellcolor[rgb]{0.753,0.753,0.753}}$\mathbf{2.15e-13}$ \\
				& Time (sec) & $452.462 \pm 155.188$ & $442.675 \pm 178.672$ & $378.05 \pm 142.059$ & $457.639 \pm 133.126$ & $369.314 \pm 134.246$ & -- & -- & -- & -- & $808.835 \pm 531.551$ & $\mathbf{8.48e-04}$ & -- \\
				\hline
				\multirow{6}{*}{GECR} & $F_1$ & {\cellcolor[rgb]{0.753,0.753,0.753}}$0.993 \pm 0.007$ & {\cellcolor[rgb]{0.753,0.753,0.753}}$0.993 \pm 0.006$ & {\cellcolor[rgb]{0.753,0.753,0.753}}$0.994 \pm 0.006$ & {\cellcolor[rgb]{0.753,0.753,0.753}}$0.993 \pm 0.006$ & {\cellcolor[rgb]{0.753,0.753,0.753}}$0.994 \pm 0.005$ & {\cellcolor[rgb]{0.753,0.753,0.753}}$0.993 \pm 0.007$ & {\cellcolor[rgb]{0.753,0.753,0.753}}$0.996 \pm 0.004$ & {\cellcolor[rgb]{0.753,0.753,0.753}}$0.996 \pm 0.004$ & {\cellcolor[rgb]{0.753,0.753,0.753}}$0.996 \pm 0.005$ & {\cellcolor[rgb]{0.753,0.753,0.753}}$0.996 \pm 0.004$ & {\cellcolor[rgb]{0.753,0.753,0.753}}$7.23e-04$ & {\cellcolor[rgb]{0.753,0.753,0.753}}$\mathbf{3.62e-03}$ \\
				& AUC & $1.0 \pm 0.0$ & $1.0 \pm 0.0$ & $1.0 \pm 0.0$ & $1.0 \pm 0.0$ & $1.0 \pm 0.0$ & $1.0 \pm 0.0$ & $1.0 \pm 0.0$ & $1.0 \pm 0.0$ & $1.0 \pm 0.0$ & $1.0 \pm 0.0$ & $9.19e-07$ & $\mathbf{4.60e-06}$ \\
				& Log-Loss & {\cellcolor[rgb]{0.753,0.753,0.753}}$0.095 \pm 0.071$ & {\cellcolor[rgb]{0.753,0.753,0.753}}$0.214 \pm 0.314$ & {\cellcolor[rgb]{0.753,0.753,0.753}}$0.084 \pm 0.069$ & {\cellcolor[rgb]{0.753,0.753,0.753}}$0.09 \pm 0.069$ & {\cellcolor[rgb]{0.753,0.753,0.753}}$0.103 \pm 0.096$ & {\cellcolor[rgb]{0.753,0.753,0.753}}$0.136 \pm 0.151$ & {\cellcolor[rgb]{0.753,0.753,0.753}}$0.118 \pm 0.102$ & {\cellcolor[rgb]{0.753,0.753,0.753}}$0.115 \pm 0.075$ & {\cellcolor[rgb]{0.753,0.753,0.753}}$0.109 \pm 0.072$ & {\cellcolor[rgb]{0.753,0.753,0.753}}$0.165 \pm 0.149$ & {\cellcolor[rgb]{0.753,0.753,0.753}}$0.41$ & {\cellcolor[rgb]{0.753,0.753,0.753}}$1.00$ \\
				& MEC & $0.388 \pm 0.284$ & $0.612 \pm 0.73$ & $0.331 \pm 0.279$ & $0.346 \pm 0.272$ & $0.393 \pm 0.348$ & $0.535 \pm 0.489$ & $0.52 \pm 0.385$ & $0.523 \pm 0.303$ & $0.497 \pm 0.297$ & $0.73 \pm 0.43$ & $3.14e-03$ & $\mathbf{0.02}$ \\
				& MEW & {\cellcolor[rgb]{0.753,0.753,0.753}}$1.52 \pm 0.801$ & {\cellcolor[rgb]{0.753,0.753,0.753}}$1.656 \pm 0.802$ & {\cellcolor[rgb]{0.753,0.753,0.753}}$1.285 \pm 0.928$ & {\cellcolor[rgb]{0.753,0.753,0.753}}$1.388 \pm 0.832$ & {\cellcolor[rgb]{0.753,0.753,0.753}}$1.229 \pm 0.906$ & {\cellcolor[rgb]{0.753,0.753,0.753}}$1.525 \pm 0.874$ & {\cellcolor[rgb]{0.753,0.753,0.753}}$1.203 \pm 0.996$ & {\cellcolor[rgb]{0.753,0.753,0.753}}$1.274 \pm 0.982$ & {\cellcolor[rgb]{0.753,0.753,0.753}}$1.244 \pm 0.961$ & {\cellcolor[rgb]{0.753,0.753,0.753}}$1.299 \pm 0.955$ & {\cellcolor[rgb]{0.753,0.753,0.753}}$0.21$ & {\cellcolor[rgb]{0.753,0.753,0.753}}$1.00$ \\
				& Time (sec) & $55.301 \pm 32.75$ & $47.969 \pm 34.83$ & $57.918 \pm 42.172$ & $42.47 \pm 26.359$ & $44.776 \pm 28.427$ & -- & -- & -- & -- & $147.825 \pm 78.647$ & $\mathbf{1.93e-09}$ & -- \\
				\hline
				\multirow{6}{*}{GFE} & $F_1$ & {\cellcolor[rgb]{0.753,0.753,0.753}}$0.934 \pm 0.006$ & {\cellcolor[rgb]{0.753,0.753,0.753}}$0.939 \pm 0.005$ & {\cellcolor[rgb]{0.753,0.753,0.753}}$0.944 \pm 0.004$ & {\cellcolor[rgb]{0.753,0.753,0.753}}$0.947 \pm 0.003$ & {\cellcolor[rgb]{0.753,0.753,0.753}}$0.946 \pm 0.002$ & {\cellcolor[rgb]{0.753,0.753,0.753}}$0.937 \pm 0.005$ & {\cellcolor[rgb]{0.753,0.753,0.753}}$0.941 \pm 0.005$ & {\cellcolor[rgb]{0.753,0.753,0.753}}$0.944 \pm 0.005$ & {\cellcolor[rgb]{0.753,0.753,0.753}}$0.945 \pm 0.005$ & {\cellcolor[rgb]{0.753,0.753,0.753}}$0.946 \pm 0.006$ & {\cellcolor[rgb]{0.753,0.753,0.753}}$1.83e-28$ & {\cellcolor[rgb]{0.753,0.753,0.753}}$\mathbf{9.14e-28}$ \\
				& AUC & $0.977 \pm 0.003$ & $0.979 \pm 0.002$ & $0.982 \pm 0.002$ & $0.983 \pm 0.002$ & $0.983 \pm 0.002$ & $0.98 \pm 0.002$ & $0.982 \pm 0.002$ & $0.983 \pm 0.002$ & $0.983 \pm 0.002$ & $0.983 \pm 0.003$ & $2.63e-30$ & $\mathbf{1.32e-29}$ \\
				& Log-Loss & {\cellcolor[rgb]{0.753,0.753,0.753}}$0.216 \pm 0.018$ & {\cellcolor[rgb]{0.753,0.753,0.753}}$0.191 \pm 0.02$ & {\cellcolor[rgb]{0.753,0.753,0.753}}$0.183 \pm 0.02$ & {\cellcolor[rgb]{0.753,0.753,0.753}}$0.167 \pm 0.013$ & {\cellcolor[rgb]{0.753,0.753,0.753}}$0.165 \pm 0.013$ & {\cellcolor[rgb]{0.753,0.753,0.753}}$0.19 \pm 0.022$ & {\cellcolor[rgb]{0.753,0.753,0.753}}$0.175 \pm 0.018$ & {\cellcolor[rgb]{0.753,0.753,0.753}}$0.166 \pm 0.015$ & {\cellcolor[rgb]{0.753,0.753,0.753}}$0.159 \pm 0.014$ & {\cellcolor[rgb]{0.753,0.753,0.753}}$0.157 \pm 0.015$ & {\cellcolor[rgb]{0.753,0.753,0.753}}$1.05e-31$ & {\cellcolor[rgb]{0.753,0.753,0.753}}$\mathbf{5.26e-31}$ \\
				& MEC & $0.077 \pm 0.03$ & $0.095 \pm 0.037$ & $0.073 \pm 0.035$ & $0.092 \pm 0.041$ & $0.109 \pm 0.05$ & $0.086 \pm 0.035$ & $0.085 \pm 0.032$ & $0.083 \pm 0.028$ & $0.096 \pm 0.032$ & $0.199 \pm 0.089$ & $1.38e-09$ & $\mathbf{6.88e-09}$ \\
				& MEW & {\cellcolor[rgb]{0.753,0.753,0.753}}$0.492 \pm 0.051$ & {\cellcolor[rgb]{0.753,0.753,0.753}}$0.531 \pm 0.06$ & {\cellcolor[rgb]{0.753,0.753,0.753}}$0.501 \pm 0.06$ & {\cellcolor[rgb]{0.753,0.753,0.753}}$0.536 \pm 0.062$ & {\cellcolor[rgb]{0.753,0.753,0.753}}$0.562 \pm 0.068$ & {\cellcolor[rgb]{0.753,0.753,0.753}}$0.557 \pm 0.059$ & {\cellcolor[rgb]{0.753,0.753,0.753}}$0.577 \pm 0.046$ & {\cellcolor[rgb]{0.753,0.753,0.753}}$0.584 \pm 0.034$ & {\cellcolor[rgb]{0.753,0.753,0.753}}$0.601 \pm 0.035$ & {\cellcolor[rgb]{0.753,0.753,0.753}}$0.67 \pm 0.086$ & {\cellcolor[rgb]{0.753,0.753,0.753}}$1.73e-23$ & {\cellcolor[rgb]{0.753,0.753,0.753}}$\mathbf{8.63e-23}$ \\
				& Time (sec) & $43.825 \pm 30.227$ & $40.604 \pm 32.278$ & $41.051 \pm 27.423$ & $43.043 \pm 30.507$ & $41.728 \pm 26.758$ & -- & -- & -- & -- & $65.734 \pm 53.252$ & $0.74$ & -- \\
				\hline
				\multirow{6}{*}{GSAD} & $F_1$ & {\cellcolor[rgb]{0.753,0.753,0.753}}$0.992 \pm 0.002$ & {\cellcolor[rgb]{0.753,0.753,0.753}}$0.993 \pm 0.002$ & {\cellcolor[rgb]{0.753,0.753,0.753}}$0.993 \pm 0.002$ & {\cellcolor[rgb]{0.753,0.753,0.753}}$0.993 \pm 0.002$ & {\cellcolor[rgb]{0.753,0.753,0.753}}$0.992 \pm 0.002$ & {\cellcolor[rgb]{0.753,0.753,0.753}}$0.993 \pm 0.002$ & {\cellcolor[rgb]{0.753,0.753,0.753}}$0.994 \pm 0.001$ & {\cellcolor[rgb]{0.753,0.753,0.753}}$0.994 \pm 0.001$ & {\cellcolor[rgb]{0.753,0.753,0.753}}$0.994 \pm 0.001$ & {\cellcolor[rgb]{0.753,0.753,0.753}}$0.994 \pm 0.001$ & {\cellcolor[rgb]{0.753,0.753,0.753}}$2.36e-20$ & {\cellcolor[rgb]{0.753,0.753,0.753}}$\mathbf{1.18e-19}$ \\
				& AUC & $0.999 \pm 0.0$ & $0.999 \pm 0.0$ & $0.999 \pm 0.0$ & $0.999 \pm 0.0$ & $0.999 \pm 0.0$ & $0.999 \pm 0.0$ & $0.999 \pm 0.0$ & $1.0 \pm 0.0$ & $0.999 \pm 0.0$ & $0.999 \pm 0.0$ & $4.93e-13$ & $\mathbf{2.47e-12}$ \\
				& Log-Loss & {\cellcolor[rgb]{0.753,0.753,0.753}}$0.038 \pm 0.009$ & {\cellcolor[rgb]{0.753,0.753,0.753}}$0.035 \pm 0.008$ & {\cellcolor[rgb]{0.753,0.753,0.753}}$0.034 \pm 0.008$ & {\cellcolor[rgb]{0.753,0.753,0.753}}$0.034 \pm 0.006$ & {\cellcolor[rgb]{0.753,0.753,0.753}}$0.036 \pm 0.006$ & {\cellcolor[rgb]{0.753,0.753,0.753}}$0.033 \pm 0.007$ & {\cellcolor[rgb]{0.753,0.753,0.753}}$0.031 \pm 0.006$ & {\cellcolor[rgb]{0.753,0.753,0.753}}$0.029 \pm 0.005$ & {\cellcolor[rgb]{0.753,0.753,0.753}}$0.029 \pm 0.005$ & {\cellcolor[rgb]{0.753,0.753,0.753}}$0.032 \pm 0.006$ & {\cellcolor[rgb]{0.753,0.753,0.753}}$7.72e-26$ & {\cellcolor[rgb]{0.753,0.753,0.753}}$\mathbf{3.86e-25}$ \\
				& MEC & $0.017 \pm 0.008$ & $0.02 \pm 0.012$ & $0.019 \pm 0.013$ & $0.02 \pm 0.012$ & $0.023 \pm 0.013$ & $0.02 \pm 0.009$ & $0.021 \pm 0.011$ & $0.022 \pm 0.012$ & $0.023 \pm 0.012$ & $0.036 \pm 0.031$ & $4.90e-04$ & $\mathbf{2.45e-03}$ \\
				& MEW & {\cellcolor[rgb]{0.753,0.753,0.753}}$0.696 \pm 0.109$ & {\cellcolor[rgb]{0.753,0.753,0.753}}$0.725 \pm 0.122$ & {\cellcolor[rgb]{0.753,0.753,0.753}}$0.689 \pm 0.139$ & {\cellcolor[rgb]{0.753,0.753,0.753}}$0.706 \pm 0.103$ & {\cellcolor[rgb]{0.753,0.753,0.753}}$0.743 \pm 0.139$ & {\cellcolor[rgb]{0.753,0.753,0.753}}$0.79 \pm 0.129$ & {\cellcolor[rgb]{0.753,0.753,0.753}}$0.791 \pm 0.152$ & {\cellcolor[rgb]{0.753,0.753,0.753}}$0.821 \pm 0.147$ & {\cellcolor[rgb]{0.753,0.753,0.753}}$0.813 \pm 0.156$ & {\cellcolor[rgb]{0.753,0.753,0.753}}$0.768 \pm 0.183$ & {\cellcolor[rgb]{0.753,0.753,0.753}}$1.10e-07$ & {\cellcolor[rgb]{0.753,0.753,0.753}}$\mathbf{5.49e-07}$ \\
				& Time (sec) & $376.691 \pm 159.509$ & $402.544 \pm 181.878$ & $358.6 \pm 151.871$ & $372.989 \pm 132.698$ & $357.346 \pm 135.543$ & -- & -- & -- & -- & $597.07 \pm 350.754$ & $\mathbf{0.03}$ & -- \\
				\hline
				\multirow{6}{*}{HAPT} & $F_1$ & {\cellcolor[rgb]{0.753,0.753,0.753}}$0.952 \pm 0.004$ & {\cellcolor[rgb]{0.753,0.753,0.753}}$0.951 \pm 0.005$ & {\cellcolor[rgb]{0.753,0.753,0.753}}$0.959 \pm 0.004$ & {\cellcolor[rgb]{0.753,0.753,0.753}}$0.968 \pm 0.004$ & {\cellcolor[rgb]{0.753,0.753,0.753}}$0.964 \pm 0.01$ & {\cellcolor[rgb]{0.753,0.753,0.753}}$0.955 \pm 0.004$ & {\cellcolor[rgb]{0.753,0.753,0.753}}$0.962 \pm 0.005$ & {\cellcolor[rgb]{0.753,0.753,0.753}}$0.967 \pm 0.004$ & {\cellcolor[rgb]{0.753,0.753,0.753}}$0.971 \pm 0.004$ & {\cellcolor[rgb]{0.753,0.753,0.753}}$0.973 \pm 0.005$ & {\cellcolor[rgb]{0.753,0.753,0.753}}$4.20e-41$ & {\cellcolor[rgb]{0.753,0.753,0.753}}$\mathbf{2.10e-40}$ \\
				& AUC & $0.997 \pm 0.001$ & $0.996 \pm 0.001$ & $0.997 \pm 0.001$ & $0.998 \pm 0.001$ & $0.998 \pm 0.001$ & $0.997 \pm 0.001$ & $0.997 \pm 0.001$ & $0.998 \pm 0.001$ & $0.998 \pm 0.001$ & $0.998 \pm 0.001$ & $3.21e-38$ & $\mathbf{1.60e-37}$ \\
				& Log-Loss & {\cellcolor[rgb]{0.753,0.753,0.753}}$0.139 \pm 0.014$ & {\cellcolor[rgb]{0.753,0.753,0.753}}$0.147 \pm 0.016$ & {\cellcolor[rgb]{0.753,0.753,0.753}}$0.122 \pm 0.012$ & {\cellcolor[rgb]{0.753,0.753,0.753}}$0.093 \pm 0.009$ & {\cellcolor[rgb]{0.753,0.753,0.753}}$0.104 \pm 0.029$ & {\cellcolor[rgb]{0.753,0.753,0.753}}$0.123 \pm 0.011$ & {\cellcolor[rgb]{0.753,0.753,0.753}}$0.105 \pm 0.01$ & {\cellcolor[rgb]{0.753,0.753,0.753}}$0.091 \pm 0.009$ & {\cellcolor[rgb]{0.753,0.753,0.753}}$0.085 \pm 0.01$ & {\cellcolor[rgb]{0.753,0.753,0.753}}$0.089 \pm 0.021$ & {\cellcolor[rgb]{0.753,0.753,0.753}}$3.80e-40$ & {\cellcolor[rgb]{0.753,0.753,0.753}}$\mathbf{1.90e-39}$ \\
				& MEC & $0.052 \pm 0.031$ & $0.049 \pm 0.033$ & $0.046 \pm 0.037$ & $0.04 \pm 0.032$ & $0.05 \pm 0.04$ & $0.057 \pm 0.035$ & $0.06 \pm 0.042$ & $0.06 \pm 0.045$ & $0.067 \pm 0.052$ & $0.187 \pm 0.099$ & $3.51e-17$ & $\mathbf{1.75e-16}$ \\
				& MEW & {\cellcolor[rgb]{0.753,0.753,0.753}}$0.876 \pm 0.077$ & {\cellcolor[rgb]{0.753,0.753,0.753}}$0.916 \pm 0.108$ & {\cellcolor[rgb]{0.753,0.753,0.753}}$0.944 \pm 0.101$ & {\cellcolor[rgb]{0.753,0.753,0.753}}$0.941 \pm 0.098$ & {\cellcolor[rgb]{0.753,0.753,0.753}}$0.918 \pm 0.114$ & {\cellcolor[rgb]{0.753,0.753,0.753}}$1.04 \pm 0.094$ & {\cellcolor[rgb]{0.753,0.753,0.753}}$1.134 \pm 0.067$ & {\cellcolor[rgb]{0.753,0.753,0.753}}$1.195 \pm 0.072$ & {\cellcolor[rgb]{0.753,0.753,0.753}}$1.231 \pm 0.073$ & {\cellcolor[rgb]{0.753,0.753,0.753}}$1.379 \pm 0.148$ & {\cellcolor[rgb]{0.753,0.753,0.753}}$8.20e-40$ & {\cellcolor[rgb]{0.753,0.753,0.753}}$\mathbf{4.10e-39}$ \\
				& Time (sec) & $26.23 \pm 11.782$ & $22.54 \pm 8.365$ & $22.191 \pm 6.395$ & $25.304 \pm 11.507$ & $21.655 \pm 10.108$ & -- & -- & -- & -- & $52.055 \pm 27.878$ & $\mathbf{5.99e-06}$ & -- \\
				\hline
				\multirow{6}{*}{ISOLET} & $F_1$ & {\cellcolor[rgb]{0.753,0.753,0.753}}$0.865 \pm 0.01$ & {\cellcolor[rgb]{0.753,0.753,0.753}}$0.918 \pm 0.006$ & {\cellcolor[rgb]{0.753,0.753,0.753}}$0.928 \pm 0.007$ & {\cellcolor[rgb]{0.753,0.753,0.753}}$0.927 \pm 0.006$ & {\cellcolor[rgb]{0.753,0.753,0.753}}$0.924 \pm 0.008$ & {\cellcolor[rgb]{0.753,0.753,0.753}}$0.902 \pm 0.019$ & {\cellcolor[rgb]{0.753,0.753,0.753}}$0.924 \pm 0.012$ & {\cellcolor[rgb]{0.753,0.753,0.753}}$0.933 \pm 0.01$ & {\cellcolor[rgb]{0.753,0.753,0.753}}$0.939 \pm 0.005$ & {\cellcolor[rgb]{0.753,0.753,0.753}}$0.94 \pm 0.008$ & {\cellcolor[rgb]{0.753,0.753,0.753}}$5.02e-41$ & {\cellcolor[rgb]{0.753,0.753,0.753}}$\mathbf{2.51e-40}$ \\
				& AUC & $0.995 \pm 0.001$ & $0.998 \pm 0.0$ & $0.998 \pm 0.0$ & $0.998 \pm 0.0$ & $0.998 \pm 0.0$ & $0.997 \pm 0.001$ & $0.998 \pm 0.0$ & $0.999 \pm 0.0$ & $0.999 \pm 0.0$ & $0.999 \pm 0.0$ & $2.11e-38$ & $\mathbf{1.06e-37}$ \\
				& Log-Loss & {\cellcolor[rgb]{0.753,0.753,0.753}}$0.417 \pm 0.037$ & {\cellcolor[rgb]{0.753,0.753,0.753}}$0.27 \pm 0.029$ & {\cellcolor[rgb]{0.753,0.753,0.753}}$0.243 \pm 0.032$ & {\cellcolor[rgb]{0.753,0.753,0.753}}$0.25 \pm 0.033$ & {\cellcolor[rgb]{0.753,0.753,0.753}}$0.269 \pm 0.055$ & {\cellcolor[rgb]{0.753,0.753,0.753}}$0.302 \pm 0.035$ & {\cellcolor[rgb]{0.753,0.753,0.753}}$0.258 \pm 0.026$ & {\cellcolor[rgb]{0.753,0.753,0.753}}$0.242 \pm 0.024$ & {\cellcolor[rgb]{0.753,0.753,0.753}}$0.229 \pm 0.021$ & {\cellcolor[rgb]{0.753,0.753,0.753}}$0.275 \pm 0.076$ & {\cellcolor[rgb]{0.753,0.753,0.753}}$4.43e-24$ & {\cellcolor[rgb]{0.753,0.753,0.753}}$\mathbf{2.21e-23}$ \\
				& MEC & $0.345 \pm 0.266$ & $0.281 \pm 0.17$ & $0.262 \pm 0.195$ & $0.301 \pm 0.168$ & $0.357 \pm 0.241$ & $0.317 \pm 0.163$ & $0.329 \pm 0.18$ & $0.351 \pm 0.162$ & $0.345 \pm 0.153$ & $0.735 \pm 0.343$ & $5.82e-10$ & $\mathbf{2.91e-09}$ \\
				& MEW & {\cellcolor[rgb]{0.753,0.753,0.753}}$1.414 \pm 0.325$ & {\cellcolor[rgb]{0.753,0.753,0.753}}$1.528 \pm 0.304$ & {\cellcolor[rgb]{0.753,0.753,0.753}}$1.585 \pm 0.326$ & {\cellcolor[rgb]{0.753,0.753,0.753}}$1.658 \pm 0.335$ & {\cellcolor[rgb]{0.753,0.753,0.753}}$1.741 \pm 0.487$ & {\cellcolor[rgb]{0.753,0.753,0.753}}$1.547 \pm 0.198$ & {\cellcolor[rgb]{0.753,0.753,0.753}}$1.684 \pm 0.199$ & {\cellcolor[rgb]{0.753,0.753,0.753}}$1.758 \pm 0.157$ & {\cellcolor[rgb]{0.753,0.753,0.753}}$1.787 \pm 0.205$ & {\cellcolor[rgb]{0.753,0.753,0.753}}$2.365 \pm 0.353$ & {\cellcolor[rgb]{0.753,0.753,0.753}}$2.43e-19$ & {\cellcolor[rgb]{0.753,0.753,0.753}}$\mathbf{1.22e-18}$ \\
				& Time (sec) & $17.568 \pm 3.756$ & $15.62 \pm 3.466$ & $17.703 \pm 4.436$ & $15.957 \pm 3.206$ & $18.786 \pm 4.499$ & -- & -- & -- & -- & $44.976 \pm 26.659$ & $\mathbf{1.29e-06}$ & -- \\
				\hline
				\multirow{6}{*}{PD} & $F_1$ & {\cellcolor[rgb]{0.753,0.753,0.753}}$0.877 \pm 0.02$ & {\cellcolor[rgb]{0.753,0.753,0.753}}$0.875 \pm 0.018$ & {\cellcolor[rgb]{0.753,0.753,0.753}}$0.872 \pm 0.023$ & {\cellcolor[rgb]{0.753,0.753,0.753}}$0.867 \pm 0.02$ & {\cellcolor[rgb]{0.753,0.753,0.753}}$0.864 \pm 0.024$ & {\cellcolor[rgb]{0.753,0.753,0.753}}$0.879 \pm 0.019$ & {\cellcolor[rgb]{0.753,0.753,0.753}}$0.879 \pm 0.019$ & {\cellcolor[rgb]{0.753,0.753,0.753}}$0.877 \pm 0.023$ & {\cellcolor[rgb]{0.753,0.753,0.753}}$0.871 \pm 0.019$ & {\cellcolor[rgb]{0.753,0.753,0.753}}$0.886 \pm 0.022$ & {\cellcolor[rgb]{0.753,0.753,0.753}}$8.74e-08$ & {\cellcolor[rgb]{0.753,0.753,0.753}}$\mathbf{4.37e-07}$ \\
				& AUC & $0.926 \pm 0.018$ & $0.924 \pm 0.018$ & $0.922 \pm 0.023$ & $0.922 \pm 0.018$ & $0.917 \pm 0.022$ & $0.933 \pm 0.015$ & $0.937 \pm 0.016$ & $0.94 \pm 0.015$ & $0.939 \pm 0.015$ & $0.939 \pm 0.016$ & $1.00e-22$ & $\mathbf{5.01e-22}$ \\
				& Log-Loss & {\cellcolor[rgb]{0.753,0.753,0.753}}$0.354 \pm 0.07$ & {\cellcolor[rgb]{0.753,0.753,0.753}}$0.373 \pm 0.072$ & {\cellcolor[rgb]{0.753,0.753,0.753}}$0.361 \pm 0.07$ & {\cellcolor[rgb]{0.753,0.753,0.753}}$0.374 \pm 0.067$ & {\cellcolor[rgb]{0.753,0.753,0.753}}$0.376 \pm 0.076$ & {\cellcolor[rgb]{0.753,0.753,0.753}}$0.329 \pm 0.058$ & {\cellcolor[rgb]{0.753,0.753,0.753}}$0.309 \pm 0.053$ & {\cellcolor[rgb]{0.753,0.753,0.753}}$0.298 \pm 0.046$ & {\cellcolor[rgb]{0.753,0.753,0.753}}$0.297 \pm 0.038$ & {\cellcolor[rgb]{0.753,0.753,0.753}}$0.319 \pm 0.064$ & {\cellcolor[rgb]{0.753,0.753,0.753}}$1.84e-25$ & {\cellcolor[rgb]{0.753,0.753,0.753}}$\mathbf{9.19e-25}$ \\
				& MEC & $0.172 \pm 0.06$ & $0.155 \pm 0.062$ & $0.171 \pm 0.063$ & $0.157 \pm 0.058$ & $0.172 \pm 0.055$ & $0.161 \pm 0.07$ & $0.174 \pm 0.065$ & $0.175 \pm 0.058$ & $0.186 \pm 0.057$ & $0.103 \pm 0.087$ & $2.63e-06$ & $\mathbf{1.31e-05}$ \\
				& MEW & {\cellcolor[rgb]{0.753,0.753,0.753}}$0.55 \pm 0.081$ & {\cellcolor[rgb]{0.753,0.753,0.753}}$0.534 \pm 0.097$ & {\cellcolor[rgb]{0.753,0.753,0.753}}$0.564 \pm 0.07$ & {\cellcolor[rgb]{0.753,0.753,0.753}}$0.561 \pm 0.074$ & {\cellcolor[rgb]{0.753,0.753,0.753}}$0.565 \pm 0.102$ & {\cellcolor[rgb]{0.753,0.753,0.753}}$0.595 \pm 0.084$ & {\cellcolor[rgb]{0.753,0.753,0.753}}$0.64 \pm 0.075$ & {\cellcolor[rgb]{0.753,0.753,0.753}}$0.669 \pm 0.057$ & {\cellcolor[rgb]{0.753,0.753,0.753}}$0.695 \pm 0.053$ & {\cellcolor[rgb]{0.753,0.753,0.753}}$0.56 \pm 0.09$ & {\cellcolor[rgb]{0.753,0.753,0.753}}$3.65e-20$ & {\cellcolor[rgb]{0.753,0.753,0.753}}$\mathbf{1.83e-19}$ \\
				& Time (sec) & $9757.384 \pm 2264.88$ & $10590.484 \pm 2341.929$ & $9631.233 \pm 2521.523$ & $9658.711 \pm 2321.093$ & $9747.392 \pm 2474.261$ & -- & -- & -- & -- & $12490.251 \pm 2070.225$ & $\mathbf{3.83e-05}$ & -- \\
				\hline
			\end{tabular}
		}
	\end{sidewaystable*}
	\FloatBarrier
	%\clearpage
	%%%%%%%%%%%%%%%%%%%%%%%%%%%%%%%%%%%%%%%%%%%%%%%%%%%%%%%%%%%%%%%%%%%%%%%%%%%%%%%%%%%%
	%
	%\subsection{The $F_1$ score results for XGBoost}
	%\label{ssub:xgbf1}
	% XGB: F1 Score
	%%%%%%%%%%%%%%%%%%%%%%%%%%%%%%%%%%%%%%%%%%%%%%%%%%%%%%%%%%%%%%%%%%%%%%%%%%%%%%
	\begin{figure*}[ht] 
		\centering
		%\begin{comment}
		% First row
		\subfloat[APSF]{\includegraphics[width=0.24\textwidth]{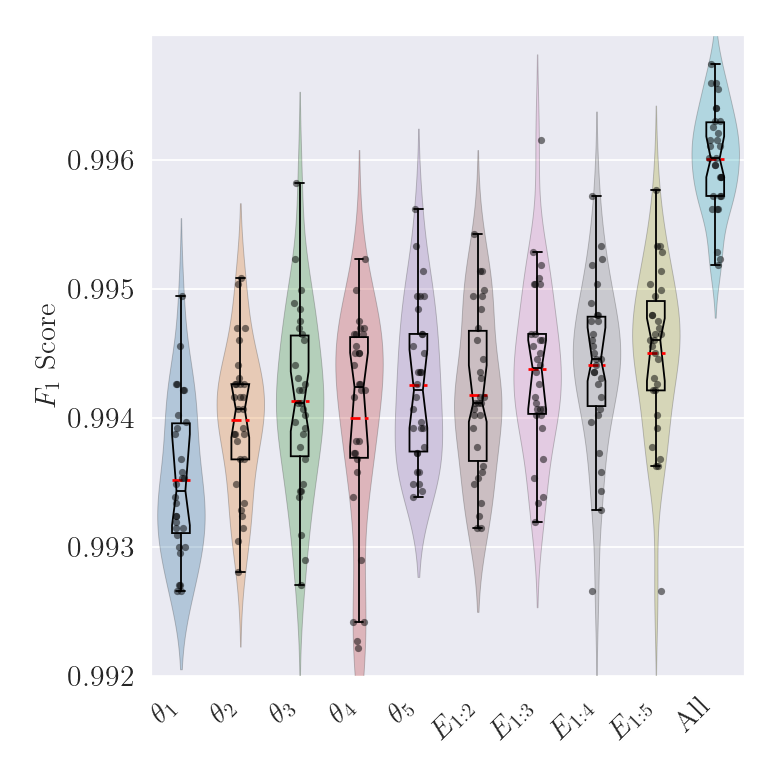}\label{fig:xgbapsf_F1}}%
		\hfill
		\subfloat[ARWPM]{\includegraphics[width=0.24\textwidth]{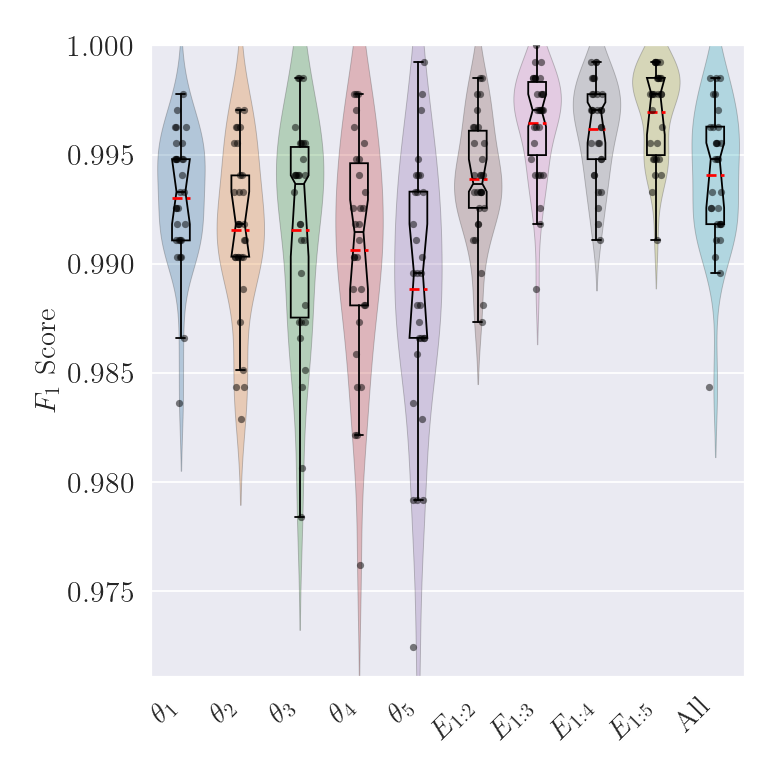}\label{fig:xgbarwpm_F1}}%
		\hfill
		\subfloat[GECR]{\includegraphics[width=0.24\textwidth]{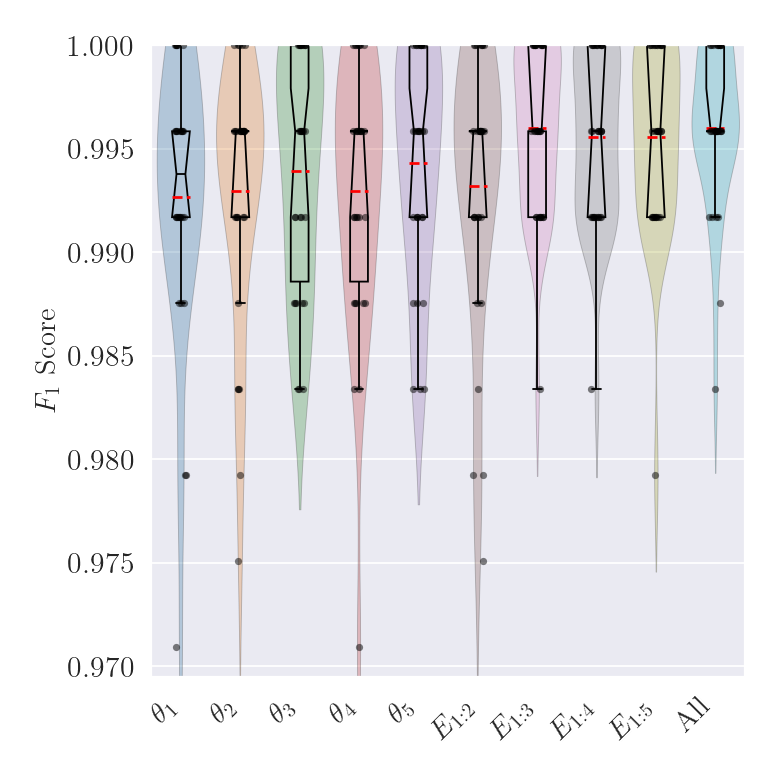}\label{fig:xgbgecr_F1}}%
		\hfill
		\subfloat[GFE]{\includegraphics[width=0.24\textwidth]{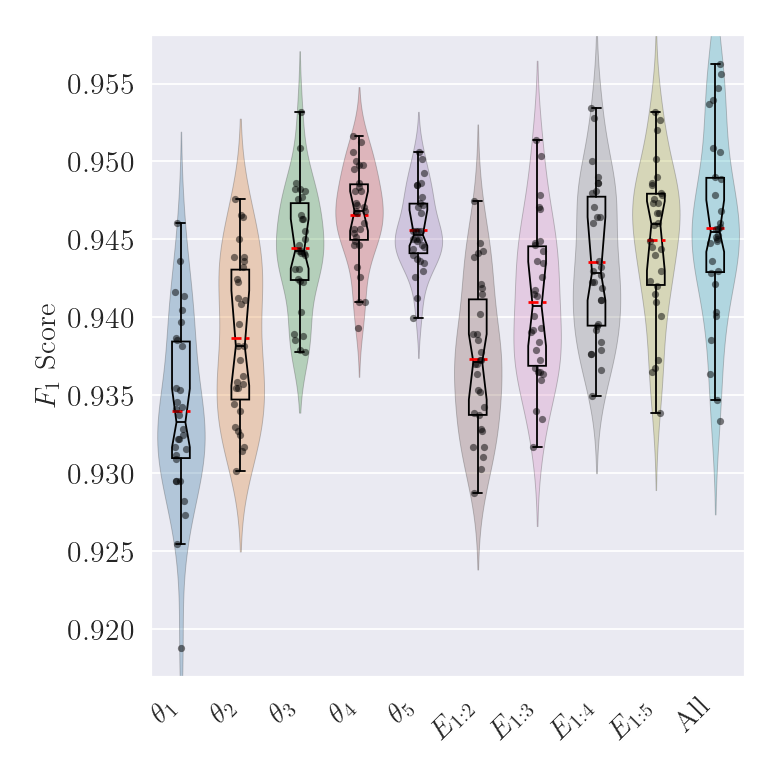}\label{fig:xgbgfe_F1}}
		
		% Second row
		\subfloat[GSAD]{\includegraphics[width=0.24\textwidth]{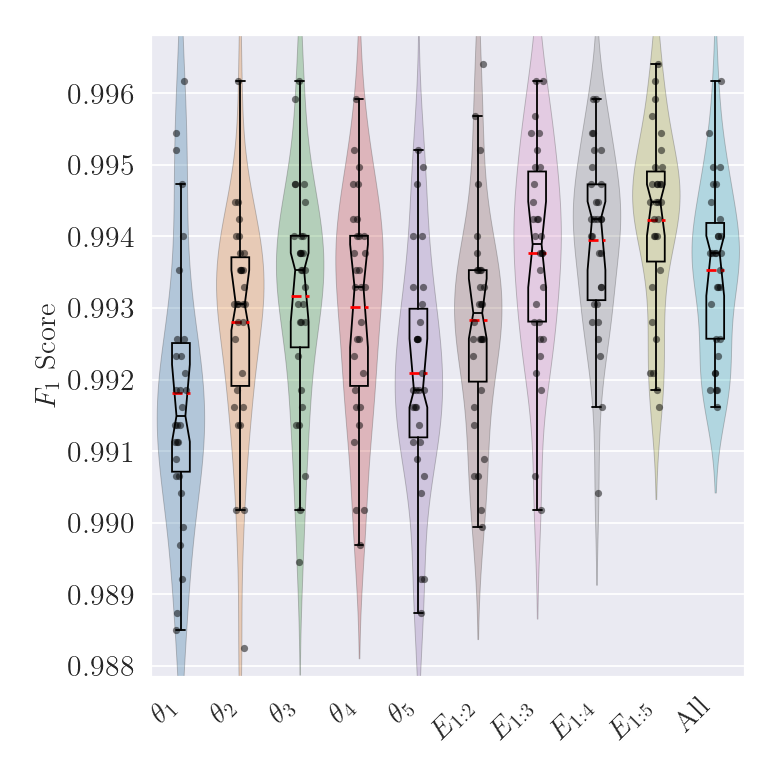}\label{fig:fpgsad_F1}}%
		\hfill
		\subfloat[HAPT]{\includegraphics[width=0.24\textwidth]{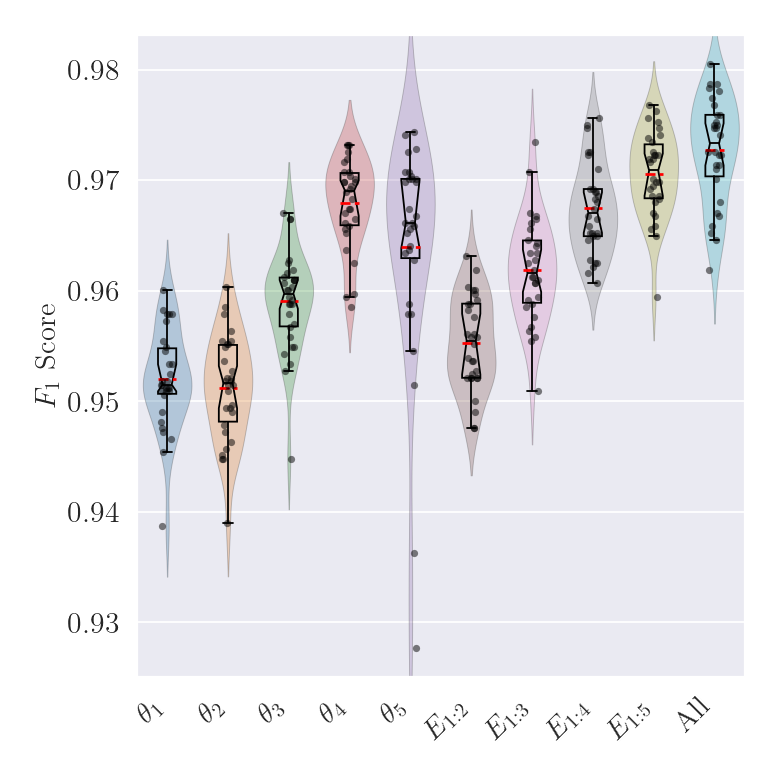}\label{fig:xgbhapt_F1}}%
		\hfill
		\subfloat[ISOLET]{\includegraphics[width=0.24\textwidth]{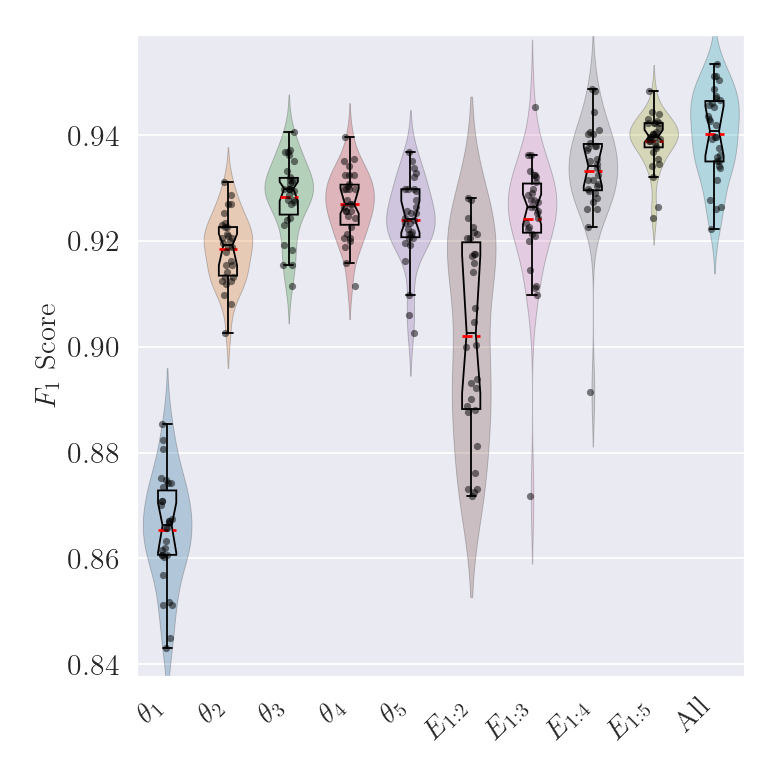}\label{fig:xgbisolet_F1}}%
		\hfill
		\subfloat[PD]{\includegraphics[width=0.24\textwidth]{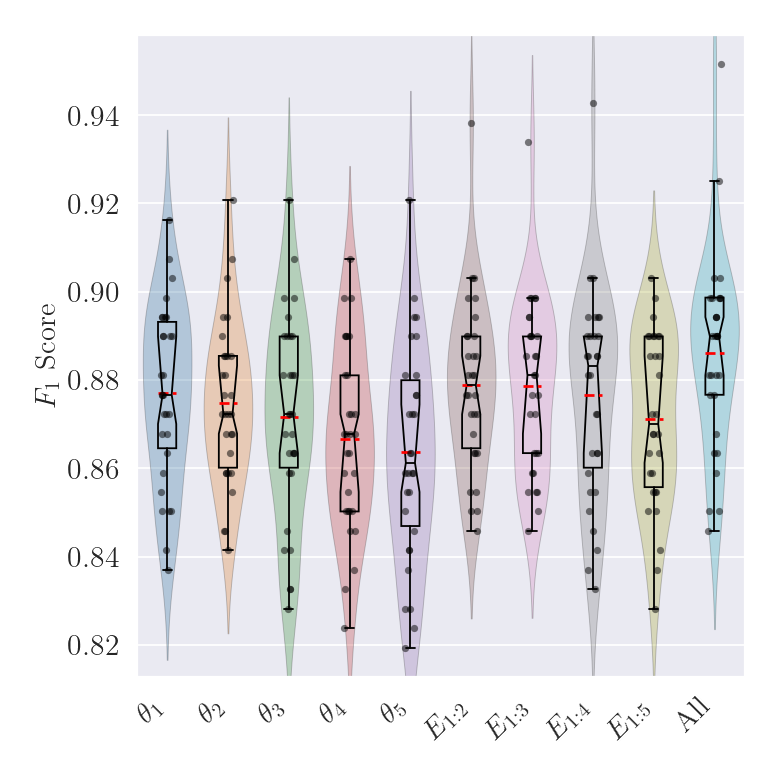}\label{fig:xgbpd_F1}}
		%\end{comment}
		\caption[The distribution of the obtained $F_1$ score for 30 XGBoost runs.]{The raincloud plot of $F_1$ score results obtained from 30 XGBoost runs.}
		
		\label{fig:xgb_F1}
	\end{figure*}
	%%%%%%%%%%%%%%%%%%%%%%%%%%%%%%%%%%%%%%%%%%%%%%%%%%%%%%%%%%%%%%%%%%%%%%%%%%%%%%
	
	%%%%%%%%%%%%%%%%%%%%%%%%%%%%%%%%%%%%%%%%%%%%%%%%%%%%%%%%%%%%%%%%%%%%%%%%%%%%%%
	\begin{figure*}[t] 
		\centering
		%\begin{comment}
		% First row
		\subfloat[APSF]{\includegraphics[width=0.24\textwidth]{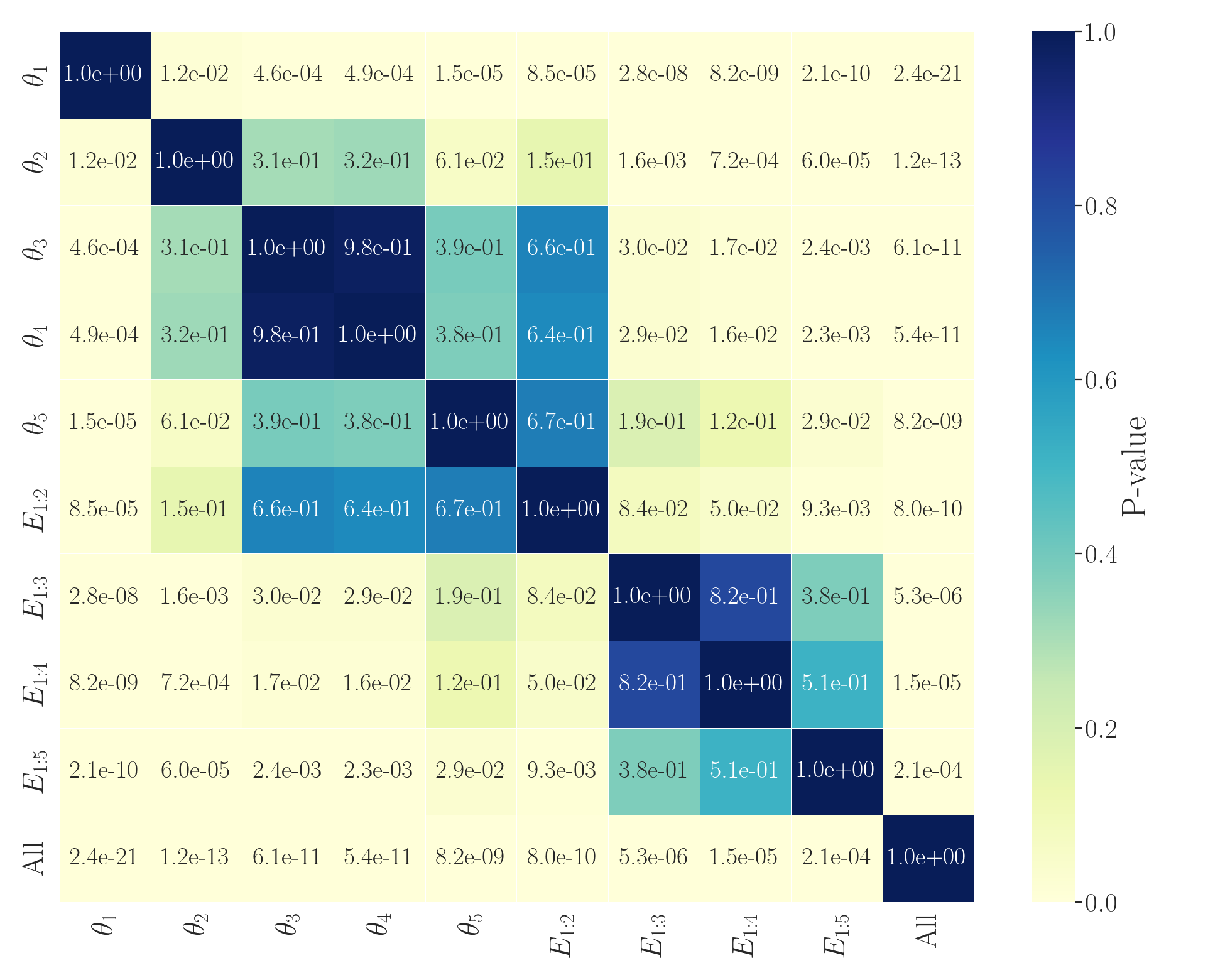}\label{fig:xgbnemapsf_F1}}%
		\hfill
		\subfloat[ARWPM]{\includegraphics[width=0.24\textwidth]{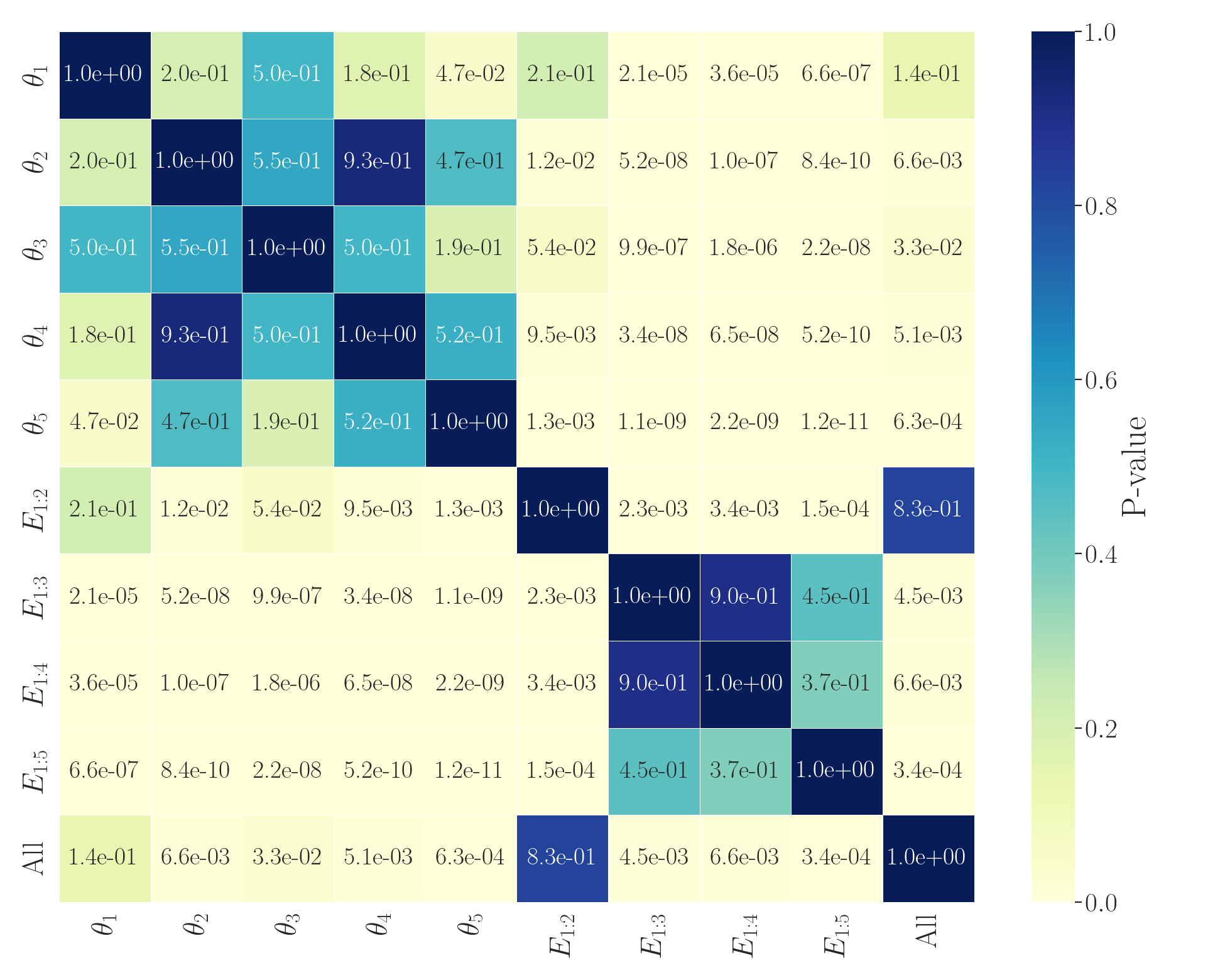}\label{fig:xgbnemarwpm_F1}}%
		\hfill
		\subfloat[GECR]{\includegraphics[width=0.24\textwidth]{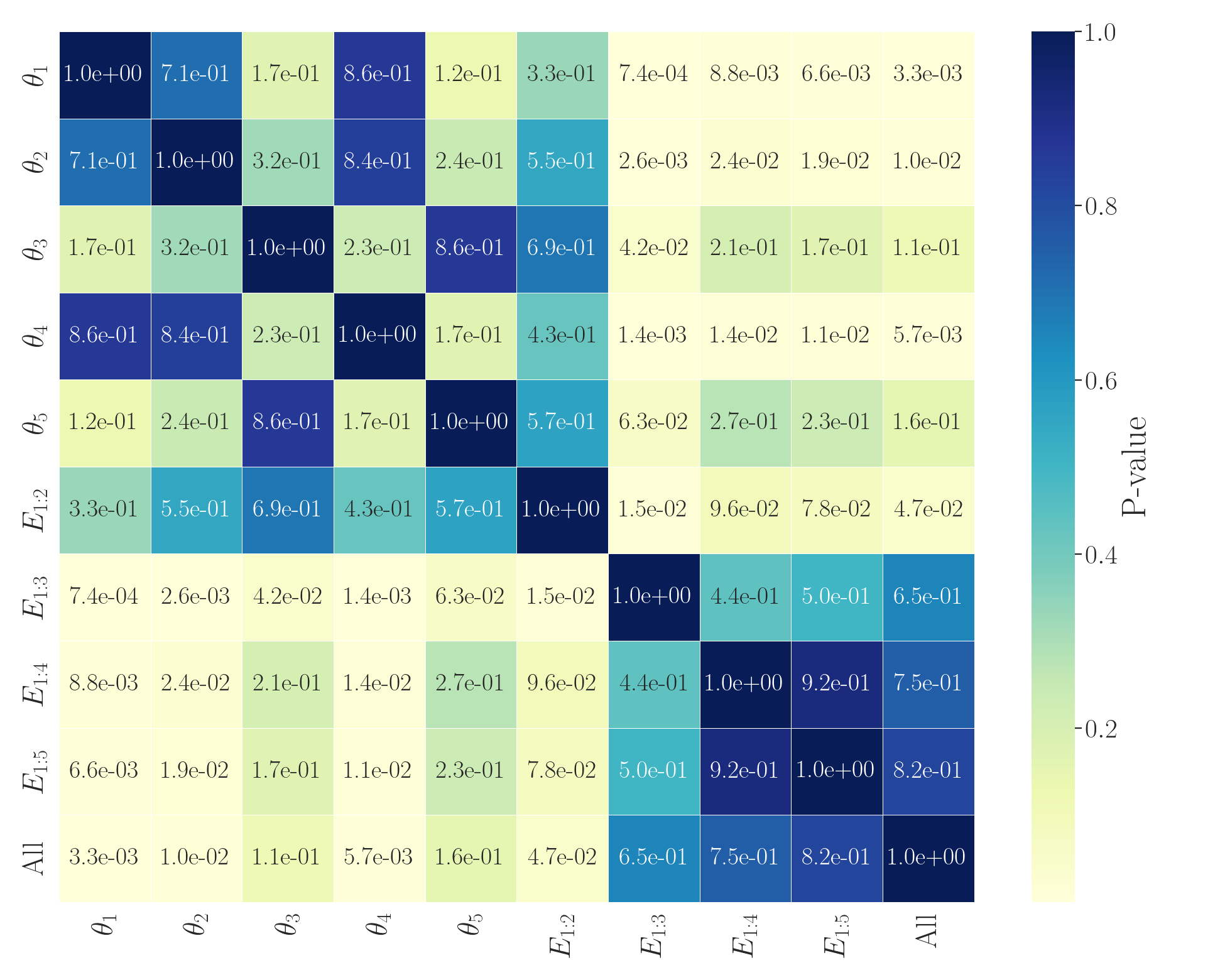}\label{fig:xgbnemgecr_F1}}%
		\hfill
		\subfloat[GFE]{\includegraphics[width=0.24\textwidth]{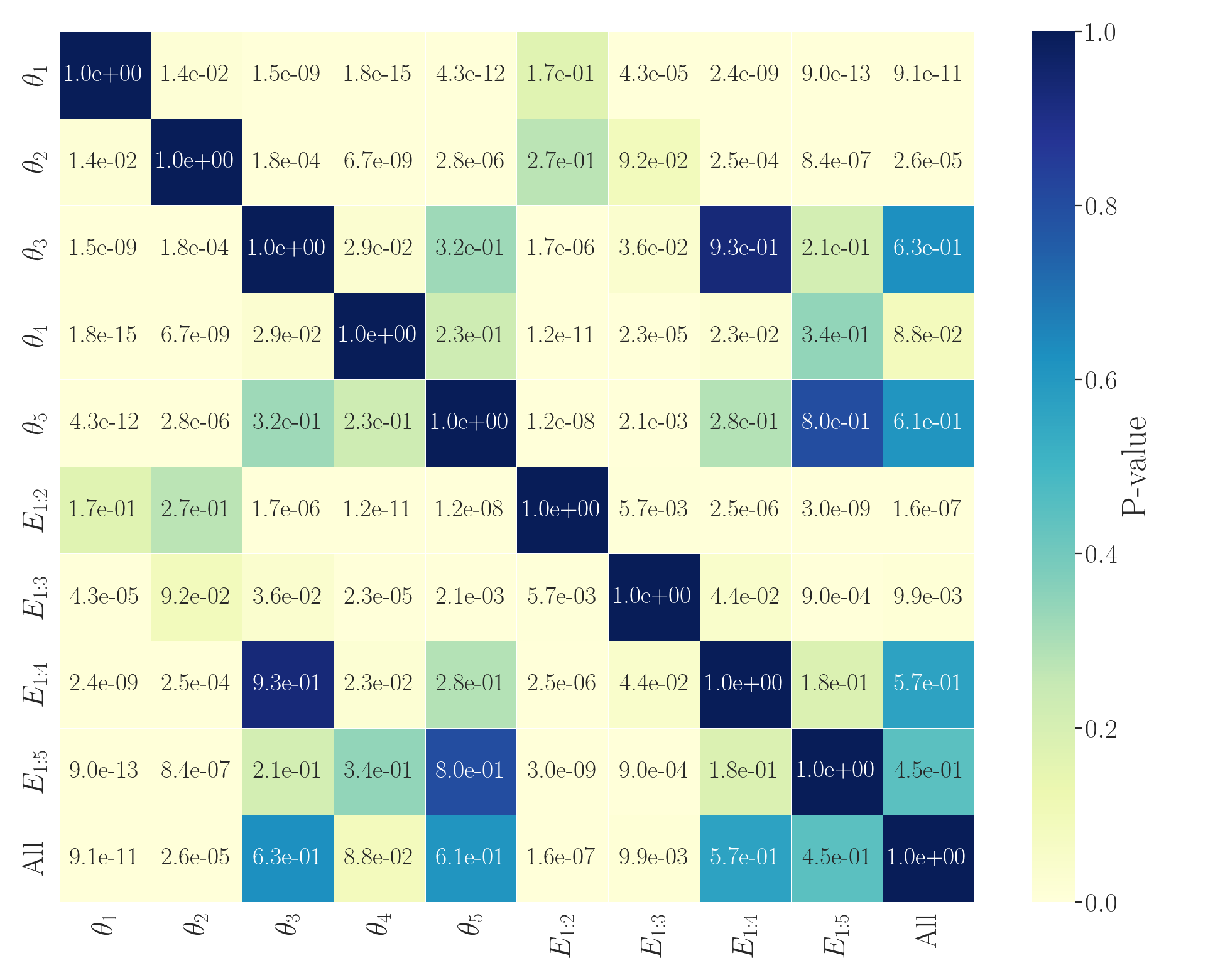}\label{fig:xgbnemgfe_F1}}
		
		% Second row
		\subfloat[GSAD]{\includegraphics[width=0.24\textwidth]{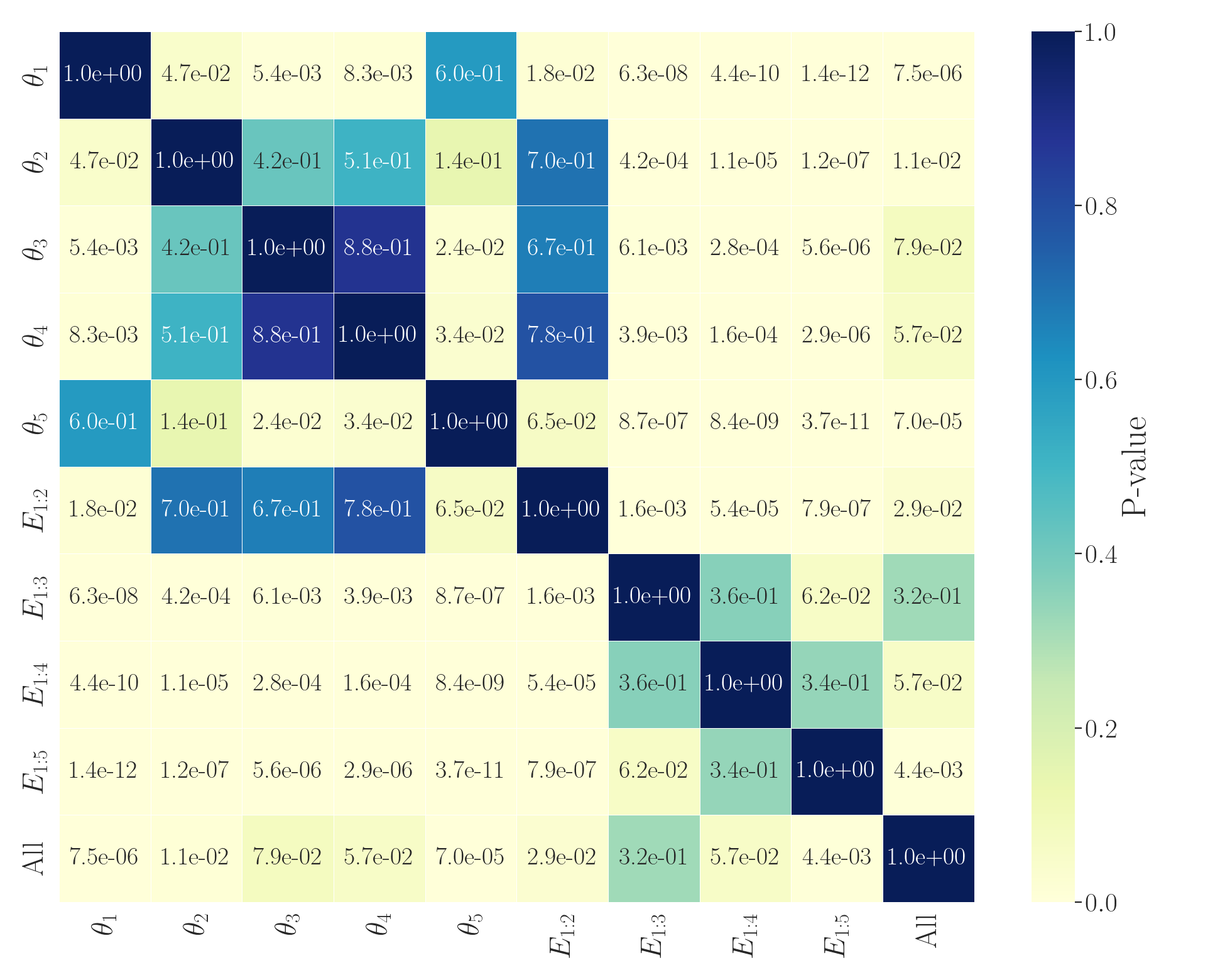}\label{fig:xgbnemgsad_F1}}%
		\hfill
		\subfloat[HAPT]{\includegraphics[width=0.24\textwidth]{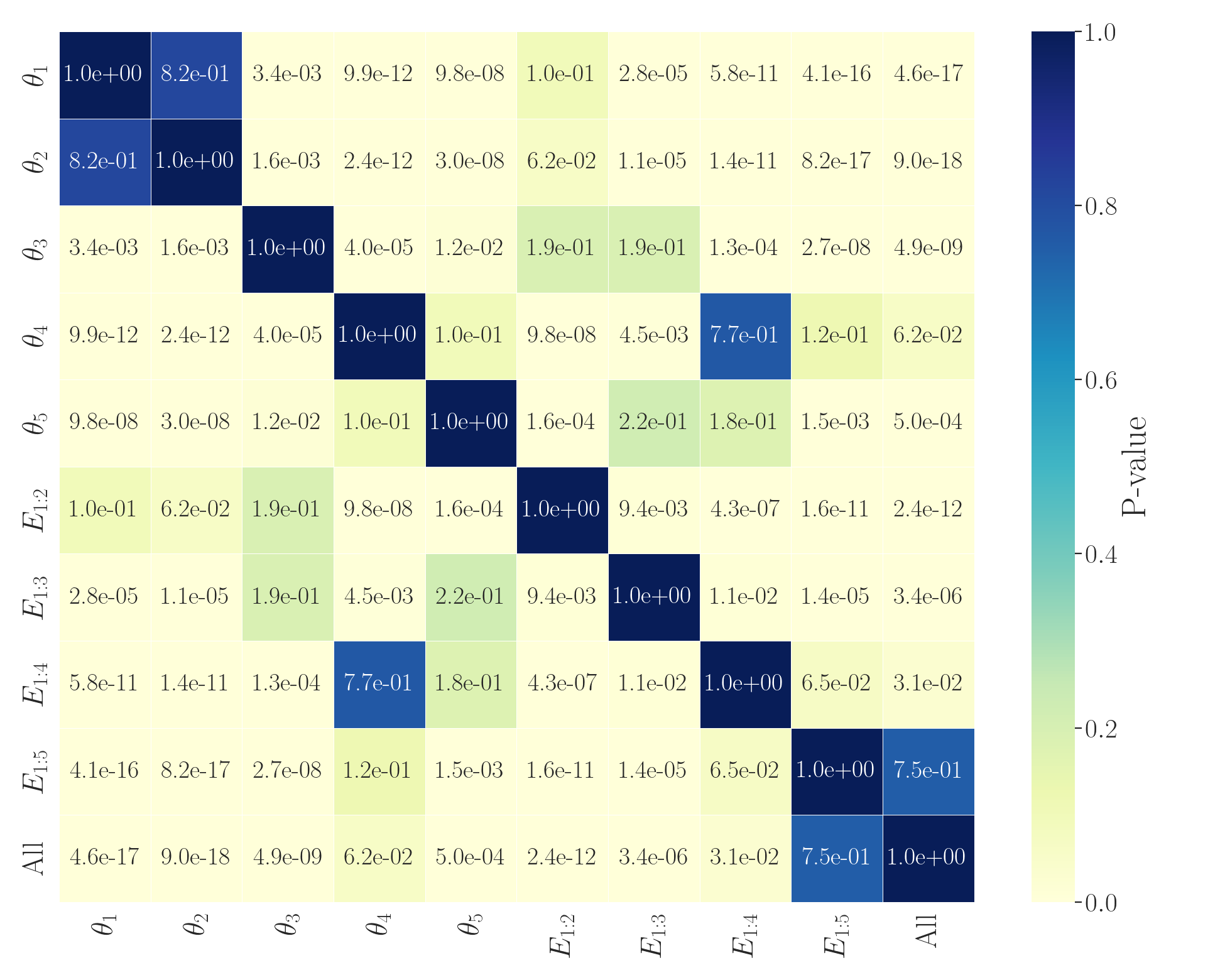}\label{fig:xgbnemhapt_F1}}%
		\hfill
		\subfloat[ISOLET]{\includegraphics[width=0.24\textwidth]{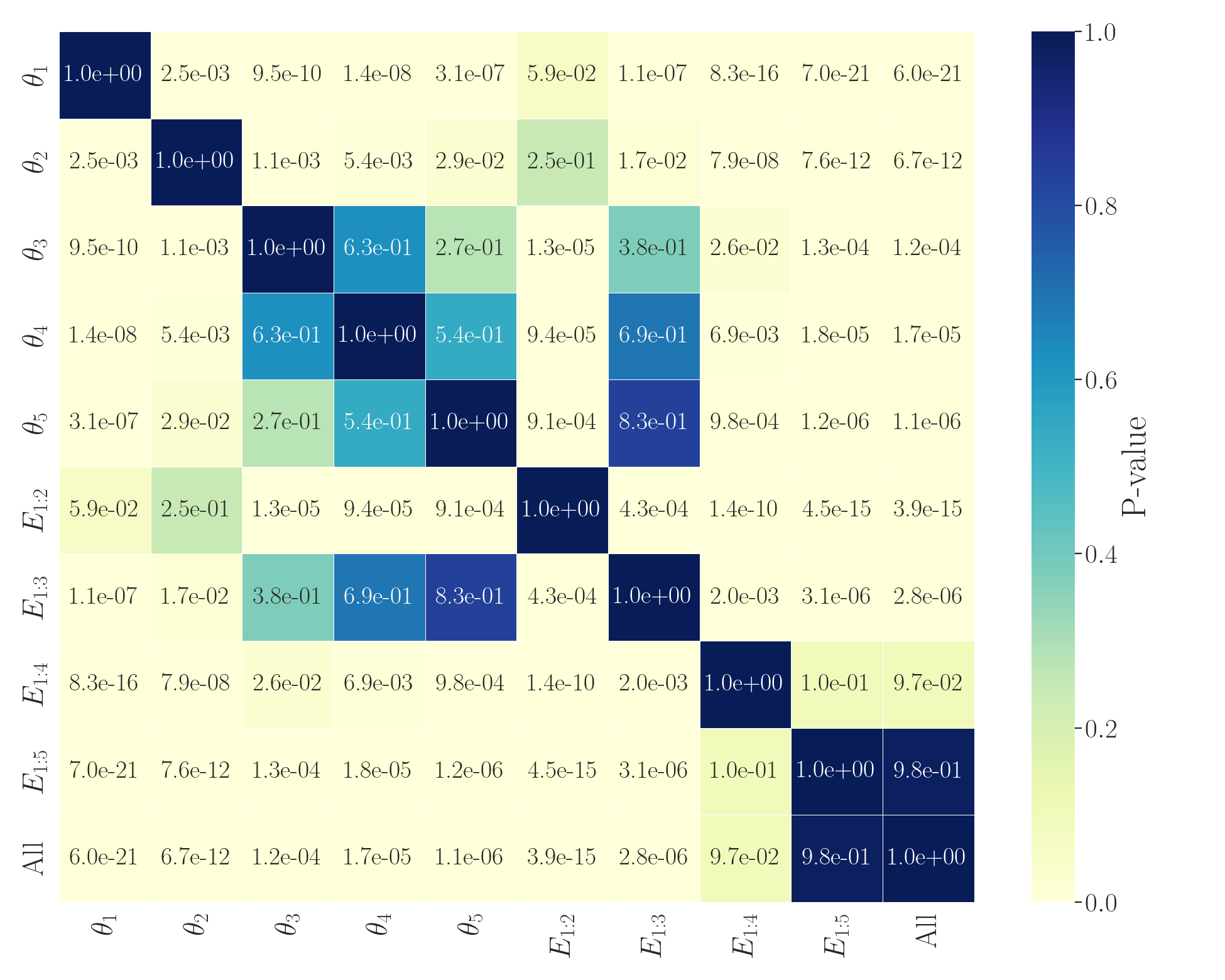}\label{fig:xgbnemisolet_F1}}%
		\hfill
		\subfloat[PD]{\includegraphics[width=0.24\textwidth]{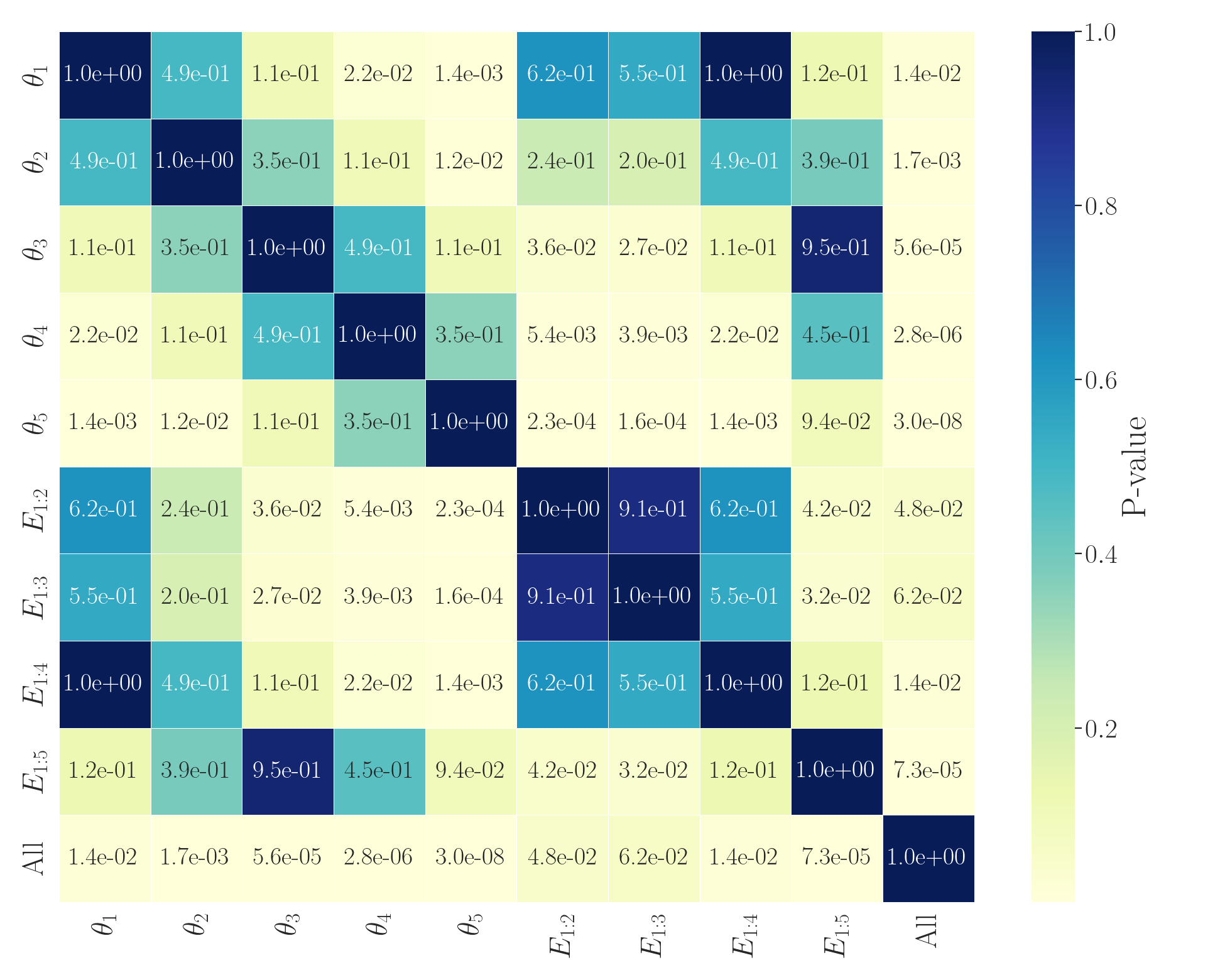}\label{fig:xgbnempd_F1}}
		%\end{comment}
		\caption[The adjusted Conover's P-values for the obtained $F_1$ score from 30 XGBoost runs.]{The results of the Conover post-hoc test on testing data’s $F_1$ score obtained from 30 XGBoost runs.}
		
		\label{fig:xgbnem_F1}
	\end{figure*}
	\FloatBarrier
	%%%%%%%%%%%%%%%%%%%%%%%%%%%%%%%%%%%%%%%%%%%%%%%%%%%%%%%%%%%%%%%%%%%%%%%%%%%%%%
	
	%%%%%%%%%%%%%%%%%%%%%%%%%%%%%%%%%%%%%%%%%%%%%%%%%%%%%%%%%%%%%%%%%%%%%%%%%%%%%%
	\begin{figure*}[htbp]
		\centering
		%\begin{comment}
		% First row
		\subfloat[APSF]{\includegraphics[width=0.24\textwidth]{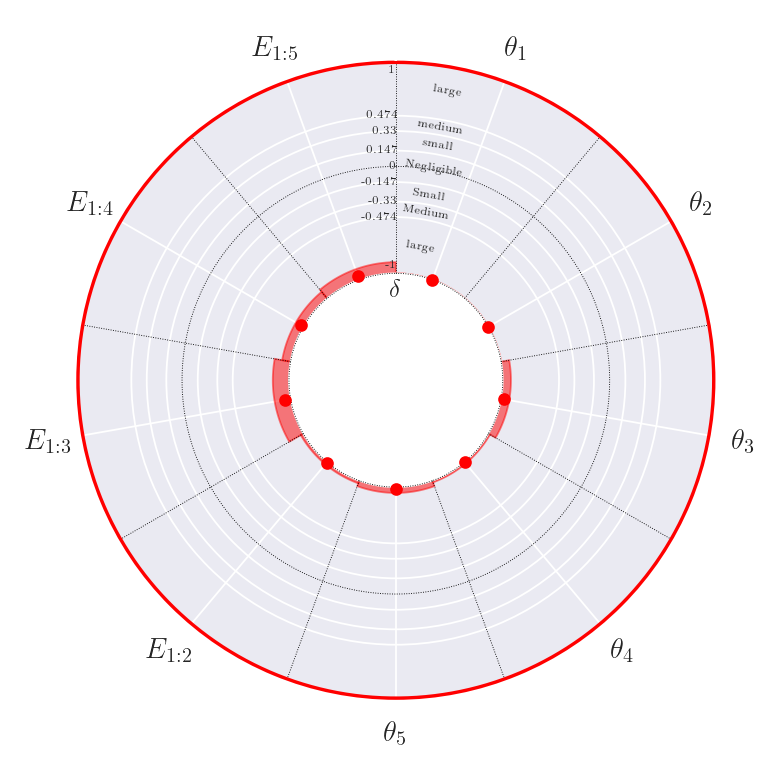}\label{fig:xgbcliffapsf_F1}}%
		\hfill
		\subfloat[ARWPM]{\includegraphics[width=0.24\textwidth]{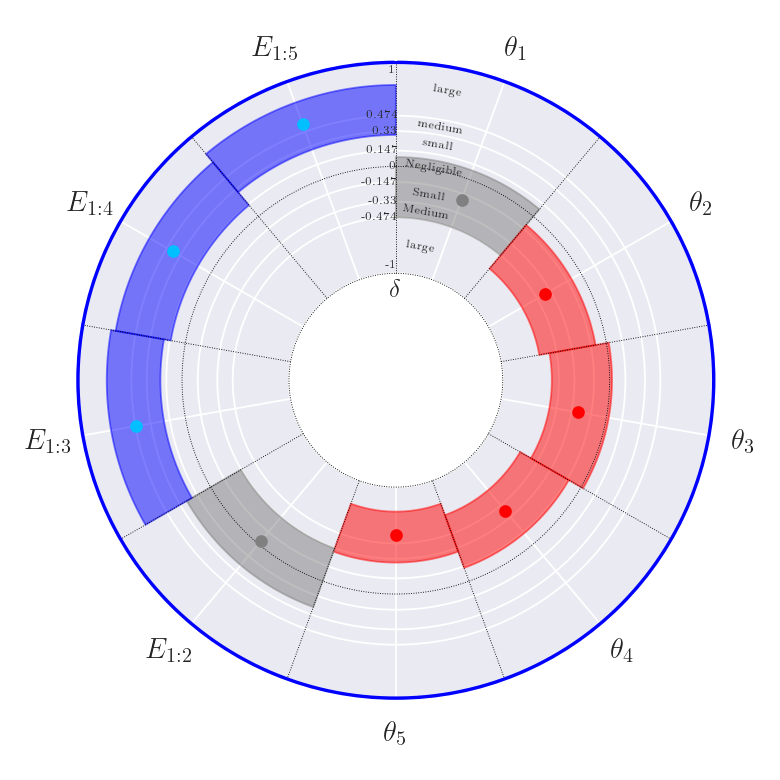}\label{fig:xgbcliffarwpm_F1}}%
		\hfill
		\subfloat[GECR]{\includegraphics[width=0.24\textwidth]{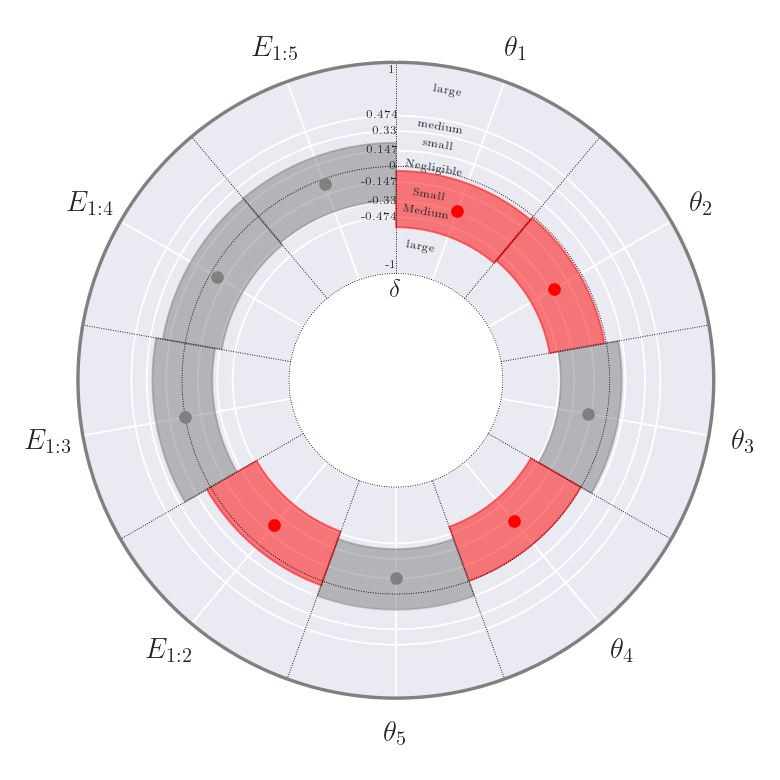}\label{fig:xgbcliffgecr_F1}}%
		\hfill
		\subfloat[GFE]{\includegraphics[width=0.24\textwidth]{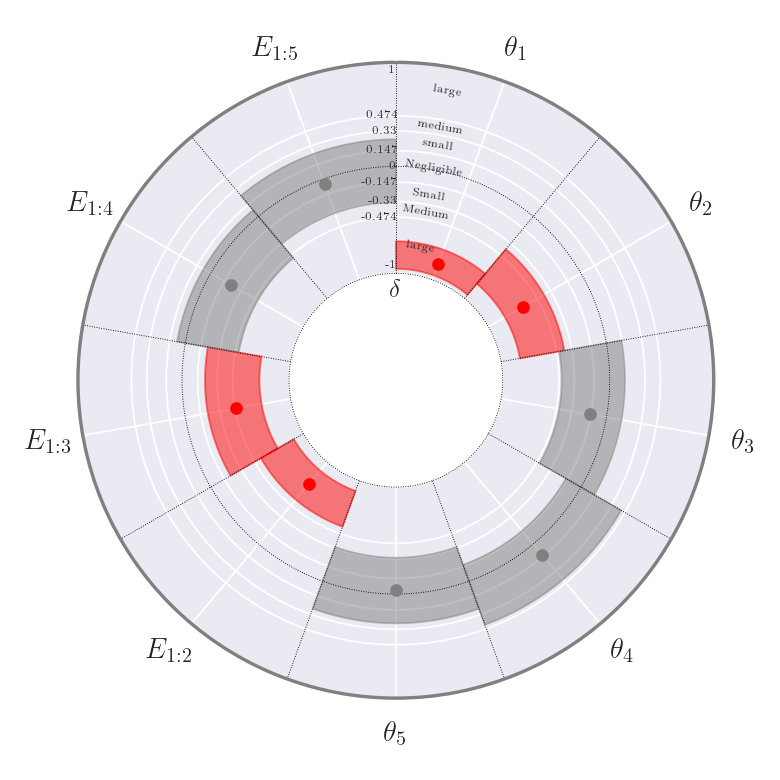}\label{fig:xgbcliffgfe_F1}}
		
		% Second row
		\subfloat[GSAD]{\includegraphics[width=0.24\textwidth]{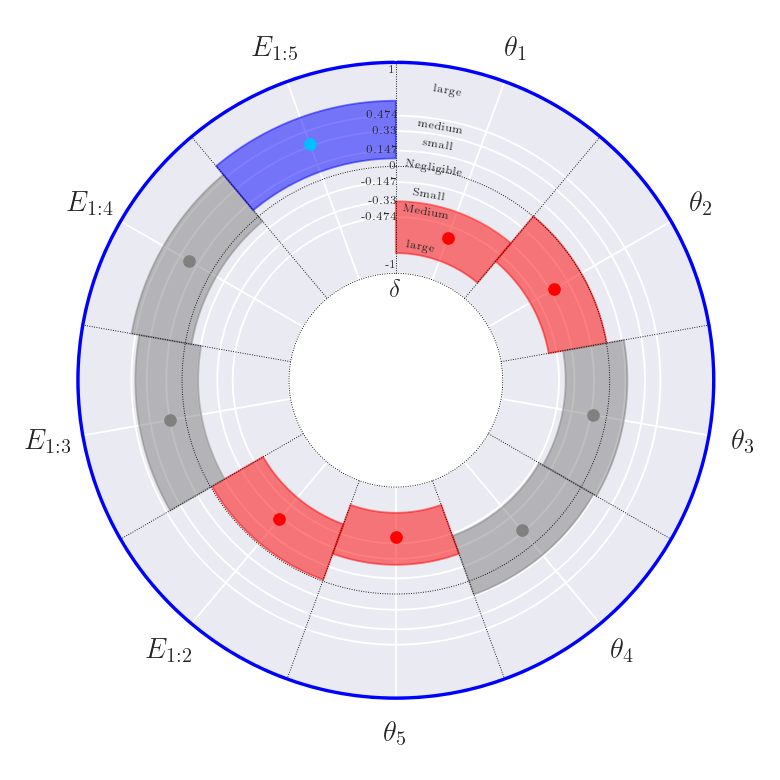}\label{fig:xgbcliffgsad_F1}}%
		\hfill
		\subfloat[HAPT]{\includegraphics[width=0.24\textwidth]{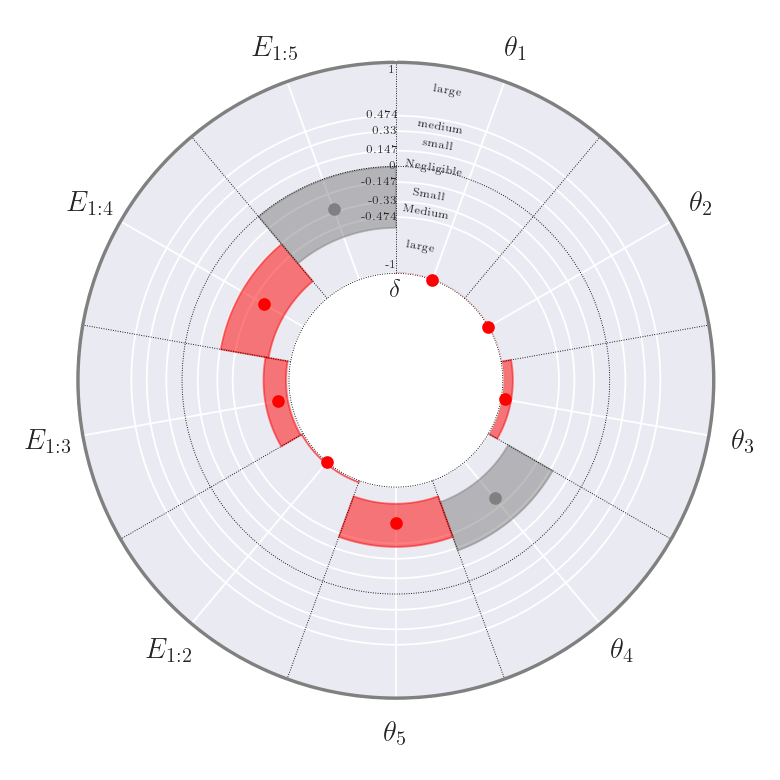}\label{fig:xgbcliffhapt_F1}}%
		\hfill
		\subfloat[ISOLET]{\includegraphics[width=0.24\textwidth]{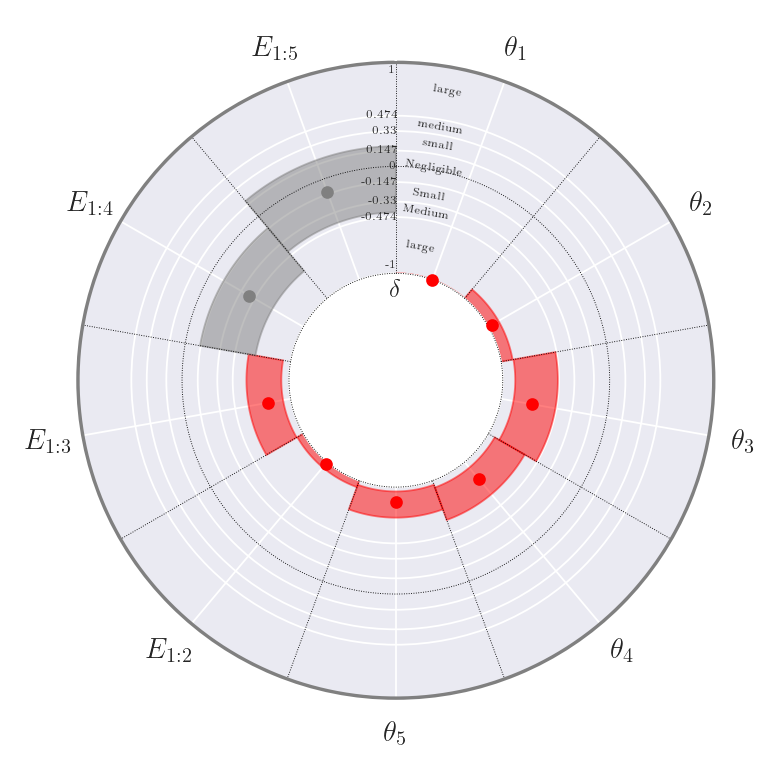}\label{fig:xgbcliffisolet_F1}}%
		\hfill
		\subfloat[PD]{\includegraphics[width=0.24\textwidth]{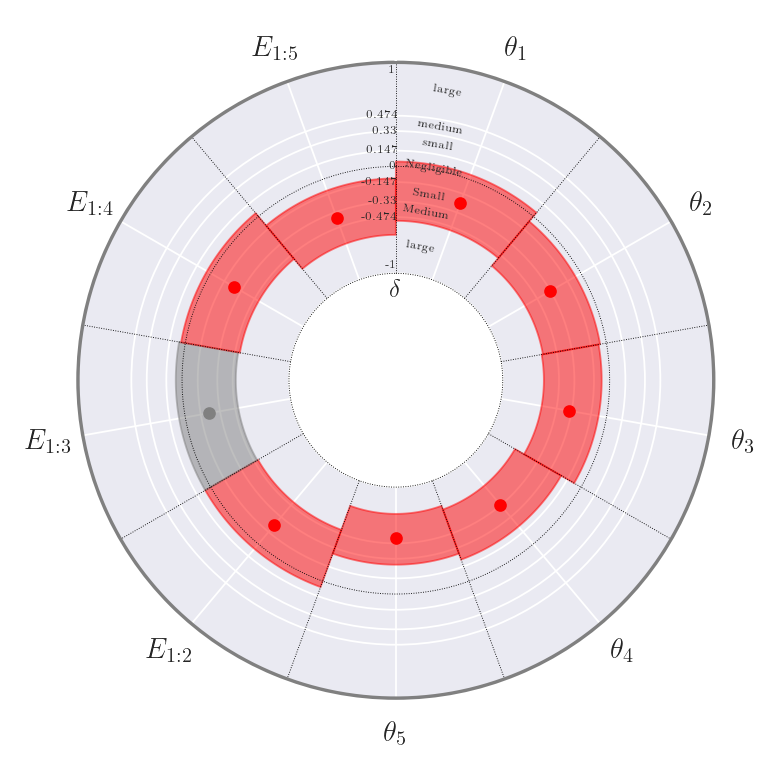}\label{fig:xgbcliffpd_F1}}
		%\end{comment}
		\caption[The Cliff's $\delta$ effect size measure and its 95\% confidence intervals for the $F_1$ score obtained from 30 XGBoost runs.]{Effect size analysis of test data $F_1$ scores across 30 XGBoost runs using Cliff's $\delta$. Each point represents the actual value obtained, with segments denoting 95\% confidence intervals based on 10,000 bootstrap resamplings. The outer ring color visualizes the statistical significance: grey illustrates no significant difference (adjusted Friedman's P-value$>0.05$), while color indicates significant differences; blue indicates at least one view and/or ensemble outperforms the benchmark (adjusted Conover's p-value$ < 0.05$, Cliff's $\delta > 0$), and red signifies all views and ensembles underperform relative to the benchmark (adjusted Conover's p-value$ < 0.05$, Cliff's $\delta < 0$). Segment colors show performance difference against the benchmark: grey for no significant difference (adjusted Conover's p-value$  > 0.05$), blue for better performance (Cliff's $\delta > 0$), and red for worse performance (Cliff's $\delta < 0$).}
		\label{fig:xgbcliff_F1}
	\end{figure*}
	%%%%%%%%%%%%%%%%%%%%%%%%%%%%%%%%%%%%%%%%%%%%%%%%%%%%%%%%%%%%%%%%%%%%%%%%%%%%%%
	
	%%%%%%%%%%%%%%%%%%%%%%%%%%%%%%%%%%%%%%%%%%%%%%%%%%%%%%%%%%%%%%%%%%%%%%%%%%%%%%
	\begin{table*}[htbp]
		\centering
		\caption[The results of Friedman and Conover tests and Cliff's $\delta$ analysis for the $F_1$ score obtained from 30 XGBoost runs.]{Statistical comparison of $F_1$ score results for testing data obtained from XGBoost runs. W, T, and L denote win, tie, and loss based on adjusted Friedman and Conover's p-values. Effect sizes are calculated using Cliff's Delta method and are categorized as negligible, small, medium, or large.}
		\label{tab:xgbf1}
		\resizebox{\linewidth}{!}{%
			\begin{tabular}{c|ccccccccc}
				\hline
				\multicolumn{10}{c}{XGBoost's $F_1$ Score}\\
				\hline
				Dataset & $\theta_1$ & $\theta_2$ & $\theta_3$ & $\theta_4$ & $\theta_5$ & $E_{1:2}$ & $E_{1:3}$ & $E_{1:4}$ & $E_{1:5}$ \\
				\hline
				APSF  & L (large) & L (large) & L (large) & L (large) & L (large) & L (large) & L (large) & L (large) & L (large) \\
				ARWPM  & T (small) & L (medium) & L (small) & L (medium) & L (large) & T (negligible) & W (medium) & W (medium) & W (large) \\
				GECR  & L (small) & L (small) & T (small) & L (small) & T (negligible) & L (small) & T (negligible) & T (negligible) & T (negligible) \\
				GFE  & L (large) & L (large) & T (small) & T (negligible) & T (negligible) & L (large) & L (large) & T (small) & T (negligible) \\
				GSAD  & L (large) & L (small) & T (negligible) & T (small) & L (large) & L (small) & T (negligible) & T (small) & W (medium) \\
				HAPT  & L (large) & L (large) & L (large) & T (large) & L (large) & L (large) & L (large) & L (large) & T (small) \\
				ISOLET  & L (large) & L (large) & L (large) & L (large) & L (large) & L (large) & L (large) & T (medium) & T (negligible) \\
				PD  & L (small) & L (medium) & L (medium) & L (large) & L (large) & L (small) & T (small) & L (small) & L (medium) \\
				\hline
				W - T - L  & 0 - 1 - 7 & 0 - 0 - 8 & 0 - 3 - 5 & 0 - 3 - 5 & 0 - 2 - 6 & 0 - 1 - 7 & 1 - 3 - 4 & 1 - 4 - 3 & 2 - 4 - 2 \\
				\hline
			\end{tabular}
		}
	\end{table*}
	\FloatBarrier
	%%%%%%%%%%%%%%%%%%%%%%%%%%%%%%%%%%%%%%%%%%%%%%%%%%%%%%%%%%%%%%%%%%%%%%%%%%%%%%
	
	%\subsection{The AUC results for XGBoost}
	%\label{ssub:xgbauc}
	% XGB: AUC
	%%%%%%%%%%%%%%%%%%%%%%%%%%%%%%%%%%%%%%%%%%%%%%%%%%%%%%%%%%%%%%%%%%%%%%%%%%%%%%
	\begin{figure*}[t] 
		\centering
		%\begin{comment}
		% First row
		\subfloat[APSF]{\includegraphics[width=0.24\textwidth]{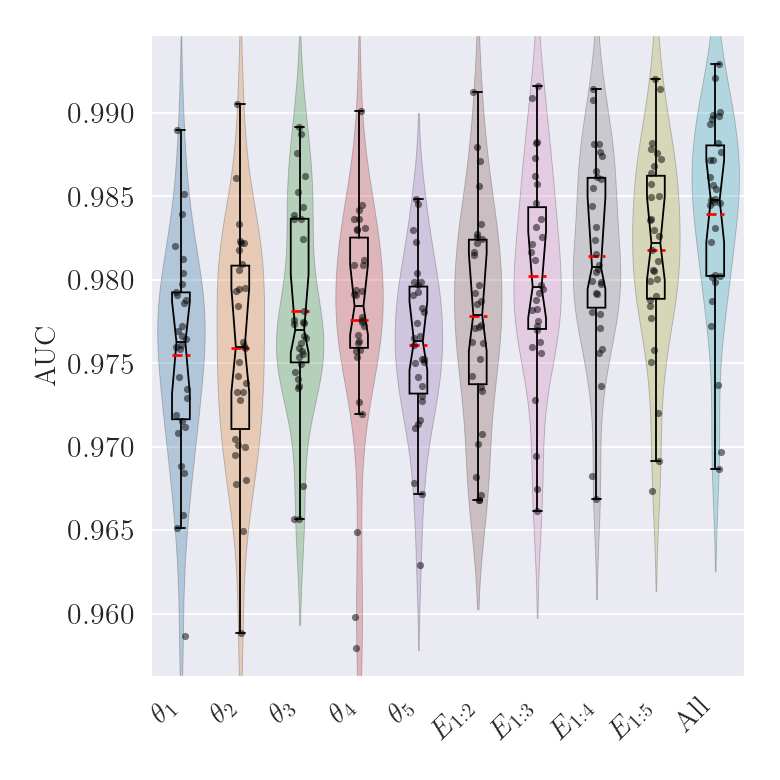}\label{fig:xgbapsf_AUC}}%
		\hfill
		\subfloat[ARWPM]{\includegraphics[width=0.24\textwidth]{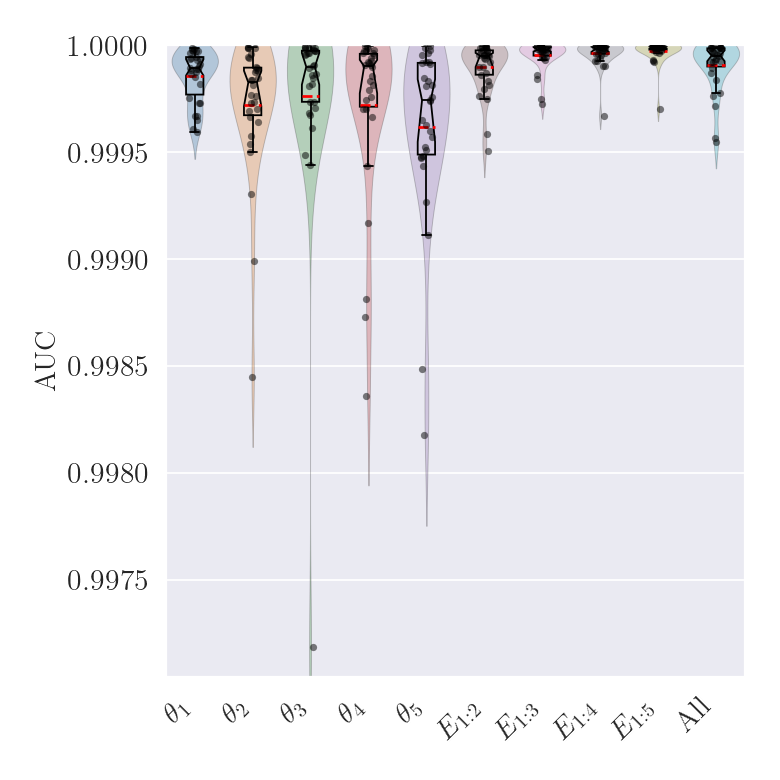}\label{fig:xgbarwpm_AUC}}%
		\hfill
		\subfloat[GECR]{\includegraphics[width=0.24\textwidth]{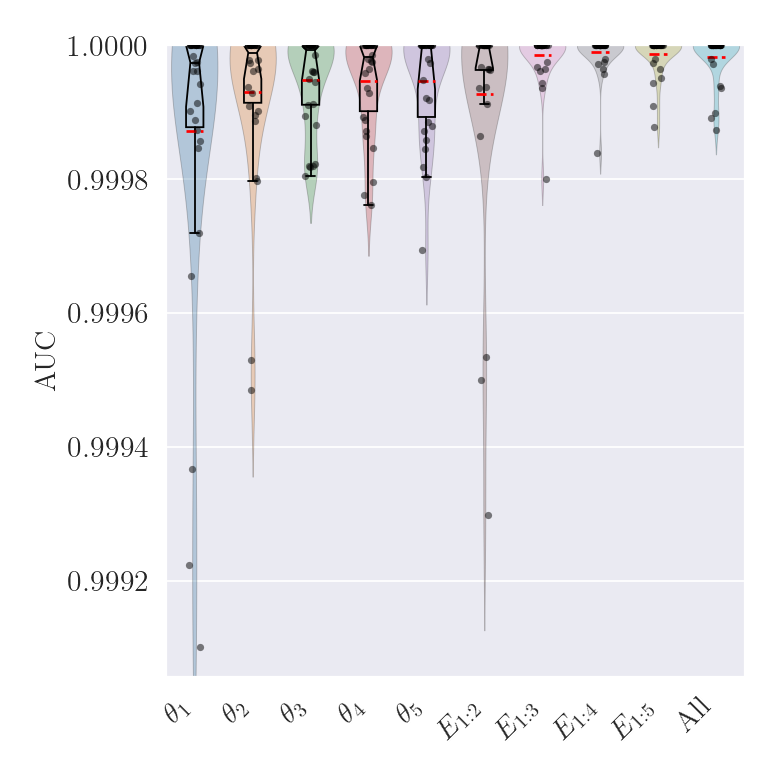}\label{fig:xgbgecr_AUC}}%
		\hfill
		\subfloat[GFE]{\includegraphics[width=0.24\textwidth]{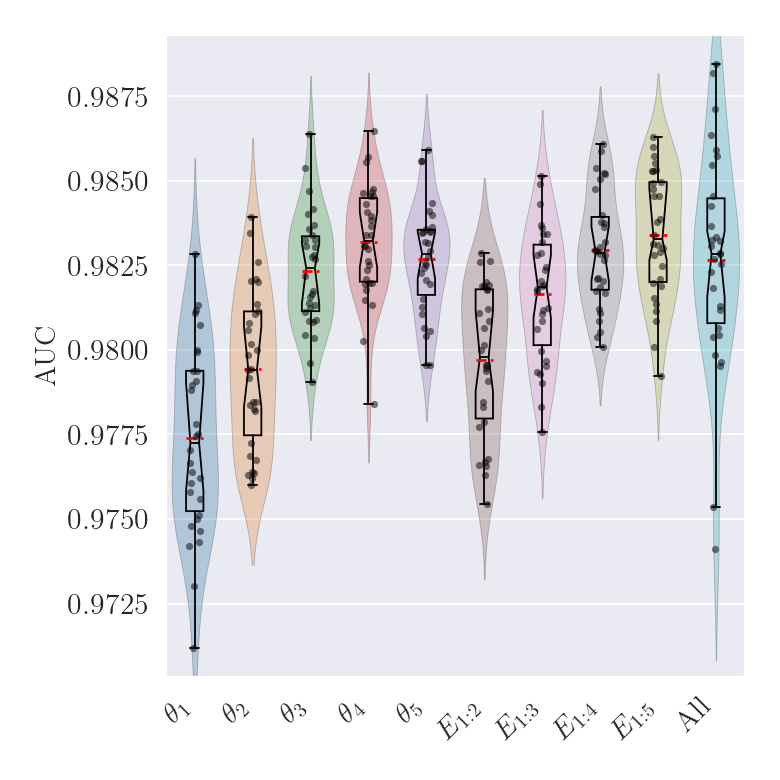}\label{fig:xgbgfe_AUC}}
		
		% Second row
		\subfloat[GSAD]{\includegraphics[width=0.24\textwidth]{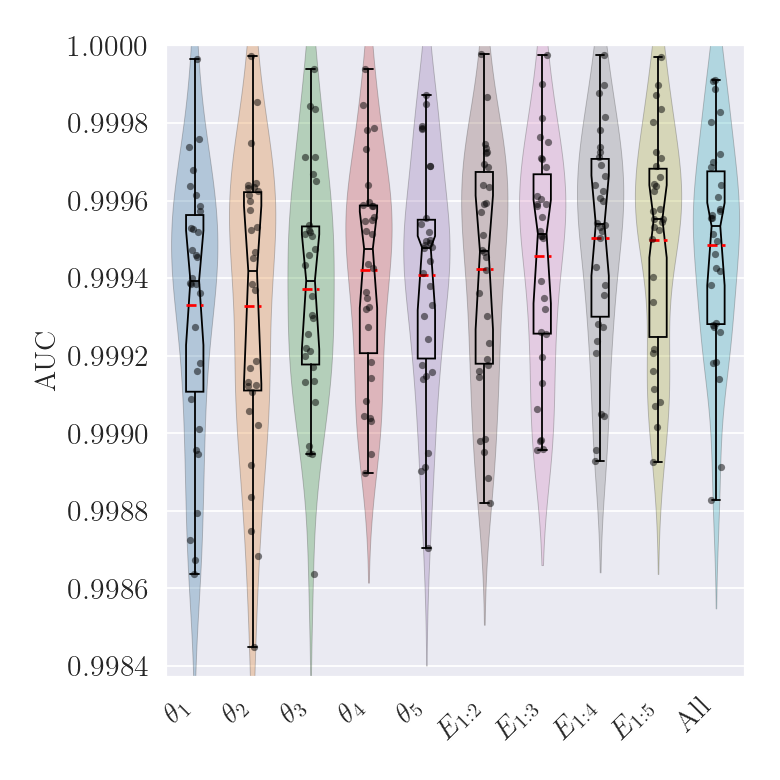}\label{fig:fpgsad_AUC}}%
		\hfill
		\subfloat[HAPT]{\includegraphics[width=0.24\textwidth]{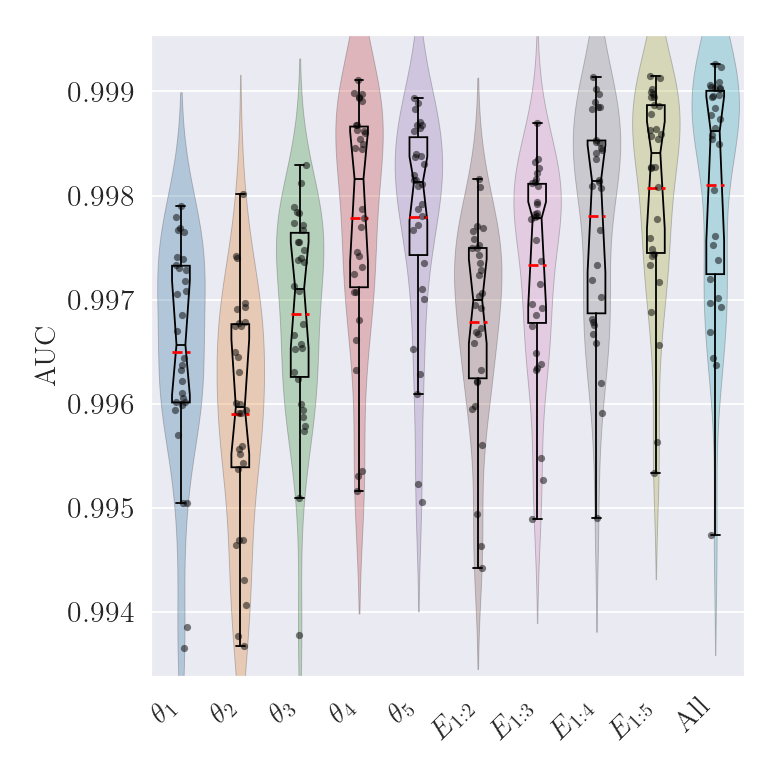}\label{fig:xgbhapt_AUC}}%
		\hfill
		\subfloat[ISOLET]{\includegraphics[width=0.24\textwidth]{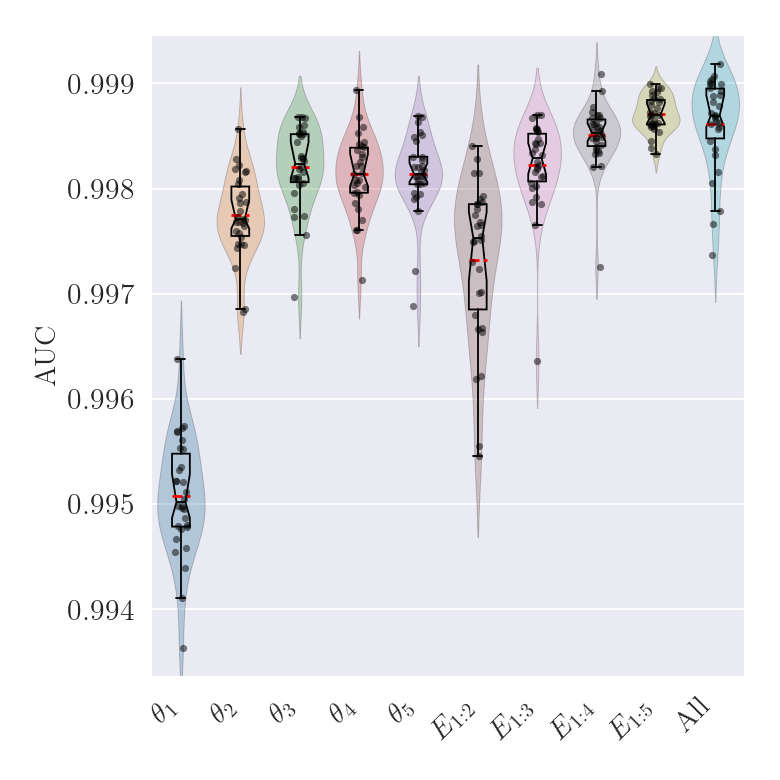}\label{fig:xgbisolet_AUC}}%
		\hfill
		\subfloat[PD]{\includegraphics[width=0.24\textwidth]{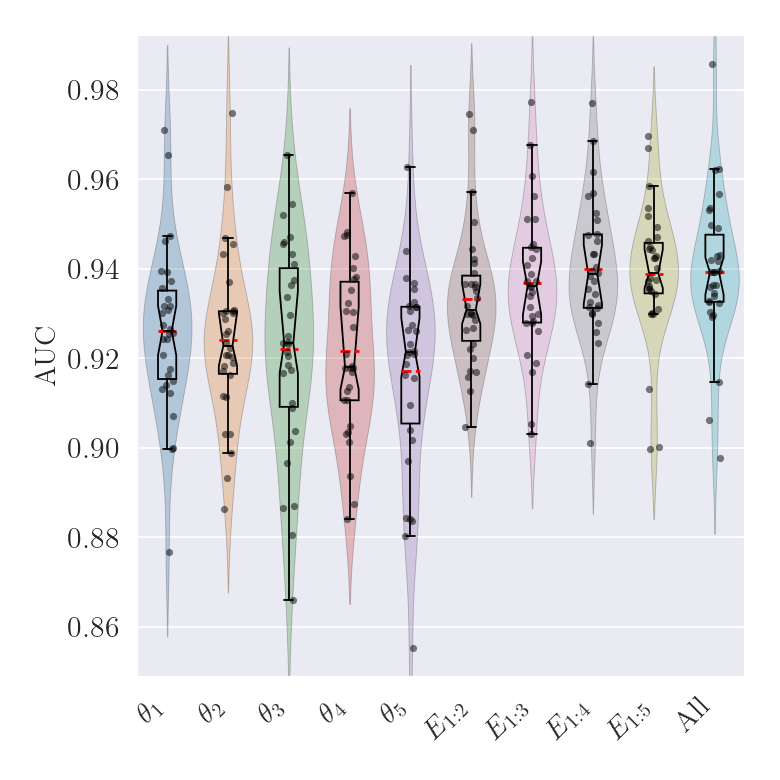}\label{fig:xgbpd_AUC}}
		%\end{comment}
		\caption[The distribution of the obtained AUC values for 30 XGBoost runs.]{The raincloud plot of AUC results obtained from 30 XGBoost runs.}
		
		\label{fig:xgb_AUC}
	\end{figure*}
	%%%%%%%%%%%%%%%%%%%%%%%%%%%%%%%%%%%%%%%%%%%%%%%%%%%%%%%%%%%%%%%%%%%%%%%%%%%%%%
	
	%%%%%%%%%%%%%%%%%%%%%%%%%%%%%%%%%%%%%%%%%%%%%%%%%%%%%%%%%%%%%%%%%%%%%%%%%%%%%%
	\begin{figure*}[t] 
		\centering
		%\begin{comment}
		% First row
		\subfloat[APSF]{\includegraphics[width=0.24\textwidth]{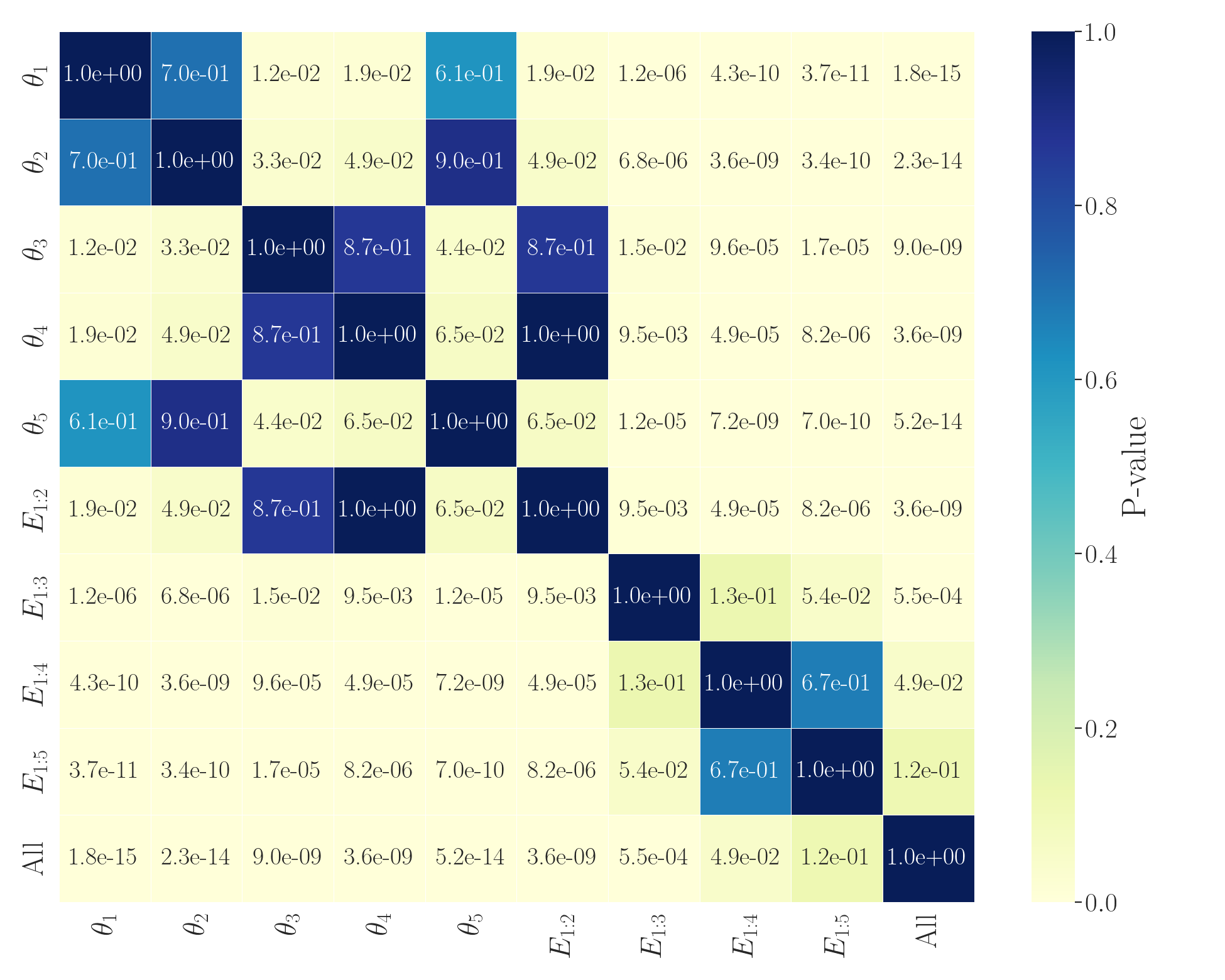}\label{fig:xgbnemapsf_AUC}}%
		\hfill
		\subfloat[ARWPM]{\includegraphics[width=0.24\textwidth]{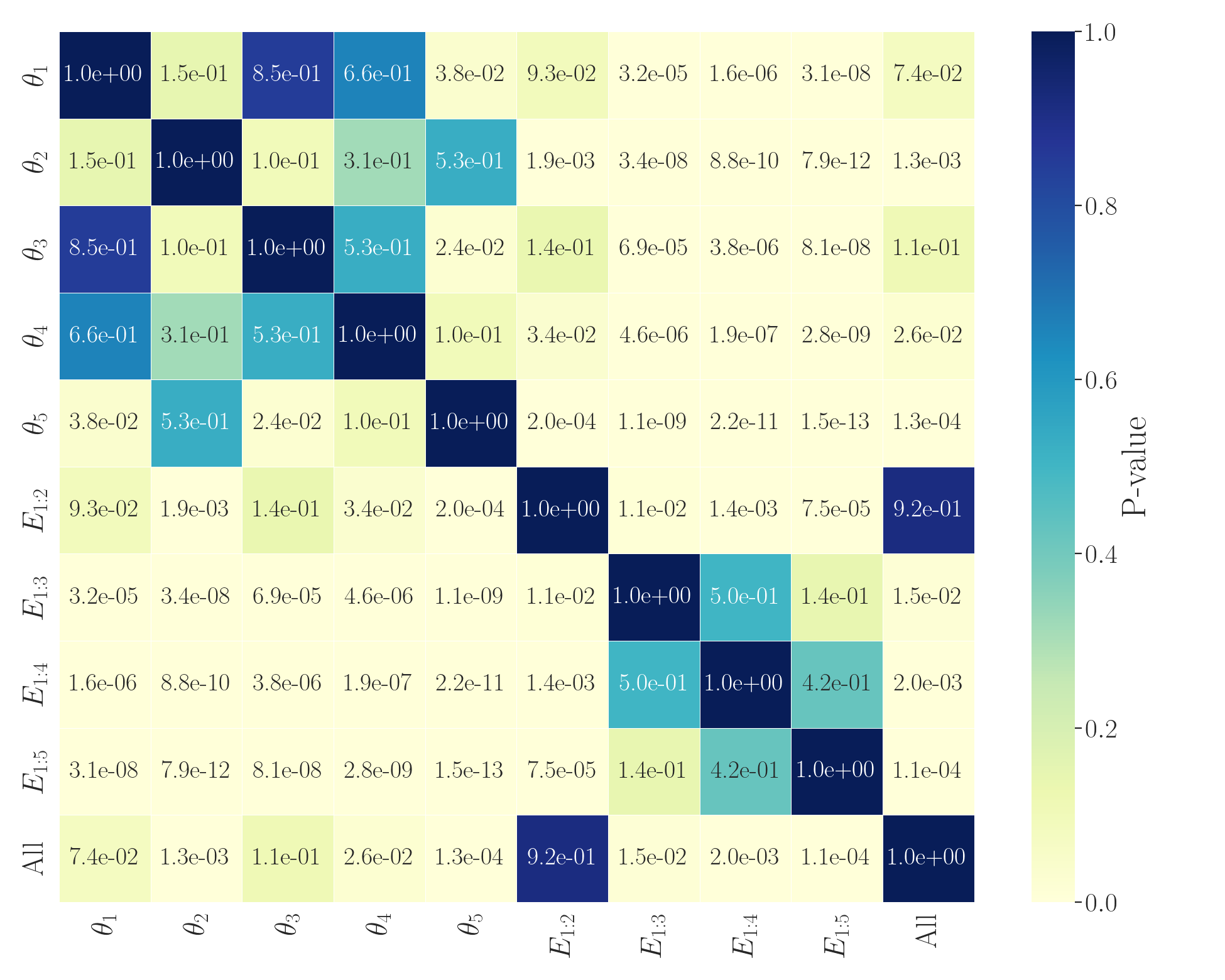}\label{fig:xgbnemarwpm_AUC}}%
		\hfill
		\subfloat[GECR]{\includegraphics[width=0.24\textwidth]{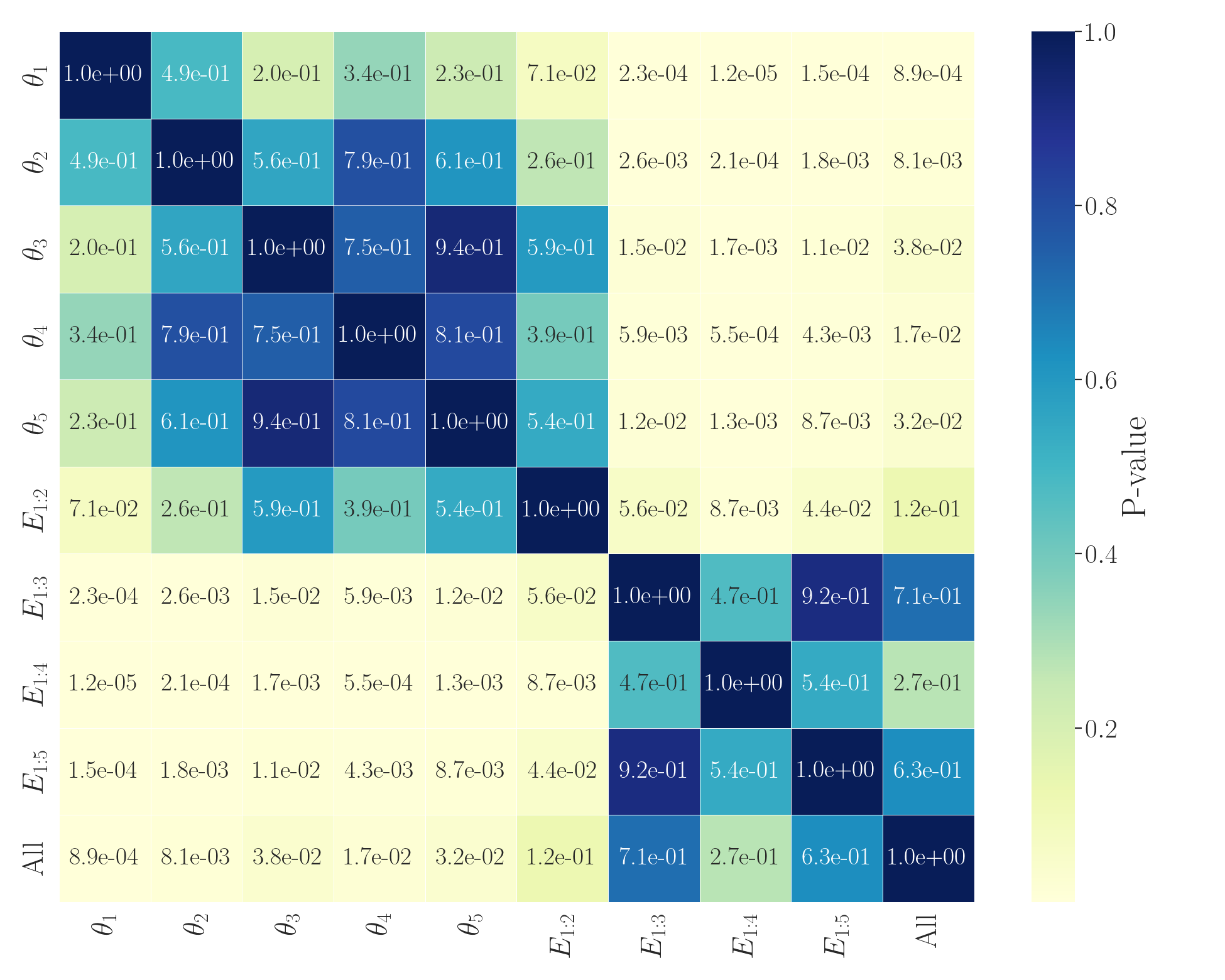}\label{fig:xgbnemgecr_AUC}}%
		\hfill
		\subfloat[GFE]{\includegraphics[width=0.24\textwidth]{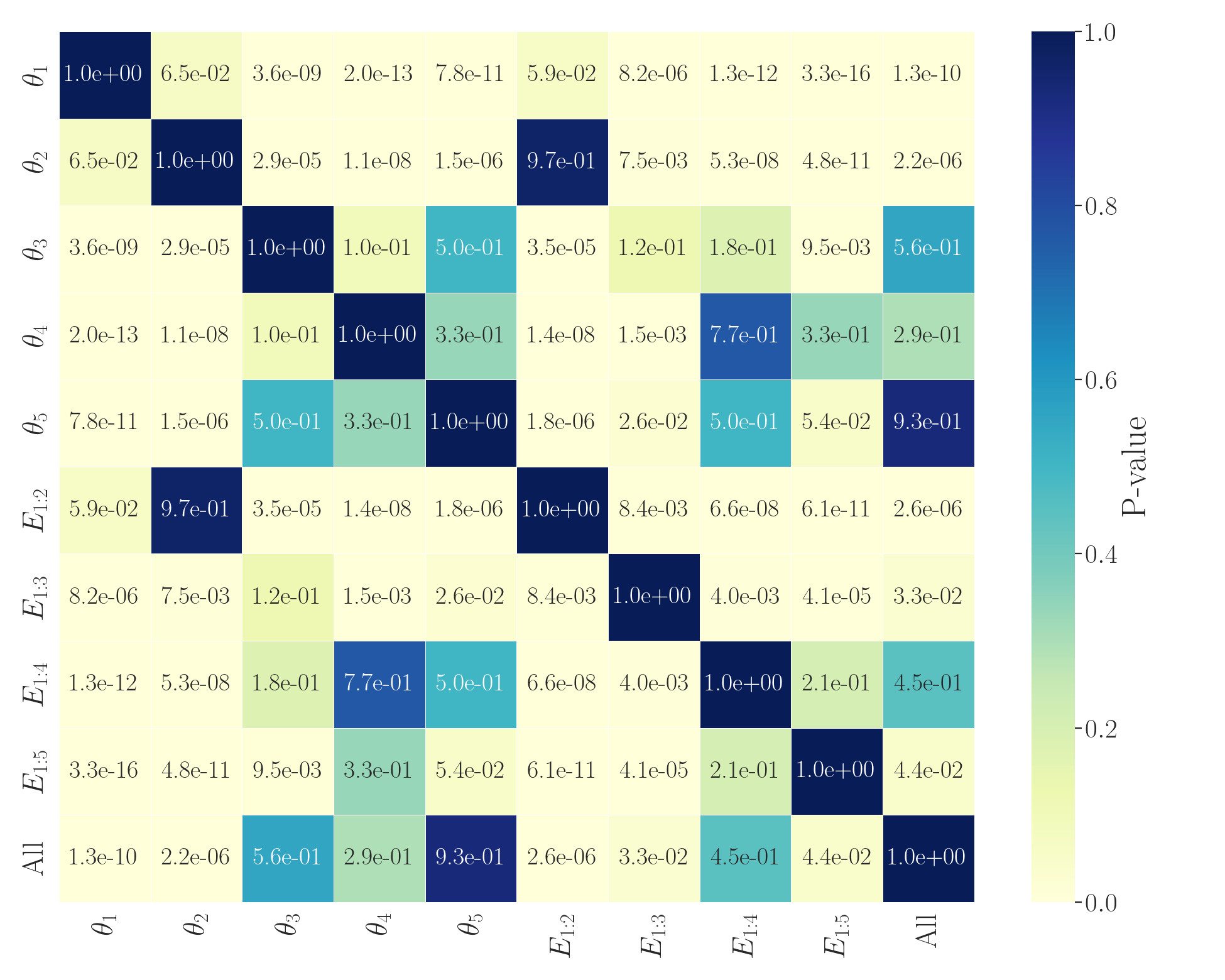}\label{fig:xgbnemgfe_AUC}}
		
		% Second row
		\subfloat[GSAD]{\includegraphics[width=0.24\textwidth]{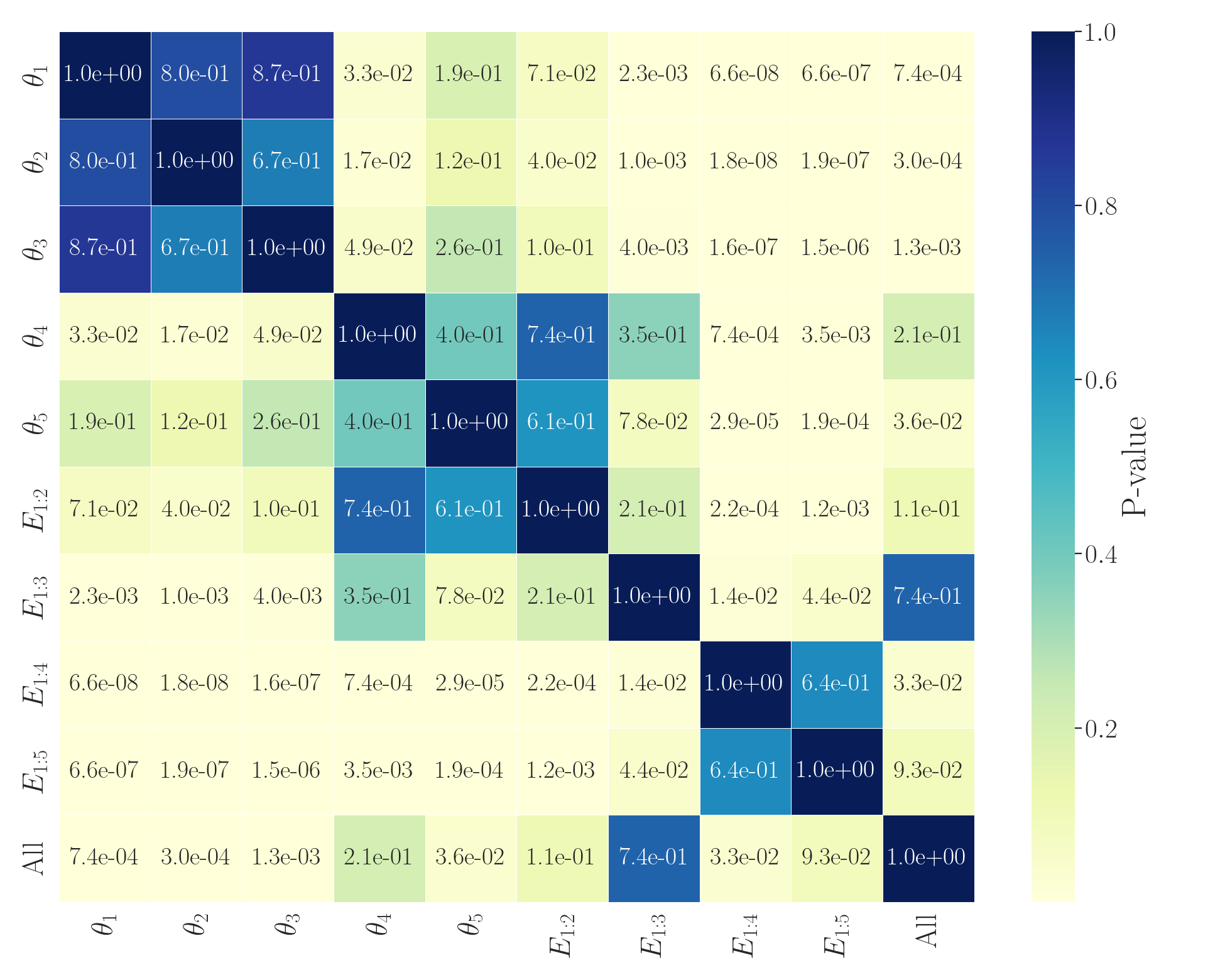}\label{fig:xgbnemgsad_AUC}}%
		\hfill
		\subfloat[HAPT]{\includegraphics[width=0.24\textwidth]{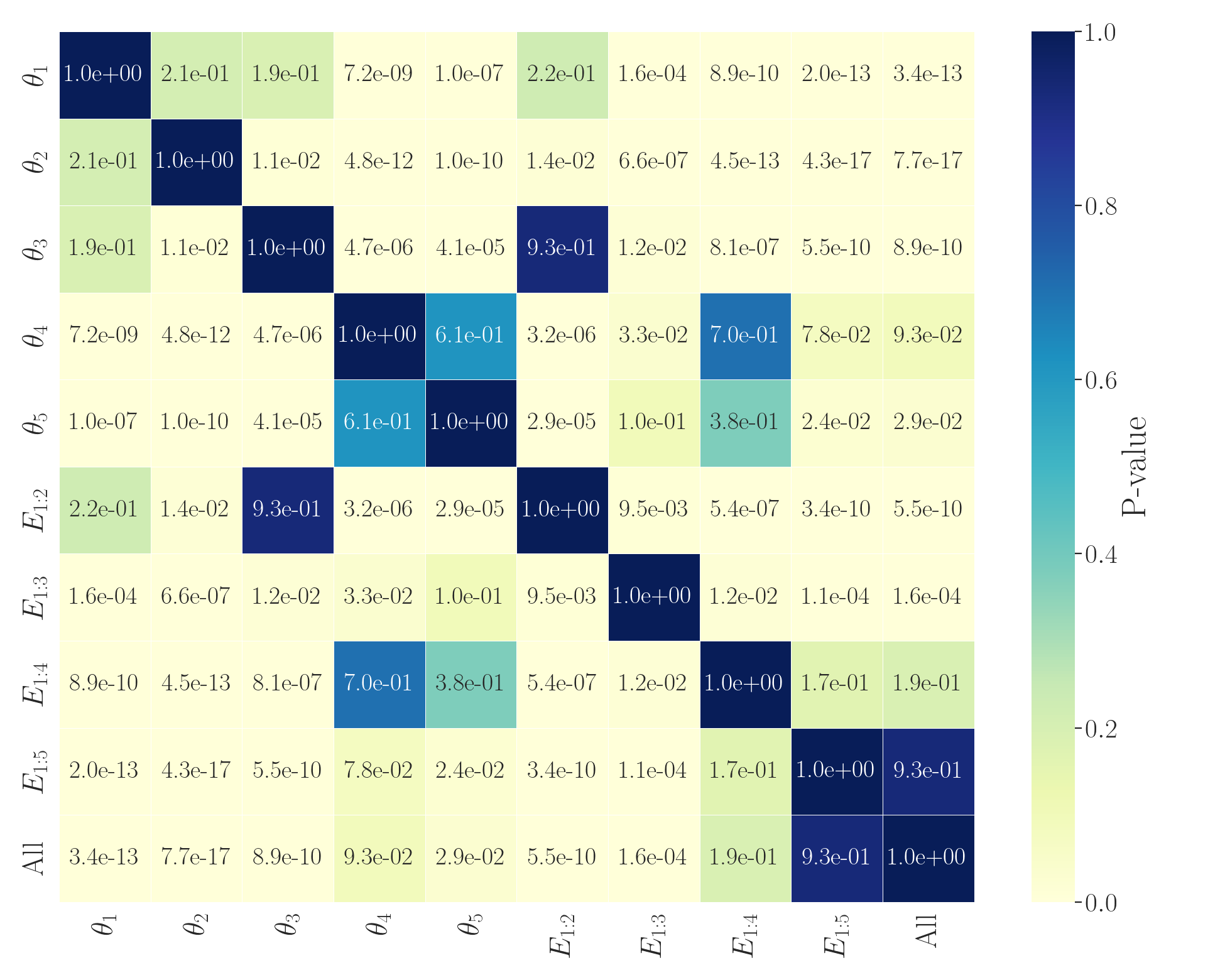}\label{fig:xgbnemhapt_AUC}}%
		\hfill
		\subfloat[ISOLET]{\includegraphics[width=0.24\textwidth]{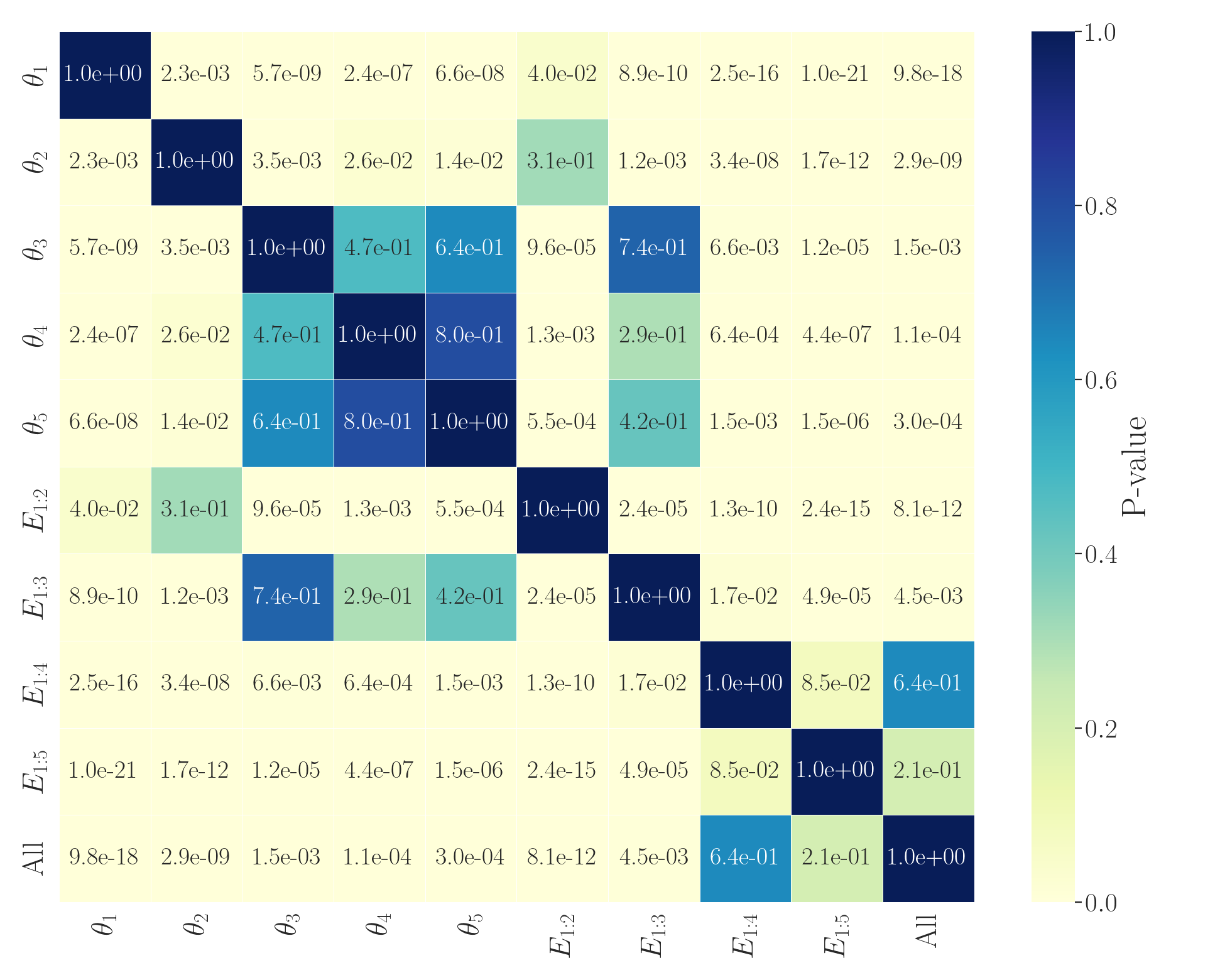}\label{fig:xgbnemisolet_AUC}}%
		\hfill
		\subfloat[PD]{\includegraphics[width=0.24\textwidth]{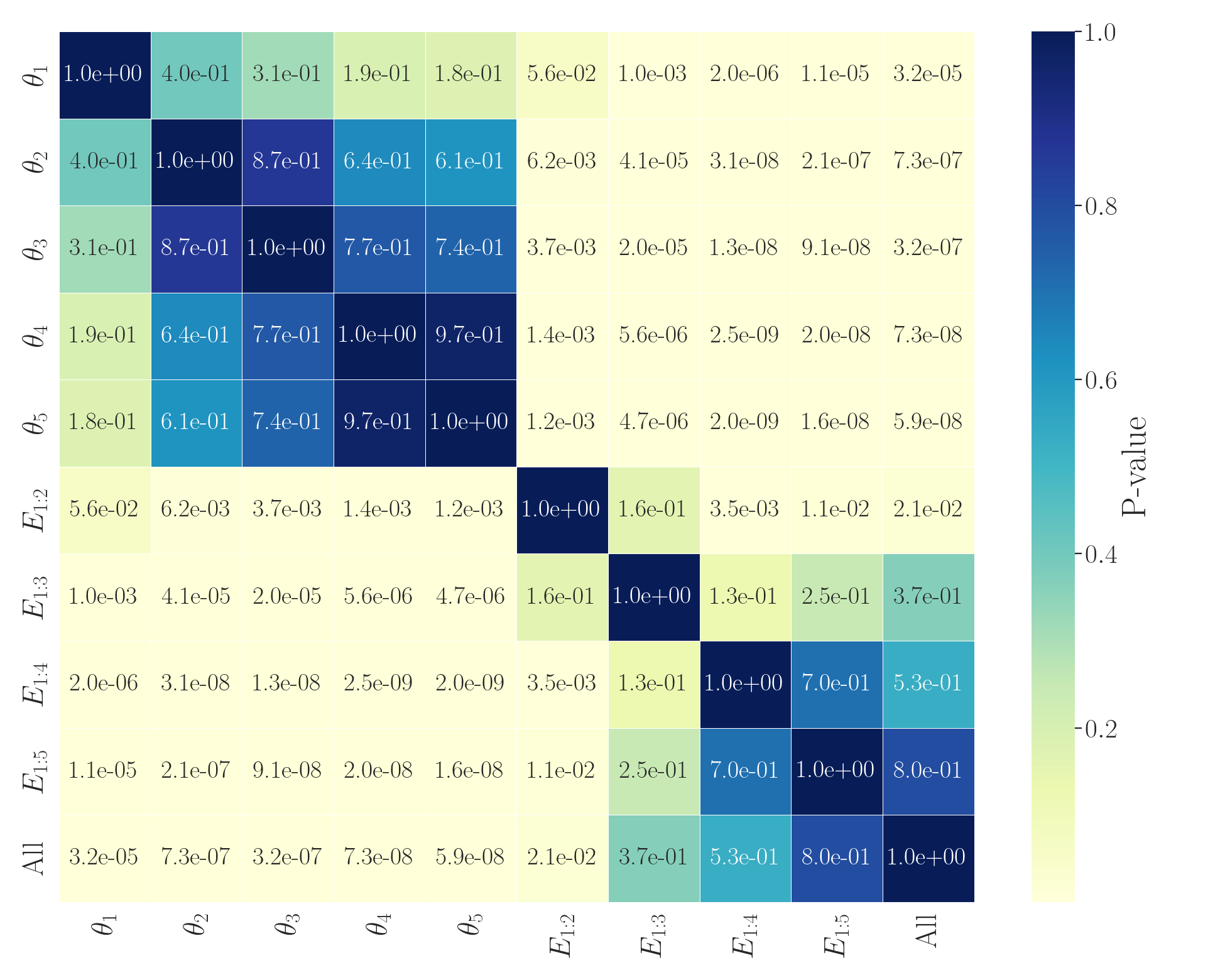}\label{fig:xgbnempd_AUC}}
		%\end{comment}
		\caption[The adjusted Conover's P-values for the obtained AUC values from 30 XGBoost runs.]{The results of the Conover post-hoc test on testing data’s AUC obtained from 30 XGBoost runs.}
		
		\label{fig:xgbnem_AUC}
	\end{figure*}
	\FloatBarrier
	%%%%%%%%%%%%%%%%%%%%%%%%%%%%%%%%%%%%%%%%%%%%%%%%%%%%%%%%%%%%%%%%%%%%%%%%%%%%%%

	%%%%%%%%%%%%%%%%%%%%%%%%%%%%%%%%%%%%%%%%%%%%%%%%%%%%%%%%%%%%%%%%%%%%%%%%%%%%%%
	\begin{figure*}[htbp] 
		\centering
		%\begin{comment}
		% First row
		\subfloat[APSF]{\includegraphics[width=0.24\textwidth]{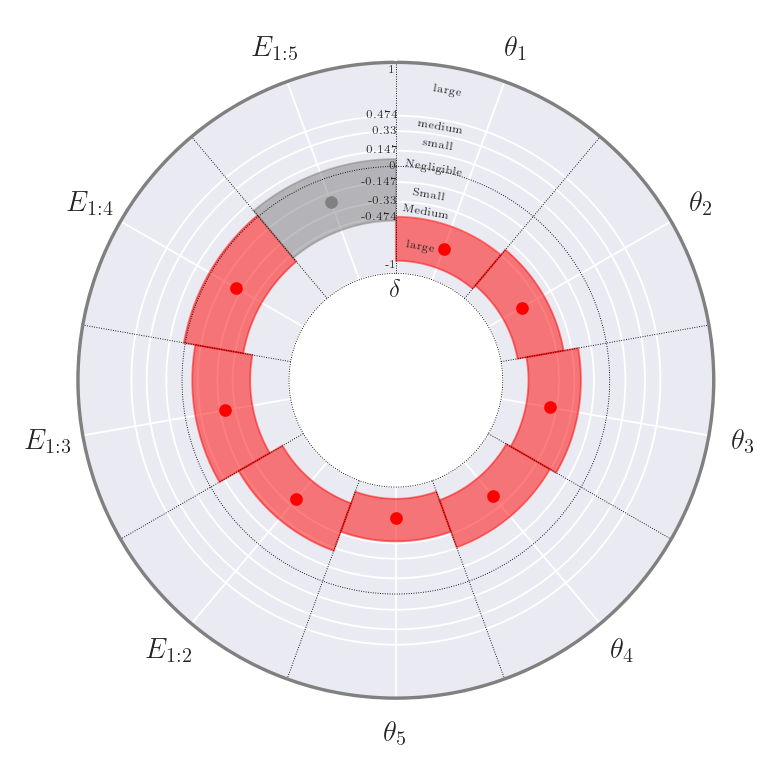}\label{fig:xgbcliffapsf_AUC}}%
		\hfill
		\subfloat[ARWPM]{\includegraphics[width=0.24\textwidth]{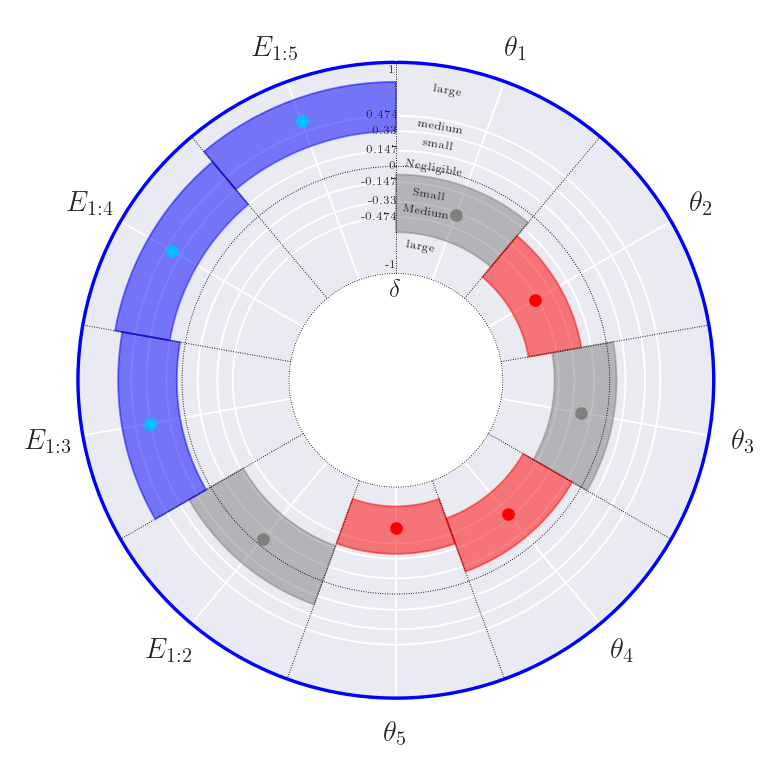}\label{fig:xgbcliffarwpm_AUC}}%
		\hfill
		\subfloat[GECR]{\includegraphics[width=0.24\textwidth]{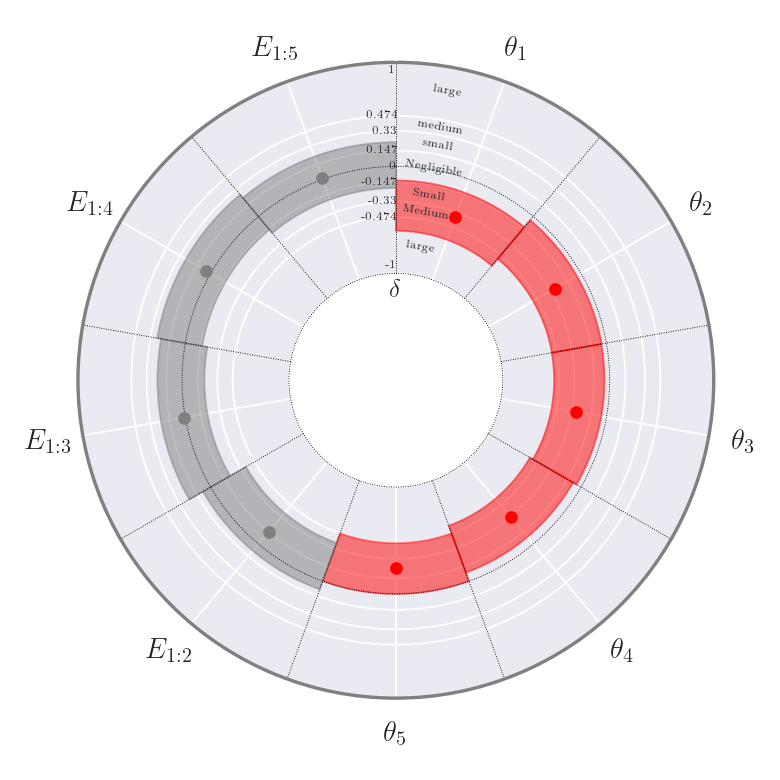}\label{fig:xgbcliffgecr_AUC}}%
		\hfill
		\subfloat[GFE]{\includegraphics[width=0.24\textwidth]{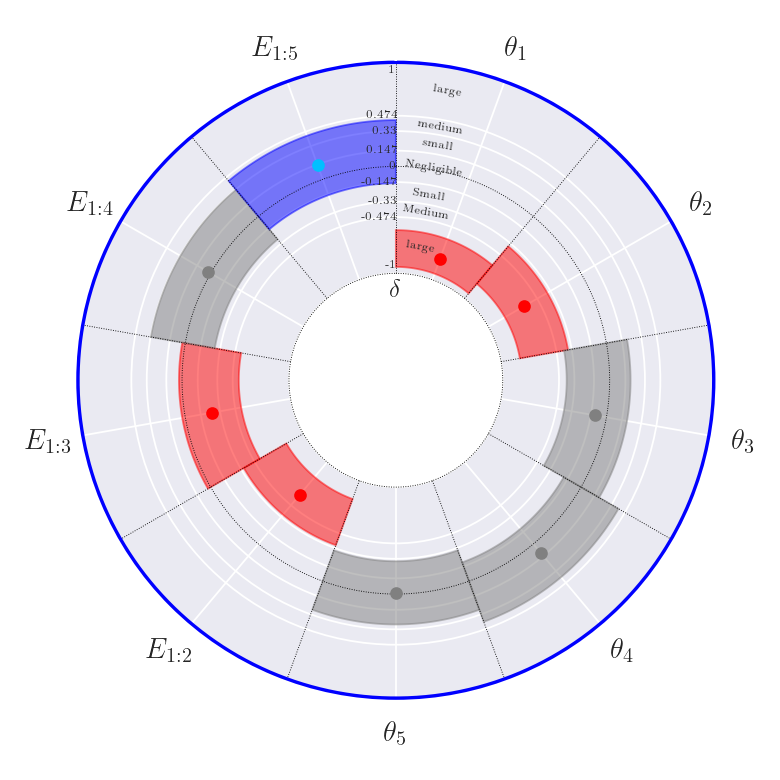}\label{fig:xgbcliffgfe_AUC}}
		
		% Second row
		\subfloat[GSAD]{\includegraphics[width=0.24\textwidth]{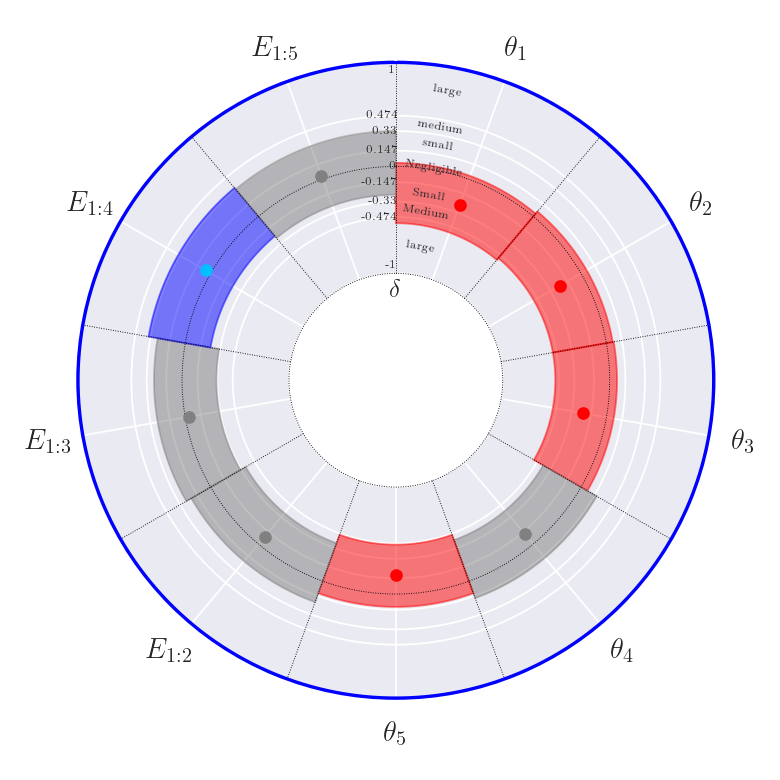}\label{fig:xgbcliffgsad_AUC}}%
		\hfill
		\subfloat[HAPT]{\includegraphics[width=0.24\textwidth]{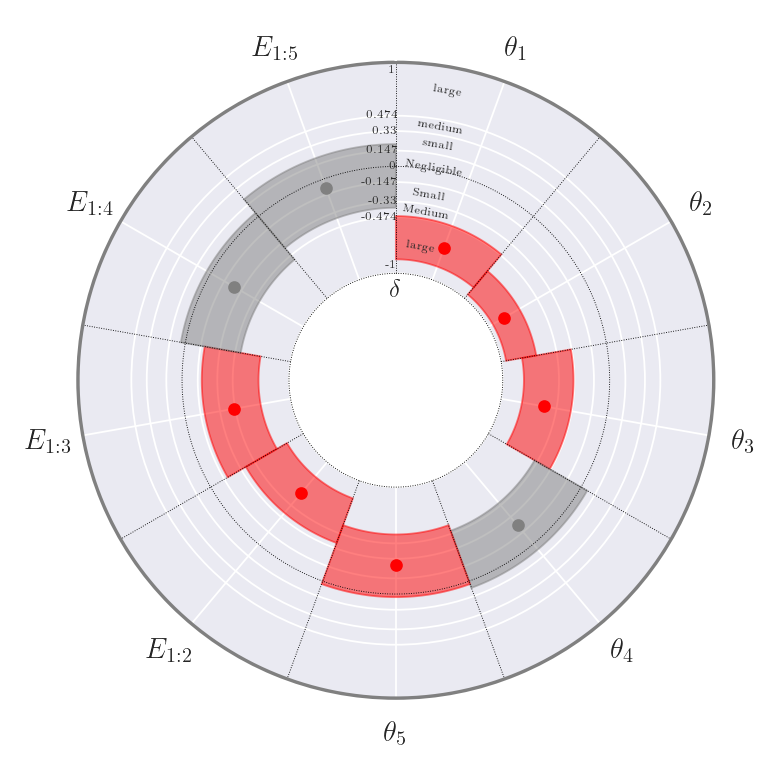}\label{fig:xgbcliffhapt_AUC}}%
		\hfill
		\subfloat[ISOLET]{\includegraphics[width=0.24\textwidth]{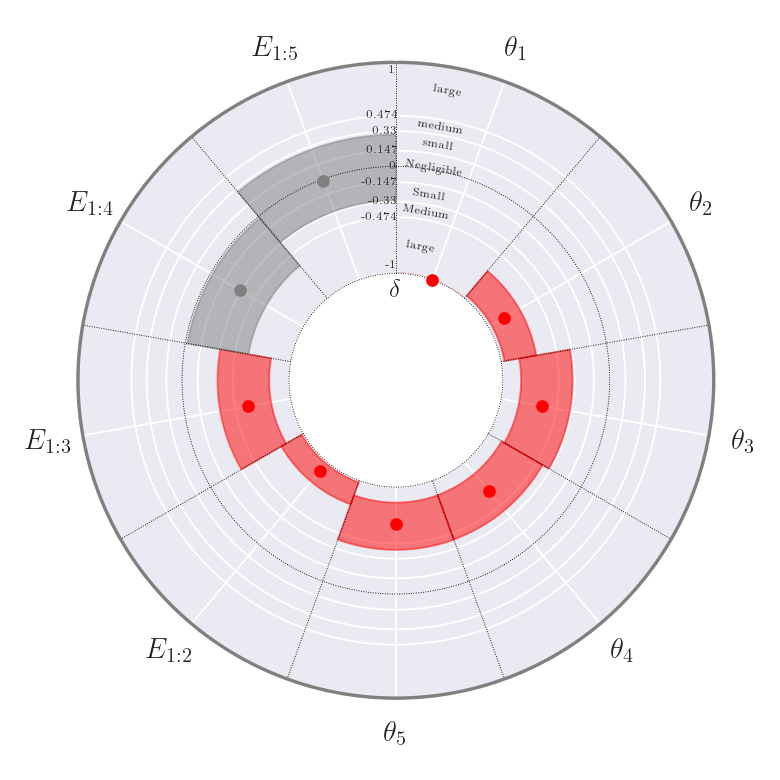}\label{fig:xgbcliffisolet_AUC}}%
		\hfill
		\subfloat[PD]{\includegraphics[width=0.24\textwidth]{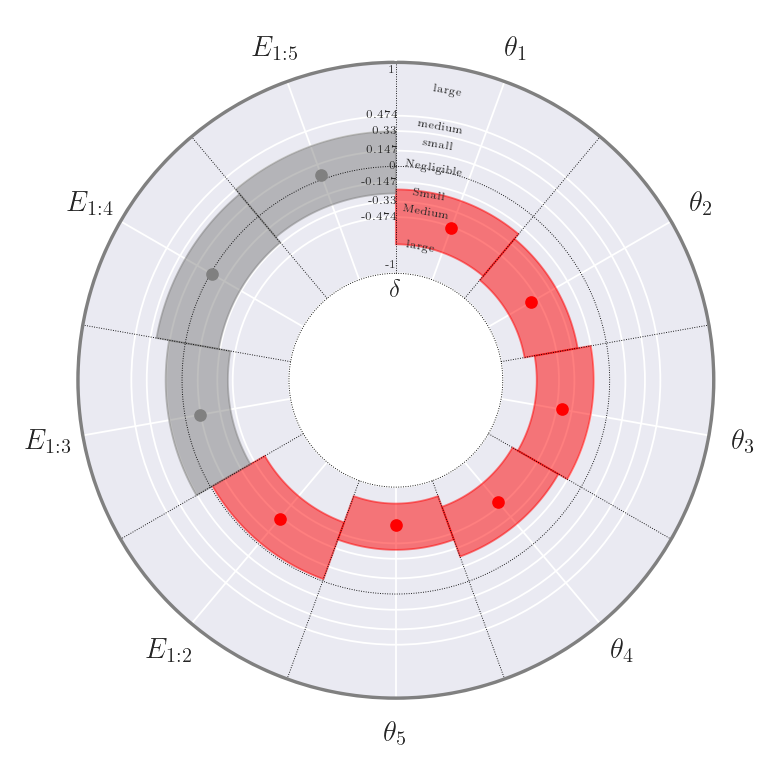}\label{fig:xgbcliffpd_AUC}}
		%\end{comment}
		\caption[The Cliff's $\delta$ effect size measure and its 95\% confidence intervals for the AUC values obtained from 30 XGBoost runs.]{Effect size analysis of test data AUC across 30 XGBoost runs using Cliff's $\delta$. Each point represents the actual value obtained, with segments denoting 95\% confidence intervals based on 10,000 bootstrap resamplings. The outer ring color visualizes the statistical significance: grey illustrates no significant difference (adjusted Friedman's P-value$>0.05$), while color indicates significant differences; blue indicates at least one view and/or ensemble outperforms the benchmark (adjusted Conover's p-value$ < 0.05$, Cliff's $\delta > 0$), and red signifies all views and ensembles underperform relative to the benchmark (adjusted Conover's p-value$ < 0.05$, Cliff's $\delta < 0$). Segment colors show performance difference against the benchmark: grey for no significant difference (adjusted Conover's p-value$  > 0.05$), blue for better performance (Cliff's $\delta > 0$), and red for worse performance (Cliff's $\delta < 0$).}
		
		\label{fig:xgbcliff_AUC}
	\end{figure*}
	%%%%%%%%%%%%%%%%%%%%%%%%%%%%%%%%%%%%%%%%%%%%%%%%%%%%%%%%%%%%%%%%%%%%%%%%%%%%%%
	
	%%%%%%%%%%%%%%%%%%%%%%%%%%%%%%%%%%%%%%%%%%%%%%%%%%%%%%%%%%%%%%%%%%%%%%%%%%%%%%
	\begin{table*}[htbp]
		\centering
		\caption[The results of Friedman and Conover tests and Cliff's $\delta$ analysis for the AUC values obtained from 30 XGBoost runs.]{Statistical comparison of AUC results for testing data obtained from XGBoost runs. W, T, and L denote win, tie, and loss based on adjusted Friedman and Conover's p-values. Effect sizes are calculated using Cliff's Delta method and are categorized as negligible, small, medium, or large.}
		\label{tab:xgbauc}
		\resizebox{\linewidth}{!}{%
			\begin{tabular}{c|ccccccccc}
				\hline
				\multicolumn{10}{c}{XGBoost's AUC}\\
				\hline
				Dataset & $\theta_1$ & $\theta_2$ & $\theta_3$ & $\theta_4$ & $\theta_5$ & $E_{1:2}$ & $E_{1:3}$ & $E_{1:4}$ & $E_{1:5}$ \\
				\hline
				APSF  & L (large) & L (large) & L (large) & L (large) & L (large) & L (large) & L (medium) & L (small) & T (small) \\
				ARWPM  & T (medium) & L (large) & T (small) & L (medium) & L (large) & T (negligible) & W (medium) & W (medium) & W (large) \\
				GECR  & L (medium) & L (small) & L (small) & L (small) & L (small) & T (small) & T (negligible) & T (negligible) & T (negligible) \\
				GFE  & L (large) & L (large) & T (negligible) & T (negligible) & T (negligible) & L (large) & L (small) & T (negligible) & W (negligible) \\
				GSAD  & L (small) & L (small) & L (small) & T (negligible) & L (small) & T (negligible) & T (negligible) & W (negligible) & T (negligible) \\
				HAPT  & L (large) & L (large) & L (large) & T (small) & L (small) & L (large) & L (medium) & T (small) & T (negligible) \\
				ISOLET  & L (large) & L (large) & L (large) & L (large) & L (large) & L (large) & L (large) & T (small) & T (negligible) \\
				PD  & L (large) & L (large) & L (medium) & L (large) & L (large) & L (small) & T (negligible) & T (negligible) & T (negligible) \\
				\hline
				W - T - L  & 0 - 1 - 7 & 0 - 0 - 8 & 0 - 2 - 6 & 0 - 3 - 5 & 0 - 1 - 7 & 0 - 3 - 5 & 1 - 3 - 4 & 2 - 5 - 1 & 2 - 6 - 0 \\
				\hline
			\end{tabular}
		}
	\end{table*}
	\FloatBarrier
	%%%%%%%%%%%%%%%%%%%%%%%%%%%%%%%%%%%%%%%%%%%%%%%%%%%%%%%%%%%%%%%%%%%%%%%%%%%%%%
	
	%\subsection{The Log-Loss results for XGBoost}
	%\label{ssub:xgbloss}
	% XGB: Log-Loss
	%%%%%%%%%%%%%%%%%%%%%%%%%%%%%%%%%%%%%%%%%%%%%%%%%%%%%%%%%%%%%%%%%%%%%%%%%%%%%%
	\begin{figure*}[t] 
		\centering
		%\begin{comment}
		% First row
		\subfloat[APSF]{\includegraphics[width=0.24\textwidth]{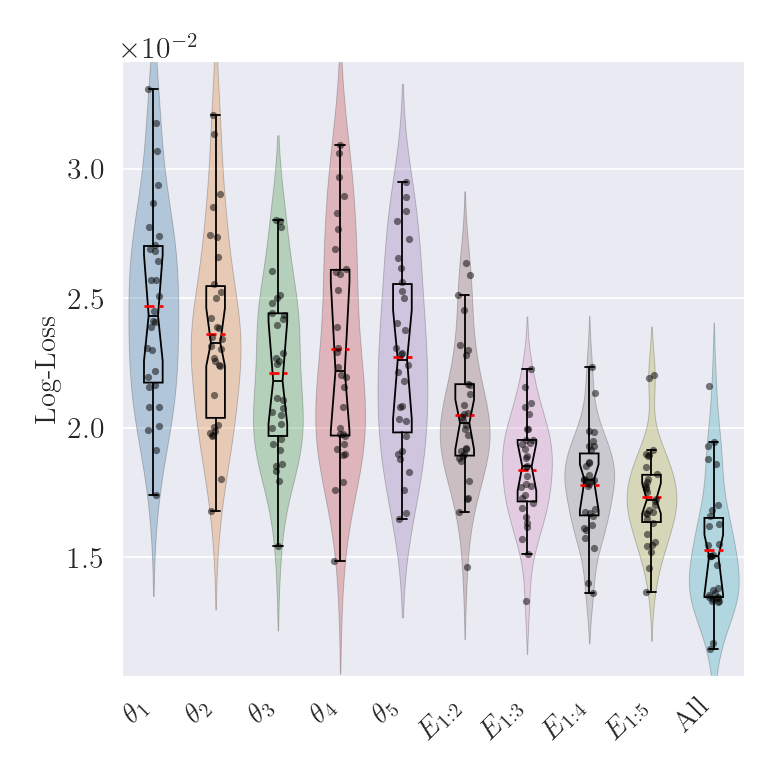}\label{fig:xgbapsf_Loss}}%
		\hfill
		\subfloat[ARWPM]{\includegraphics[width=0.24\textwidth]{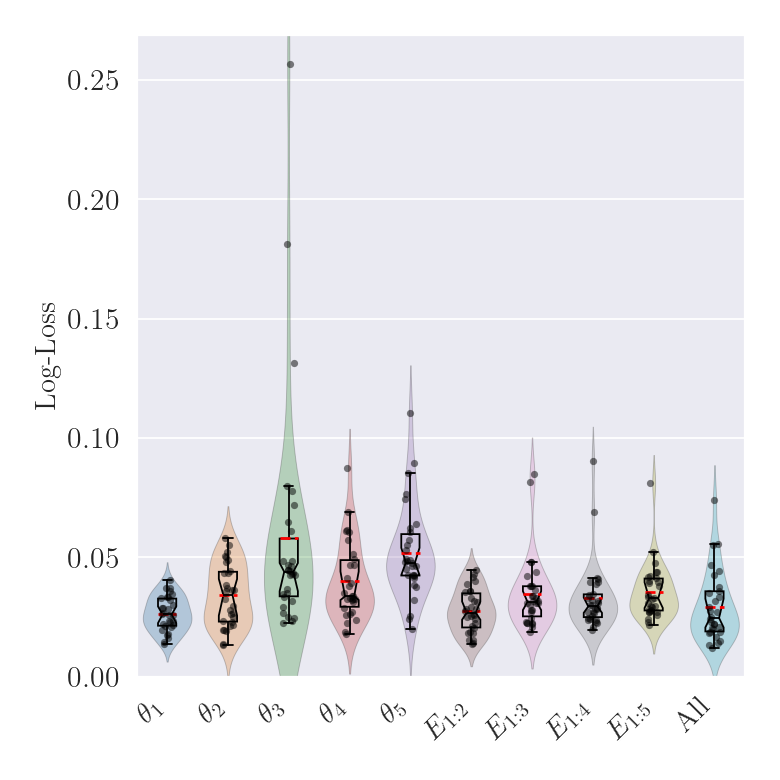}\label{fig:xgbarwpm_Loss}}%
		\hfill
		\subfloat[GECR]{\includegraphics[width=0.24\textwidth]{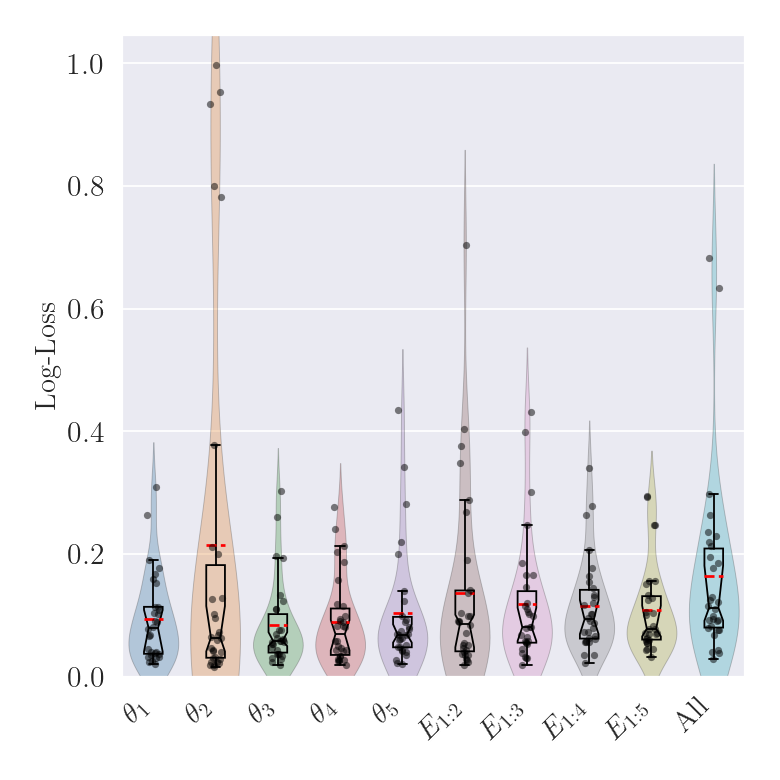}\label{fig:xgbgecr_Loss}}%
		\hfill
		\subfloat[GFE]{\includegraphics[width=0.24\textwidth]{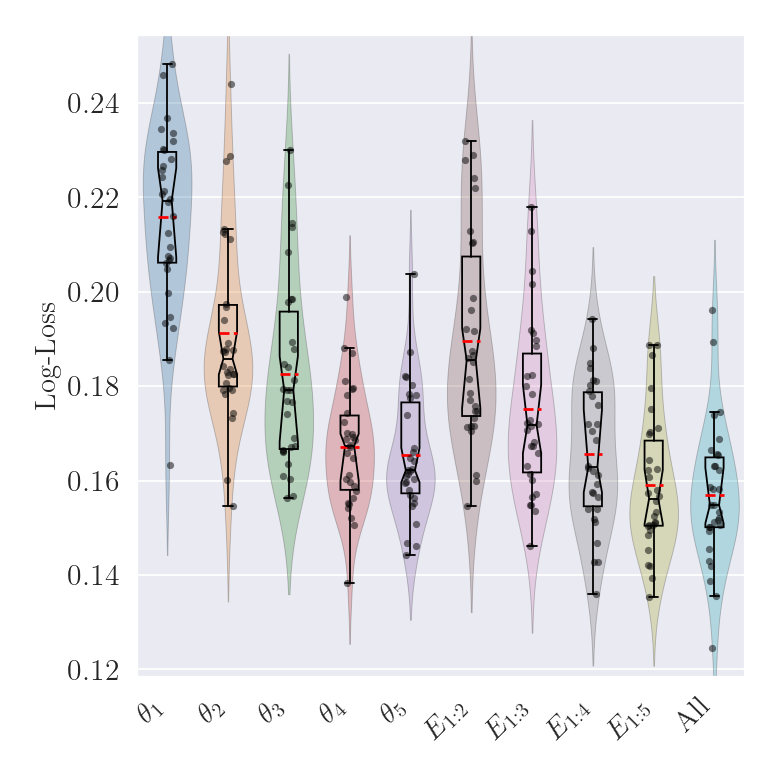}\label{fig:xgbgfe_Loss}}
		
		% Second row
		\subfloat[GSAD]{\includegraphics[width=0.24\textwidth]{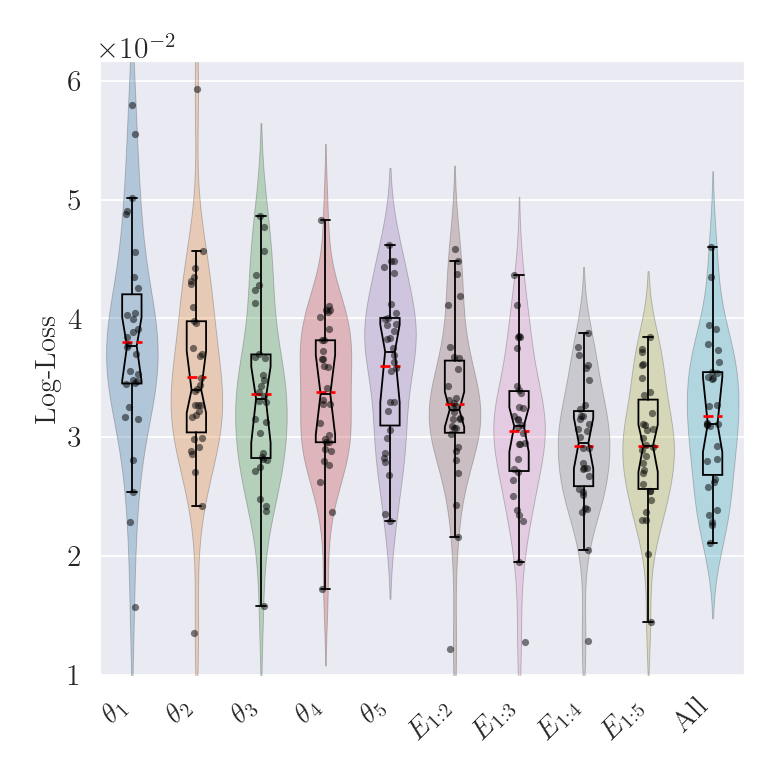}\label{fig:fpgsad_Loss}}%
		\hfill
		\subfloat[HAPT]{\includegraphics[width=0.24\textwidth]{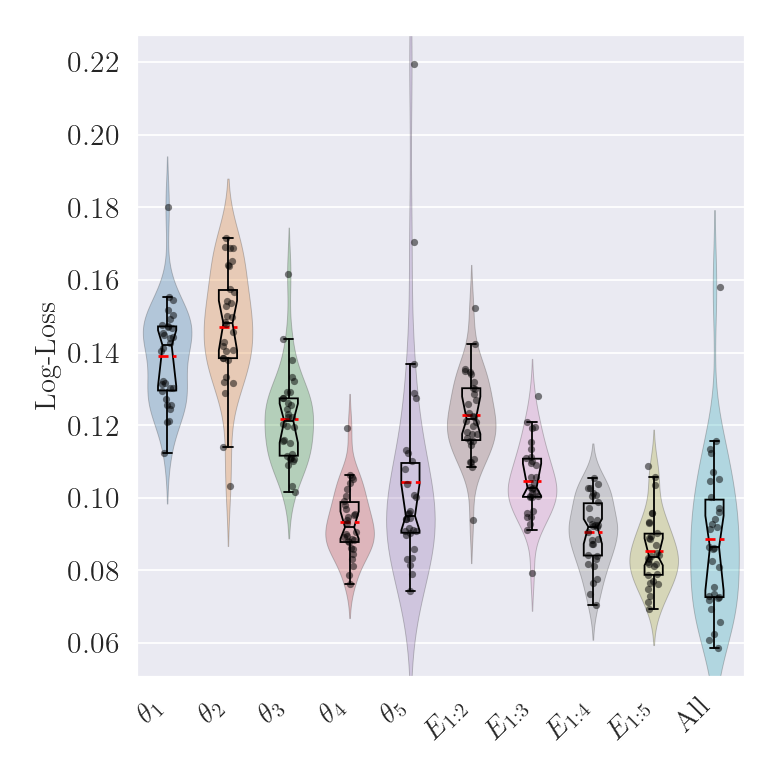}\label{fig:xgbhapt_Loss}}%
		\hfill
		\subfloat[ISOLET]{\includegraphics[width=0.24\textwidth]{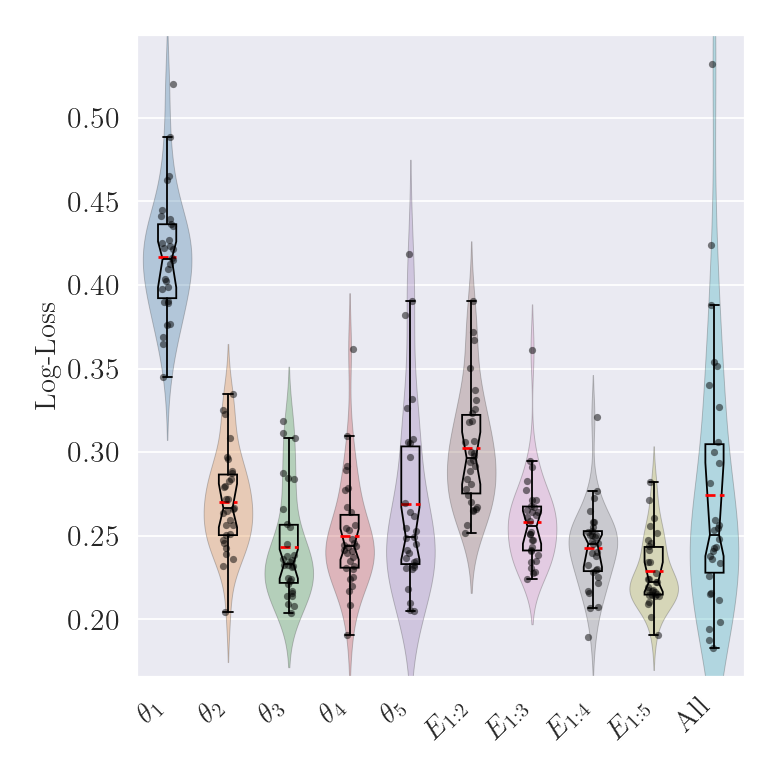}\label{fig:xgbisolet_Loss}}%
		\hfill
		\subfloat[PD]{\includegraphics[width=0.24\textwidth]{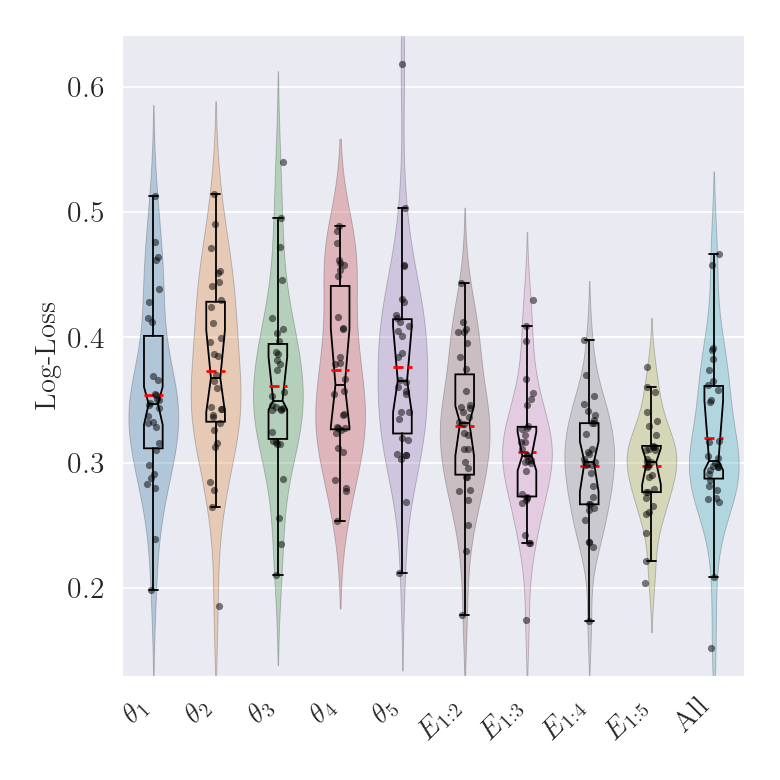}\label{fig:xgbpd_Loss}}
		%\end{comment}
		\caption[The distribution of the obtained Log-Loss values for 30 XGBoost runs.]{The raincloud plot of Log-Loss results obtained from 30 XGBoost runs.}
		
		\label{fig:xgb_Loss}
	\end{figure*}
	%%%%%%%%%%%%%%%%%%%%%%%%%%%%%%%%%%%%%%%%%%%%%%%%%%%%%%%%%%%%%%%%%%%%%%%%%%%%%%
	
	%%%%%%%%%%%%%%%%%%%%%%%%%%%%%%%%%%%%%%%%%%%%%%%%%%%%%%%%%%%%%%%%%%%%%%%%%%%%%%
	\begin{figure*}[t] 
		\centering
		%\begin{comment}
		% First row
		\subfloat[APSF]{\includegraphics[width=0.24\textwidth]{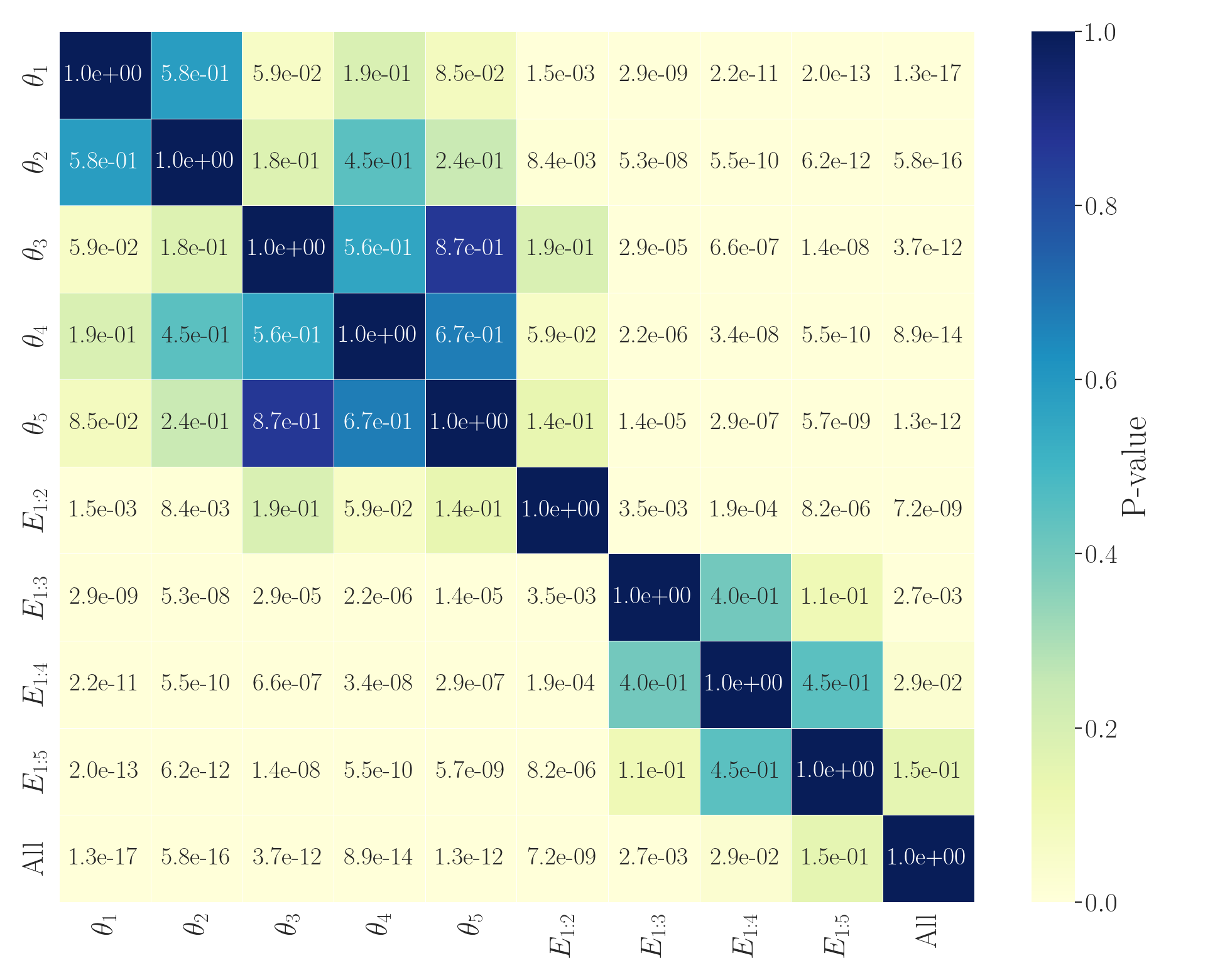}\label{fig:xgbnemapsf_Loss}}%
		\hfill
		\subfloat[ARWPM]{\includegraphics[width=0.24\textwidth]{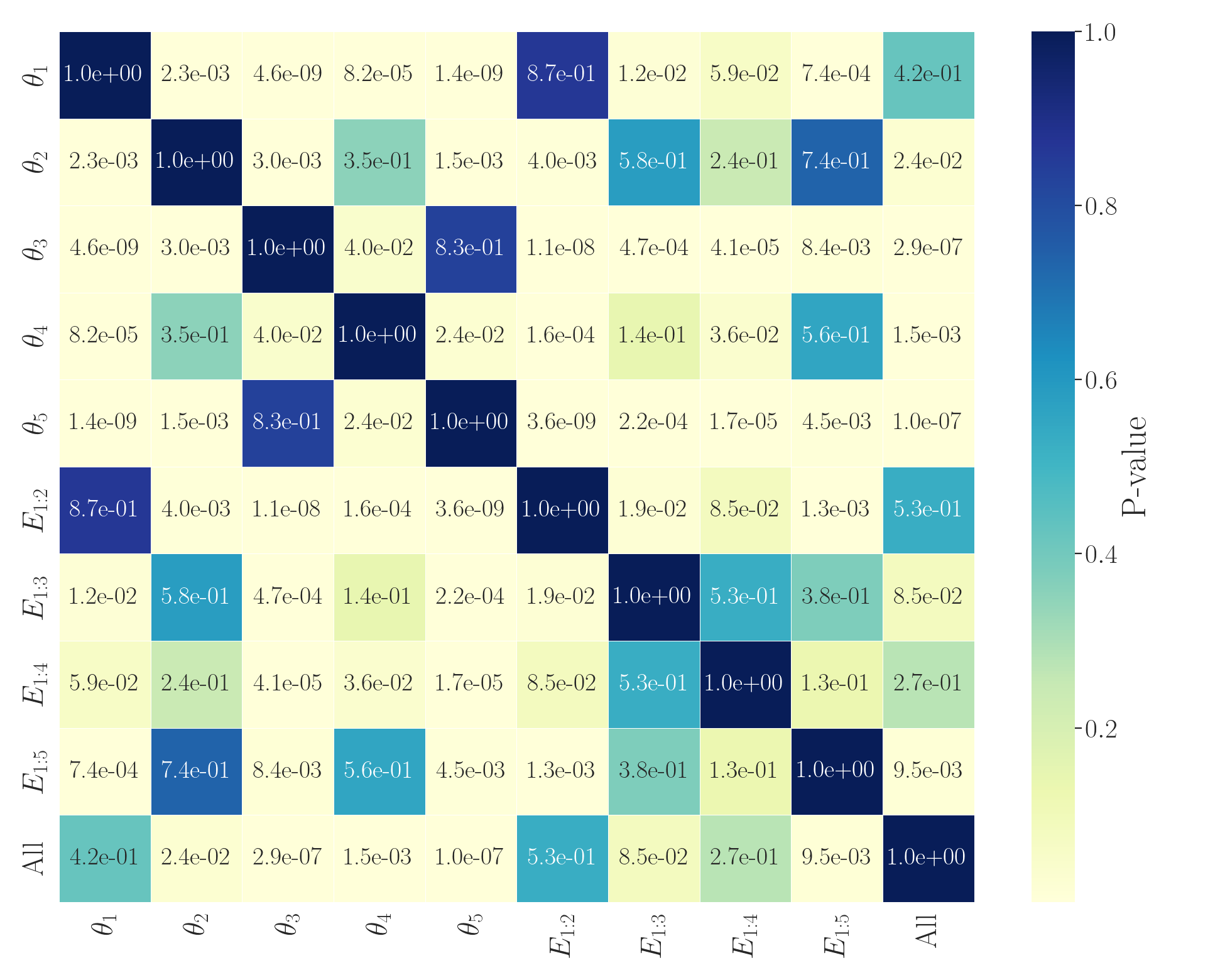}\label{fig:xgbnemarwpm_Loss}}%
		\hfill
		\subfloat[GECR]{\includegraphics[width=0.24\textwidth]{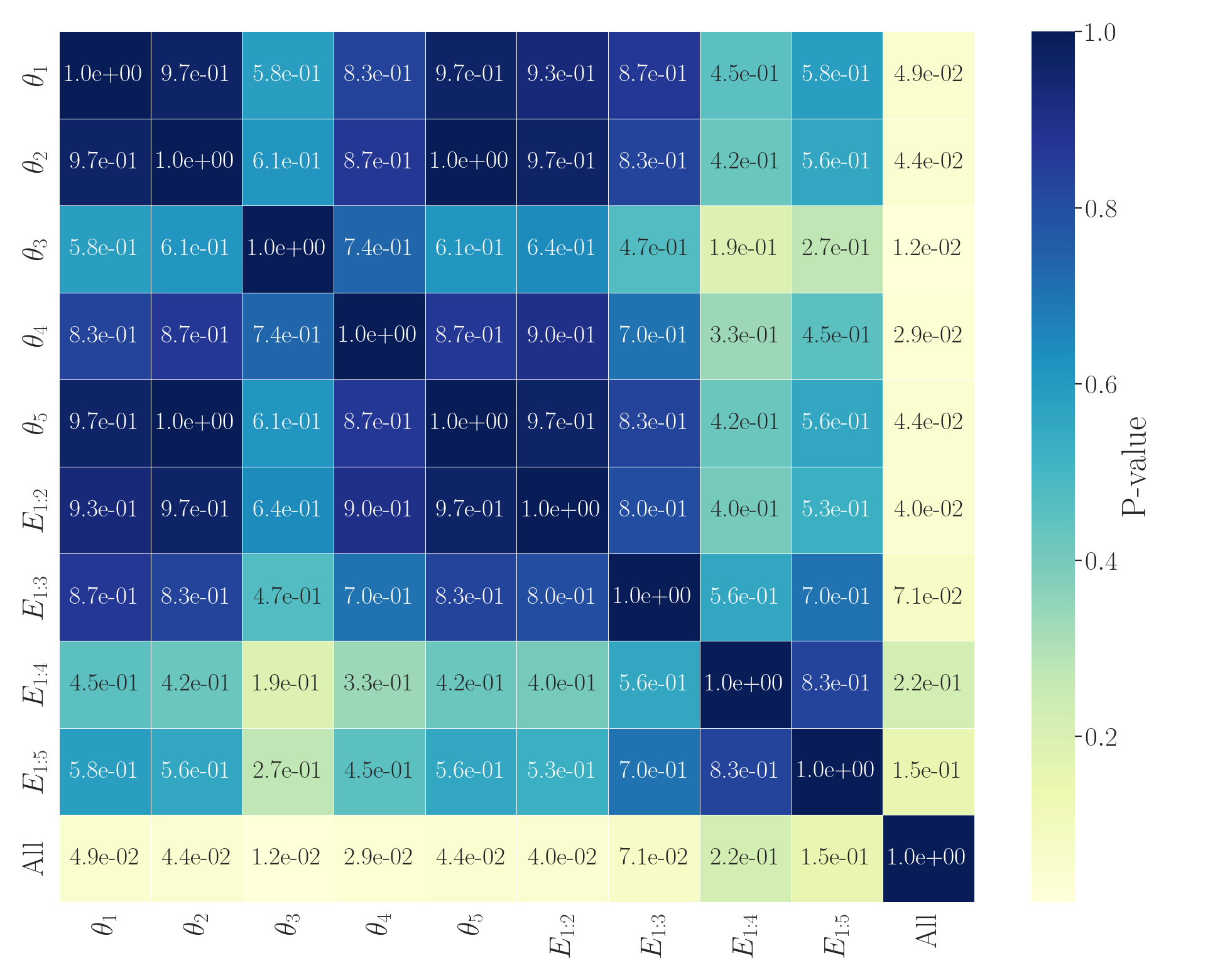}\label{fig:xgbnemgecr_Loss}}%
		\hfill
		\subfloat[GFE]{\includegraphics[width=0.24\textwidth]{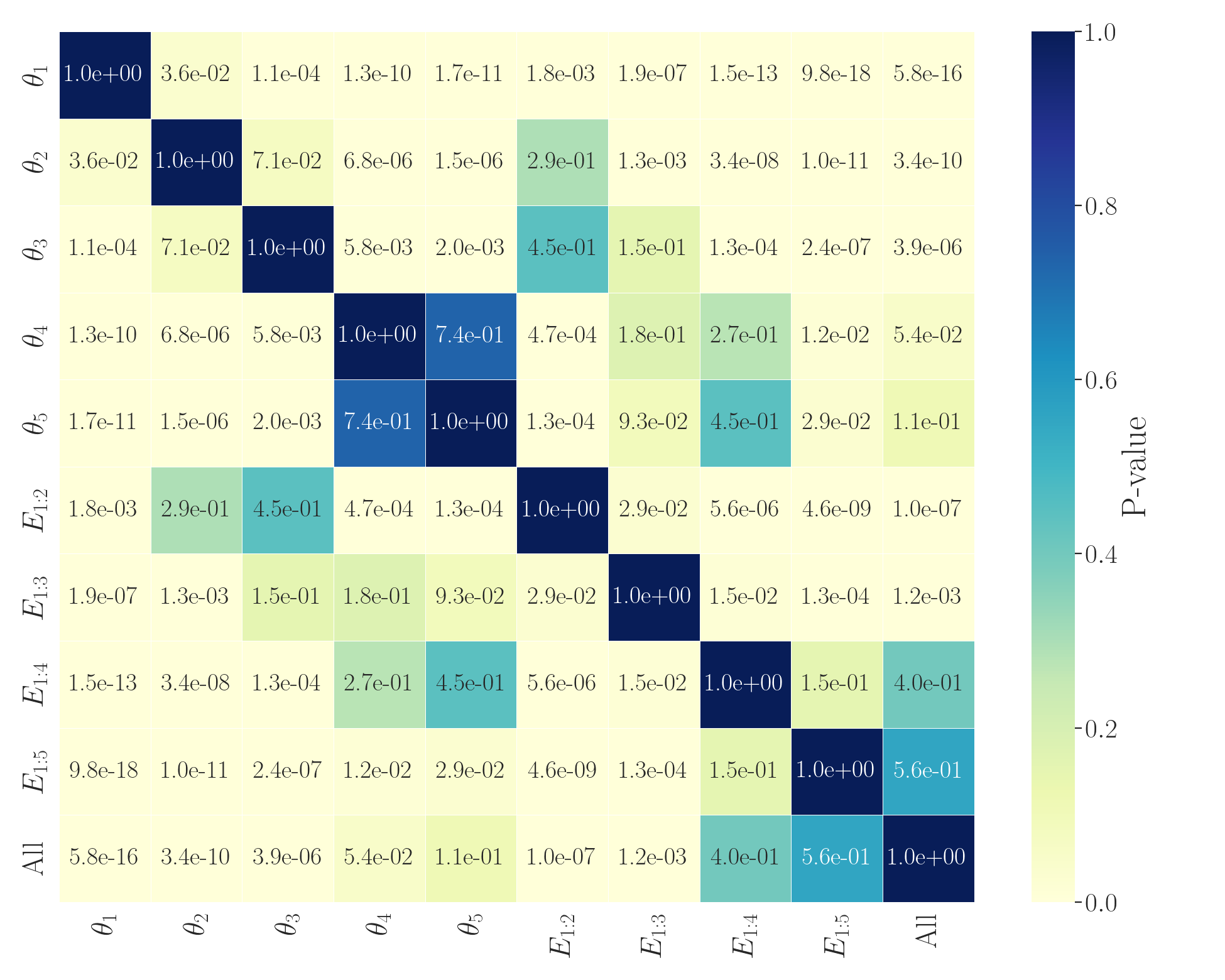}\label{fig:xgbnemgfe_Loss}}
		
		% Second row
		\subfloat[GSAD]{\includegraphics[width=0.24\textwidth]{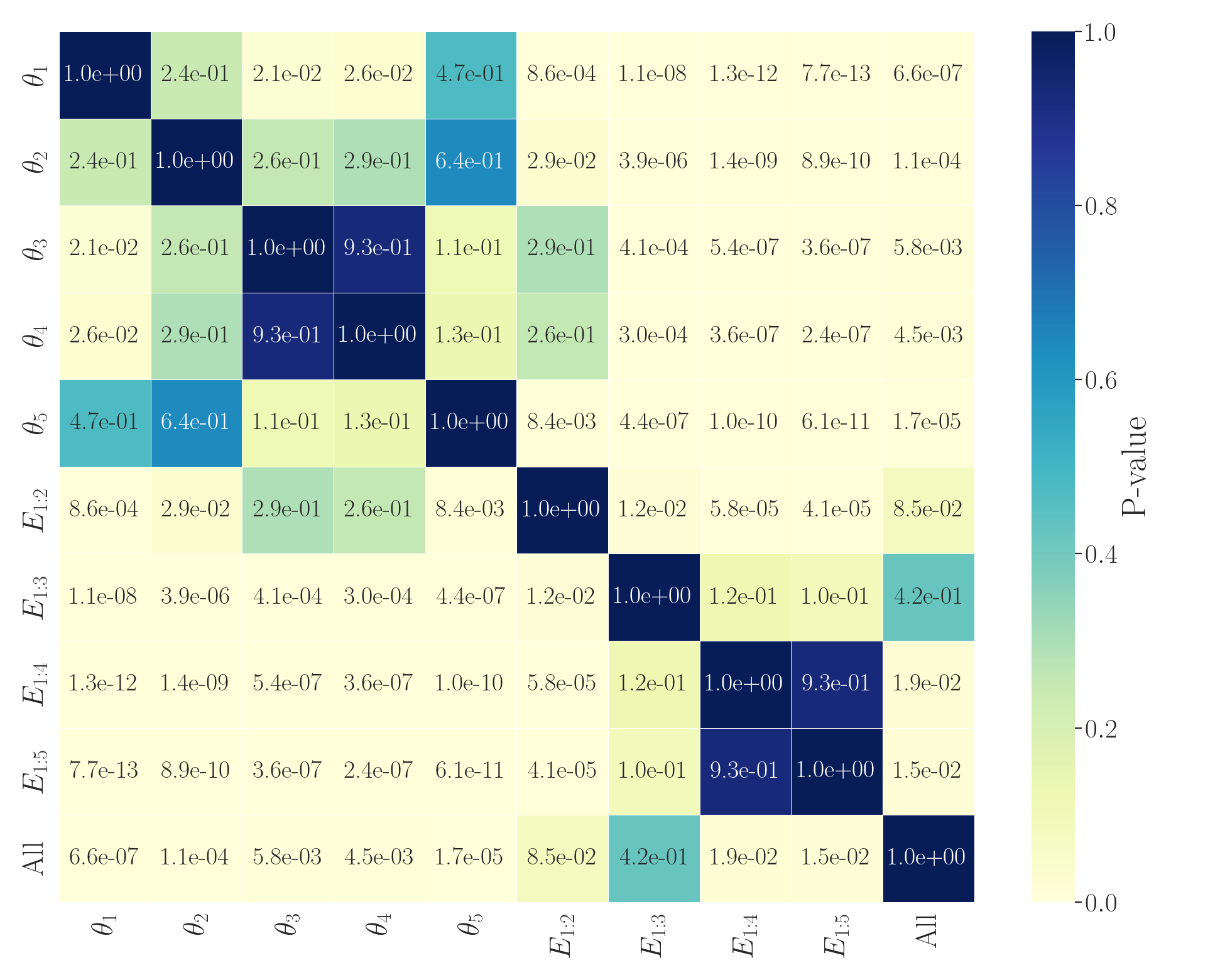}\label{fig:xgbnemgsad_Loss}}%
		\hfill
		\subfloat[HAPT]{\includegraphics[width=0.24\textwidth]{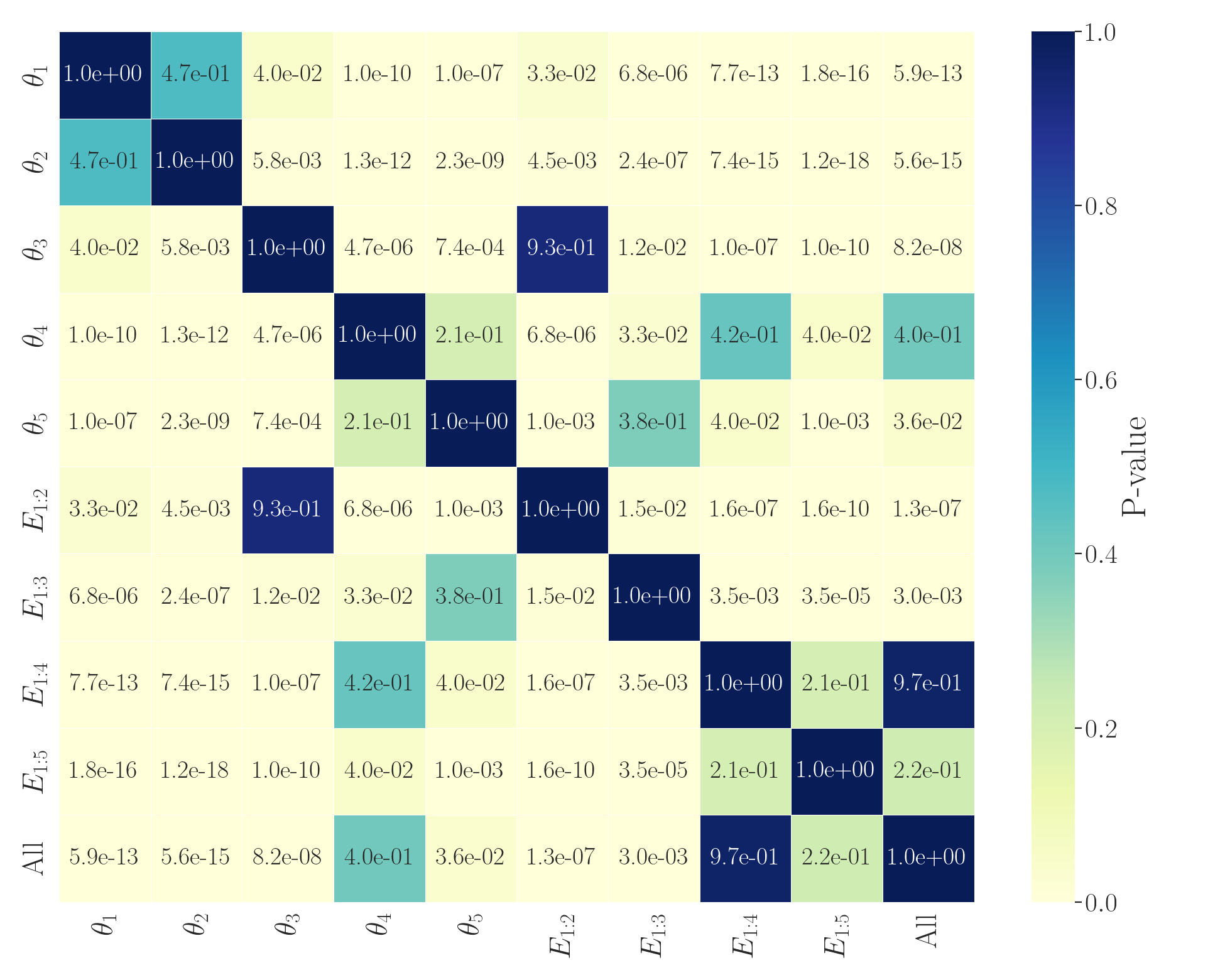}\label{fig:xgbnemhapt_Loss}}%
		\hfill
		\subfloat[ISOLET]{\includegraphics[width=0.24\textwidth]{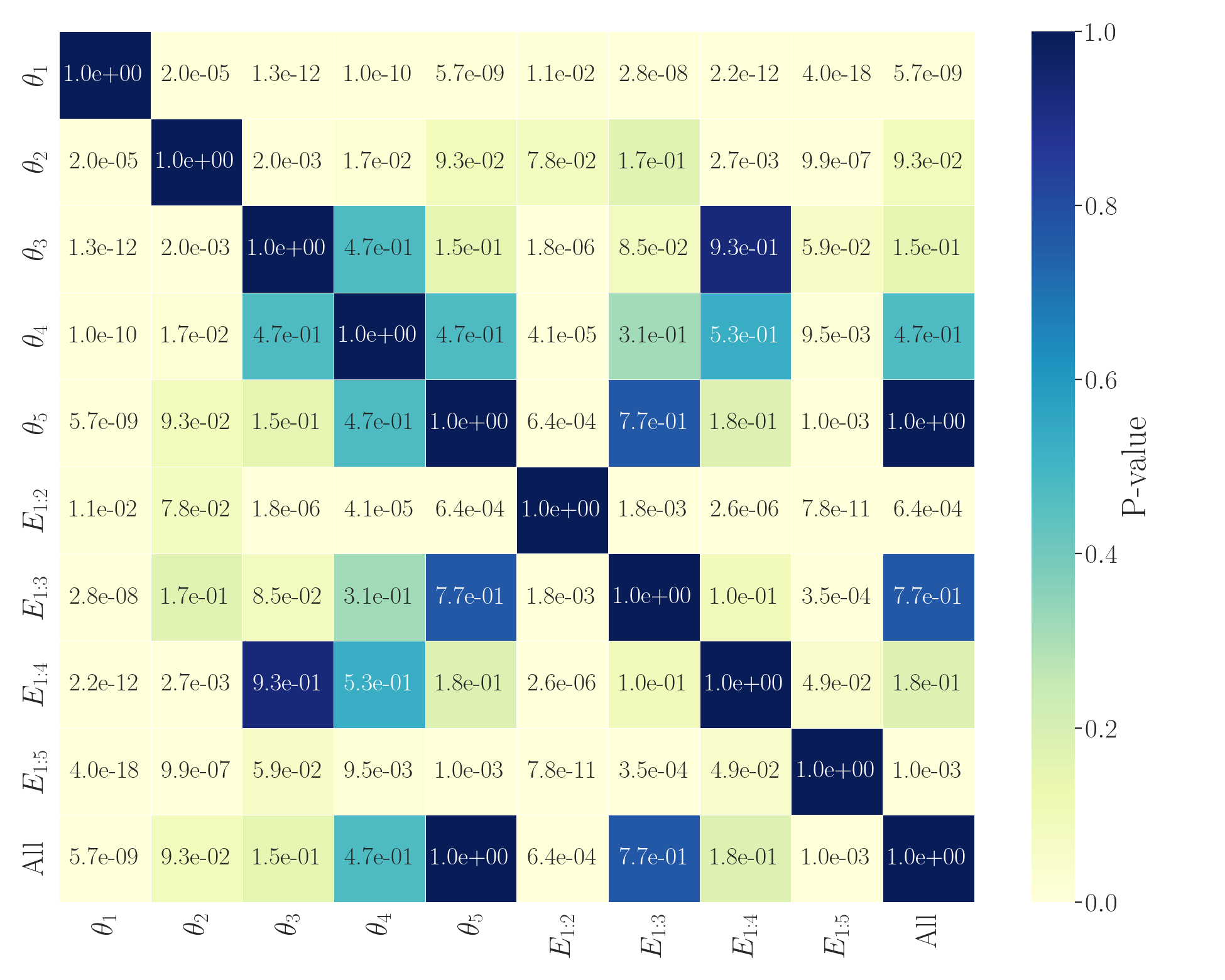}\label{fig:xgbnemisolet_Loss}}%
		\hfill
		\subfloat[PD]{\includegraphics[width=0.24\textwidth]{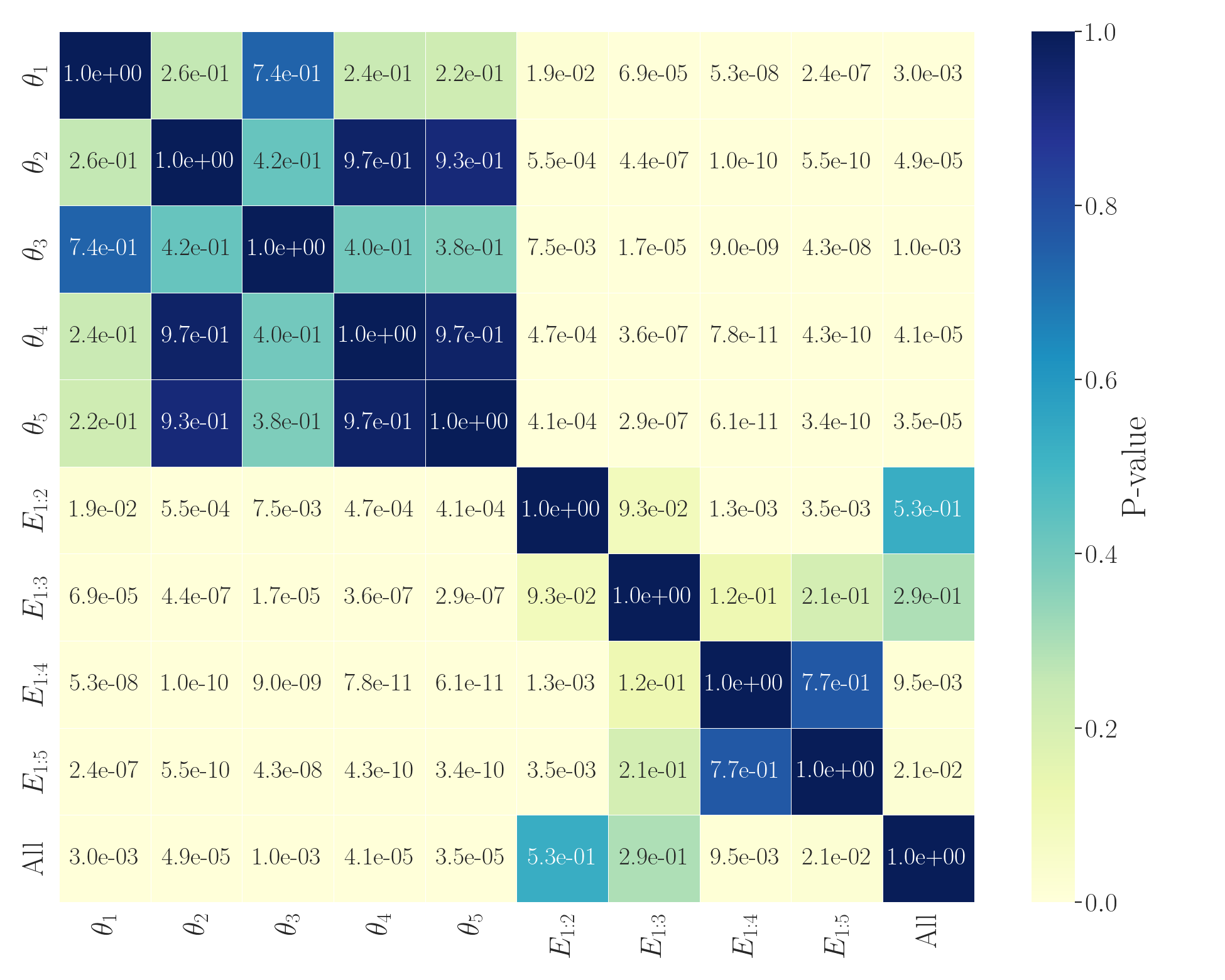}\label{fig:xgbnempd_Loss}}
		%\end{comment}
		\caption[The adjusted Conover's P-values for the obtained Log-Loss values from 30 XGBoost runs.]{The results of the Conover post-hoc test on testing data’s Log-Loss obtained from 30 XGBoost runs.}
		
		\label{fig:xgbnem_Loss}
	\end{figure*}
	\FloatBarrier
	%%%%%%%%%%%%%%%%%%%%%%%%%%%%%%%%%%%%%%%%%%%%%%%%%%%%%%%%%%%%%%%%%%%%%%%%%%%%%%
	
	%%%%%%%%%%%%%%%%%%%%%%%%%%%%%%%%%%%%%%%%%%%%%%%%%%%%%%%%%%%%%%%%%%%%%%%%%%%%%%
	\begin{figure*}[htbp] 
		\centering
		%\begin{comment}
		% First row
		\subfloat[APSF]{\includegraphics[width=0.24\textwidth]{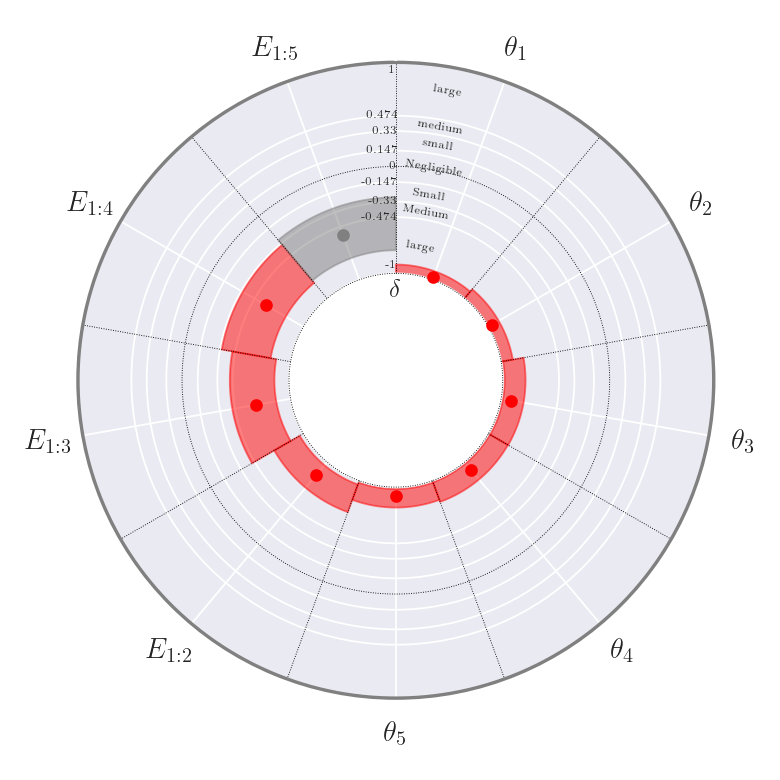}\label{fig:xgbcliffapsf_Loss}}%
		\hfill
		\subfloat[ARWPM]{\includegraphics[width=0.24\textwidth]{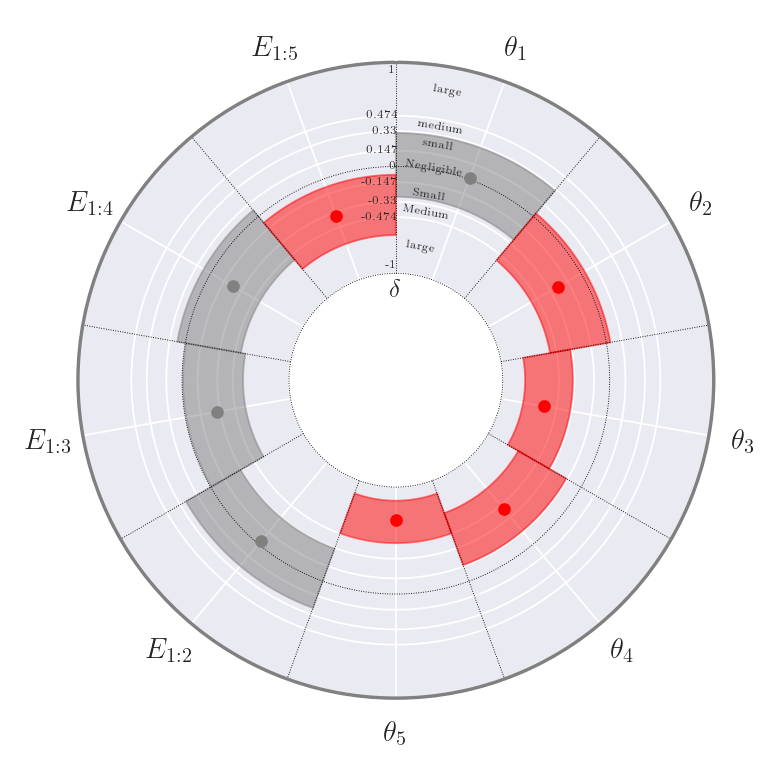}\label{fig:xgbcliffarwpm_Loss}}%
		\hfill
		\subfloat[GECR]{\includegraphics[width=0.24\textwidth]{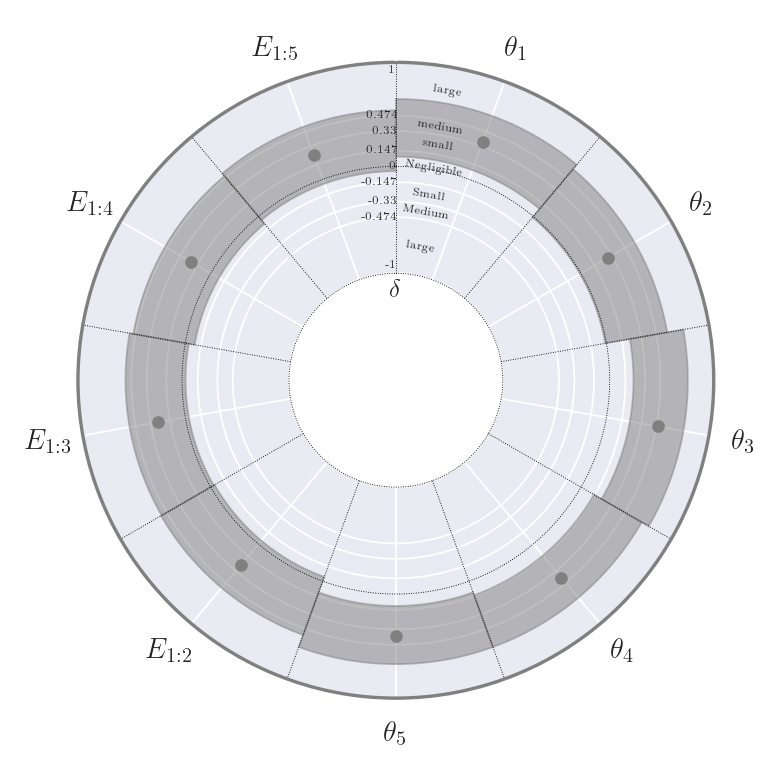}\label{fig:xgbcliffgecr_Loss}}%
		\hfill
		\subfloat[GFE]{\includegraphics[width=0.24\textwidth]{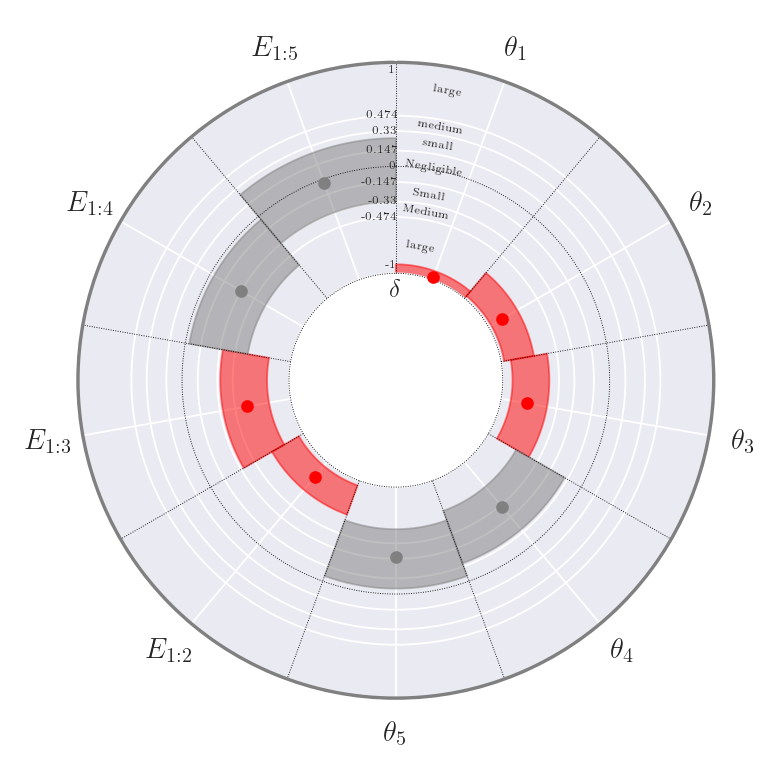}\label{fig:xgbcliffgfe_Loss}}
		
		% Second row
		\subfloat[GSAD]{\includegraphics[width=0.24\textwidth]{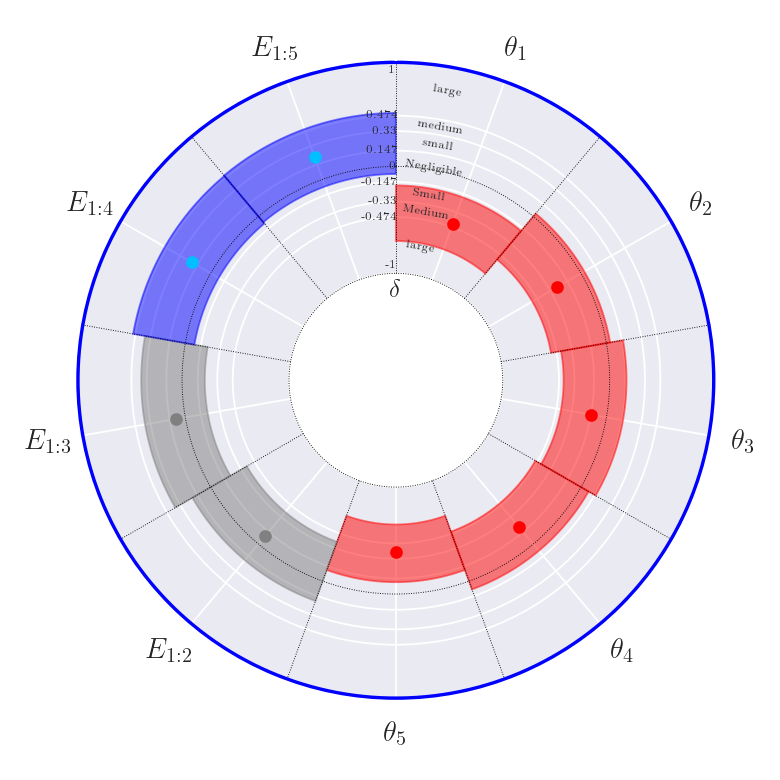}\label{fig:xgbcliffgsad_Loss}}%
		\hfill
		\subfloat[HAPT]{\includegraphics[width=0.24\textwidth]{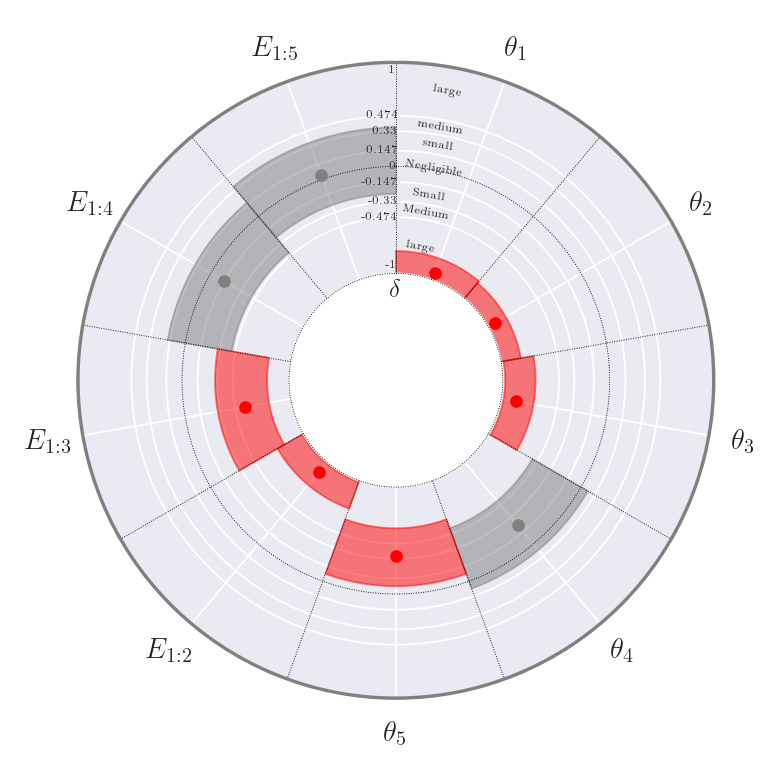}\label{fig:xgbcliffhapt_Loss}}%
		\hfill
		\subfloat[ISOLET]{\includegraphics[width=0.24\textwidth]{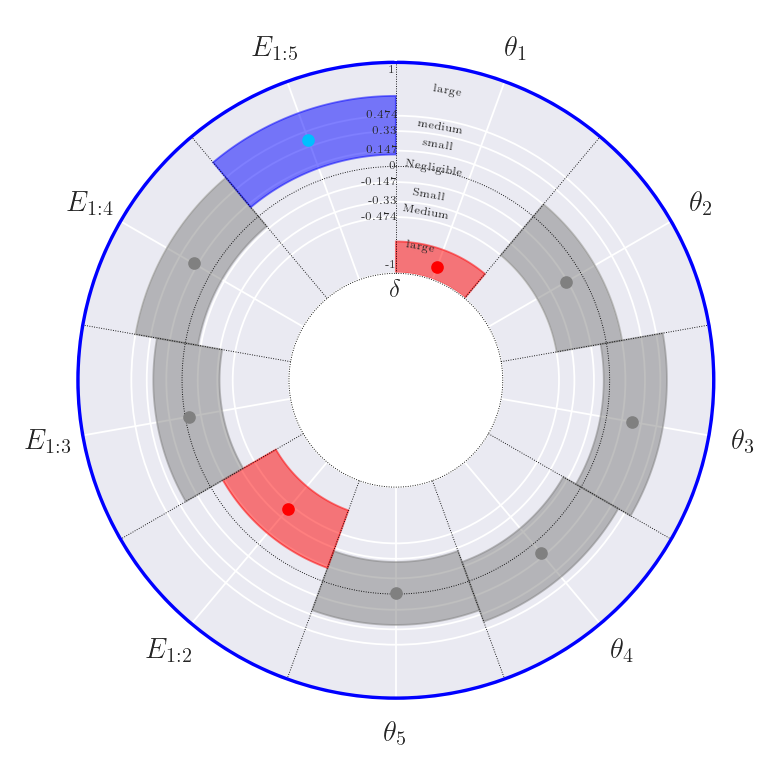}\label{fig:xgbcliffisolet_Loss}}%
		\hfill
		\subfloat[PD]{\includegraphics[width=0.24\textwidth]{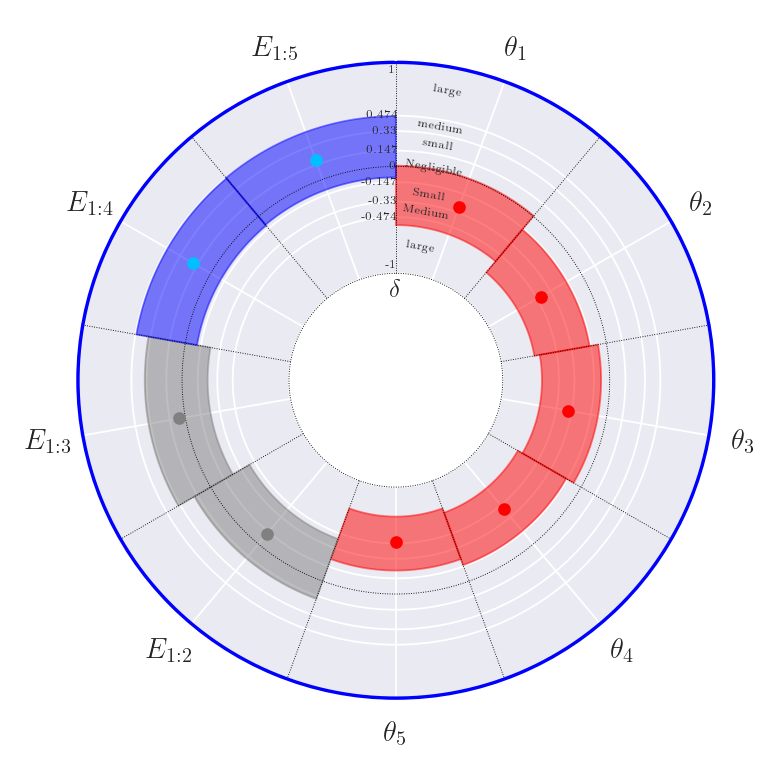}\label{fig:xgbcliffpd_Loss}}
		%\end{comment}
		\caption[The Cliff's $\delta$ effect size measure and its 95\% confidence intervals for the Log-Loss values obtained from 30 XGBoost runs.]{Effect size analysis of test data Log-Loss across 30 XGBoost runs using Cliff's $\delta$. Each point represents the actual value obtained, with segments denoting 95\% confidence intervals based on 10,000 bootstrap resamplings. The outer ring color visualizes the statistical significance: grey illustrates no significant difference (adjusted Friedman's P-value$>0.05$), while color indicates significant differences; blue indicates at least one view and/or ensemble outperforms the benchmark (adjusted Conover's p-value$ < 0.05$, Cliff's $\delta > 0$), and red signifies all views and ensembles underperform relative to the benchmark (adjusted Conover's p-value$ < 0.05$, Cliff's $\delta < 0$). Segment colors show performance difference against the benchmark: grey for no significant difference (adjusted Conover's p-value$  > 0.05$), blue for better performance (Cliff's $\delta > 0$), and red for worse performance (Cliff's $\delta < 0$).}
		
		\label{fig:xgbcliff_Loss}
	\end{figure*}
	%%%%%%%%%%%%%%%%%%%%%%%%%%%%%%%%%%%%%%%%%%%%%%%%%%%%%%%%%%%%%%%%%%%%%%%%%%%%%%
	
	%%%%%%%%%%%%%%%%%%%%%%%%%%%%%%%%%%%%%%%%%%%%%%%%%%%%%%%%%%%%%%%%%%%%%%%%%%%%%%
	\begin{table*}[htbp]
		\centering
		\caption[The results of Friedman and Conover tests and Cliff's $\delta$ analysis for the Log-Loss values obtained from 30 XGBoost runs.]{Statistical comparison of Log-Loss results for testing data obtained from XGBoost runs. W, T, and L denote win, tie, and loss based on adjusted Friedman and Conover's p-values. Effect sizes are calculated using Cliff's Delta method and are categorized as negligible, small, medium, or large.}
		\label{tab:xgbloss}
		\resizebox{\linewidth}{!}{%
			\begin{tabular}{c|ccccccccc}
				\hline
				\multicolumn{10}{c}{XGBoost's Log-Loss}\\
				\hline
				Dataset & $\theta_1$ & $\theta_2$ & $\theta_3$ & $\theta_4$ & $\theta_5$ & $E_{1:2}$ & $E_{1:3}$ & $E_{1:4}$ & $E_{1:5}$ \\
				\hline
				APSF  & L (large) & L (large) & L (large) & L (large) & L (large) & L (large) & L (large) & L (large) & T (large) \\
				ARWPM  & T (negligible) & L (small) & L (large) & L (medium) & L (large) & T (negligible) & T (small) & T (small) & L (medium) \\
				GECR  & T (medium) & T (small) & T (large) & T (medium) & T (medium) & T (small) & T (small) & T (small) & T (small) \\
				GFE  & L (large) & L (large) & L (large) & T (medium) & T (medium) & L (large) & L (large) & T (small) & T (negligible) \\
				GSAD  & L (medium) & L (small) & L (negligible) & L (small) & L (medium) & T (negligible) & T (negligible) & W (small) & W (small) \\
				HAPT  & L (large) & L (large) & L (large) & T (small) & L (medium) & L (large) & L (large) & T (negligible) & T (negligible) \\
				ISOLET  & L (large) & T (small) & T (small) & T (negligible) & T (negligible) & L (medium) & T (negligible) & T (small) & W (medium) \\
				PD  & L (small) & L (medium) & L (medium) & L (medium) & L (large) & T (negligible) & T (negligible) & W (small) & W (small) \\
				\hline
				W - T - L  & 0 - 2 - 6 & 0 - 2 - 6 & 0 - 2 - 6 & 0 - 4 - 4 & 0 - 3 - 5 & 0 - 4 - 4 & 0 - 5 - 3 & 2 - 5 - 1 & 3 - 4 - 1 \\
				\hline
			\end{tabular}
		}
	\end{table*}
	\FloatBarrier
	%%%%%%%%%%%%%%%%%%%%%%%%%%%%%%%%%%%%%%%%%%%%%%%%%%%%%%%%%%%%%%%%%%%%%%%%%%%%%%

	%\subsection{The MEC results for XGBoost}
	%\label{ssub:xgbmec}
	% XGB: MEC
	%%%%%%%%%%%%%%%%%%%%%%%%%%%%%%%%%%%%%%%%%%%%%%%%%%%%%%%%%%%%%%%%%%%%%%%%%%%%%%
	\begin{figure*}[t] 
		\centering
		%\begin{comment}
		% First row
		\subfloat[APSF]{\includegraphics[width=0.24\textwidth]{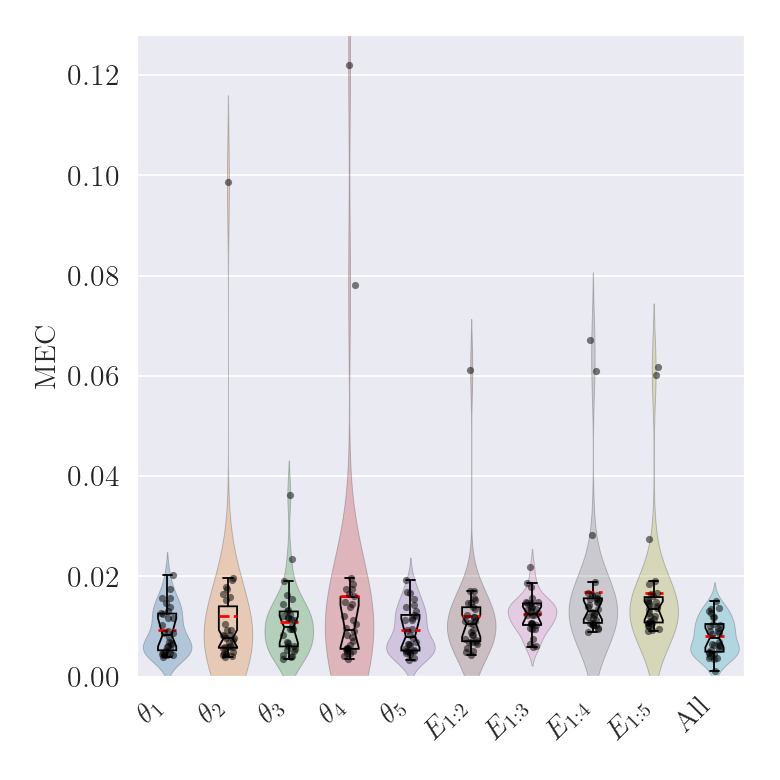}\label{fig:xgbapsf_MEC}}%
		\hfill
		\subfloat[ARWPM]{\includegraphics[width=0.24\textwidth]{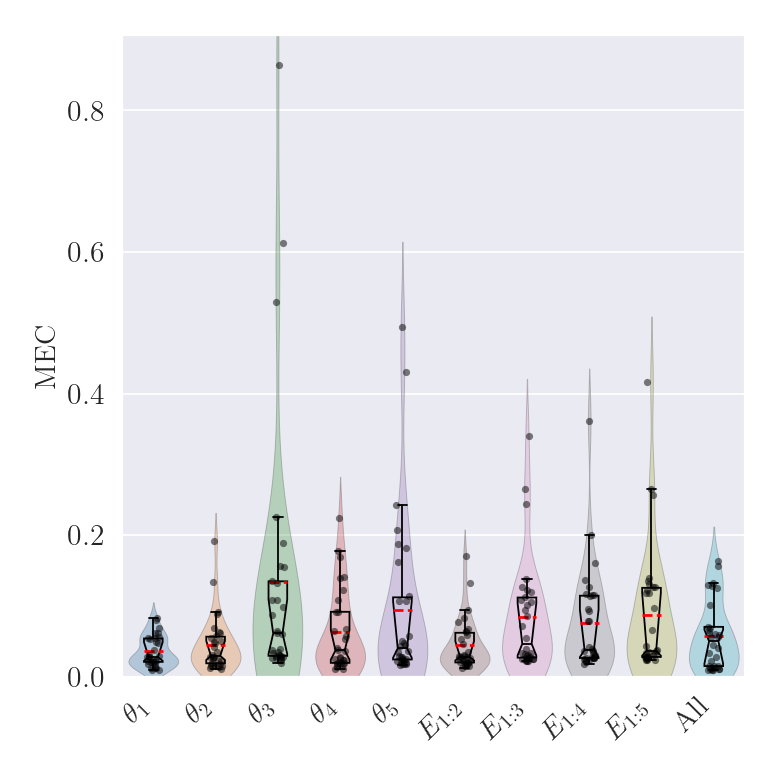}\label{fig:xgbarwpm_MEC}}%
		\hfill
		\subfloat[GECR]{\includegraphics[width=0.24\textwidth]{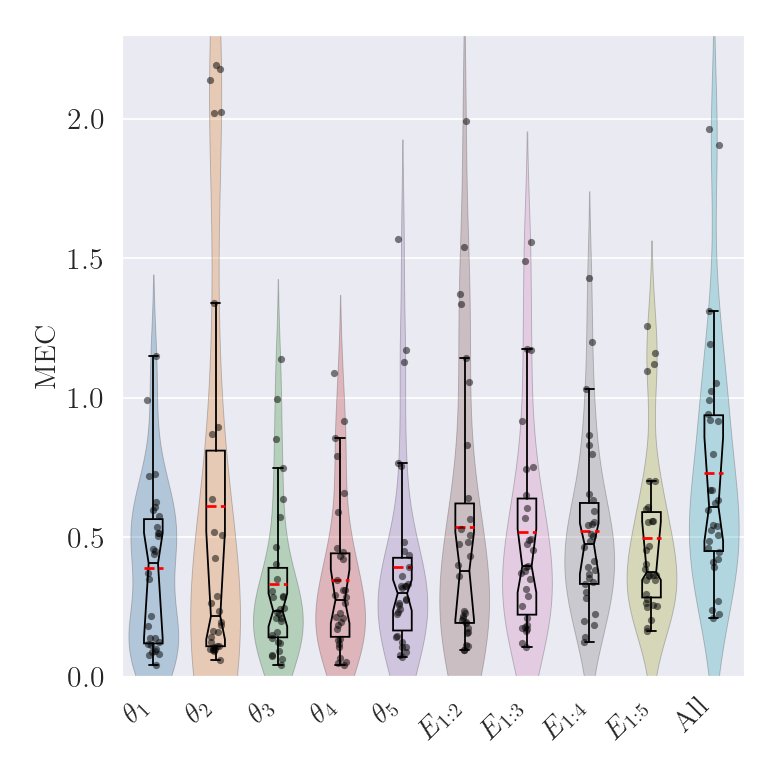}\label{fig:xgbgecr_MEC}}%
		\hfill
		\subfloat[GFE]{\includegraphics[width=0.24\textwidth]{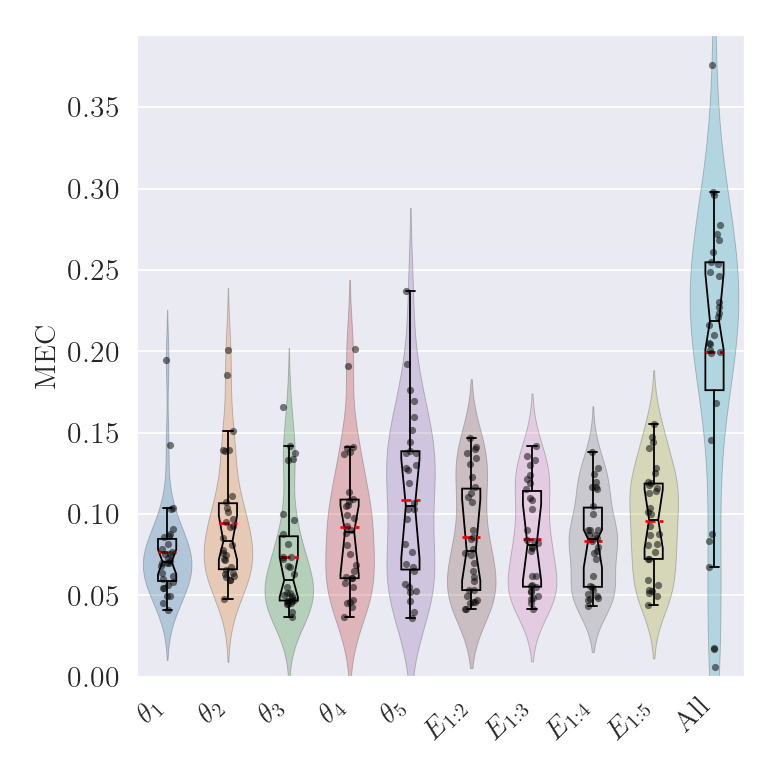}\label{fig:xgbgfe_MEC}}
		
		% Second row
		\subfloat[GSAD]{\includegraphics[width=0.24\textwidth]{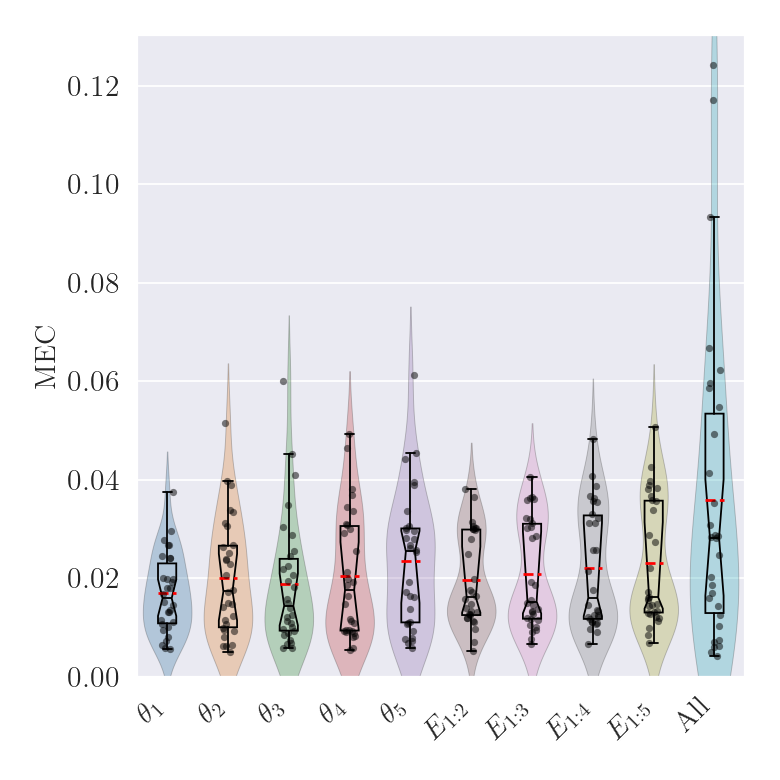}\label{fig:fpgsad_MEC}}%
		\hfill
		\subfloat[HAPT]{\includegraphics[width=0.24\textwidth]{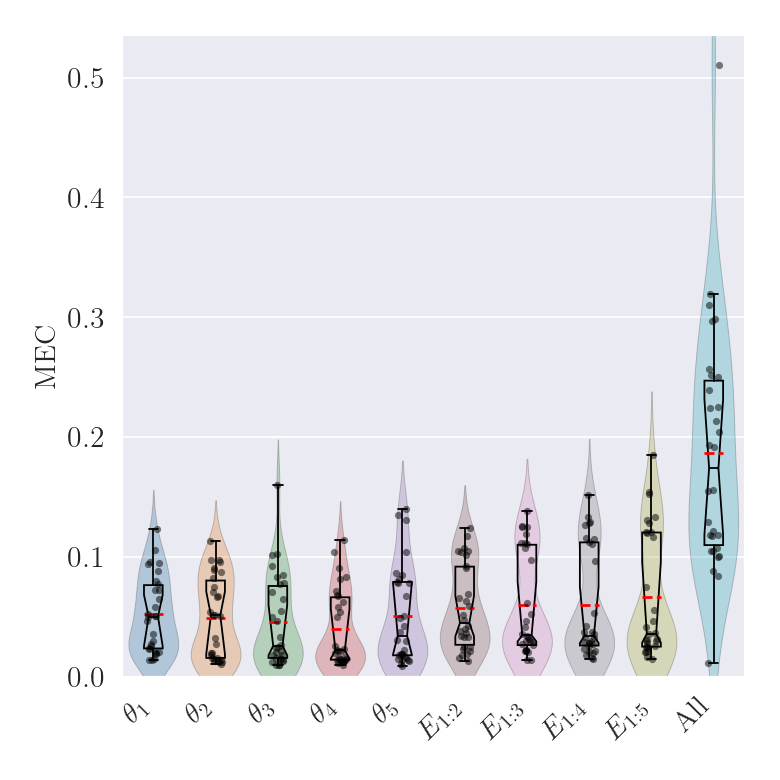}\label{fig:xgbhapt_MEC}}%
		\hfill
		\subfloat[ISOLET]{\includegraphics[width=0.24\textwidth]{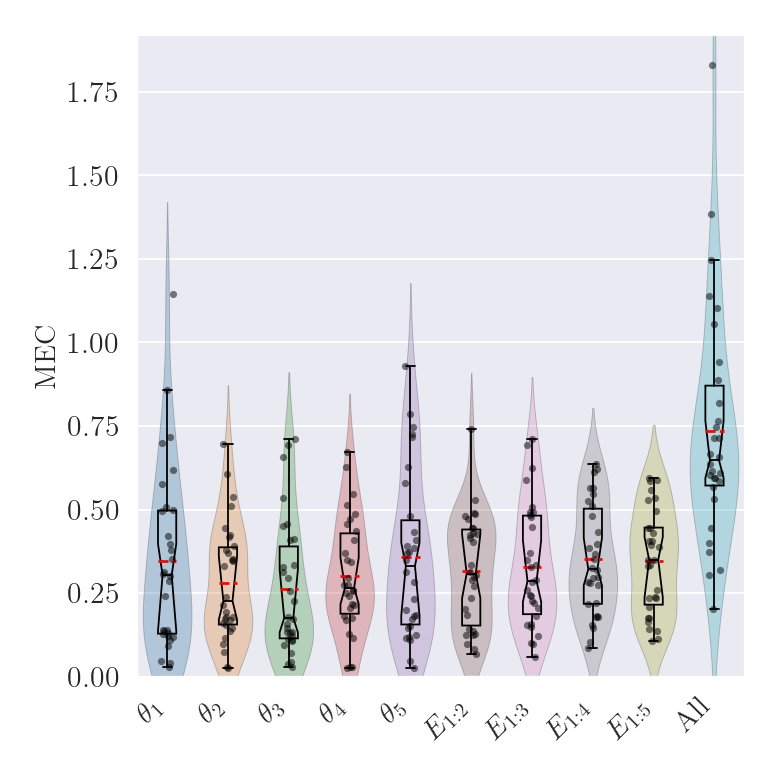}\label{fig:xgbisolet_MEC}}%
		\hfill
		\subfloat[PD]{\includegraphics[width=0.24\textwidth]{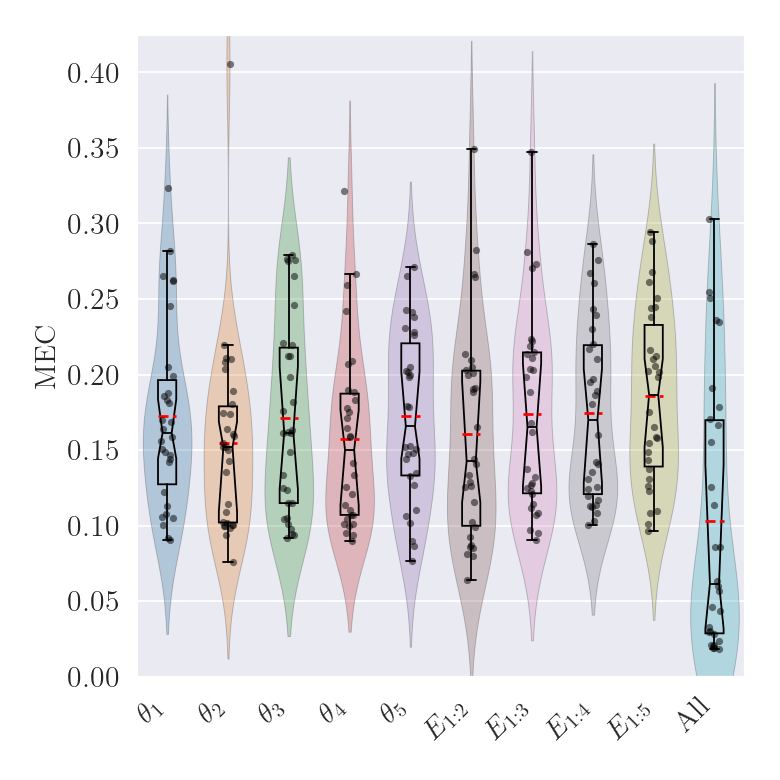}\label{fig:xgbpd_MEC}}
		%\end{comment}
		\caption[The distribution of the obtained MEC values for 30 XGBoost runs.]{The raincloud plot of MEC results obtained from 30 XGBoost runs.}
		
		\label{fig:xgb_MEC}
	\end{figure*}
	%%%%%%%%%%%%%%%%%%%%%%%%%%%%%%%%%%%%%%%%%%%%%%%%%%%%%%%%%%%%%%%%%%%%%%%%%%%%%%
	
	%%%%%%%%%%%%%%%%%%%%%%%%%%%%%%%%%%%%%%%%%%%%%%%%%%%%%%%%%%%%%%%%%%%%%%%%%%%%%%
	\begin{figure*}[t] 
		\centering
		%\begin{comment}
		% First row
		\subfloat[APSF]{\includegraphics[width=0.24\textwidth]{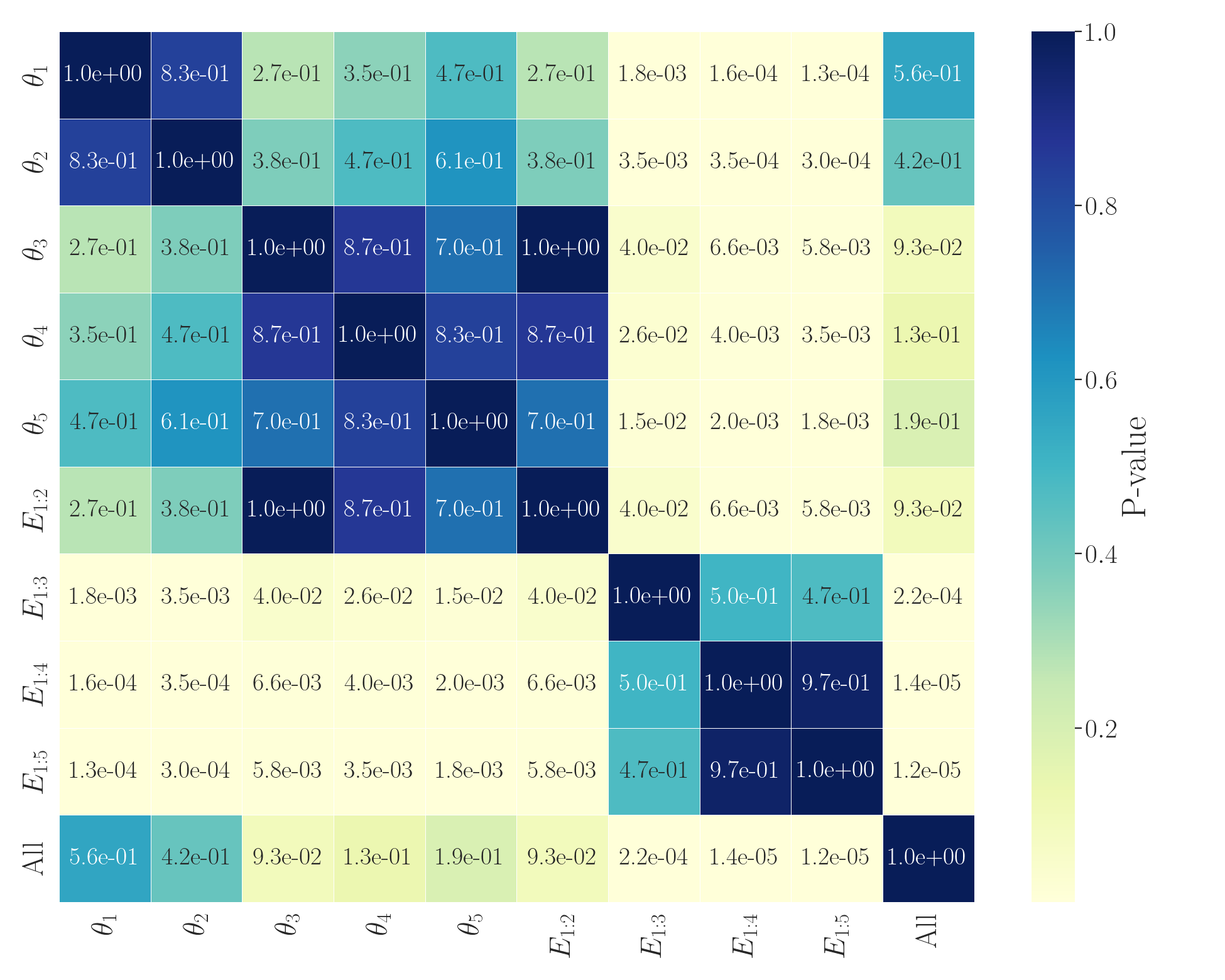}\label{fig:xgbnemapsf_MEC}}%
		\hfill
		\subfloat[ARWPM]{\includegraphics[width=0.24\textwidth]{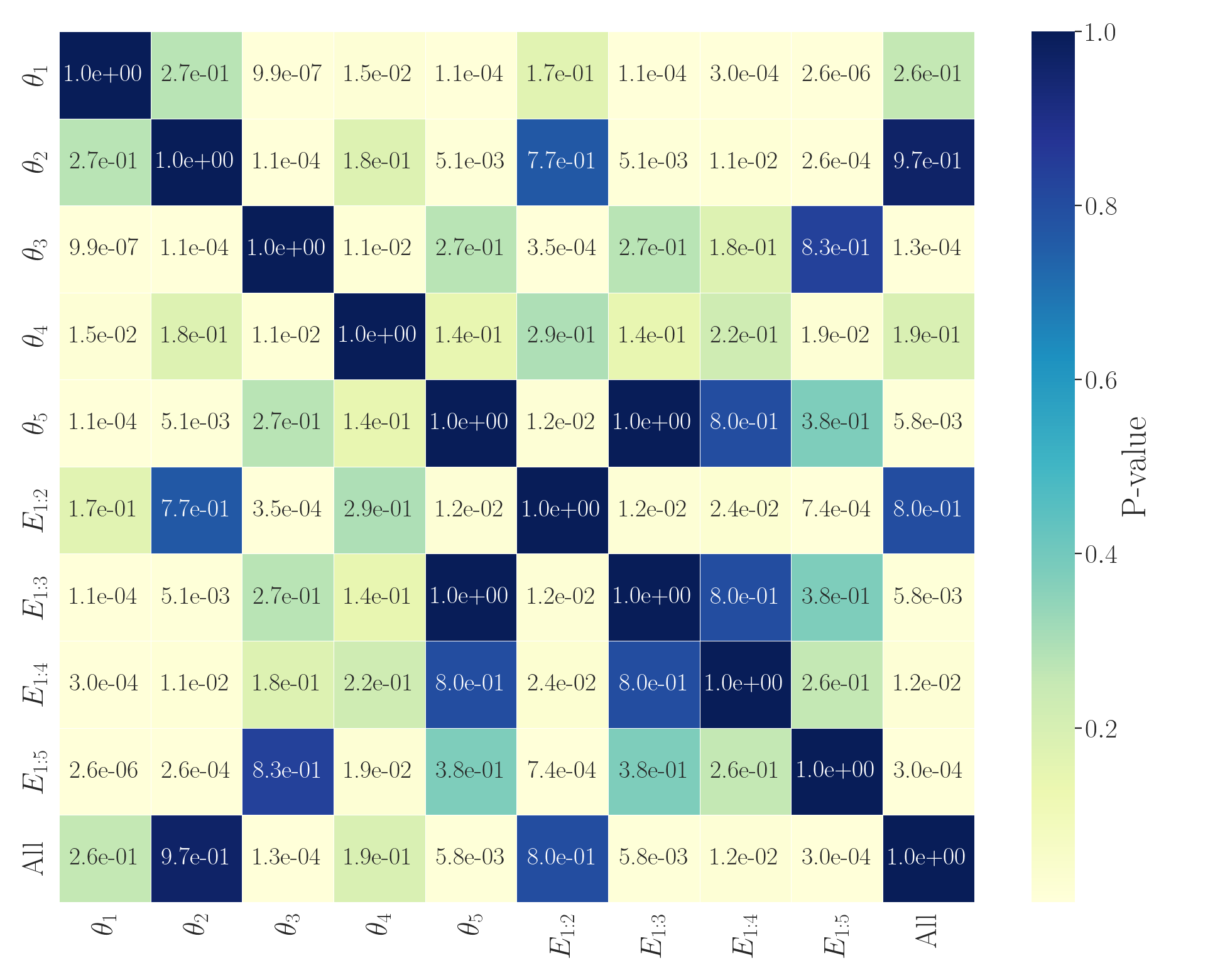}\label{fig:xgbnemarwpm_MEC}}%
		\hfill
		\subfloat[GECR]{\includegraphics[width=0.24\textwidth]{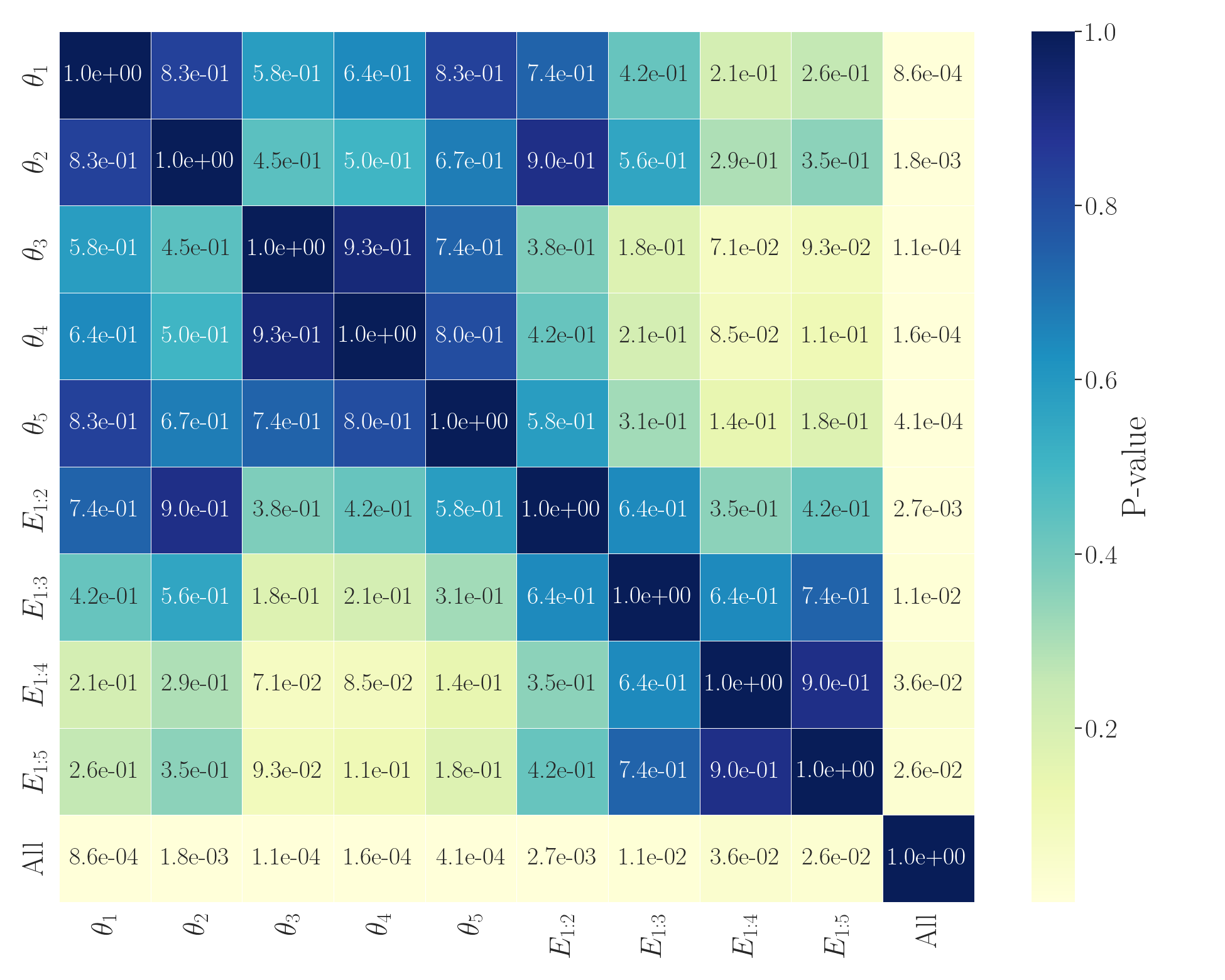}\label{fig:xgbnemgecr_MEC}}%
		\hfill
		\subfloat[GFE]{\includegraphics[width=0.24\textwidth]{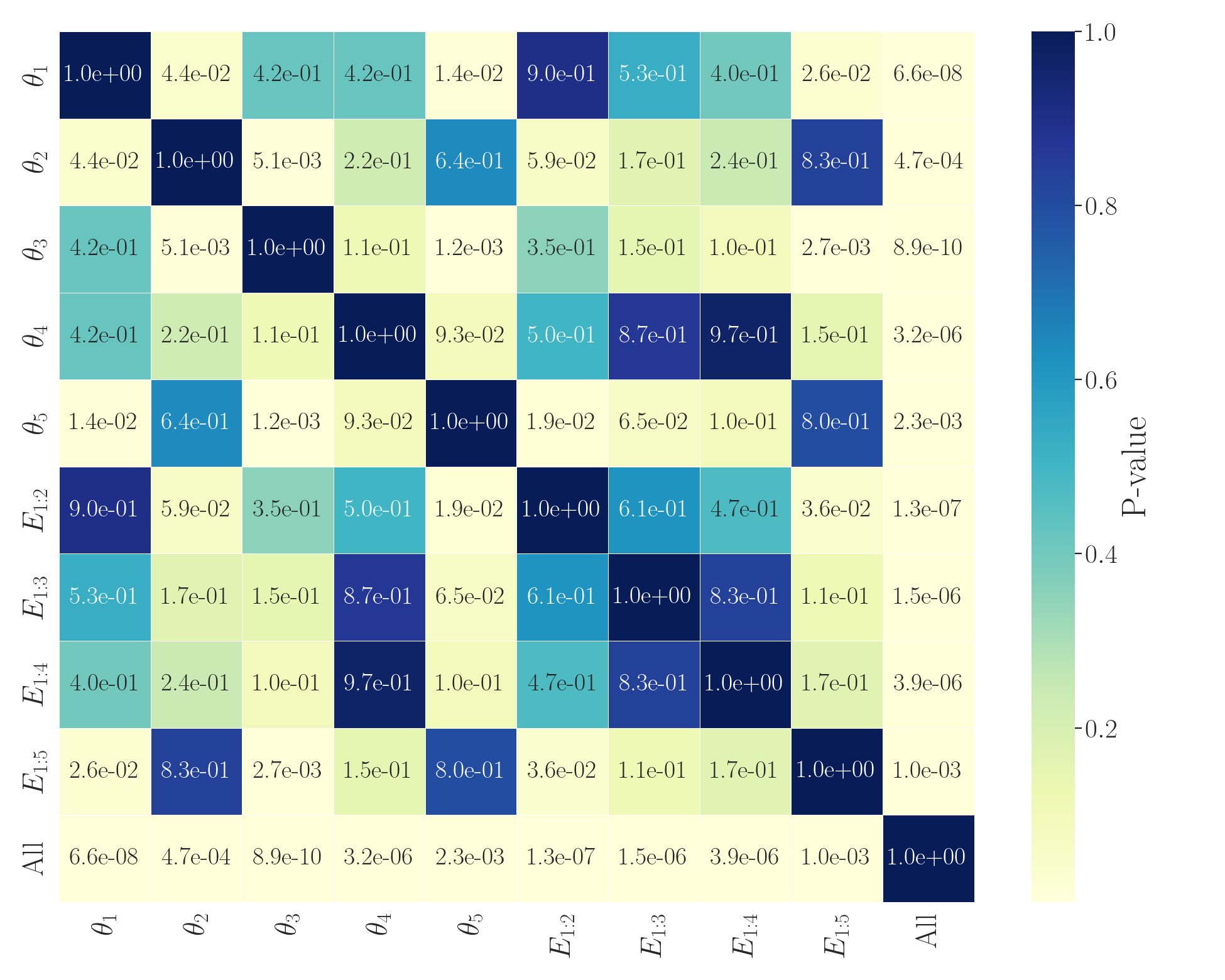}\label{fig:xgbnemgfe_MEC}}
		
		% Second row
		\subfloat[GSAD]{\includegraphics[width=0.24\textwidth]{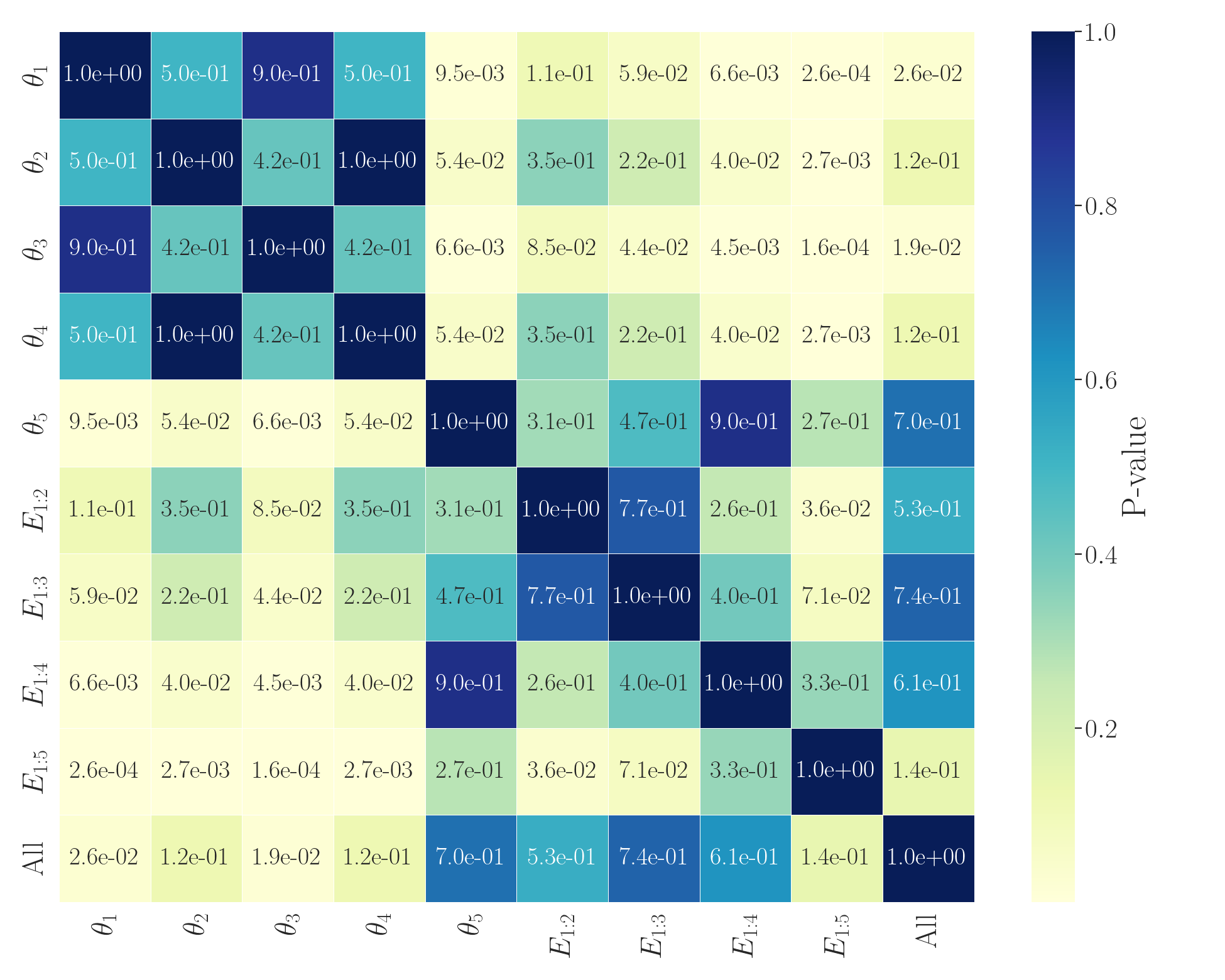}\label{fig:xgbnemgsad_MEC}}%
		\hfill
		\subfloat[HAPT]{\includegraphics[width=0.24\textwidth]{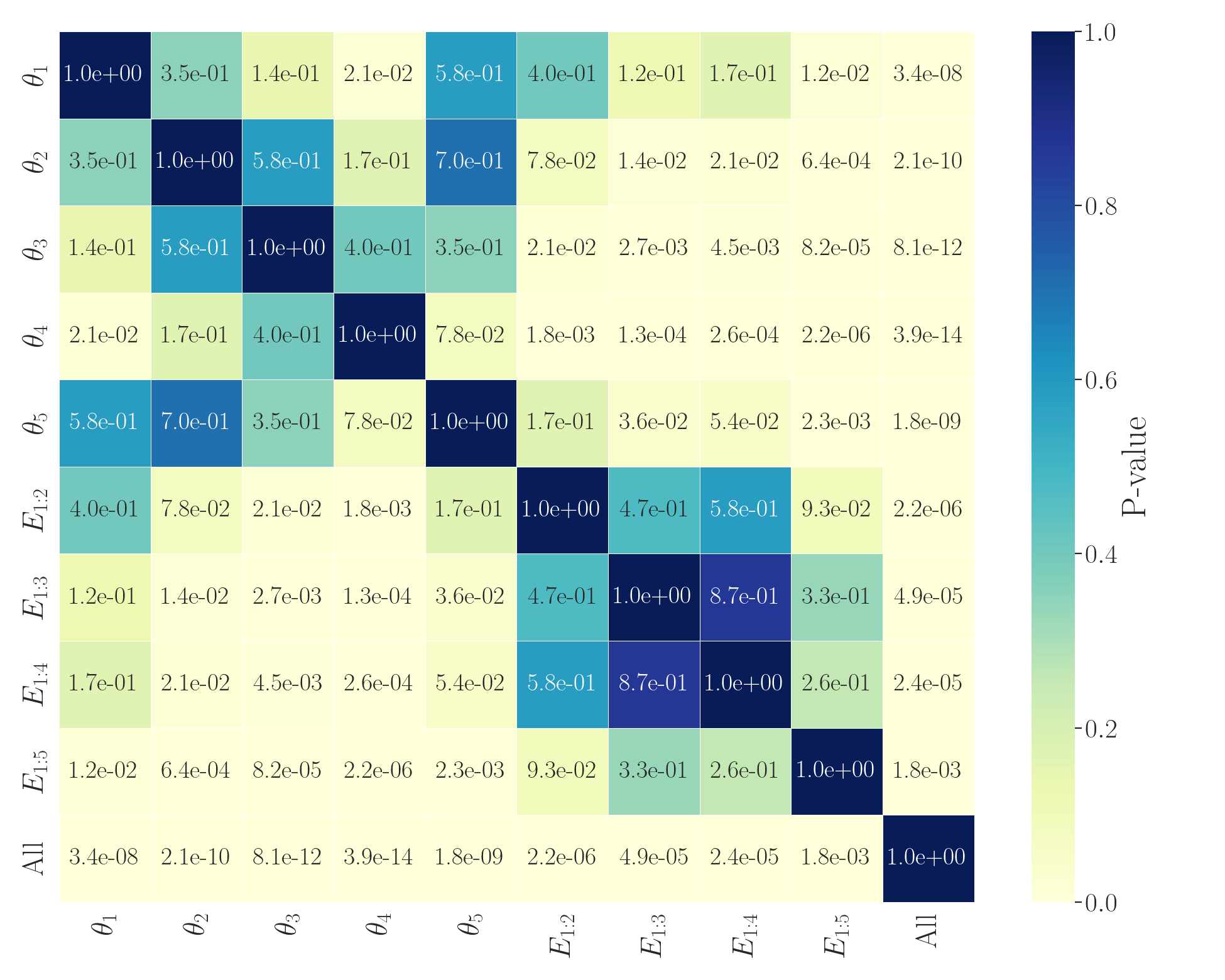}\label{fig:xgbnemhapt_MEC}}%
		\hfill
		\subfloat[ISOLET]{\includegraphics[width=0.24\textwidth]{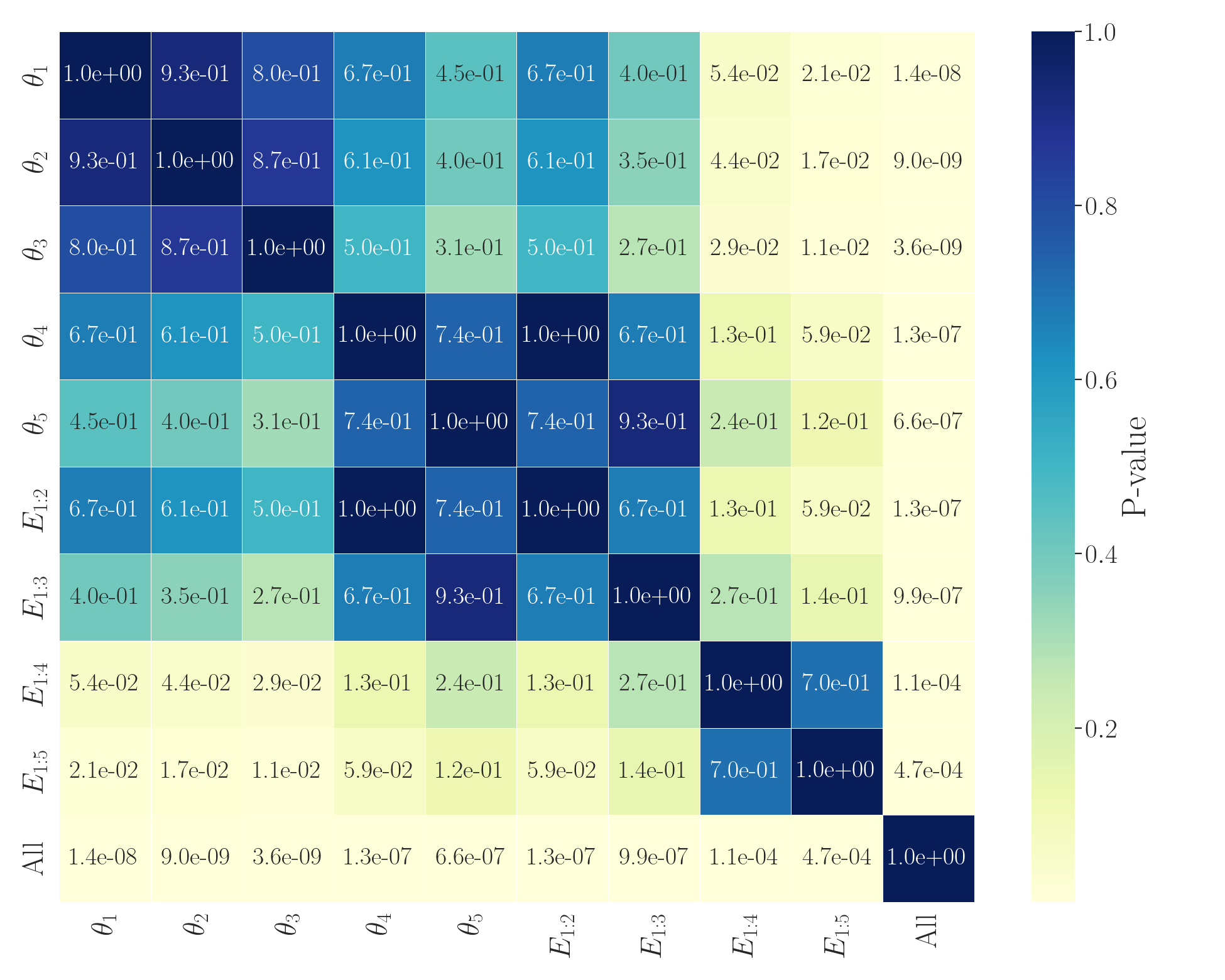}\label{fig:xgbnemisolet_MEC}}%
		\hfill
		\subfloat[PD]{\includegraphics[width=0.24\textwidth]{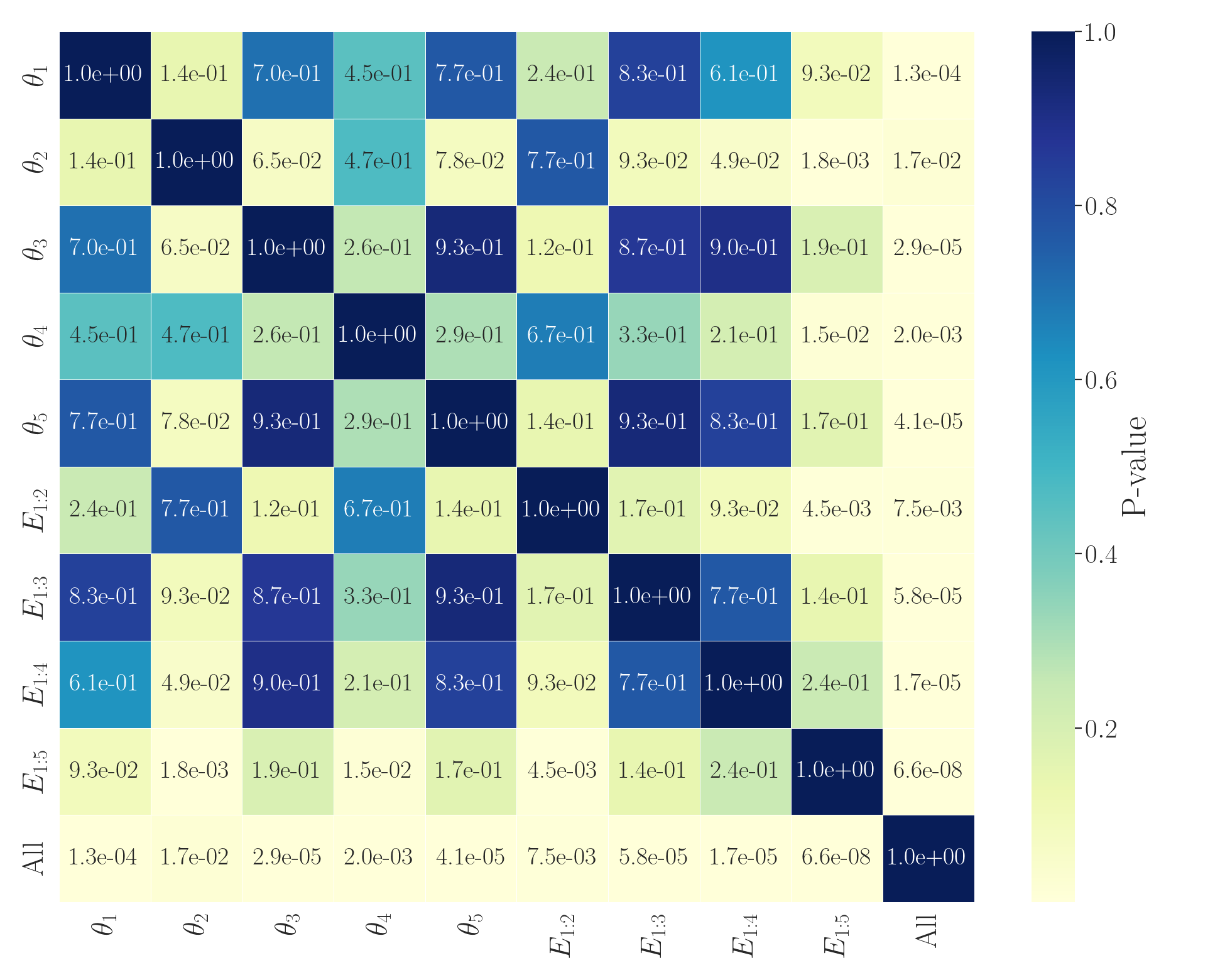}\label{fig:xgbnempd_MEC}}
		%\end{comment}
		\caption[The adjusted Conover's P-values for the obtained MEC values from 30 XGBoost runs.]{The results of the Conover post-hoc test on testing data’s MEC obtained from 30 XGBoost runs.}
		
		\label{fig:xgbnem_MEC}
	\end{figure*}
	\FloatBarrier
	%%%%%%%%%%%%%%%%%%%%%%%%%%%%%%%%%%%%%%%%%%%%%%%%%%%%%%%%%%%%%%%%%%%%%%%%%%%%%%
	
	%%%%%%%%%%%%%%%%%%%%%%%%%%%%%%%%%%%%%%%%%%%%%%%%%%%%%%%%%%%%%%%%%%%%%%%%%%%%%%
	\begin{figure*}[htbp] 
		\centering
		%\begin{comment}
		% First row
		\subfloat[APSF]{\includegraphics[width=0.24\textwidth]{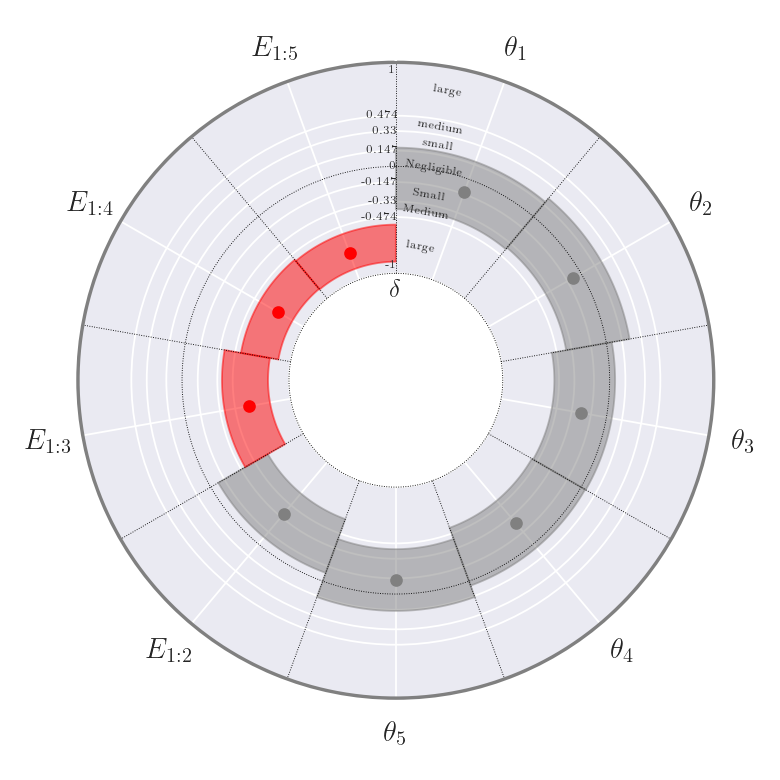}\label{fig:xgbcliffapsf_MEC}}%
		\hfill
		\subfloat[ARWPM]{\includegraphics[width=0.24\textwidth]{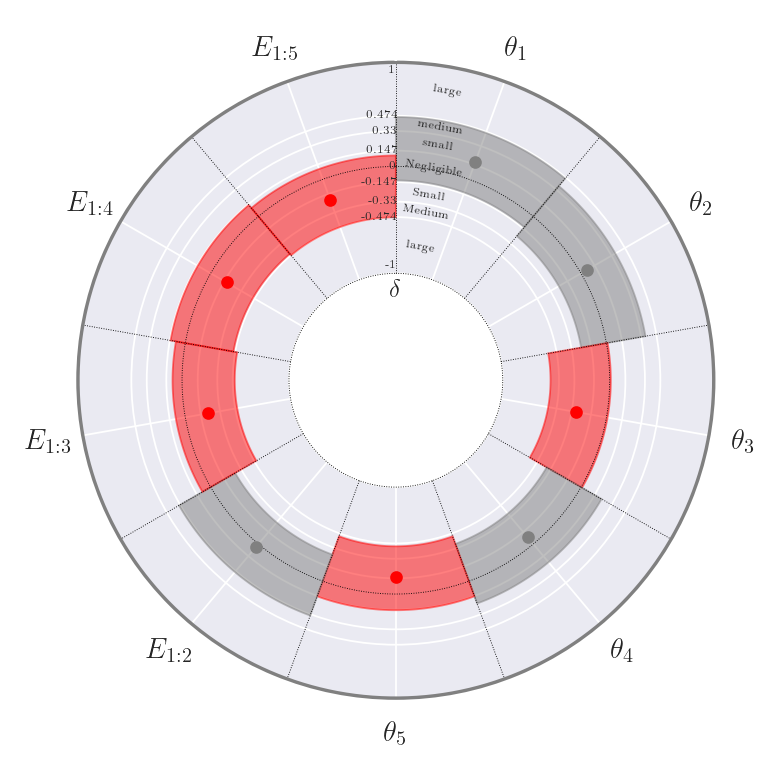}\label{fig:xgbcliffarwpm_MEC}}%
		\hfill
		\subfloat[GECR]{\includegraphics[width=0.24\textwidth]{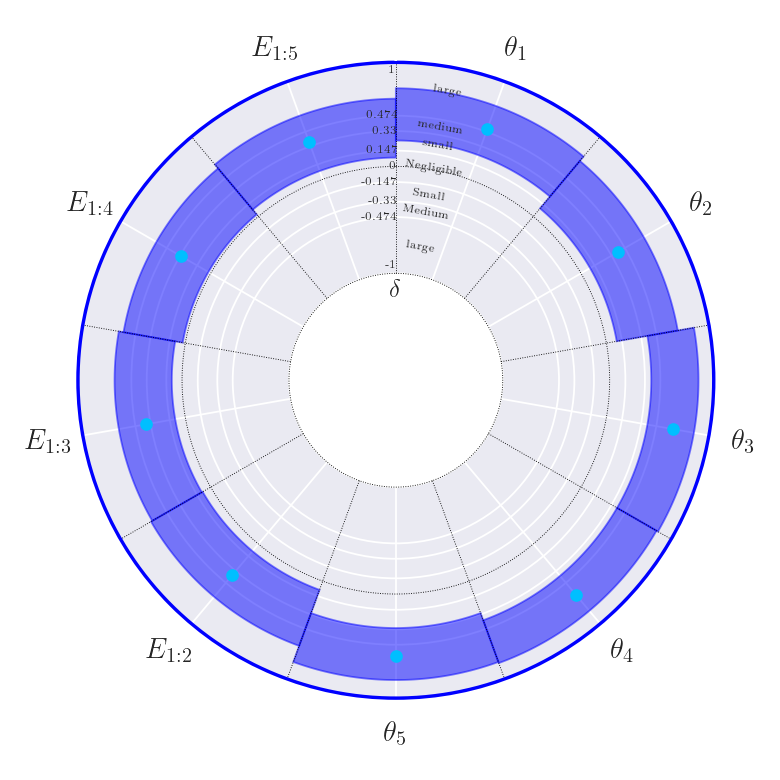}\label{fig:xgbcliffgecr_MEC}}%
		\hfill
		\subfloat[GFE]{\includegraphics[width=0.24\textwidth]{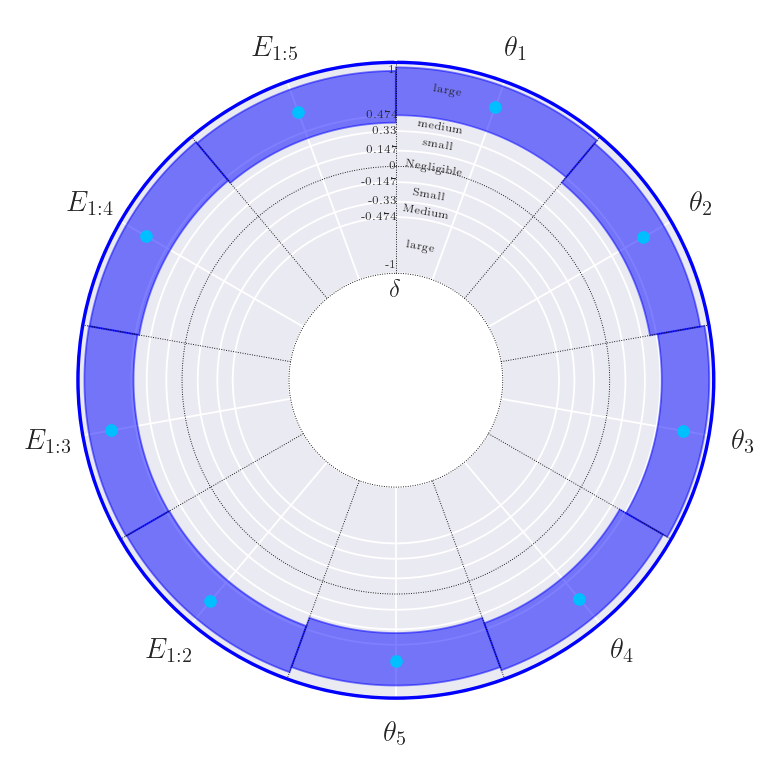}\label{fig:xgbcliffgfe_MEC}}
		
		% Second row
		\subfloat[GSAD]{\includegraphics[width=0.24\textwidth]{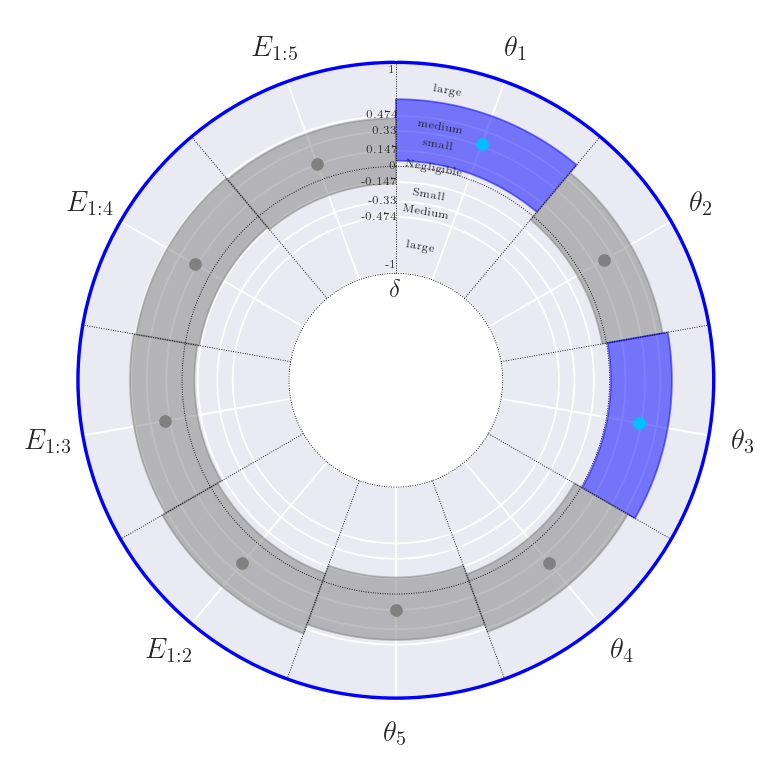}\label{fig:xgbcliffgsad_MEC}}%
		\hfill
		\subfloat[HAPT]{\includegraphics[width=0.24\textwidth]{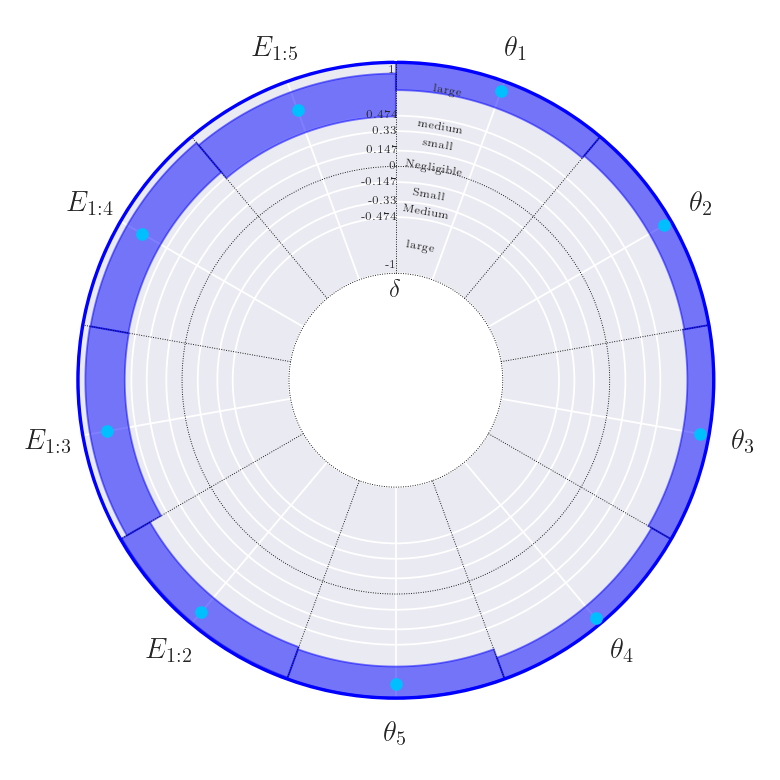}\label{fig:xgbcliffhapt_MEC}}%
		\hfill
		\subfloat[ISOLET]{\includegraphics[width=0.24\textwidth]{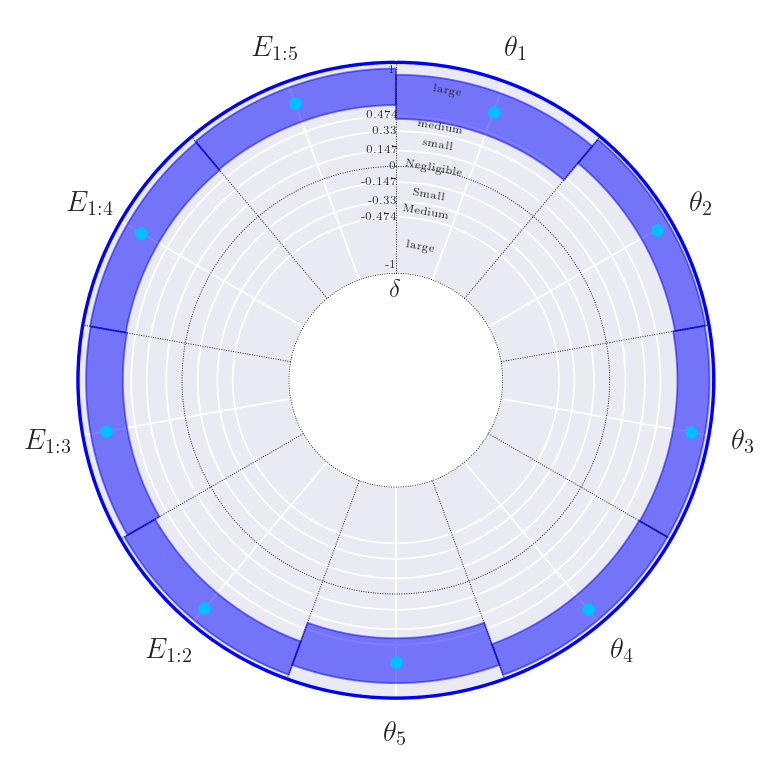}\label{fig:xgbcliffisolet_MEC}}%
		\hfill
		\subfloat[PD]{\includegraphics[width=0.24\textwidth]{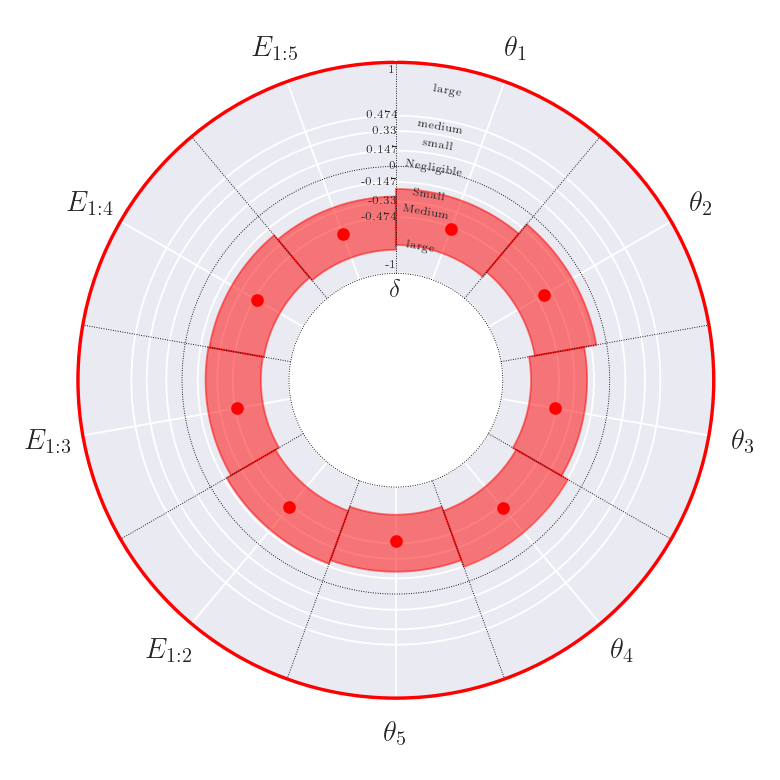}\label{fig:xgbcliffpd_MEC}}
		%\end{comment}
		\caption[The Cliff's $\delta$ effect size measure and its 95\% confidence intervals for the MEC values obtained from 30 XGBoost runs.]{Effect size analysis of test data MEC across 30 XGBoost runs using Cliff's $\delta$. Each point represents the actual value obtained, with segments denoting 95\% confidence intervals based on 10,000 bootstrap resamplings. The outer ring color visualizes the statistical significance: grey illustrates no significant difference (adjusted Friedman's P-value$>0.05$), while color indicates significant differences; blue indicates at least one view and/or ensemble outperforms the benchmark (adjusted Conover's p-value$ < 0.05$, Cliff's $\delta > 0$), and red signifies all views and ensembles underperform relative to the benchmark (adjusted Conover's p-value$ < 0.05$, Cliff's $\delta < 0$). Segment colors show performance difference against the benchmark: grey for no significant difference (adjusted Conover's p-value$  > 0.05$), blue for better performance (Cliff's $\delta > 0$), and red for worse performance (Cliff's $\delta < 0$).}
		
		\label{fig:xgbcliff_MEC}
	\end{figure*}
	%%%%%%%%%%%%%%%%%%%%%%%%%%%%%%%%%%%%%%%%%%%%%%%%%%%%%%%%%%%%%%%%%%%%%%%%%%%%%%
	
	%%%%%%%%%%%%%%%%%%%%%%%%%%%%%%%%%%%%%%%%%%%%%%%%%%%%%%%%%%%%%%%%%%%%%%%%%%%%%%
	\begin{table*}[htbp]
		\centering
		\caption[The results of Friedman and Conover tests and Cliff's $\delta$ analysis for the MEC values obtained from 30 XGBoost runs.]{Statistical comparison of MEC results for testing data obtained from XGBoost runs. W, T, and L denote win, tie, and loss based on adjusted Friedman and Conover's p-values. Effect sizes are calculated using Cliff's Delta method and are categorized as negligible, small, medium, or large.}
		\label{tab:xgbmec}
		\resizebox{\linewidth}{!}{%
			\begin{tabular}{c|ccccccccc}
				\hline
				\multicolumn{10}{c}{XGBoost's MEC}\\
				\hline
				Dataset & $\theta_1$ & $\theta_2$ & $\theta_3$ & $\theta_4$ & $\theta_5$ & $E_{1:2}$ & $E_{1:3}$ & $E_{1:4}$ & $E_{1:5}$ \\
				\hline
				APSF  & T (negligible) & T (negligible) & T (small) & T (small) & T (negligible) & T (medium) & L (large) & L (large) & L (large) \\
				ARWPM  & T (small) & T (negligible) & L (small) & T (negligible) & L (small) & T (negligible) & L (small) & L (small) & L (small) \\
				GECR  & W (large) & W (medium) & W (large) & W (large) & W (large) & W (medium) & W (medium) & W (small) & W (medium) \\
				GFE  & W (large) & W (large) & W (large) & W (large) & W (large) & W (large) & W (large) & W (large) & W (large) \\
				GSAD  & W (medium) & T (small) & W (small) & T (small) & T (negligible) & T (small) & T (small) & T (small) & T (small) \\
				HAPT  & W (large) & W (large) & W (large) & W (large) & W (large) & W (large) & W (large) & W (large) & W (large) \\
				ISOLET  & W (large) & W (large) & W (large) & W (large) & W (large) & W (large) & W (large) & W (large) & W (large) \\
				PD  & L (large) & L (medium) & L (large) & L (medium) & L (large) & L (medium) & L (large) & L (large) & L (large) \\
				\hline
				W - T - L  & 5 - 2 - 1 & 4 - 3 - 1 & 5 - 1 - 2 & 4 - 3 - 1 & 4 - 2 - 2 & 4 - 3 - 1 & 4 - 1 - 3 & 4 - 1 - 3 & 4 - 1 - 3 \\
				\hline
			\end{tabular}
		}
	\end{table*}
	\FloatBarrier
	%%%%%%%%%%%%%%%%%%%%%%%%%%%%%%%%%%%%%%%%%%%%%%%%%%%%%%%%%%%%%%%%%%%%%%%%%%%%%%

	%\subsection{The MEW results for XGBoost}
	%\label{ssub:xgbmew}
	% XGB: MEW
	%%%%%%%%%%%%%%%%%%%%%%%%%%%%%%%%%%%%%%%%%%%%%%%%%%%%%%%%%%%%%%%%%%%%%%%%%%%%%%
	\begin{figure*}[t] 
		\centering
		%\begin{comment}
		% First row
		\subfloat[APSF]{\includegraphics[width=0.24\textwidth]{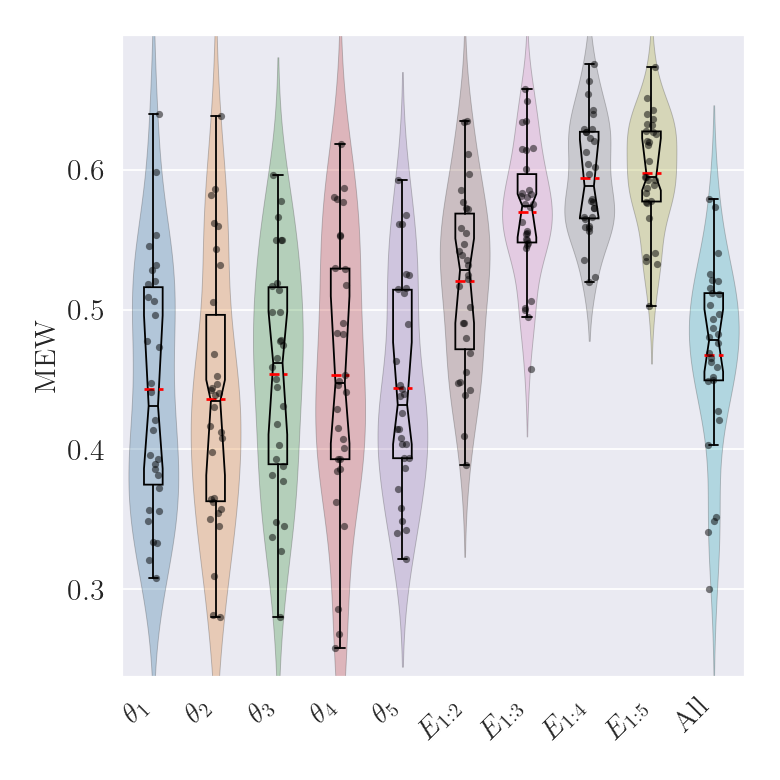}\label{fig:xgbapsf_MEW}}%
		\hfill
		\subfloat[ARWPM]{\includegraphics[width=0.24\textwidth]{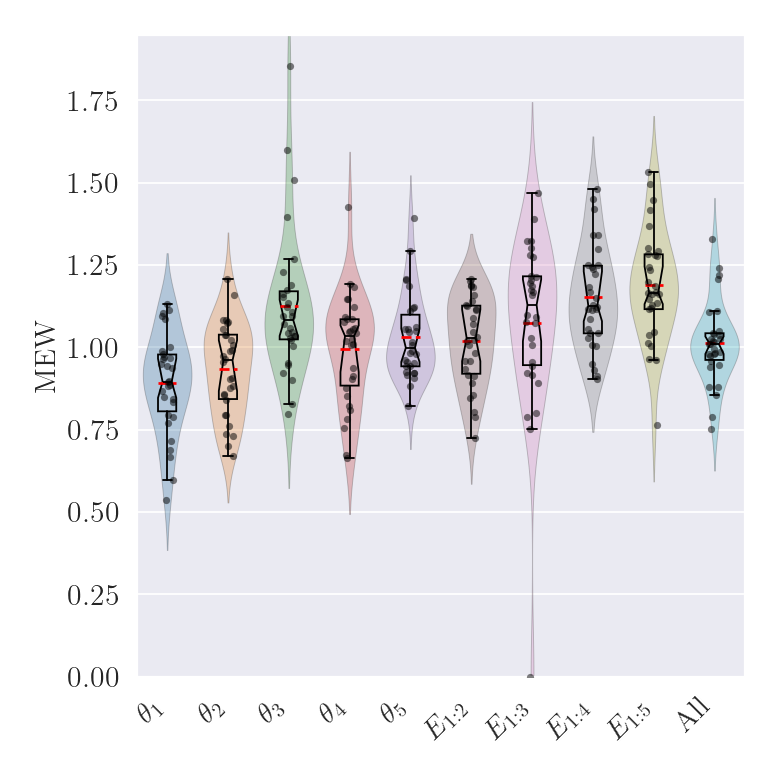}\label{fig:xgbarwpm_MEW}}%
		\hfill
		\subfloat[GECR]{\includegraphics[width=0.24\textwidth]{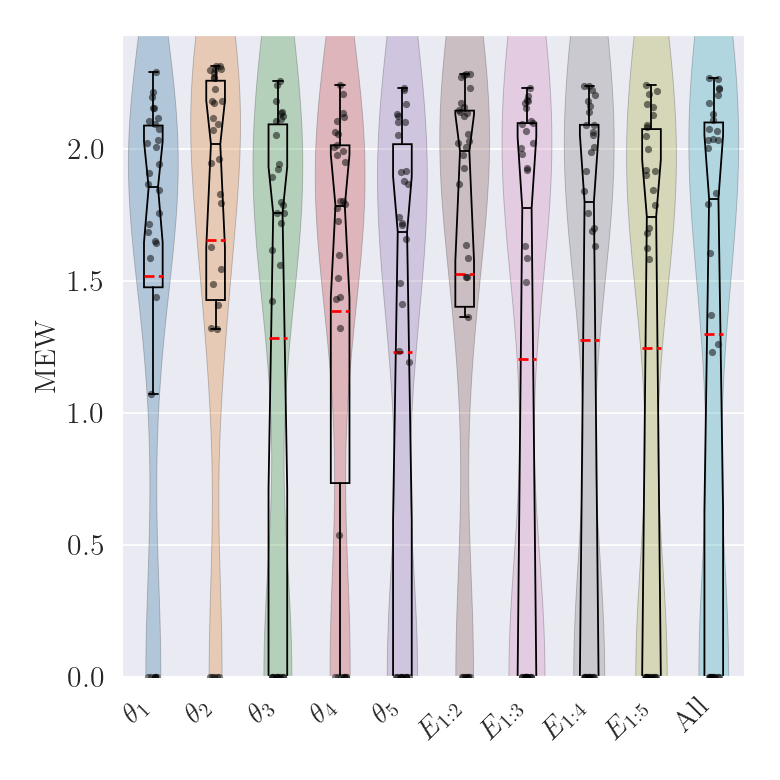}\label{fig:xgbgecr_MEW}}%
		\hfill
		\subfloat[GFE]{\includegraphics[width=0.24\textwidth]{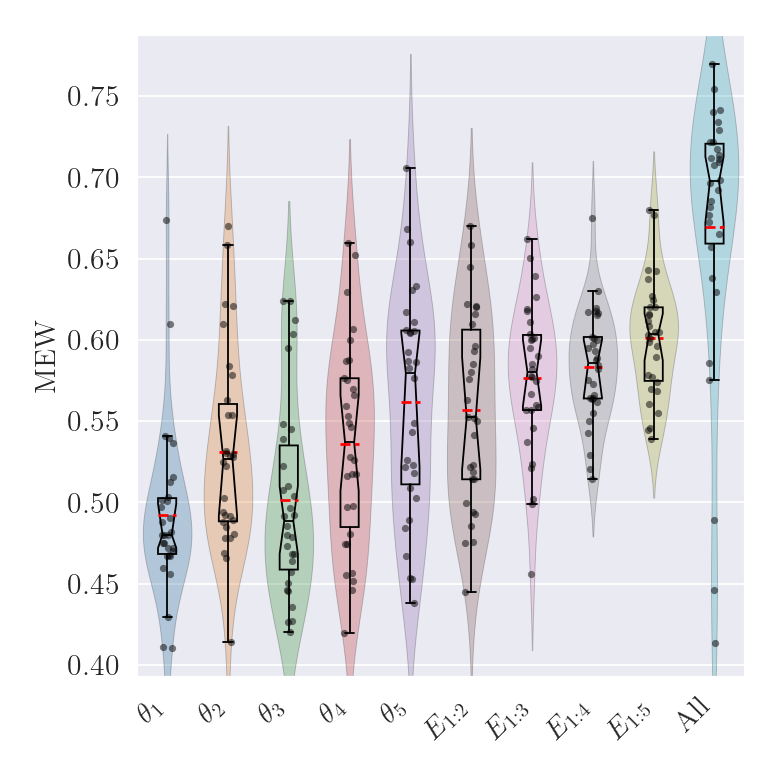}\label{fig:xgbgfe_MEW}}
		
		% Second row
		\subfloat[GSAD]{\includegraphics[width=0.24\textwidth]{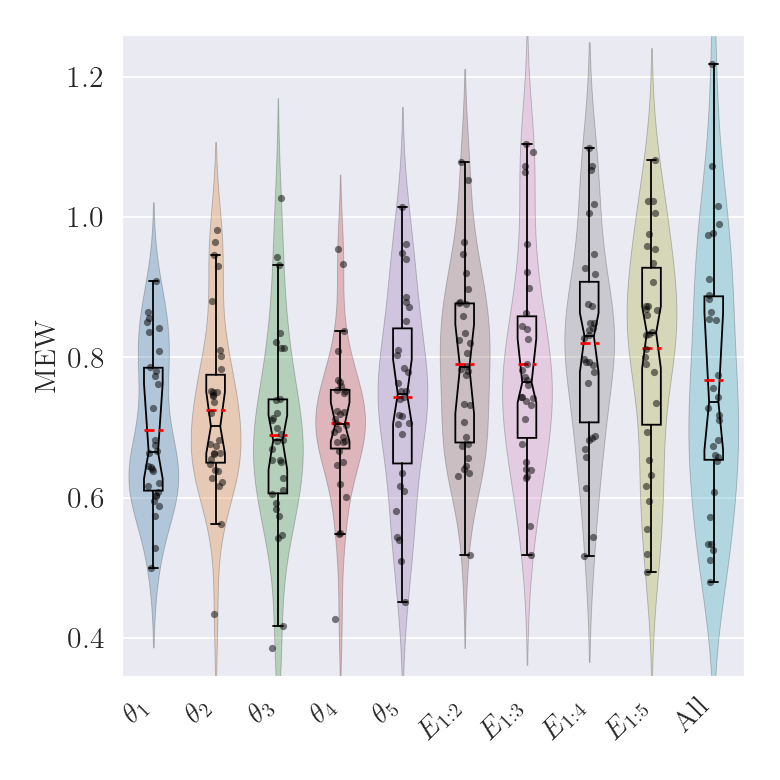}\label{fig:fpgsad_MEW}}%
		\hfill
		\subfloat[HAPT]{\includegraphics[width=0.24\textwidth]{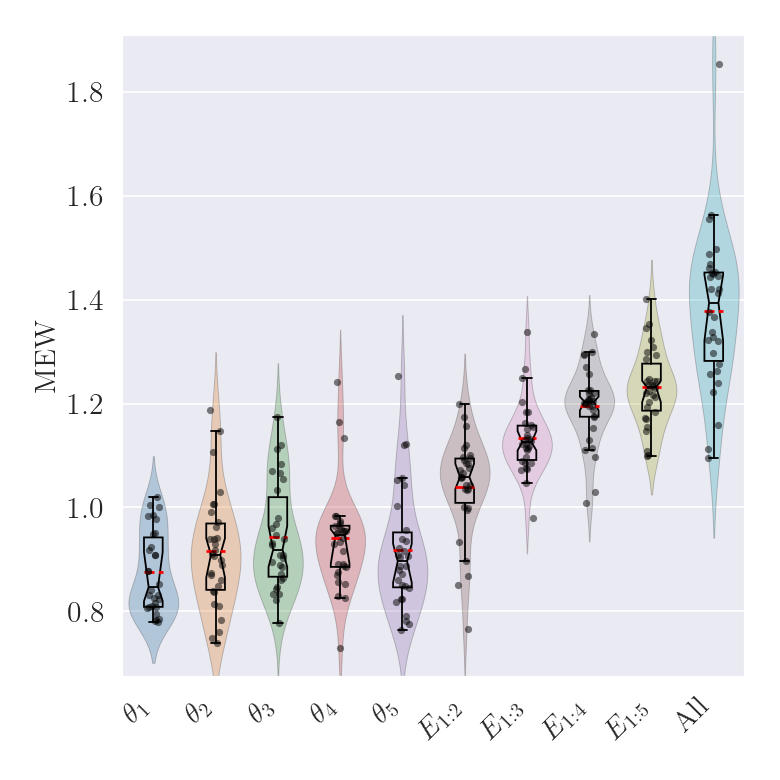}\label{fig:xgbhapt_MEW}}%
		\hfill
		\subfloat[ISOLET]{\includegraphics[width=0.24\textwidth]{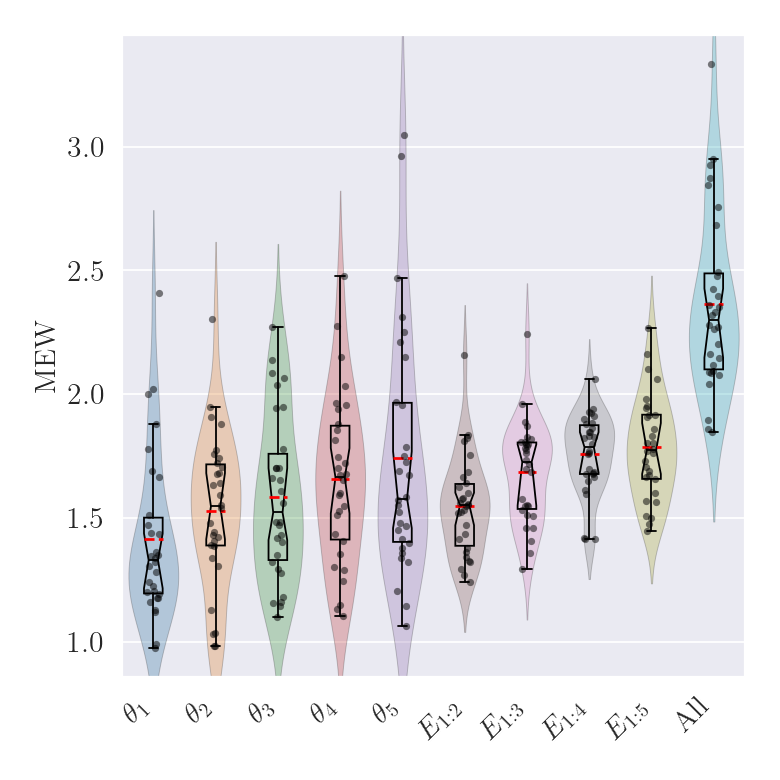}\label{fig:xgbisolet_MEW}}%
		\hfill
		\subfloat[PD]{\includegraphics[width=0.24\textwidth]{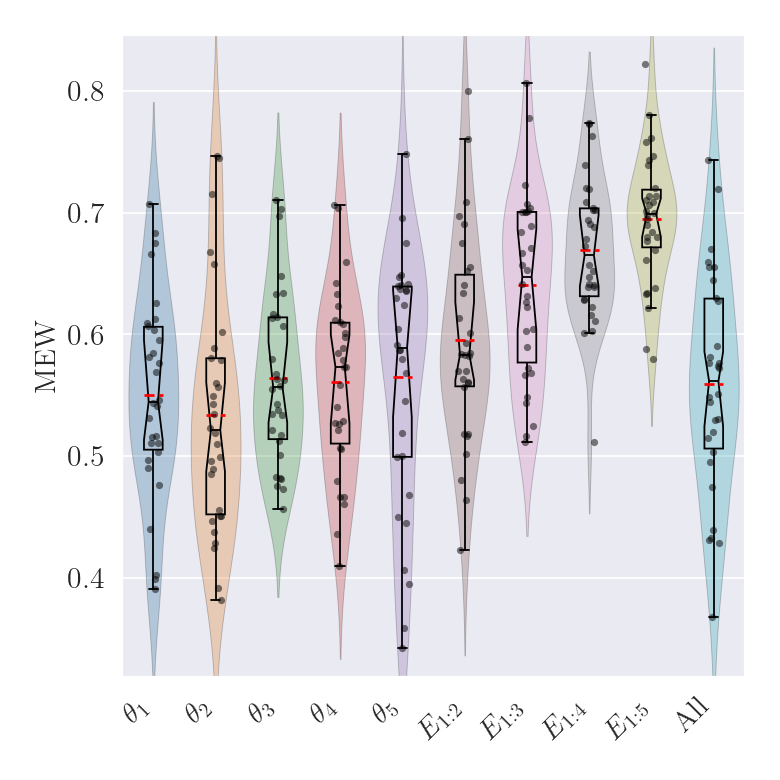}\label{fig:xgbpd_MEW}}
		%\end{comment}
		\caption[The distribution of the obtained MEW values for 30 XGBoost runs.]{The raincloud plot of MEW results obtained from 30 XGBoost runs.}
		
		\label{fig:xgb_MEW}
	\end{figure*}
	%%%%%%%%%%%%%%%%%%%%%%%%%%%%%%%%%%%%%%%%%%%%%%%%%%%%%%%%%%%%%%%%%%%%%%%%%%%%%%
	
	%%%%%%%%%%%%%%%%%%%%%%%%%%%%%%%%%%%%%%%%%%%%%%%%%%%%%%%%%%%%%%%%%%%%%%%%%%%%%%
	\begin{figure*}[t] 
		\centering
		%\begin{comment}
		% First row
		\subfloat[APSF]{\includegraphics[width=0.24\textwidth]{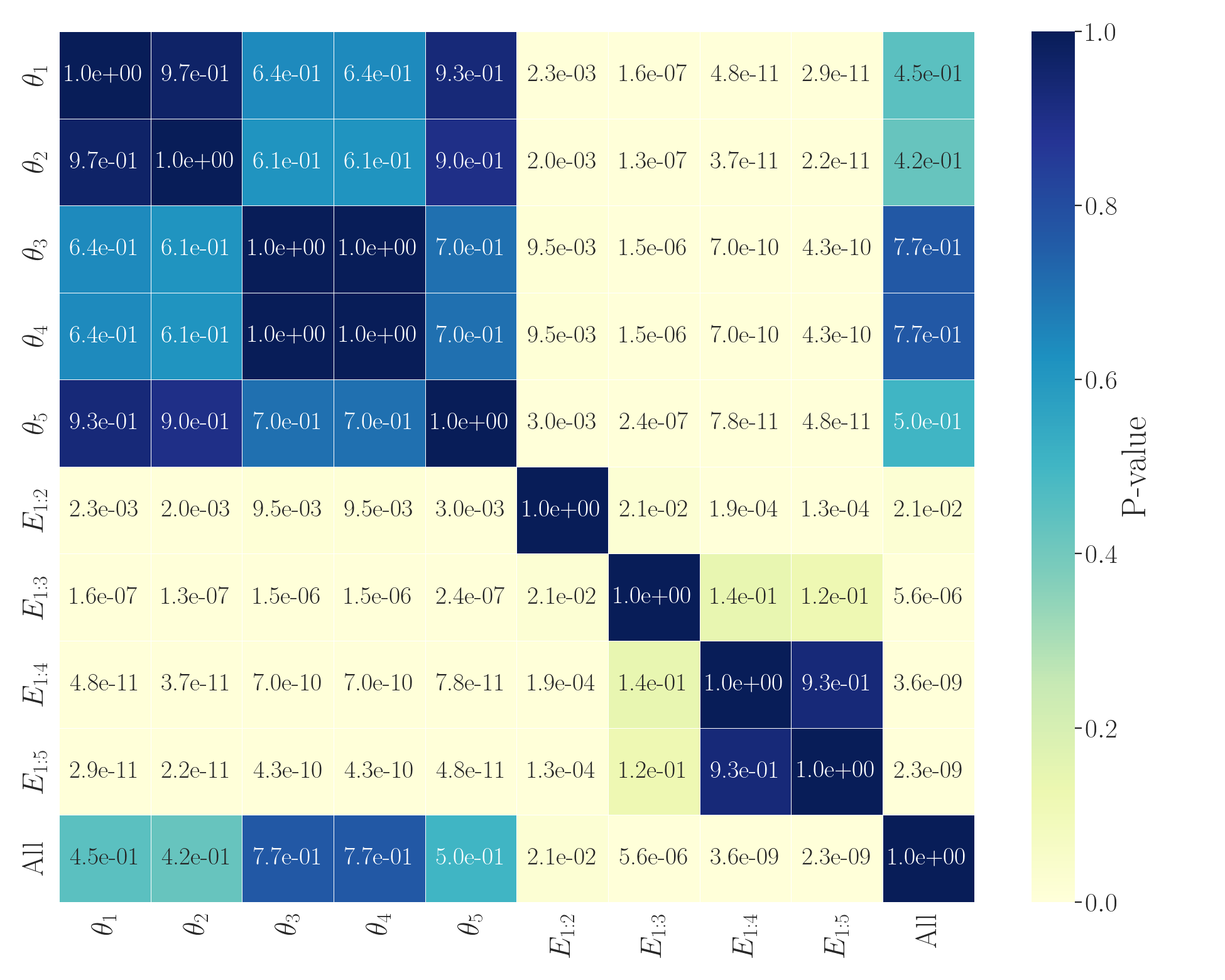}\label{fig:xgbnemapsf_MEW}}%
		\hfill
		\subfloat[ARWPM]{\includegraphics[width=0.24\textwidth]{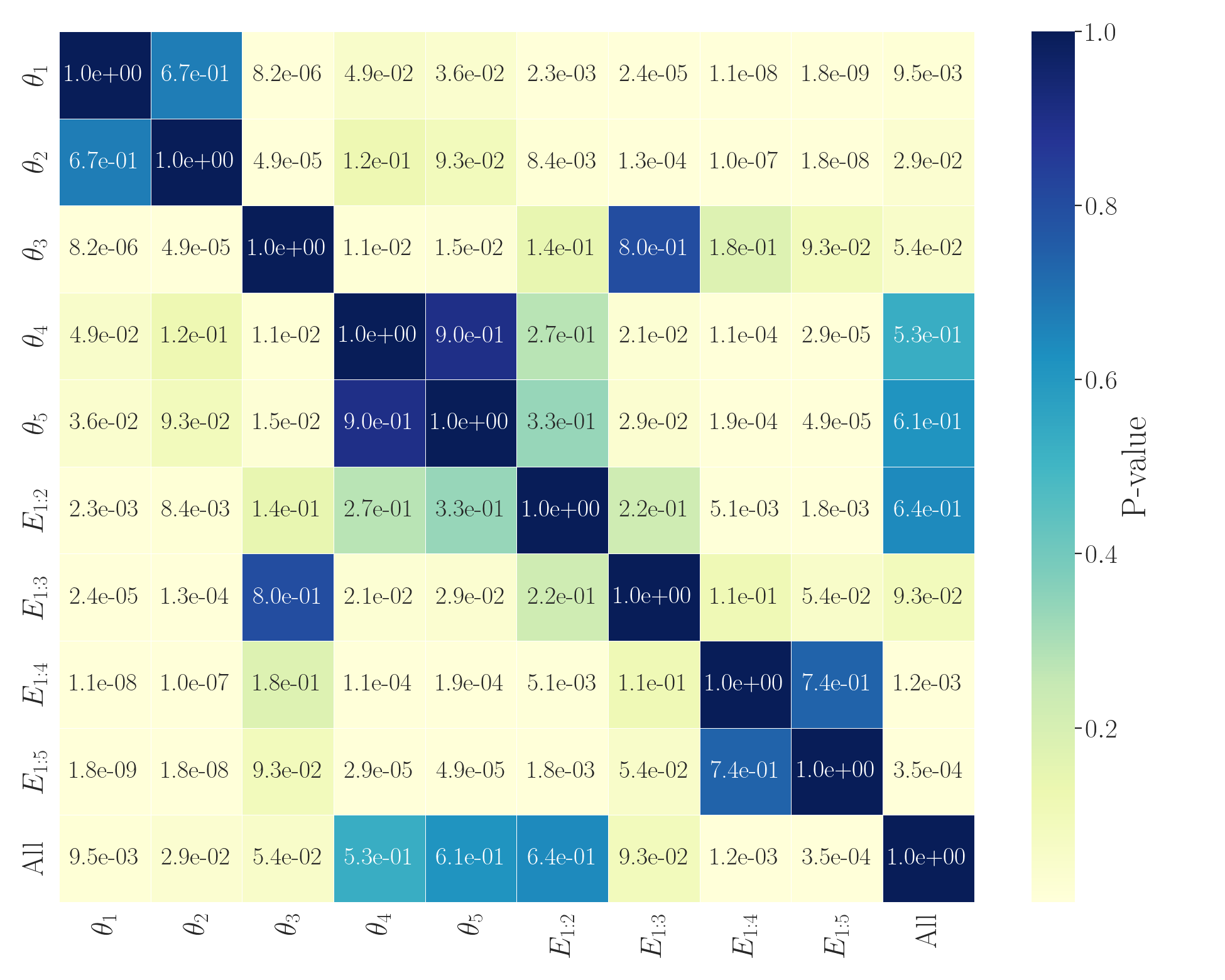}\label{fig:xgbnemarwpm_MEW}}%
		\hfill
		\subfloat[GECR]{\includegraphics[width=0.24\textwidth]{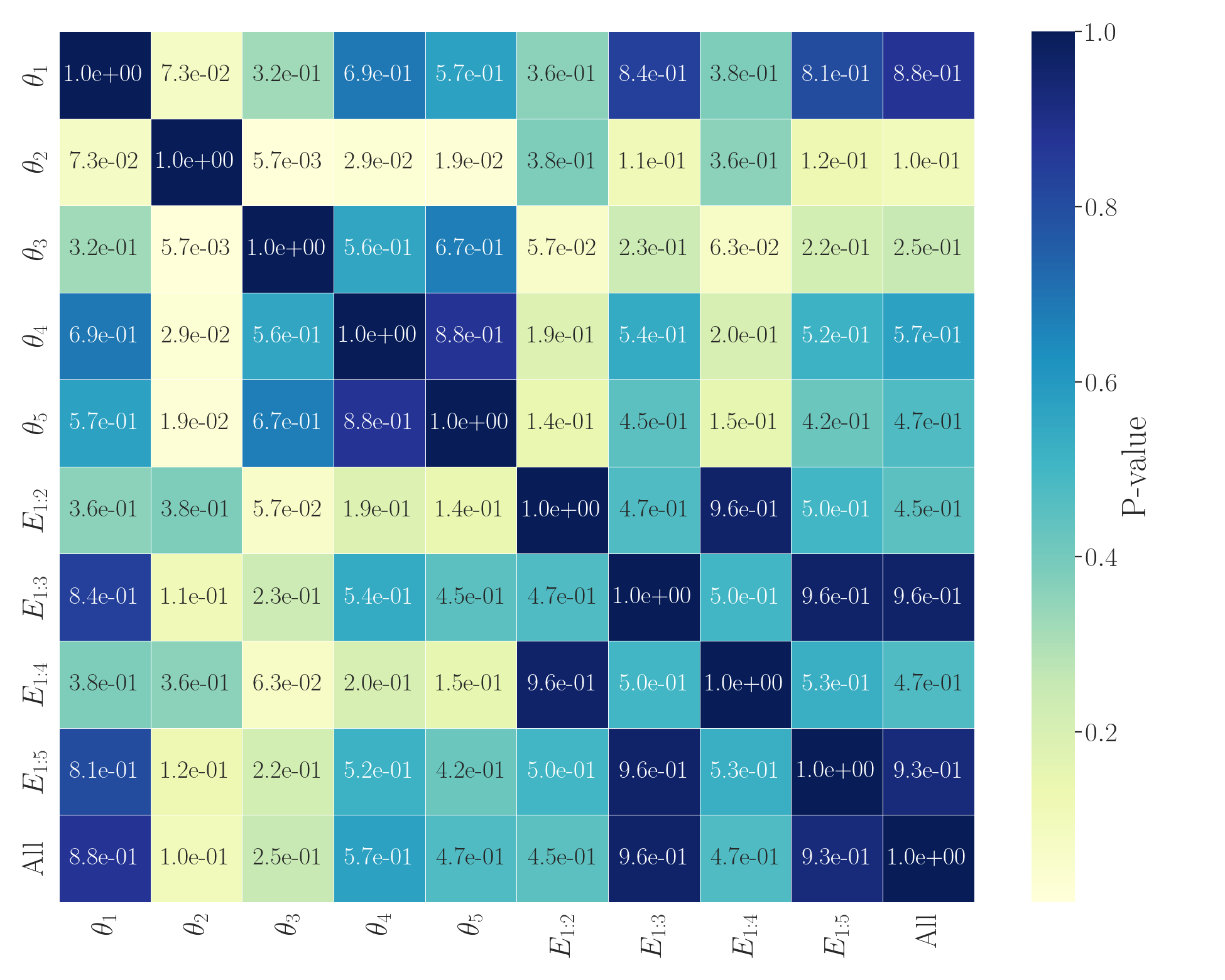}\label{fig:xgbnemgecr_MEW}}%
		\hfill
		\subfloat[GFE]{\includegraphics[width=0.24\textwidth]{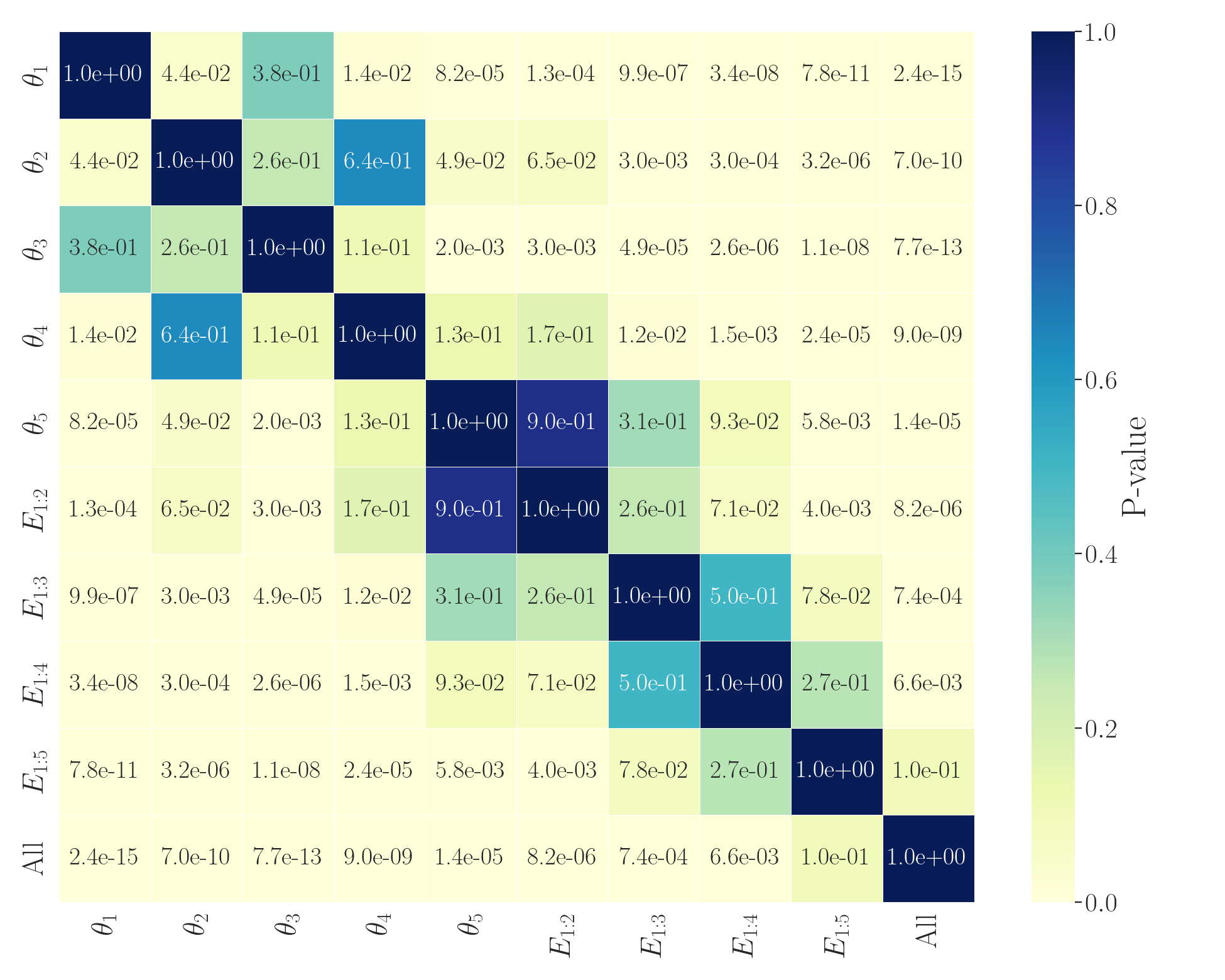}\label{fig:xgbnemgfe_MEW}}
		
		% Second row
		\subfloat[GSAD]{\includegraphics[width=0.24\textwidth]{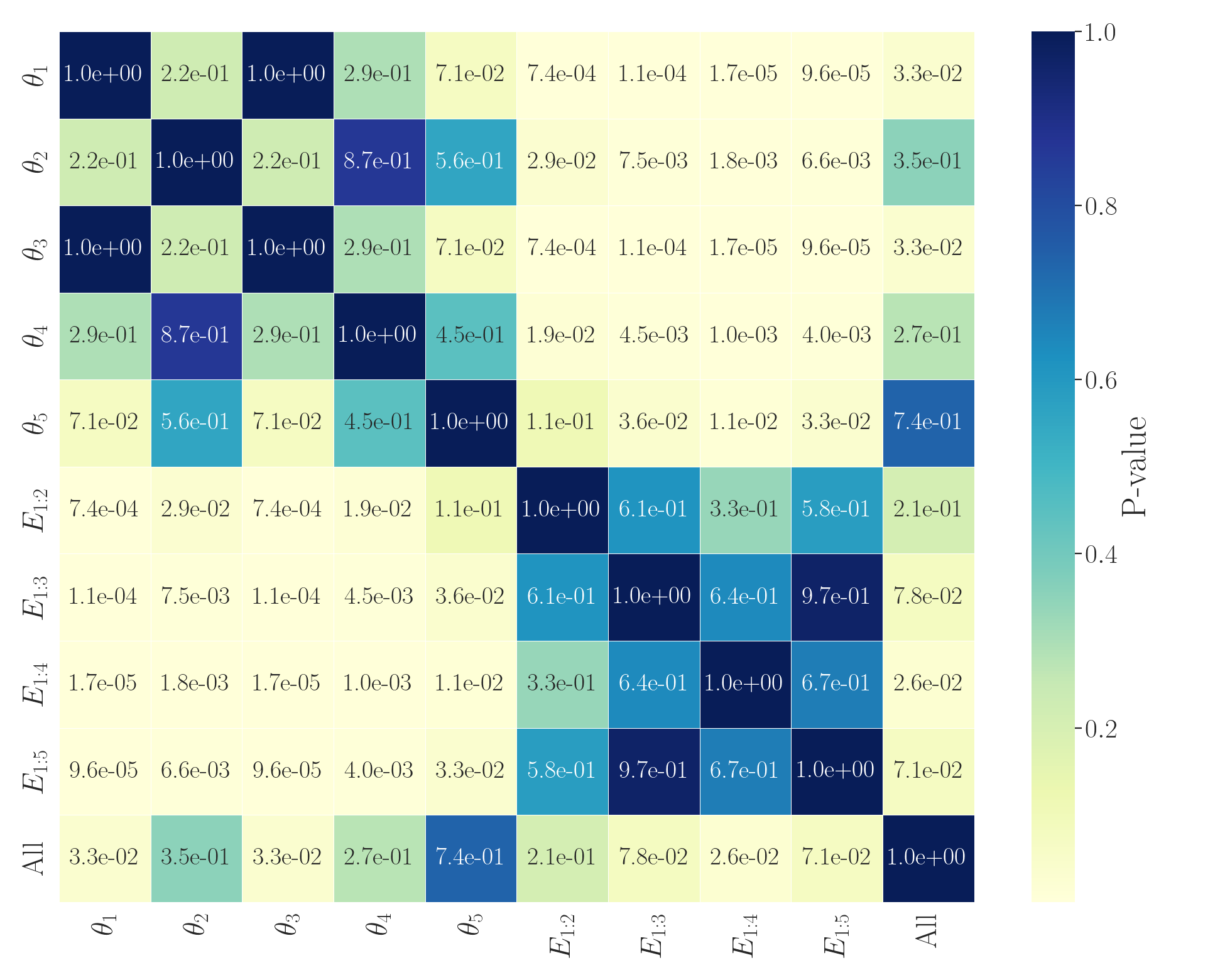}\label{fig:xgbnemgsad_MEW}}%
		\hfill
		\subfloat[HAPT]{\includegraphics[width=0.24\textwidth]{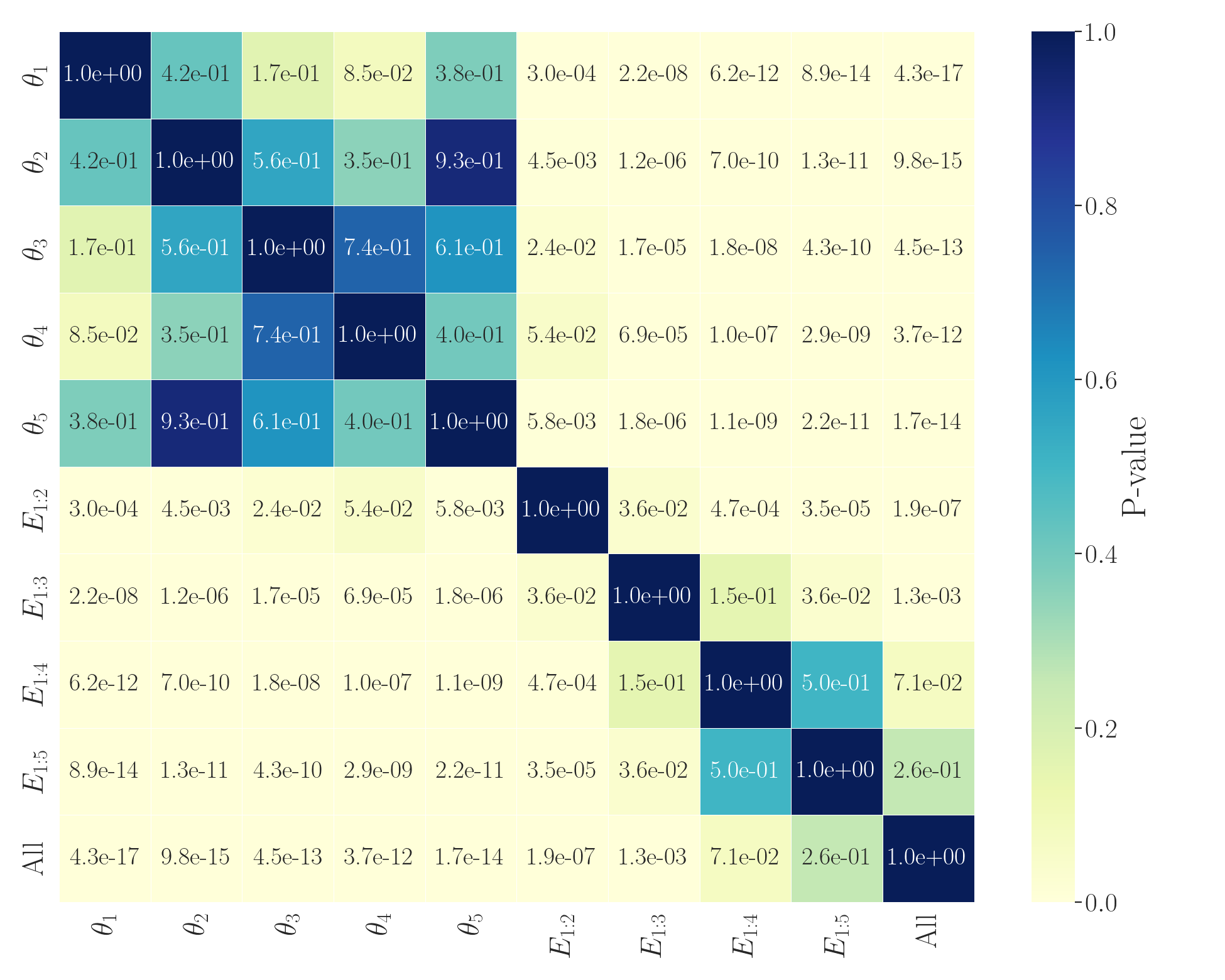}\label{fig:xgbnemhapt_MEW}}%
		\hfill
		\subfloat[ISOLET]{\includegraphics[width=0.24\textwidth]{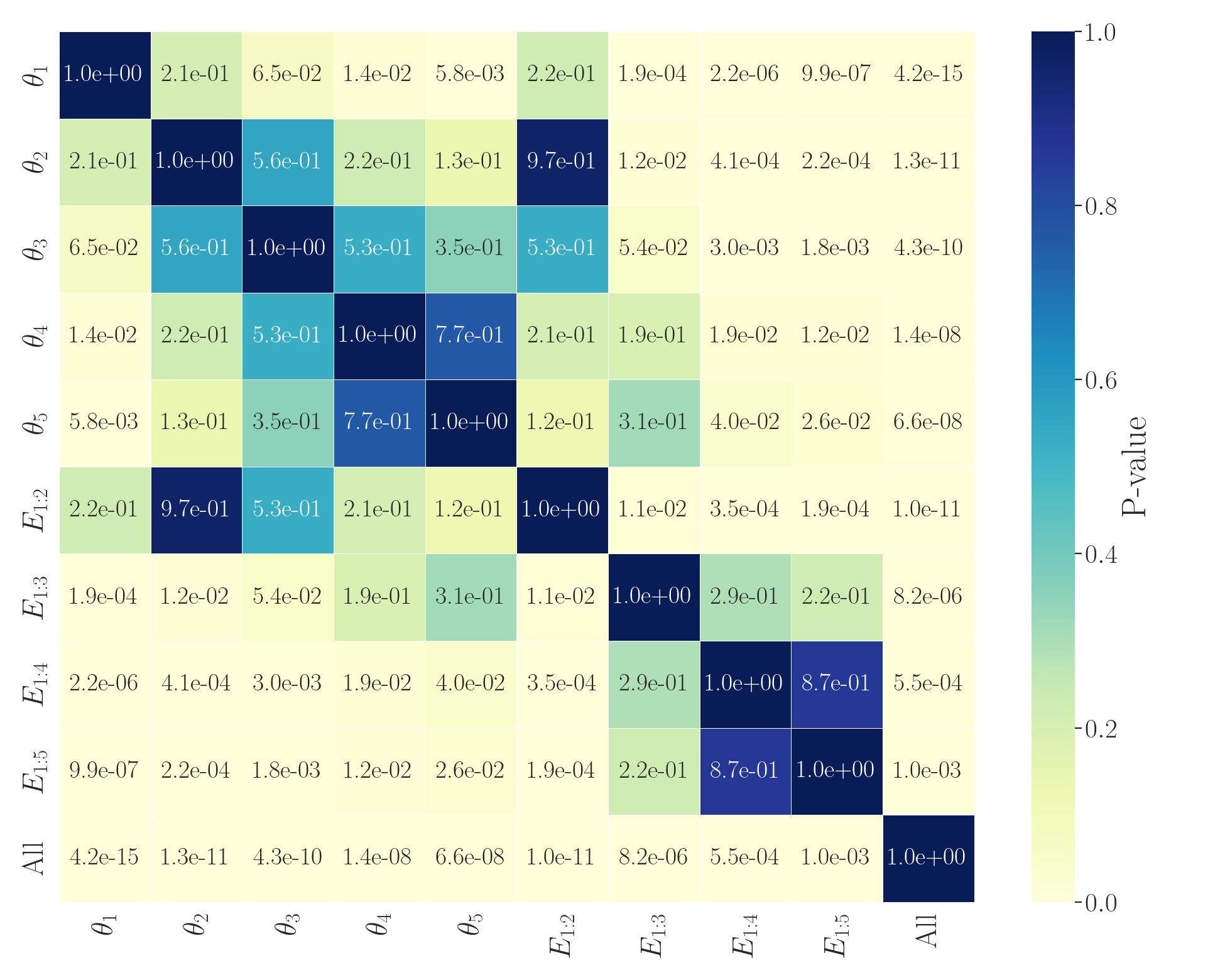}\label{fig:xgbnemisolet_MEW}}%
		\hfill
		\subfloat[PD]{\includegraphics[width=0.24\textwidth]{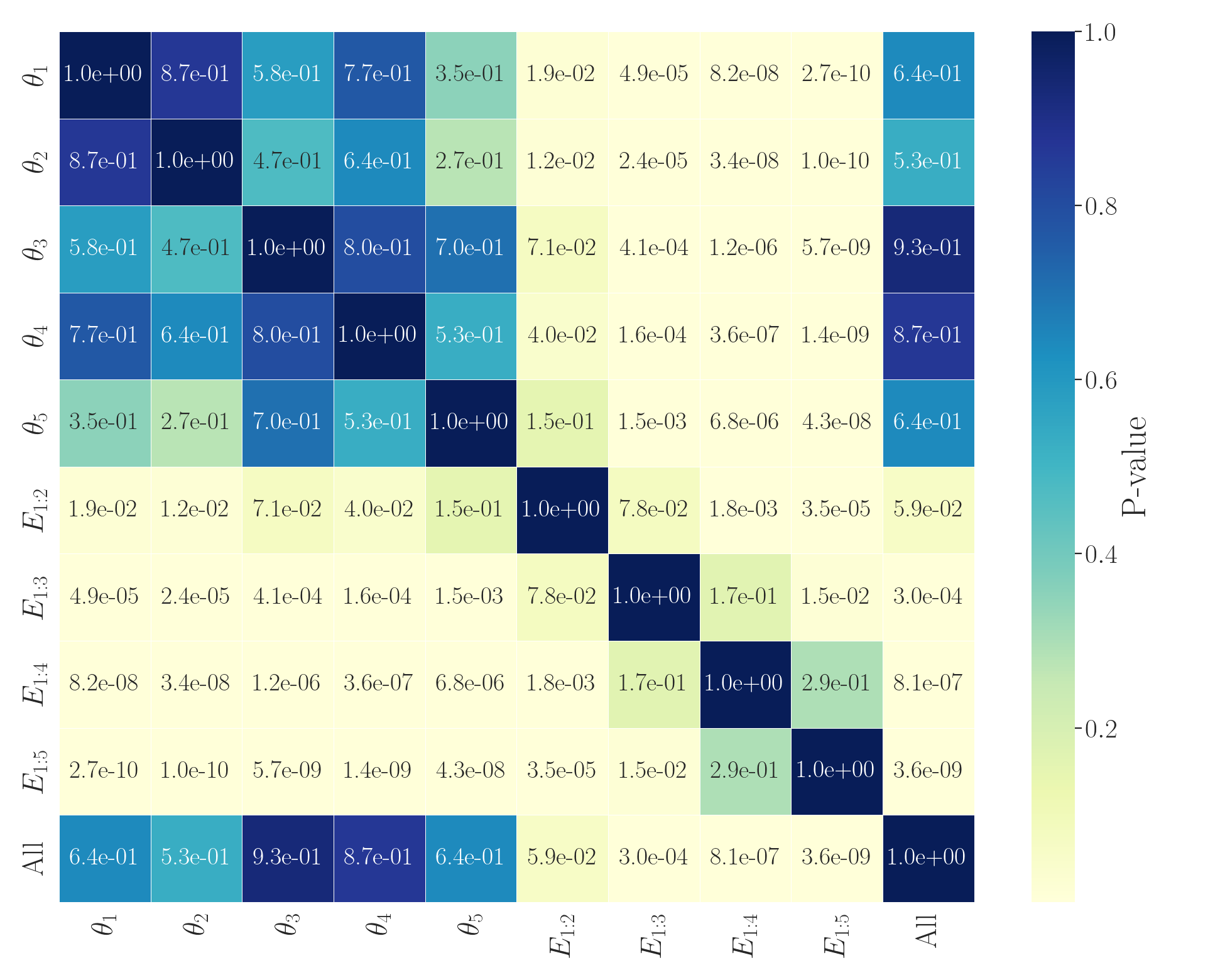}\label{fig:xgbnempd_MEW}}
		%\end{comment}
		\caption[The adjusted Conover's P-values for the obtained MEW values from 30 XGBoost runs.]{The results of the Conover post-hoc test on testing data’s MEW obtained from 30 XGBoost runs.}
		
		\label{fig:xgbnem_MEW}
	\end{figure*}
	\FloatBarrier
	%%%%%%%%%%%%%%%%%%%%%%%%%%%%%%%%%%%%%%%%%%%%%%%%%%%%%%%%%%%%%%%%%%%%%%%%%%%%%%
	
	%%%%%%%%%%%%%%%%%%%%%%%%%%%%%%%%%%%%%%%%%%%%%%%%%%%%%%%%%%%%%%%%%%%%%%%%%%%%%%
	\begin{figure*}[htbp] 
		\centering
		%\begin{comment}
		% First row
		\subfloat[APSF]{\includegraphics[width=0.24\textwidth]{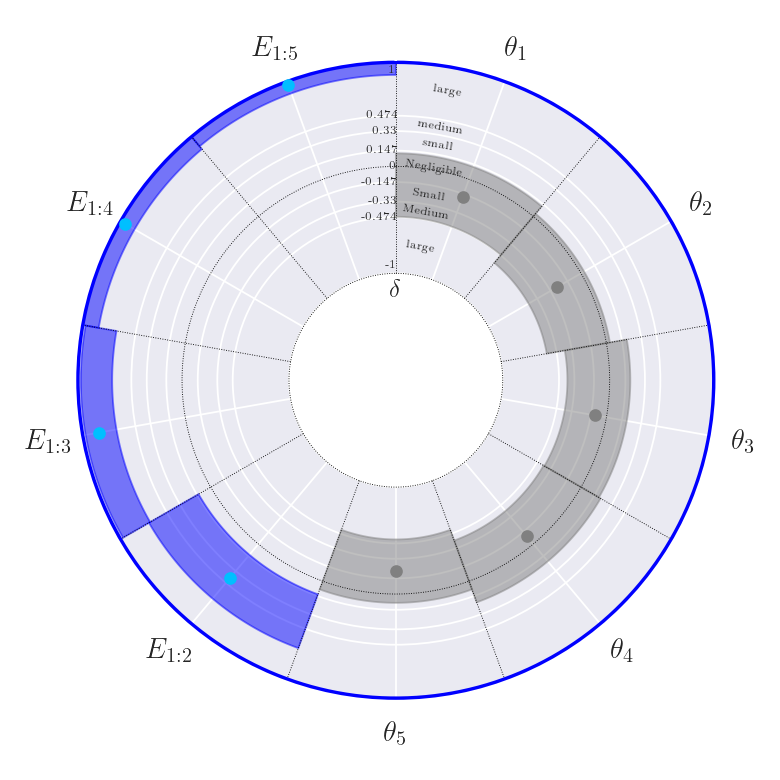}\label{fig:xgbcliffapsf_MEW}}%
		\hfill
		\subfloat[ARWPM]{\includegraphics[width=0.24\textwidth]{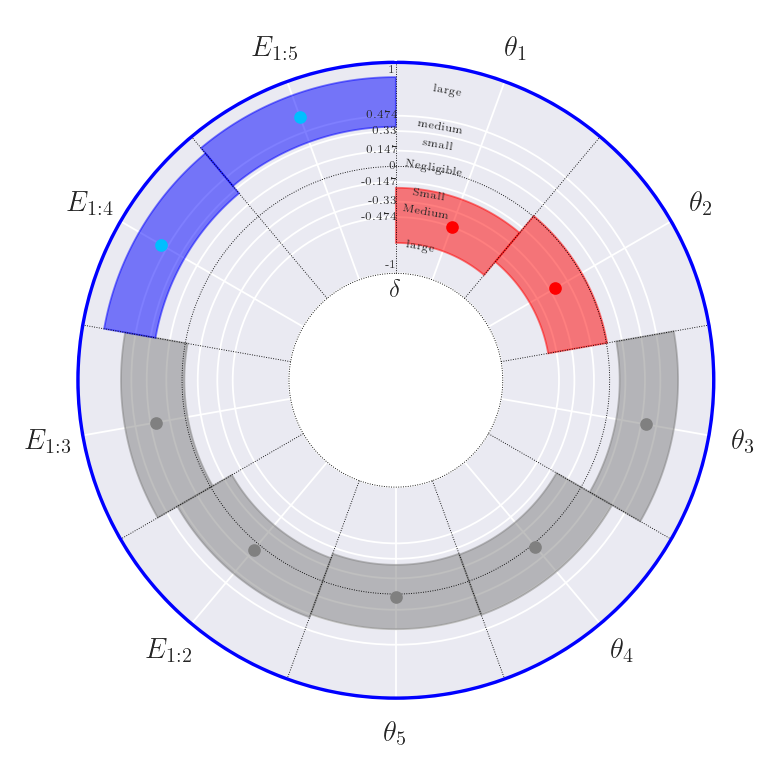}\label{fig:xgbcliffarwpm_MEW}}%
		\hfill
		\subfloat[GECR]{\includegraphics[width=0.24\textwidth]{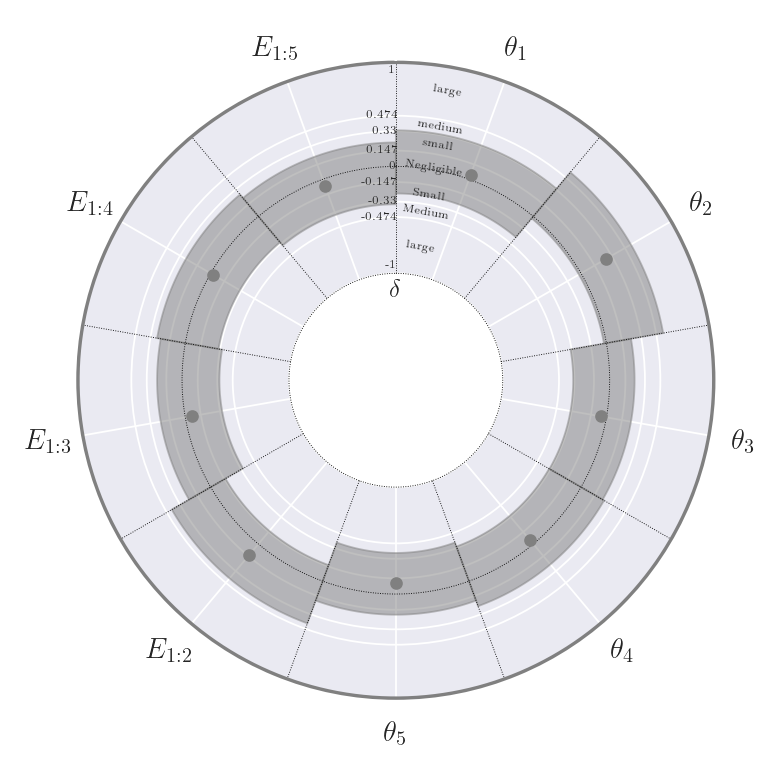}\label{fig:xgbcliffgecr_MEW}}%
		\hfill
		\subfloat[GFE]{\includegraphics[width=0.24\textwidth]{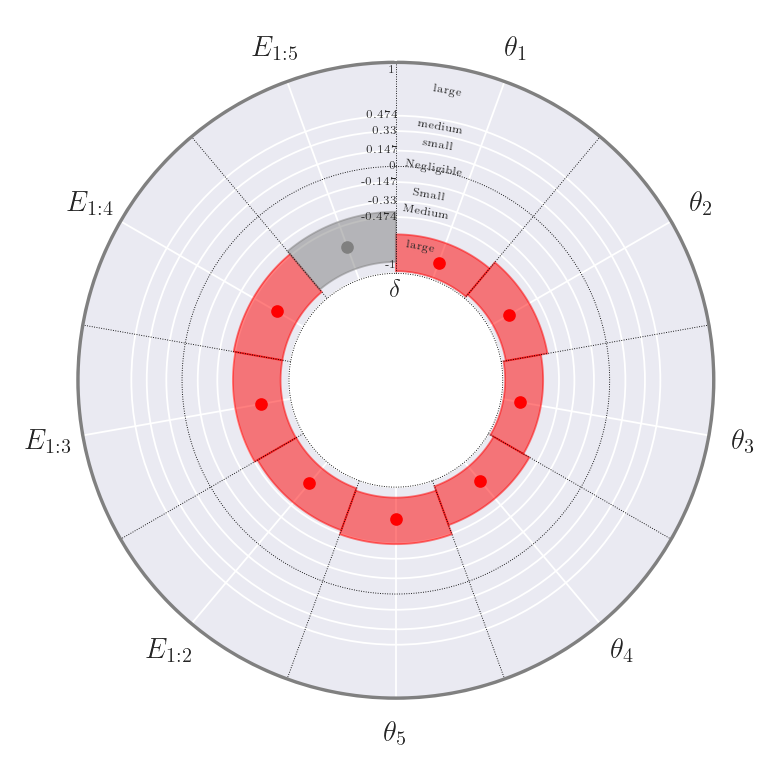}\label{fig:xgbcliffgfe_MEW}}
		
		% Second row
		\subfloat[GSAD]{\includegraphics[width=0.24\textwidth]{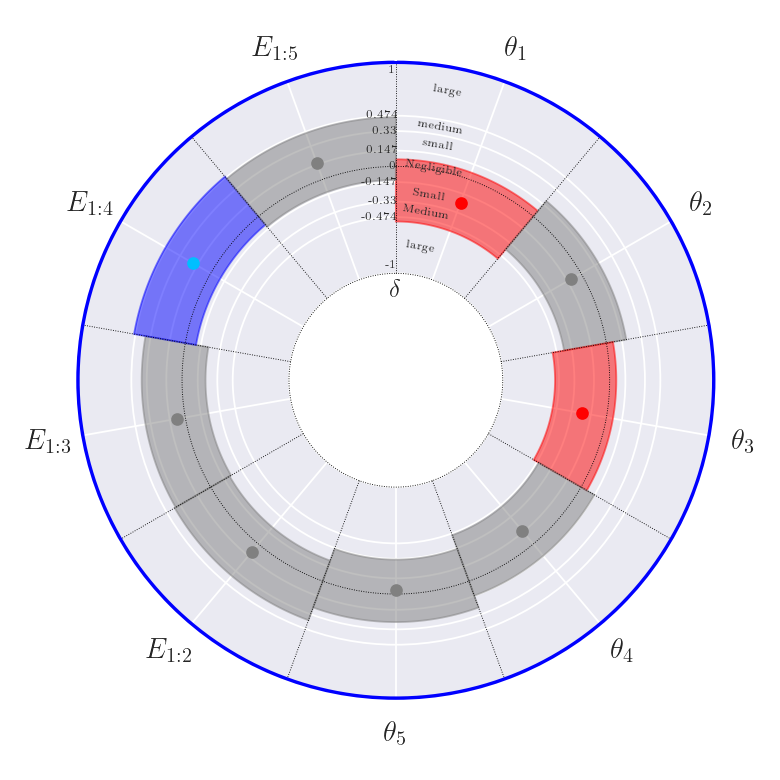}\label{fig:xgbcliffgsad_MEW}}%
		\hfill
		\subfloat[HAPT]{\includegraphics[width=0.24\textwidth]{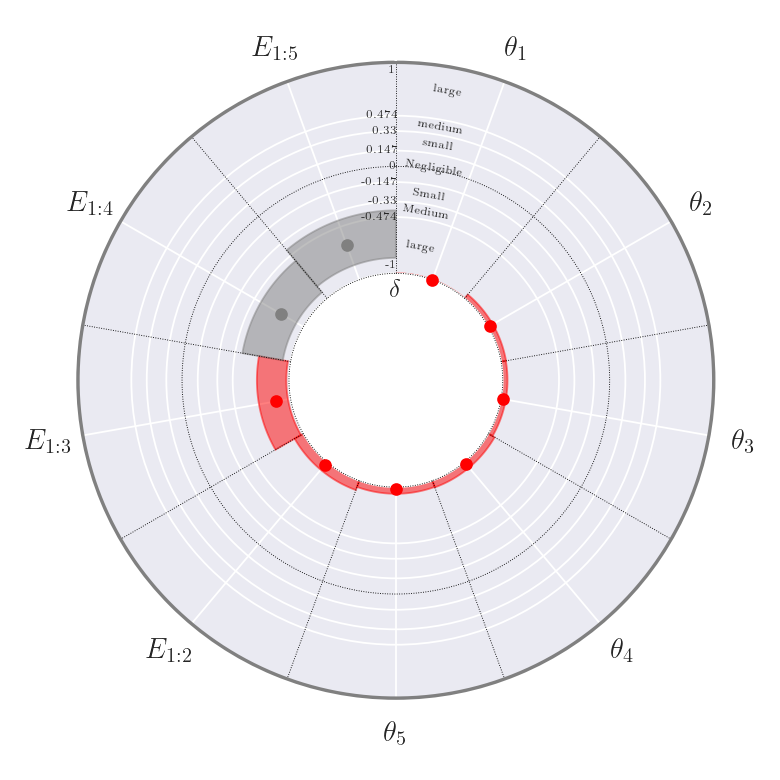}\label{fig:xgbcliffhapt_MEW}}%
		\hfill
		\subfloat[ISOLET]{\includegraphics[width=0.24\textwidth]{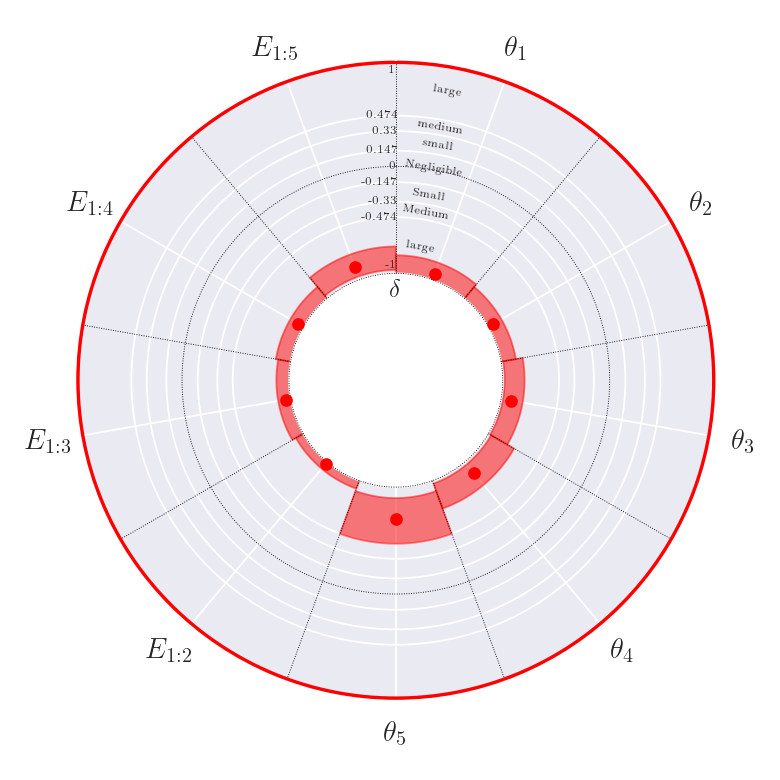}\label{fig:xgbcliffisolet_MEW}}%
		\hfill
		\subfloat[PD]{\includegraphics[width=0.24\textwidth]{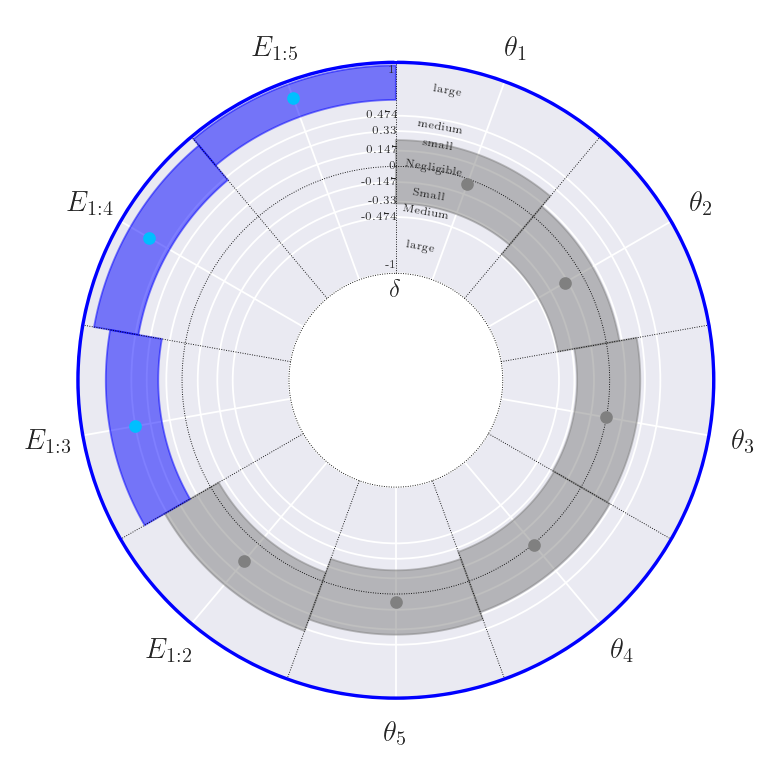}\label{fig:xgbcliffpd_MEW}}
		%\end{comment}
		\caption[The Cliff's $\delta$ effect size measure and its 95\% confidence intervals for the MEW values obtained from 30 XGBoost runs.]{Effect size analysis of test data MEW across 30 XGBoost runs using Cliff's $\delta$. Each point represents the actual value obtained, with segments denoting 95\% confidence intervals based on 10,000 bootstrap resamplings. The outer ring color visualizes the statistical significance: grey illustrates no significant difference (adjusted Friedman's P-value$>0.05$), while color indicates significant differences; blue indicates at least one view and/or ensemble outperforms the benchmark (adjusted Conover's p-value$ < 0.05$, Cliff's $\delta > 0$), and red signifies all views and ensembles underperform relative to the benchmark (adjusted Conover's p-value$ < 0.05$, Cliff's $\delta < 0$). Segment colors show performance difference against the benchmark: grey for no significant difference (adjusted Conover's p-value$  > 0.05$), blue for better performance (Cliff's $\delta > 0$), and red for worse performance (Cliff's $\delta < 0$).}
		
		\label{fig:xgbcliff_MEW}
	\end{figure*}
	%%%%%%%%%%%%%%%%%%%%%%%%%%%%%%%%%%%%%%%%%%%%%%%%%%%%%%%%%%%%%%%%%%%%%%%%%%%%%%
	
	%%%%%%%%%%%%%%%%%%%%%%%%%%%%%%%%%%%%%%%%%%%%%%%%%%%%%%%%%%%%%%%%%%%%%%%%%%%%%%
	\begin{table*}[htbp]
		\centering
		\caption[The results of Friedman and Conover tests and Cliff's $\delta$ analysis for the MEW values obtained from 30 XGBoost runs.]{Statistical comparison of MEW results for testing data obtained from XGBoost runs. W, T, and L denote win, tie, and loss based on adjusted Friedman and Conover's p-values. Effect sizes are calculated using Cliff's Delta method and are categorized as negligible, small, medium, or large.}
		\label{tab:xgbmew}
		\resizebox{\linewidth}{!}{%
			\begin{tabular}{c|ccccccccc}
				\hline
				\multicolumn{10}{c}{XGBoost's MEW}\\
				\hline
				Dataset & $\theta_1$ & $\theta_2$ & $\theta_3$ & $\theta_4$ & $\theta_5$ & $E_{1:2}$ & $E_{1:3}$ & $E_{1:4}$ & $E_{1:5}$ \\
				\hline
				APSF  & T (small) & T (small) & T (negligible) & T (negligible) & T (small) & W (medium) & W (large) & W (large) & W (large) \\
				ARWPM  & L (medium) & L (small) & T (medium) & T (negligible) & T (negligible) & T (negligible) & T (small) & W (large) & W (large) \\
				GECR  & T (negligible) & T (small) & T (negligible) & T (negligible) & T (negligible) & T (negligible) & T (negligible) & T (negligible) & T (negligible) \\
				GFE  & L (large) & L (large) & L (large) & L (large) & L (large) & L (large) & L (large) & L (large) & T (large) \\
				GSAD  & L (small) & T (negligible) & L (small) & T (small) & T (negligible) & T (negligible) & T (negligible) & W (small) & T (small) \\
				HAPT  & L (large) & L (large) & L (large) & L (large) & L (large) & L (large) & L (large) & T (large) & T (large) \\
				ISOLET  & L (large) & L (large) & L (large) & L (large) & L (large) & L (large) & L (large) & L (large) & L (large) \\
				PD  & T (negligible) & T (small) & T (negligible) & T (negligible) & T (negligible) & T (small) & W (large) & W (large) & W (large) \\
				\hline
				W - T - L  & 0 - 3 - 5 & 0 - 4 - 4 & 0 - 4 - 4 & 0 - 5 - 3 & 0 - 5 - 3 & 1 - 4 - 3 & 2 - 3 - 3 & 4 - 2 - 2 & 3 - 4 - 1 \\
				\hline
			\end{tabular}
		}
	\end{table*}
	\FloatBarrier
	%%%%%%%%%%%%%%%%%%%%%%%%%%%%%%%%%%%%%%%%%%%%%%%%%%%%%%%%%%%%%%%%%%%%%%%%%%%%%%

	%\subsection{The running time results for XGBoost}
	%\label{ssub:xgbtime}
	% XGB: Time
	%%%%%%%%%%%%%%%%%%%%%%%%%%%%%%%%%%%%%%%%%%%%%%%%%%%%%%%%%%%%%%%%%%%%%%%%%%%%%%
	\begin{figure*}[t] 
		\centering
		%\begin{comment}
		% First row
		\subfloat[APSF]{\includegraphics[width=0.24\textwidth]{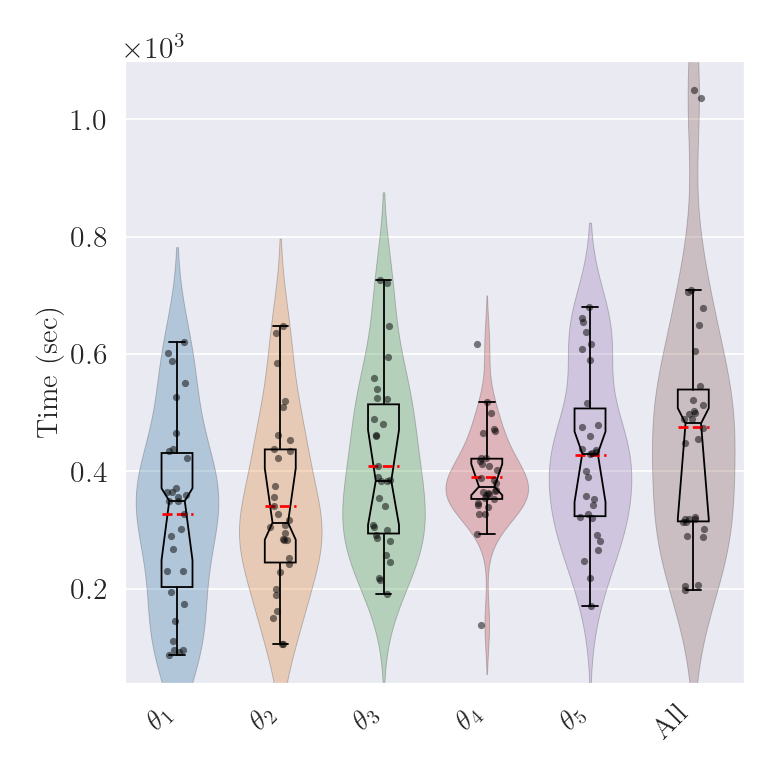}\label{fig:xgbapsf_Time}}%
		\hfill
		\subfloat[ARWPM]{\includegraphics[width=0.24\textwidth]{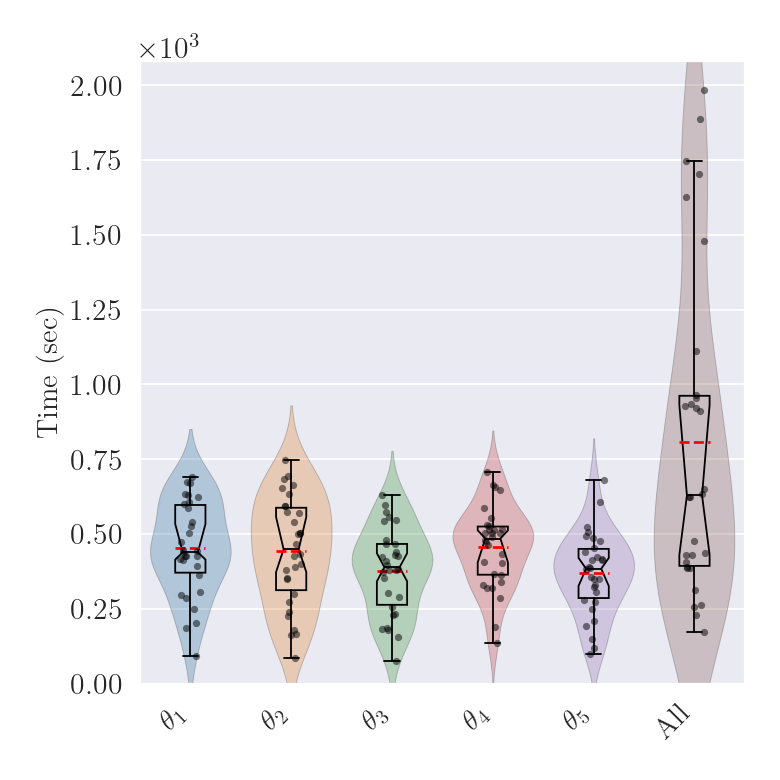}\label{fig:xgbarwpm_Time}}%
		\hfill
		\subfloat[GECR]{\includegraphics[width=0.24\textwidth]{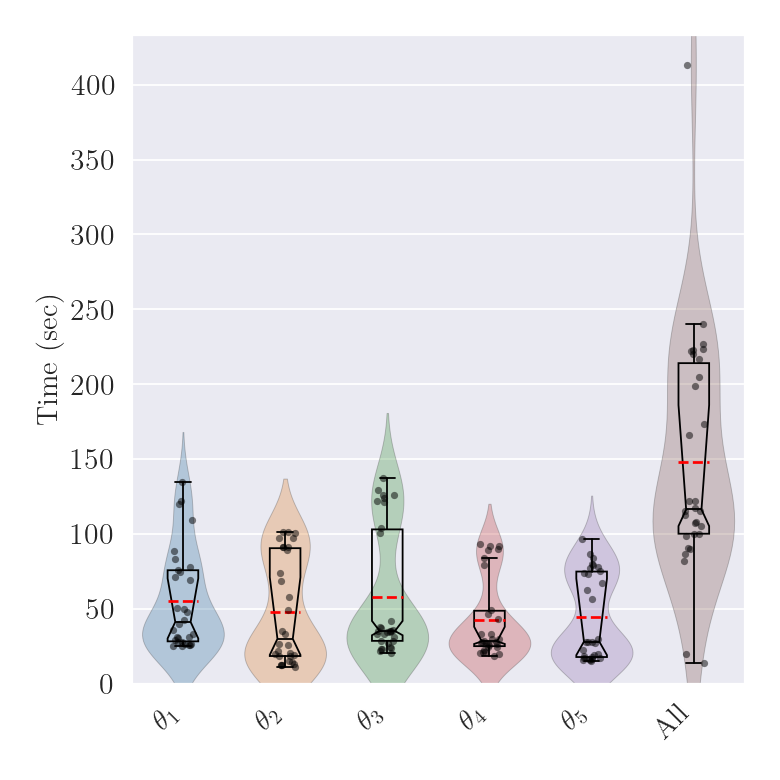}\label{fig:xgbgecr_Time}}%
		\hfill
		\subfloat[GFE]{\includegraphics[width=0.24\textwidth]{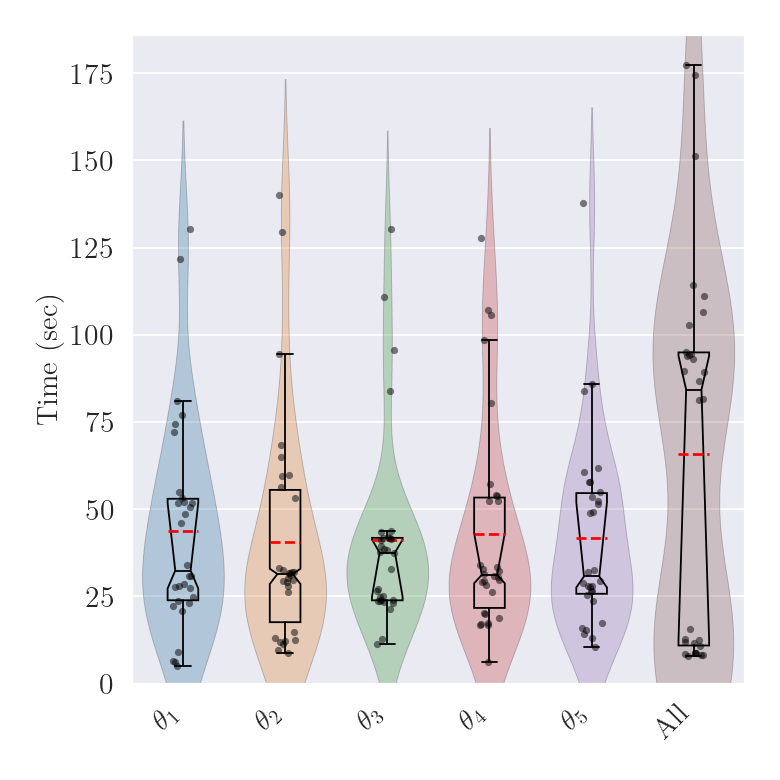}\label{fig:xgbgfe_Time}}
		
		% Second row
		\subfloat[GSAD]{\includegraphics[width=0.24\textwidth]{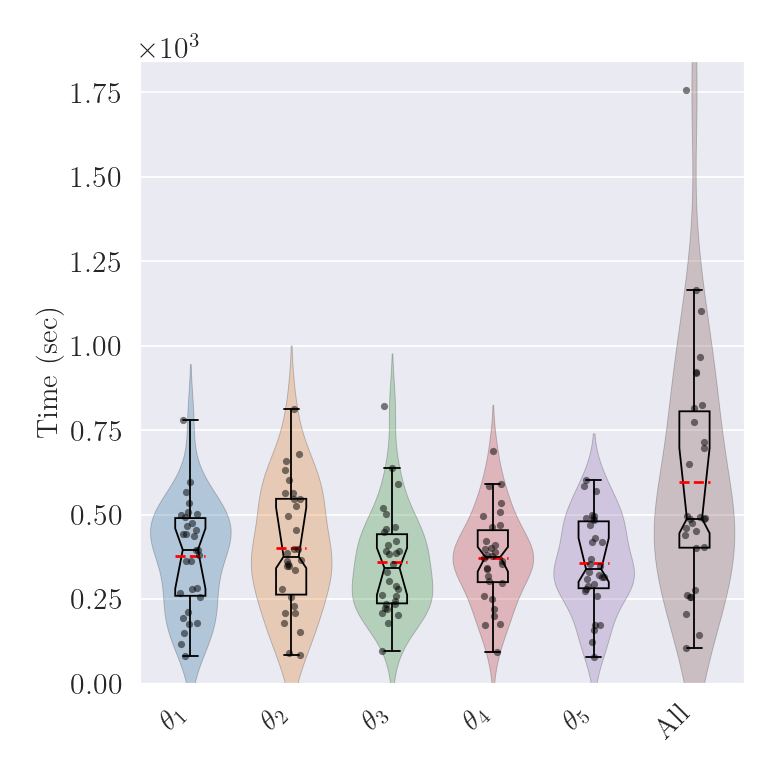}\label{fig:fpgsad_Time}}%
		\hfill
		\subfloat[HAPT]{\includegraphics[width=0.24\textwidth]{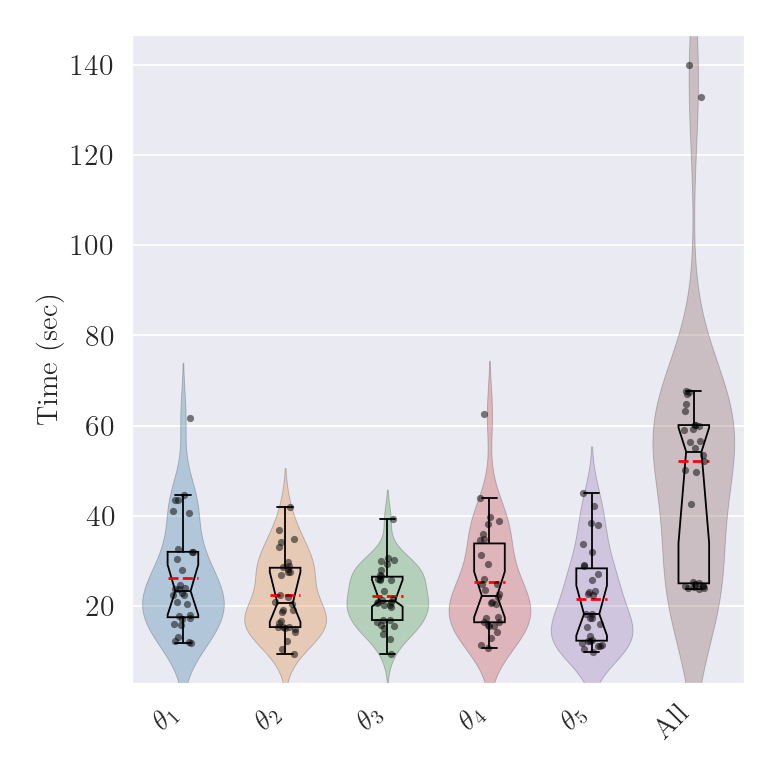}\label{fig:xgbhapt_Time}}%
		\hfill
		\subfloat[ISOLET]{\includegraphics[width=0.24\textwidth]{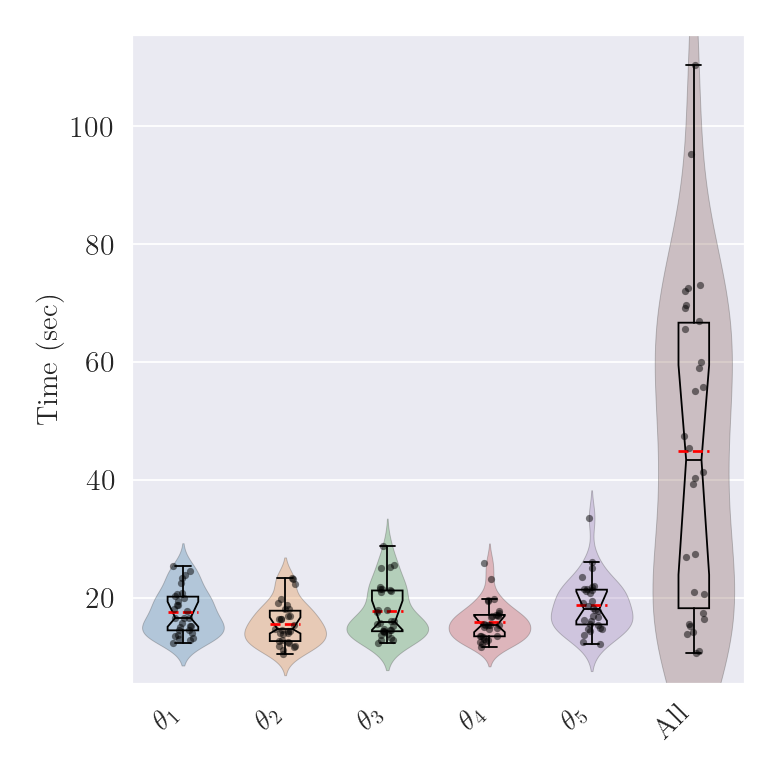}\label{fig:xgbisolet_Time}}%
		\hfill
		\subfloat[PD]{\includegraphics[width=0.24\textwidth]{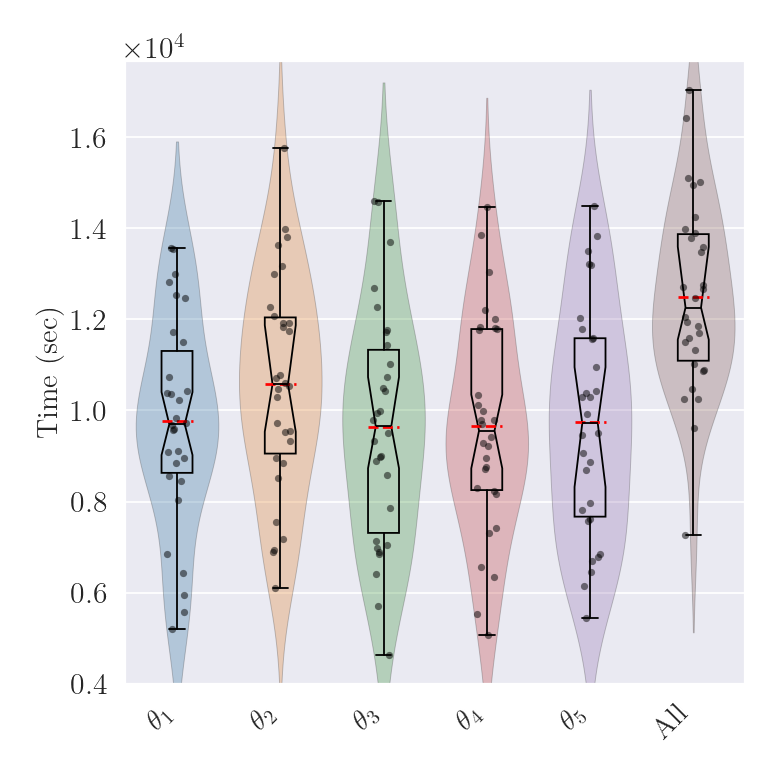}\label{fig:xgbpd_Time}}
		%\end{comment}
		\caption[The distribution of the obtained running time for 30 XGBoost runs.]{The raincloud plot of running time results obtained from 30 XGBoost runs.}
		
		\label{fig:xgb_Time}
	\end{figure*}
	%%%%%%%%%%%%%%%%%%%%%%%%%%%%%%%%%%%%%%%%%%%%%%%%%%%%%%%%%%%%%%%%%%%%%%%%%%%%%%
	
	%%%%%%%%%%%%%%%%%%%%%%%%%%%%%%%%%%%%%%%%%%%%%%%%%%%%%%%%%%%%%%%%%%%%%%%%%%%%%%
	\begin{figure*}[t] 
		\centering
		%\begin{comment}
		% First row
		\subfloat[APSF]{\includegraphics[width=0.24\textwidth]{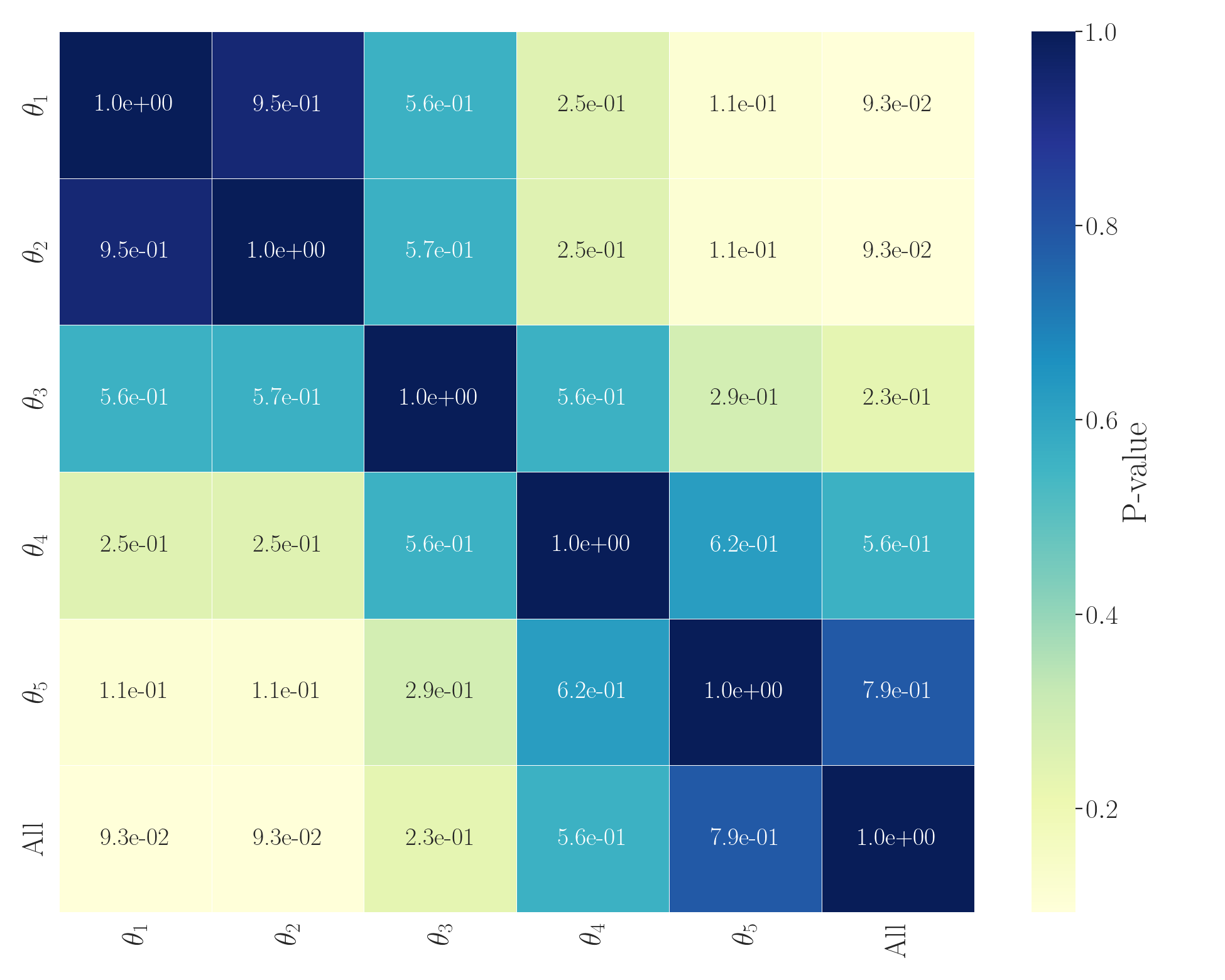}\label{fig:xgbnemapsf_Time}}%
		\hfill
		\subfloat[ARWPM]{\includegraphics[width=0.24\textwidth]{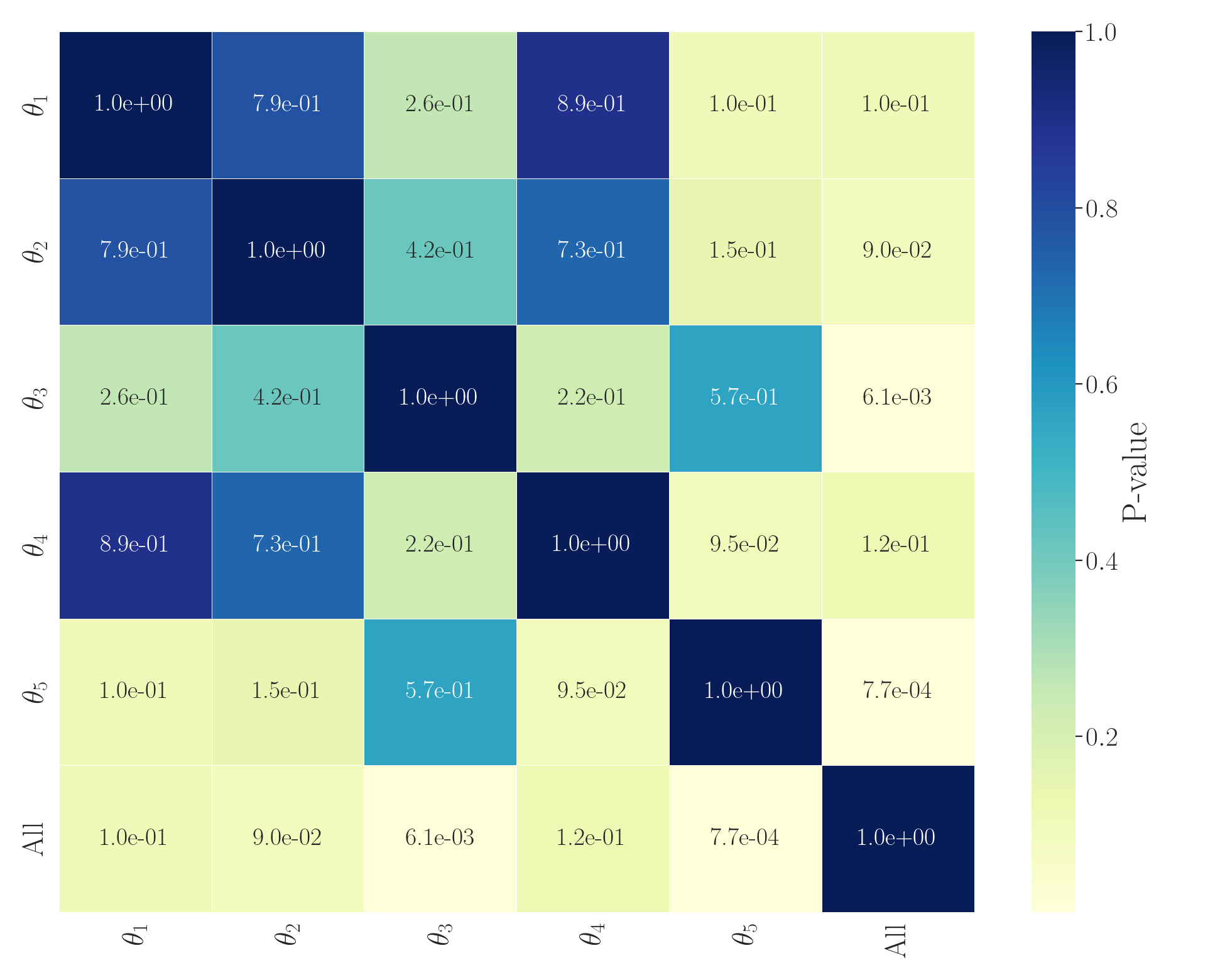}\label{fig:xgbnemarwpm_Time}}%
		\hfill
		\subfloat[GECR]{\includegraphics[width=0.24\textwidth]{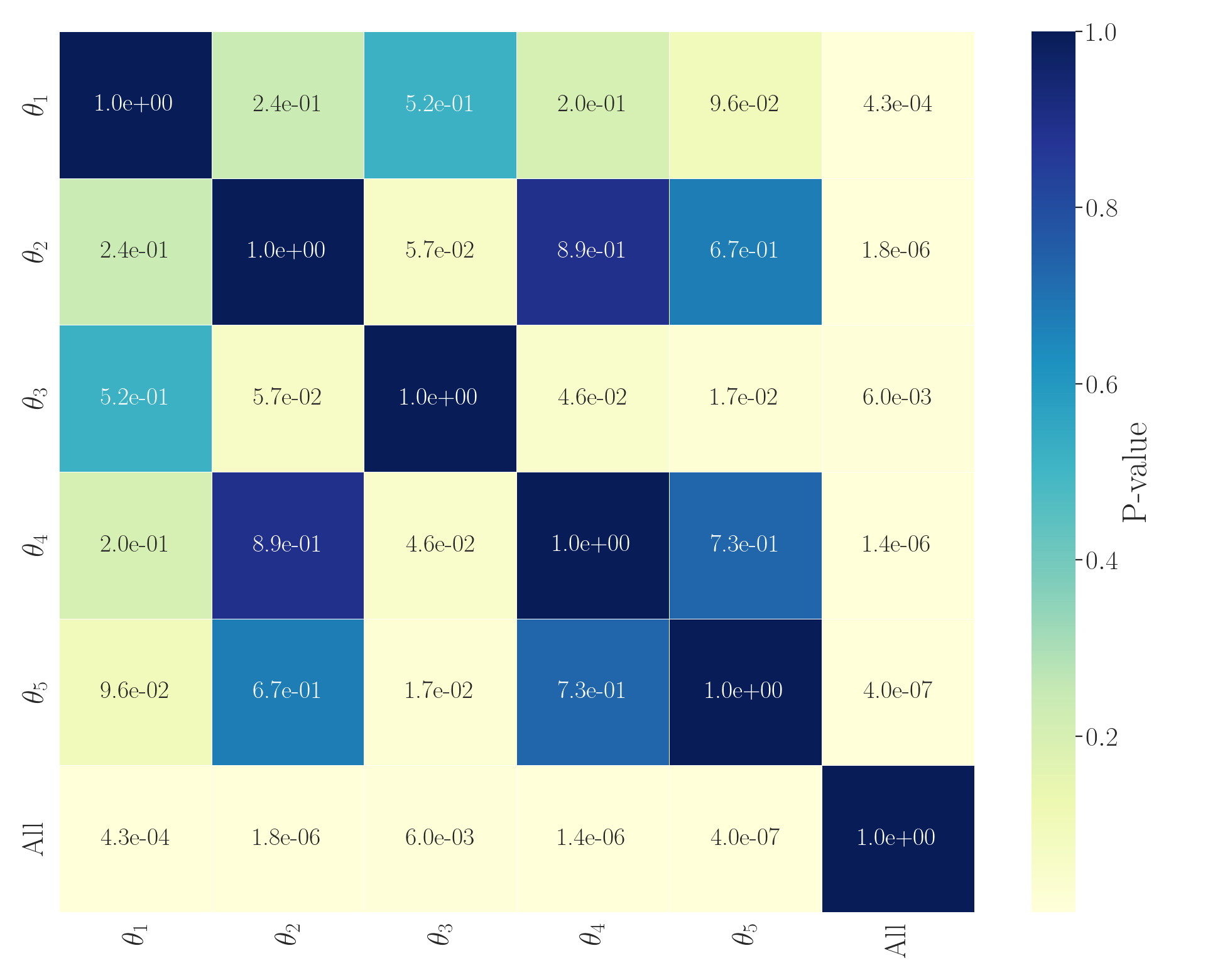}\label{fig:xgbnemgecr_Time}}%
		\hfill
		\subfloat[GFE]{\includegraphics[width=0.24\textwidth]{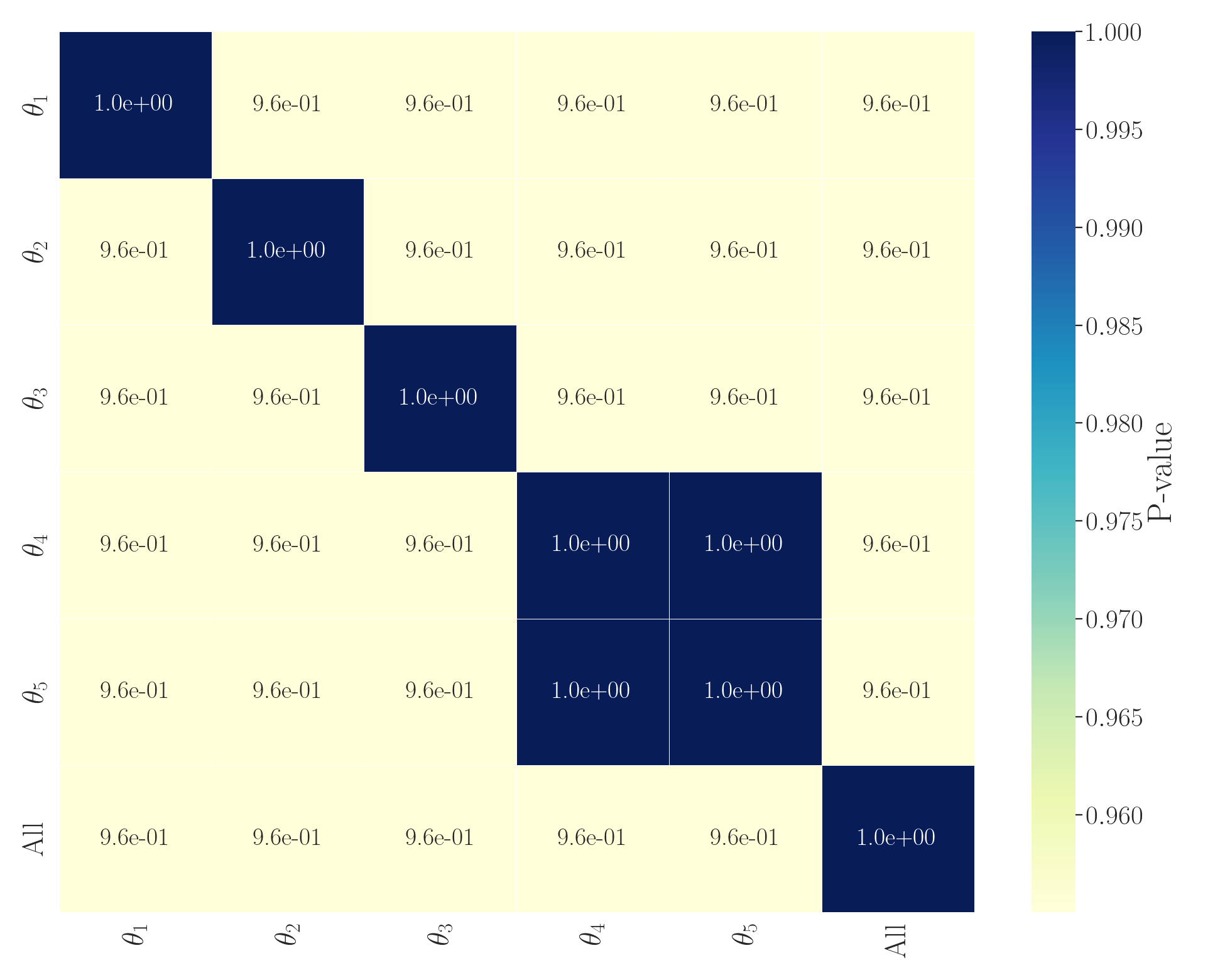}\label{fig:xgbnemgfe_Time}}
		
		% Second row
		\subfloat[GSAD]{\includegraphics[width=0.24\textwidth]{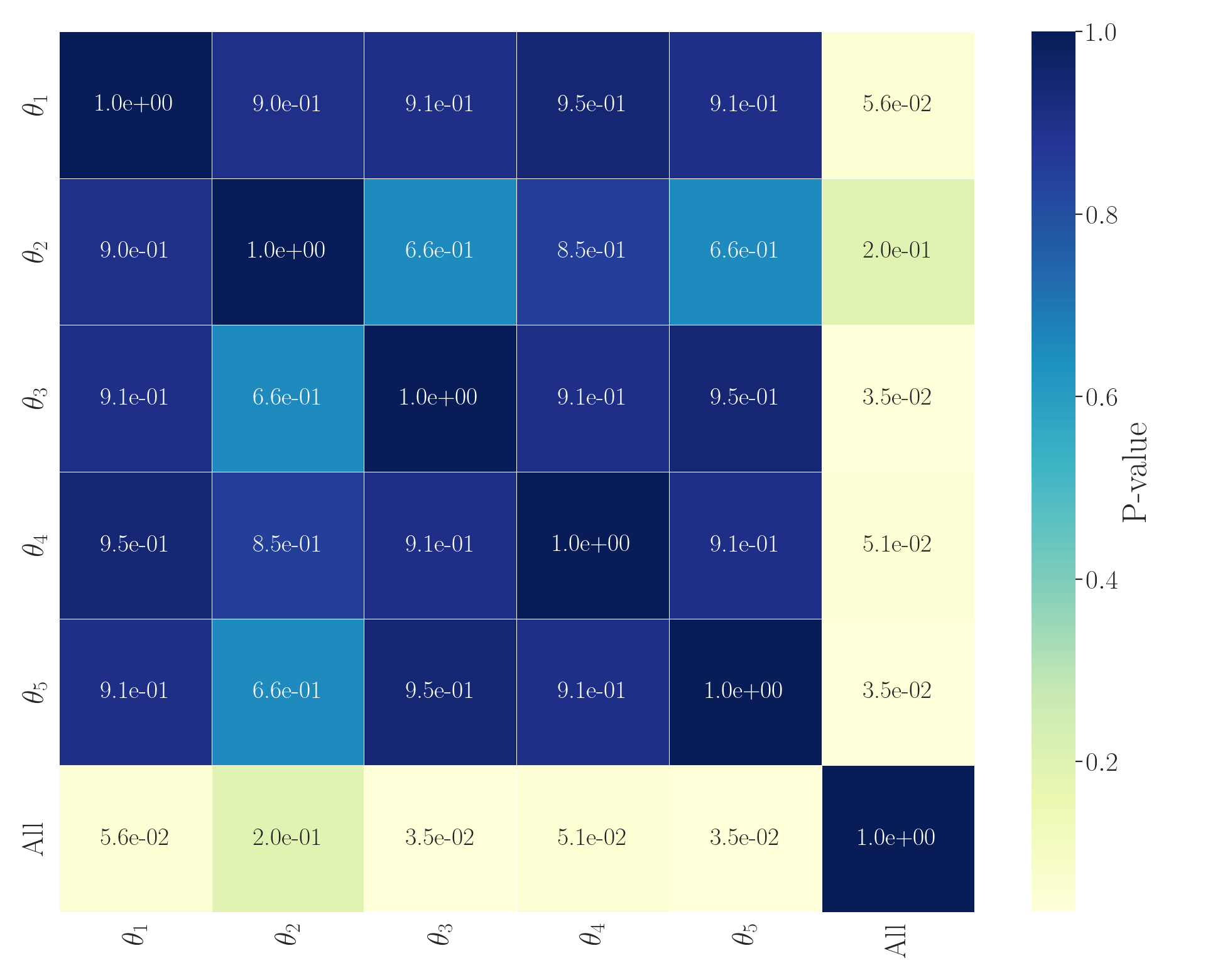}\label{fig:xgbnemgsad_Time}}%
		\hfill
		\subfloat[HAPT]{\includegraphics[width=0.24\textwidth]{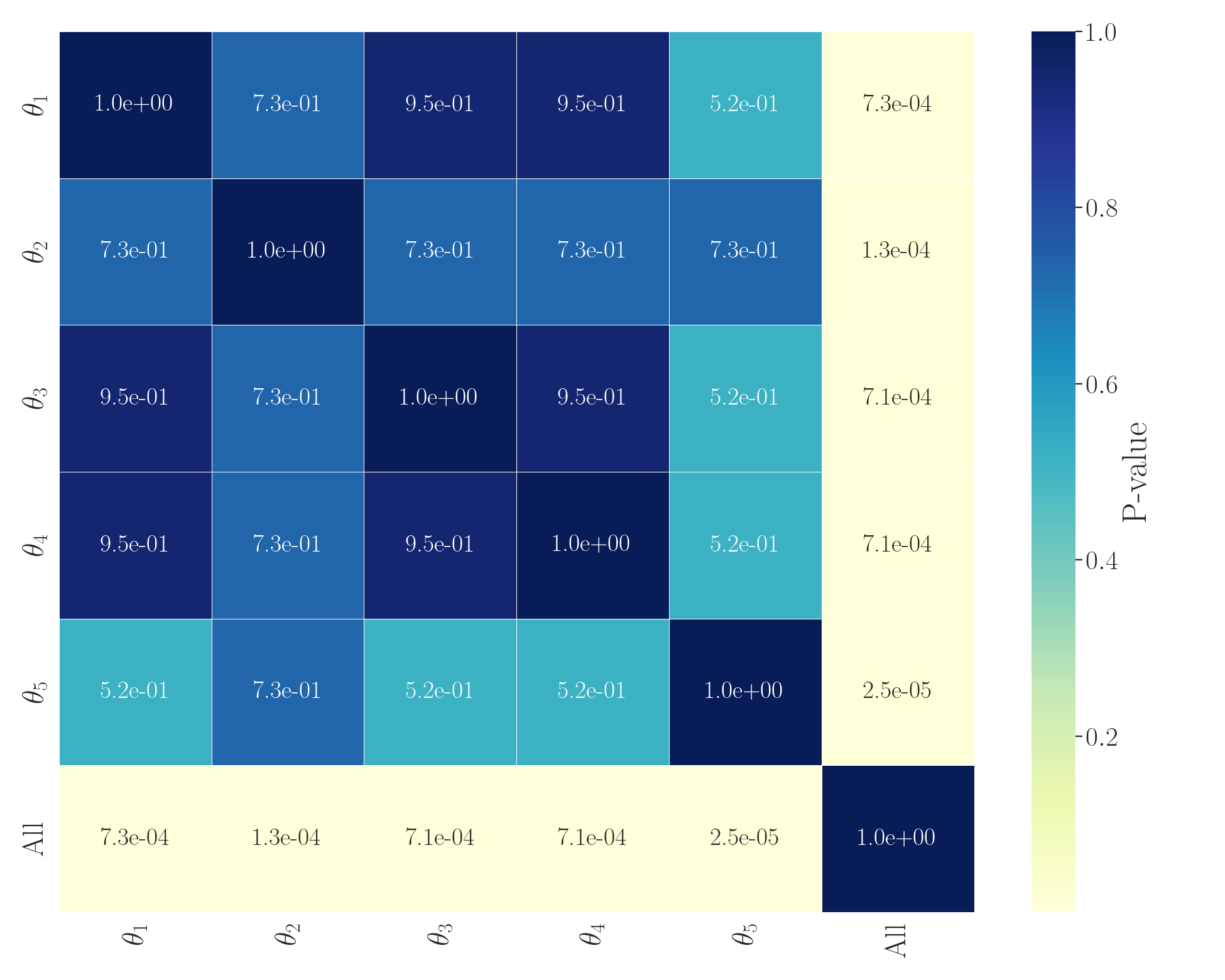}\label{fig:xgbnemhapt_Time}}%
		\hfill
		\subfloat[ISOLET]{\includegraphics[width=0.24\textwidth]{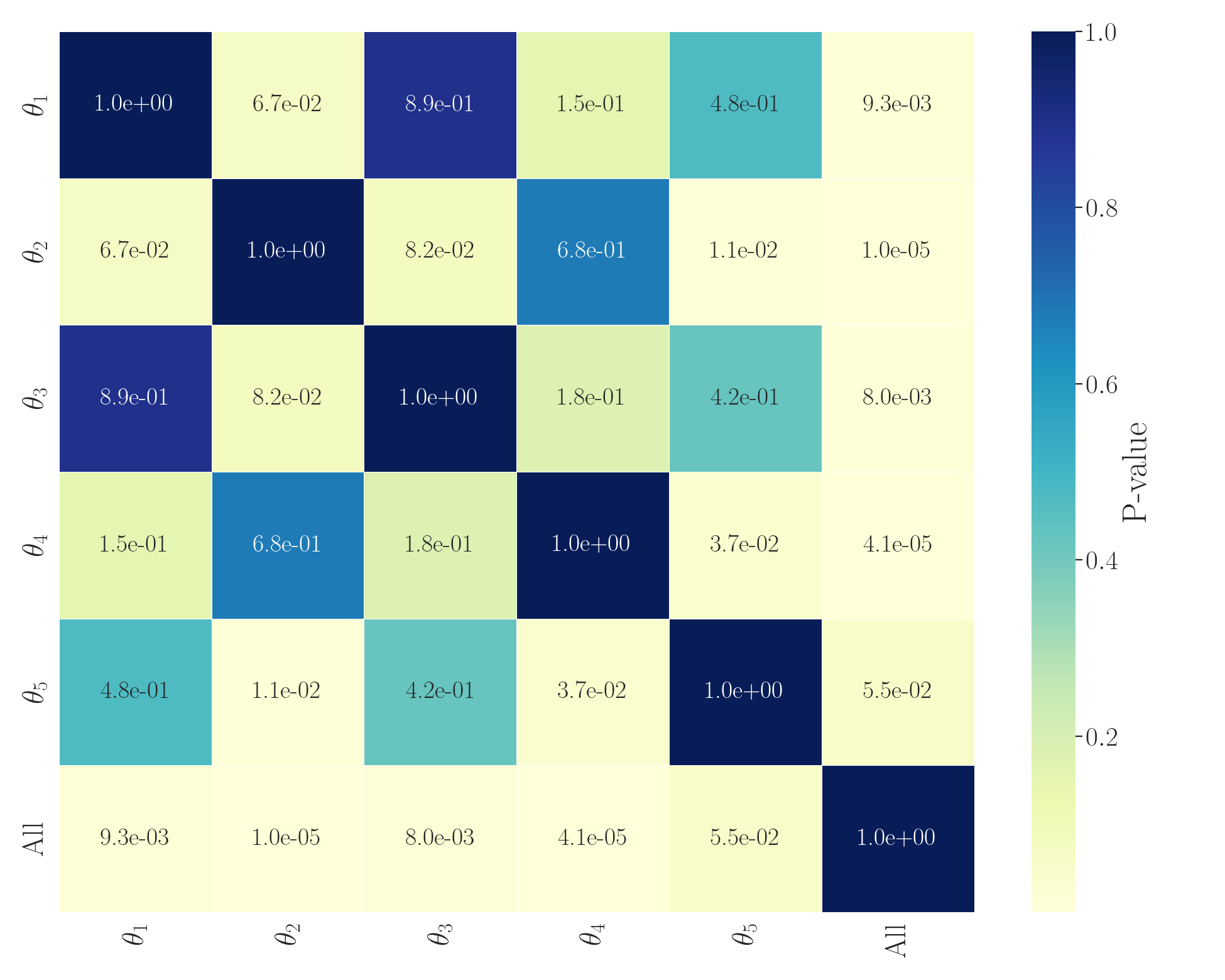}\label{fig:xgbnemisolet_Time}}%
		\hfill
		\subfloat[PD]{\includegraphics[width=0.24\textwidth]{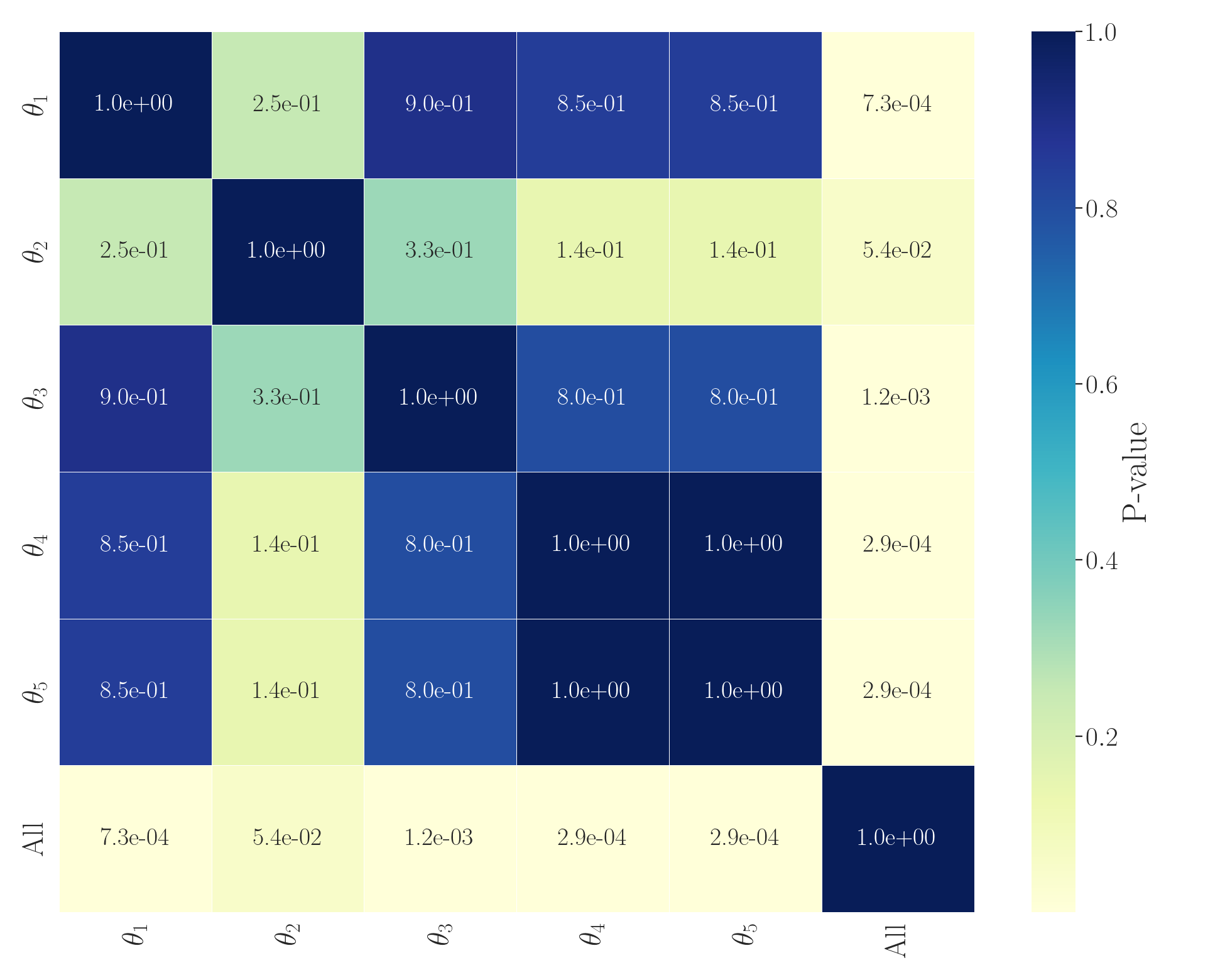}\label{fig:xgbnempd_Time}}
		%\end{comment}
		\caption[The adjusted Conover's P-values for the obtained running time of 30 XGBoost runs.]{The results of the Conover post-hoc test on testing data’s running time obtained from 30 XGBoost runs.}
		
		\label{fig:xgbnem_Time}
	\end{figure*}
	\FloatBarrier
	%%%%%%%%%%%%%%%%%%%%%%%%%%%%%%%%%%%%%%%%%%%%%%%%%%%%%%%%%%%%%%%%%%%%%%%%%%%%%%
	
	%%%%%%%%%%%%%%%%%%%%%%%%%%%%%%%%%%%%%%%%%%%%%%%%%%%%%%%%%%%%%%%%%%%%%%%%%%%%%%
	\begin{figure*}[htbp] 
		\centering
		%\begin{comment}
		% First row
		\subfloat[APSF]{\includegraphics[width=0.24\textwidth]{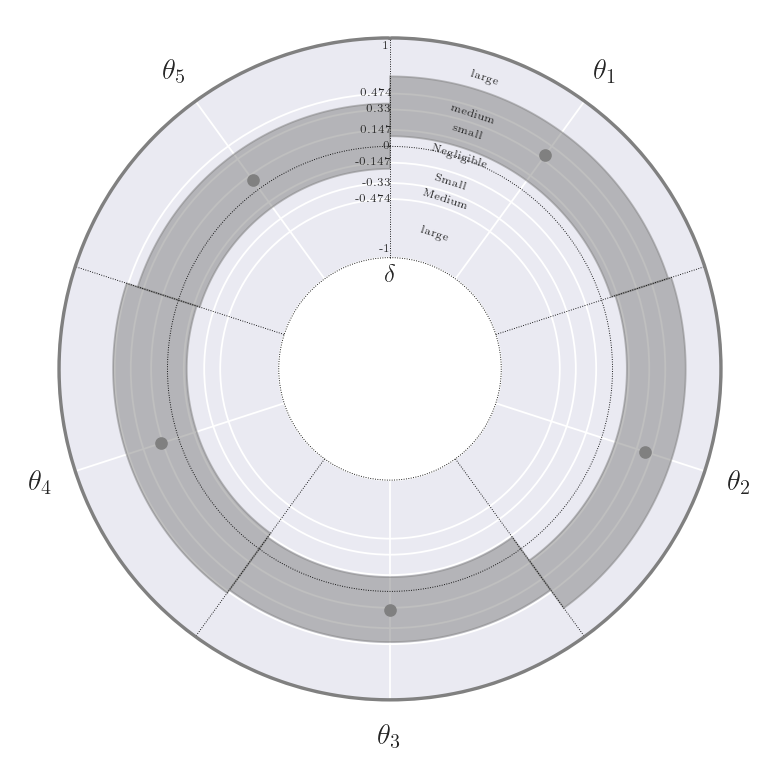}\label{fig:xgbcliffapsf_Time}}%
		\hfill
		\subfloat[ARWPM]{\includegraphics[width=0.24\textwidth]{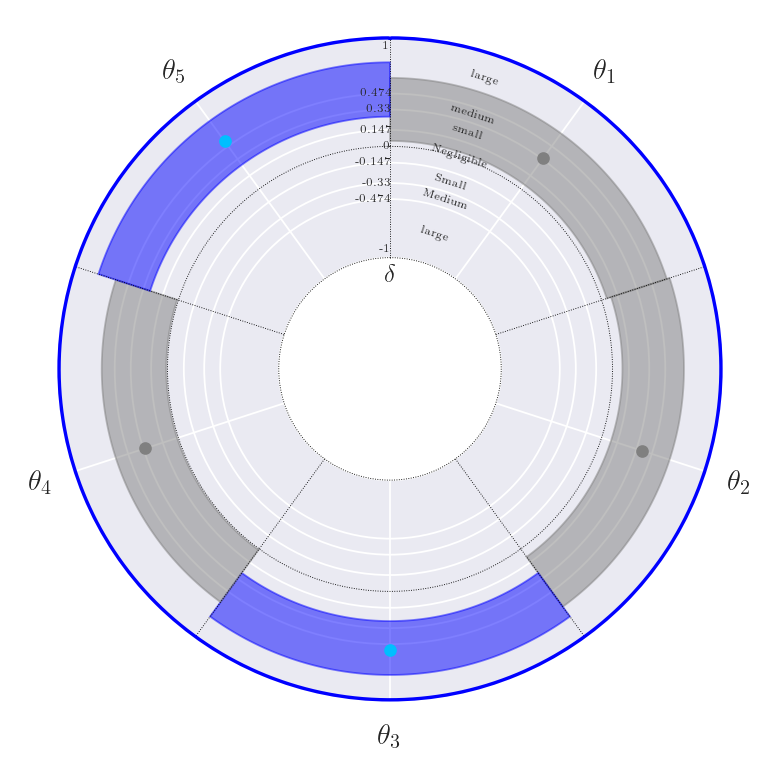}\label{fig:xgbcliffarwpm_Time}}%
		\hfill
		\subfloat[GECR]{\includegraphics[width=0.24\textwidth]{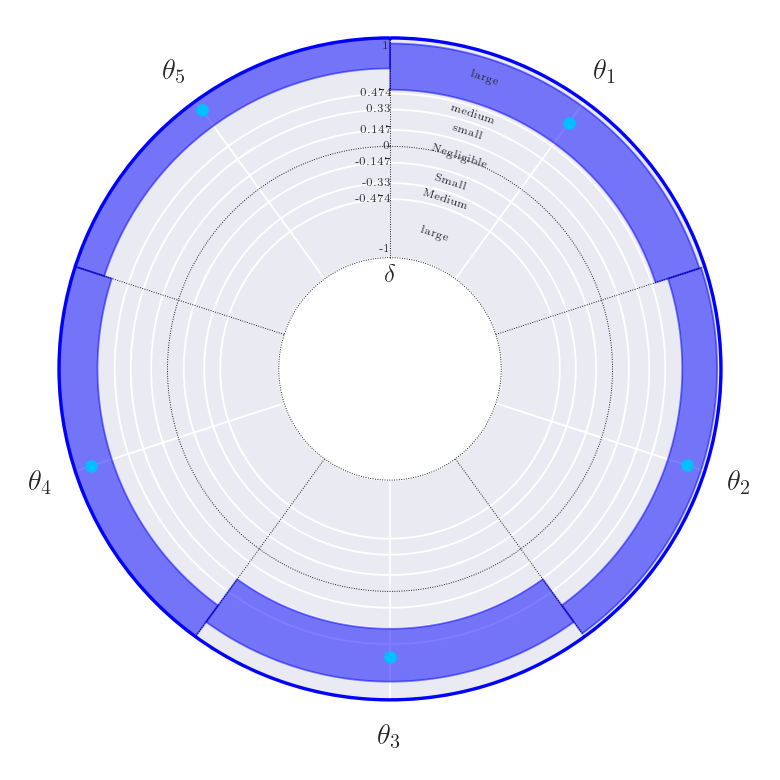}\label{fig:xgbcliffgecr_Time}}%
		\hfill
		\subfloat[GFE]{\includegraphics[width=0.24\textwidth]{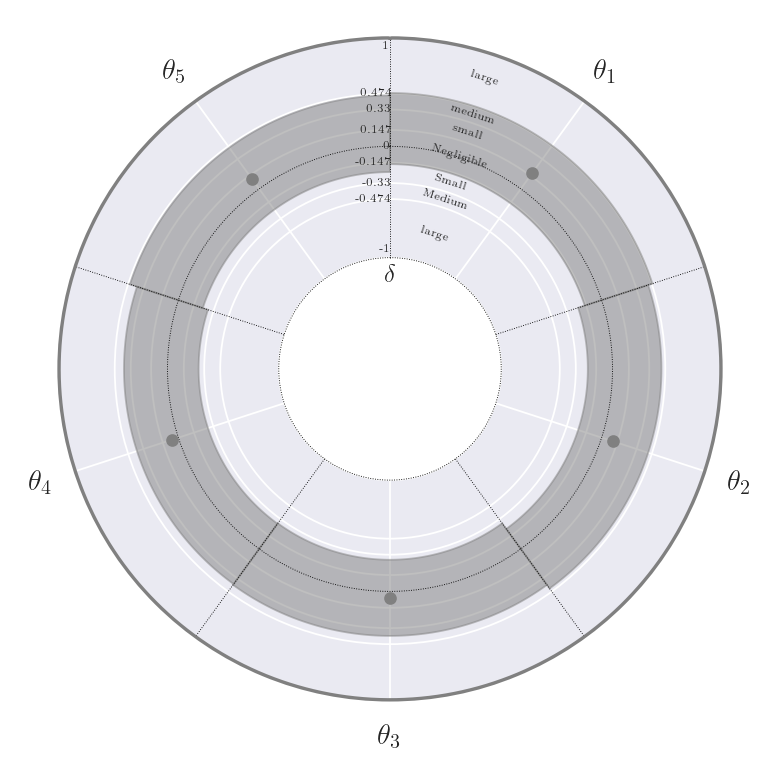}\label{fig:xgbcliffgfe_Time}}
		
		% Second row
		\subfloat[GSAD]{\includegraphics[width=0.24\textwidth]{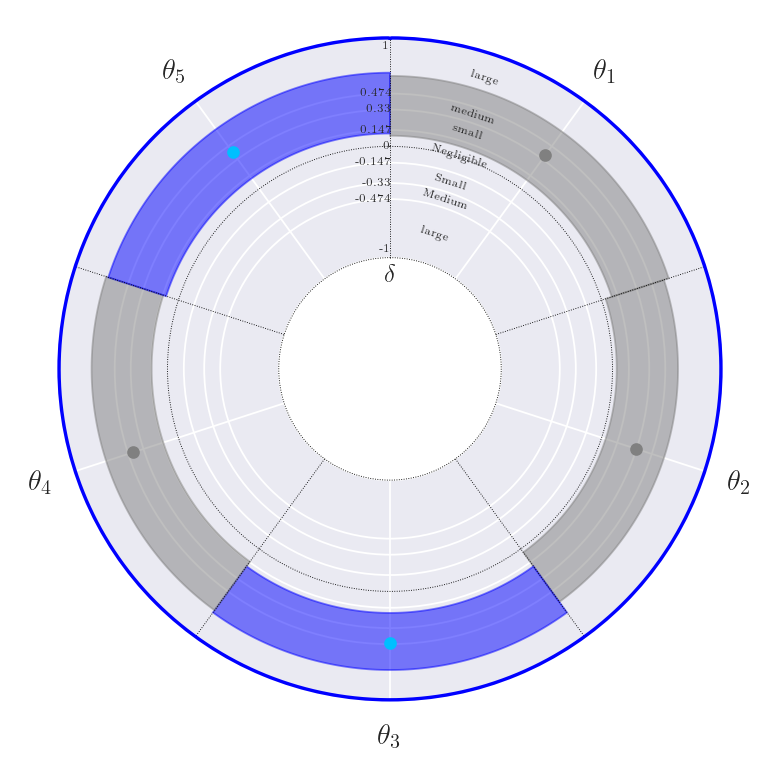}\label{fig:xgbcliffgsad_Time}}%
		\hfill
		\subfloat[HAPT]{\includegraphics[width=0.24\textwidth]{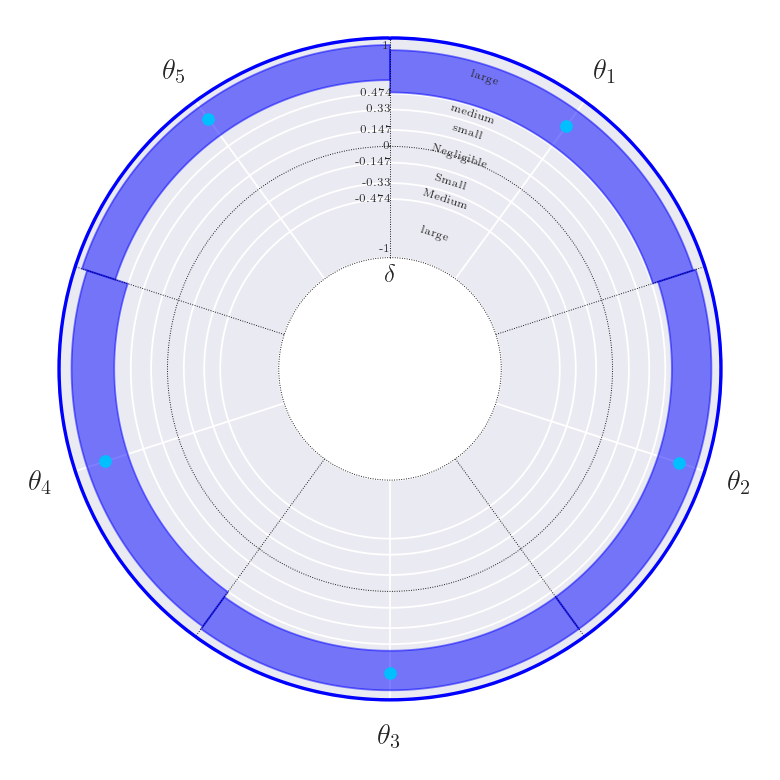}\label{fig:xgbcliffhapt_Time}}%
		\hfill
		\subfloat[ISOLET]{\includegraphics[width=0.24\textwidth]{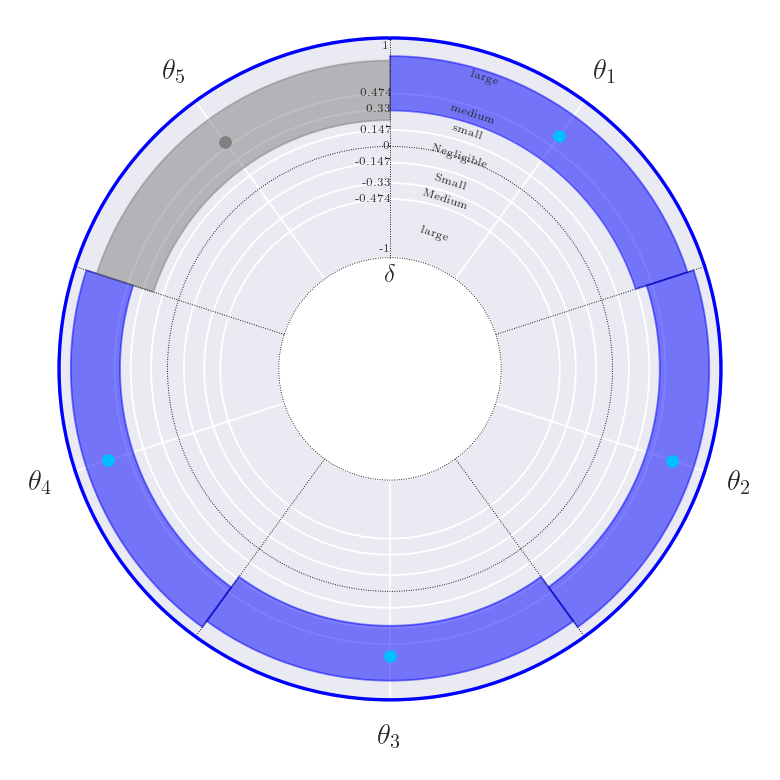}\label{fig:xgbcliffisolet_Time}}%
		\hfill
		\subfloat[PD]{\includegraphics[width=0.24\textwidth]{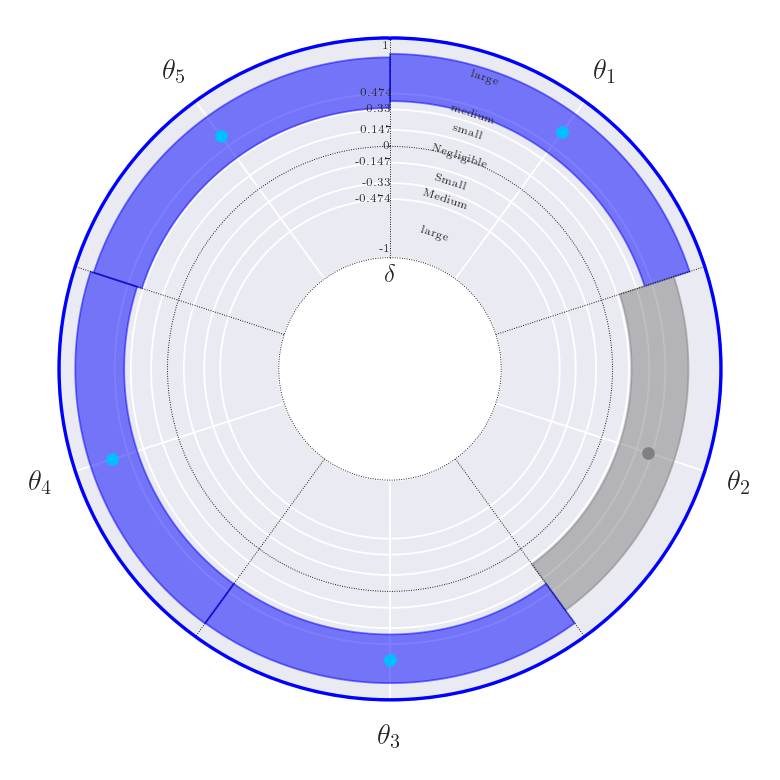}\label{fig:xgbcliffpd_Time}}
		%\end{comment}
		\caption[The Cliff's $\delta$ effect size measure and its 95\% confidence intervals for the running time of 30 XGBoost runs.]{Effect size analysis of running time across 30 XGBoost runs using Cliff's $\delta$. Each point represents the actual value obtained, with segments denoting 95\% confidence intervals based on 10,000 bootstrap resamplings. The outer ring color visualizes the statistical significance: grey illustrates no significant difference (adjusted Friedman's P-value$>0.05$), while color indicates significant differences; blue indicates at least one view outperforms the benchmark (adjusted Conover's p-value$ < 0.05$, Cliff's $\delta > 0$), and red signifies all views underperform relative to the benchmark (adjusted Conover's p-value$ < 0.05$, Cliff's $\delta < 0$). Segment colors show performance difference against the benchmark: grey for no significant difference (adjusted Conover's p-value$  > 0.05$), blue for better performance (Cliff's $\delta > 0$), and red for worse performance (Cliff's $\delta < 0$).}
		
		\label{fig:xgbcliff_Time}
	\end{figure*}
	%%%%%%%%%%%%%%%%%%%%%%%%%%%%%%%%%%%%%%%%%%%%%%%%%%%%%%%%%%%%%%%%%%%%%%%%%%%%%%
	
	%%%%%%%%%%%%%%%%%%%%%%%%%%%%%%%%%%%%%%%%%%%%%%%%%%%%%%%%%%%%%%%%%%%%%%%%%%%%%%
	\begin{table*}
		\centering
		\caption[The results of Friedman and Conover tests and Cliff's $\delta$ analysis for the running time of 30 XGBoost runs.]{Statistical comparison of Running Time (seconds) for testing data obtained from XGBoost runs. W, T, and L denote win, tie, and loss based on Friedman and Conover's p-values. Effect sizes are calculated using Cliff's Delta method and are categorized as negligible, small, medium, or large.}
		\label{tab:xgbtime}
		%\resizebox{\linewidth}{!}{%
			\begin{tabular}{c|ccccccccc}
				\hline
				\multicolumn{10}{c}{XGBoost's Running Time (seconds)}\\
				\hline
				Dataset & $\theta_1$ & $\theta_2$ & $\theta_3$ & $\theta_4$ & $\theta_5$ & $E_{1:2}$ & $E_{1:3}$ & $E_{1:4}$ & $E_{1:5}$ \\
				\hline
				APSF  & T (medium) & T (medium) & T (small) & T (small) & T (negligible) & --  & --  & --  & --  \\
				ARWPM  & T (medium) & T (medium) & W (large) & T (small) & W (large) & --  & --  & --  & --  \\
				GECR  & W (large) & W (large) & W (large) & W (large) & W (large) & --  & --  & --  & --  \\
				GFE  & T (small) & T (negligible) & T (negligible) & T (negligible) & T (negligible) & --  & --  & --  & --  \\
				GSAD  & T (medium) & T (small) & W (medium) & T (medium) & W (medium) & --  & --  & --  & --  \\
				HAPT  & W (large) & W (large) & W (large) & W (large) & W (large) & --  & --  & --  & --  \\
				ISOLET  & W (large) & W (large) & W (large) & W (large) & T (large) & --  & --  & --  & --  \\
				PD  & W (large) & T (medium) & W (large) & W (large) & W (large) & --  & --  & --  & --  \\
				\hline
				W - T - L  & 4 - 4 - 0 & 3 - 5 - 0 & 6 - 2 - 0 & 4 - 4 - 0 & 5 - 3 - 0 & -- & -- & -- & -- \\
				\hline
			\end{tabular}
			%}
	\end{table*}
	%%%%%%%%%%%%%%%%%%%%%%%%%%%%%%%%%%%%%%%%%%%%%%%%%%%%%%%%%%%%%%%%%%%%%%%%%%%%%%
	
	%\subsection{The summary of the results for XGBoost}
	%\label{ssub:xgbsum}
	% XGB: Winners
	%%%%%%%%%%%%%%%%%%%%%%%%%%%%%%%%%%%%%%%%%%%%%%%%%%%%%%%%%%%%%%%%%%%%%%%%%%%%%%
	\begin{table*}[b]
		\centering
		\caption[The summary of statistical comparison of results obtained from 30 XGBoost runs.]{The summary of statistical comparison of results for testing data obtained from XGBoost runs. W, T, and L denote win, tie, and loss based on Friedman and Conover's adjusted p-values, and Cliff's $\delta$ effect size analysis.}
		\label{tab:xgbwtl}
		%\resizebox{\textwidth}{!}{%
			\begin{tabular}{cccccccccc}
				\hline
				\multicolumn{10}{c}{XGBoost (Win - Tie - Loss)}
				\\
				\hline
				Metric & $\theta_1$ & $\theta_2$ & $\theta_3$ & $\theta_4$ & $\theta_5$ & $E_{1:2}$ & $E_{1:3}$ & $E_{1:4}$ & $E_{1:5}$ \\
				\hline
				$F_{1}$ Score & 0 - 1 - 7 & 0 - 0 - 8 & 0 - 3 - 5 & 0 - 3 - 5 & 0 - 2 - 6 & 0 - 1 - 7 & 1 - 3 - 4 & 1 - 4 - 3 & 2 - 4 - 2 \\
				AUC & 0 - 1 - 7 & 0 - 0 - 8 & 0 - 2 - 6 & 0 - 3 - 5 & 0 - 1 - 7 & 0 - 3 - 5 & 1 - 3 - 4 & 2 - 5 - 1 & 2 - 6 - 0 \\
				Loss & 0 - 2 - 6 & 0 - 2 - 6 & 0 - 2 - 6 & 0 - 4 - 4 & 0 - 3 - 5 & 0 - 4 - 4 & 0 - 5 - 3 & 2 - 5 - 1 & 3 - 4 - 1 \\
				MEC & 5 - 2 - 1 & 4 - 3 - 1 & 5 - 1 - 2 & 4 - 3 - 1 & 4 - 2 - 2 & 4 - 3 - 1 & 4 - 1 - 3 & 4 - 1 - 3 & 4 - 1 - 3 \\
				MEW & 0 - 3 - 5 & 0 - 4 - 4 & 0 - 4 - 4 & 0 - 5 - 3 & 0 - 5 - 3 & 1 - 4 - 3 & 2 - 3 - 3 & 4 - 2 - 2 & 3 - 4 - 1 \\
				Time & 4 - 4 - 0 & 3 - 5 - 0 & 6 - 2 - 0 & 4 - 4 - 0 & 5 - 3 - 0 & -- & -- & -- & -- \\
				\hline
			\end{tabular}
			%}
	\end{table*}
	\FloatBarrier
	%%%%%%%%%%%%%%%%%%%%%%%%%%%%%%%%%%%%%%%%%%%%%%%%%%%%%%%%%%%%%%%%%%%%%%%%%%%%%%
	
	%\section{Logistic Regression Results}
	%\label{ssec:lrresult}
	%\subsection{Friedman Test Results for Logistic Regression Metrics}
	%\label{ssub:lrfriedman}
	% Logit Table
	%%%%%%%%%%%%%%%%%%%%%%%%%%%%%%%%%%%%%%%%%%%%%%%%%%%%%%%%%%%%%%%%%%%%%%%%%%%%%%%%%%%%
	\begin{sidewaystable*}
		\centering
		\caption[The obtained metrics' values and Friedman test results for 30 Logistic Regression runs.]{The obtained $F_1$ score, AUC, Log-Loss, MEC, MEW, and running time (sec), for the testing data in 30 Logistic Regression runs. The obtained p-values of Friedman's p-values and adjusted p-values using Bonferroni method. The bold values indicate at least one sample is significantly different from others.}
		\label{tab:lrres}
		\resizebox{\linewidth}{!}{%
			\begin{tabular}{c|c|cccccccccccc}
				\hline
				\multicolumn{14}{c}{Logit} \\
				\hline
				Dataset & Metric & $\theta_1$ & $\theta_2$ & $\theta_3$ & $\theta_4$ & $\theta_5$ & $E_{1:2}$ & $E_{1:3}$ & $E_{1:4}$ & $E_{1:5}$ & All & Friedman's $P-value$ & Adjusted $P-value$ \\
				\hline
				\multirow{6}{*}{APSF} & $F_1$ & {\cellcolor[rgb]{0.753,0.753,0.753}}$0.887 \pm 0.055$ & {\cellcolor[rgb]{0.753,0.753,0.753}}$0.886 \pm 0.046$ & {\cellcolor[rgb]{0.753,0.753,0.753}}$0.877 \pm 0.031$ & {\cellcolor[rgb]{0.753,0.753,0.753}}$0.876 \pm 0.035$ & {\cellcolor[rgb]{0.753,0.753,0.753}}$0.892 \pm 0.039$ & {\cellcolor[rgb]{0.753,0.753,0.753}}$0.914 \pm 0.041$ & {\cellcolor[rgb]{0.753,0.753,0.753}}$0.924 \pm 0.036$ & {\cellcolor[rgb]{0.753,0.753,0.753}}$0.928 \pm 0.037$ & {\cellcolor[rgb]{0.753,0.753,0.753}}$0.94 \pm 0.033$ & {\cellcolor[rgb]{0.753,0.753,0.753}}$0.927 \pm 0.028$ & {\cellcolor[rgb]{0.753,0.753,0.753}}$3.30e-26$ & {\cellcolor[rgb]{0.753,0.753,0.753}}$\mathbf{1.65e-25}$ \\
				& AUC & $0.923 \pm 0.078$ & $0.921 \pm 0.101$ & $0.954 \pm 0.016$ & $0.945 \pm 0.043$ & $0.931 \pm 0.063$ & $0.957 \pm 0.016$ & $0.964 \pm 0.008$ & $0.965 \pm 0.007$ & $0.965 \pm 0.007$ & $0.952 \pm 0.069$ & $7.30e-26$ & $\mathbf{3.65e-25}$ \\
				& Log-Loss & {\cellcolor[rgb]{0.753,0.753,0.753}}$2.295 \pm 1.488$ & {\cellcolor[rgb]{0.753,0.753,0.753}}$2.307 \pm 1.159$ & {\cellcolor[rgb]{0.753,0.753,0.753}}$2.564 \pm 0.786$ & {\cellcolor[rgb]{0.753,0.753,0.753}}$2.485 \pm 0.894$ & {\cellcolor[rgb]{0.753,0.753,0.753}}$1.946 \pm 0.927$ & {\cellcolor[rgb]{0.753,0.753,0.753}}$1.329 \pm 0.838$ & {\cellcolor[rgb]{0.753,0.753,0.753}}$1.014 \pm 0.66$ & {\cellcolor[rgb]{0.753,0.753,0.753}}$0.877 \pm 0.606$ & {\cellcolor[rgb]{0.753,0.753,0.753}}$0.631 \pm 0.445$ & {\cellcolor[rgb]{0.753,0.753,0.753}}$1.515 \pm 0.746$ & {\cellcolor[rgb]{0.753,0.753,0.753}}$2.34e-33$ & {\cellcolor[rgb]{0.753,0.753,0.753}}$\mathbf{1.17e-32}$ \\
				& MEC & $0.013 \pm 0.008$ & $0.013 \pm 0.006$ & $0.013 \pm 0.004$ & $0.014 \pm 0.006$ & $0.015 \pm 0.006$ & $0.067 \pm 0.048$ & $0.082 \pm 0.036$ & $0.092 \pm 0.032$ & $0.096 \pm 0.044$ & $0.009 \pm 0.007$ & $6.99e-39$ & $\mathbf{3.50e-38}$ \\
				& MEW & {\cellcolor[rgb]{0.753,0.753,0.753}}$0.09 \pm 0.035$ & {\cellcolor[rgb]{0.753,0.753,0.753}}$0.089 \pm 0.031$ & {\cellcolor[rgb]{0.753,0.753,0.753}}$0.082 \pm 0.022$ & {\cellcolor[rgb]{0.753,0.753,0.753}}$0.09 \pm 0.03$ & {\cellcolor[rgb]{0.753,0.753,0.753}}$0.106 \pm 0.031$ & {\cellcolor[rgb]{0.753,0.753,0.753}}$0.307 \pm 0.193$ & {\cellcolor[rgb]{0.753,0.753,0.753}}$0.388 \pm 0.182$ & {\cellcolor[rgb]{0.753,0.753,0.753}}$0.44 \pm 0.162$ & {\cellcolor[rgb]{0.753,0.753,0.753}}$0.468 \pm 0.159$ & {\cellcolor[rgb]{0.753,0.753,0.753}}$0.096 \pm 0.065$ & {\cellcolor[rgb]{0.753,0.753,0.753}}$9.26e-38$ & {\cellcolor[rgb]{0.753,0.753,0.753}}$\mathbf{4.63e-37}$ \\
				& Time (sec) & $0.755 \pm 0.092$ & $0.775 \pm 0.088$ & $0.822 \pm 0.109$ & $0.765 \pm 0.092$ & $0.738 \pm 0.076$ & -- & -- & -- & -- & $2.447 \pm 0.277$ & $\mathbf{1.90e-17}$ & -- \\
				\hline
				\multirow{6}{*}{ARWPM} & $F_1$ & {\cellcolor[rgb]{0.753,0.753,0.753}}$0.795 \pm 0.014$ & {\cellcolor[rgb]{0.753,0.753,0.753}}$0.782 \pm 0.012$ & {\cellcolor[rgb]{0.753,0.753,0.753}}$0.795 \pm 0.018$ & {\cellcolor[rgb]{0.753,0.753,0.753}}$0.761 \pm 0.03$ & {\cellcolor[rgb]{0.753,0.753,0.753}}$0.721 \pm 0.043$ & {\cellcolor[rgb]{0.753,0.753,0.753}}$0.8 \pm 0.011$ & {\cellcolor[rgb]{0.753,0.753,0.753}}$0.814 \pm 0.011$ & {\cellcolor[rgb]{0.753,0.753,0.753}}$0.815 \pm 0.011$ & {\cellcolor[rgb]{0.753,0.753,0.753}}$0.816 \pm 0.012$ & {\cellcolor[rgb]{0.753,0.753,0.753}}$0.793 \pm 0.01$ & {\cellcolor[rgb]{0.753,0.753,0.753}}$4.78e-41$ & {\cellcolor[rgb]{0.753,0.753,0.753}}$\mathbf{2.39e-40}$ \\
				& AUC & $0.946 \pm 0.005$ & $0.94 \pm 0.005$ & $0.944 \pm 0.008$ & $0.929 \pm 0.015$ & $0.906 \pm 0.025$ & $0.948 \pm 0.004$ & $0.953 \pm 0.004$ & $0.954 \pm 0.004$ & $0.954 \pm 0.004$ & $0.946 \pm 0.005$ & $1.27e-42$ & $\mathbf{6.37e-42}$ \\
				& Log-Loss & {\cellcolor[rgb]{0.753,0.753,0.753}}$0.432 \pm 0.023$ & {\cellcolor[rgb]{0.753,0.753,0.753}}$0.471 \pm 0.03$ & {\cellcolor[rgb]{0.753,0.753,0.753}}$0.475 \pm 0.039$ & {\cellcolor[rgb]{0.753,0.753,0.753}}$0.545 \pm 0.065$ & {\cellcolor[rgb]{0.753,0.753,0.753}}$0.642 \pm 0.1$ & {\cellcolor[rgb]{0.753,0.753,0.753}}$0.432 \pm 0.02$ & {\cellcolor[rgb]{0.753,0.753,0.753}}$0.43 \pm 0.023$ & {\cellcolor[rgb]{0.753,0.753,0.753}}$0.441 \pm 0.022$ & {\cellcolor[rgb]{0.753,0.753,0.753}}$0.454 \pm 0.03$ & {\cellcolor[rgb]{0.753,0.753,0.753}}$0.485 \pm 0.025$ & {\cellcolor[rgb]{0.753,0.753,0.753}}$1.32e-41$ & {\cellcolor[rgb]{0.753,0.753,0.753}}$\mathbf{6.62e-41}$ \\
				& MEC & $0.523 \pm 0.021$ & $0.576 \pm 0.023$ & $0.598 \pm 0.036$ & $0.674 \pm 0.07$ & $0.825 \pm 0.14$ & $0.564 \pm 0.035$ & $0.616 \pm 0.033$ & $0.664 \pm 0.041$ & $0.726 \pm 0.08$ & $0.726 \pm 0.044$ & $2.86e-45$ & $\mathbf{1.43e-44}$ \\
				& MEW & {\cellcolor[rgb]{0.753,0.753,0.753}}$1.168 \pm 0.018$ & {\cellcolor[rgb]{0.753,0.753,0.753}}$1.253 \pm 0.041$ & {\cellcolor[rgb]{0.753,0.753,0.753}}$1.281 \pm 0.042$ & {\cellcolor[rgb]{0.753,0.753,0.753}}$1.35 \pm 0.068$ & {\cellcolor[rgb]{0.753,0.753,0.753}}$1.464 \pm 0.096$ & {\cellcolor[rgb]{0.753,0.753,0.753}}$1.214 \pm 0.036$ & {\cellcolor[rgb]{0.753,0.753,0.753}}$1.292 \pm 0.04$ & {\cellcolor[rgb]{0.753,0.753,0.753}}$1.336 \pm 0.04$ & {\cellcolor[rgb]{0.753,0.753,0.753}}$1.375 \pm 0.058$ & {\cellcolor[rgb]{0.753,0.753,0.753}}$1.456 \pm 0.035$ & {\cellcolor[rgb]{0.753,0.753,0.753}}$9.05e-46$ & {\cellcolor[rgb]{0.753,0.753,0.753}}$\mathbf{4.53e-45}$ \\
				& Time (sec) & $0.089 \pm 0.047$ & $2.416 \pm 0.416$ & $2.356 \pm 0.264$ & $2.323 \pm 0.2$ & $2.369 \pm 0.231$ & -- & -- & -- & -- & $3.33 \pm 0.169$ & $\mathbf{2.59e-21}$ & -- \\
				\hline
				\multirow{6}{*}{GECR} & $F_1$ & {\cellcolor[rgb]{0.753,0.753,0.753}}$0.996 \pm 0.004$ & {\cellcolor[rgb]{0.753,0.753,0.753}}$0.998 \pm 0.004$ & {\cellcolor[rgb]{0.753,0.753,0.753}}$0.996 \pm 0.005$ & {\cellcolor[rgb]{0.753,0.753,0.753}}$0.996 \pm 0.004$ & {\cellcolor[rgb]{0.753,0.753,0.753}}$0.998 \pm 0.003$ & {\cellcolor[rgb]{0.753,0.753,0.753}}$0.997 \pm 0.004$ & {\cellcolor[rgb]{0.753,0.753,0.753}}$0.998 \pm 0.003$ & {\cellcolor[rgb]{0.753,0.753,0.753}}$0.998 \pm 0.003$ & {\cellcolor[rgb]{0.753,0.753,0.753}}$0.999 \pm 0.002$ & {\cellcolor[rgb]{0.753,0.753,0.753}}$0.999 \pm 0.002$ & {\cellcolor[rgb]{0.753,0.753,0.753}}$4.01e-06$ & {\cellcolor[rgb]{0.753,0.753,0.753}}$\mathbf{2.00e-05}$ \\
				& AUC & $1.0 \pm 0.0$ & $1.0 \pm 0.0$ & $1.0 \pm 0.0$ & $1.0 \pm 0.0$ & $1.0 \pm 0.0$ & $1.0 \pm 0.0$ & $1.0 \pm 0.0$ & $1.0 \pm 0.0$ & $1.0 \pm 0.0$ & $1.0 \pm 0.0$ & $9.39e-05$ & $\mathbf{4.70e-04}$ \\
				& Log-Loss & {\cellcolor[rgb]{0.753,0.753,0.753}}$0.131 \pm 0.124$ & {\cellcolor[rgb]{0.753,0.753,0.753}}$0.132 \pm 0.117$ & {\cellcolor[rgb]{0.753,0.753,0.753}}$0.098 \pm 0.104$ & {\cellcolor[rgb]{0.753,0.753,0.753}}$0.093 \pm 0.11$ & {\cellcolor[rgb]{0.753,0.753,0.753}}$0.09 \pm 0.079$ & {\cellcolor[rgb]{0.753,0.753,0.753}}$0.122 \pm 0.095$ & {\cellcolor[rgb]{0.753,0.753,0.753}}$0.119 \pm 0.094$ & {\cellcolor[rgb]{0.753,0.753,0.753}}$0.102 \pm 0.073$ & {\cellcolor[rgb]{0.753,0.753,0.753}}$0.101 \pm 0.057$ & {\cellcolor[rgb]{0.753,0.753,0.753}}$0.026 \pm 0.016$ & {\cellcolor[rgb]{0.753,0.753,0.753}}$2.27e-07$ & {\cellcolor[rgb]{0.753,0.753,0.753}}$\mathbf{1.14e-06}$ \\
				& MEC & $0.552 \pm 0.44$ & $0.567 \pm 0.412$ & $0.418 \pm 0.376$ & $0.386 \pm 0.399$ & $0.416 \pm 0.298$ & $0.568 \pm 0.331$ & $0.556 \pm 0.337$ & $0.504 \pm 0.276$ & $0.519 \pm 0.226$ & $0.135 \pm 0.089$ & $2.27e-07$ & $\mathbf{1.14e-06}$ \\
				& MEW & {\cellcolor[rgb]{0.753,0.753,0.753}}$1.146 \pm 1.007$ & {\cellcolor[rgb]{0.753,0.753,0.753}}$0.649 \pm 0.925$ & {\cellcolor[rgb]{0.753,0.753,0.753}}$1.042 \pm 0.93$ & {\cellcolor[rgb]{0.753,0.753,0.753}}$1.157 \pm 0.866$ & {\cellcolor[rgb]{0.753,0.753,0.753}}$0.833 \pm 0.902$ & {\cellcolor[rgb]{0.753,0.753,0.753}}$0.73 \pm 0.961$ & {\cellcolor[rgb]{0.753,0.753,0.753}}$0.707 \pm 0.933$ & {\cellcolor[rgb]{0.753,0.753,0.753}}$0.587 \pm 0.9$ & {\cellcolor[rgb]{0.753,0.753,0.753}}$0.513 \pm 0.853$ & {\cellcolor[rgb]{0.753,0.753,0.753}}$0.518 \pm 0.794$ & {\cellcolor[rgb]{0.753,0.753,0.753}}$1.05e-03$ & {\cellcolor[rgb]{0.753,0.753,0.753}}$\mathbf{5.24e-03}$ \\
				& Time (sec) & $0.1 \pm 0.049$ & $3.687 \pm 0.429$ & $3.179 \pm 0.456$ & $3.11 \pm 0.392$ & $3.129 \pm 0.339$ & -- & -- & -- & -- & $9.276 \pm 0.347$ & $\mathbf{1.26e-23}$ & -- \\
				\hline
				\multirow{6}{*}{GFE} & $F_1$ & {\cellcolor[rgb]{0.753,0.753,0.753}}$0.739 \pm 0.01$ & {\cellcolor[rgb]{0.753,0.753,0.753}}$0.75 \pm 0.009$ & {\cellcolor[rgb]{0.753,0.753,0.753}}$0.762 \pm 0.007$ & {\cellcolor[rgb]{0.753,0.753,0.753}}$0.77 \pm 0.008$ & {\cellcolor[rgb]{0.753,0.753,0.753}}$0.773 \pm 0.006$ & {\cellcolor[rgb]{0.753,0.753,0.753}}$0.752 \pm 0.008$ & {\cellcolor[rgb]{0.753,0.753,0.753}}$0.764 \pm 0.006$ & {\cellcolor[rgb]{0.753,0.753,0.753}}$0.772 \pm 0.006$ & {\cellcolor[rgb]{0.753,0.753,0.753}}$0.776 \pm 0.005$ & {\cellcolor[rgb]{0.753,0.753,0.753}}$0.781 \pm 0.004$ & {\cellcolor[rgb]{0.753,0.753,0.753}}$1.65e-45$ & {\cellcolor[rgb]{0.753,0.753,0.753}}$\mathbf{8.27e-45}$ \\
				& AUC & $0.789 \pm 0.01$ & $0.793 \pm 0.009$ & $0.802 \pm 0.007$ & $0.808 \pm 0.007$ & $0.812 \pm 0.006$ & $0.797 \pm 0.008$ & $0.805 \pm 0.006$ & $0.81 \pm 0.006$ & $0.814 \pm 0.005$ & $0.835 \pm 0.004$ & $1.76e-45$ & $\mathbf{8.79e-45}$ \\
				& Log-Loss & {\cellcolor[rgb]{0.753,0.753,0.753}}$0.55 \pm 0.011$ & {\cellcolor[rgb]{0.753,0.753,0.753}}$0.545 \pm 0.009$ & {\cellcolor[rgb]{0.753,0.753,0.753}}$0.535 \pm 0.008$ & {\cellcolor[rgb]{0.753,0.753,0.753}}$0.529 \pm 0.007$ & {\cellcolor[rgb]{0.753,0.753,0.753}}$0.529 \pm 0.008$ & {\cellcolor[rgb]{0.753,0.753,0.753}}$0.539 \pm 0.007$ & {\cellcolor[rgb]{0.753,0.753,0.753}}$0.531 \pm 0.006$ & {\cellcolor[rgb]{0.753,0.753,0.753}}$0.524 \pm 0.006$ & {\cellcolor[rgb]{0.753,0.753,0.753}}$0.521 \pm 0.005$ & {\cellcolor[rgb]{0.753,0.753,0.753}}$0.494 \pm 0.005$ & {\cellcolor[rgb]{0.753,0.753,0.753}}$1.72e-44$ & {\cellcolor[rgb]{0.753,0.753,0.753}}$\mathbf{8.58e-44}$ \\
				& MEC & $0.793 \pm 0.014$ & $0.791 \pm 0.013$ & $0.782 \pm 0.01$ & $0.776 \pm 0.01$ & $0.776 \pm 0.013$ & $0.797 \pm 0.012$ & $0.79 \pm 0.01$ & $0.786 \pm 0.009$ & $0.787 \pm 0.01$ & $0.72 \pm 0.004$ & $3.27e-26$ & $\mathbf{1.64e-25}$ \\
				& MEW & {\cellcolor[rgb]{0.753,0.753,0.753}}$0.879 \pm 0.008$ & {\cellcolor[rgb]{0.753,0.753,0.753}}$0.874 \pm 0.008$ & {\cellcolor[rgb]{0.753,0.753,0.753}}$0.866 \pm 0.006$ & {\cellcolor[rgb]{0.753,0.753,0.753}}$0.863 \pm 0.007$ & {\cellcolor[rgb]{0.753,0.753,0.753}}$0.865 \pm 0.008$ & {\cellcolor[rgb]{0.753,0.753,0.753}}$0.88 \pm 0.008$ & {\cellcolor[rgb]{0.753,0.753,0.753}}$0.871 \pm 0.007$ & {\cellcolor[rgb]{0.753,0.753,0.753}}$0.869 \pm 0.006$ & {\cellcolor[rgb]{0.753,0.753,0.753}}$0.871 \pm 0.007$ & {\cellcolor[rgb]{0.753,0.753,0.753}}$0.859 \pm 0.004$ & {\cellcolor[rgb]{0.753,0.753,0.753}}$8.86e-29$ & {\cellcolor[rgb]{0.753,0.753,0.753}}$\mathbf{4.43e-28}$ \\
				& Time (sec) & $0.193 \pm 0.05$ & $0.203 \pm 0.027$ & $0.227 \pm 0.034$ & $0.251 \pm 0.034$ & $0.287 \pm 0.032$ & -- & -- & -- & -- & $1.029 \pm 0.037$ & $\mathbf{2.97e-23}$ & -- \\
				\hline
				\multirow{6}{*}{GSAD} & $F_1$ & {\cellcolor[rgb]{0.753,0.753,0.753}}$0.946 \pm 0.014$ & {\cellcolor[rgb]{0.753,0.753,0.753}}$0.931 \pm 0.014$ & {\cellcolor[rgb]{0.753,0.753,0.753}}$0.9 \pm 0.018$ & {\cellcolor[rgb]{0.753,0.753,0.753}}$0.886 \pm 0.02$ & {\cellcolor[rgb]{0.753,0.753,0.753}}$0.844 \pm 0.042$ & {\cellcolor[rgb]{0.753,0.753,0.753}}$0.951 \pm 0.011$ & {\cellcolor[rgb]{0.753,0.753,0.753}}$0.951 \pm 0.012$ & {\cellcolor[rgb]{0.753,0.753,0.753}}$0.951 \pm 0.012$ & {\cellcolor[rgb]{0.753,0.753,0.753}}$0.951 \pm 0.012$ & {\cellcolor[rgb]{0.753,0.753,0.753}}$0.902 \pm 0.021$ & {\cellcolor[rgb]{0.753,0.753,0.753}}$5.34e-44$ & {\cellcolor[rgb]{0.753,0.753,0.753}}$\mathbf{2.67e-43}$ \\
				& AUC & $0.993 \pm 0.002$ & $0.991 \pm 0.002$ & $0.988 \pm 0.003$ & $0.985 \pm 0.004$ & $0.976 \pm 0.01$ & $0.994 \pm 0.001$ & $0.994 \pm 0.001$ & $0.994 \pm 0.001$ & $0.994 \pm 0.001$ & $0.989 \pm 0.002$ & $7.61e-48$ & $\mathbf{3.81e-47}$ \\
				& Log-Loss & {\cellcolor[rgb]{0.753,0.753,0.753}}$0.295 \pm 0.046$ & {\cellcolor[rgb]{0.753,0.753,0.753}}$0.339 \pm 0.037$ & {\cellcolor[rgb]{0.753,0.753,0.753}}$0.411 \pm 0.045$ & {\cellcolor[rgb]{0.753,0.753,0.753}}$0.453 \pm 0.051$ & {\cellcolor[rgb]{0.753,0.753,0.753}}$0.56 \pm 0.098$ & {\cellcolor[rgb]{0.753,0.753,0.753}}$0.282 \pm 0.039$ & {\cellcolor[rgb]{0.753,0.753,0.753}}$0.279 \pm 0.039$ & {\cellcolor[rgb]{0.753,0.753,0.753}}$0.278 \pm 0.041$ & {\cellcolor[rgb]{0.753,0.753,0.753}}$0.276 \pm 0.04$ & {\cellcolor[rgb]{0.753,0.753,0.753}}$0.432 \pm 0.047$ & {\cellcolor[rgb]{0.753,0.753,0.753}}$1.49e-43$ & {\cellcolor[rgb]{0.753,0.753,0.753}}$\mathbf{7.45e-43}$ \\
				& MEC & $0.575 \pm 0.082$ & $0.676 \pm 0.091$ & $0.796 \pm 0.094$ & $0.868 \pm 0.095$ & $1.016 \pm 0.153$ & $0.597 \pm 0.089$ & $0.605 \pm 0.085$ & $0.618 \pm 0.087$ & $0.627 \pm 0.082$ & $0.872 \pm 0.092$ & $3.94e-41$ & $\mathbf{1.97e-40}$ \\
				& MEW & {\cellcolor[rgb]{0.753,0.753,0.753}}$1.557 \pm 0.143$ & {\cellcolor[rgb]{0.753,0.753,0.753}}$1.691 \pm 0.119$ & {\cellcolor[rgb]{0.753,0.753,0.753}}$1.841 \pm 0.106$ & {\cellcolor[rgb]{0.753,0.753,0.753}}$1.899 \pm 0.088$ & {\cellcolor[rgb]{0.753,0.753,0.753}}$1.982 \pm 0.129$ & {\cellcolor[rgb]{0.753,0.753,0.753}}$1.596 \pm 0.137$ & {\cellcolor[rgb]{0.753,0.753,0.753}}$1.613 \pm 0.123$ & {\cellcolor[rgb]{0.753,0.753,0.753}}$1.628 \pm 0.124$ & {\cellcolor[rgb]{0.753,0.753,0.753}}$1.641 \pm 0.111$ & {\cellcolor[rgb]{0.753,0.753,0.753}}$1.95 \pm 0.12$ & {\cellcolor[rgb]{0.753,0.753,0.753}}$1.85e-41$ & {\cellcolor[rgb]{0.753,0.753,0.753}}$\mathbf{9.25e-41}$ \\
				& Time (sec) & $0.156 \pm 0.066$ & $3.082 \pm 0.392$ & $3.041 \pm 0.445$ & $2.993 \pm 0.3$ & $2.993 \pm 0.433$ & -- & -- & -- & -- & $3.721 \pm 0.37$ & $\mathbf{3.58e-20}$ & -- \\
				\hline
				\multirow{6}{*}{HAPT} & $F_1$ & {\cellcolor[rgb]{0.753,0.753,0.753}}$0.888 \pm 0.007$ & {\cellcolor[rgb]{0.753,0.753,0.753}}$0.889 \pm 0.01$ & {\cellcolor[rgb]{0.753,0.753,0.753}}$0.904 \pm 0.01$ & {\cellcolor[rgb]{0.753,0.753,0.753}}$0.908 \pm 0.009$ & {\cellcolor[rgb]{0.753,0.753,0.753}}$0.886 \pm 0.028$ & {\cellcolor[rgb]{0.753,0.753,0.753}}$0.897 \pm 0.006$ & {\cellcolor[rgb]{0.753,0.753,0.753}}$0.91 \pm 0.006$ & {\cellcolor[rgb]{0.753,0.753,0.753}}$0.918 \pm 0.007$ & {\cellcolor[rgb]{0.753,0.753,0.753}}$0.922 \pm 0.006$ & {\cellcolor[rgb]{0.753,0.753,0.753}}$0.95 \pm 0.003$ & {\cellcolor[rgb]{0.753,0.753,0.753}}$6.11e-44$ & {\cellcolor[rgb]{0.753,0.753,0.753}}$\mathbf{3.06e-43}$ \\
				& AUC & $0.992 \pm 0.001$ & $0.992 \pm 0.001$ & $0.994 \pm 0.001$ & $0.995 \pm 0.001$ & $0.991 \pm 0.004$ & $0.993 \pm 0.001$ & $0.995 \pm 0.001$ & $0.996 \pm 0.001$ & $0.996 \pm 0.0$ & $0.997 \pm 0.001$ & $6.48e-45$ & $\mathbf{3.24e-44}$ \\
				& Log-Loss & {\cellcolor[rgb]{0.753,0.753,0.753}}$0.32 \pm 0.034$ & {\cellcolor[rgb]{0.753,0.753,0.753}}$0.322 \pm 0.023$ & {\cellcolor[rgb]{0.753,0.753,0.753}}$0.277 \pm 0.025$ & {\cellcolor[rgb]{0.753,0.753,0.753}}$0.262 \pm 0.021$ & {\cellcolor[rgb]{0.753,0.753,0.753}}$0.335 \pm 0.072$ & {\cellcolor[rgb]{0.753,0.753,0.753}}$0.306 \pm 0.016$ & {\cellcolor[rgb]{0.753,0.753,0.753}}$0.273 \pm 0.015$ & {\cellcolor[rgb]{0.753,0.753,0.753}}$0.257 \pm 0.014$ & {\cellcolor[rgb]{0.753,0.753,0.753}}$0.266 \pm 0.018$ & {\cellcolor[rgb]{0.753,0.753,0.753}}$0.174 \pm 0.011$ & {\cellcolor[rgb]{0.753,0.753,0.753}}$1.77e-38$ & {\cellcolor[rgb]{0.753,0.753,0.753}}$\mathbf{8.87e-38}$ \\
				& MEC & $0.543 \pm 0.044$ & $0.529 \pm 0.04$ & $0.479 \pm 0.044$ & $0.476 \pm 0.045$ & $0.575 \pm 0.083$ & $0.563 \pm 0.04$ & $0.547 \pm 0.043$ & $0.539 \pm 0.035$ & $0.604 \pm 0.058$ & $0.399 \pm 0.039$ & $4.01e-34$ & $\mathbf{2.00e-33}$ \\
				& MEW & {\cellcolor[rgb]{0.753,0.753,0.753}}$1.195 \pm 0.029$ & {\cellcolor[rgb]{0.753,0.753,0.753}}$1.233 \pm 0.047$ & {\cellcolor[rgb]{0.753,0.753,0.753}}$1.164 \pm 0.046$ & {\cellcolor[rgb]{0.753,0.753,0.753}}$1.082 \pm 0.051$ & {\cellcolor[rgb]{0.753,0.753,0.753}}$1.114 \pm 0.057$ & {\cellcolor[rgb]{0.753,0.753,0.753}}$1.266 \pm 0.046$ & {\cellcolor[rgb]{0.753,0.753,0.753}}$1.272 \pm 0.046$ & {\cellcolor[rgb]{0.753,0.753,0.753}}$1.229 \pm 0.05$ & {\cellcolor[rgb]{0.753,0.753,0.753}}$1.267 \pm 0.046$ & {\cellcolor[rgb]{0.753,0.753,0.753}}$1.099 \pm 0.045$ & {\cellcolor[rgb]{0.753,0.753,0.753}}$1.58e-41$ & {\cellcolor[rgb]{0.753,0.753,0.753}}$\mathbf{7.92e-41}$ \\
				& Time (sec) & $0.503 \pm 0.059$ & $2.041 \pm 0.222$ & $2.177 \pm 0.197$ & $2.339 \pm 0.29$ & $2.247 \pm 0.248$ & -- & -- & -- & -- & $7.15 \pm 0.497$ & $\mathbf{3.71e-23}$ & -- \\
				\hline
				\multirow{6}{*}{ISOLET} & $F_1$ & {\cellcolor[rgb]{0.753,0.753,0.753}}$0.86 \pm 0.009$ & {\cellcolor[rgb]{0.753,0.753,0.753}}$0.915 \pm 0.007$ & {\cellcolor[rgb]{0.753,0.753,0.753}}$0.927 \pm 0.006$ & {\cellcolor[rgb]{0.753,0.753,0.753}}$0.924 \pm 0.006$ & {\cellcolor[rgb]{0.753,0.753,0.753}}$0.92 \pm 0.007$ & {\cellcolor[rgb]{0.753,0.753,0.753}}$0.918 \pm 0.006$ & {\cellcolor[rgb]{0.753,0.753,0.753}}$0.932 \pm 0.005$ & {\cellcolor[rgb]{0.753,0.753,0.753}}$0.936 \pm 0.005$ & {\cellcolor[rgb]{0.753,0.753,0.753}}$0.939 \pm 0.005$ & {\cellcolor[rgb]{0.753,0.753,0.753}}$0.957 \pm 0.004$ & {\cellcolor[rgb]{0.753,0.753,0.753}}$4.81e-48$ & {\cellcolor[rgb]{0.753,0.753,0.753}}$\mathbf{2.40e-47}$ \\
				& AUC & $0.994 \pm 0.001$ & $0.997 \pm 0.0$ & $0.998 \pm 0.0$ & $0.998 \pm 0.0$ & $0.998 \pm 0.0$ & $0.997 \pm 0.0$ & $0.998 \pm 0.0$ & $0.998 \pm 0.0$ & $0.999 \pm 0.0$ & $0.999 \pm 0.0$ & $3.72e-50$ & $\mathbf{1.86e-49}$ \\
				& Log-Loss & {\cellcolor[rgb]{0.753,0.753,0.753}}$0.44 \pm 0.03$ & {\cellcolor[rgb]{0.753,0.753,0.753}}$0.297 \pm 0.018$ & {\cellcolor[rgb]{0.753,0.753,0.753}}$0.256 \pm 0.017$ & {\cellcolor[rgb]{0.753,0.753,0.753}}$0.261 \pm 0.016$ & {\cellcolor[rgb]{0.753,0.753,0.753}}$0.275 \pm 0.021$ & {\cellcolor[rgb]{0.753,0.753,0.753}}$0.309 \pm 0.023$ & {\cellcolor[rgb]{0.753,0.753,0.753}}$0.265 \pm 0.019$ & {\cellcolor[rgb]{0.753,0.753,0.753}}$0.254 \pm 0.015$ & {\cellcolor[rgb]{0.753,0.753,0.753}}$0.252 \pm 0.017$ & {\cellcolor[rgb]{0.753,0.753,0.753}}$0.152 \pm 0.008$ & {\cellcolor[rgb]{0.753,0.753,0.753}}$1.39e-43$ & {\cellcolor[rgb]{0.753,0.753,0.753}}$\mathbf{6.94e-43}$ \\
				& MEC & $0.69 \pm 0.123$ & $0.564 \pm 0.063$ & $0.49 \pm 0.084$ & $0.503 \pm 0.056$ & $0.526 \pm 0.064$ & $0.641 \pm 0.081$ & $0.59 \pm 0.086$ & $0.586 \pm 0.073$ & $0.604 \pm 0.068$ & $0.258 \pm 0.028$ & $2.19e-33$ & $\mathbf{1.10e-32}$ \\
				& MEW & {\cellcolor[rgb]{0.753,0.753,0.753}}$1.547 \pm 0.147$ & {\cellcolor[rgb]{0.753,0.753,0.753}}$1.521 \pm 0.087$ & {\cellcolor[rgb]{0.753,0.753,0.753}}$1.461 \pm 0.122$ & {\cellcolor[rgb]{0.753,0.753,0.753}}$1.488 \pm 0.089$ & {\cellcolor[rgb]{0.753,0.753,0.753}}$1.515 \pm 0.1$ & {\cellcolor[rgb]{0.753,0.753,0.753}}$1.617 \pm 0.098$ & {\cellcolor[rgb]{0.753,0.753,0.753}}$1.594 \pm 0.102$ & {\cellcolor[rgb]{0.753,0.753,0.753}}$1.621 \pm 0.098$ & {\cellcolor[rgb]{0.753,0.753,0.753}}$1.667 \pm 0.092$ & {\cellcolor[rgb]{0.753,0.753,0.753}}$1.327 \pm 0.066$ & {\cellcolor[rgb]{0.753,0.753,0.753}}$7.16e-27$ & {\cellcolor[rgb]{0.753,0.753,0.753}}$\mathbf{3.58e-26}$ \\
				& Time (sec) & $0.78 \pm 0.12$ & $1.614 \pm 0.653$ & $1.5 \pm 0.6$ & $1.914 \pm 0.536$ & $1.6 \pm 0.599$ & -- & -- & -- & -- & $5.51 \pm 0.728$ & $\mathbf{2.72e-20}$ & -- \\
				\hline
				\multirow{6}{*}{PD} & $F_1$ & {\cellcolor[rgb]{0.753,0.753,0.753}}$0.781 \pm 0.022$ & {\cellcolor[rgb]{0.753,0.753,0.753}}$0.785 \pm 0.02$ & {\cellcolor[rgb]{0.753,0.753,0.753}}$0.793 \pm 0.027$ & {\cellcolor[rgb]{0.753,0.753,0.753}}$0.79 \pm 0.033$ & {\cellcolor[rgb]{0.753,0.753,0.753}}$0.781 \pm 0.029$ & {\cellcolor[rgb]{0.753,0.753,0.753}}$0.789 \pm 0.022$ & {\cellcolor[rgb]{0.753,0.753,0.753}}$0.803 \pm 0.023$ & {\cellcolor[rgb]{0.753,0.753,0.753}}$0.805 \pm 0.021$ & {\cellcolor[rgb]{0.753,0.753,0.753}}$0.809 \pm 0.023$ & {\cellcolor[rgb]{0.753,0.753,0.753}}$0.791 \pm 0.026$ & {\cellcolor[rgb]{0.753,0.753,0.753}}$5.86e-11$ & {\cellcolor[rgb]{0.753,0.753,0.753}}$\mathbf{2.93e-10}$ \\
				& AUC & $0.862 \pm 0.022$ & $0.861 \pm 0.019$ & $0.863 \pm 0.026$ & $0.863 \pm 0.025$ & $0.853 \pm 0.028$ & $0.867 \pm 0.02$ & $0.874 \pm 0.02$ & $0.879 \pm 0.02$ & $0.88 \pm 0.019$ & $0.883 \pm 0.024$ & $6.61e-19$ & $\mathbf{3.30e-18}$ \\
				& Log-Loss & {\cellcolor[rgb]{0.753,0.753,0.753}}$0.45 \pm 0.031$ & {\cellcolor[rgb]{0.753,0.753,0.753}}$0.455 \pm 0.031$ & {\cellcolor[rgb]{0.753,0.753,0.753}}$0.447 \pm 0.038$ & {\cellcolor[rgb]{0.753,0.753,0.753}}$0.447 \pm 0.038$ & {\cellcolor[rgb]{0.753,0.753,0.753}}$0.469 \pm 0.065$ & {\cellcolor[rgb]{0.753,0.753,0.753}}$0.444 \pm 0.028$ & {\cellcolor[rgb]{0.753,0.753,0.753}}$0.427 \pm 0.032$ & {\cellcolor[rgb]{0.753,0.753,0.753}}$0.417 \pm 0.028$ & {\cellcolor[rgb]{0.753,0.753,0.753}}$0.419 \pm 0.027$ & {\cellcolor[rgb]{0.753,0.753,0.753}}$0.451 \pm 0.045$ & {\cellcolor[rgb]{0.753,0.753,0.753}}$1.41e-17$ & {\cellcolor[rgb]{0.753,0.753,0.753}}$\mathbf{7.06e-17}$ \\
				& MEC & $0.545 \pm 0.045$ & $0.536 \pm 0.042$ & $0.546 \pm 0.067$ & $0.559 \pm 0.07$ & $0.586 \pm 0.123$ & $0.547 \pm 0.038$ & $0.56 \pm 0.036$ & $0.568 \pm 0.029$ & $0.593 \pm 0.067$ & $0.59 \pm 0.043$ & $9.06e-12$ & $\mathbf{4.53e-11}$ \\
				& MEW & {\cellcolor[rgb]{0.753,0.753,0.753}}$0.821 \pm 0.045$ & {\cellcolor[rgb]{0.753,0.753,0.753}}$0.808 \pm 0.049$ & {\cellcolor[rgb]{0.753,0.753,0.753}}$0.806 \pm 0.05$ & {\cellcolor[rgb]{0.753,0.753,0.753}}$0.818 \pm 0.054$ & {\cellcolor[rgb]{0.753,0.753,0.753}}$0.832 \pm 0.056$ & {\cellcolor[rgb]{0.753,0.753,0.753}}$0.82 \pm 0.045$ & {\cellcolor[rgb]{0.753,0.753,0.753}}$0.827 \pm 0.047$ & {\cellcolor[rgb]{0.753,0.753,0.753}}$0.84 \pm 0.039$ & {\cellcolor[rgb]{0.753,0.753,0.753}}$0.851 \pm 0.041$ & {\cellcolor[rgb]{0.753,0.753,0.753}}$0.841 \pm 0.044$ & {\cellcolor[rgb]{0.753,0.753,0.753}}$1.96e-10$ & {\cellcolor[rgb]{0.753,0.753,0.753}}$\mathbf{9.81e-10}$ \\
				& Time (sec) & $0.049 \pm 0.04$ & $0.041 \pm 0.003$ & $0.04 \pm 0.002$ & $0.04 \pm 0.002$ & $0.04 \pm 0.002$ & -- & -- & -- & -- & $0.097 \pm 0.006$ & $\mathbf{3.64e-13}$ & -- \\
				\hline
			\end{tabular}
		}
	\end{sidewaystable*}
	\FloatBarrier
	%%%%%%%%%%%%%%%%%%%%%%%%%%%%%%%%%%%%%%%%%%%%%%%%%%%%%%%%%%%%%%%%%%%%%%%%%%%%%%
	
	%\subsection{The $F_1$ score results for Logistic Regression}
	%\label{ssub:lrf1}
	% Logit: F1 score
	%%%%%%%%%%%%%%%%%%%%%%%%%%%%%%%%%%%%%%%%%%%%%%%%%%%%%%%%%%%%%%%%%%%%%%%%%%%%%%
	\begin{figure*}[t] 
		\centering
		%\begin{comment}
		% First row
		\subfloat[APSF]{\includegraphics[width=0.24\textwidth]{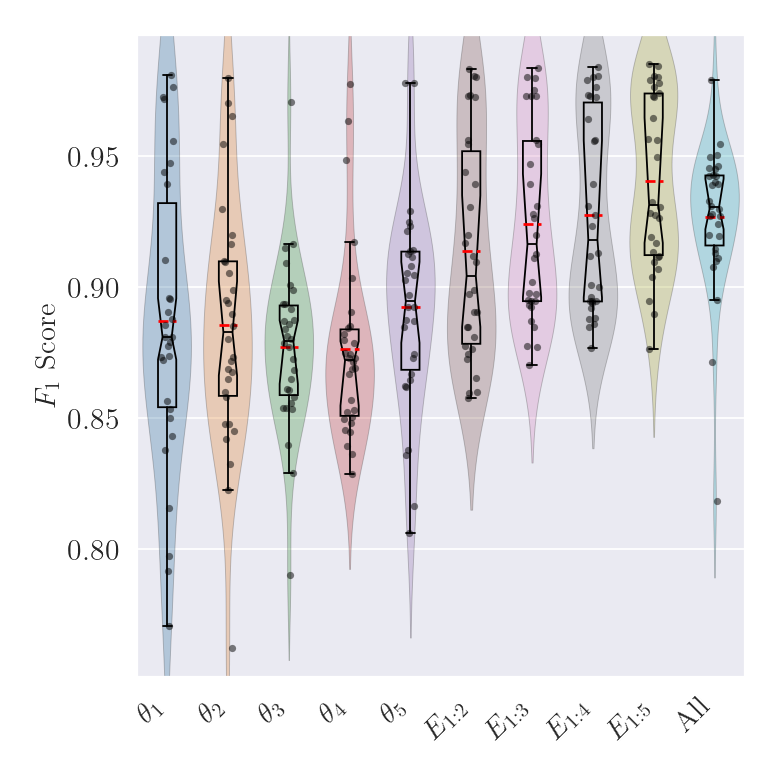}\label{fig:lrapsf_F1}}%
		\hfill
		\subfloat[ARWPM]{\includegraphics[width=0.24\textwidth]{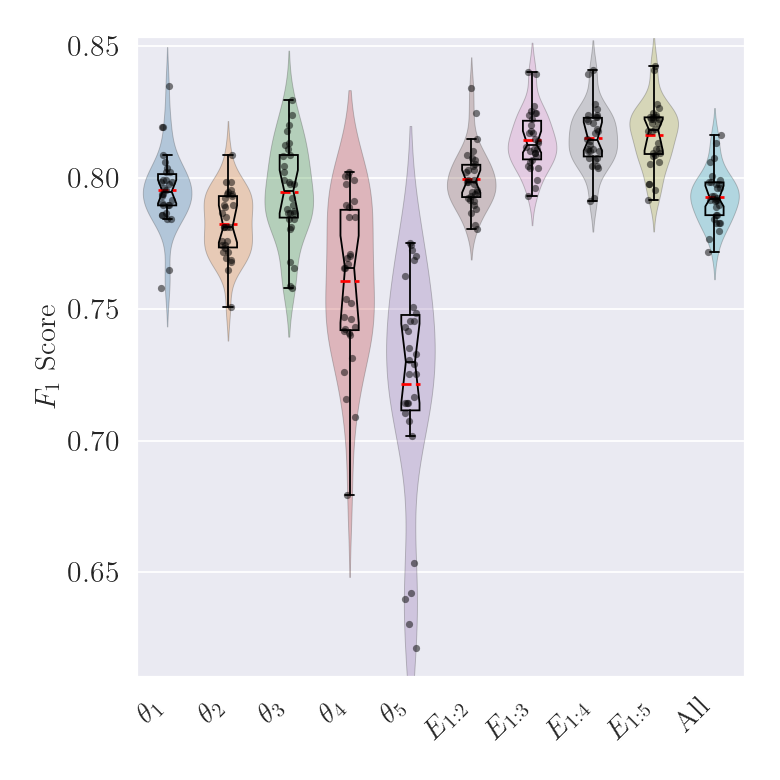}\label{fig:lrarwpm_F1}}%
		\hfill
		\subfloat[GECR]{\includegraphics[width=0.24\textwidth]{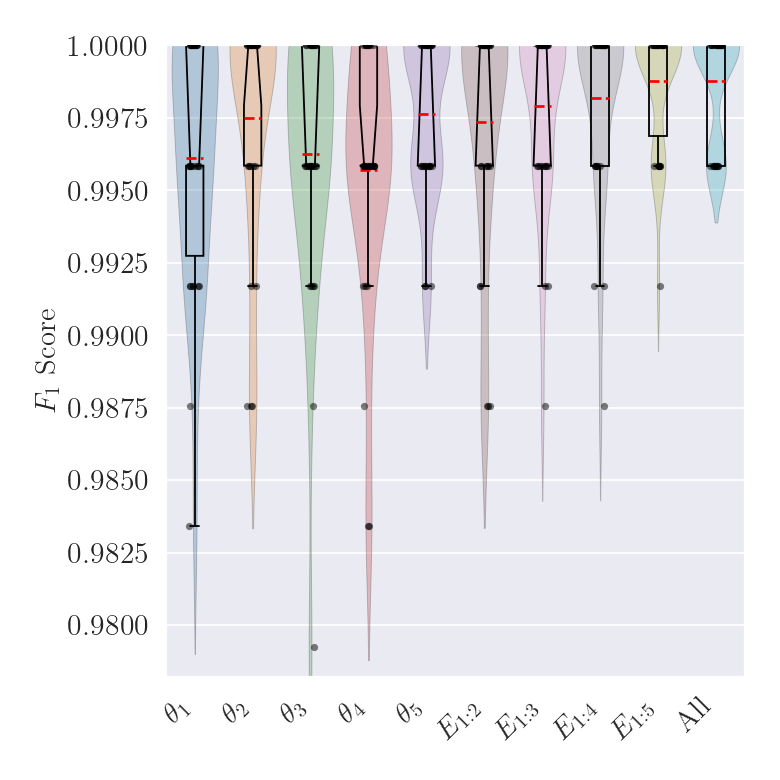}\label{fig:lrgecr_F1}}%
		\hfill
		\subfloat[GFE]{\includegraphics[width=0.24\textwidth]{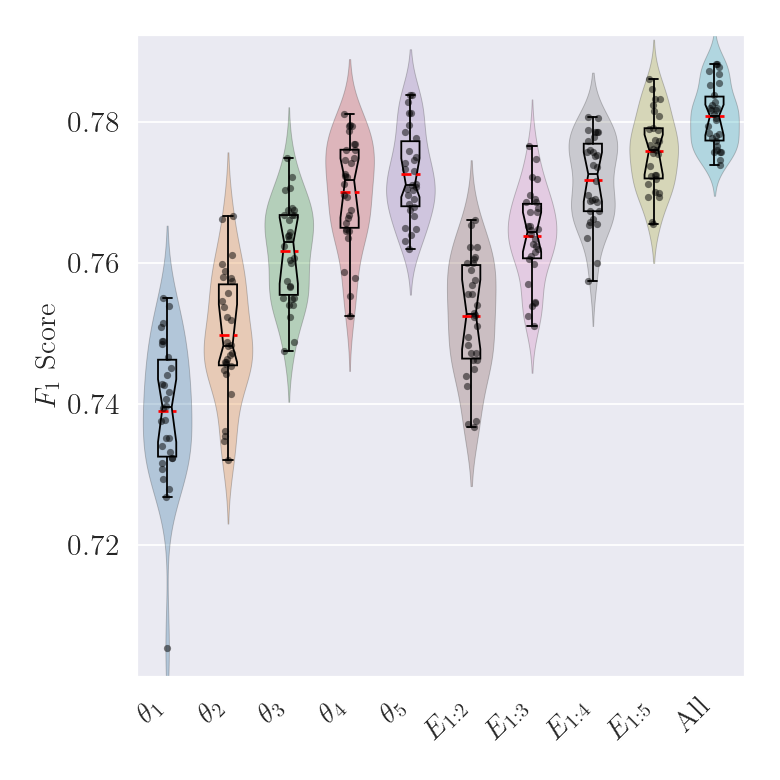}\label{fig:lrgfe_F1}}
		
		% Second row
		\subfloat[GSAD]{\includegraphics[width=0.24\textwidth]{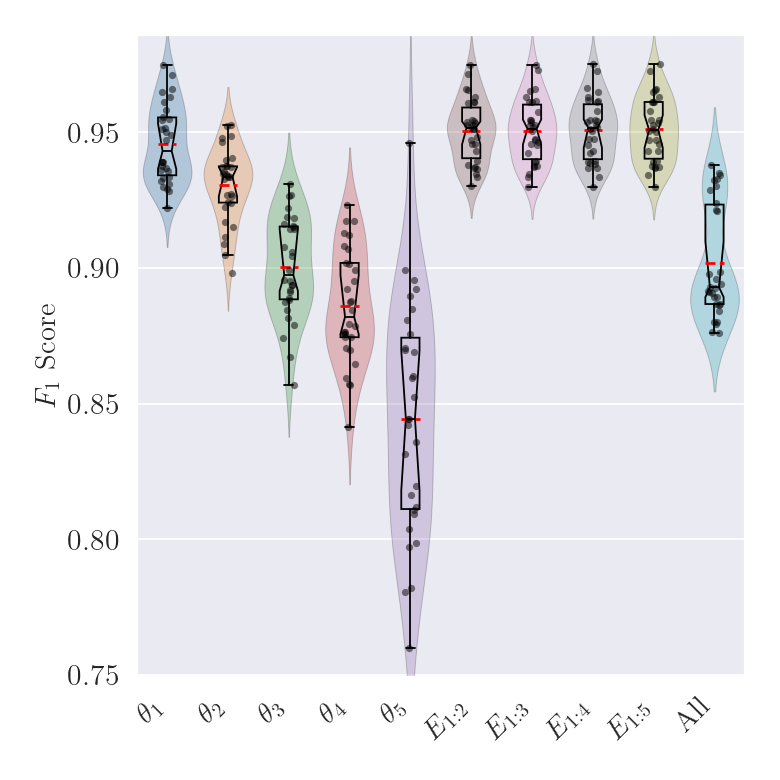}\label{fig:fpgsad_F1}}%
		\hfill
		\subfloat[HAPT]{\includegraphics[width=0.24\textwidth]{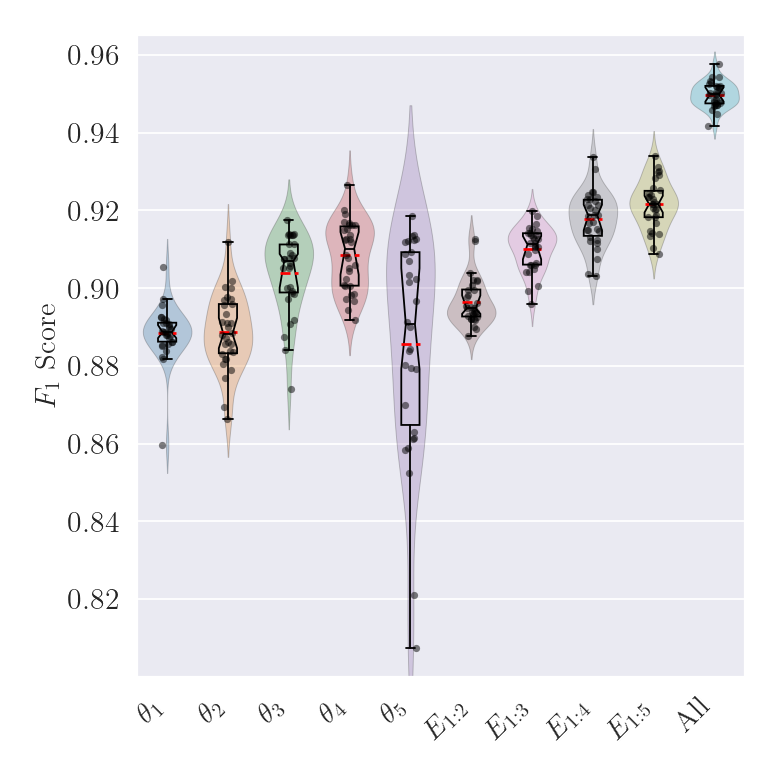}\label{fig:lrhapt_F1}}%
		\hfill
		\subfloat[ISOLET]{\includegraphics[width=0.24\textwidth]{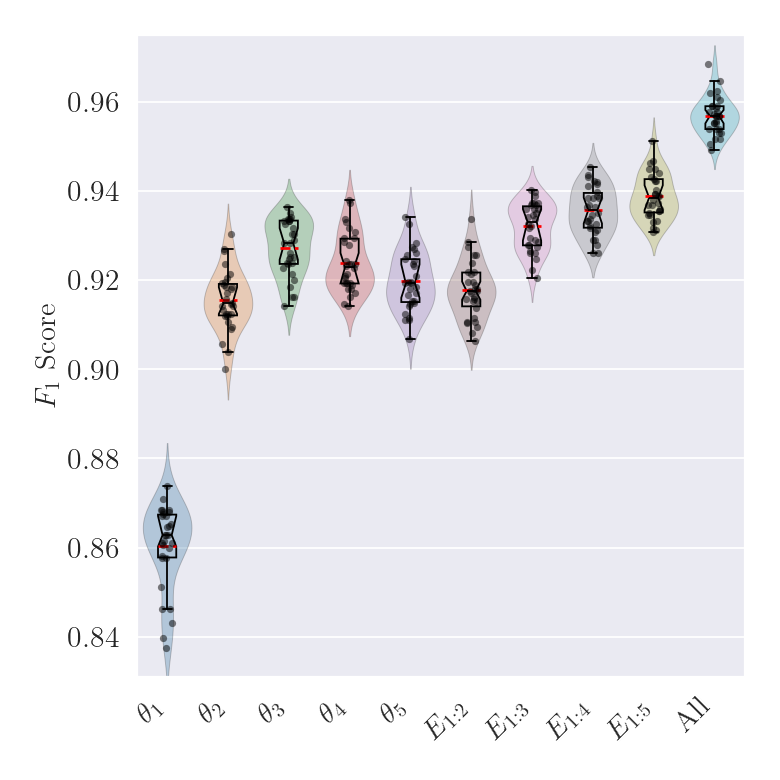}\label{fig:lrisolet_F1}}%
		\hfill
		\subfloat[PD]{\includegraphics[width=0.24\textwidth]{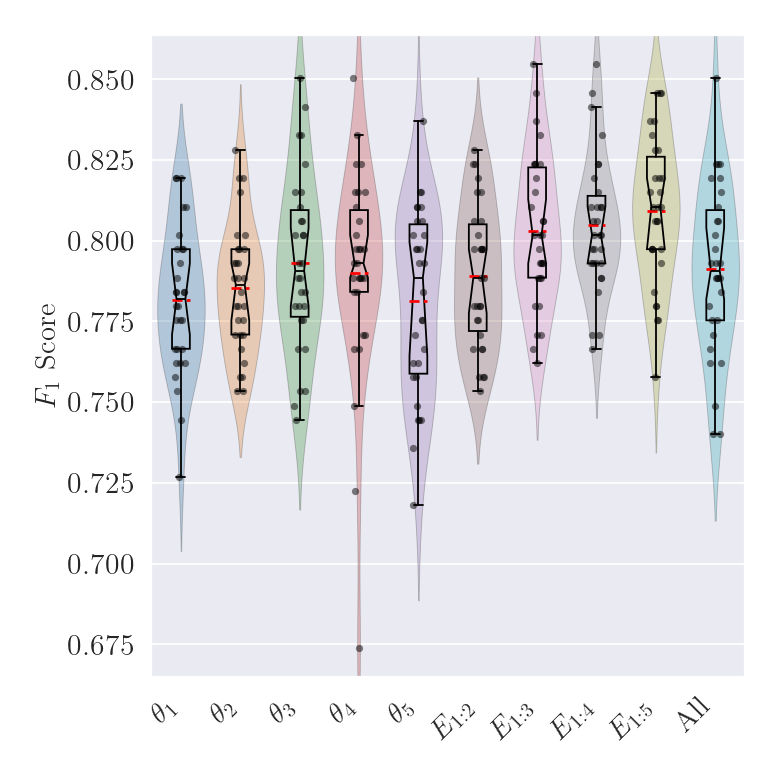}\label{fig:lrpd_F1}}
		%\end{comment}
		\caption[The distribution of the obtained $F_1$ score for 30 Logistic Regression runs.]{The raincloud plot of $F_1$ score results obtained from 30 Logistic Regression runs.}
		
		\label{fig:lr_F1}
	\end{figure*}
	%%%%%%%%%%%%%%%%%%%%%%%%%%%%%%%%%%%%%%%%%%%%%%%%%%%%%%%%%%%%%%%%%%%%%%%%%%%%%%
	
	%%%%%%%%%%%%%%%%%%%%%%%%%%%%%%%%%%%%%%%%%%%%%%%%%%%%%%%%%%%%%%%%%%%%%%%%%%%%%%
	\begin{figure*}[t] 
		\centering
		%\begin{comment}
		% First row
		\subfloat[APSF]{\includegraphics[width=0.24\textwidth]{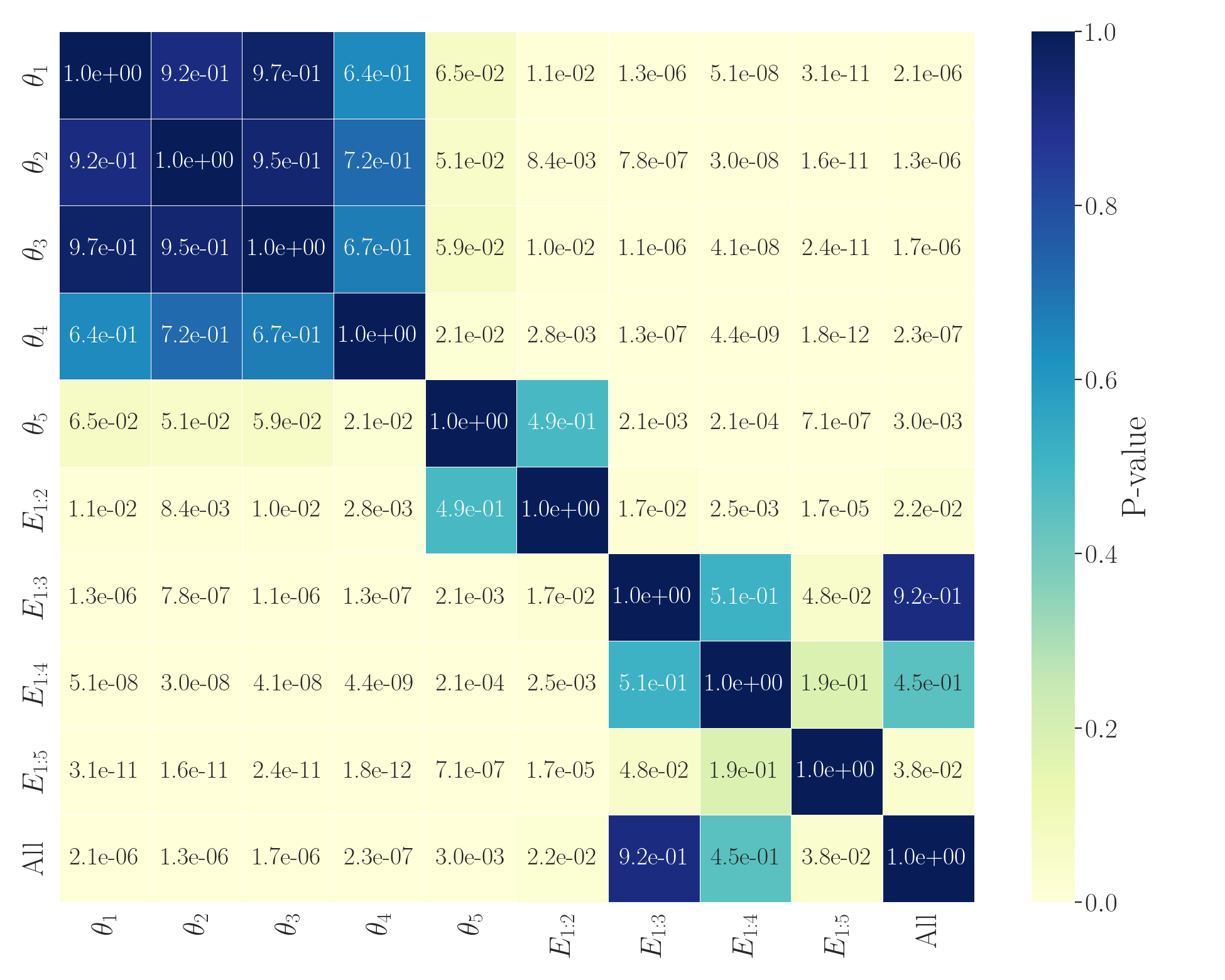}\label{fig:lrnemapsf_F1}}%
		\hfill
		\subfloat[ARWPM]{\includegraphics[width=0.24\textwidth]{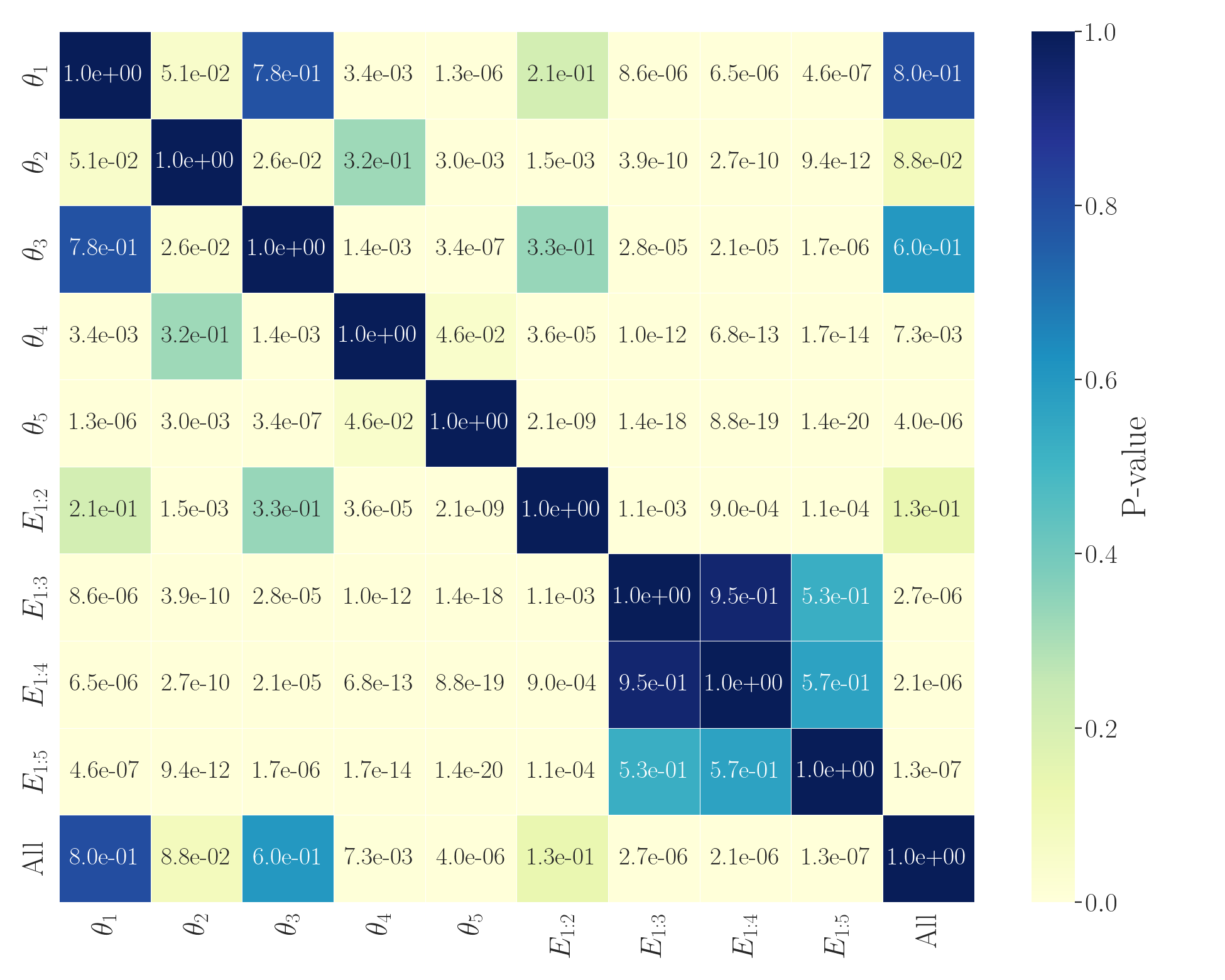}\label{fig:lrnemarwpm_F1}}%
		\hfill
		\subfloat[GECR]{\includegraphics[width=0.24\textwidth]{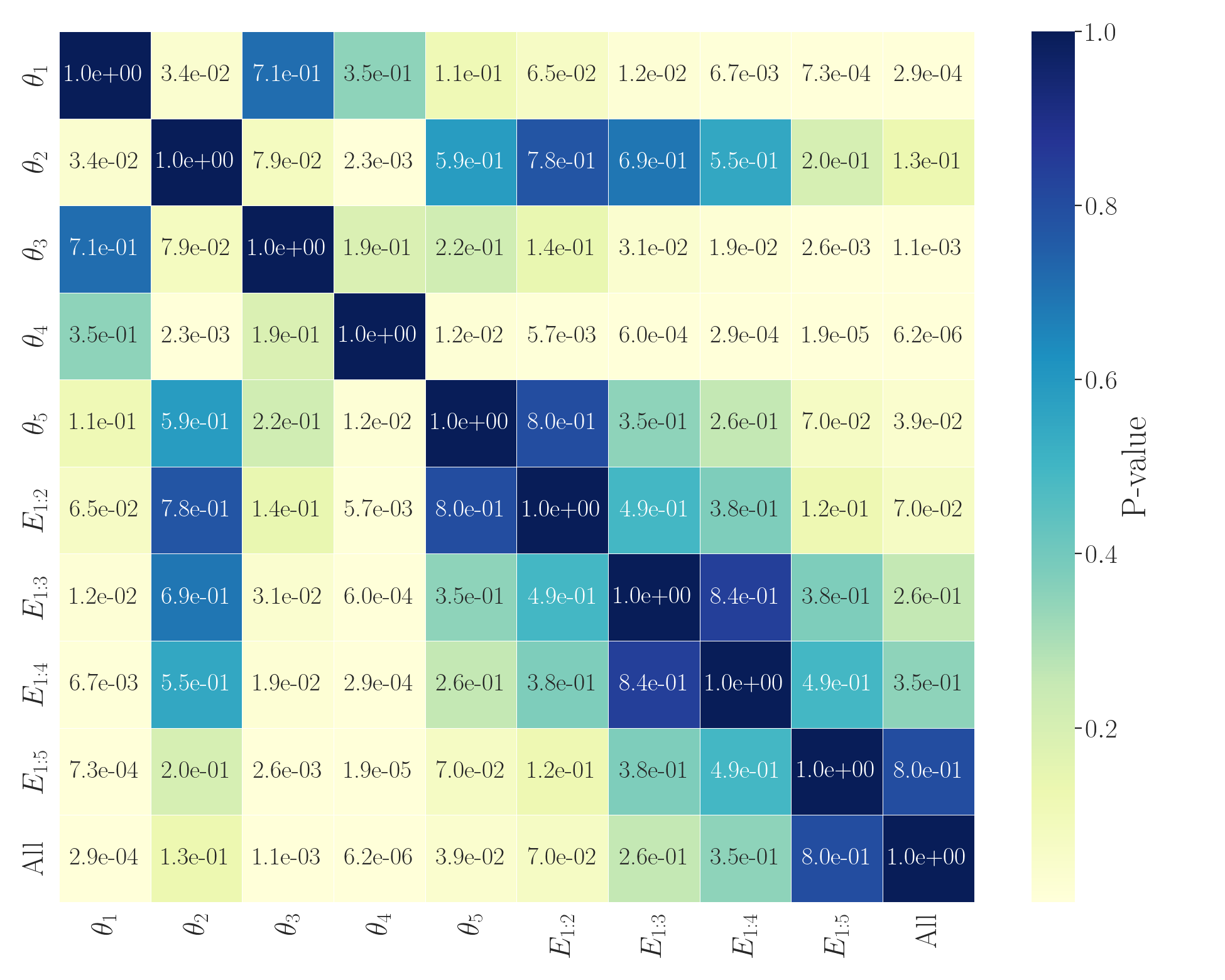}\label{fig:lrnemgecr_F1}}%
		\hfill
		\subfloat[GFE]{\includegraphics[width=0.24\textwidth]{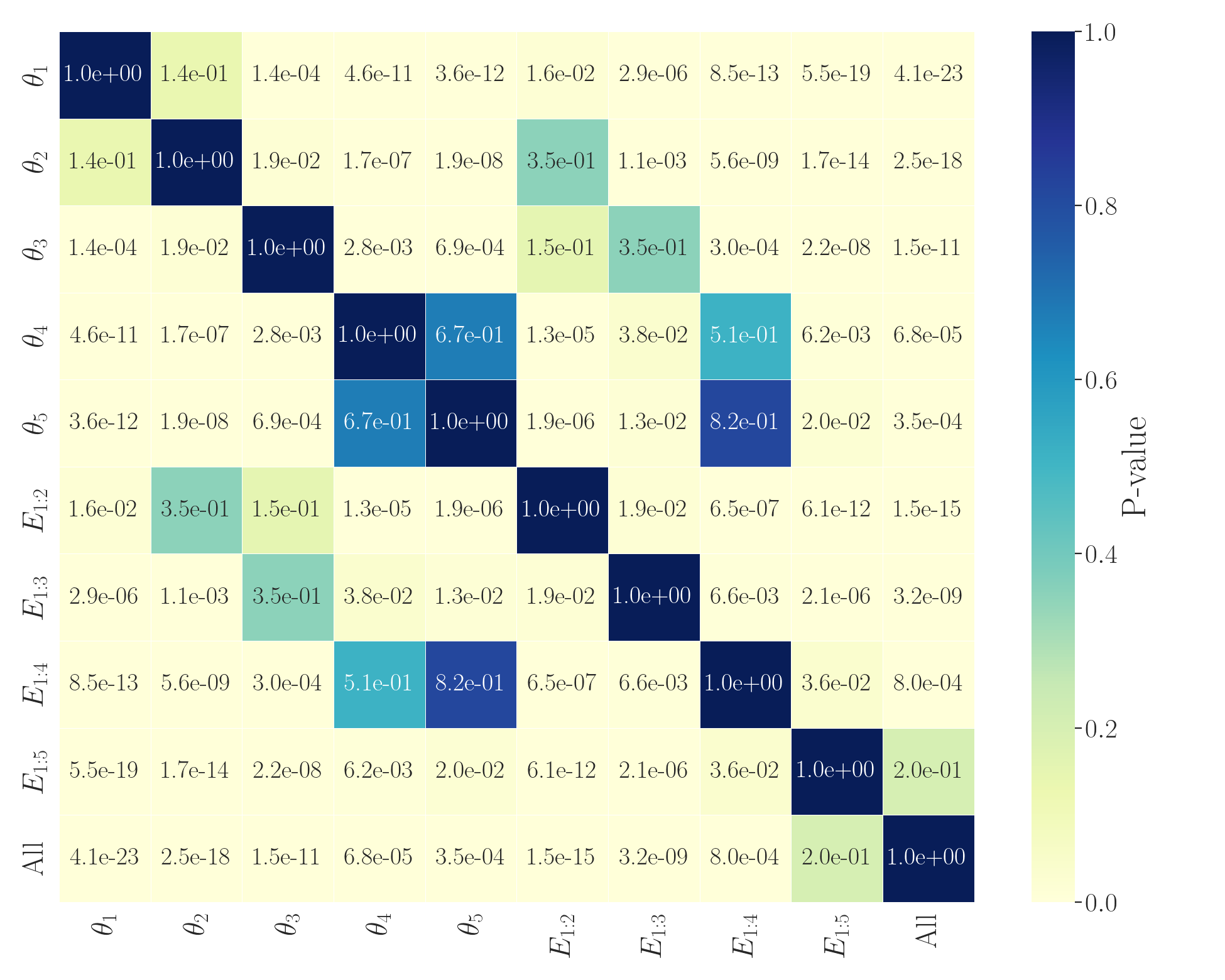}\label{fig:lrnemgfe_F1}}
		
		% Second row
		\subfloat[GSAD]{\includegraphics[width=0.24\textwidth]{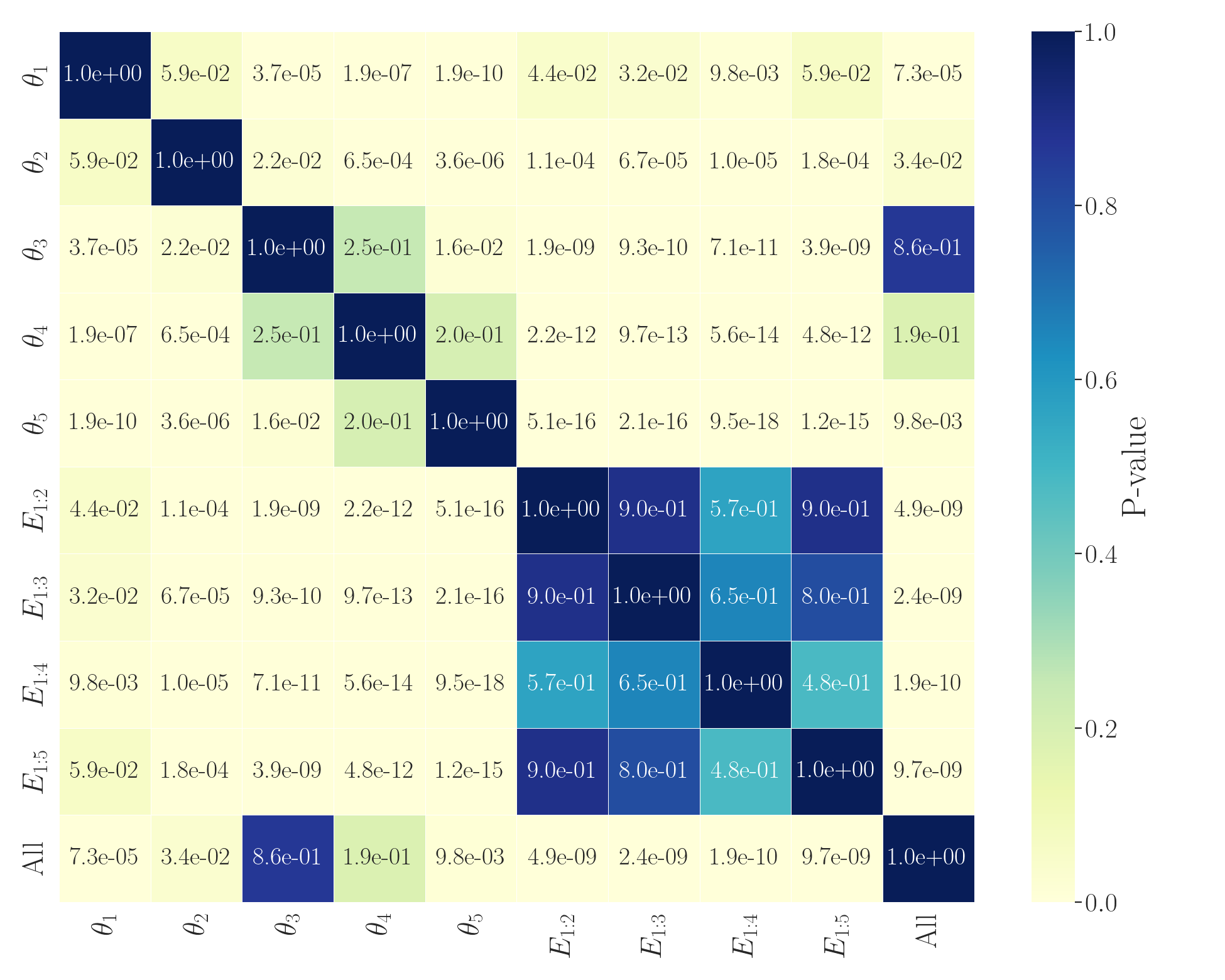}\label{fig:lrnemgsad_F1}}%
		\hfill
		\subfloat[HAPT]{\includegraphics[width=0.24\textwidth]{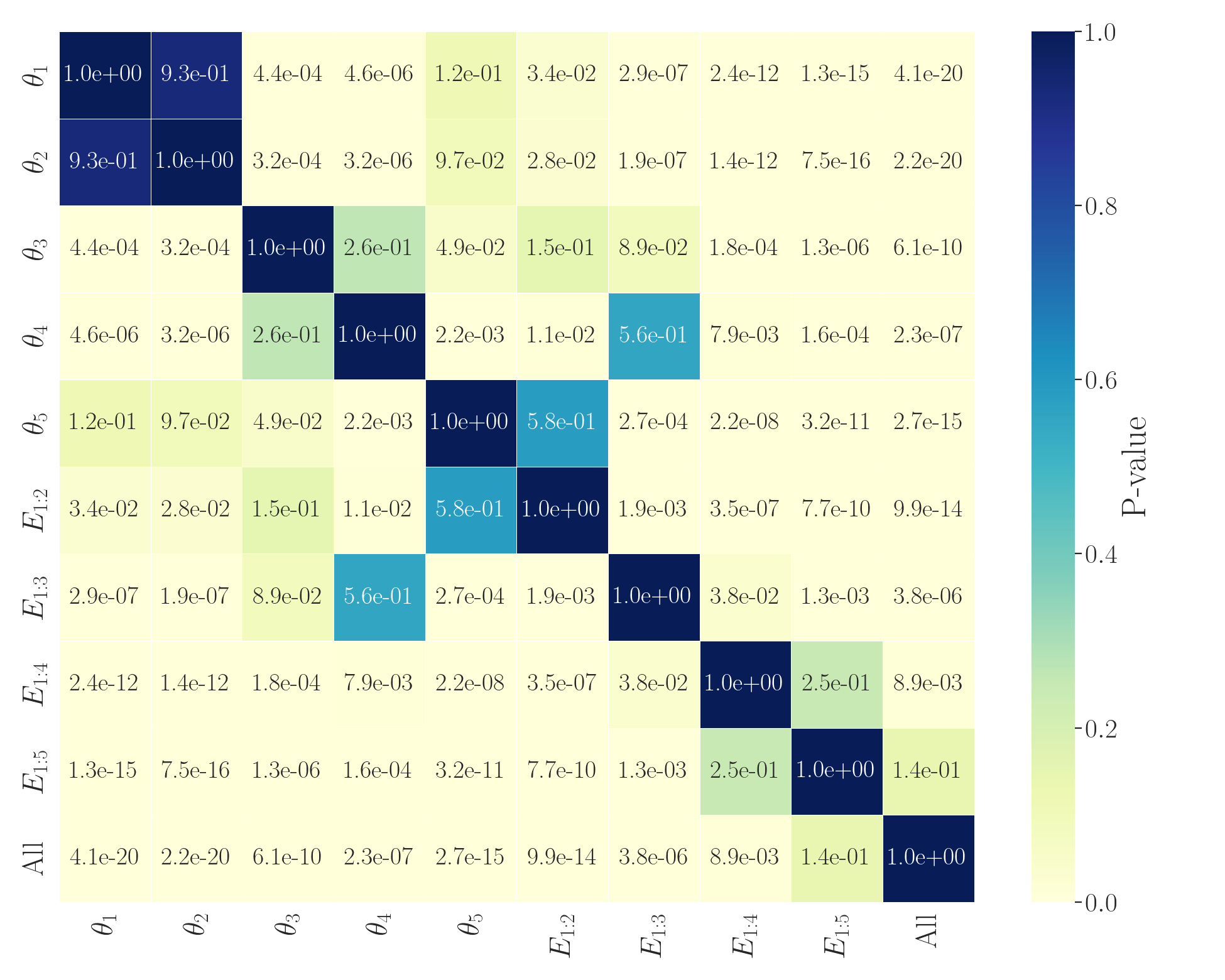}\label{fig:lrnemhapt_F1}}%
		\hfill
		\subfloat[ISOLET]{\includegraphics[width=0.24\textwidth]{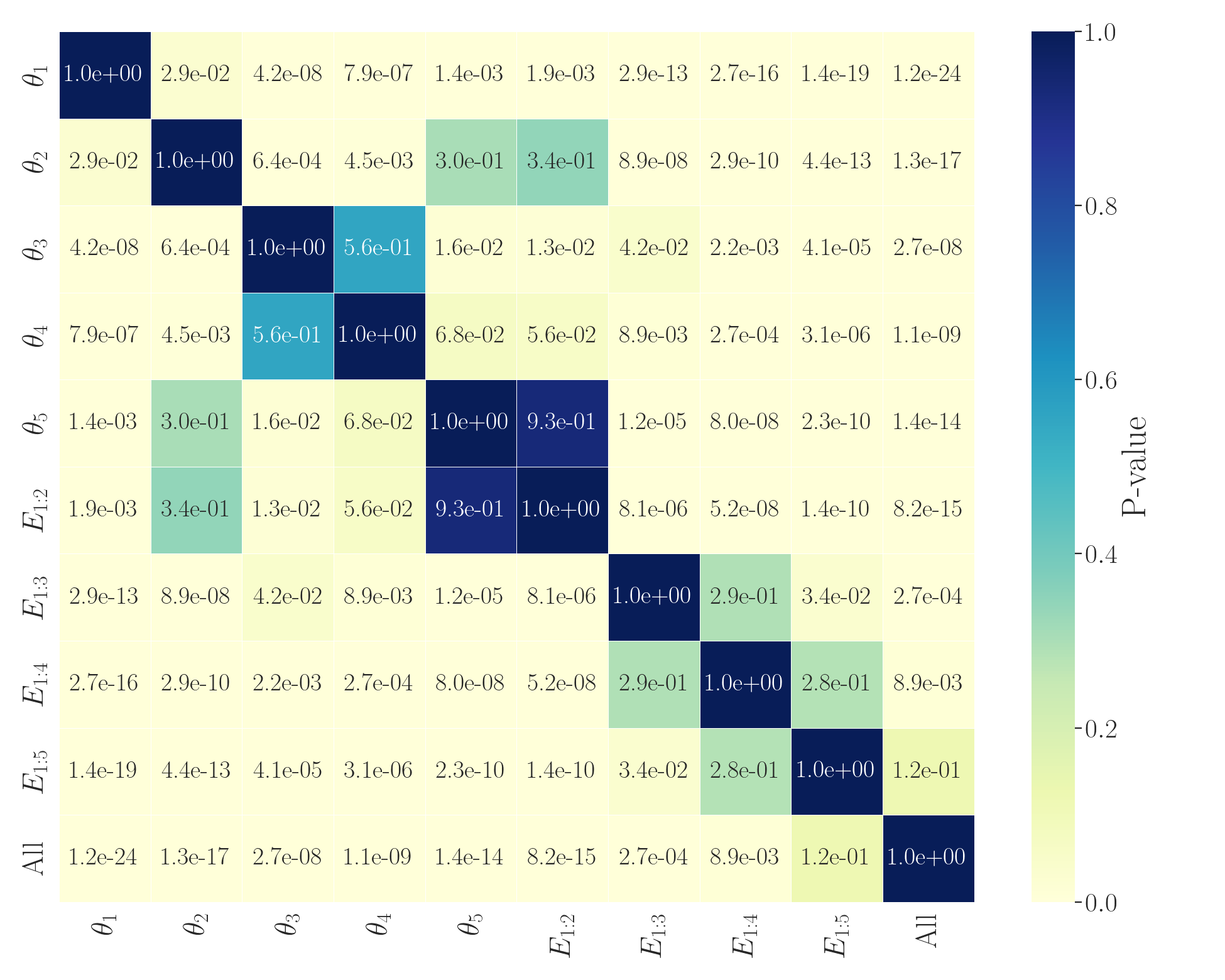}\label{fig:lrnemisolet_F1}}%
		\hfill
		\subfloat[PD]{\includegraphics[width=0.24\textwidth]{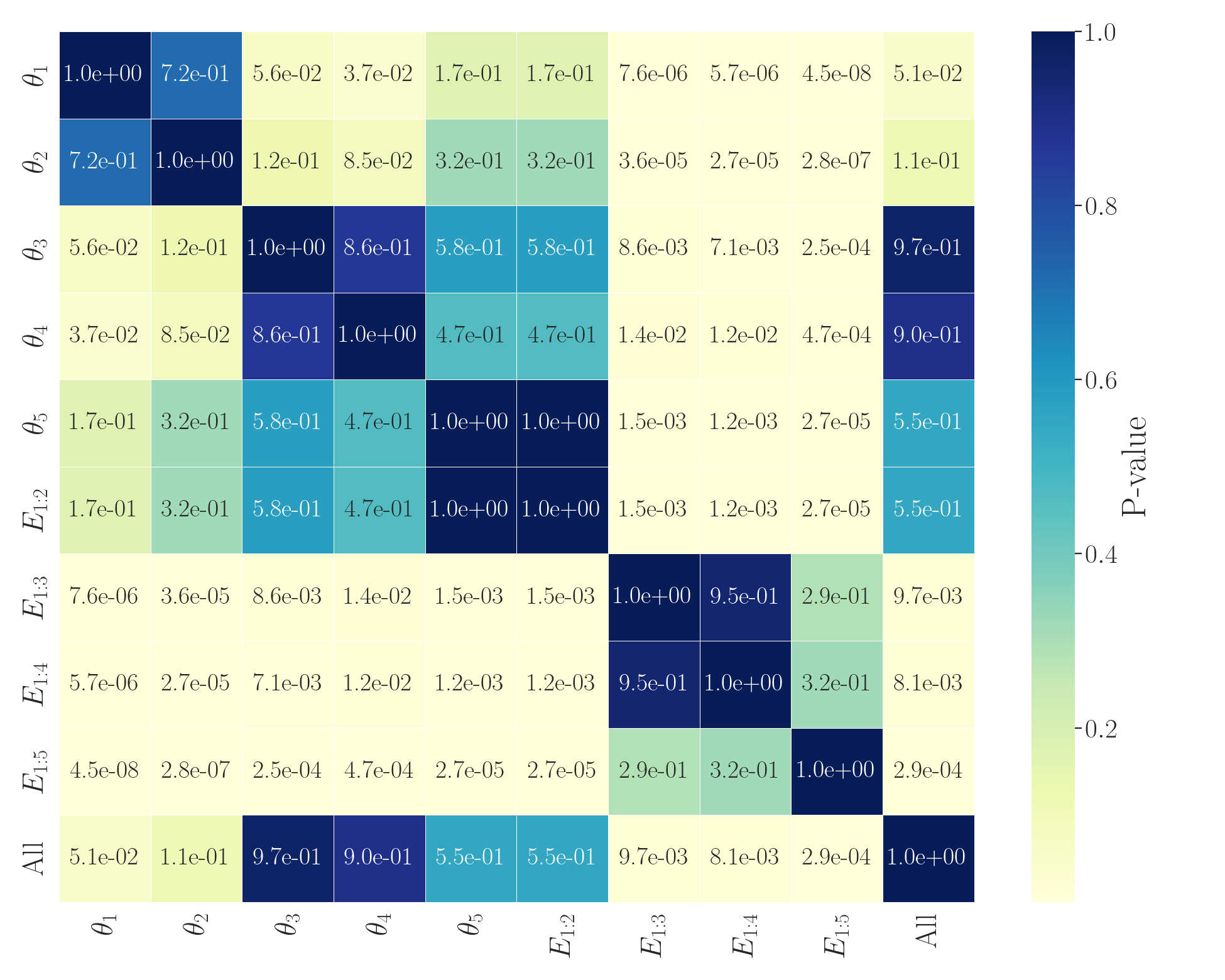}\label{fig:lrnempd_F1}}
		%\end{comment}
		\caption[The adjusted Conover's P-values for the obtained $F_1$ score in 30 Logistic Regression runs.]{The results of the Conover post-hoc test on testing data’s $F_1$ score obtained from 30 Logistic Regression runs.}
		
		\label{fig:lrnem_F1}
	\end{figure*}
	\FloatBarrier
	%%%%%%%%%%%%%%%%%%%%%%%%%%%%%%%%%%%%%%%%%%%%%%%%%%%%%%%%%%%%%%%%%%%%%%%%%%%%%%
	
	%%%%%%%%%%%%%%%%%%%%%%%%%%%%%%%%%%%%%%%%%%%%%%%%%%%%%%%%%%%%%%%%%%%%%%%%%%%%%%
	\begin{figure*}[htbp] 
		\centering
		%\begin{comment}
		% First row
		\subfloat[APSF]{\includegraphics[width=0.24\textwidth]{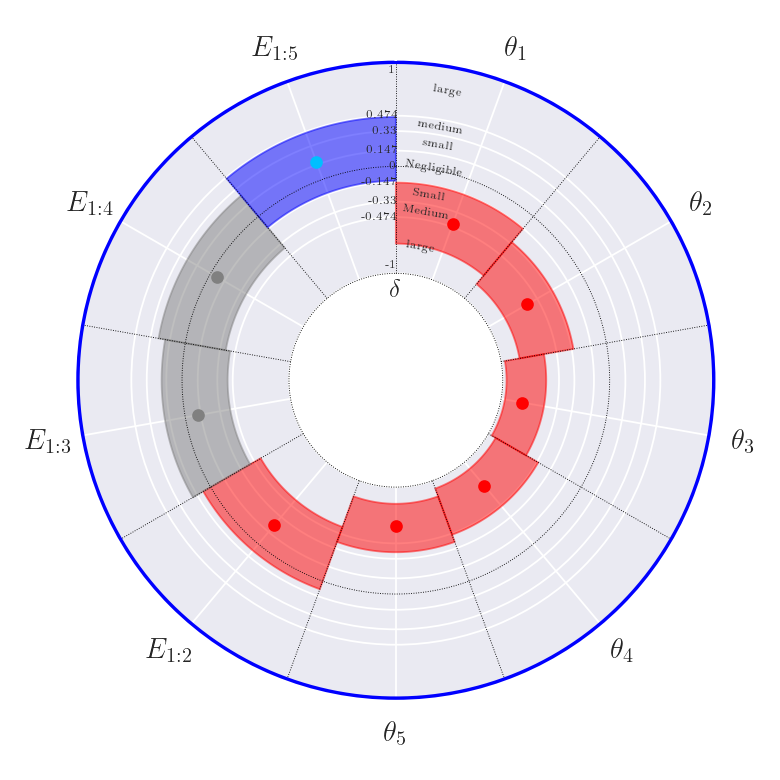}\label{fig:lrcliffapsf_F1}}%
		\hfill
		\subfloat[ARWPM]{\includegraphics[width=0.24\textwidth]{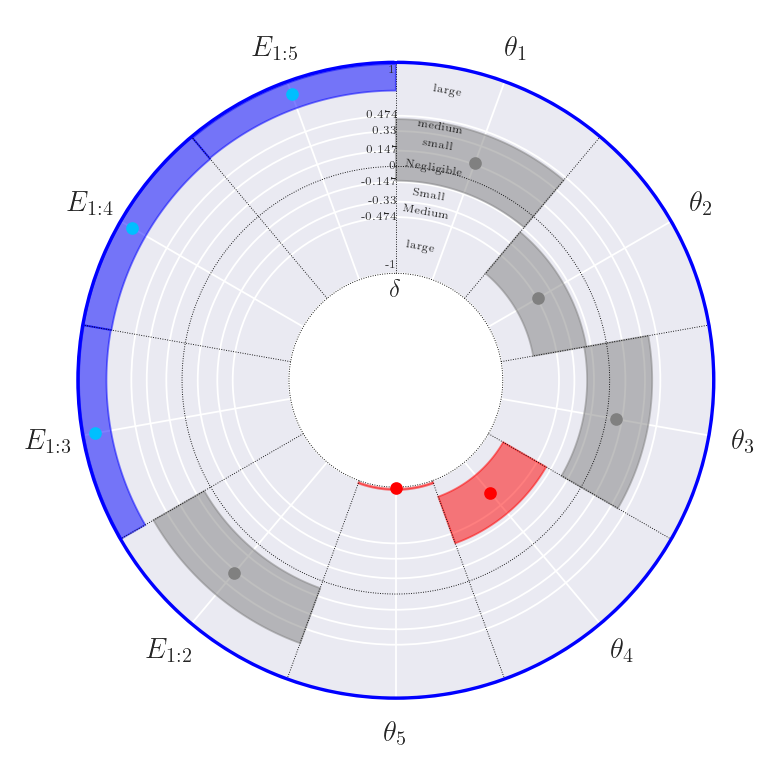}\label{fig:lrcliffarwpm_F1}}%
		\hfill
		\subfloat[GECR]{\includegraphics[width=0.24\textwidth]{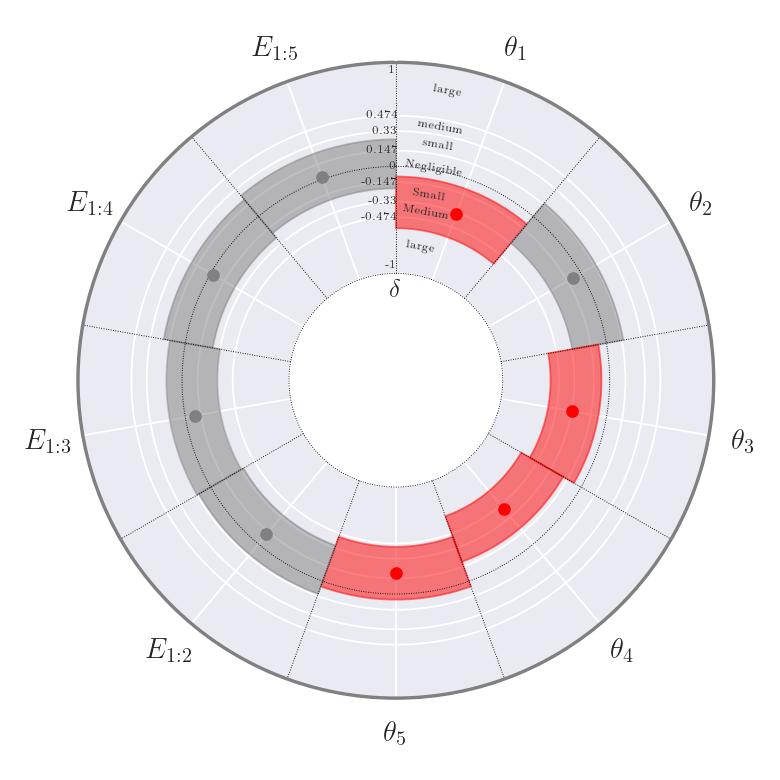}\label{fig:lrcliffgecr_F1}}%
		\hfill
		\subfloat[GFE]{\includegraphics[width=0.24\textwidth]{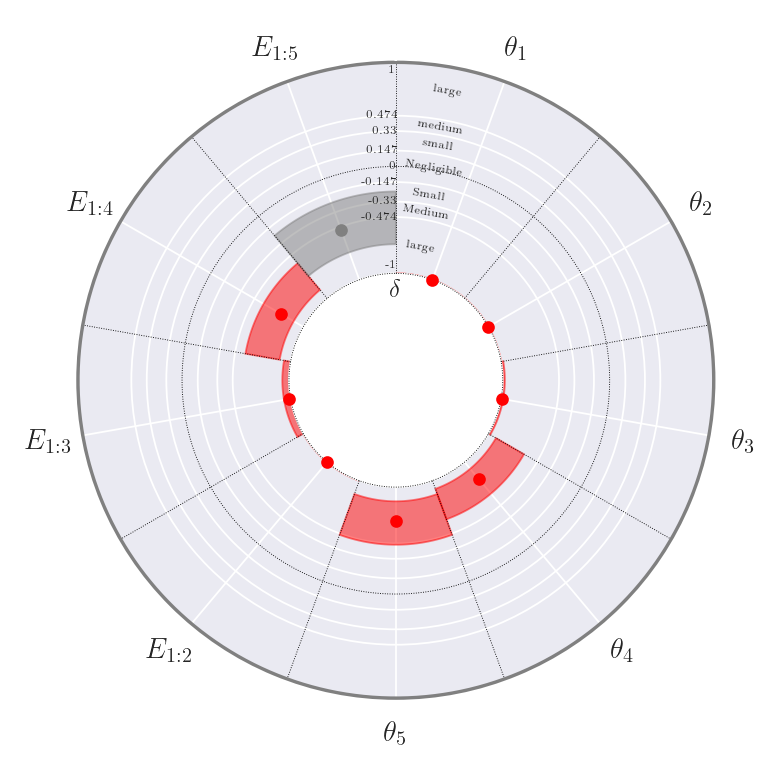}\label{fig:lrcliffgfe_F1}}
		
		% Second row
		\subfloat[GSAD]{\includegraphics[width=0.24\textwidth]{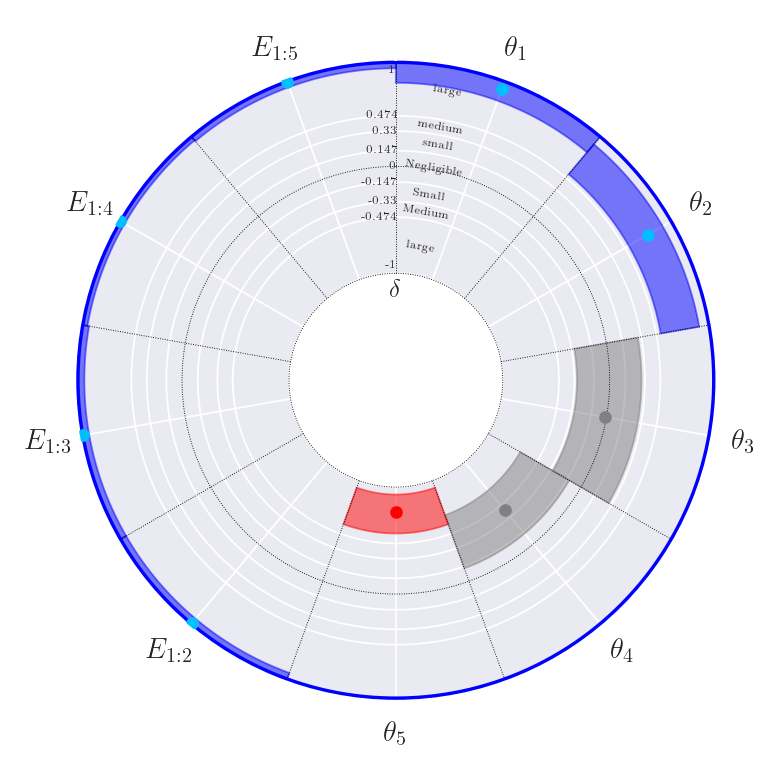}\label{fig:lrcliffgsad_F1}}%
		\hfill
		\subfloat[HAPT]{\includegraphics[width=0.24\textwidth]{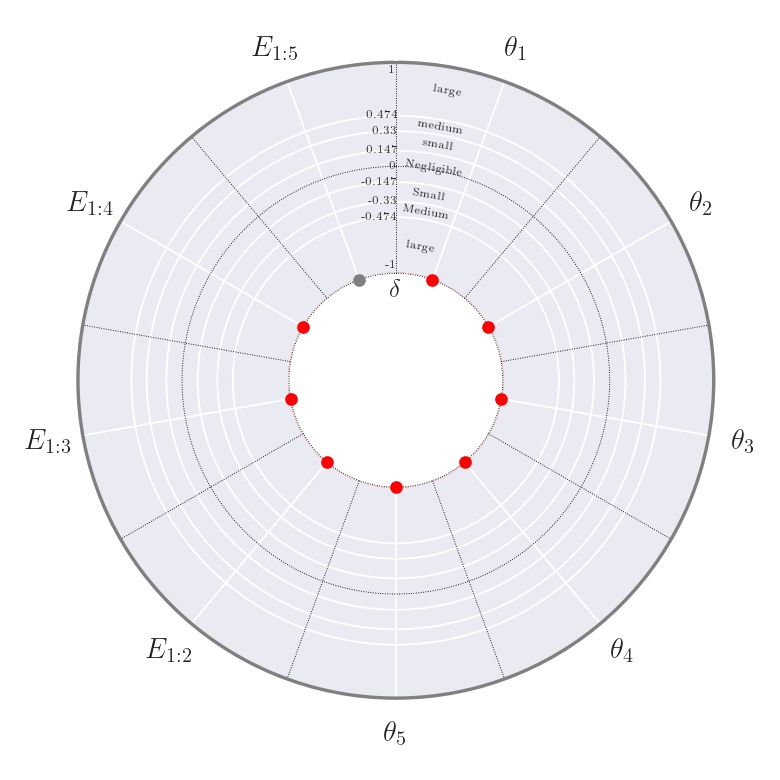}\label{fig:lrcliffhapt_F1}}%
		\hfill
		\subfloat[ISOLET]{\includegraphics[width=0.24\textwidth]{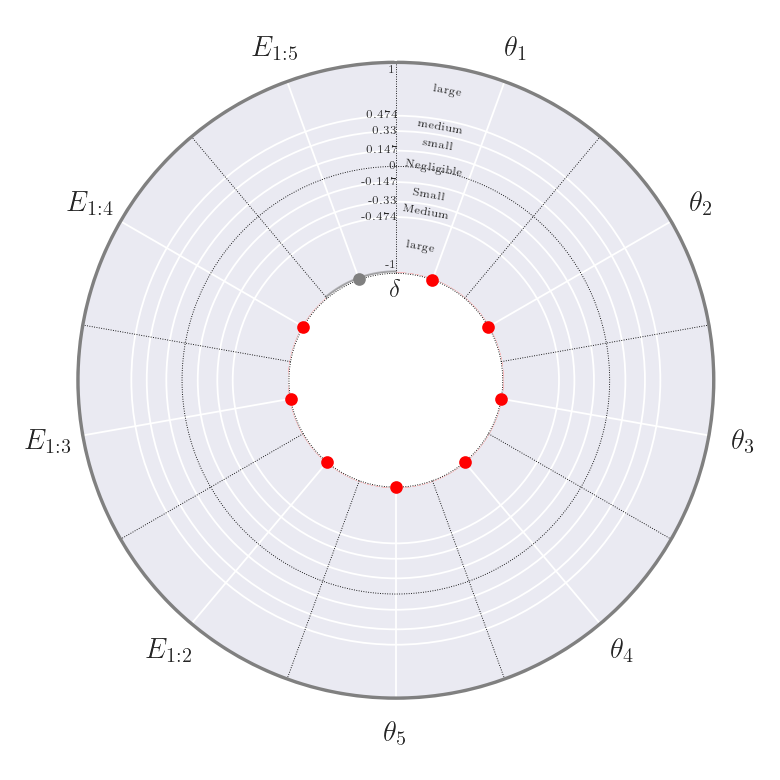}\label{fig:lrcliffisolet_F1}}%
		\hfill
		\subfloat[PD]{\includegraphics[width=0.24\textwidth]{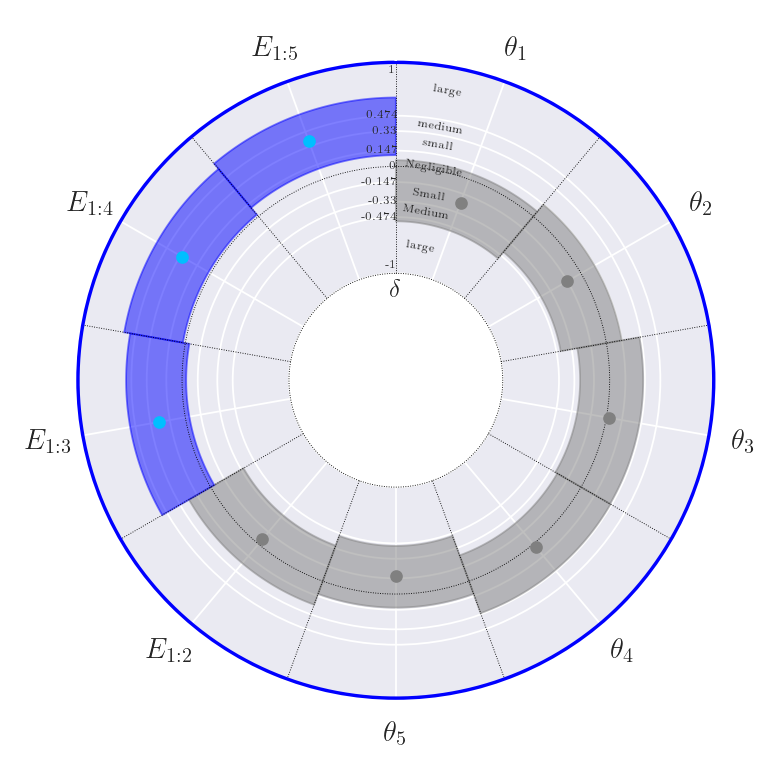}\label{fig:lrcliffpd_F1}}
		%\end{comment}
		\caption[The Cliff's $\delta$ effect size measure and its 95\% confidence intervals for the $F_1$ score obtained from 30 Logistic Regression runs.]{Effect size analysis of test data $F_1$ scores across 30 Logistic Regression runs using Cliff's $\delta$. Each point represents the actual value obtained, with segments denoting 95\% confidence intervals based on 10,000 bootstrap resamplings. The outer ring color visualizes the statistical significance: grey illustrates no significant difference (adjusted Friedman's P-value$>0.05$), while color indicates significant differences; blue indicates at least one view and/or ensemble outperforms the benchmark (adjusted Conover's p-value$ < 0.05$, Cliff's $\delta > 0$), and red signifies all views and ensembles underperform relative to the benchmark (adjusted Conover's p-value$ < 0.05$, Cliff's $\delta < 0$). Segment colors show performance difference against the benchmark: grey for no significant difference (adjusted Conover's p-value$  > 0.05$), blue for better performance (Cliff's $\delta > 0$), and red for worse performance (Cliff's $\delta < 0$).}
		
		\label{fig:lrcliff_F1}
	\end{figure*}
	%%%%%%%%%%%%%%%%%%%%%%%%%%%%%%%%%%%%%%%%%%%%%%%%%%%%%%%%%%%%%%%%%%%%%%%%%%%%%%
	
	%%%%%%%%%%%%%%%%%%%%%%%%%%%%%%%%%%%%%%%%%%%%%%%%%%%%%%%%%%%%%%%%%%%%%%%%%%%%%%
	\begin{table*}[htbp]
		\centering
		\caption[The results of Friedman and Conover tests and Cliff's $\delta$ analysis for the $F_1$ score obtained from 30 Logistic Regression runs.]{Statistical comparison of $F_1$ score results for testing data obtained from Logistic Regression runs. W, T, and L denote win, tie, and loss based on adjusted Friedman and Conover's p-values. Effect sizes are calculated using Cliff's Delta method and are categorized as negligible, small, medium, or large.}
		\label{tab:lrf1}
		\resizebox{\linewidth}{!}{%
			\begin{tabular}{c|ccccccccc}
				\hline
				\multicolumn{10}{c}{Logistic Regression's $F_1$ Score}\\
				\hline
				Dataset & $\theta_1$ & $\theta_2$ & $\theta_3$ & $\theta_4$ & $\theta_5$ & $E_{1:2}$ & $E_{1:3}$ & $E_{1:4}$ & $E_{1:5}$ \\
				\hline
				APSF  & L (medium) & L (large) & L (large) & L (large) & L (large) & L (small) & T (negligible) & T (negligible) & W (small) \\
				ARWPM  & T (small) & T (medium) & T (negligible) & L (large) & L (large) & T (medium) & W (large) & W (large) & W (large) \\
				GECR  & L (medium) & T (negligible) & L (small) & L (medium) & L (small) & T (negligible) & T (negligible) & T (negligible) & T (negligible) \\
				GFE  & L (large) & L (large) & L (large) & L (large) & L (large) & L (large) & L (large) & L (large) & T (large) \\
				GSAD  & W (large) & W (large) & T (negligible) & T (medium) & L (large) & W (large) & W (large) & W (large) & W (large) \\
				HAPT  & L (large) & L (large) & L (large) & L (large) & L (large) & L (large) & L (large) & L (large) & T (large) \\
				ISOLET  & L (large) & L (large) & L (large) & L (large) & L (large) & L (large) & L (large) & L (large) & T (large) \\
				PD  & T (small) & T (small) & T (negligible) & T (negligible) & T (small) & T (negligible) & W (small) & W (small) & W (medium) \\
				\hline
				W - T - L  & 1 - 2 - 5 & 1 - 3 - 4 & 0 - 3 - 5 & 0 - 2 - 6 & 0 - 1 - 7 & 1 - 3 - 4 & 3 - 2 - 3 & 3 - 2 - 3 & 4 - 4 - 0 \\
				\hline
			\end{tabular}
		}
	\end{table*}
	\FloatBarrier
	%%%%%%%%%%%%%%%%%%%%%%%%%%%%%%%%%%%%%%%%%%%%%%%%%%%%%%%%%%%%%%%%%%%%%%%%%%%%%%
	
	%\subsection{The AUC score results for Logistic Regression}
	%\label{ssub:lrauc}
	% Logit: AUC
	%%%%%%%%%%%%%%%%%%%%%%%%%%%%%%%%%%%%%%%%%%%%%%%%%%%%%%%%%%%%%%%%%%%%%%%%%%%%%%
	\begin{figure*}[t] 
		\centering
		%\begin{comment}
		% First row
		\subfloat[APSF]{\includegraphics[width=0.24\textwidth]{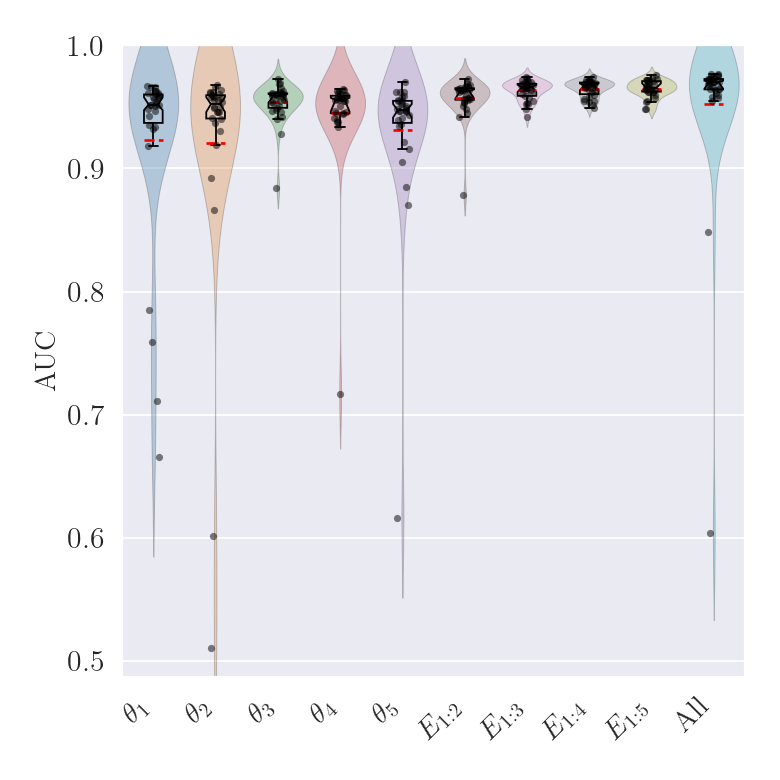}\label{fig:lrapsf_AUC}}%
		\hfill
		\subfloat[ARWPM]{\includegraphics[width=0.24\textwidth]{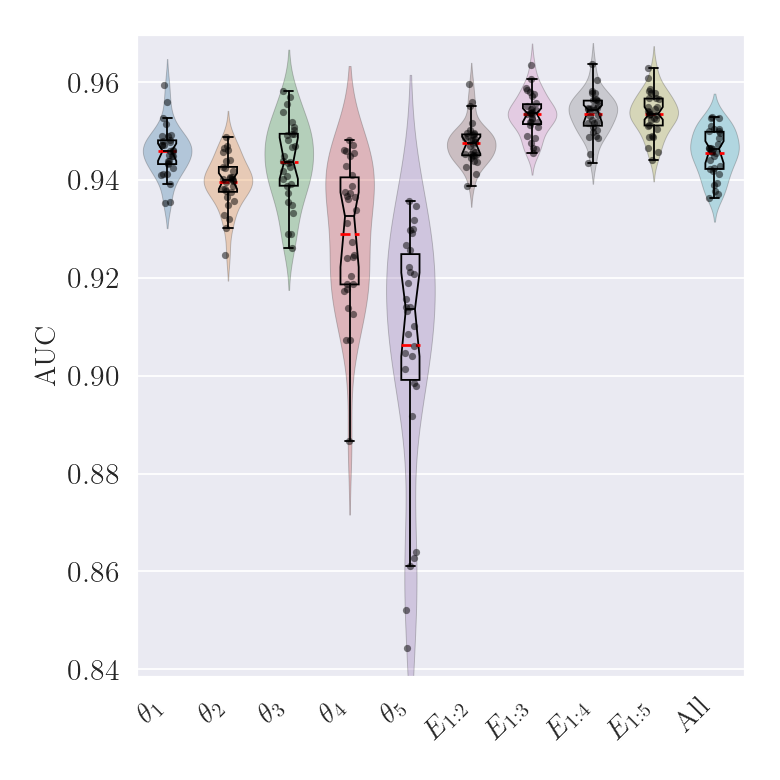}\label{fig:lrarwpm_AUC}}%
		\hfill
		\subfloat[GECR]{\includegraphics[width=0.24\textwidth]{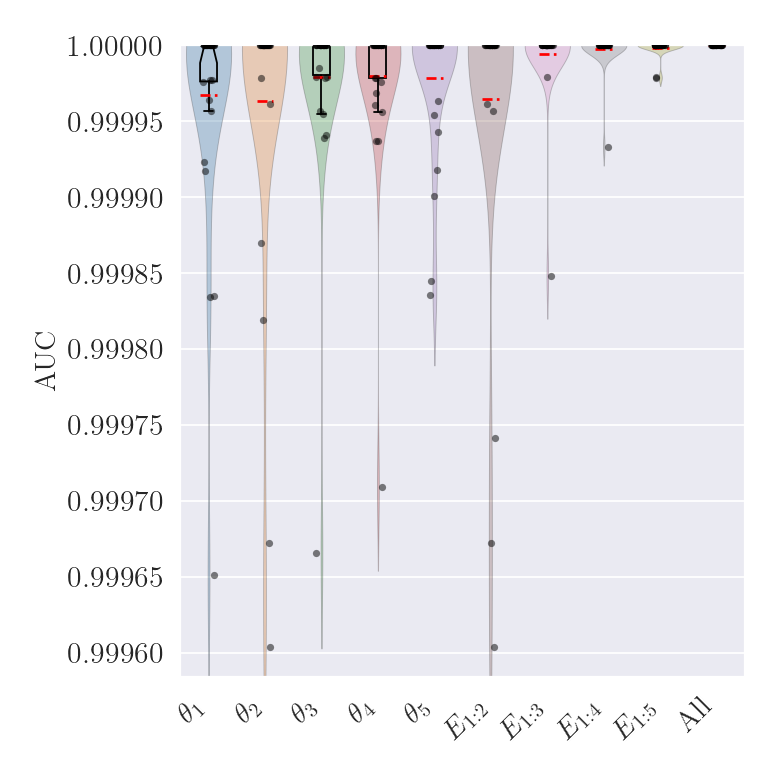}\label{fig:lrgecr_AUC}}%
		\hfill
		\subfloat[GFE]{\includegraphics[width=0.24\textwidth]{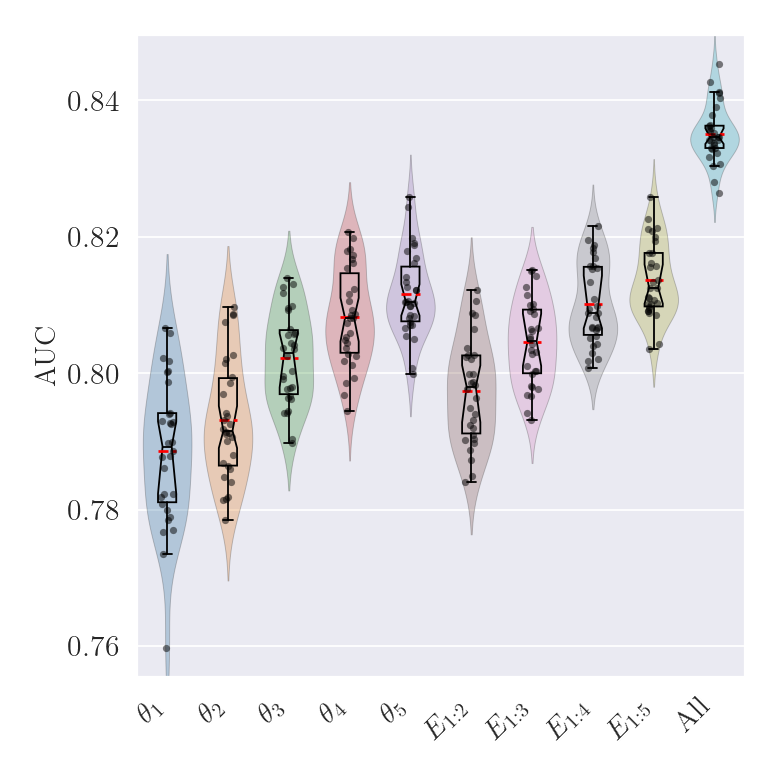}\label{fig:lrgfe_AUC}}
		
		% Second row
		\subfloat[GSAD]{\includegraphics[width=0.24\textwidth]{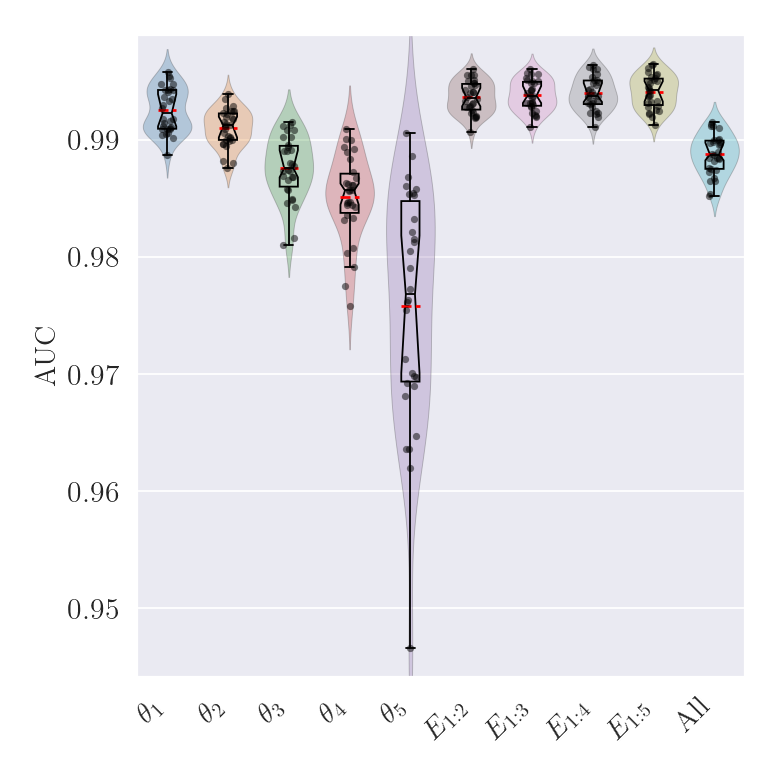}\label{fig:fpgsad_AUC}}%
		\hfill
		\subfloat[HAPT]{\includegraphics[width=0.24\textwidth]{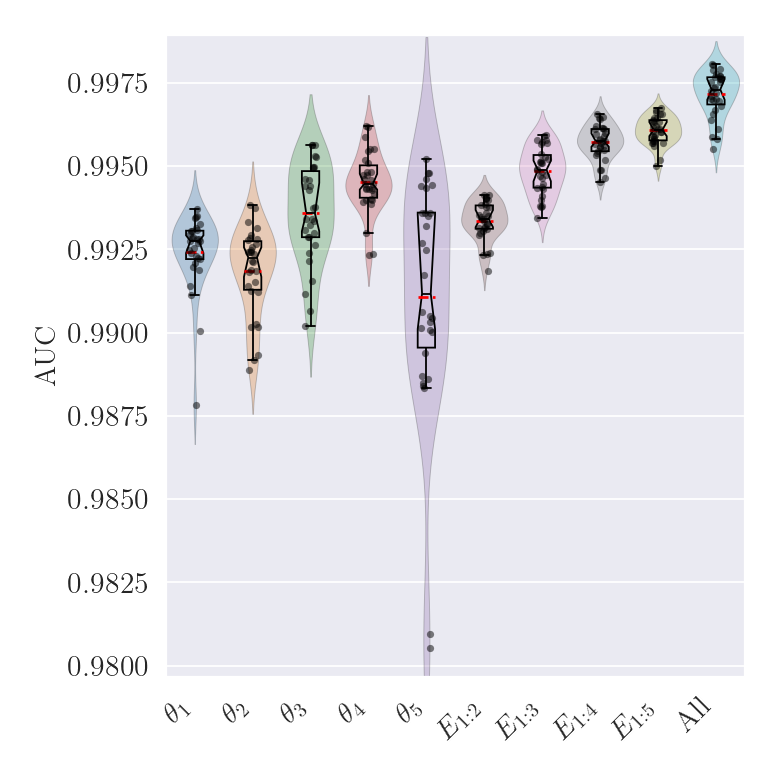}\label{fig:lrhapt_AUC}}%
		\hfill
		\subfloat[ISOLET]{\includegraphics[width=0.24\textwidth]{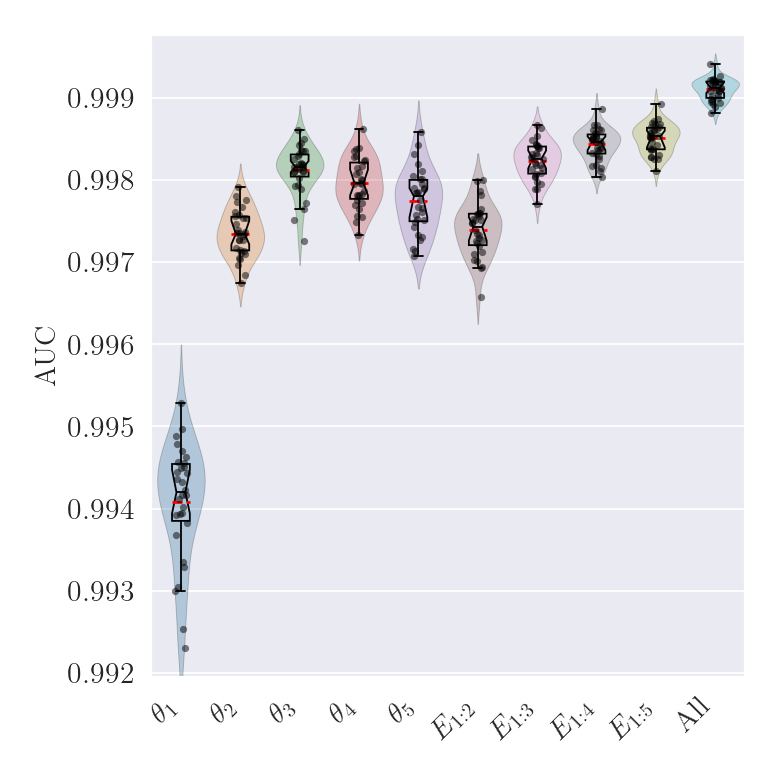}\label{fig:lrisolet_AUC}}%
		\hfill
		\subfloat[PD]{\includegraphics[width=0.24\textwidth]{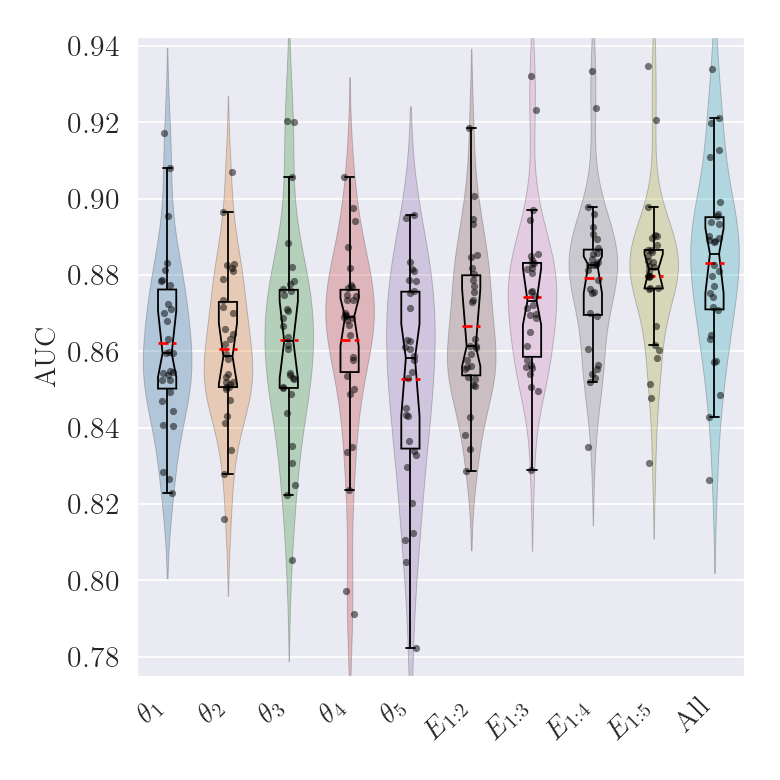}\label{fig:lrpd_AUC}}
		%\end{comment}
		\caption[The distribution of the obtained AUC values for 30 Logistic Regression runs.]{The raincloud plot of AUC results obtained from 30 Logistic Regression runs.}
		
		\label{fig:lr_AUC}
	\end{figure*}
	%%%%%%%%%%%%%%%%%%%%%%%%%%%%%%%%%%%%%%%%%%%%%%%%%%%%%%%%%%%%%%%%%%%%%%%%%%%%%%
	
	%%%%%%%%%%%%%%%%%%%%%%%%%%%%%%%%%%%%%%%%%%%%%%%%%%%%%%%%%%%%%%%%%%%%%%%%%%%%%%
	\begin{figure*}[t] 
		\centering
		%\begin{comment}
		% First row
		\subfloat[APSF]{\includegraphics[width=0.24\textwidth]{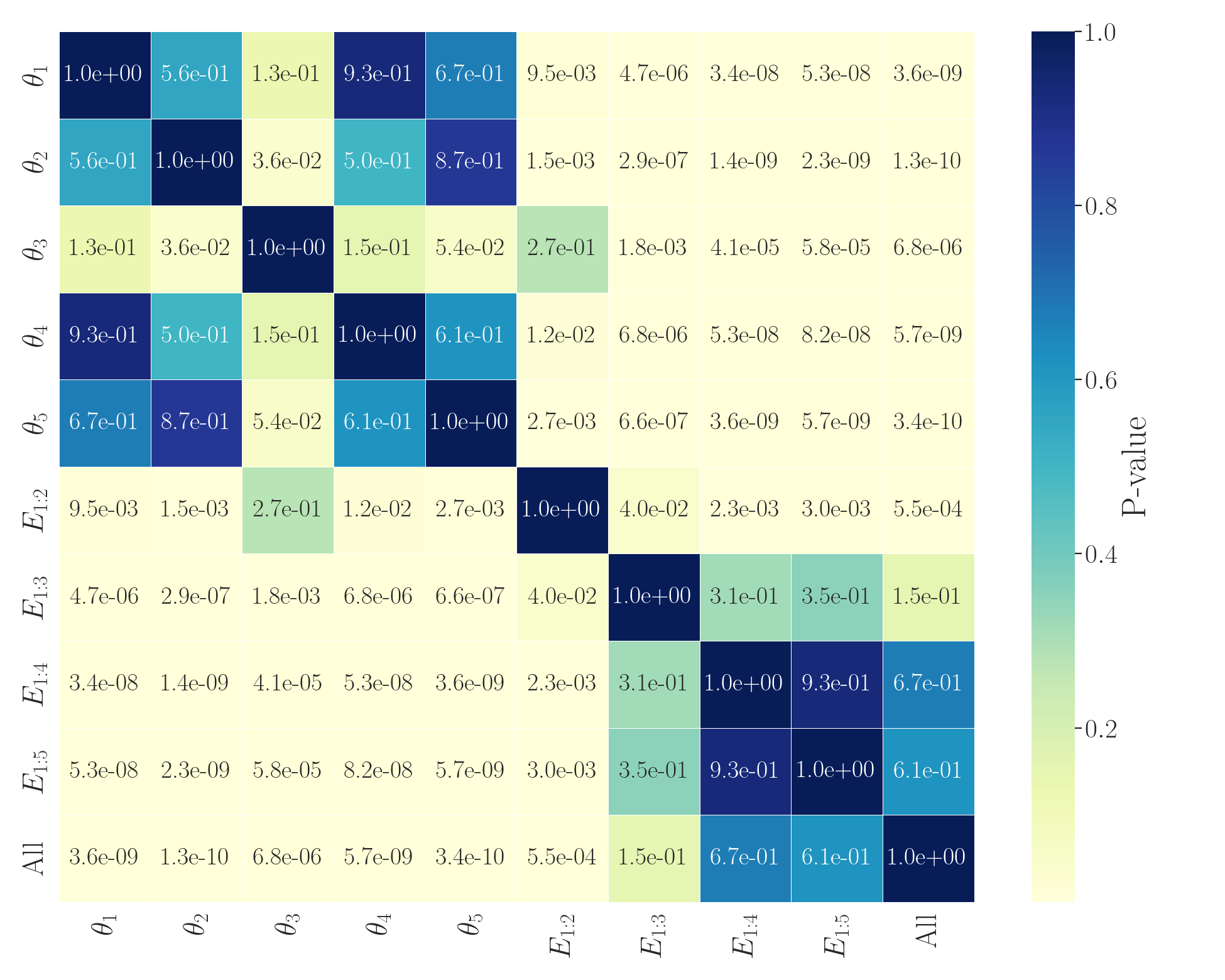}\label{fig:lrnemapsf_AUC}}%
		\hfill
		\subfloat[ARWPM]{\includegraphics[width=0.24\textwidth]{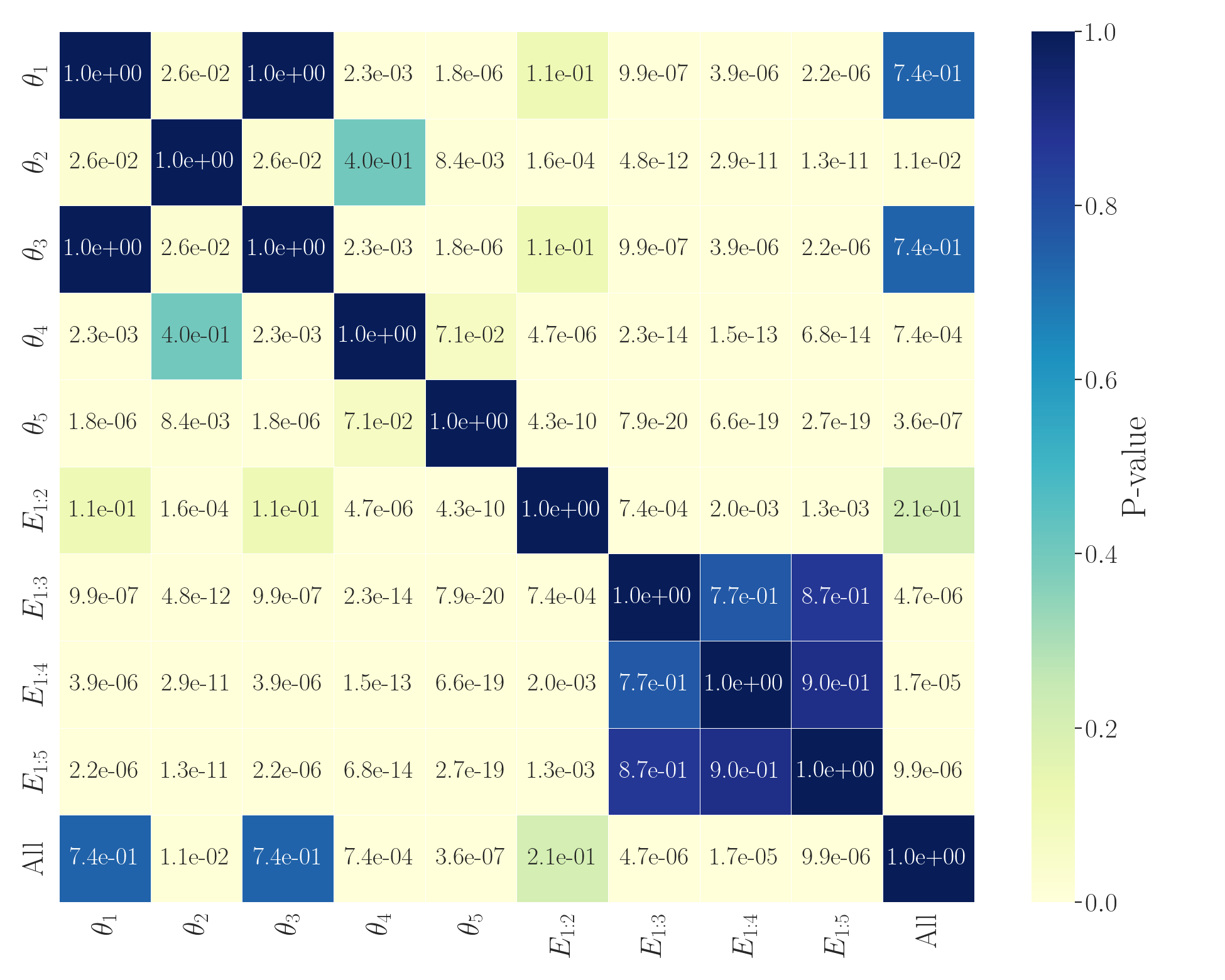}\label{fig:lrnemarwpm_AUC}}%
		\hfill
		\subfloat[GECR]{\includegraphics[width=0.24\textwidth]{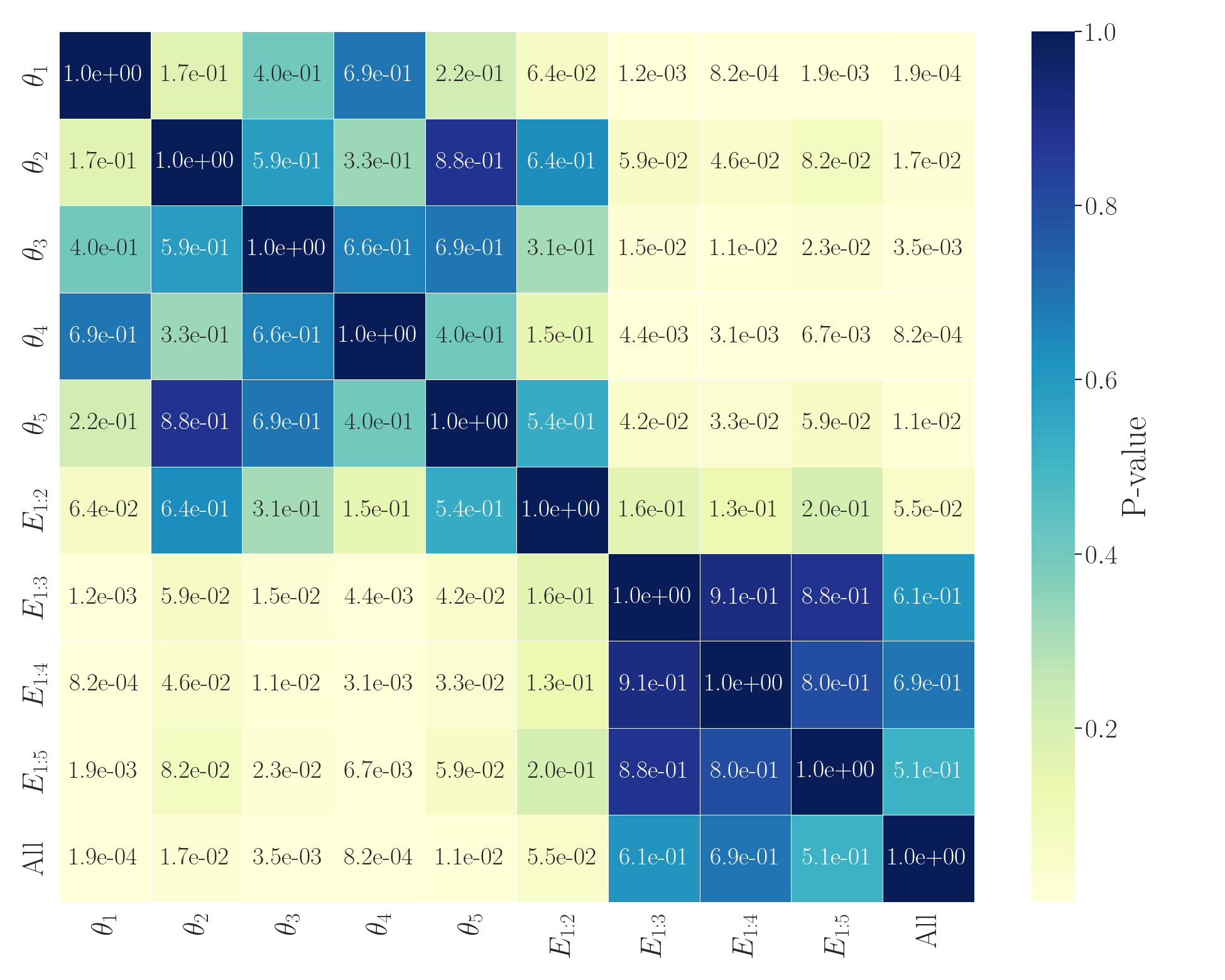}\label{fig:lrnemgecr_AUC}}%
		\hfill
		\subfloat[GFE]{\includegraphics[width=0.24\textwidth]{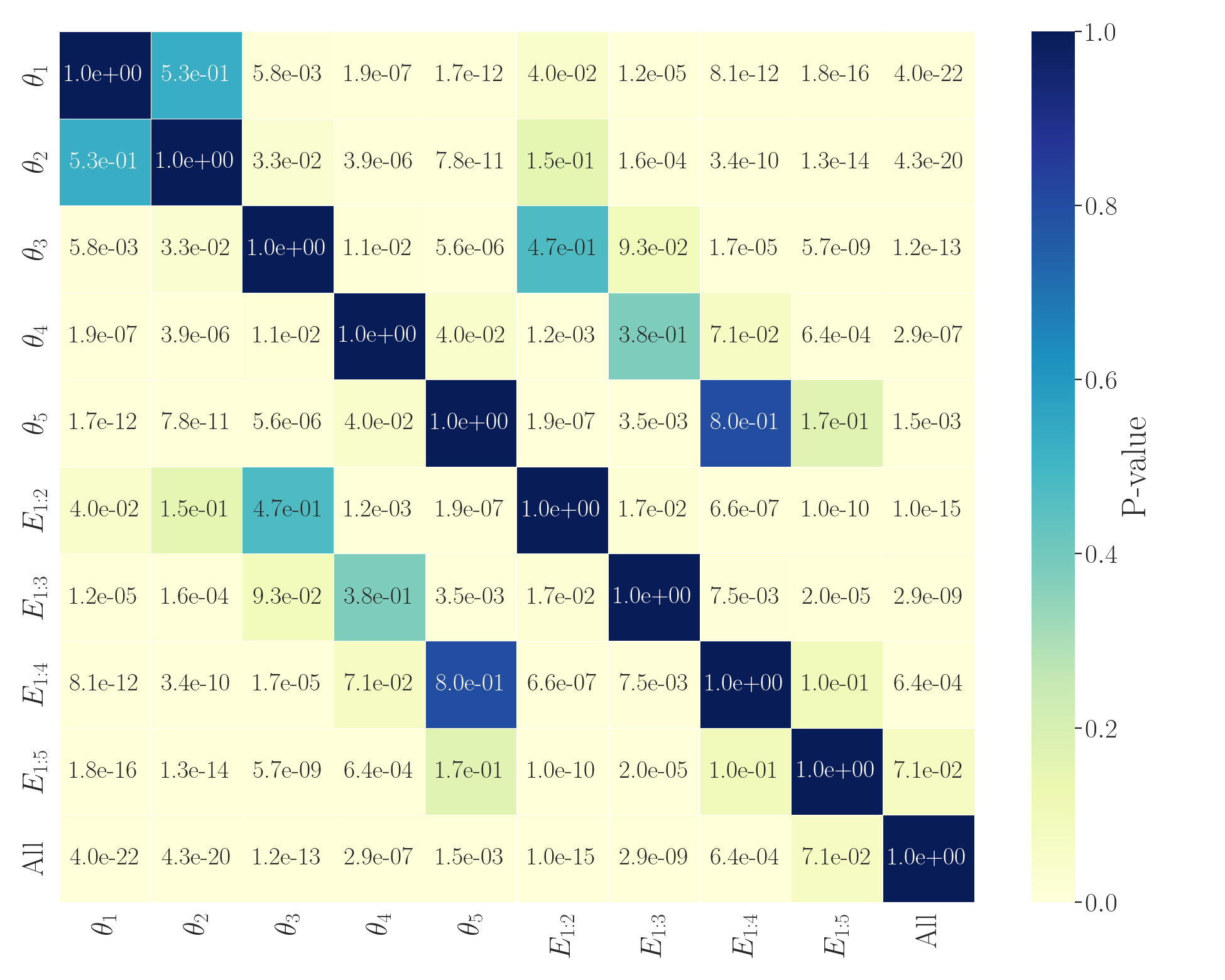}\label{fig:lrnemgfe_AUC}}
		
		% Second row
		\subfloat[GSAD]{\includegraphics[width=0.24\textwidth]{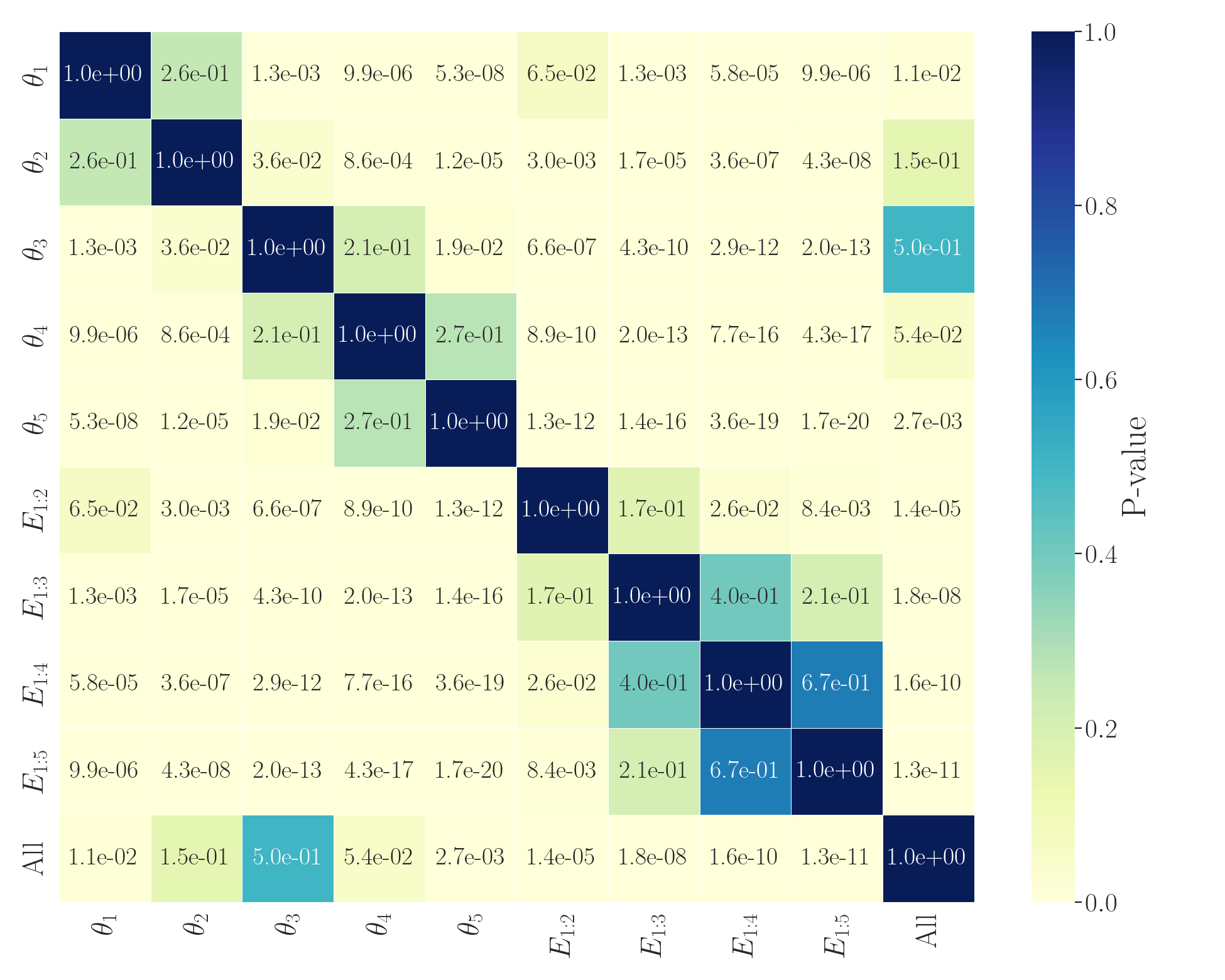}\label{fig:lrnemgsad_AUC}}%
		\hfill
		\subfloat[HAPT]{\includegraphics[width=0.24\textwidth]{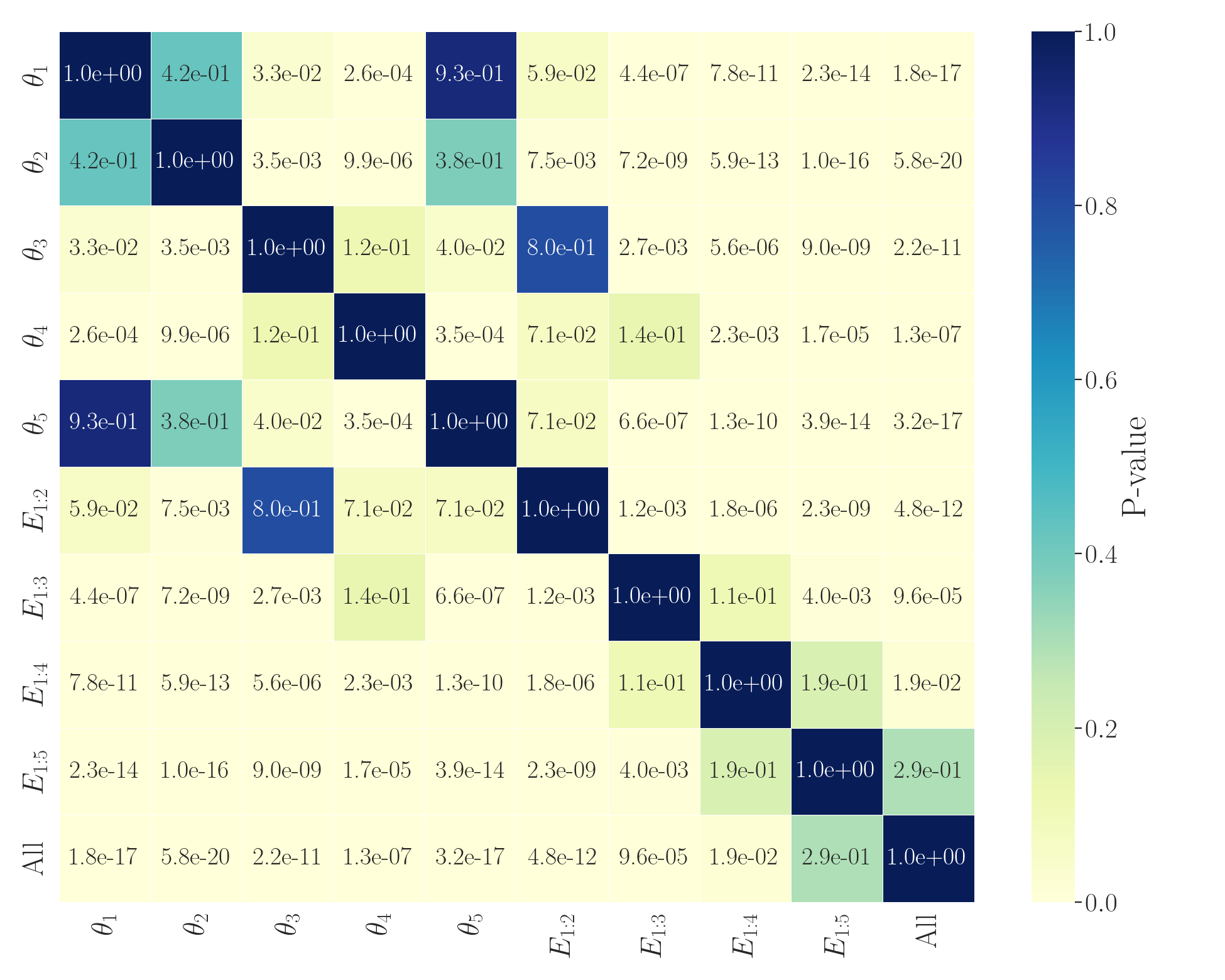}\label{fig:lrnemhapt_AUC}}%
		\hfill
		\subfloat[ISOLET]{\includegraphics[width=0.24\textwidth]{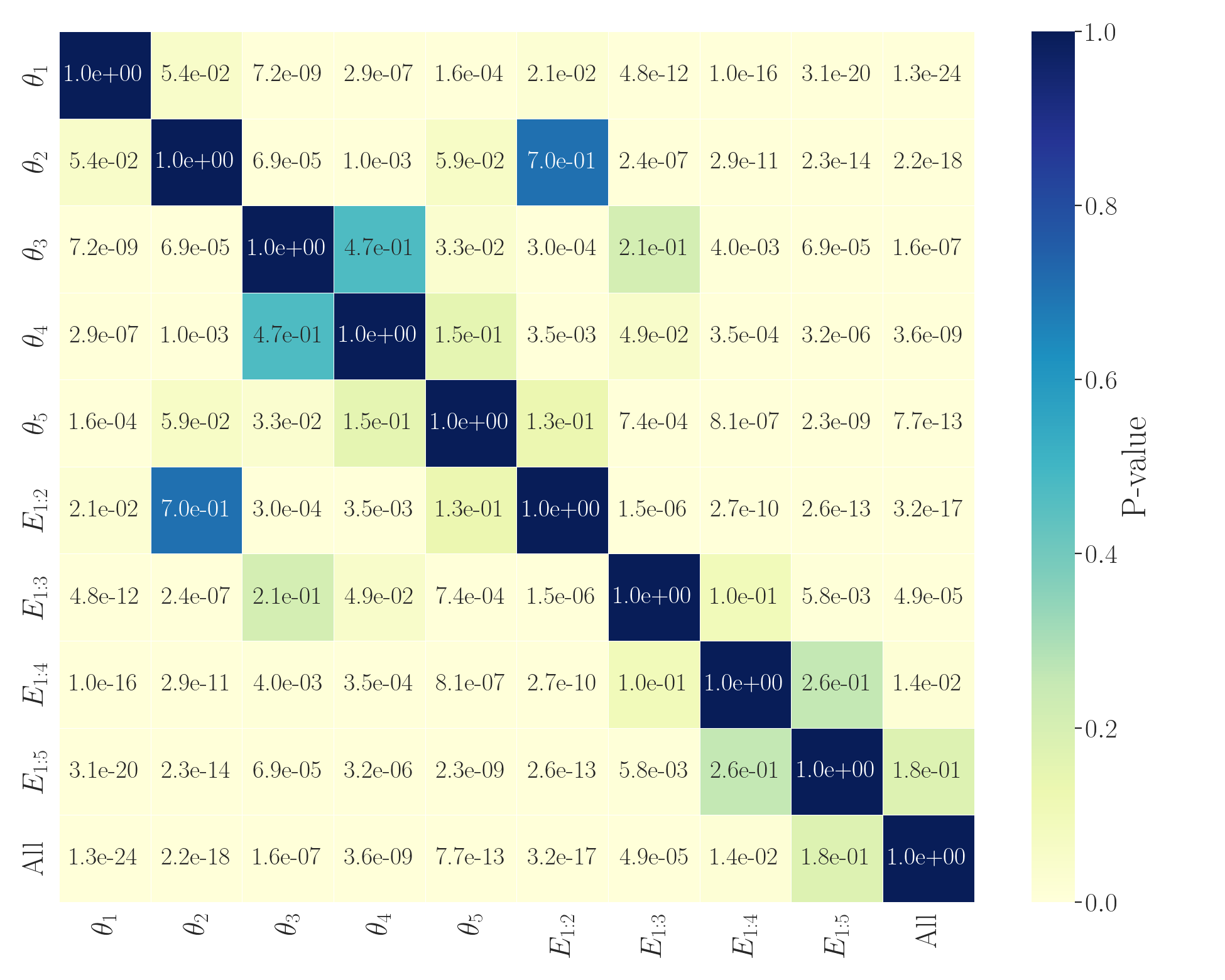}\label{fig:lrnemisolet_AUC}}%
		\hfill
		\subfloat[PD]{\includegraphics[width=0.24\textwidth]{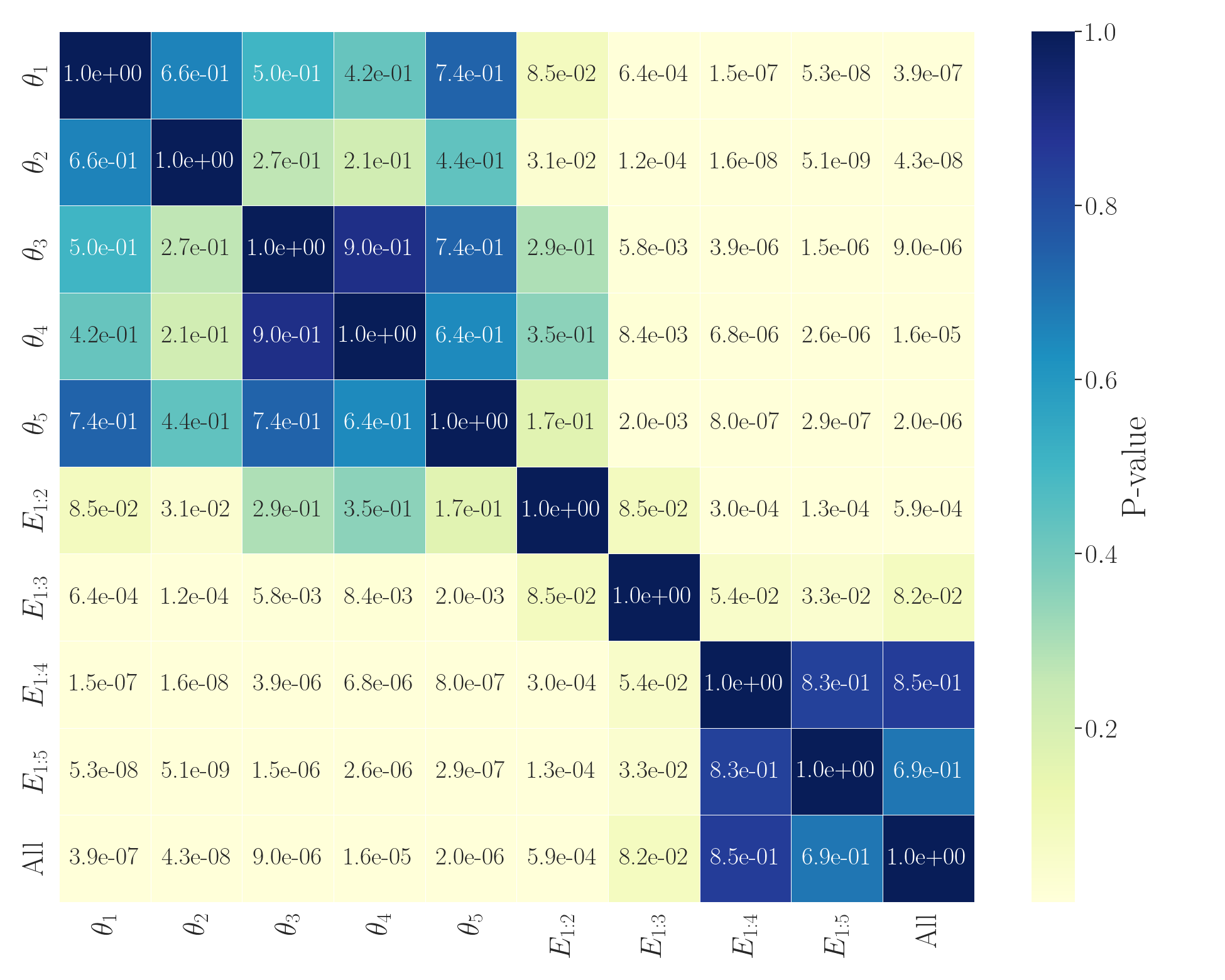}\label{fig:lrnempd_AUC}}
		%\end{comment}
		\caption[The adjusted Conover's P-values for the obtained AUC values in 30 Logistic Regression runs.]{The results of the Conover post-hoc test on testing data’s AUC obtained from 30 Logistic Regression runs.}
		
		\label{fig:lrnem_AUC}
	\end{figure*}
	\FloatBarrier
	%%%%%%%%%%%%%%%%%%%%%%%%%%%%%%%%%%%%%%%%%%%%%%%%%%%%%%%%%%%%%%%%%%%%%%%%%%%%%%
	
	%%%%%%%%%%%%%%%%%%%%%%%%%%%%%%%%%%%%%%%%%%%%%%%%%%%%%%%%%%%%%%%%%%%%%%%%%%%%%%
	\begin{figure*}[htbp] 
		\centering
		%\begin{comment}
		% First row
		\subfloat[APSF]{\includegraphics[width=0.24\textwidth]{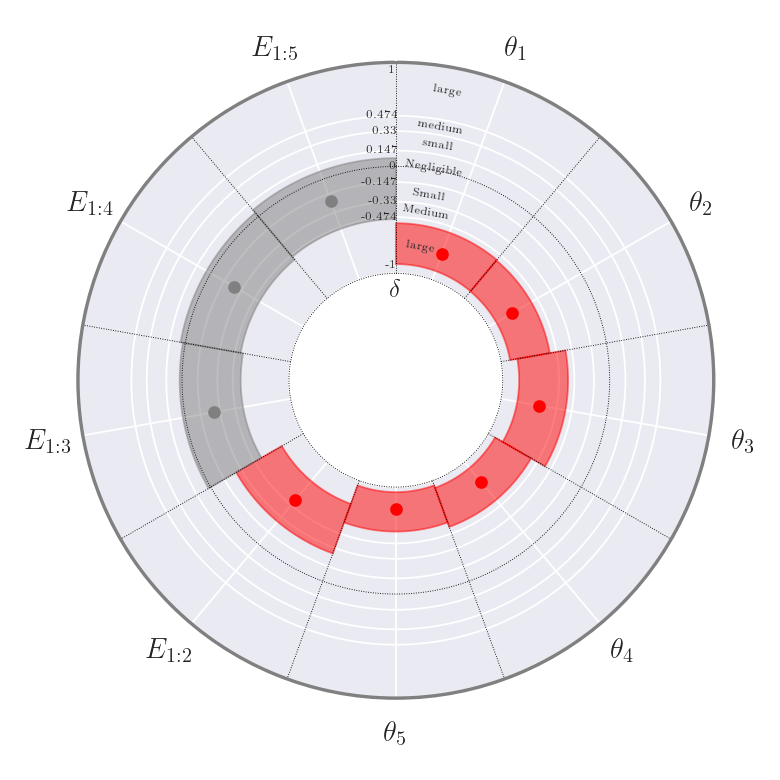}\label{fig:lrcliffapsf_AUC}}%
		\hfill
		\subfloat[ARWPM]{\includegraphics[width=0.24\textwidth]{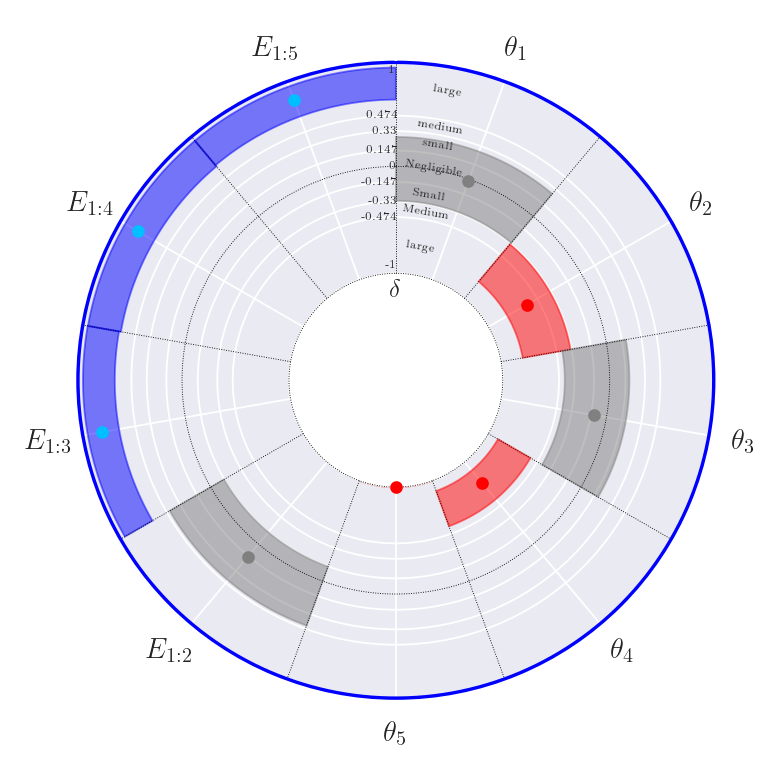}\label{fig:lrcliffarwpm_AUC}}%
		\hfill
		\subfloat[GECR]{\includegraphics[width=0.24\textwidth]{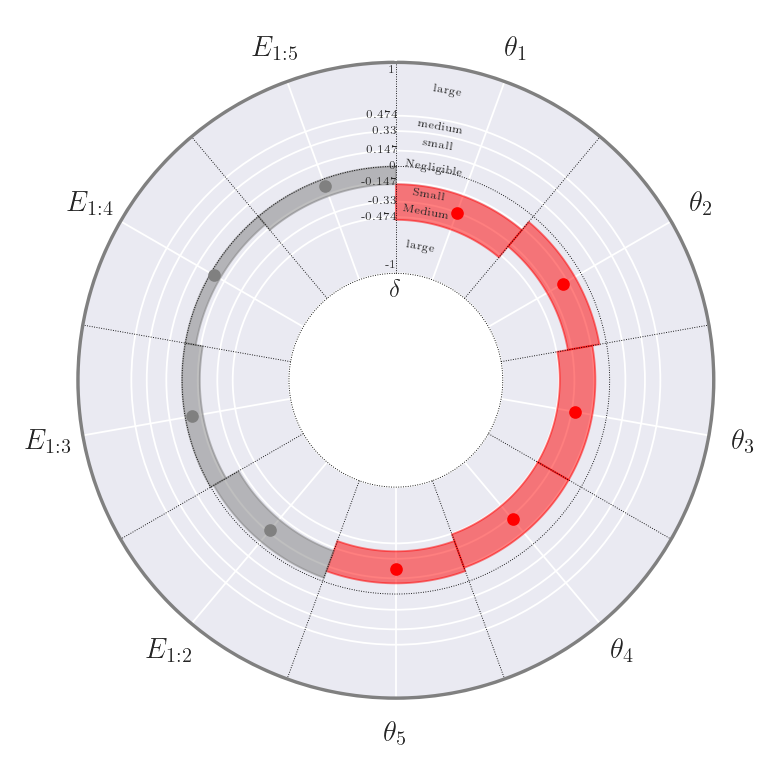}\label{fig:lrcliffgecr_AUC}}%
		\hfill
		\subfloat[GFE]{\includegraphics[width=0.24\textwidth]{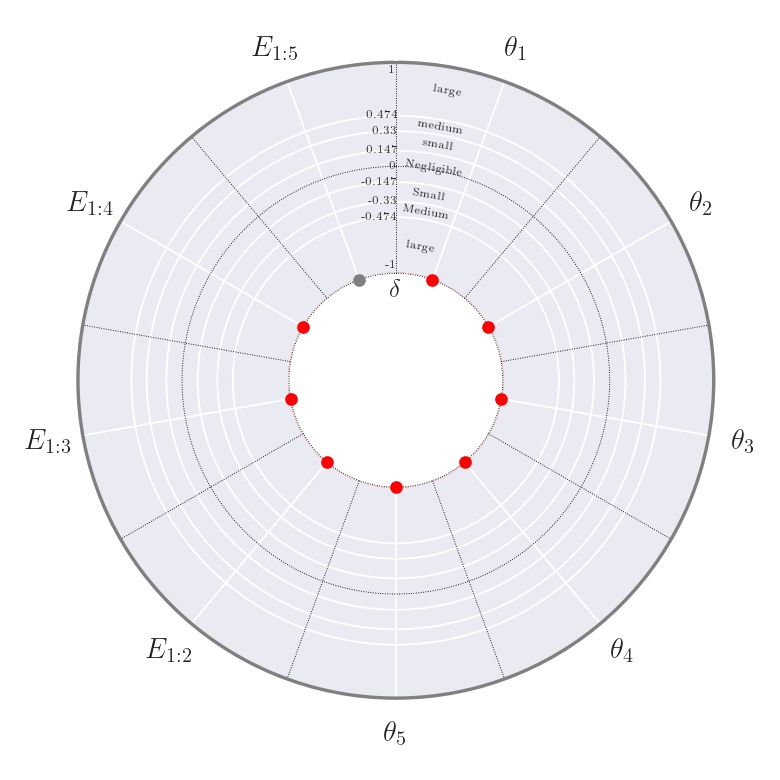}\label{fig:lrcliffgfe_AUC}}
		
		% Second row
		\subfloat[GSAD]{\includegraphics[width=0.24\textwidth]{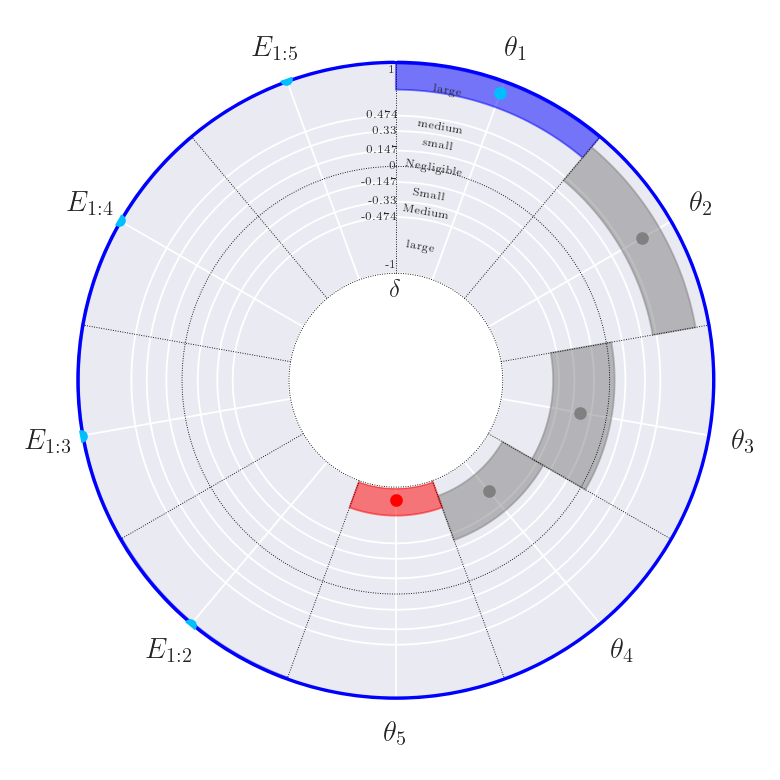}\label{fig:lrcliffgsad_AUC}}%
		\hfill
		\subfloat[HAPT]{\includegraphics[width=0.24\textwidth]{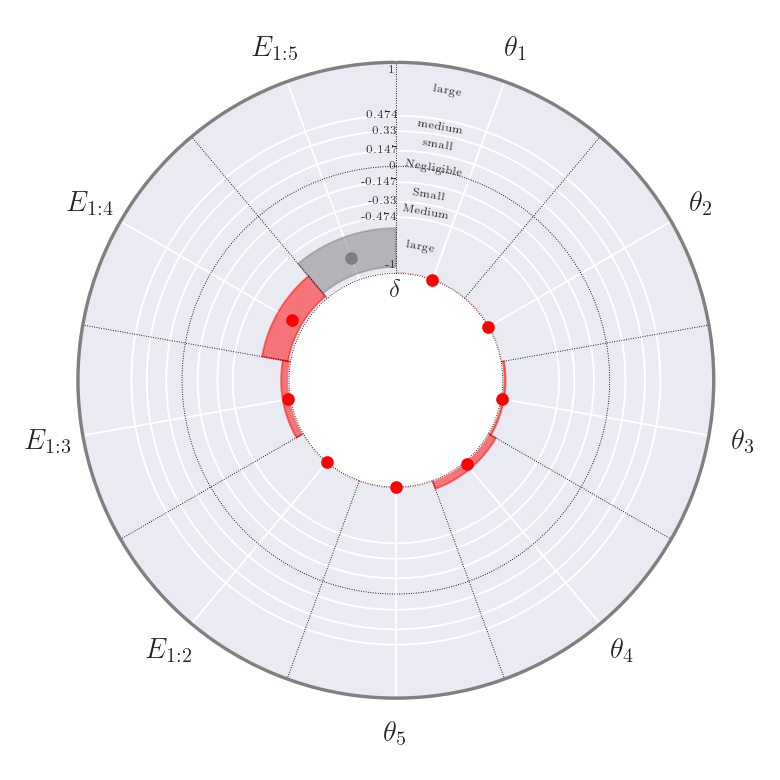}\label{fig:lrcliffhapt_AUC}}%
		\hfill
		\subfloat[ISOLET]{\includegraphics[width=0.24\textwidth]{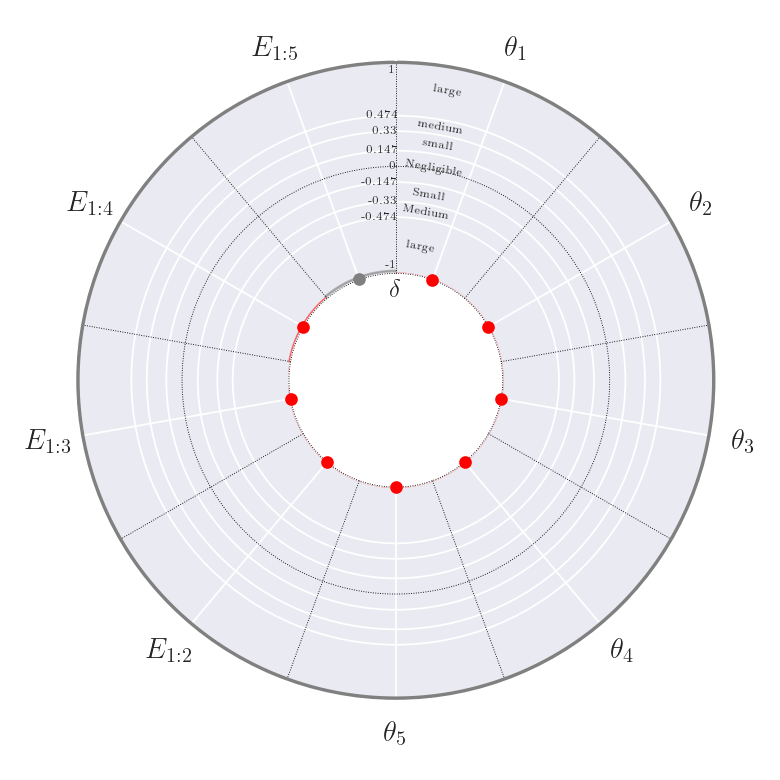}\label{fig:lrcliffisolet_AUC}}%
		\hfill
		\subfloat[PD]{\includegraphics[width=0.24\textwidth]{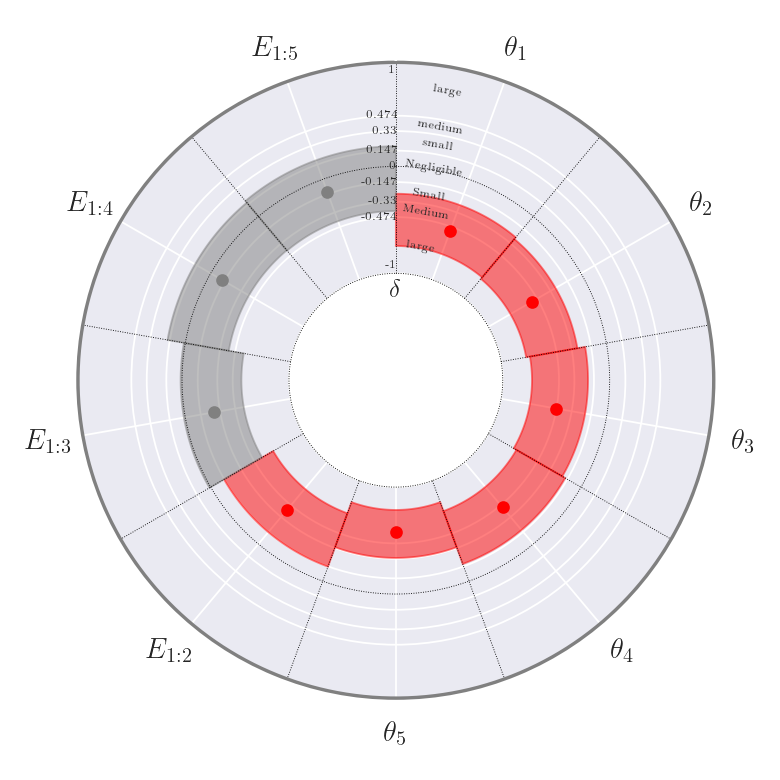}\label{fig:lrcliffpd_AUC}}
		%\end{comment}
		\caption[The Cliff's $\delta$ effect size measure and its 95\% confidence intervals for the AUC values obtained from 30 Logistic Regression runs.]{Effect size analysis of test data AUC across 30 Logistic Regression runs using Cliff's $\delta$. Each point represents the actual value obtained, with segments denoting 95\% confidence intervals based on 10,000 bootstrap resamplings. The outer ring color visualizes the statistical significance: grey illustrates no significant difference (adjusted Friedman's P-value$>0.05$), while color indicates significant differences; blue indicates at least one view and/or ensemble outperforms the benchmark (adjusted Conover's p-value$ < 0.05$, Cliff's $\delta > 0$), and red signifies all views and ensembles underperform relative to the benchmark (adjusted Conover's p-value$ < 0.05$, Cliff's $\delta < 0$). Segment colors show performance difference against the benchmark: grey for no significant difference (adjusted Conover's p-value$  > 0.05$), blue for better performance (Cliff's $\delta > 0$), and red for worse performance (Cliff's $\delta < 0$).}
		
		\label{fig:lrcliff_AUC}
	\end{figure*}
	%%%%%%%%%%%%%%%%%%%%%%%%%%%%%%%%%%%%%%%%%%%%%%%%%%%%%%%%%%%%%%%%%%%%%%%%%%%%%%
	
	%%%%%%%%%%%%%%%%%%%%%%%%%%%%%%%%%%%%%%%%%%%%%%%%%%%%%%%%%%%%%%%%%%%%%%%%%%%%%%
	\begin{table*}[htbp]
		\centering
		\caption[The results of Friedman and Conover tests and Cliff's $\delta$ analysis for the AUC values obtained from 30 Logistic Regression runs.]{Statistical comparison of AUC results for testing data obtained from Logistic Regression runs. W, T, and L denote win, tie, and loss based on adjusted Friedman and Conover's p-values. Effect sizes are calculated using Cliff's Delta method and are categorized as negligible, small, medium, or large.}
		\label{tab:lrauc}
		\resizebox{\linewidth}{!}{%
			\begin{tabular}{c|ccccccccc}
				\hline
				\multicolumn{10}{c}{Logistic Regression's AUC}\\
				\hline
				Dataset & $\theta_1$ & $\theta_2$ & $\theta_3$ & $\theta_4$ & $\theta_5$ & $E_{1:2}$ & $E_{1:3}$ & $E_{1:4}$ & $E_{1:5}$ \\
				\hline
				APSF  & L (large) & L (large) & L (large) & L (large) & L (large) & L (large) & T (small) & T (small) & T (small) \\
				ARWPM  & T (negligible) & L (large) & T (negligible) & L (large) & L (large) & T (small) & W (large) & W (large) & W (large) \\
				GECR  & L (medium) & L (small) & L (small) & L (small) & L (small) & T (small) & T (negligible) & T (negligible) & T (negligible) \\
				GFE  & L (large) & L (large) & L (large) & L (large) & L (large) & L (large) & L (large) & L (large) & T (large) \\
				GSAD  & W (large) & T (large) & T (small) & T (large) & L (large) & W (large) & W (large) & W (large) & W (large) \\
				HAPT  & L (large) & L (large) & L (large) & L (large) & L (large) & L (large) & L (large) & L (large) & T (large) \\
				ISOLET  & L (large) & L (large) & L (large) & L (large) & L (large) & L (large) & L (large) & L (large) & T (large) \\
				PD  & L (large) & L (large) & L (large) & L (medium) & L (large) & L (medium) & T (small) & T (negligible) & T (negligible) \\
				\hline
				W - T - L  & 1 - 1 - 6 & 0 - 1 - 7 & 0 - 2 - 6 & 0 - 1 - 7 & 0 - 0 - 8 & 1 - 2 - 5 & 2 - 3 - 3 & 2 - 3 - 3 & 2 - 6 - 0 \\
				\hline
			\end{tabular}
		}
	\end{table*}
	\FloatBarrier
	%%%%%%%%%%%%%%%%%%%%%%%%%%%%%%%%%%%%%%%%%%%%%%%%%%%%%%%%%%%%%%%%%%%%%%%%%%%%%%

	%\subsection{The Log-Loss results for Logistic Regression}
	%\label{ssub:lrloss}
	% Logit: Log-Loss
	%%%%%%%%%%%%%%%%%%%%%%%%%%%%%%%%%%%%%%%%%%%%%%%%%%%%%%%%%%%%%%%%%%%%%%%%%%%%%%
	\begin{figure*}[t] 
		\centering
		%\begin{comment}
		% First row
		\subfloat[APSF]{\includegraphics[width=0.24\textwidth]{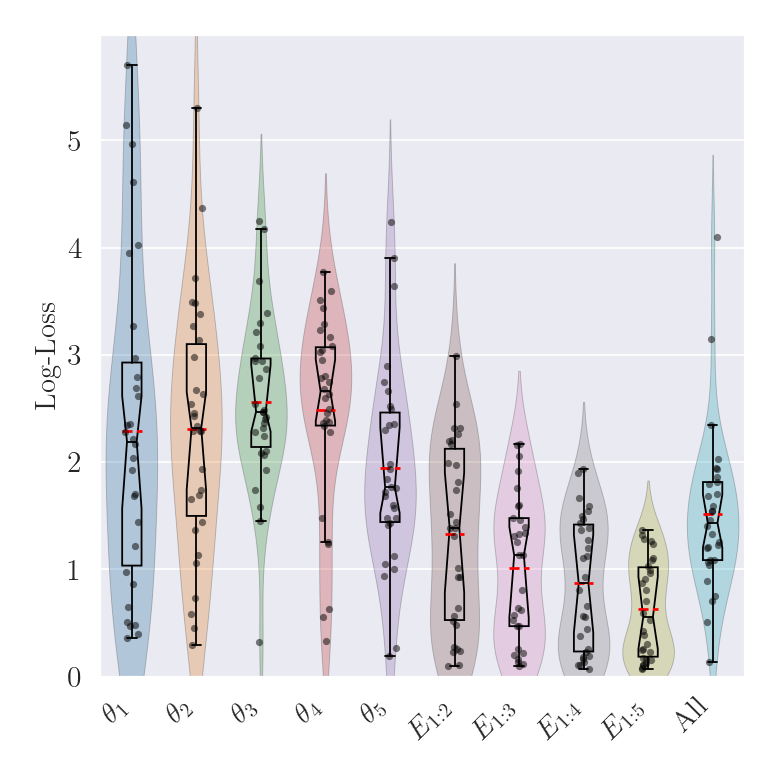}\label{fig:lrapsf_Loss}}%
		\hfill
		\subfloat[ARWPM]{\includegraphics[width=0.24\textwidth]{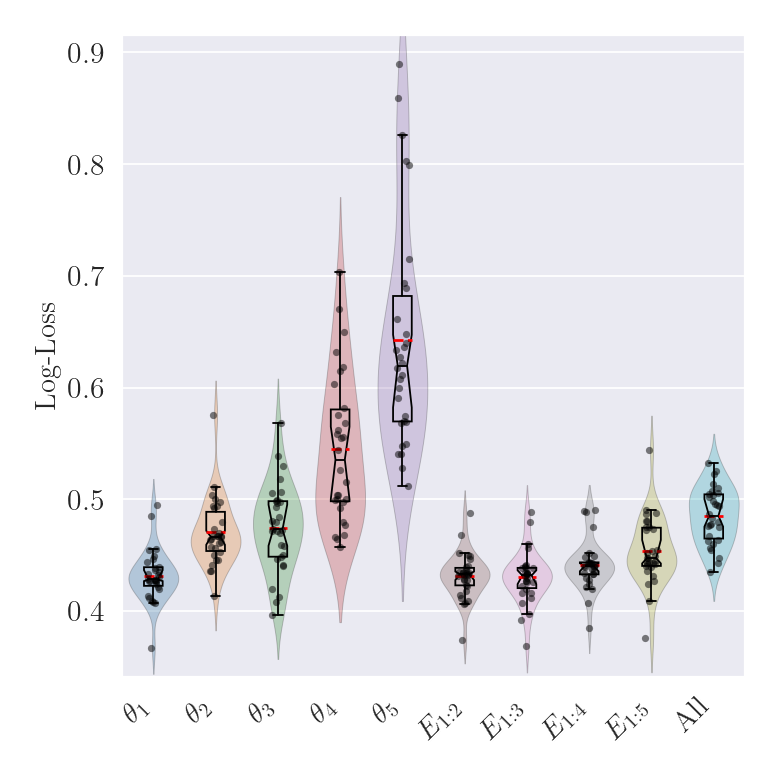}\label{fig:lrarwpm_Loss}}%
		\hfill
		\subfloat[GECR]{\includegraphics[width=0.24\textwidth]{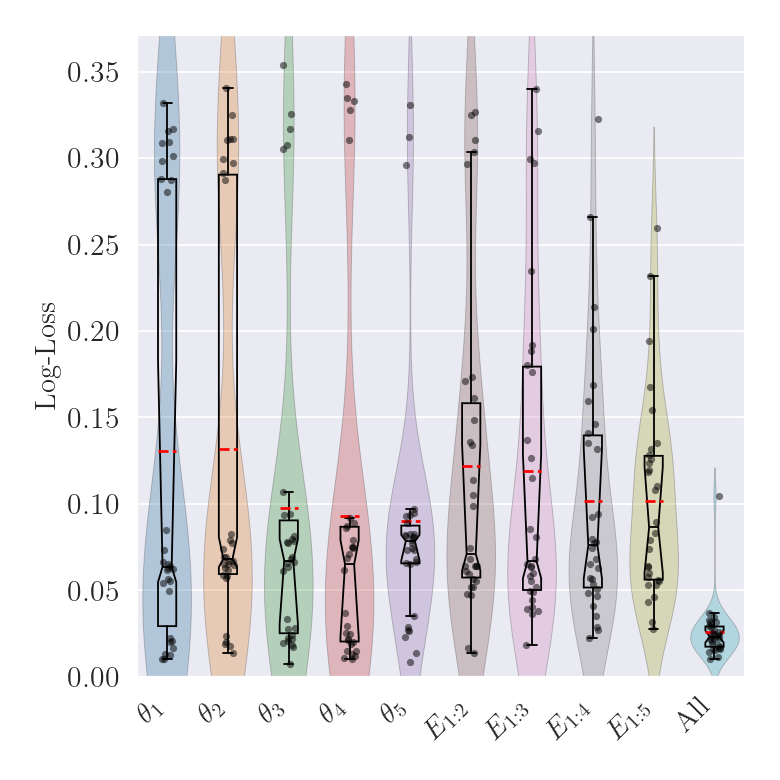}\label{fig:lrgecr_Loss}}%
		\hfill
		\subfloat[GFE]{\includegraphics[width=0.24\textwidth]{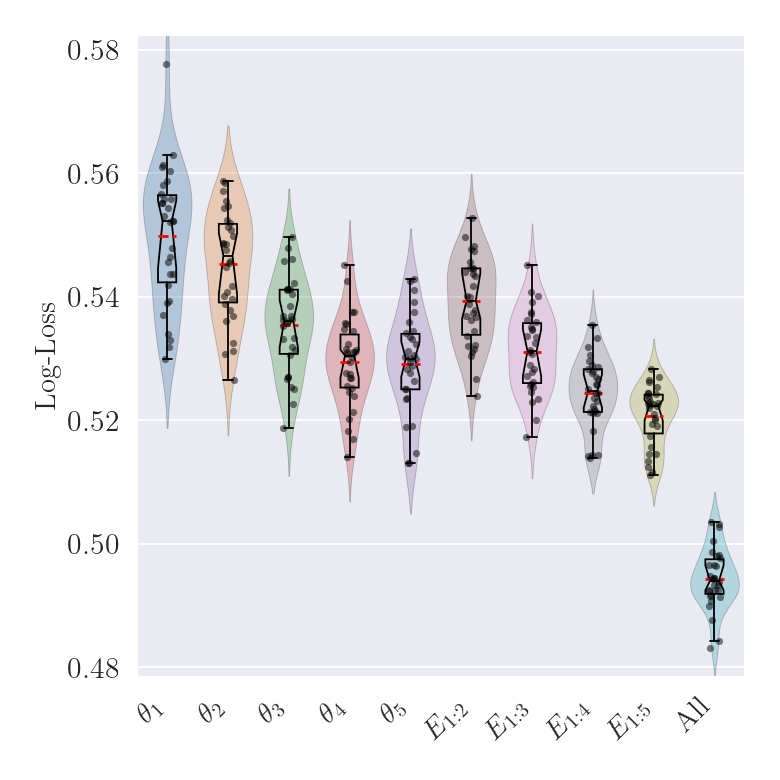}\label{fig:lrgfe_Loss}}
		
		% Second row
		\subfloat[GSAD]{\includegraphics[width=0.24\textwidth]{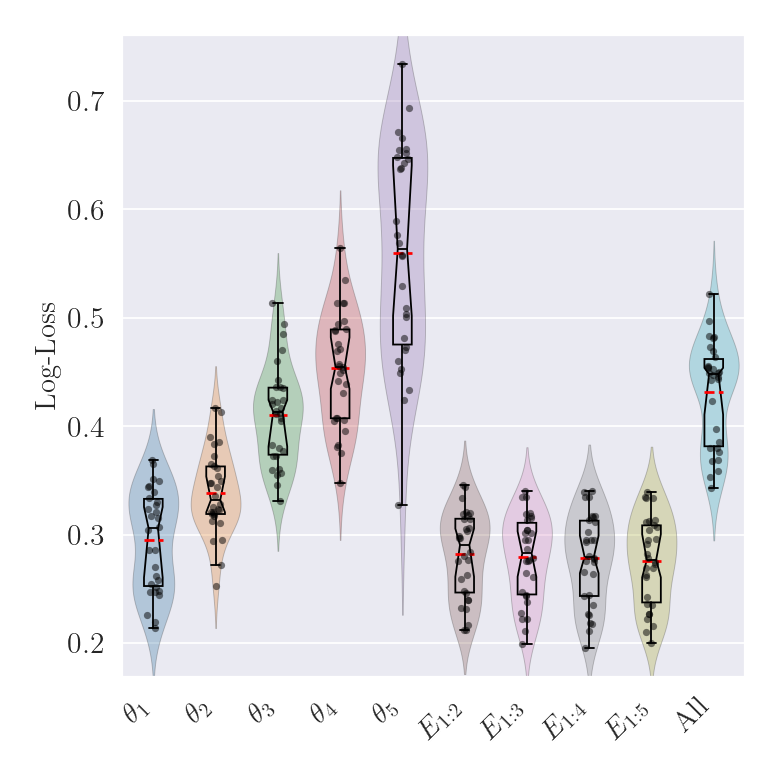}\label{fig:fpgsad_Loss}}%
		\hfill
		\subfloat[HAPT]{\includegraphics[width=0.24\textwidth]{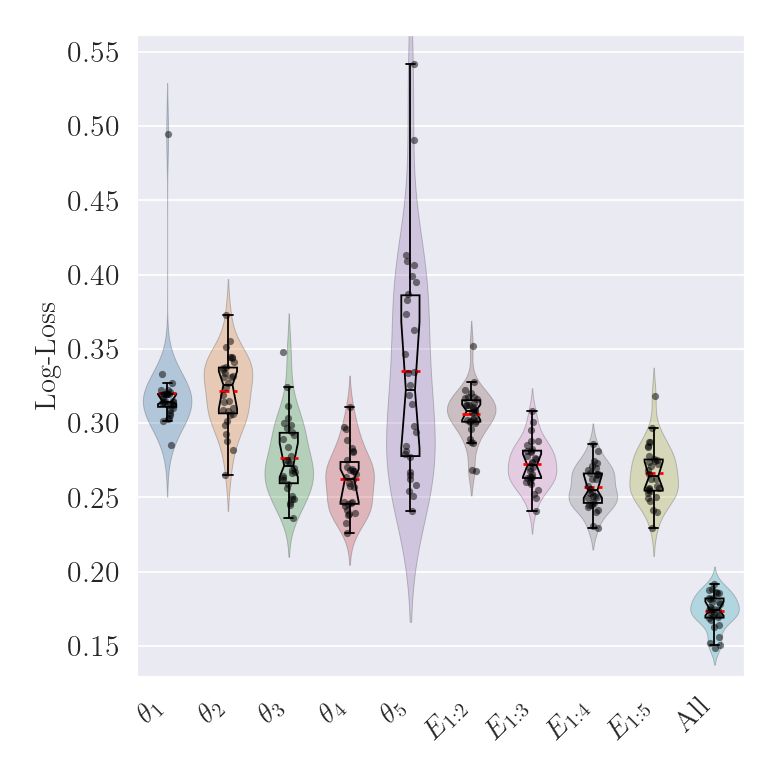}\label{fig:lrhapt_Loss}}%
		\hfill
		\subfloat[ISOLET]{\includegraphics[width=0.24\textwidth]{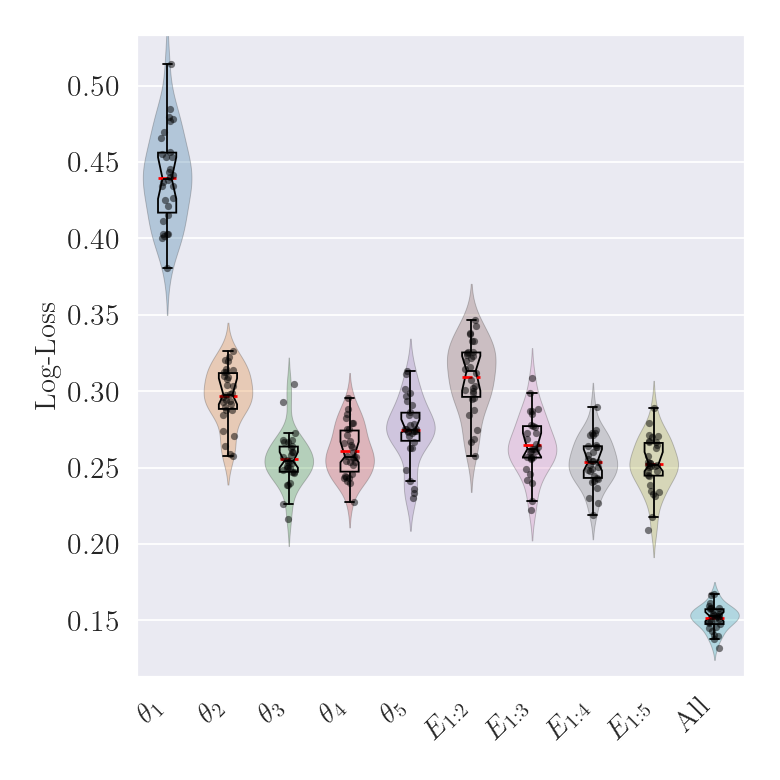}\label{fig:lrisolet_Loss}}%
		\hfill
		\subfloat[PD]{\includegraphics[width=0.24\textwidth]{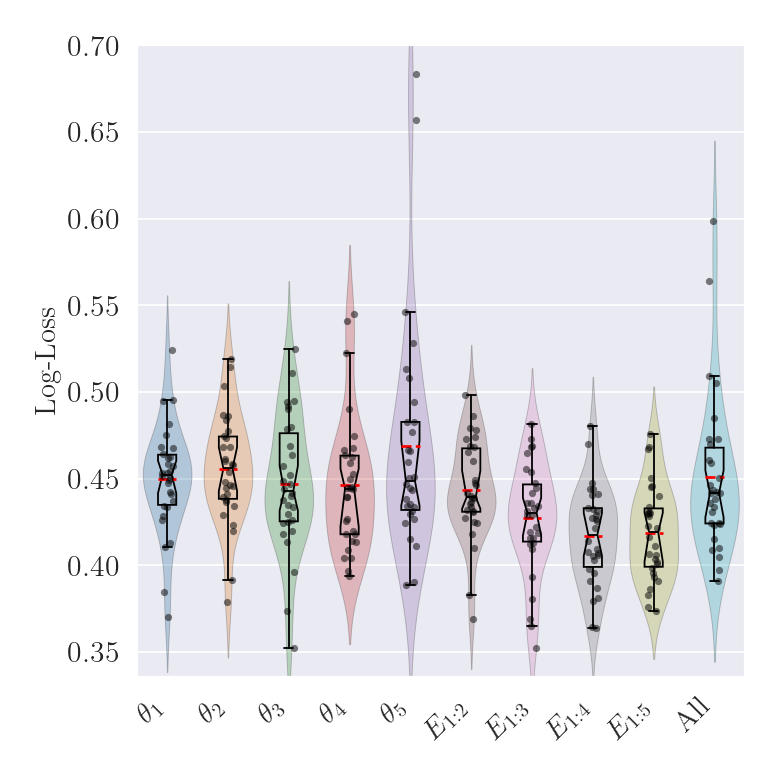}\label{fig:lrpd_Loss}}
		%\end{comment}
		\caption[The distribution of the obtained Log-Loss values for 30 Logistic Regression runs.]{The raining cloud plot of Log-Loss results obtained from 30 Logistic Regression runs.}
		
		\label{fig:lr_Loss}
	\end{figure*}
	%%%%%%%%%%%%%%%%%%%%%%%%%%%%%%%%%%%%%%%%%%%%%%%%%%%%%%%%%%%%%%%%%%%%%%%%%%%%%%
	
	%%%%%%%%%%%%%%%%%%%%%%%%%%%%%%%%%%%%%%%%%%%%%%%%%%%%%%%%%%%%%%%%%%%%%%%%%%%%%%
	\begin{figure*}[t] 
		\centering
		%\begin{comment}
		% First row
		\subfloat[APSF]{\includegraphics[width=0.24\textwidth]{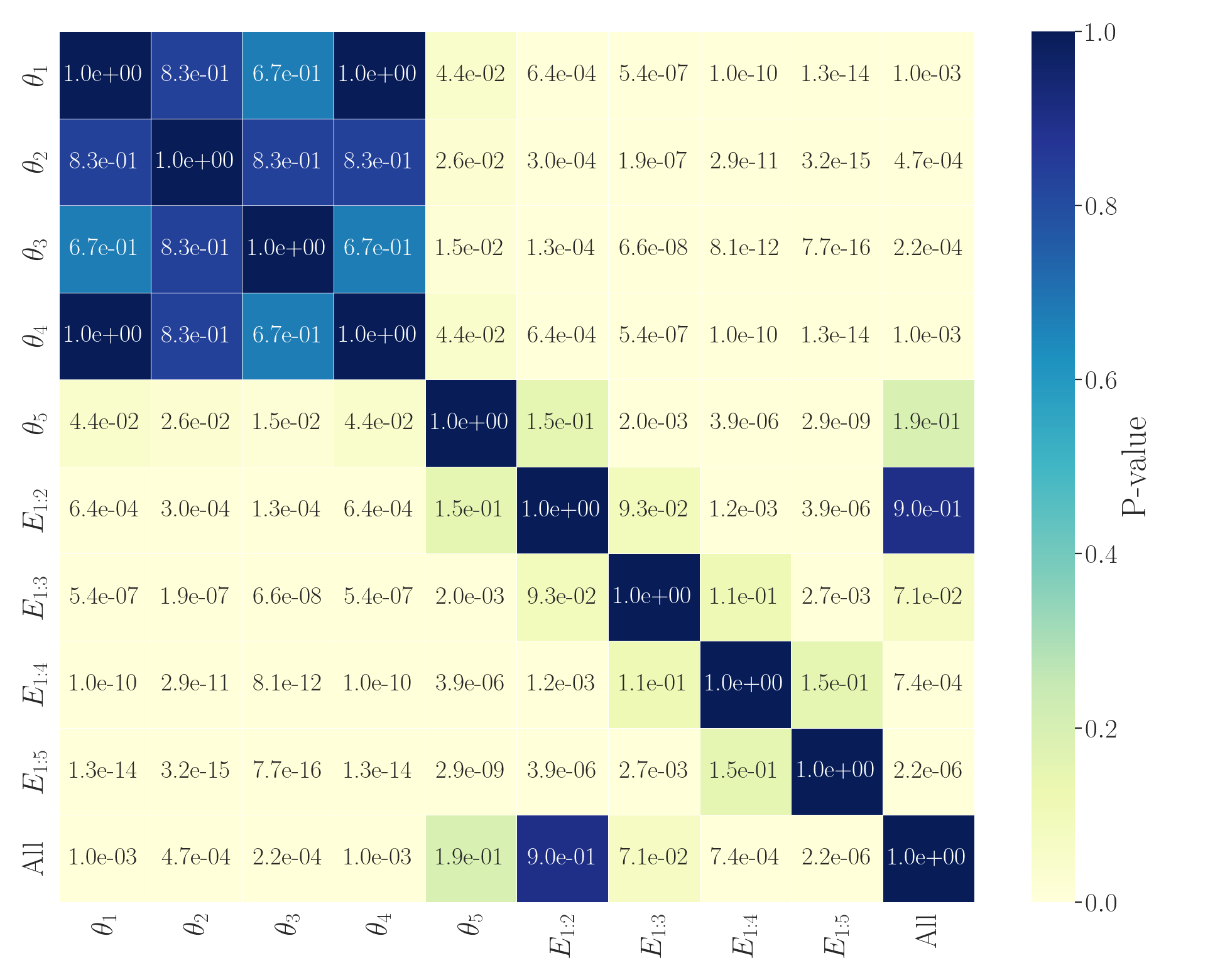}\label{fig:lrnemapsf_Loss}}%
		\hfill
		\subfloat[ARWPM]{\includegraphics[width=0.24\textwidth]{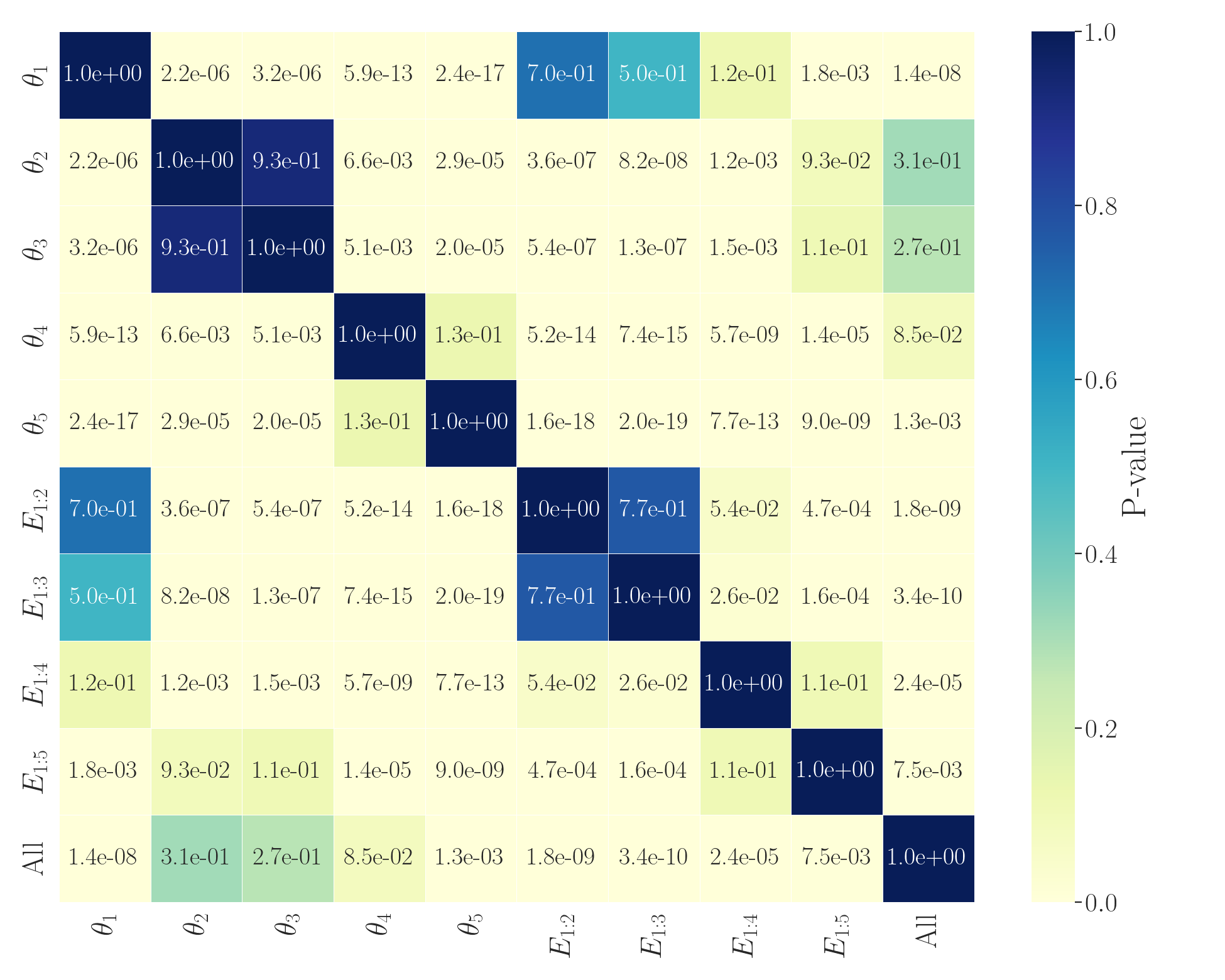}\label{fig:lrnemarwpm_Loss}}%
		\hfill
		\subfloat[GECR]{\includegraphics[width=0.24\textwidth]{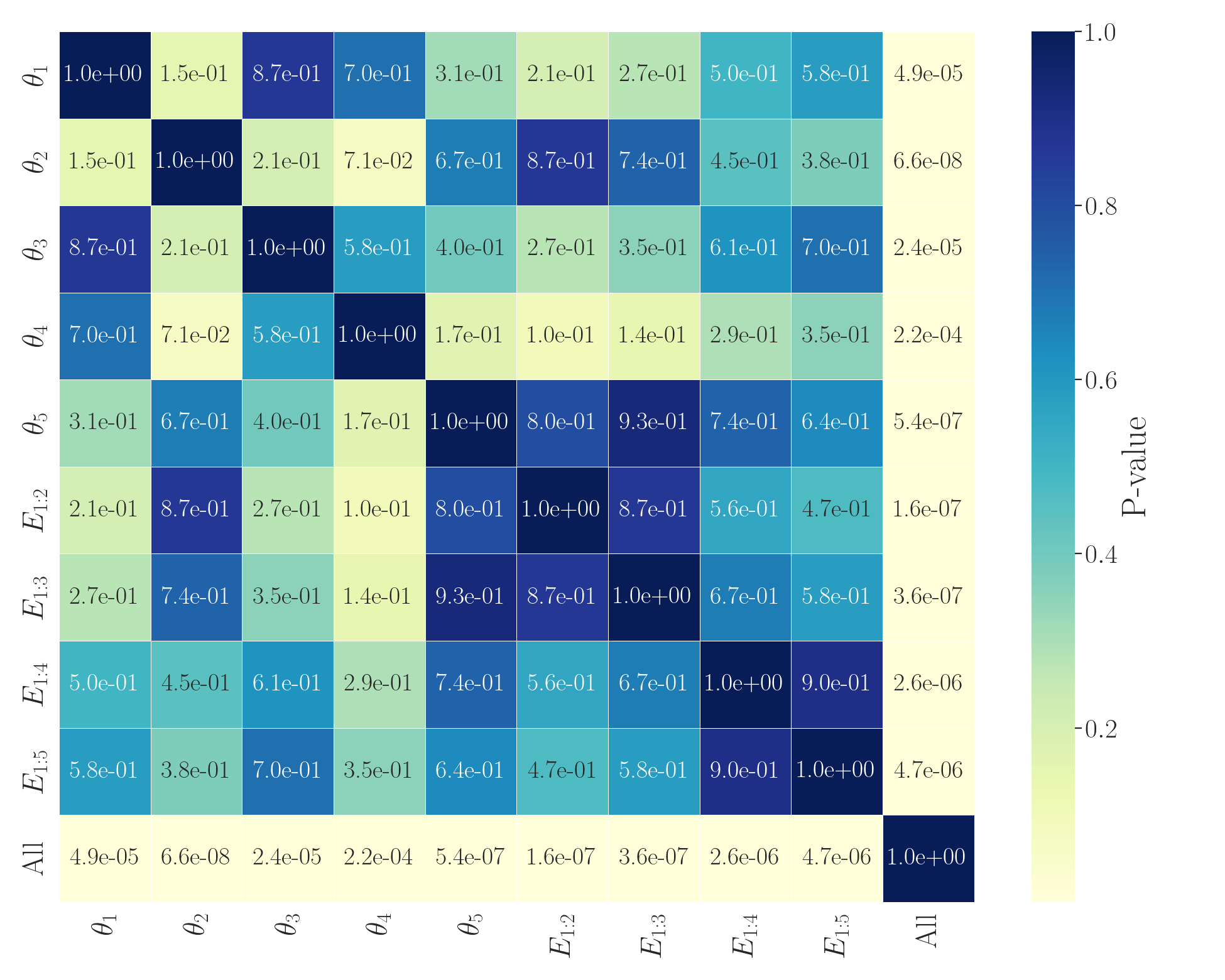}\label{fig:lrnemgecr_Loss}}%
		\hfill
		\subfloat[GFE]{\includegraphics[width=0.24\textwidth]{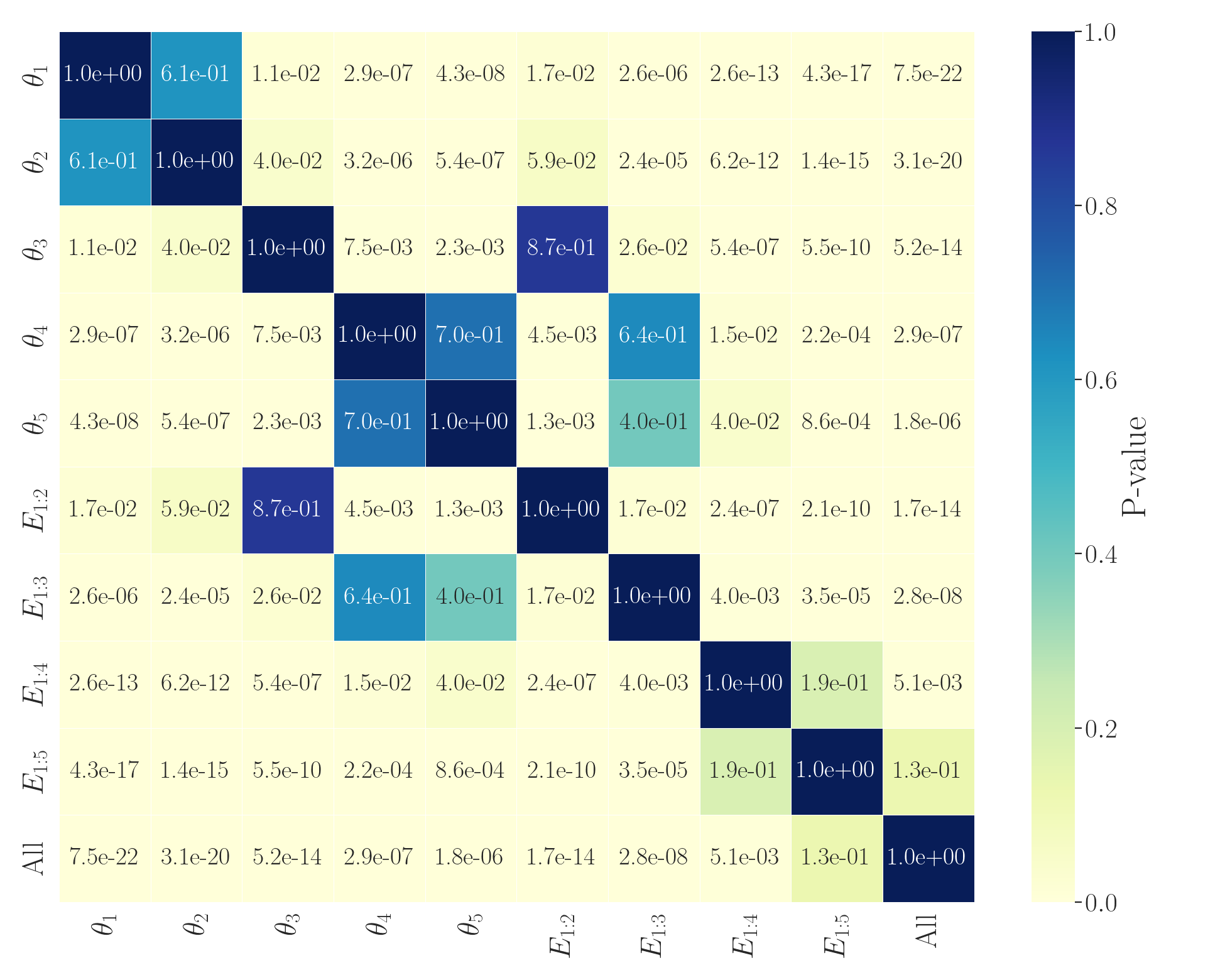}\label{fig:lrnemgfe_Loss}}
		
		% Second row
		\subfloat[GSAD]{\includegraphics[width=0.24\textwidth]{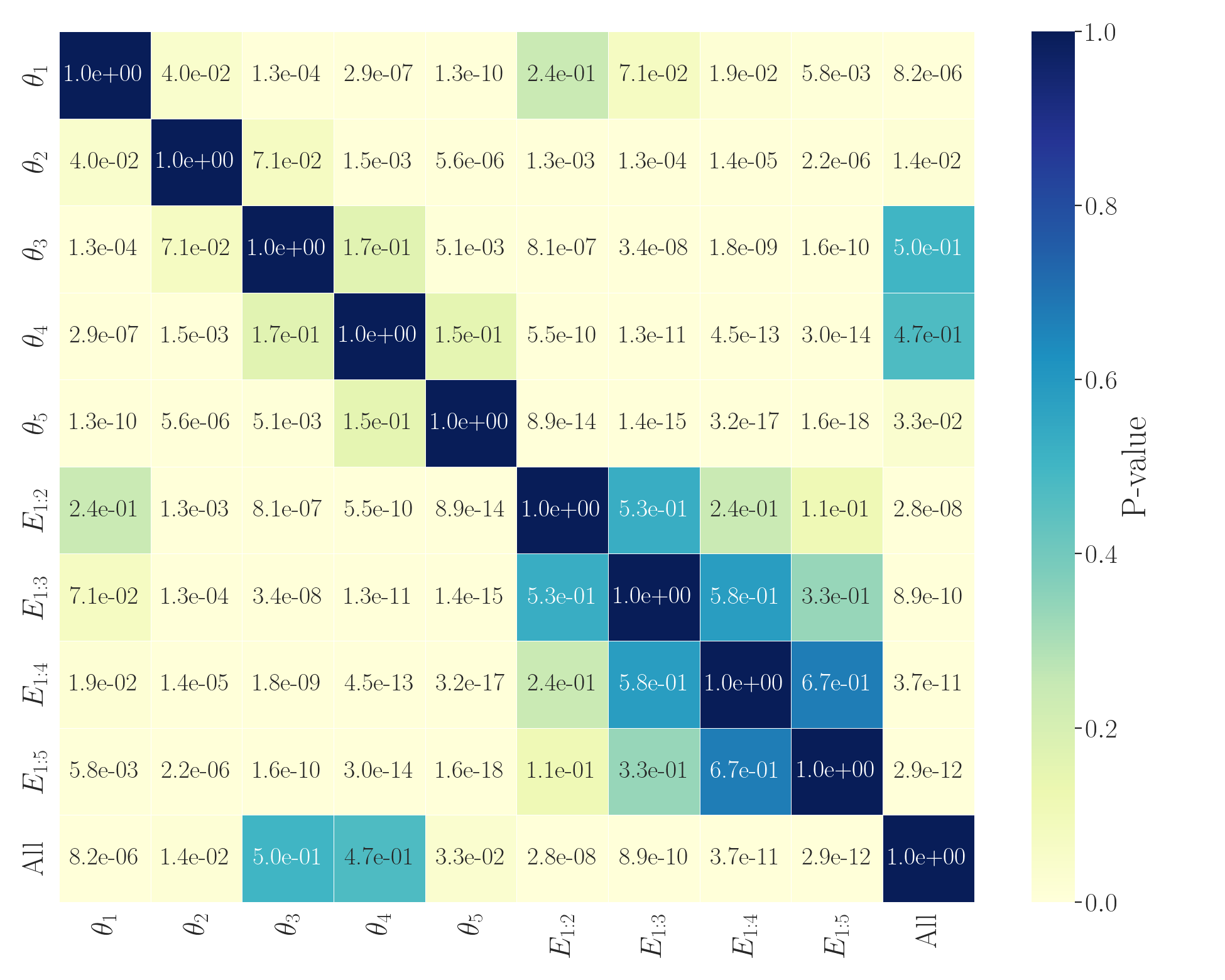}\label{fig:lrnemgsad_Loss}}%
		\hfill
		\subfloat[HAPT]{\includegraphics[width=0.24\textwidth]{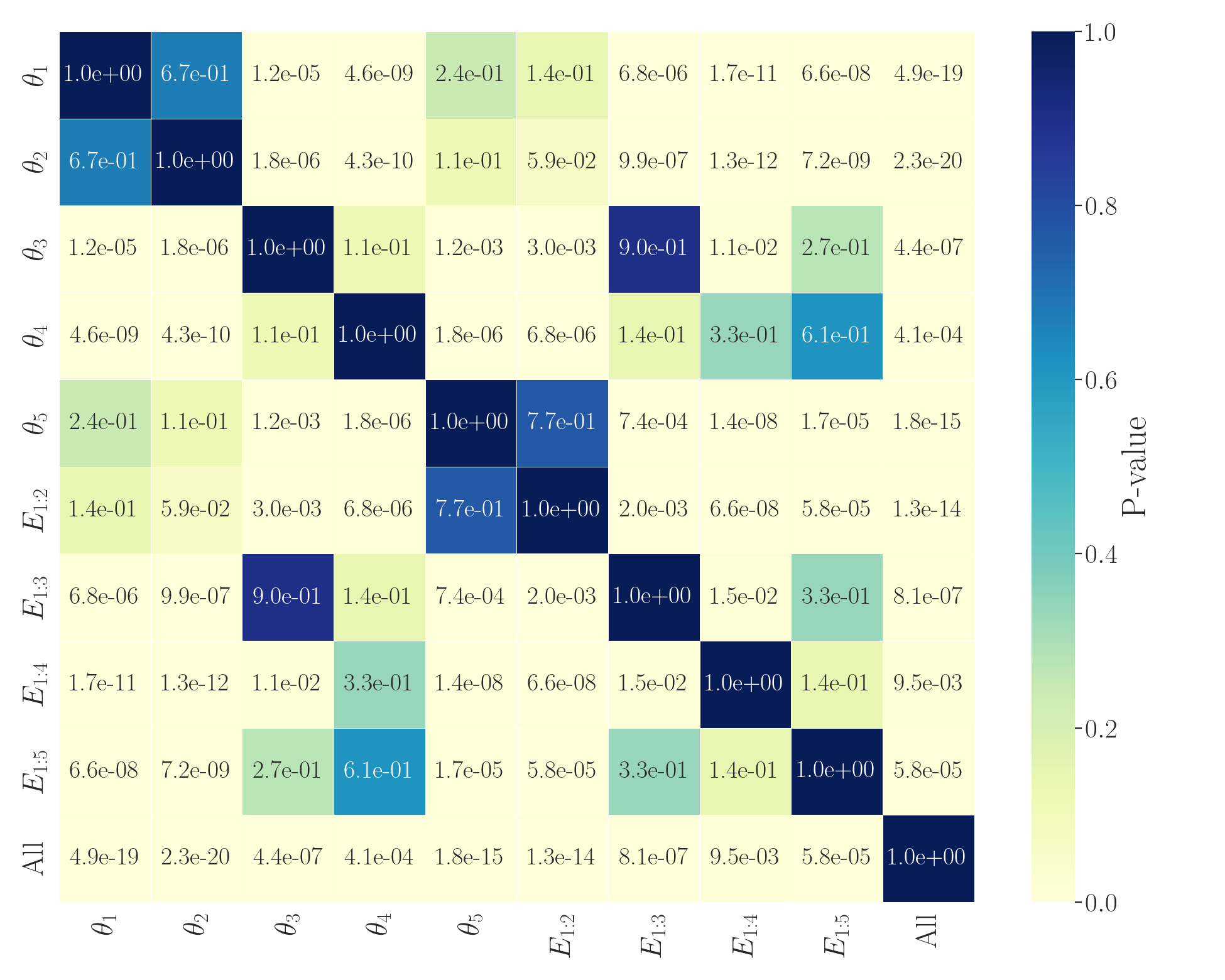}\label{fig:lrnemhapt_Loss}}%
		\hfill
		\subfloat[ISOLET]{\includegraphics[width=0.24\textwidth]{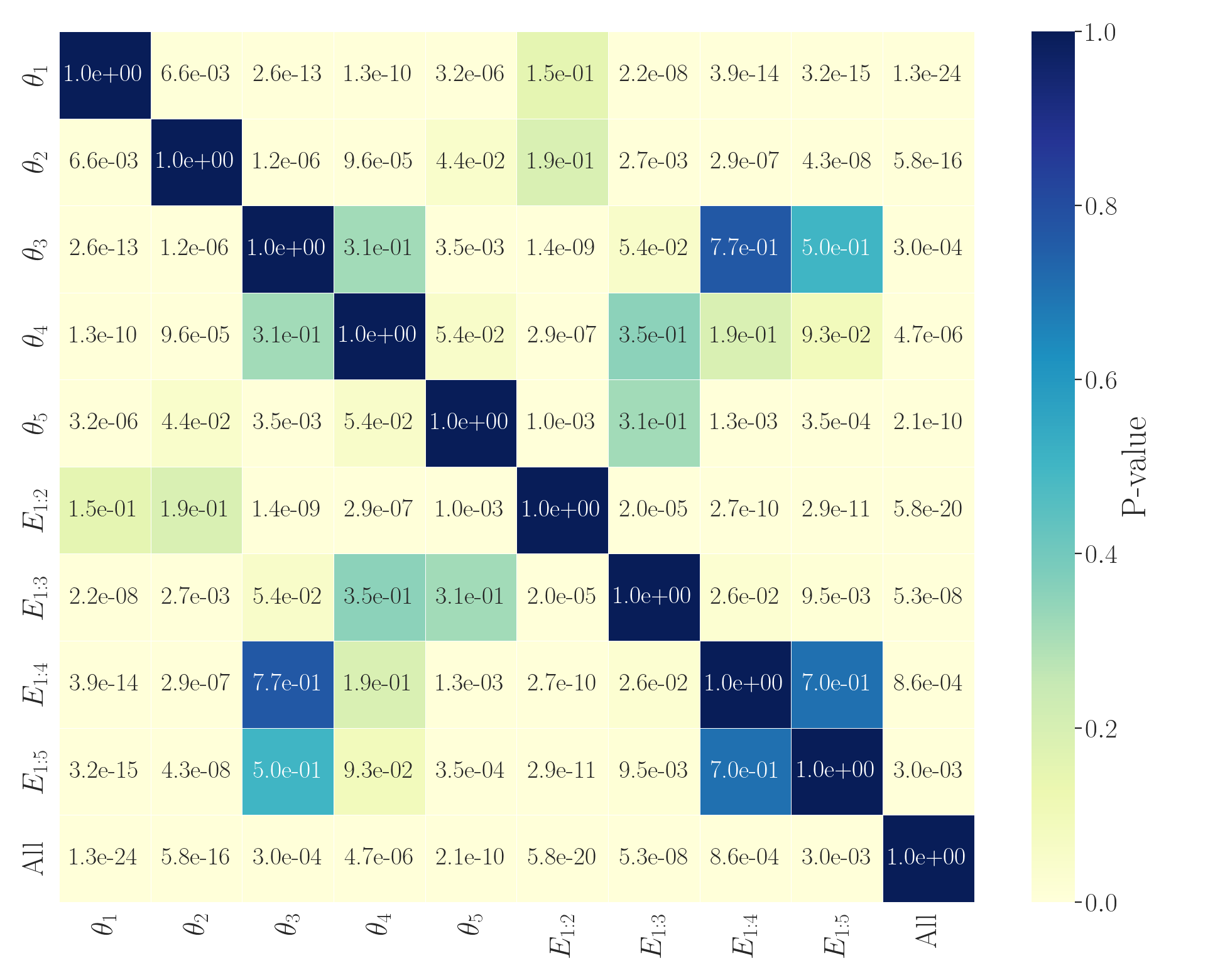}\label{fig:lrnemisolet_Loss}}%
		\hfill
		\subfloat[PD]{\includegraphics[width=0.24\textwidth]{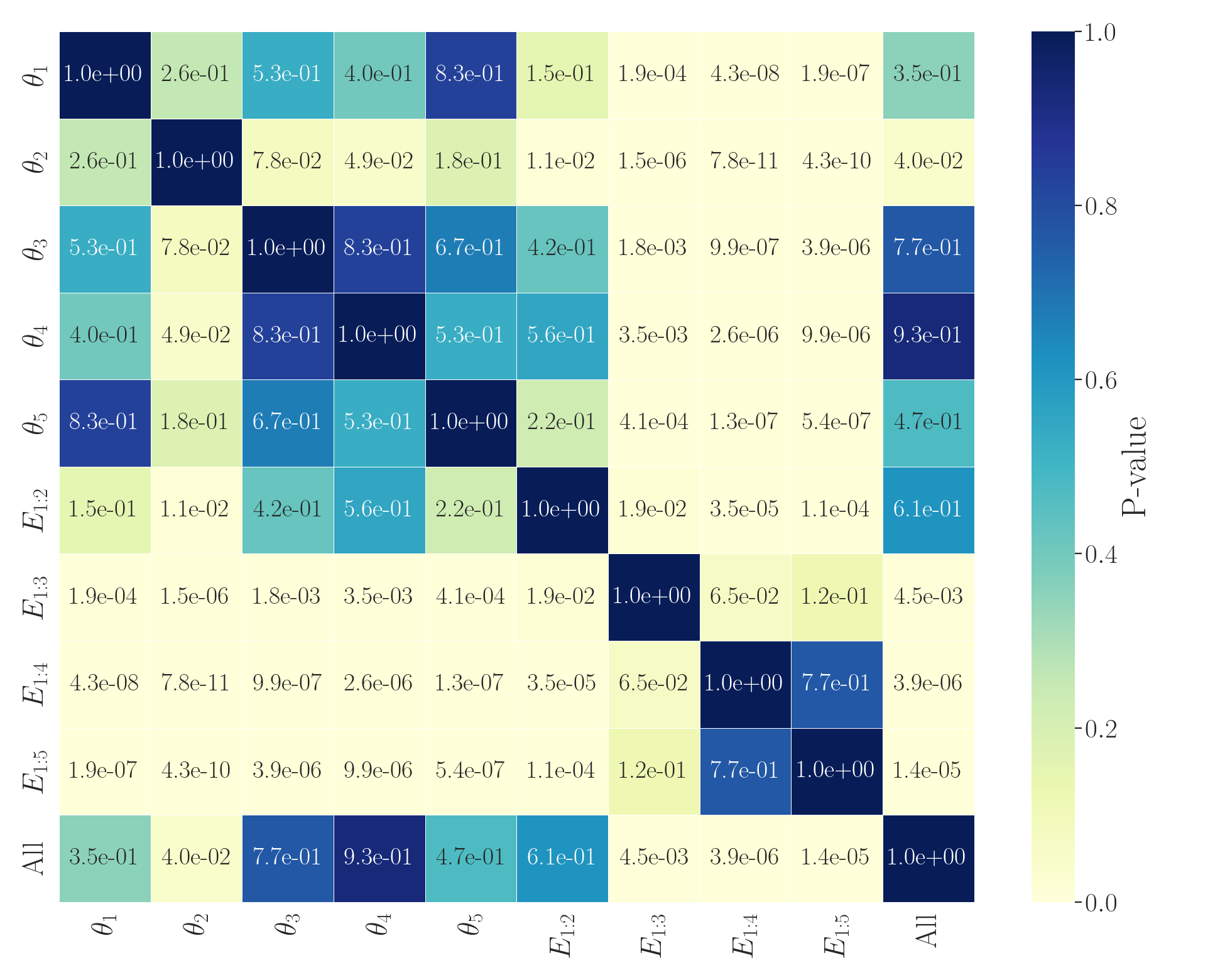}\label{fig:lrnempd_Loss}}
		%\end{comment}
		\caption[The adjusted Conover's P-values for the obtained Log-Loss values in 30 Logistic Regression runs.]{The results of the Conover post-hoc test on testing data’s Log-Loss obtained from 30 Logistic Regression runs.}
		
		\label{fig:lrnem_Loss}
	\end{figure*}
	\FloatBarrier
	%%%%%%%%%%%%%%%%%%%%%%%%%%%%%%%%%%%%%%%%%%%%%%%%%%%%%%%%%%%%%%%%%%%%%%%%%%%%%%
	
	%%%%%%%%%%%%%%%%%%%%%%%%%%%%%%%%%%%%%%%%%%%%%%%%%%%%%%%%%%%%%%%%%%%%%%%%%%%%%%
	\begin{figure*}[htbp] 
		\centering
		%\begin{comment}
		% First row
		\subfloat[APSF]{\includegraphics[width=0.24\textwidth]{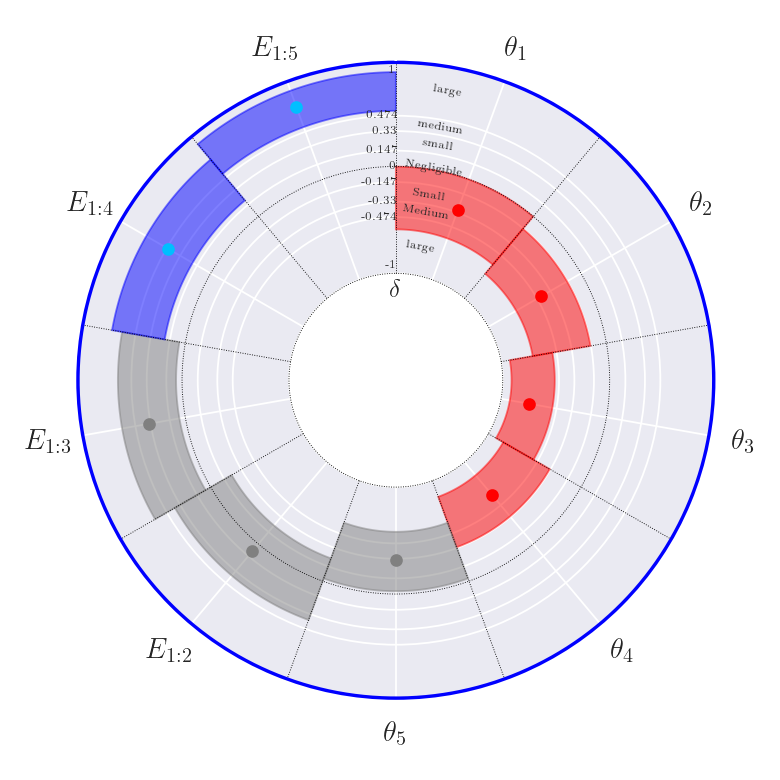}\label{fig:lrcliffapsf_Loss}}%
		\hfill
		\subfloat[ARWPM]{\includegraphics[width=0.24\textwidth]{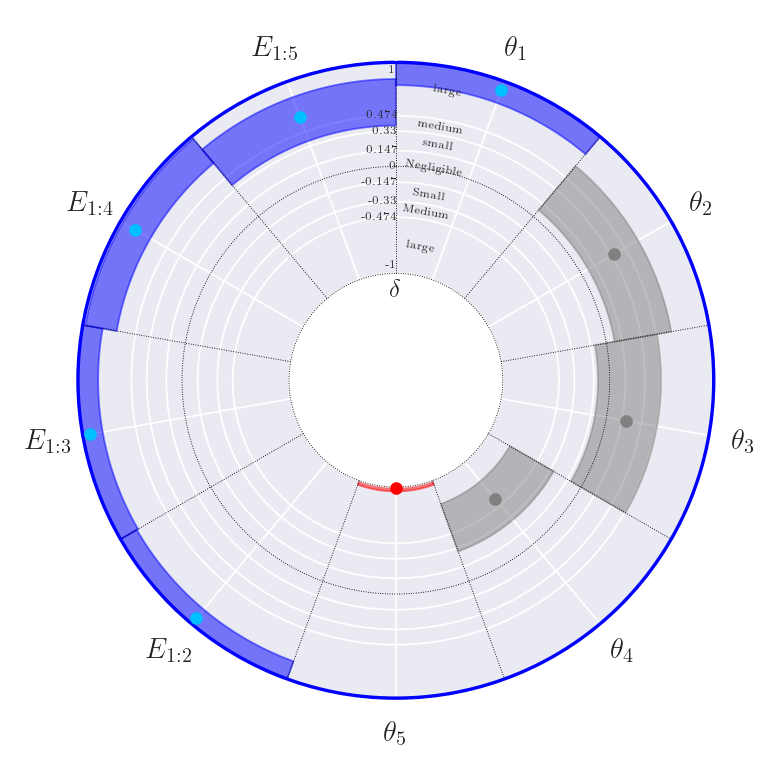}\label{fig:lrcliffarwpm_Loss}}%
		\hfill
		\subfloat[GECR]{\includegraphics[width=0.24\textwidth]{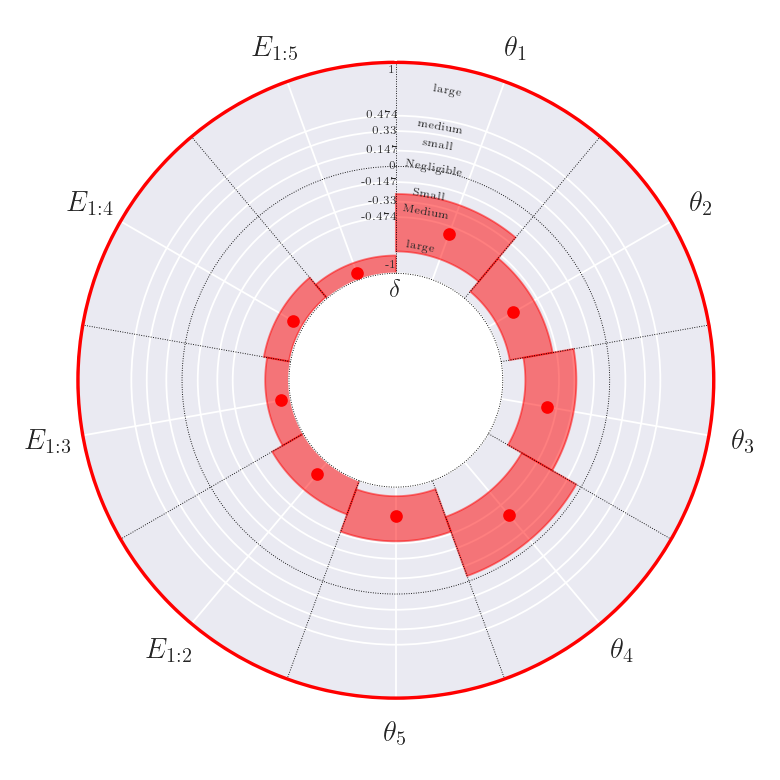}\label{fig:lrcliffgecr_Loss}}%
		\hfill
		\subfloat[GFE]{\includegraphics[width=0.24\textwidth]{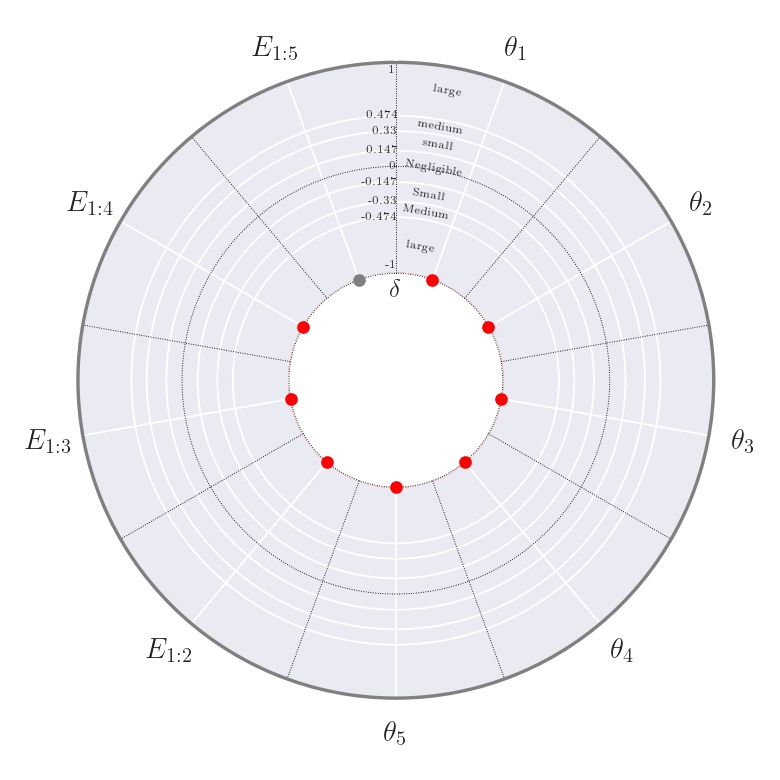}\label{fig:lrcliffgfe_Loss}}
		
		% Second row
		\subfloat[GSAD]{\includegraphics[width=0.24\textwidth]{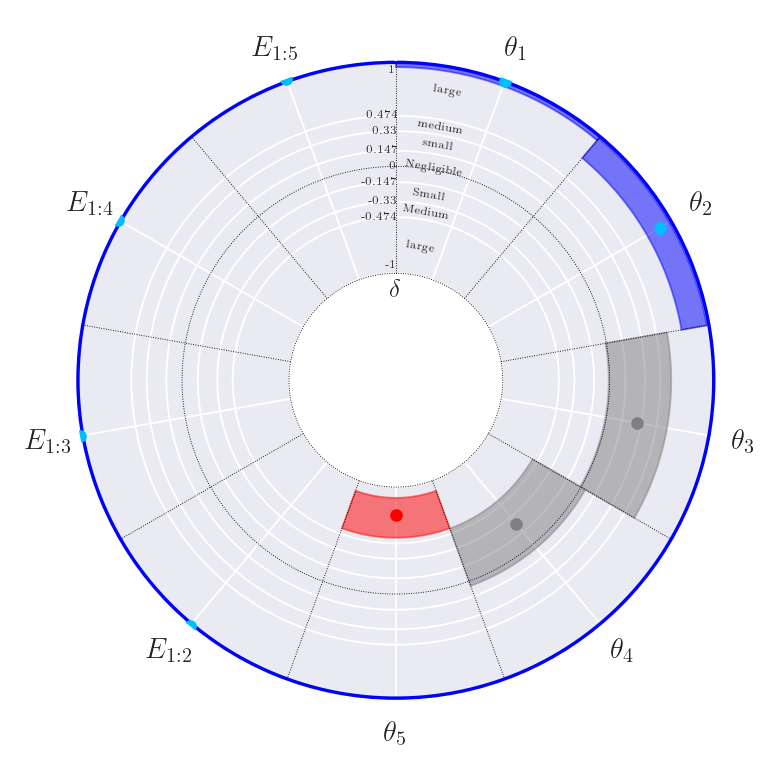}\label{fig:lrcliffgsad_Loss}}%
		\hfill
		\subfloat[HAPT]{\includegraphics[width=0.24\textwidth]{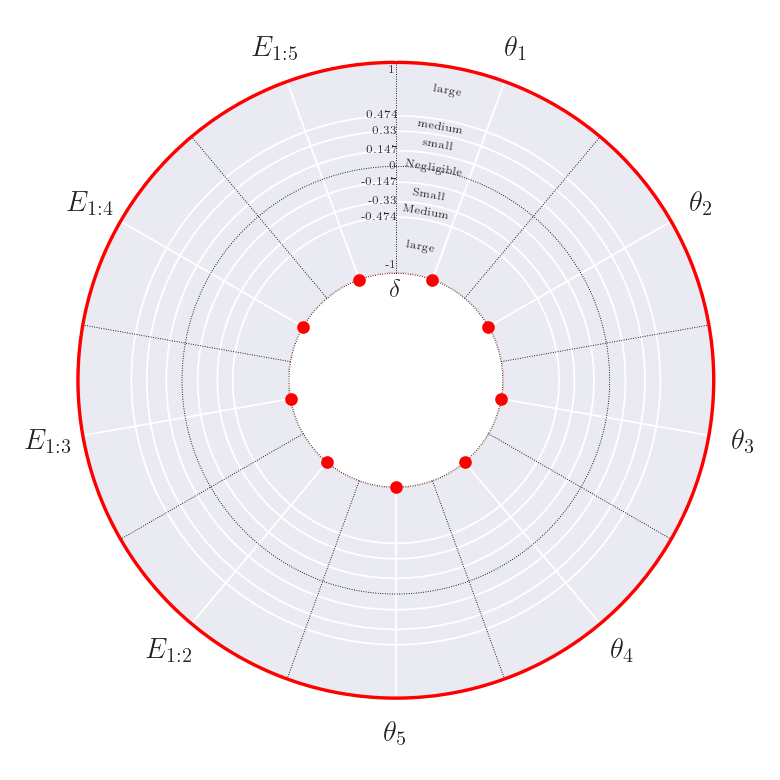}\label{fig:lrcliffhapt_Loss}}%
		\hfill
		\subfloat[ISOLET]{\includegraphics[width=0.24\textwidth]{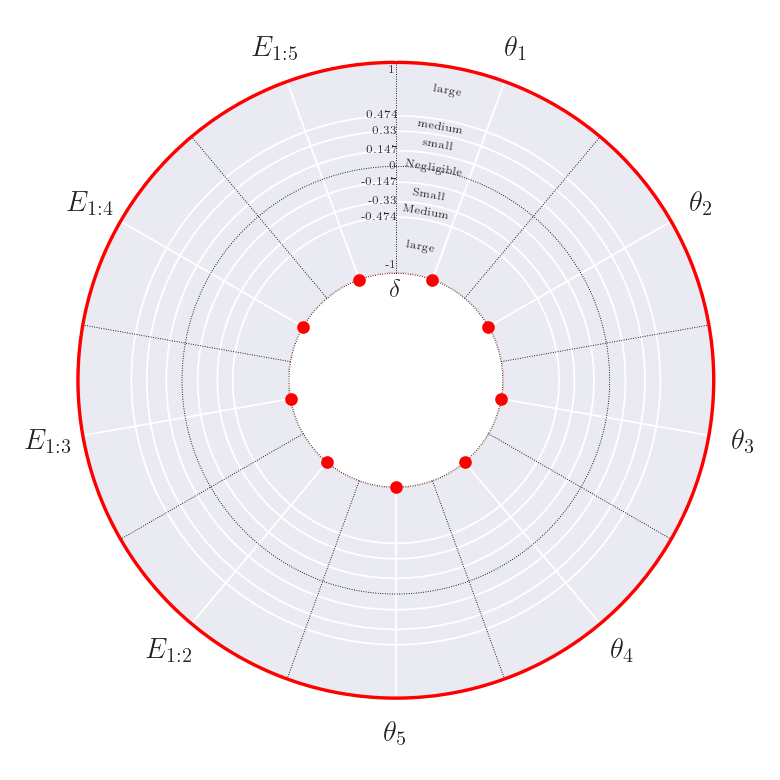}\label{fig:lrcliffisolet_Loss}}%
		\hfill
		\subfloat[PD]{\includegraphics[width=0.24\textwidth]{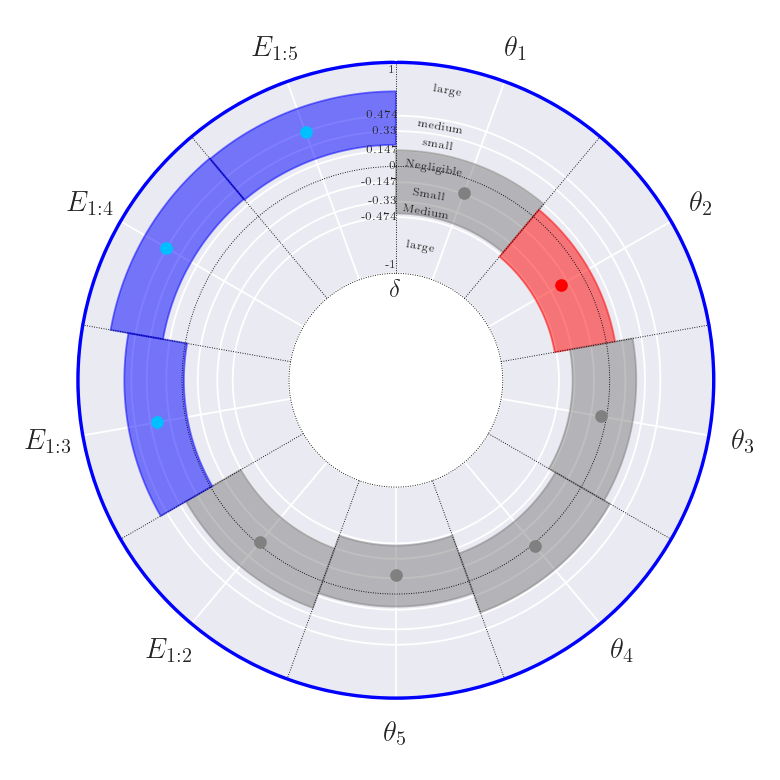}\label{fig:lrcliffpd_Loss}}
		%\end{comment}
		\caption[The Cliff's $\delta$ effect size measure and its 95\% confidence intervals for the Log-Loss values obtained from 30 Logistic Regression runs.]{Effect size analysis of test data Log-Loss across 30 Logistic Regression runs using Cliff's $\delta$. Each point represents the actual value obtained, with segments denoting 95\% confidence intervals based on 10,000 bootstrap resamplings. The outer ring color visualizes the statistical significance: grey illustrates no significant difference (adjusted Friedman's P-value$>0.05$), while color indicates significant differences; blue indicates at least one view and/or ensemble outperforms the benchmark (adjusted Conover's p-value$ < 0.05$, Cliff's $\delta > 0$), and red signifies all views and ensembles underperform relative to the benchmark (adjusted Conover's p-value$ < 0.05$, Cliff's $\delta < 0$). Segment colors show performance difference against the benchmark: grey for no significant difference (adjusted Conover's p-value$  > 0.05$), blue for better performance (Cliff's $\delta > 0$), and red for worse performance (Cliff's $\delta < 0$).}
		
		\label{fig:lrcliff_Loss}
	\end{figure*}
	%%%%%%%%%%%%%%%%%%%%%%%%%%%%%%%%%%%%%%%%%%%%%%%%%%%%%%%%%%%%%%%%%%%%%%%%%%%%%%
	
	%%%%%%%%%%%%%%%%%%%%%%%%%%%%%%%%%%%%%%%%%%%%%%%%%%%%%%%%%%%%%%%%%%%%%%%%%%%%%%
	\begin{table*}[htbp]
		\centering
		\caption[The results of Friedman and Conover tests and Cliff's $\delta$ analysis for the Log-Loss values obtained from 30 Logistic Regression runs.]{Statistical comparison of Log-Loss results for testing data obtained from Logistic Regression runs. W, T, and L denote win, tie, and loss based on adjusted Friedman and Conover's p-values. Effect sizes are calculated using Cliff's Delta method and are categorized as negligible, small, medium, or large.}
		\label{tab:lrloss}
		\resizebox{\linewidth}{!}{%
			\begin{tabular}{c|ccccccccc}
				\hline
				\multicolumn{10}{c}{Logistic Regression's Log-Loss}\\
				\hline
				Dataset & $\theta_1$ & $\theta_2$ & $\theta_3$ & $\theta_4$ & $\theta_5$ & $E_{1:2}$ & $E_{1:3}$ & $E_{1:4}$ & $E_{1:5}$ \\
				\hline
				APSF  & L (small) & L (medium) & L (large) & L (large) & T (small) & T (negligible) & T (medium) & W (medium) & W (large) \\
				ARWPM  & W (large) & T (medium) & T (small) & T (large) & L (large) & W (large) & W (large) & W (large) & W (large) \\
				GECR  & L (large) & L (large) & L (large) & L (medium) & L (large) & L (large) & L (large) & L (large) & L (large) \\
				GFE  & L (large) & L (large) & L (large) & L (large) & L (large) & L (large) & L (large) & L (large) & T (large) \\
				GSAD  & W (large) & W (large) & T (small) & T (small) & L (large) & W (large) & W (large) & W (large) & W (large) \\
				HAPT  & L (large) & L (large) & L (large) & L (large) & L (large) & L (large) & L (large) & L (large) & L (large) \\
				ISOLET  & L (large) & L (large) & L (large) & L (large) & L (large) & L (large) & L (large) & L (large) & L (large) \\
				PD  & T (negligible) & L (small) & T (negligible) & T (negligible) & T (small) & T (negligible) & W (small) & W (large) & W (medium) \\
				\hline
				W - T - L  & 2 - 1 - 5 & 1 - 1 - 6 & 0 - 3 - 5 & 0 - 3 - 5 & 0 - 2 - 6 & 2 - 2 - 4 & 3 - 1 - 4 & 4 - 0 - 4 & 4 - 1 - 3 \\
				\hline
			\end{tabular}
		}
	\end{table*}
	\FloatBarrier
	%%%%%%%%%%%%%%%%%%%%%%%%%%%%%%%%%%%%%%%%%%%%%%%%%%%%%%%%%%%%%%%%%%%%%%%%%%%%%%
	
	%\subsection{The MEC results for Logistic Regression}
	%\label{ssub:lrmec}
	% Logit: MEC
	%%%%%%%%%%%%%%%%%%%%%%%%%%%%%%%%%%%%%%%%%%%%%%%%%%%%%%%%%%%%%%%%%%%%%%%%%%%%%%
	\begin{figure*}[t] 
		\centering
		%\begin{comment}
		% First row
		\subfloat[APSF]{\includegraphics[width=0.24\textwidth]{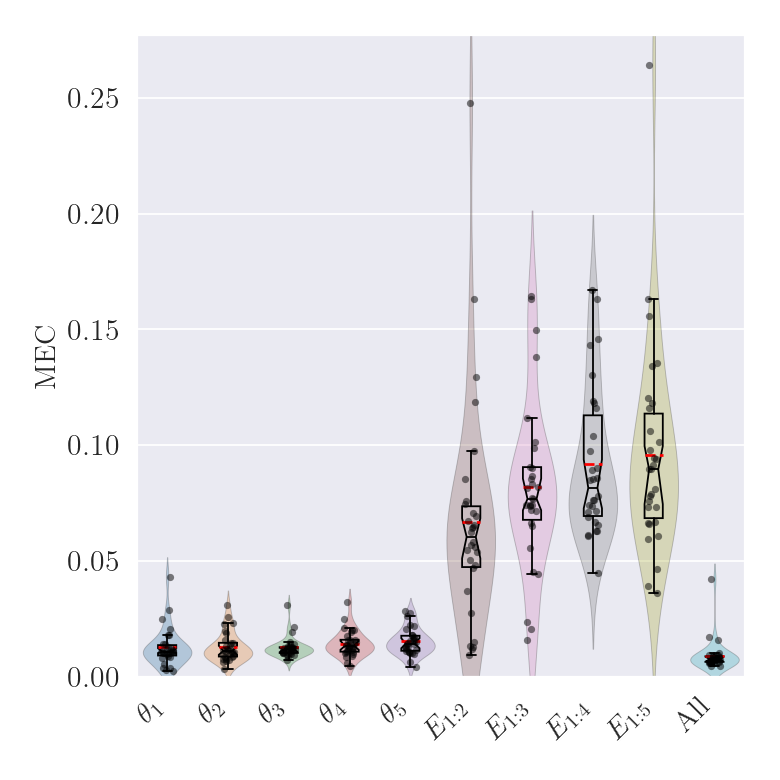}\label{fig:lrapsf_MEC}}%
		\hfill
		\subfloat[ARWPM]{\includegraphics[width=0.24\textwidth]{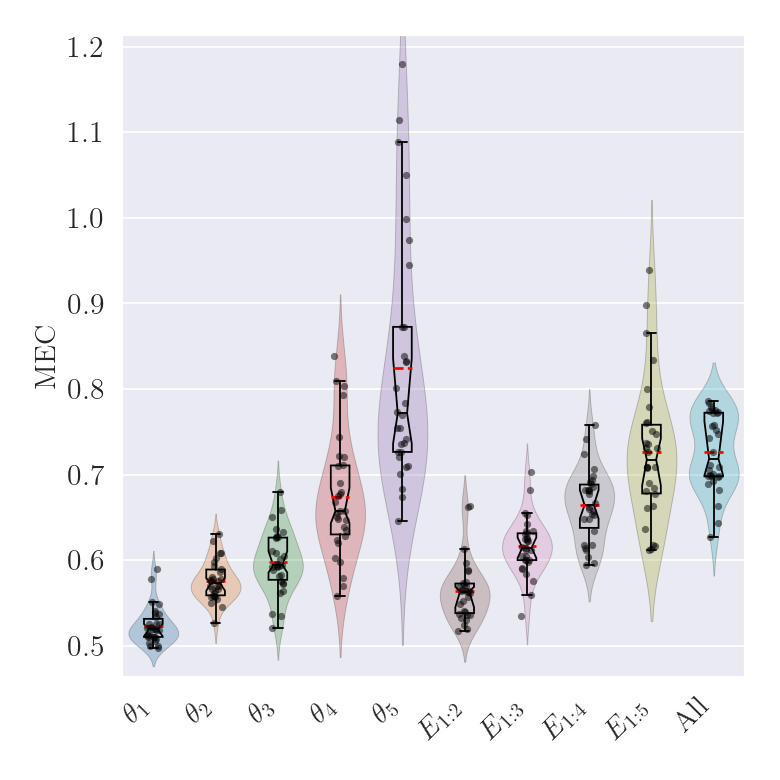}\label{fig:lrarwpm_MEC}}%
		\hfill
		\subfloat[GECR]{\includegraphics[width=0.24\textwidth]{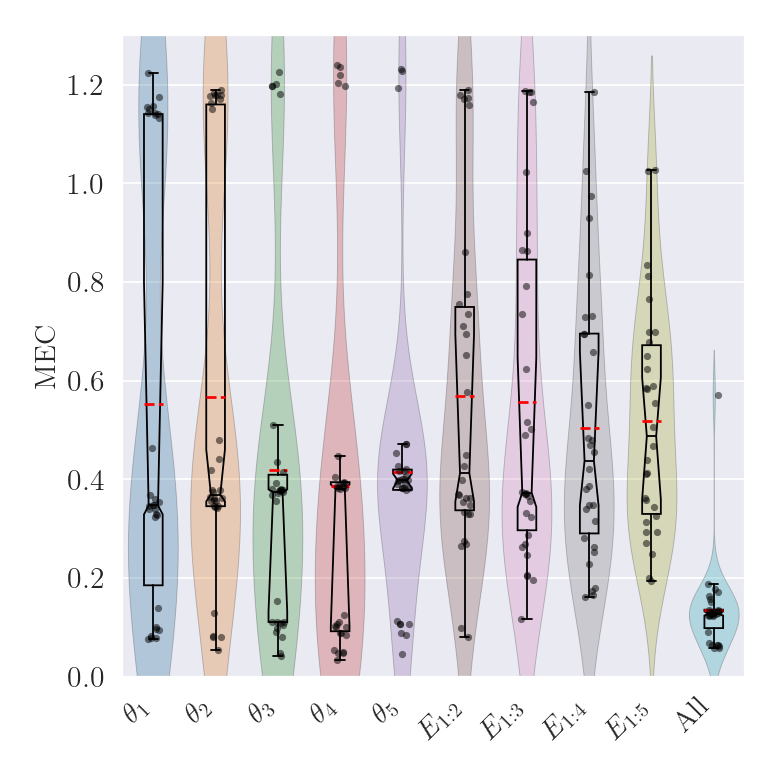}\label{fig:lrgecr_MEC}}%
		\hfill
		\subfloat[GFE]{\includegraphics[width=0.24\textwidth]{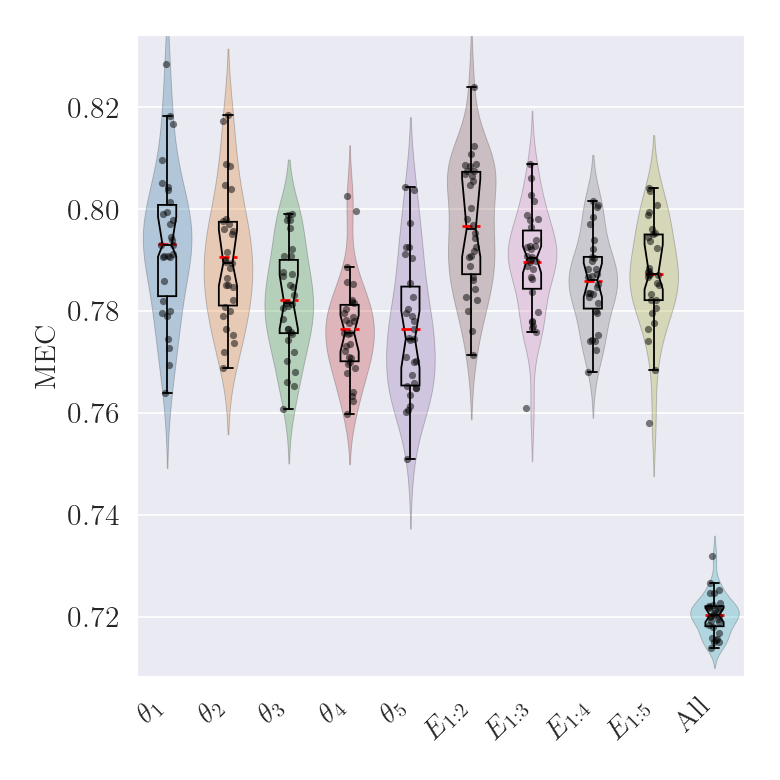}\label{fig:lrgfe_MEC}}
		
		% Second row
		\subfloat[GSAD]{\includegraphics[width=0.24\textwidth]{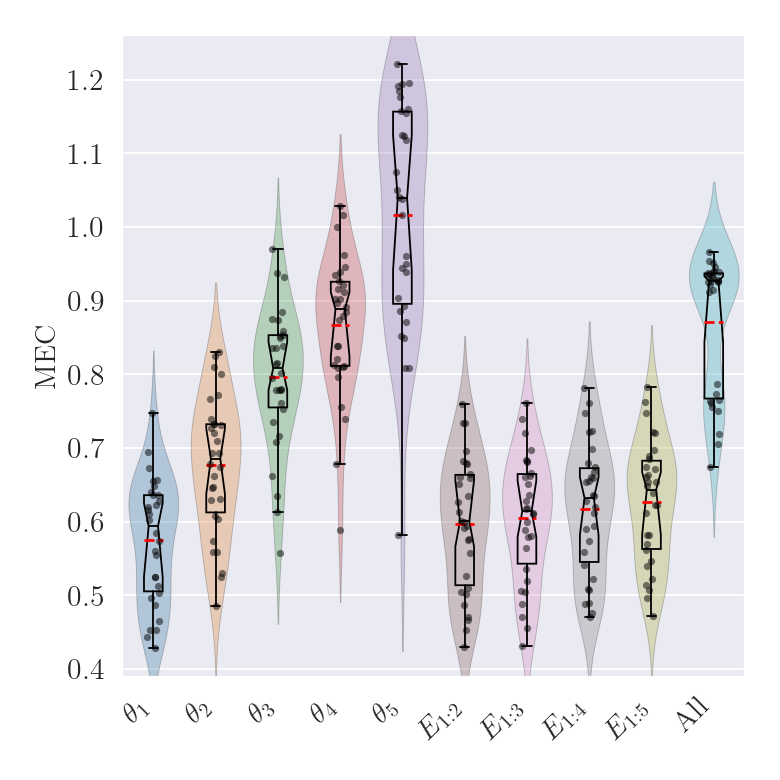}\label{fig:fpgsad_MEC}}%
		\hfill
		\subfloat[HAPT]{\includegraphics[width=0.24\textwidth]{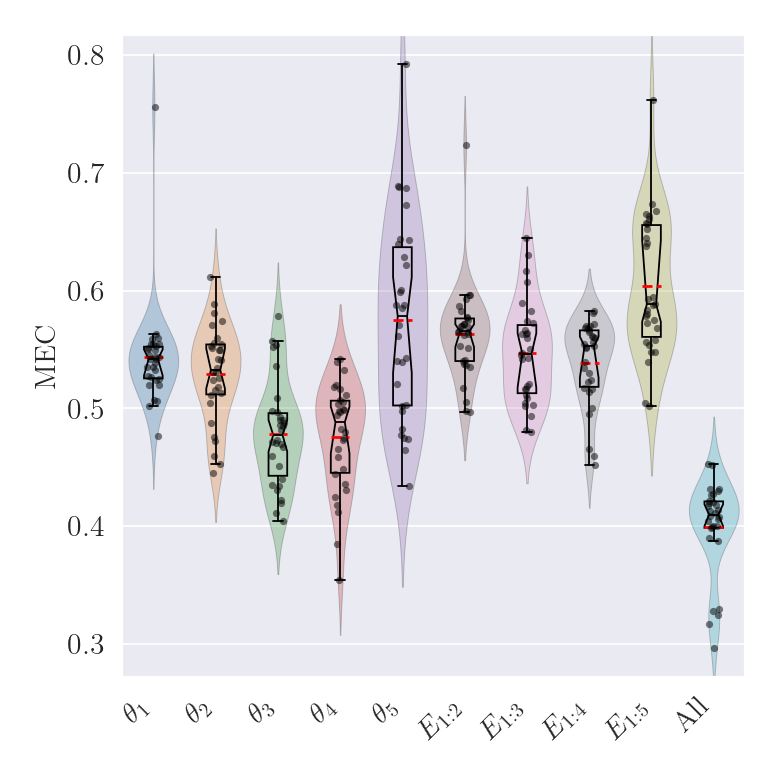}\label{fig:lrhapt_MEC}}%
		\hfill
		\subfloat[ISOLET]{\includegraphics[width=0.24\textwidth]{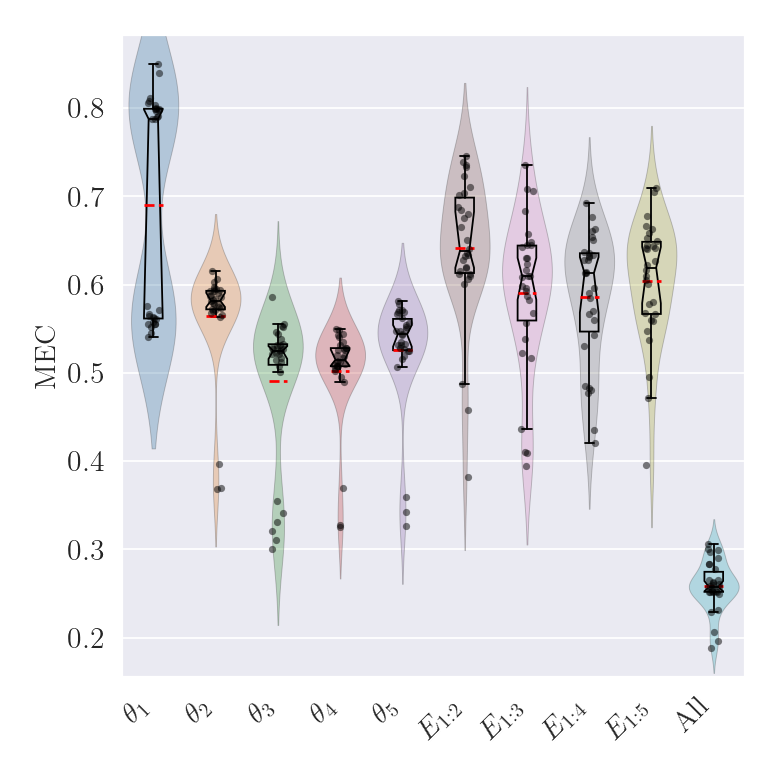}\label{fig:lrisolet_MEC}}%
		\hfill
		\subfloat[PD]{\includegraphics[width=0.24\textwidth]{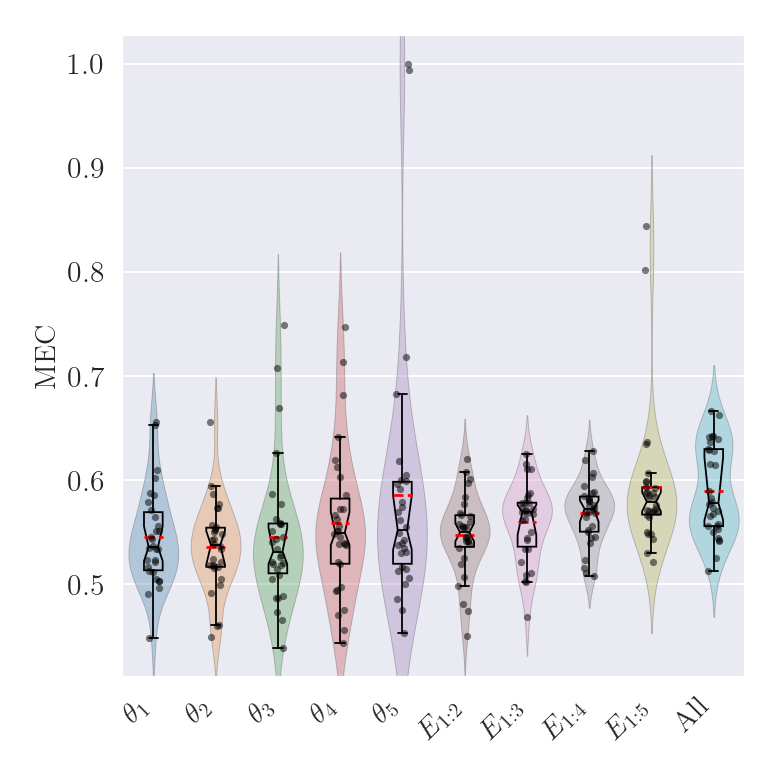}\label{fig:lrpd_MEC}}
		%\end{comment}
		\caption[The distribution of the obtained MEC values for 30 Logistic Regression runs.]{The raincloud plot of MEC results obtained from 30 Logistic Regression runs.}
		
		\label{fig:lr_MEC}
	\end{figure*}
	%%%%%%%%%%%%%%%%%%%%%%%%%%%%%%%%%%%%%%%%%%%%%%%%%%%%%%%%%%%%%%%%%%%%%%%%%%%%%%
	
	%%%%%%%%%%%%%%%%%%%%%%%%%%%%%%%%%%%%%%%%%%%%%%%%%%%%%%%%%%%%%%%%%%%%%%%%%%%%%%
	\begin{figure*}[t] 
		\centering
		%\begin{comment}
		% First row
		\subfloat[APSF]{\includegraphics[width=0.24\textwidth]{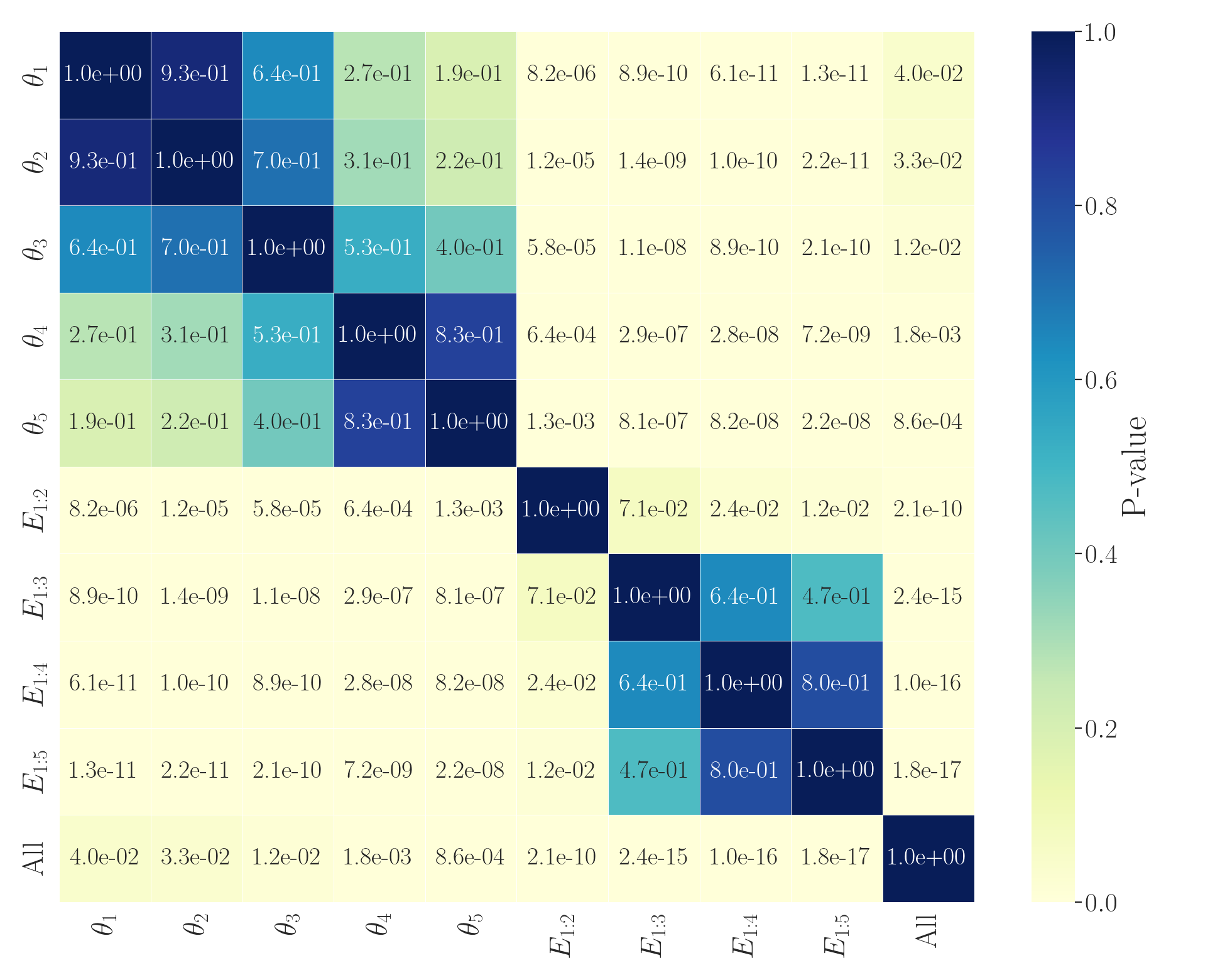}\label{fig:lrnemapsf_MEC}}%
		\hfill
		\subfloat[ARWPM]{\includegraphics[width=0.24\textwidth]{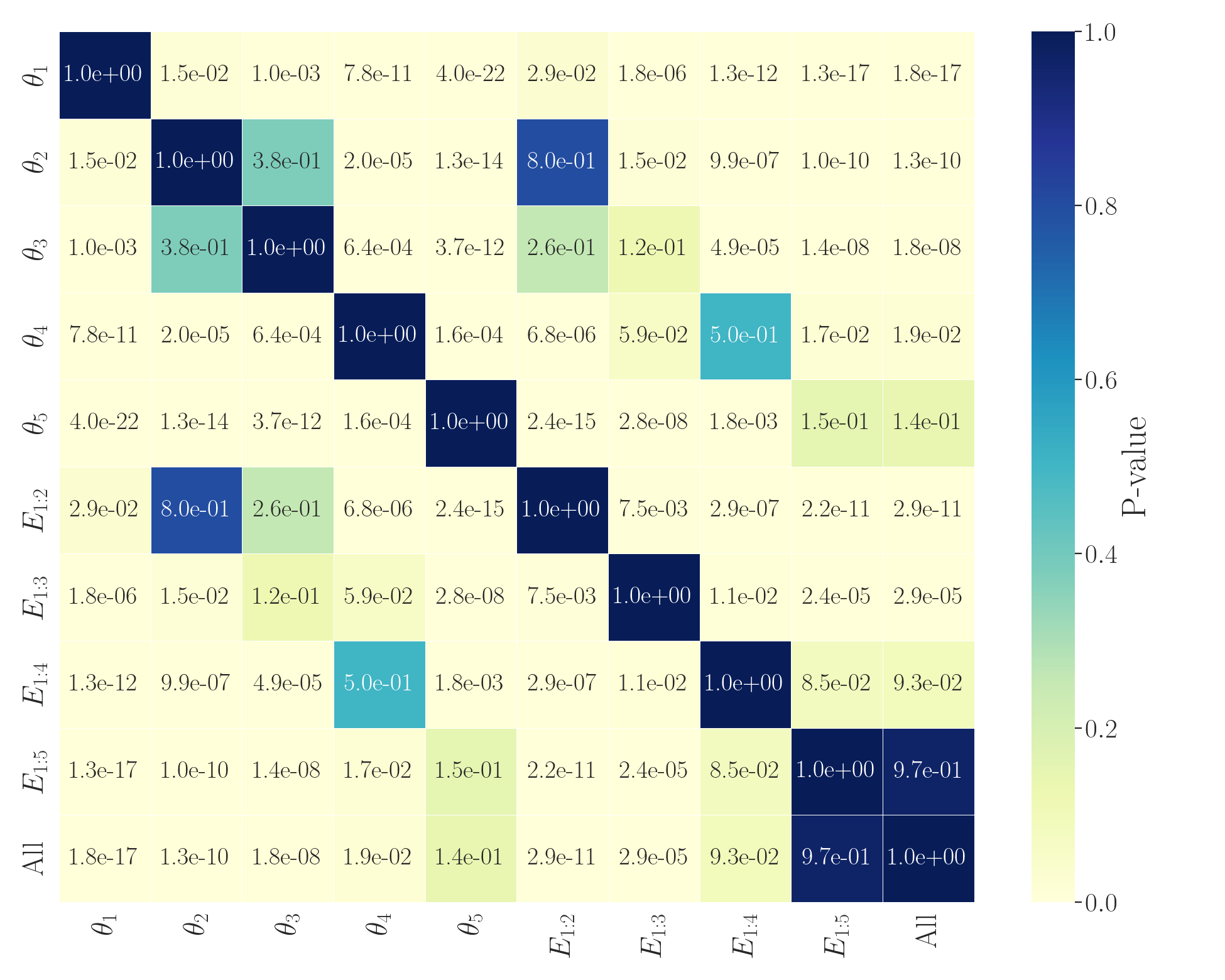}\label{fig:lrnemarwpm_MEC}}%
		\hfill
		\subfloat[GECR]{\includegraphics[width=0.24\textwidth]{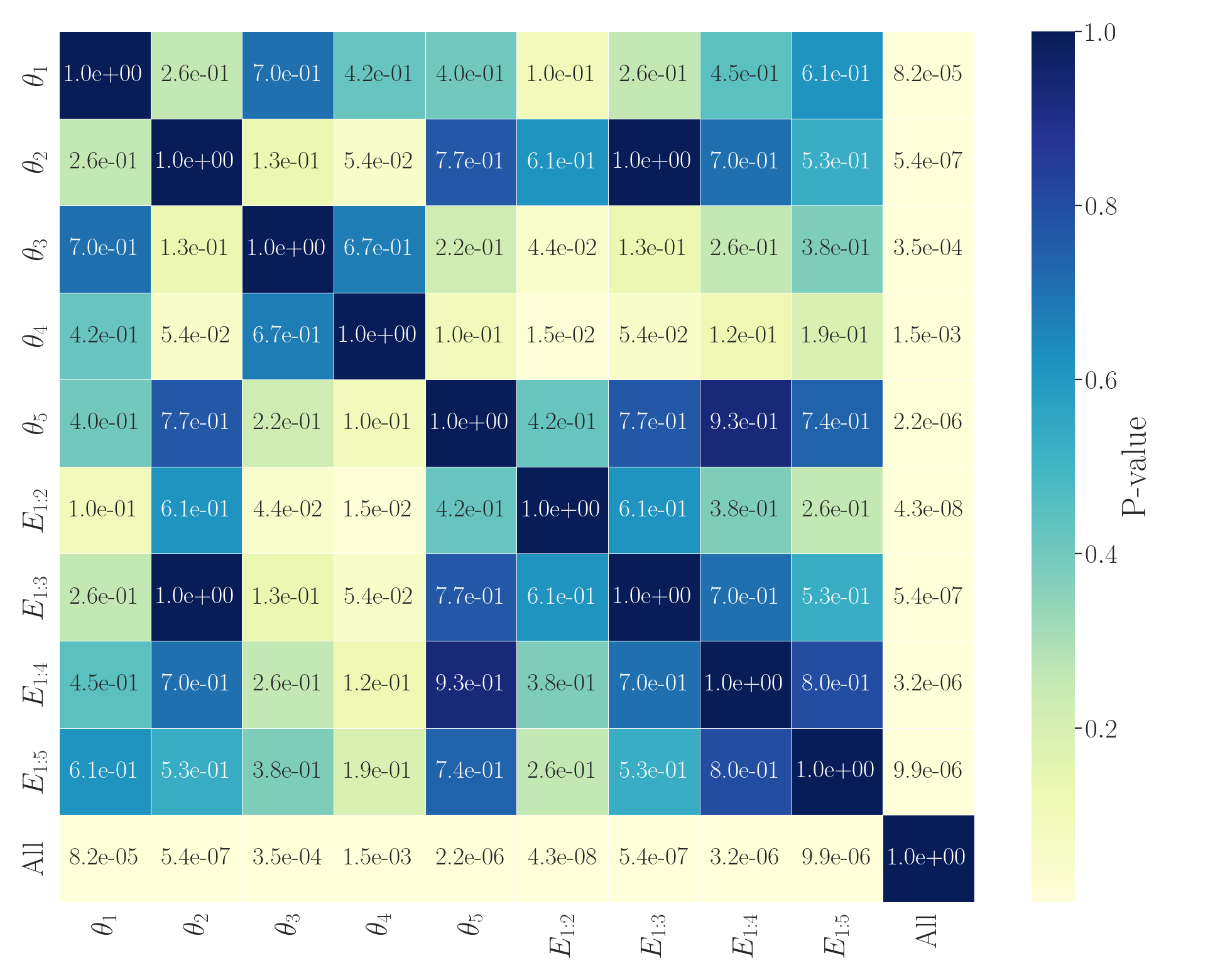}\label{fig:lrnemgecr_MEC}}%
		\hfill
		\subfloat[GFE]{\includegraphics[width=0.24\textwidth]{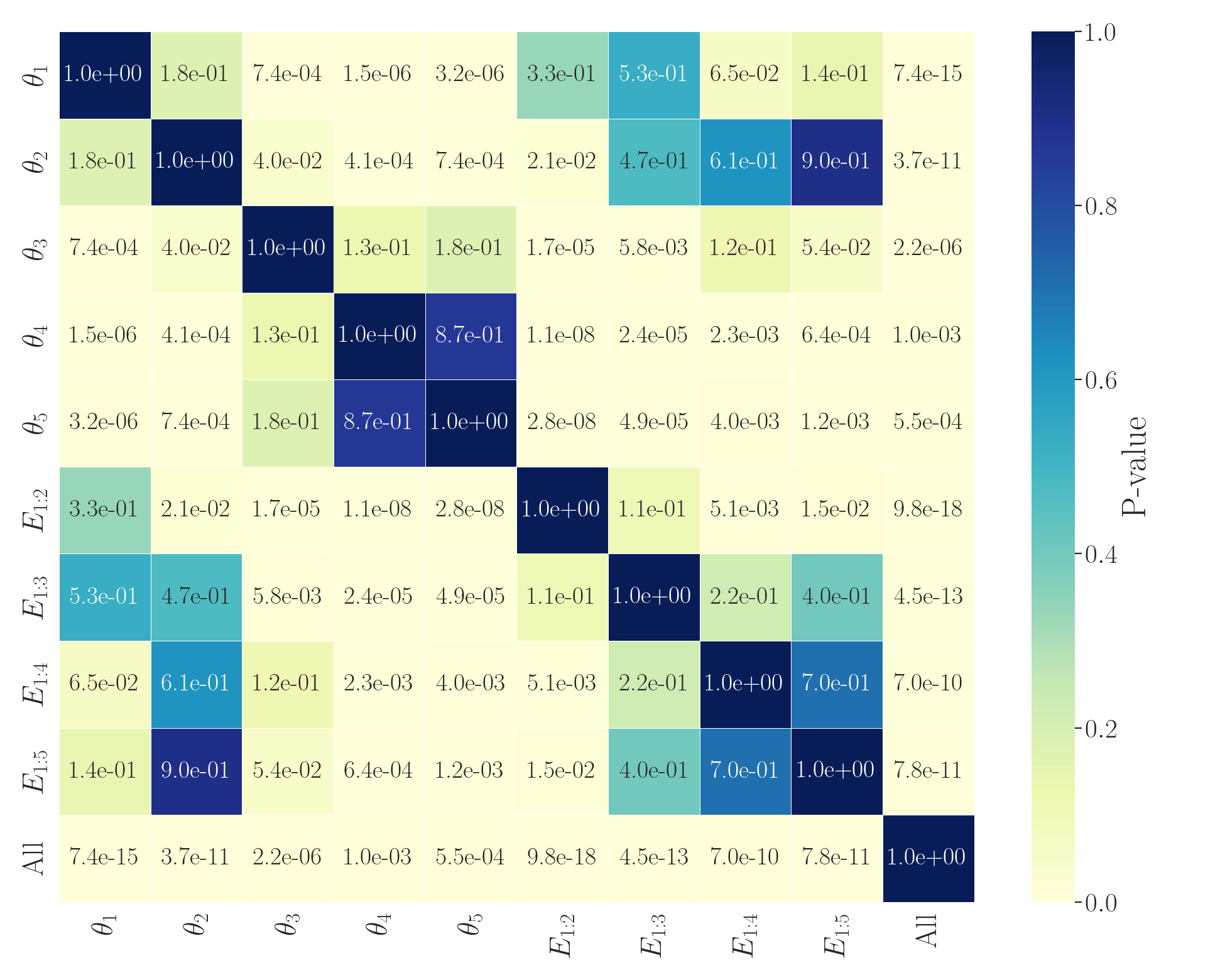}\label{fig:lrnemgfe_MEC}}
		
		% Second row
		\subfloat[GSAD]{\includegraphics[width=0.24\textwidth]{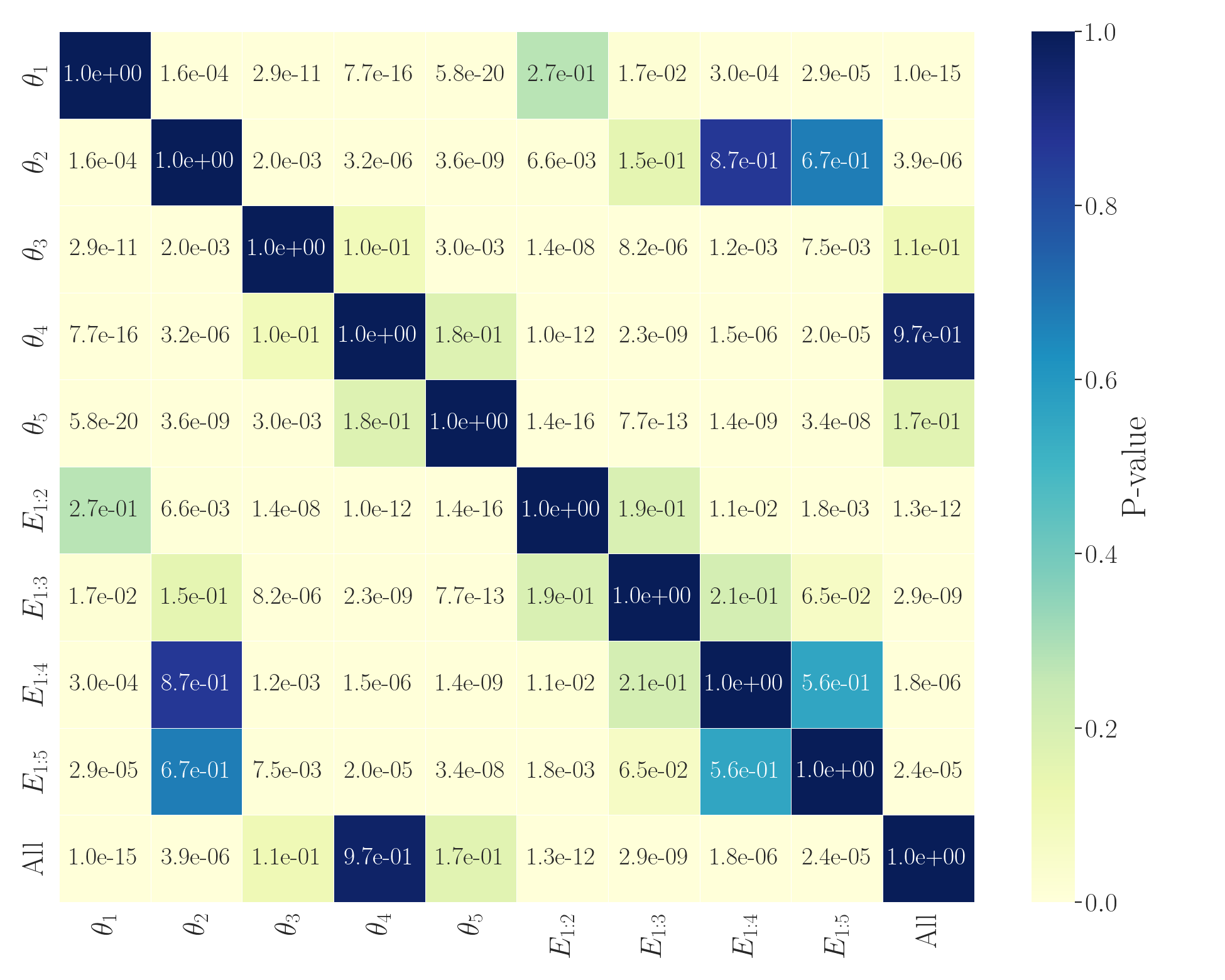}\label{fig:lrnemgsad_MEC}}%
		\hfill
		\subfloat[HAPT]{\includegraphics[width=0.24\textwidth]{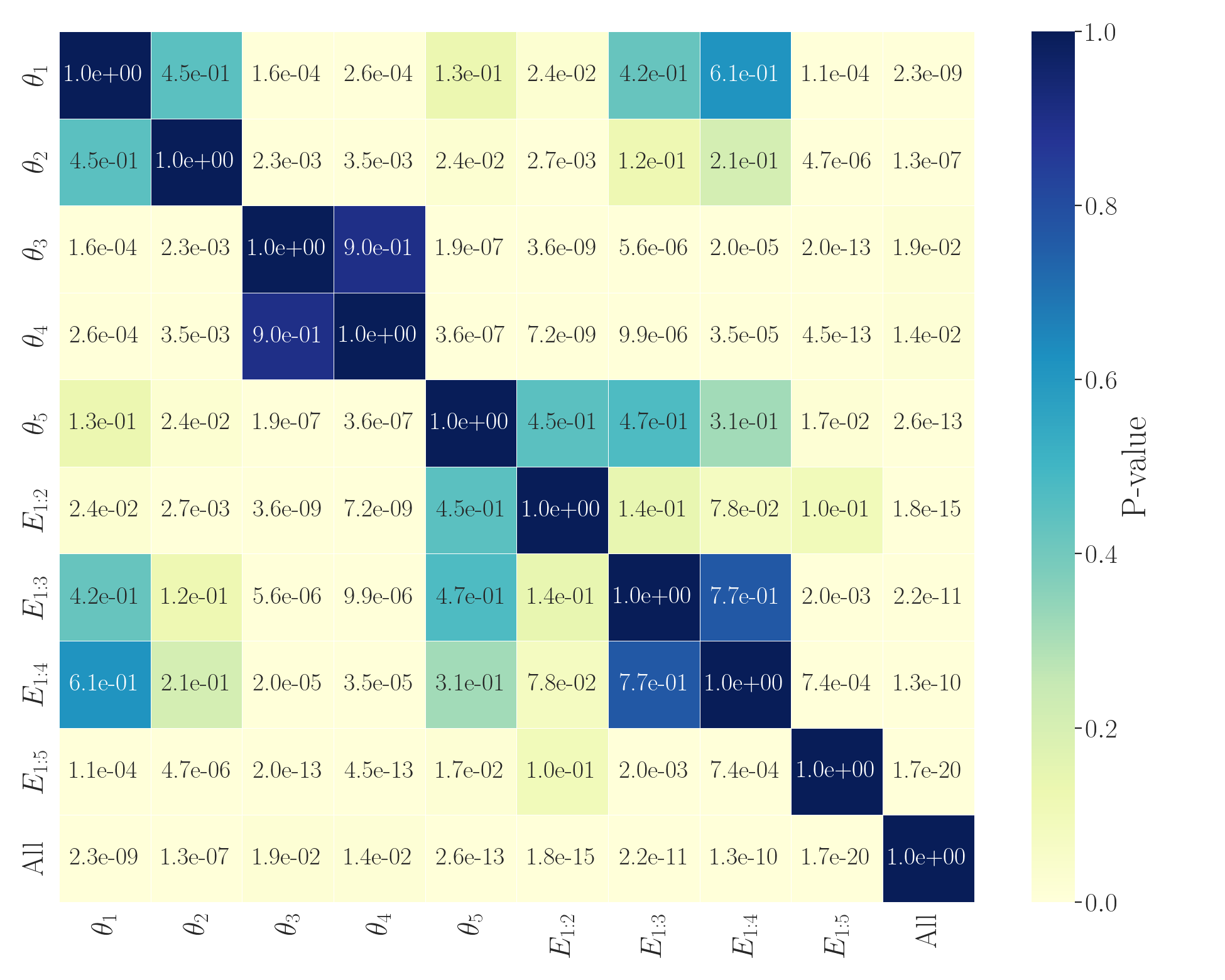}\label{fig:lrnemhapt_MEC}}%
		\hfill
		\subfloat[ISOLET]{\includegraphics[width=0.24\textwidth]{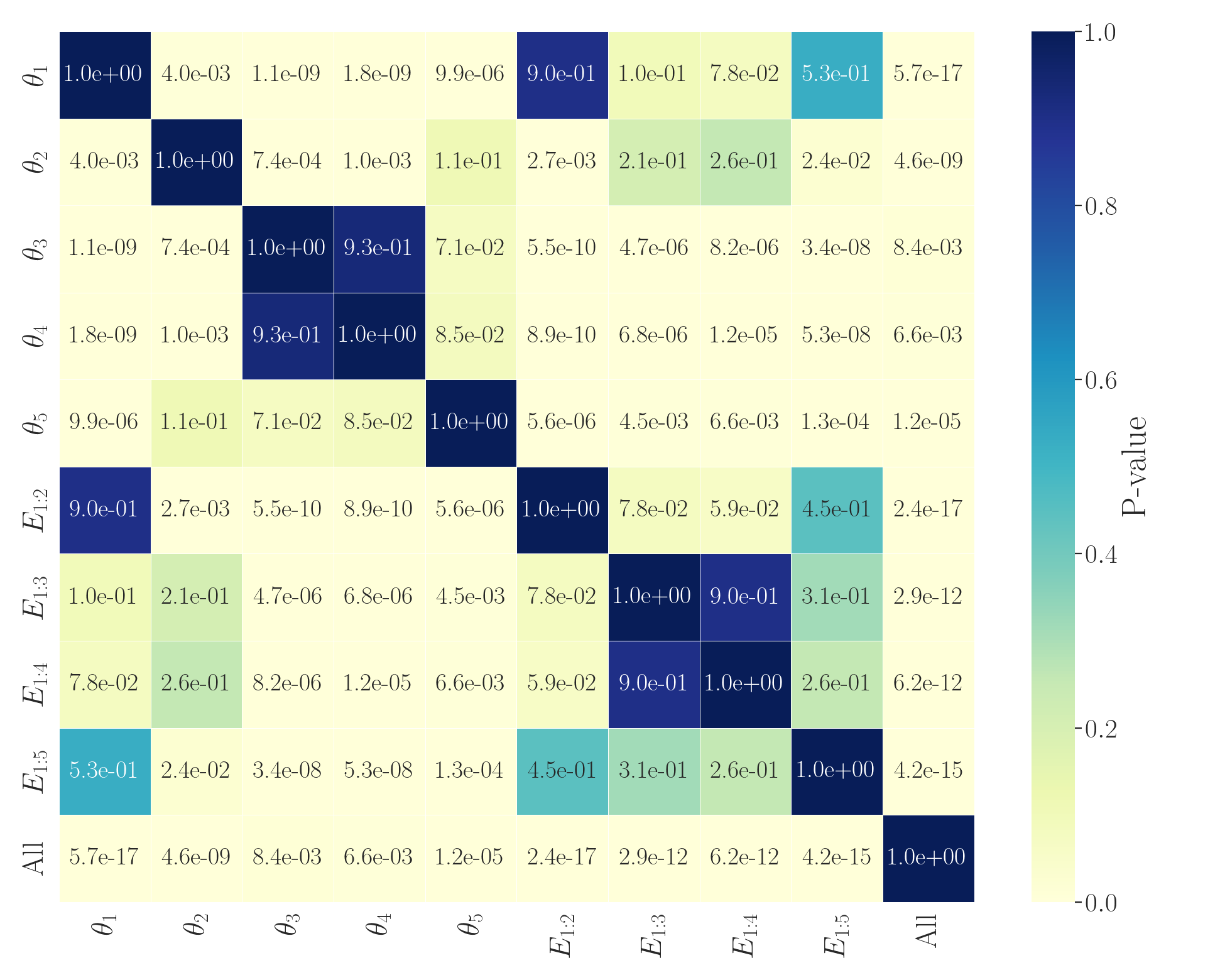}\label{fig:lrnemisolet_MEC}}%
		\hfill
		\subfloat[PD]{\includegraphics[width=0.24\textwidth]{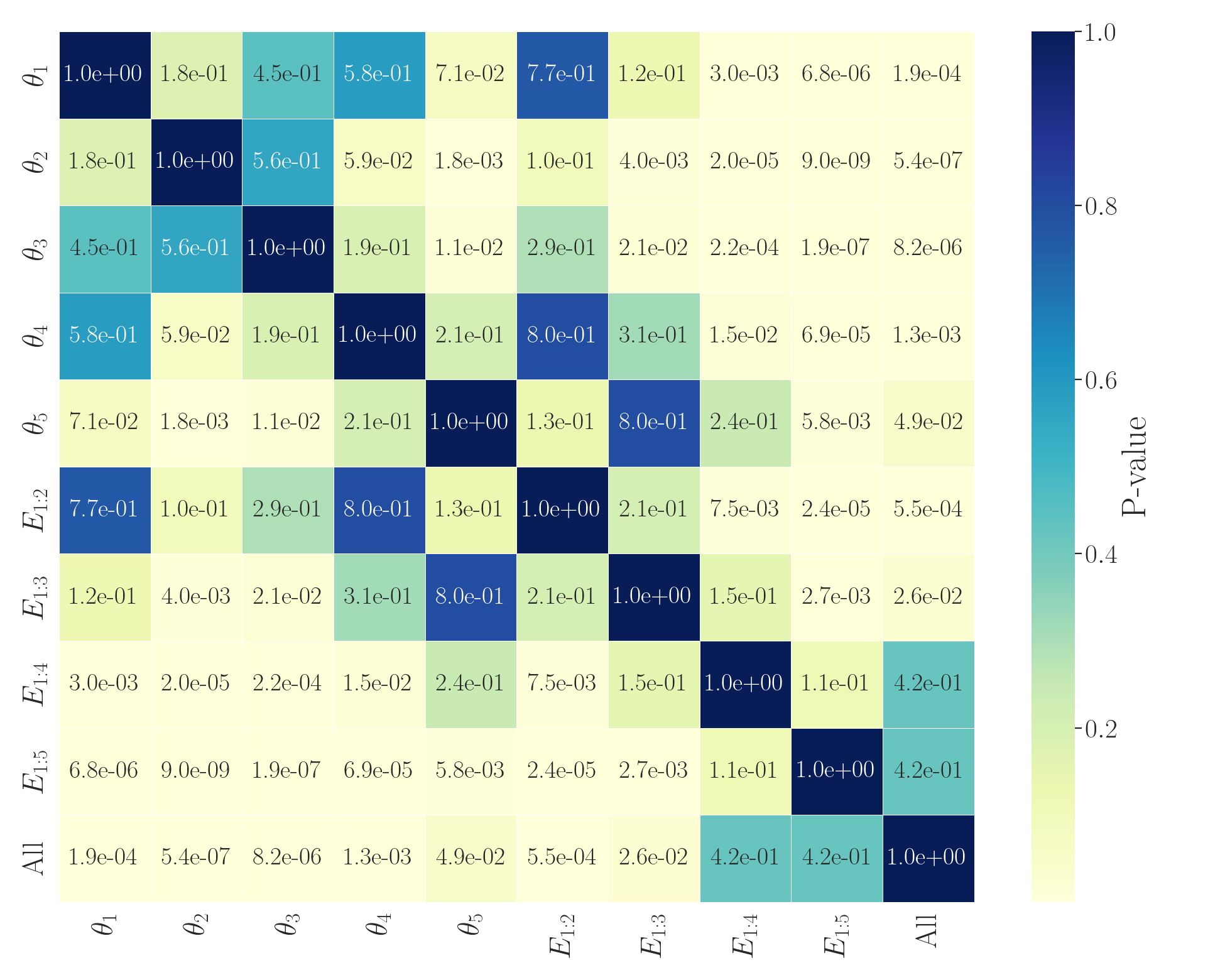}\label{fig:lrnempd_MEC}}
		%\end{comment}
		\caption[The adjusted Conover's P-values for the obtained MEC values in 30 Logistic Regression runs.]{The results of the Conover post-hoc test on testing data’s MEC obtained from 30 Logistic Regression runs.}
		
		\label{fig:lrnem_MEC}
	\end{figure*}
	\FloatBarrier
	%%%%%%%%%%%%%%%%%%%%%%%%%%%%%%%%%%%%%%%%%%%%%%%%%%%%%%%%%%%%%%%%%%%%%%%%%%%%%%
	
	%%%%%%%%%%%%%%%%%%%%%%%%%%%%%%%%%%%%%%%%%%%%%%%%%%%%%%%%%%%%%%%%%%%%%%%%%%%%%%
	\begin{figure*}[htbp] 
		\centering
		%\begin{comment}
		% First row
		\subfloat[APSF]{\includegraphics[width=0.24\textwidth]{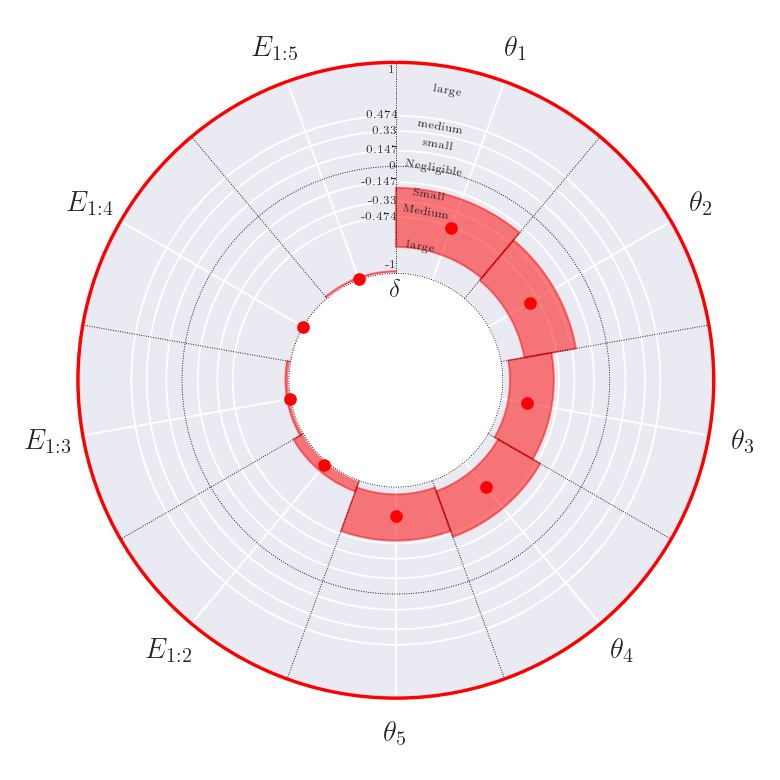}\label{fig:lrcliffapsf_MEC}}%
		\hfill
		\subfloat[ARWPM]{\includegraphics[width=0.24\textwidth]{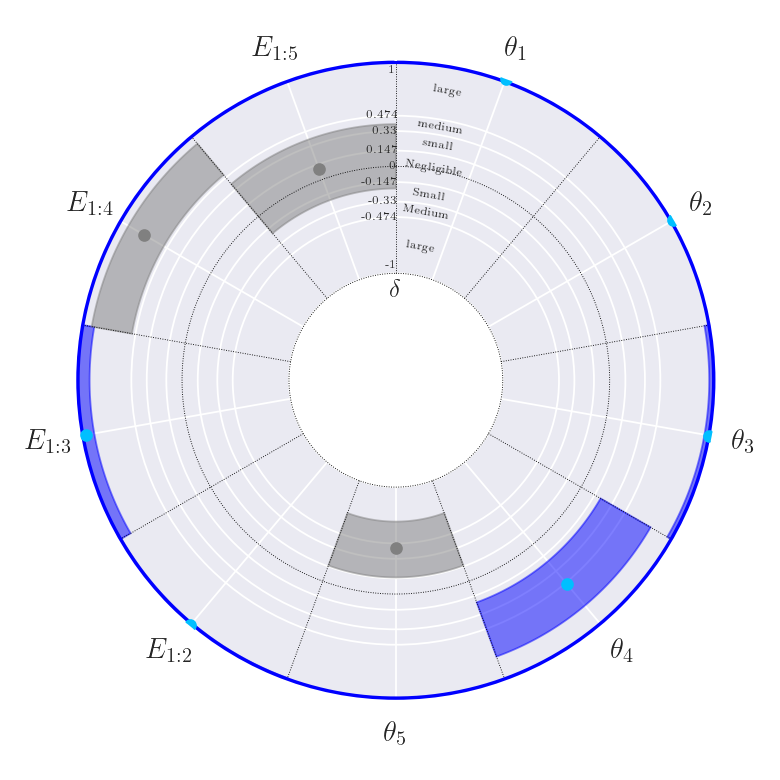}\label{fig:lrcliffarwpm_MEC}}%
		\hfill
		\subfloat[GECR]{\includegraphics[width=0.24\textwidth]{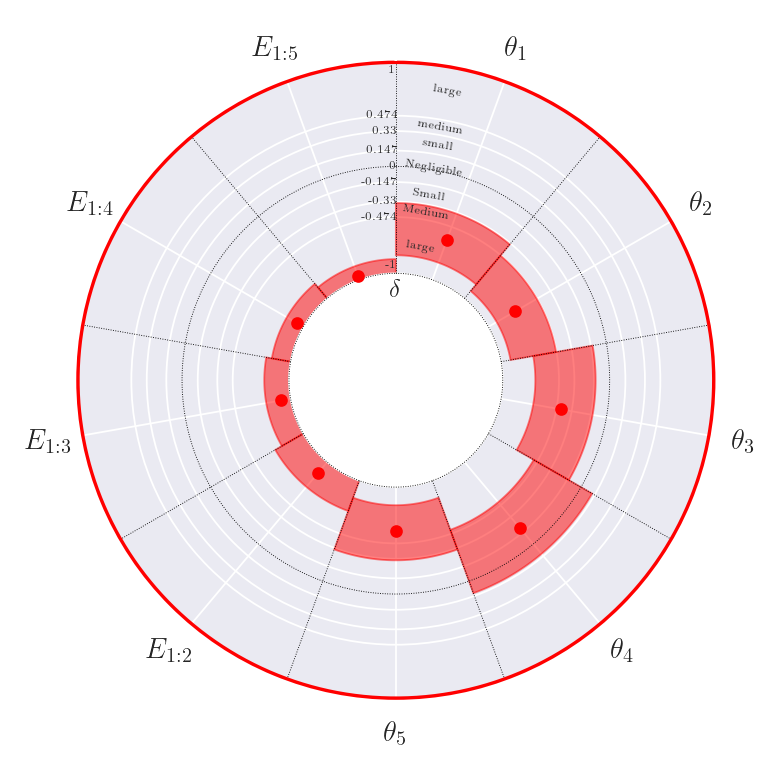}\label{fig:lrcliffgecr_MEC}}%
		\hfill
		\subfloat[GFE]{\includegraphics[width=0.24\textwidth]{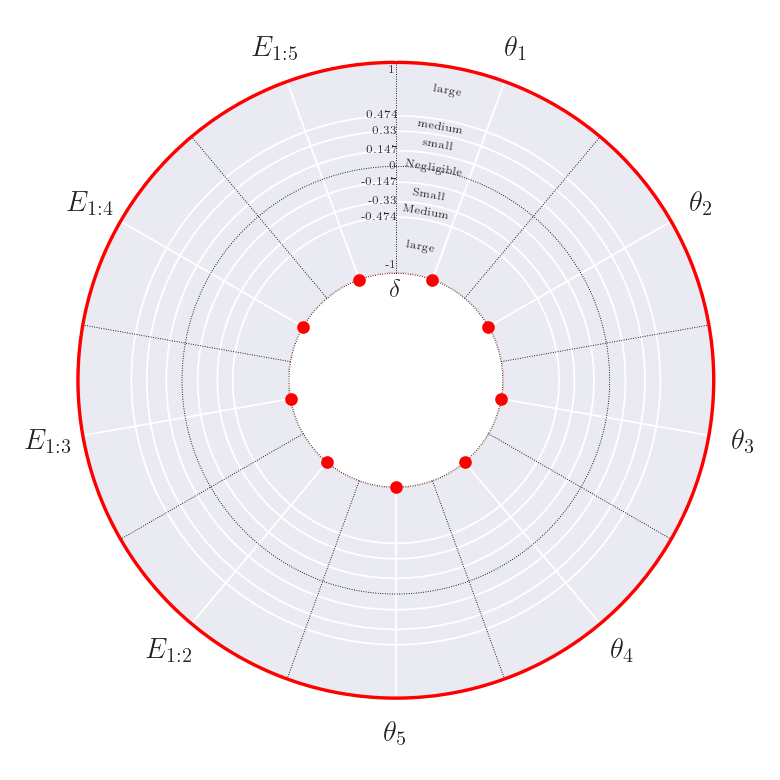}\label{fig:lrcliffgfe_MEC}}
		
		% Second row
		\subfloat[GSAD]{\includegraphics[width=0.24\textwidth]{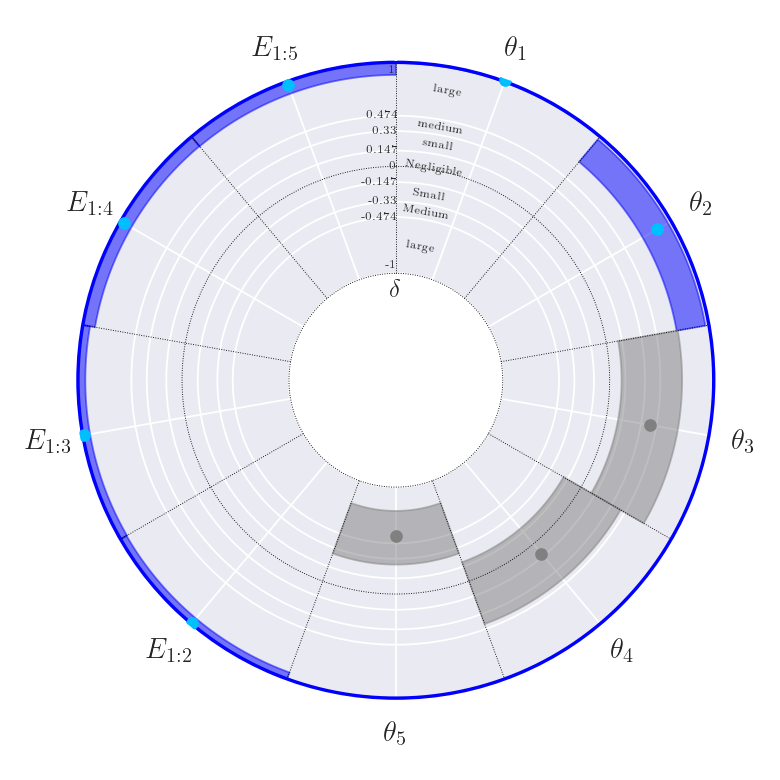}\label{fig:lrcliffgsad_MEC}}%
		\hfill
		\subfloat[HAPT]{\includegraphics[width=0.24\textwidth]{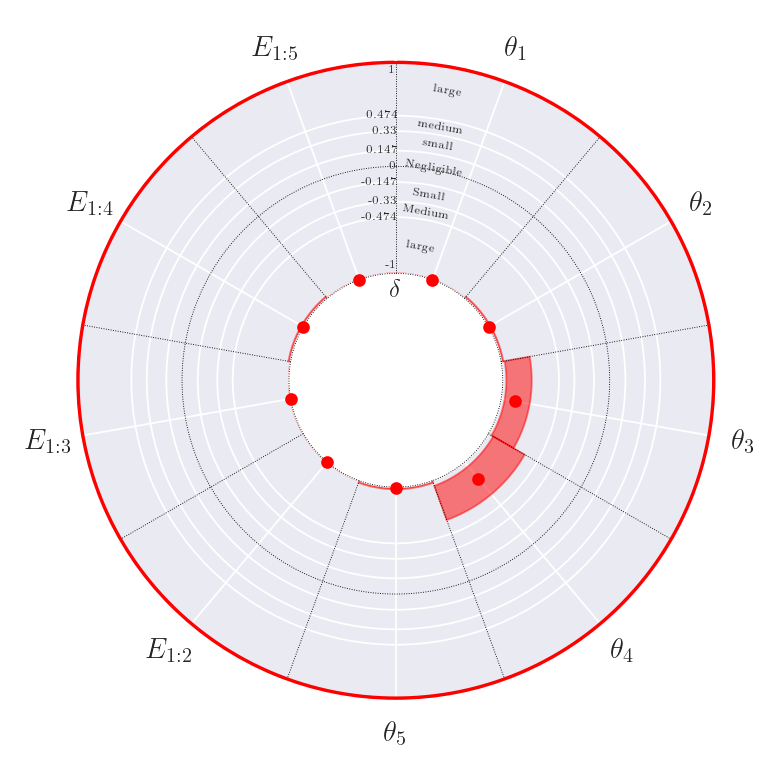}\label{fig:lrcliffhapt_MEC}}%
		\hfill
		\subfloat[ISOLET]{\includegraphics[width=0.24\textwidth]{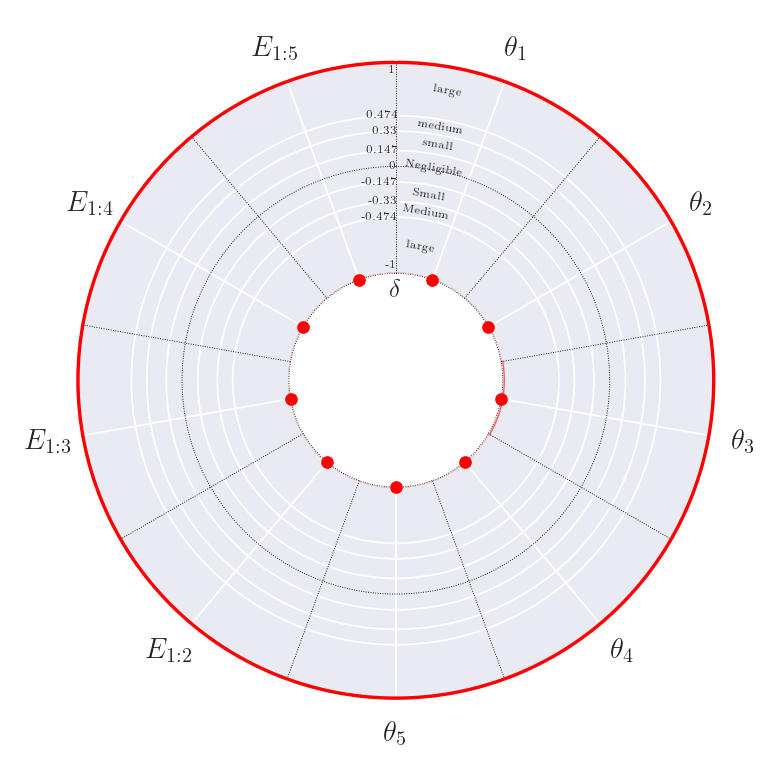}\label{fig:lrcliffisolet_MEC}}%
		\hfill
		\subfloat[PD]{\includegraphics[width=0.24\textwidth]{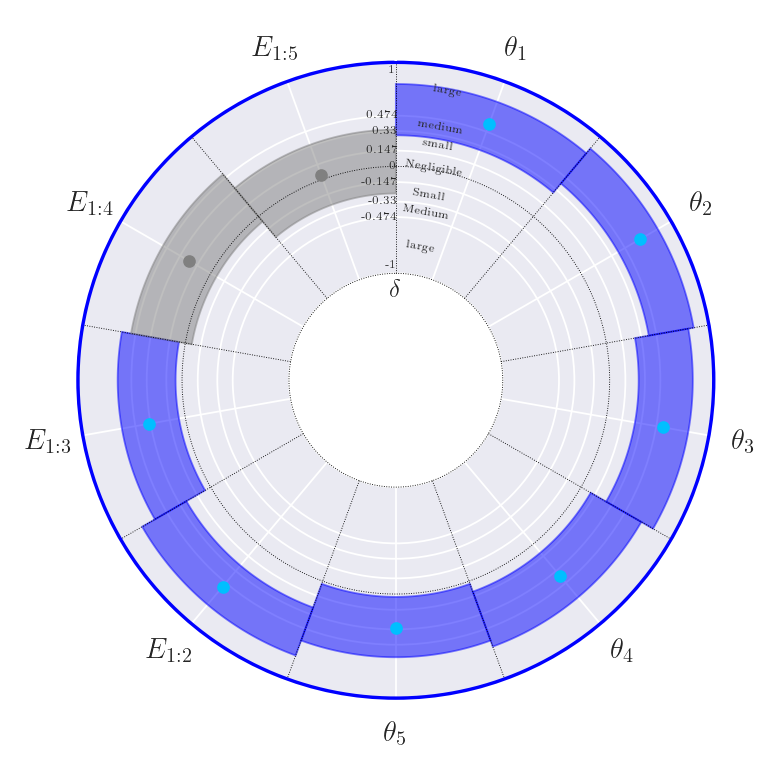}\label{fig:lrcliffpd_MEC}}
		%\end{comment}
		\caption[The Cliff's $\delta$ effect size measure and its 95\% confidence intervals for the MEC values obtained from 30 Logistic Regression runs.]{Effect size analysis of test data MEC across 30 Logistic Regression runs using Cliff's $\delta$. Each point represents the actual value obtained, with segments denoting 95\% confidence intervals based on 10,000 bootstrap resamplings. The outer ring color visualizes the statistical significance: grey illustrates no significant difference (adjusted Friedman's P-value$>0.05$), while color indicates significant differences; blue indicates at least one view and/or ensemble outperforms the benchmark (adjusted Conover's p-value$ < 0.05$, Cliff's $\delta > 0$), and red signifies all views and ensembles underperform relative to the benchmark (adjusted Conover's p-value$ < 0.05$, Cliff's $\delta < 0$). Segment colors show performance difference against the benchmark: grey for no significant difference (adjusted Conover's p-value$  > 0.05$), blue for better performance (Cliff's $\delta > 0$), and red for worse performance (Cliff's $\delta < 0$).}
		
		\label{fig:lrcliff_MEC}
	\end{figure*}
	%%%%%%%%%%%%%%%%%%%%%%%%%%%%%%%%%%%%%%%%%%%%%%%%%%%%%%%%%%%%%%%%%%%%%%%%%%%%%%
	
	%%%%%%%%%%%%%%%%%%%%%%%%%%%%%%%%%%%%%%%%%%%%%%%%%%%%%%%%%%%%%%%%%%%%%%%%%%%%%%
	\begin{table*}
		\centering
		\caption[The results of Friedman and Conover tests and Cliff's $\delta$ analysis for the obtained MEC values from 30 Logistic Regression runs.]{Statistical comparison of MEC results for testing data obtained from Logistic Regression runs. W, T, and L denote win, tie, and loss based on adjusted Friedman and Conover's p-values. Effect sizes are calculated using Cliff's Delta method and are categorized as negligible, small, medium, or large.}
		\label{tab:lrmec}
		\resizebox{\linewidth}{!}{%
			\begin{tabular}{c|ccccccccc}
				\hline
				\multicolumn{10}{c}{Logistic Regression's MEC}\\
				\hline
				Dataset & $\theta_1$ & $\theta_2$ & $\theta_3$ & $\theta_4$ & $\theta_5$ & $E_{1:2}$ & $E_{1:3}$ & $E_{1:4}$ & $E_{1:5}$ \\
				\hline
				APSF  & L (large) & L (large) & L (large) & L (large) & L (large) & L (large) & L (large) & L (large) & L (large) \\
				ARWPM  & W (large) & W (large) & W (large) & W (large) & T (medium) & W (large) & W (large) & T (large) & T (negligible) \\
				GECR  & L (large) & L (large) & L (medium) & L (small) & L (large) & L (large) & L (large) & L (large) & L (large) \\
				GFE  & L (large) & L (large) & L (large) & L (large) & L (large) & L (large) & L (large) & L (large) & L (large) \\
				GSAD  & W (large) & W (large) & T (medium) & T (negligible) & T (large) & W (large) & W (large) & W (large) & W (large) \\
				HAPT  & L (large) & L (large) & L (large) & L (large) & L (large) & L (large) & L (large) & L (large) & L (large) \\
				ISOLET  & L (large) & L (large) & L (large) & L (large) & L (large) & L (large) & L (large) & L (large) & L (large) \\
				PD  & W (large) & W (large) & W (large) & W (medium) & W (small) & W (large) & W (medium) & T (small) & T (negligible) \\
				\hline
				W - T - L  & 3 - 0 - 5 & 3 - 0 - 5 & 2 - 1 - 5 & 2 - 1 - 5 & 1 - 2 - 5 & 3 - 0 - 5 & 3 - 0 - 5 & 1 - 2 - 5 & 1 - 2 - 5 \\
				\hline
			\end{tabular}
		}
	\end{table*}
	\FloatBarrier
	%%%%%%%%%%%%%%%%%%%%%%%%%%%%%%%%%%%%%%%%%%%%%%%%%%%%%%%%%%%%%%%%%%%%%%%%%%%%%%
	
	%\subsection{The MEW results for Logistic Regression}
	%\label{ssub:lrmew}
	% Logit: MEW
	%%%%%%%%%%%%%%%%%%%%%%%%%%%%%%%%%%%%%%%%%%%%%%%%%%%%%%%%%%%%%%%%%%%%%%%%%%%%%%
	\begin{figure*}[t] 
		\centering
		%\begin{comment}
		% First row
		\subfloat[APSF]{\includegraphics[width=0.24\textwidth]{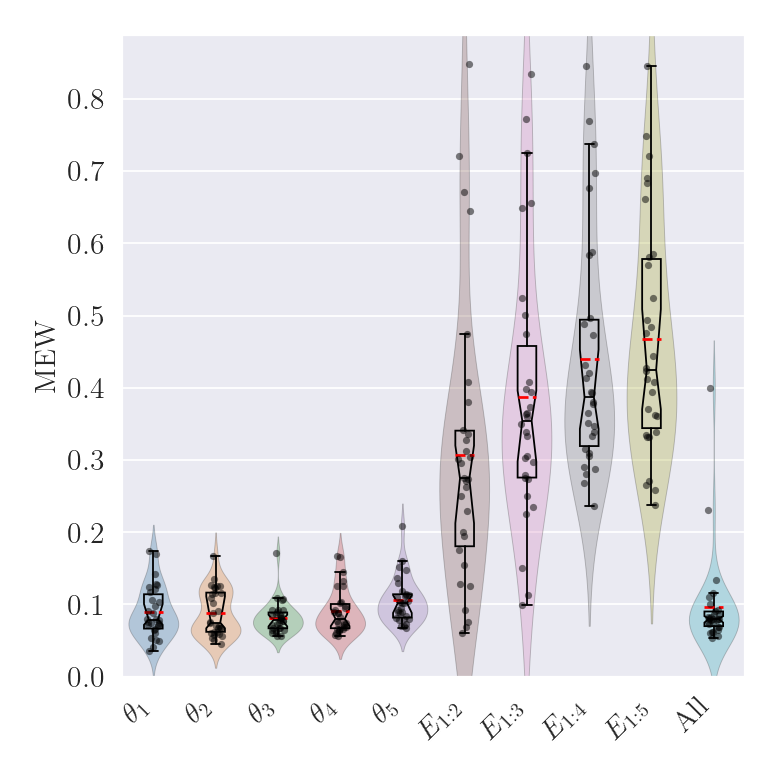}\label{fig:lrapsf_MEW}}%
		\hfill
		\subfloat[ARWPM]{\includegraphics[width=0.24\textwidth]{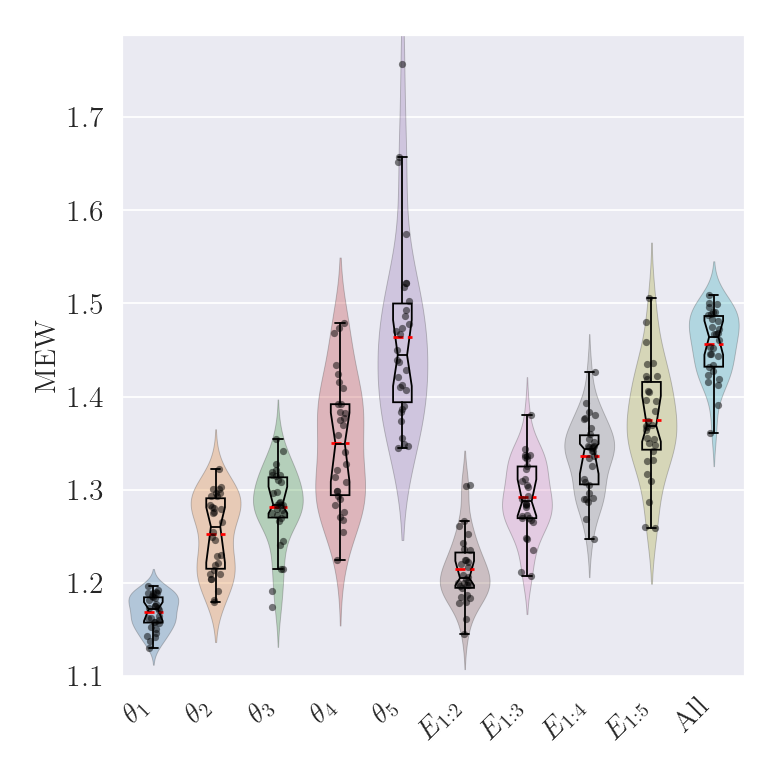}\label{fig:lrarwpm_MEW}}%
		\hfill
		\subfloat[GECR]{\includegraphics[width=0.24\textwidth]{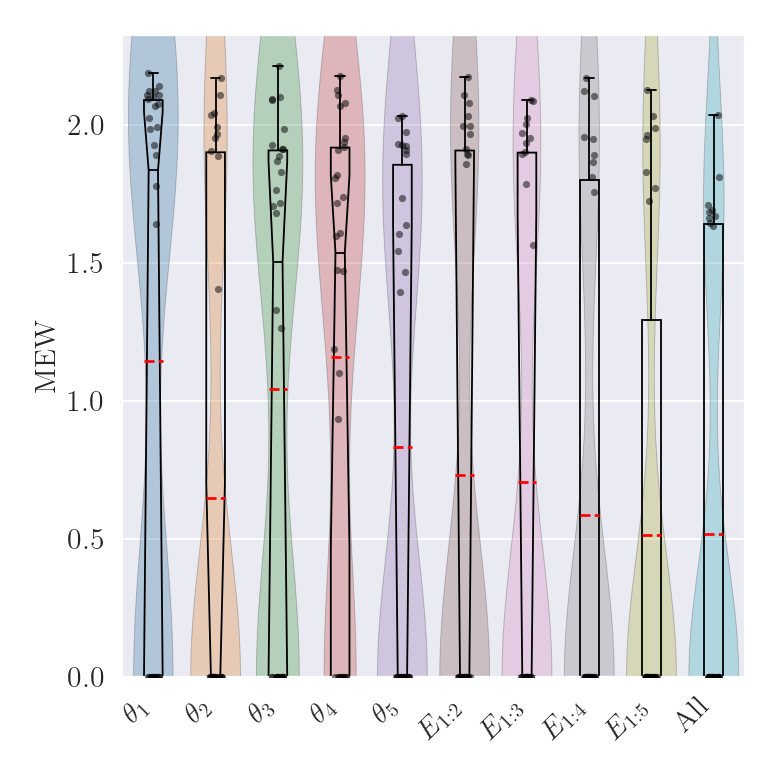}\label{fig:lrgecr_MEW}}%
		\hfill
		\subfloat[GFE]{\includegraphics[width=0.24\textwidth]{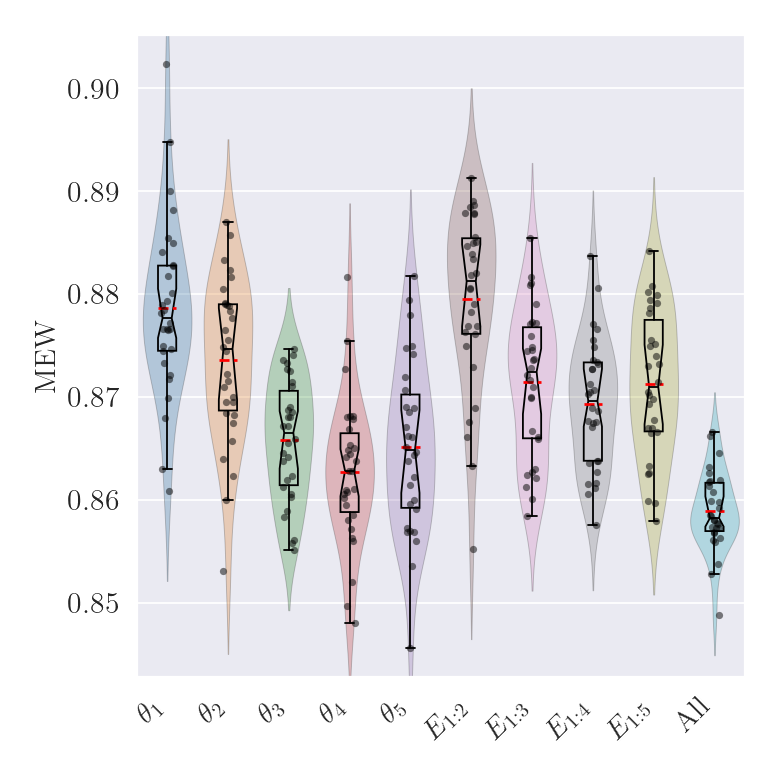}\label{fig:lrgfe_MEW}}
		
		% Second row
		\subfloat[GSAD]{\includegraphics[width=0.24\textwidth]{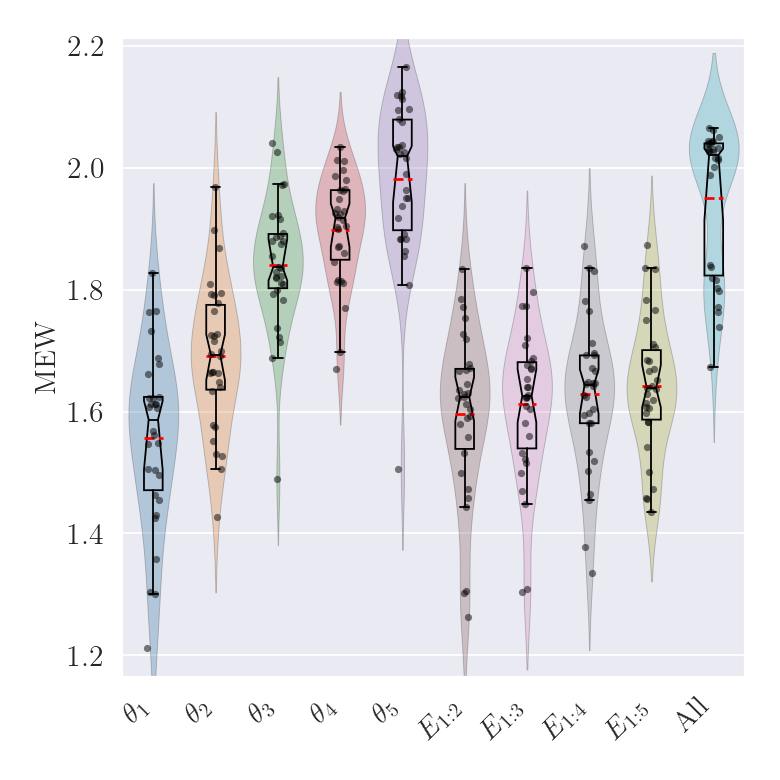}\label{fig:fpgsad_MEW}}%
		\hfill
		\subfloat[HAPT]{\includegraphics[width=0.24\textwidth]{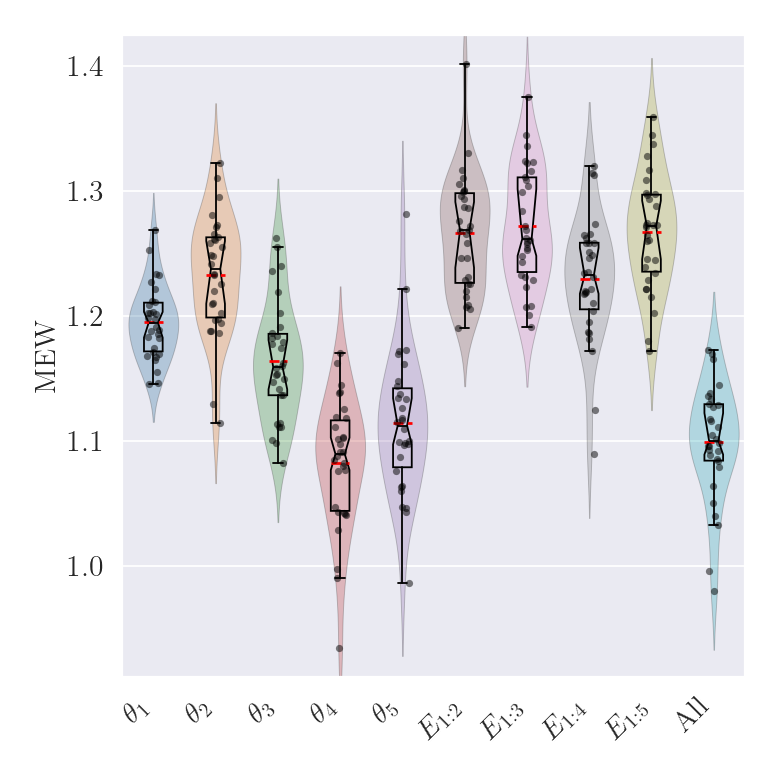}\label{fig:lrhapt_MEW}}%
		\hfill
		\subfloat[ISOLET]{\includegraphics[width=0.24\textwidth]{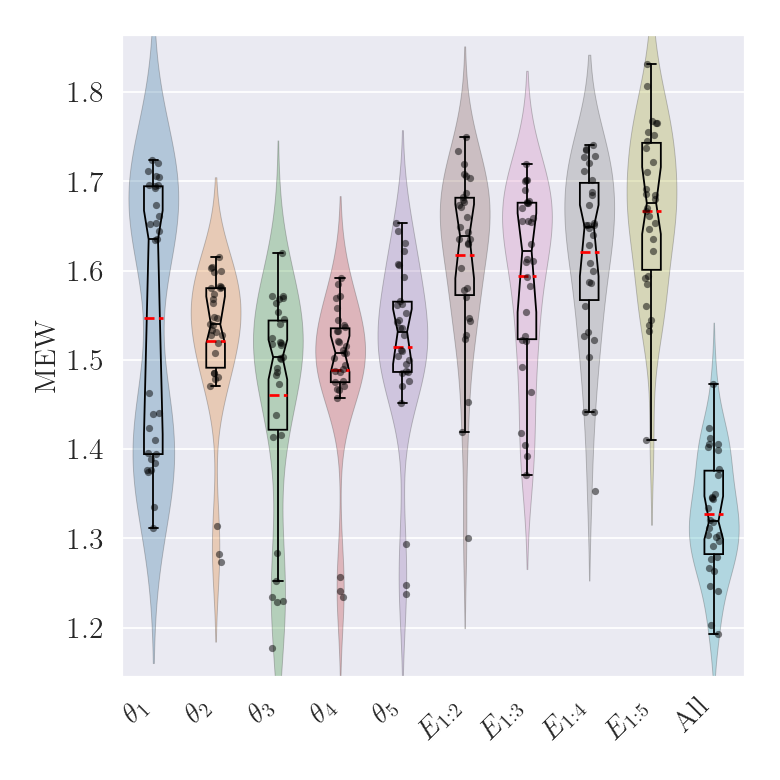}\label{fig:lrisolet_MEW}}%
		\hfill
		\subfloat[PD]{\includegraphics[width=0.24\textwidth]{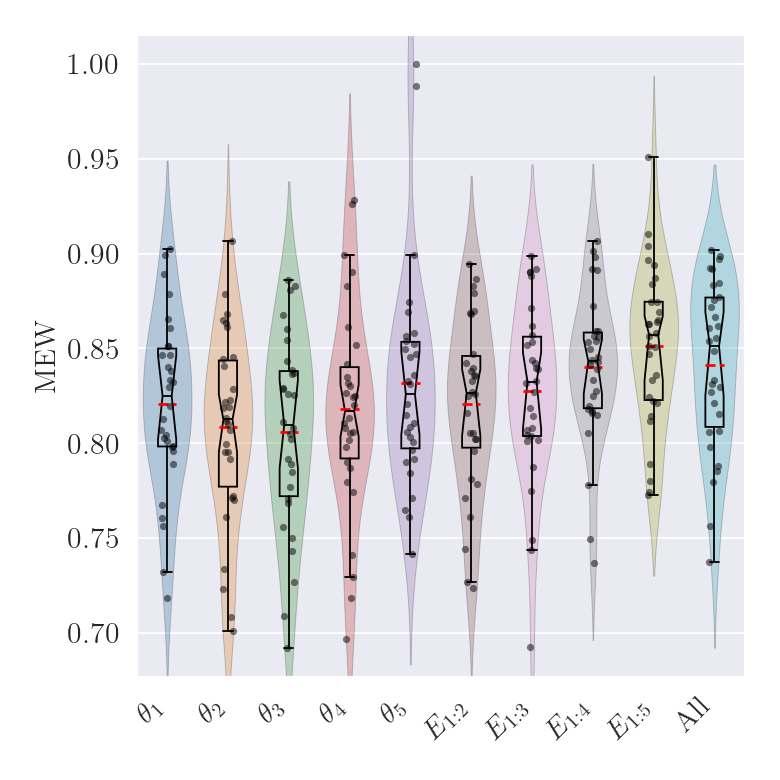}\label{fig:lrpd_MEW}}
		%\end{comment}
		\caption[The distribution of the obtained MEW values for 30 Logistic Regression runs.]{The raincloud plot of MEW results obtained from 30 Logistic Regression runs.}
		
		\label{fig:lr_MEW}
	\end{figure*}
	%%%%%%%%%%%%%%%%%%%%%%%%%%%%%%%%%%%%%%%%%%%%%%%%%%%%%%%%%%%%%%%%%%%%%%%%%%%%%%
	
	%%%%%%%%%%%%%%%%%%%%%%%%%%%%%%%%%%%%%%%%%%%%%%%%%%%%%%%%%%%%%%%%%%%%%%%%%%%%%%
	\begin{figure*}[t] 
		\centering
		%\begin{comment}
		% First row
		\subfloat[APSF]{\includegraphics[width=0.24\textwidth]{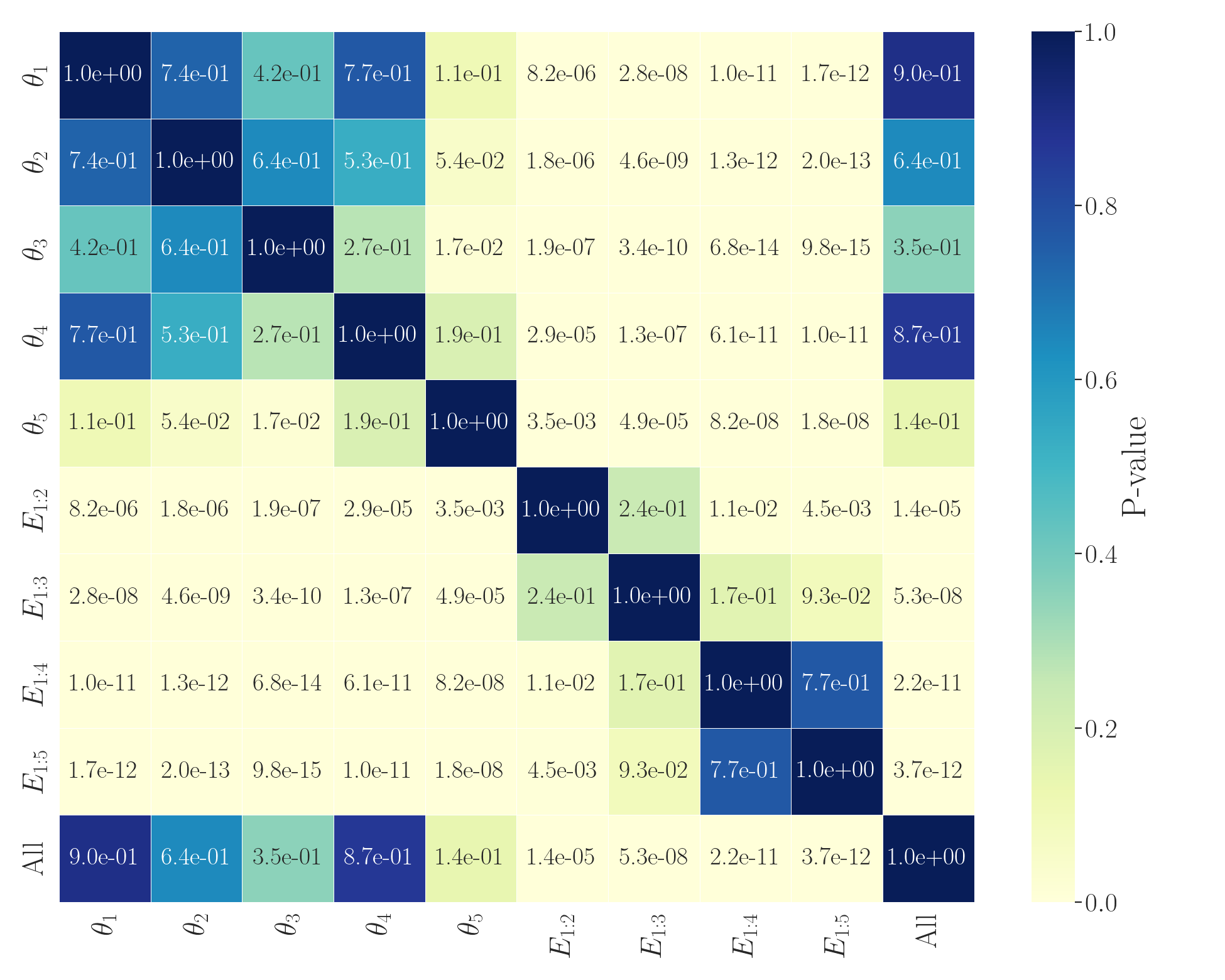}\label{fig:lrnemapsf_MEW}}%
		\hfill
		\subfloat[ARWPM]{\includegraphics[width=0.24\textwidth]{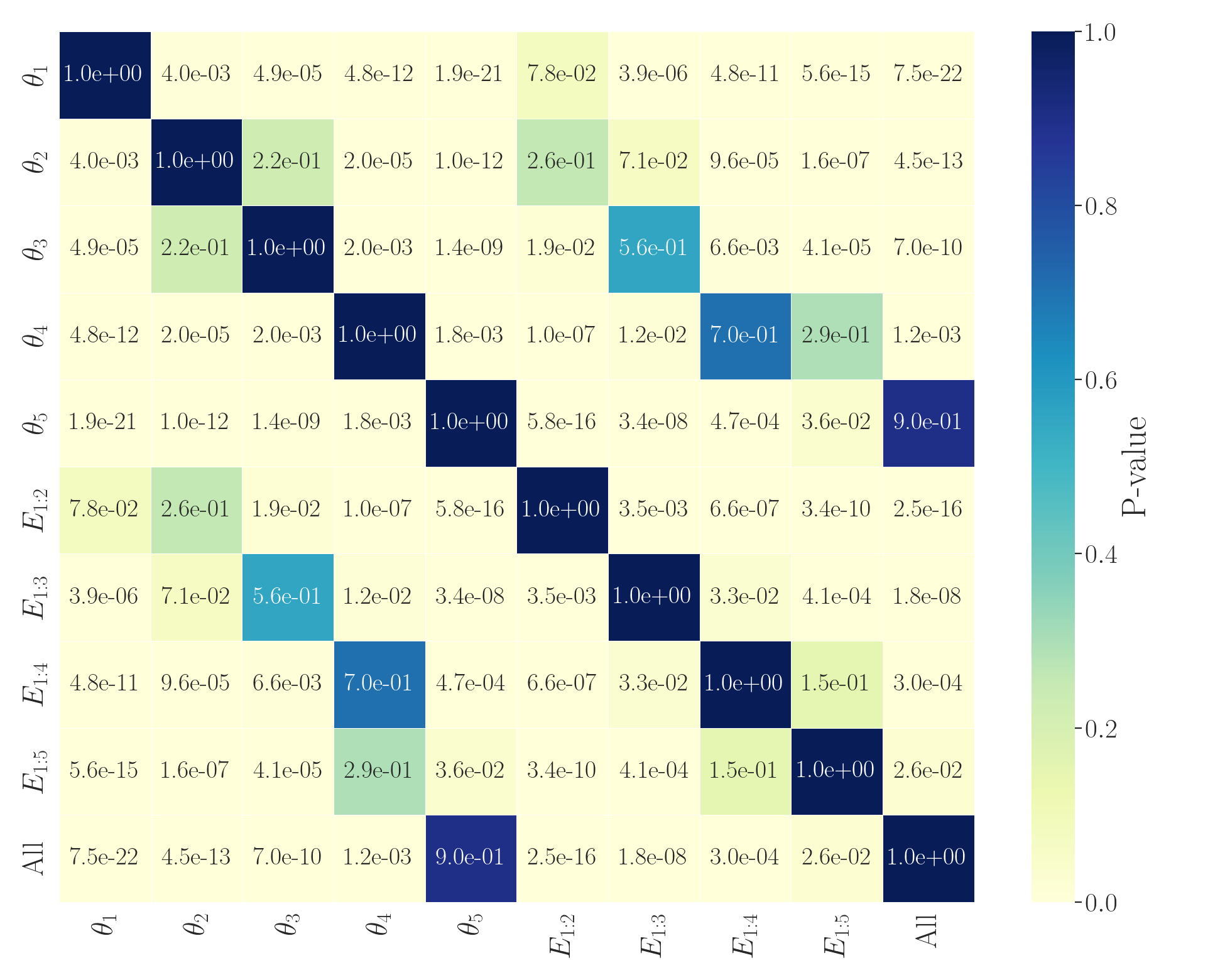}\label{fig:lrnemarwpm_MEW}}%
		\hfill
		\subfloat[GECR]{\includegraphics[width=0.24\textwidth]{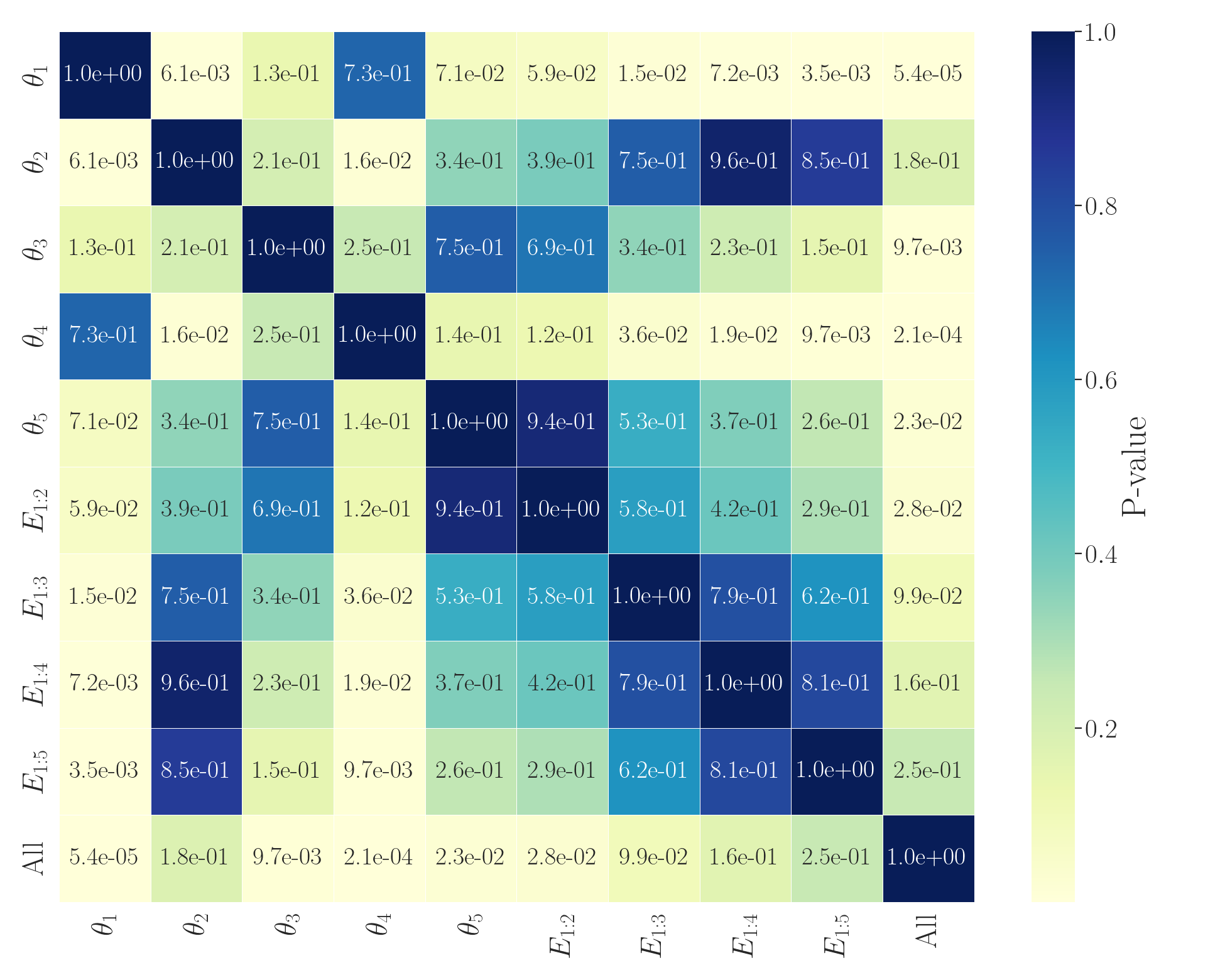}\label{fig:lrnemgecr_MEW}}%
		\hfill
		\subfloat[GFE]{\includegraphics[width=0.24\textwidth]{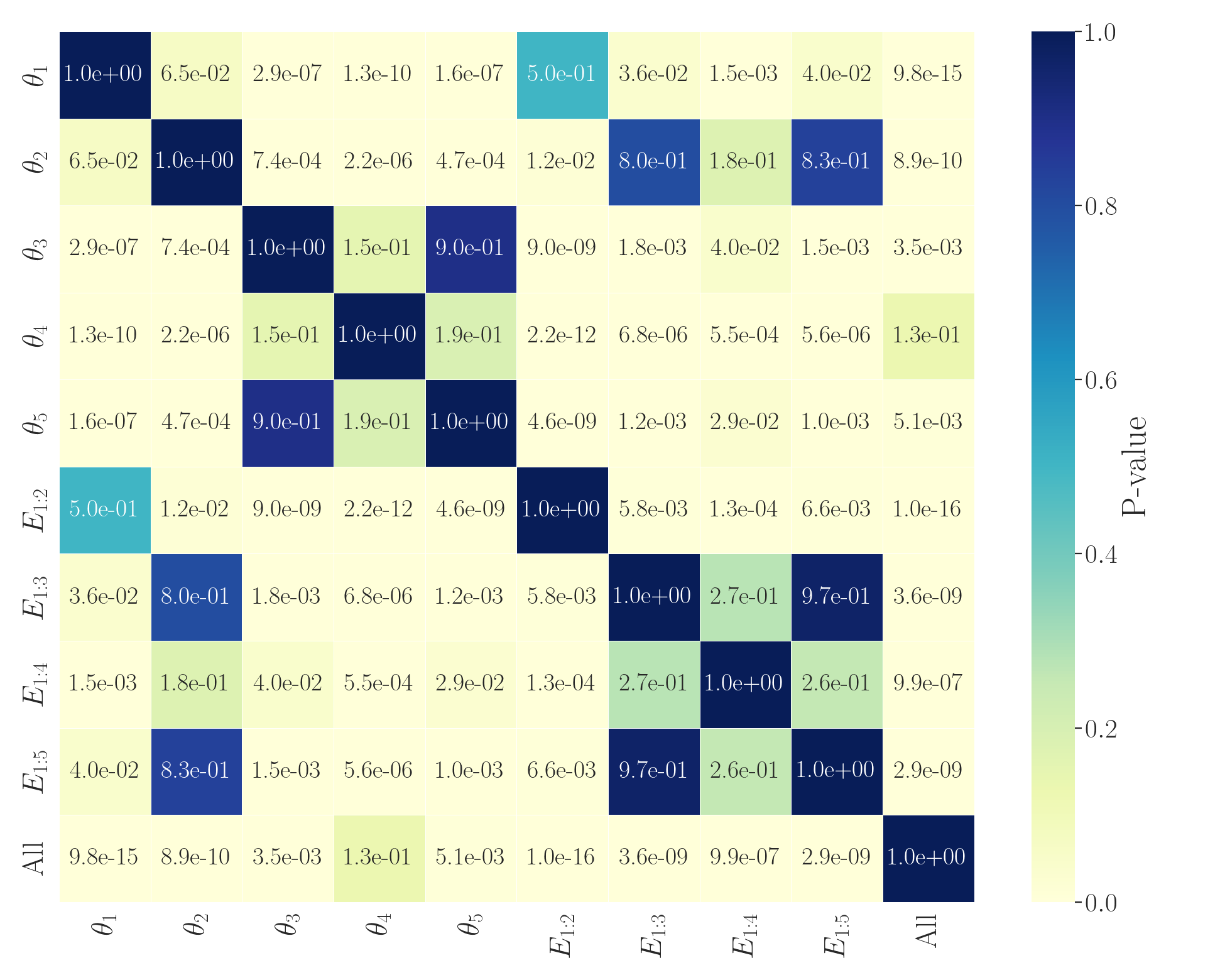}\label{fig:lrnemgfe_MEW}}
		
		% Second row
		\subfloat[GSAD]{\includegraphics[width=0.24\textwidth]{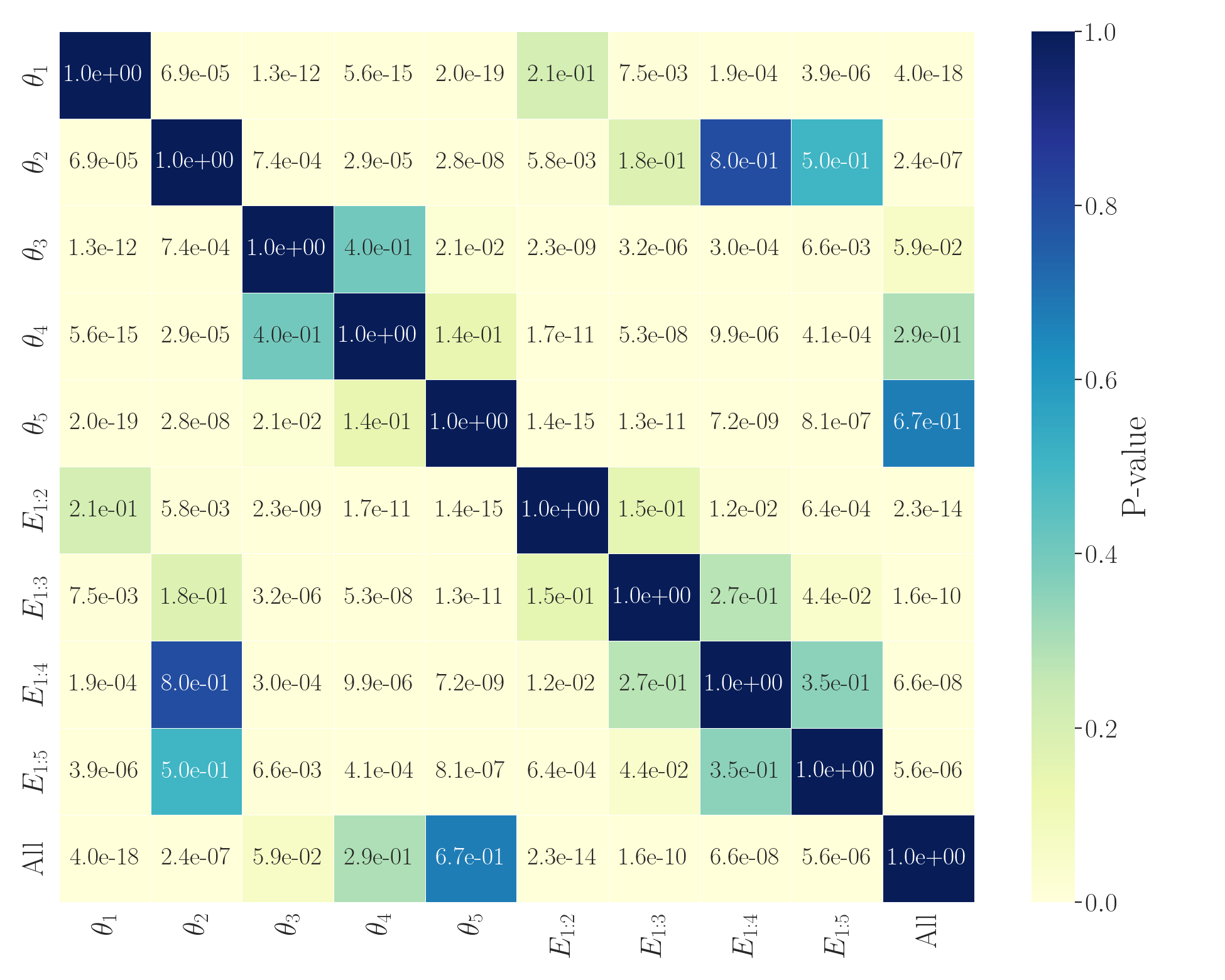}\label{fig:lrnemgsad_MEW}}%
		\hfill
		\subfloat[HAPT]{\includegraphics[width=0.24\textwidth]{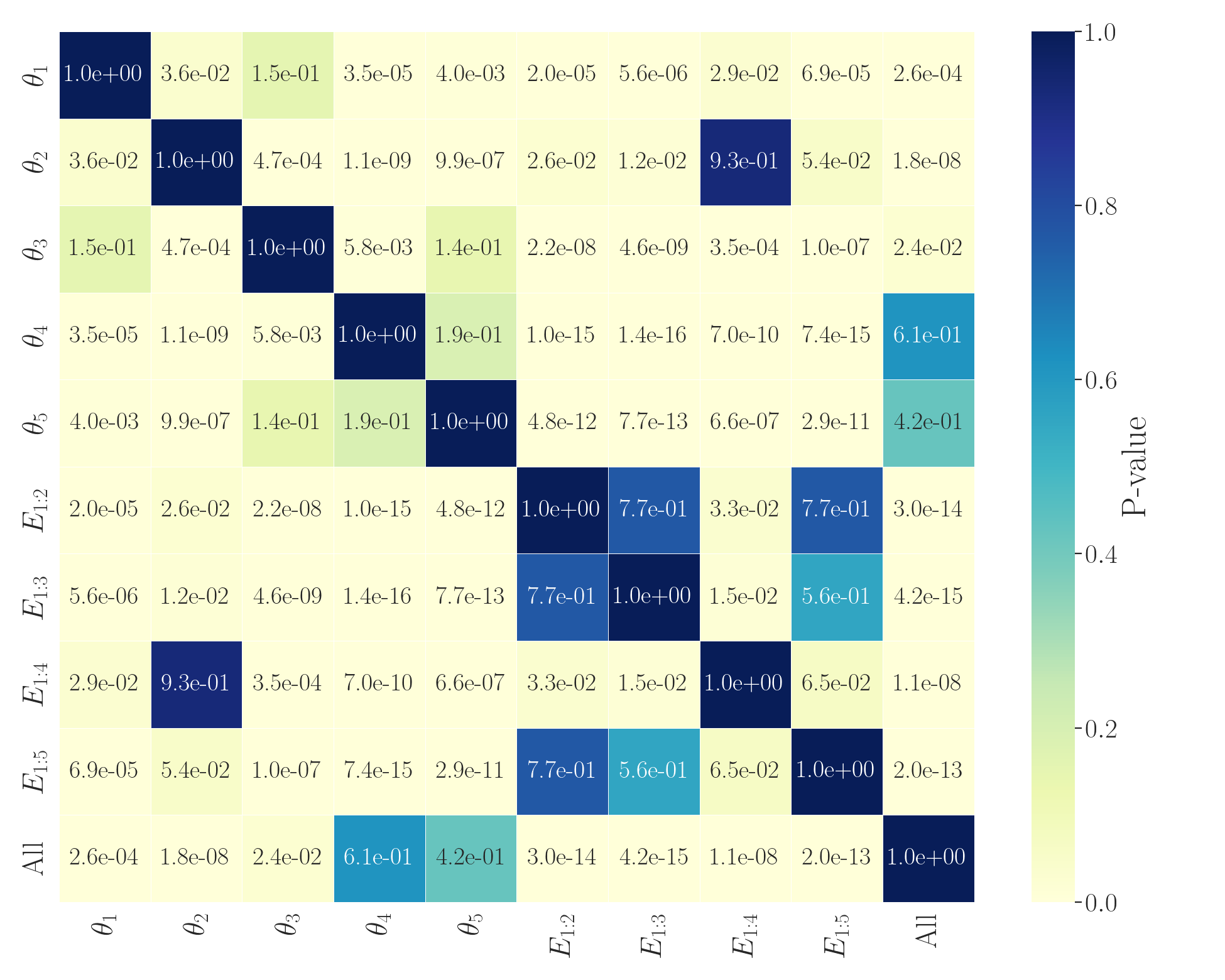}\label{fig:lrnemhapt_MEW}}%
		\hfill
		\subfloat[ISOLET]{\includegraphics[width=0.24\textwidth]{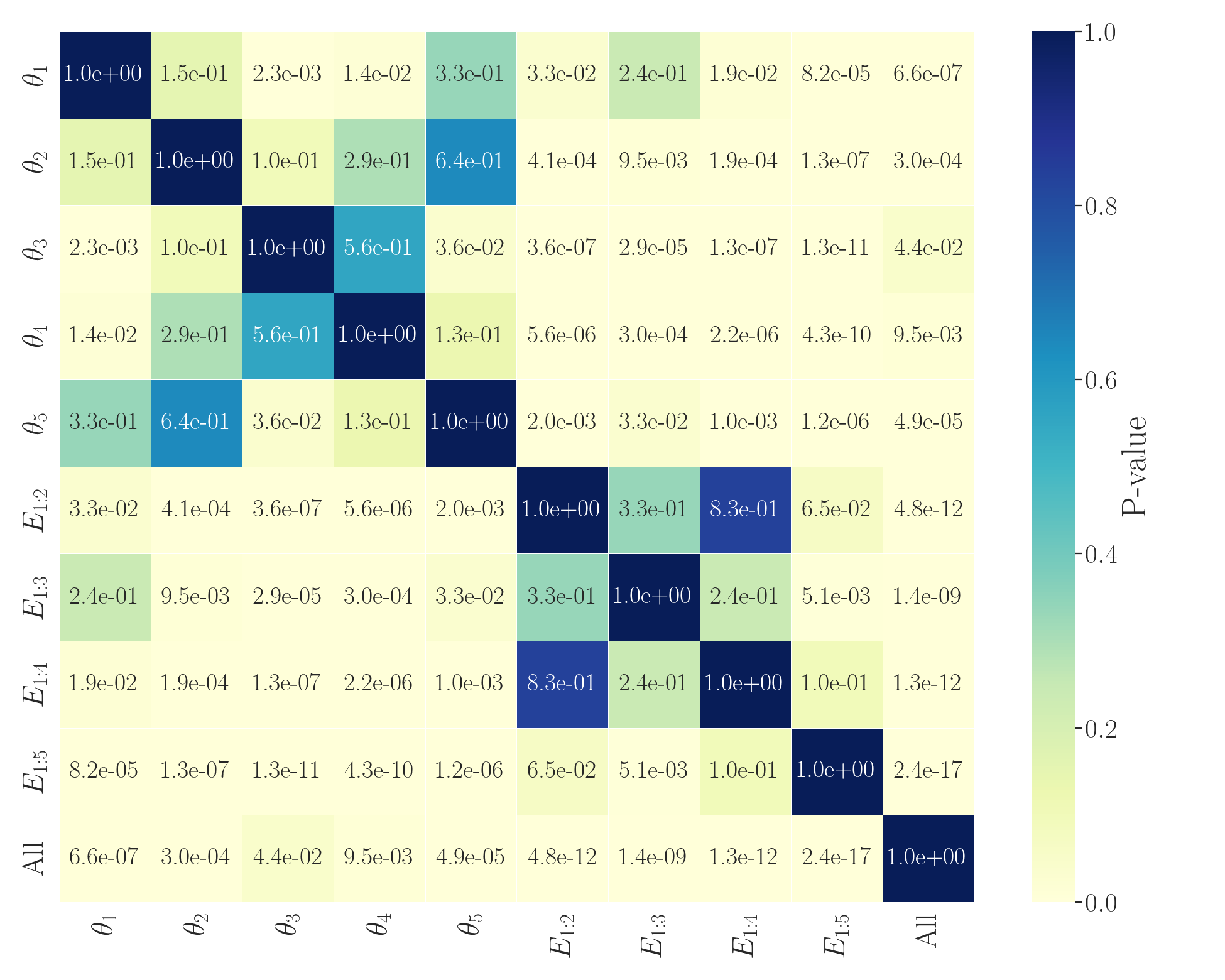}\label{fig:lrnemisolet_MEW}}%
		\hfill
		\subfloat[PD]{\includegraphics[width=0.24\textwidth]{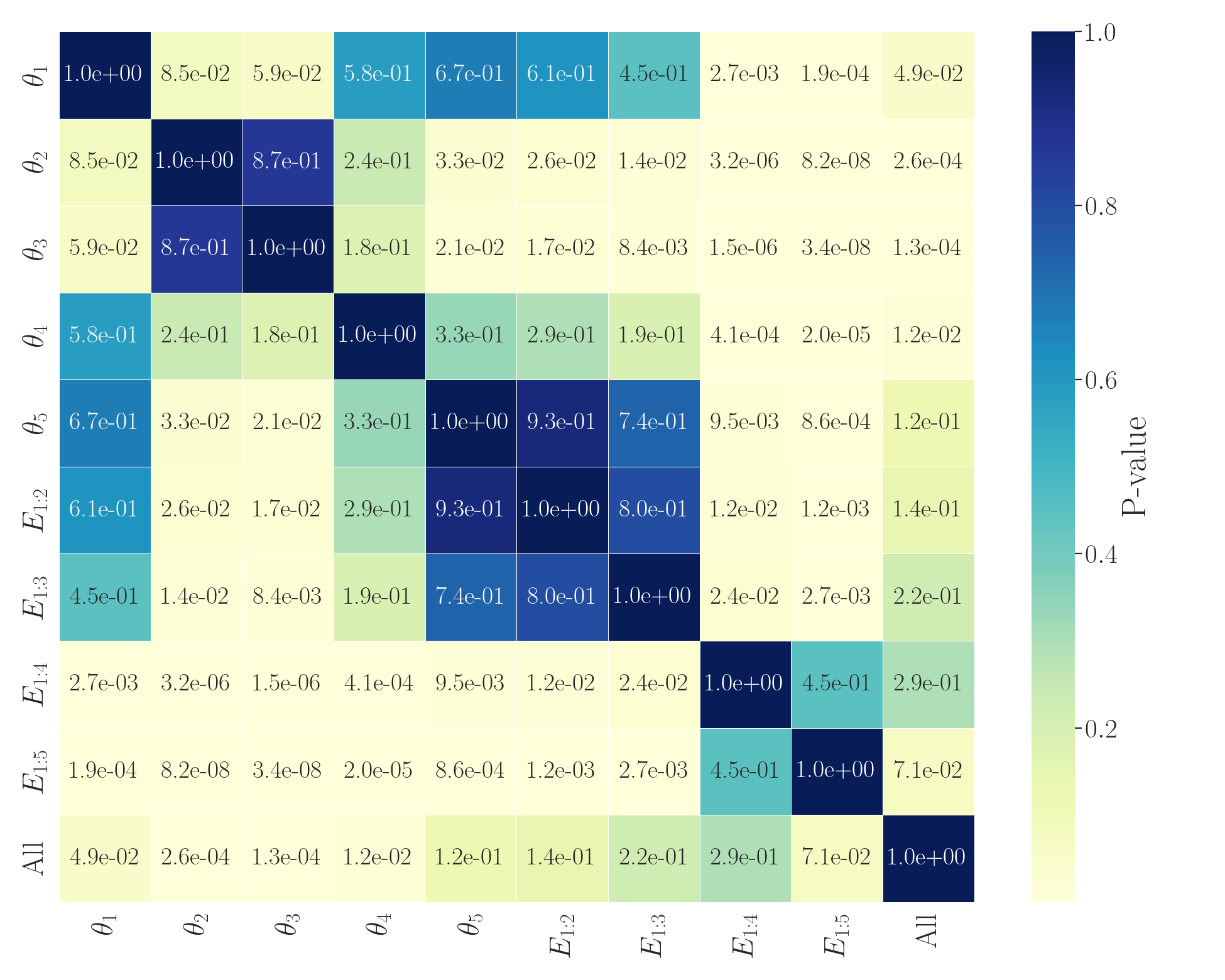}\label{fig:lrnempd_MEW}}
		%\end{comment}
		\caption[The adjusted Conover's P-values for the obtained MEW values in 30 Logistic Regression runs.]{The results of the Conover post-hoc test on testing data’s MEW obtained from 30 Logistic Regression runs.}
		
		\label{fig:lrnem_MEW}
	\end{figure*}
	\FloatBarrier
	%%%%%%%%%%%%%%%%%%%%%%%%%%%%%%%%%%%%%%%%%%%%%%%%%%%%%%%%%%%%%%%%%%%%%%%%%%%%%%
	
	%%%%%%%%%%%%%%%%%%%%%%%%%%%%%%%%%%%%%%%%%%%%%%%%%%%%%%%%%%%%%%%%%%%%%%%%%%%%%%
	\begin{figure*}[htbp] 
		\centering
		%\begin{comment}
		% First row
		\subfloat[APSF]{\includegraphics[width=0.24\textwidth]{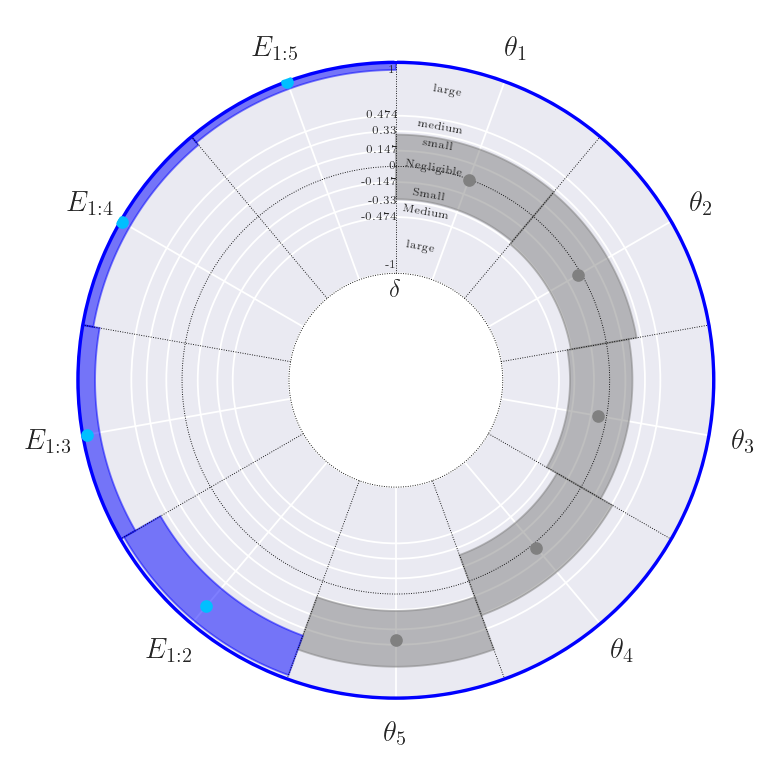}\label{fig:lrcliffapsf_MEW}}%
		\hfill
		\subfloat[ARWPM]{\includegraphics[width=0.24\textwidth]{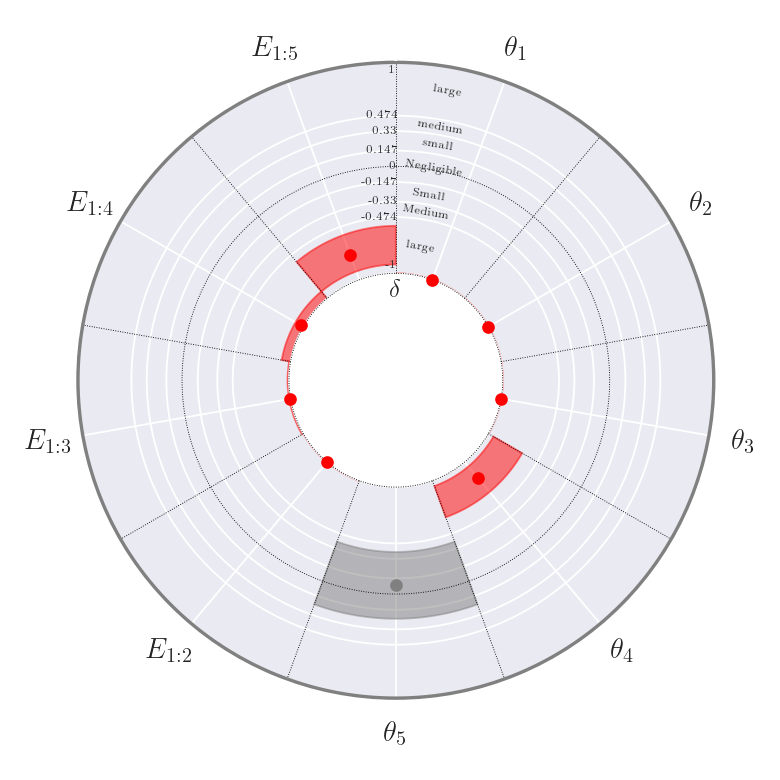}\label{fig:lrcliffarwpm_MEW}}%
		\hfill
		\subfloat[GECR]{\includegraphics[width=0.24\textwidth]{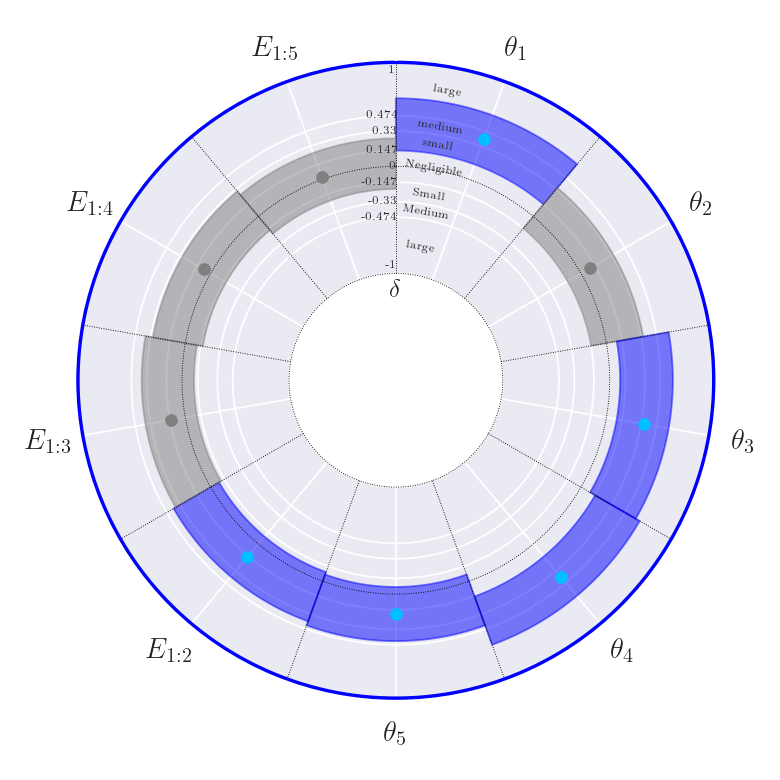}\label{fig:lrcliffgecr_MEW}}%
		\hfill
		\subfloat[GFE]{\includegraphics[width=0.24\textwidth]{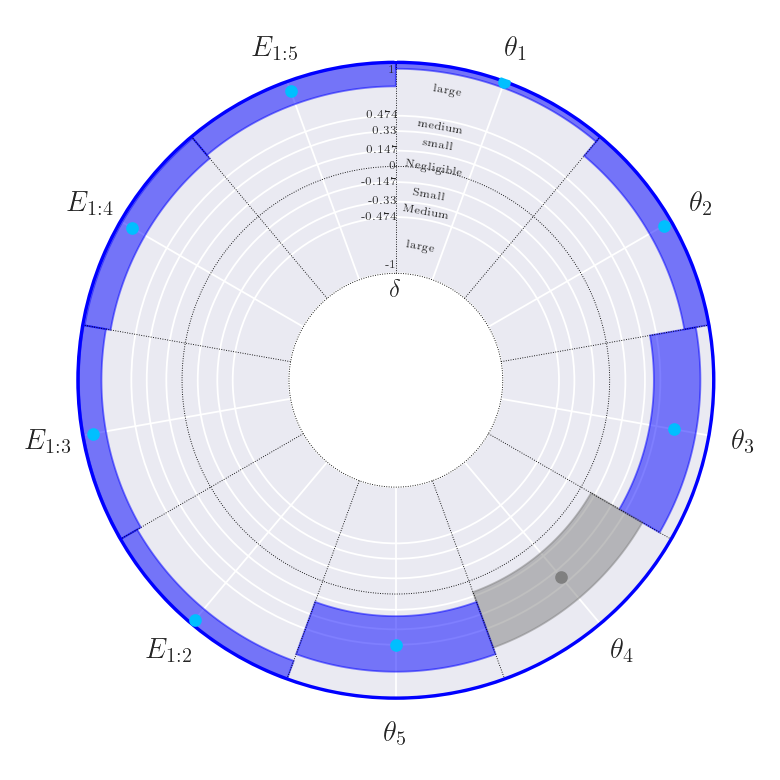}\label{fig:lrcliffgfe_MEW}}
		
		% Second row
		\subfloat[GSAD]{\includegraphics[width=0.24\textwidth]{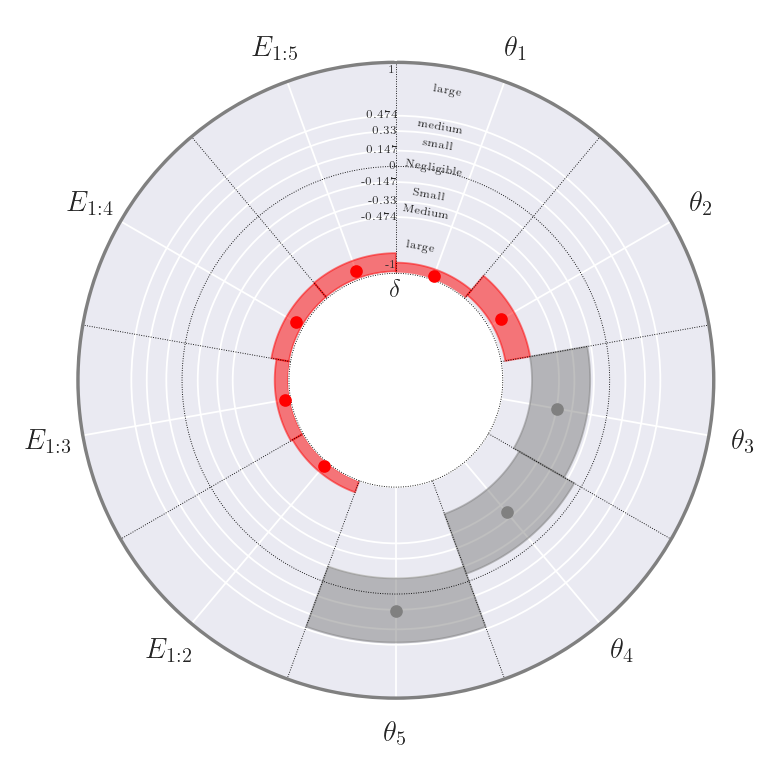}\label{fig:lrcliffgsad_MEW}}%
		\hfill
		\subfloat[HAPT]{\includegraphics[width=0.24\textwidth]{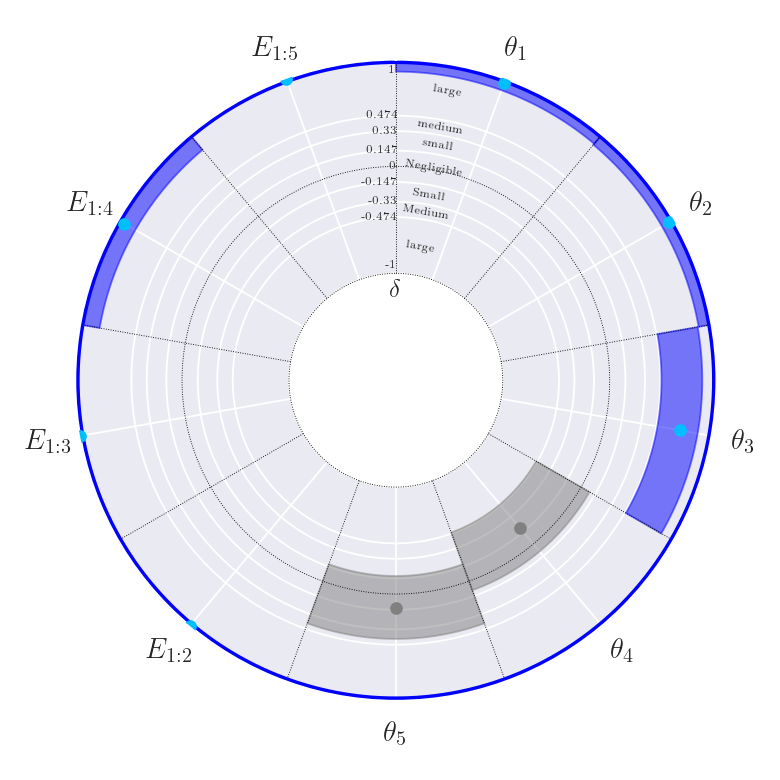}\label{fig:lrcliffhapt_MEW}}%
		\hfill
		\subfloat[ISOLET]{\includegraphics[width=0.24\textwidth]{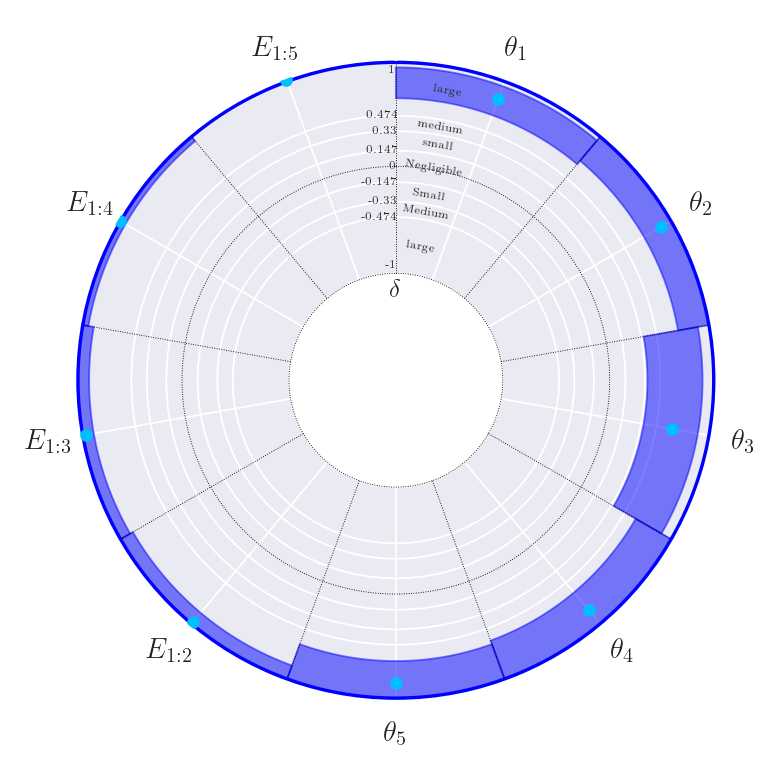}\label{fig:lrcliffisolet_MEW}}%
		\hfill
		\subfloat[PD]{\includegraphics[width=0.24\textwidth]{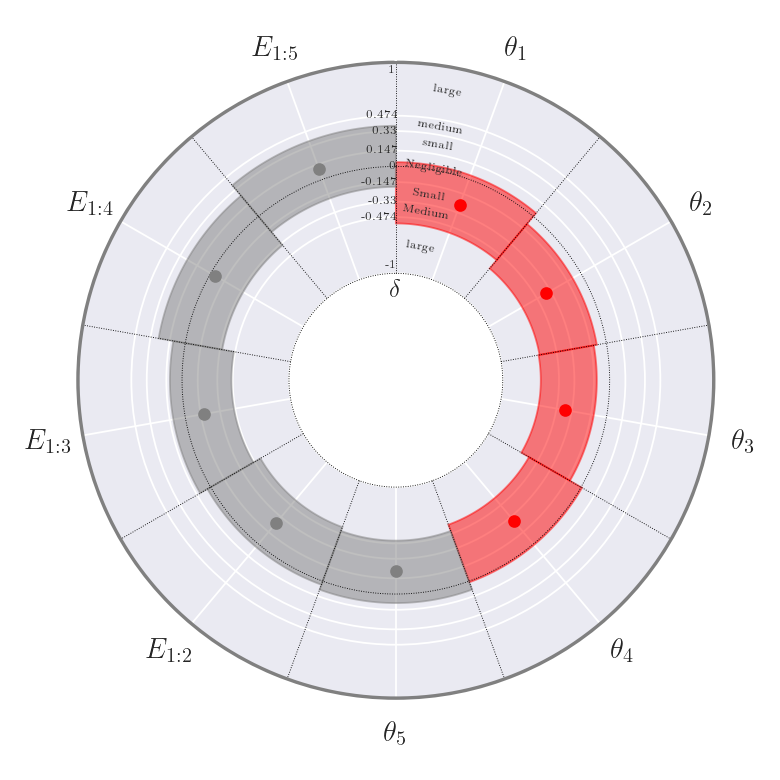}\label{fig:lrcliffpd_MEW}}
		%\end{comment}
		\caption[The Cliff's $\delta$ effect size measure and its 95\% confidence intervals for the MEW values obtained from 30 Logistic Regression runs.]{Effect size analysis of test data MEW across 30 Logistic Regression runs using Cliff's $\delta$. Each point represents the actual value obtained, with segments denoting 95\% confidence intervals based on 10,000 bootstrap resamplings. The outer ring color visualizes the statistical significance: grey illustrates no significant difference (adjusted Friedman's P-value$>0.05$), while color indicates significant differences; blue indicates at least one view and/or ensemble outperforms the benchmark (adjusted Conover's p-value$ < 0.05$, Cliff's $\delta > 0$), and red signifies all views and ensembles underperform relative to the benchmark (adjusted Conover's p-value$ < 0.05$, Cliff's $\delta < 0$). Segment colors show performance difference against the benchmark: grey for no significant difference (adjusted Conover's p-value$  > 0.05$), blue for better performance (Cliff's $\delta > 0$), and red for worse performance (Cliff's $\delta < 0$).}
		
		\label{fig:lrcliff_MEW}
	\end{figure*}
	%%%%%%%%%%%%%%%%%%%%%%%%%%%%%%%%%%%%%%%%%%%%%%%%%%%%%%%%%%%%%%%%%%%%%%%%%%%%%%
	
	%%%%%%%%%%%%%%%%%%%%%%%%%%%%%%%%%%%%%%%%%%%%%%%%%%%%%%%%%%%%%%%%%%%%%%%%%%%%%%
	\begin{table*}[htbp]
		\centering
		\caption[The results of Friedman and Conover tests and Cliff's $\delta$ analysis for the obtained MEW values from 30 Logistic Regression runs.]{Statistical comparison of MEW results for testing data obtained from Logistic Regression runs. W, T, and L denote win, tie, and loss based on adjusted Friedman and Conover's p-values. Effect sizes are calculated using Cliff's Delta method and are categorized as negligible, small, medium, or large.}
		\label{tab:lrmew}
		\resizebox{\linewidth}{!}{%
			\begin{tabular}{c|ccccccccc}
				\hline
				\multicolumn{10}{c}{Logistic Regression's MEW}\\
				\hline
				Dataset & $\theta_1$ & $\theta_2$ & $\theta_3$ & $\theta_4$ & $\theta_5$ & $E_{1:2}$ & $E_{1:3}$ & $E_{1:4}$ & $E_{1:5}$ \\
				\hline
				APSF  & T (negligible) & T (negligible) & T (negligible) & T (negligible) & T (medium) & W (large) & W (large) & W (large) & W (large) \\
				ARWPM  & L (large) & L (large) & L (large) & L (large) & T (negligible) & L (large) & L (large) & L (large) & L (large) \\
				GECR  & W (medium) & T (negligible) & W (medium) & W (medium) & W (small) & W (small) & T (negligible) & T (negligible) & T (negligible) \\
				GFE  & W (large) & W (large) & W (large) & T (medium) & W (large) & W (large) & W (large) & W (large) & W (large) \\
				GSAD  & L (large) & L (large) & T (medium) & T (medium) & T (small) & L (large) & L (large) & L (large) & L (large) \\
				HAPT  & W (large) & W (large) & W (large) & T (small) & T (negligible) & W (large) & W (large) & W (large) & W (large) \\
				ISOLET  & W (large) & W (large) & W (large) & W (large) & W (large) & W (large) & W (large) & W (large) & W (large) \\
				PD  & L (small) & L (medium) & L (medium) & L (small) & T (small) & T (small) & T (small) & T (negligible) & T (negligible) \\
				\hline
				W - T - L  & 4 - 1 - 3 & 3 - 2 - 3 & 4 - 2 - 2 & 2 - 4 - 2 & 3 - 5 - 0 & 5 - 1 - 2 & 4 - 2 - 2 & 4 - 2 - 2 & 4 - 2 - 2 \\
				\hline
			\end{tabular}
		}
	\end{table*}
	\FloatBarrier
	%%%%%%%%%%%%%%%%%%%%%%%%%%%%%%%%%%%%%%%%%%%%%%%%%%%%%%%%%%%%%%%%%%%%%%%%%%%%%%

	%\subsection{The running time results for Logistic Regression}
	%\label{ssub:lrtime}
	% Logit: Time
	%%%%%%%%%%%%%%%%%%%%%%%%%%%%%%%%%%%%%%%%%%%%%%%%%%%%%%%%%%%%%%%%%%%%%%%%%%%%%%
	\begin{figure*}[t] 
		\centering
		%\begin{comment}
		% First row
		\subfloat[APSF]{\includegraphics[width=0.24\textwidth]{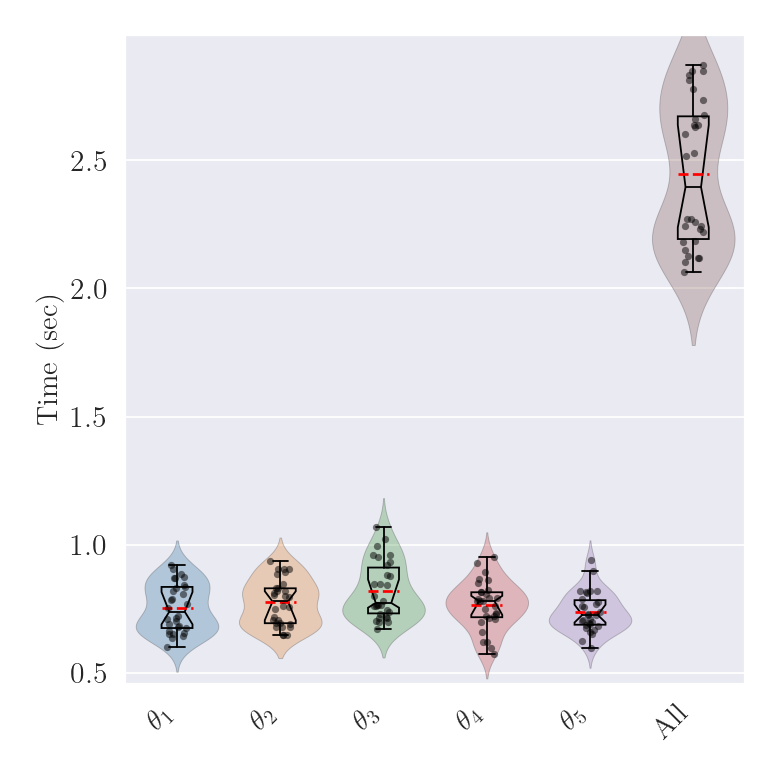}\label{fig:lrapsf_Time}}%
		\hfill
		\subfloat[ARWPM]{\includegraphics[width=0.24\textwidth]{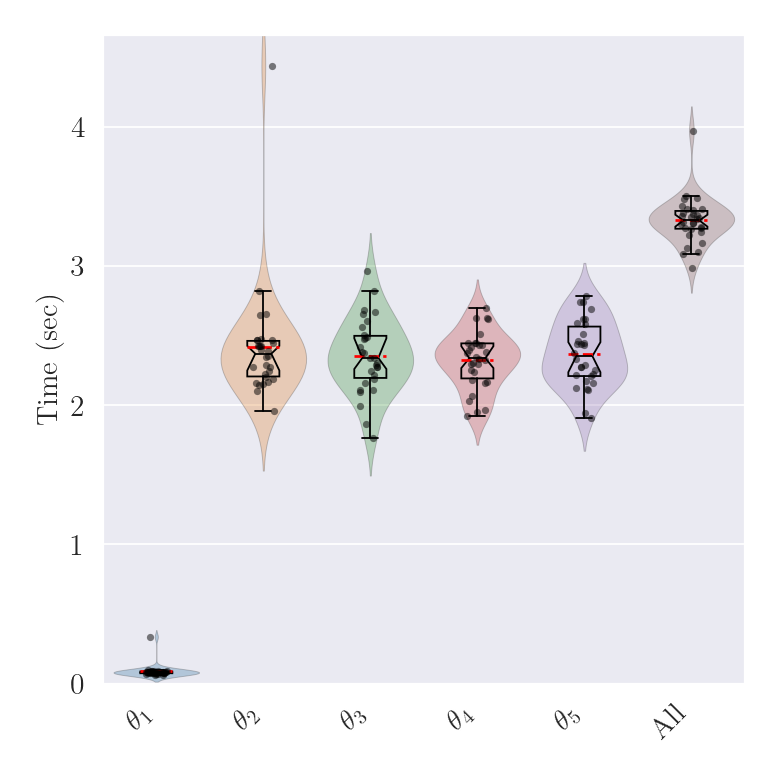}\label{fig:lrarwpm_Time}}%
		\hfill
		\subfloat[GECR]{\includegraphics[width=0.24\textwidth]{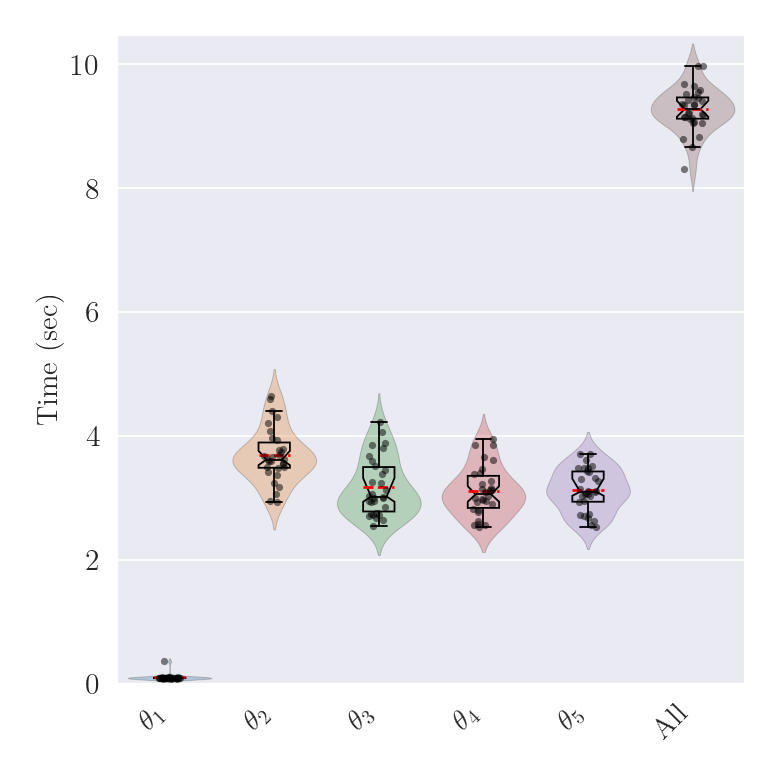}\label{fig:lrgecr_Time}}%
		\hfill
		\subfloat[GFE]{\includegraphics[width=0.24\textwidth]{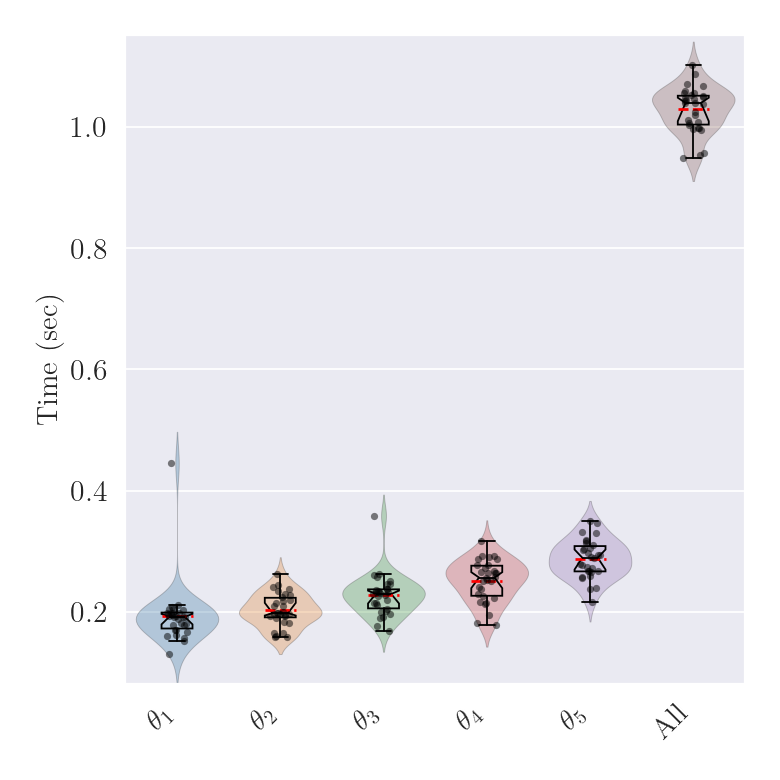}\label{fig:lrgfe_Time}}
		
		% Second row
		\subfloat[GSAD]{\includegraphics[width=0.24\textwidth]{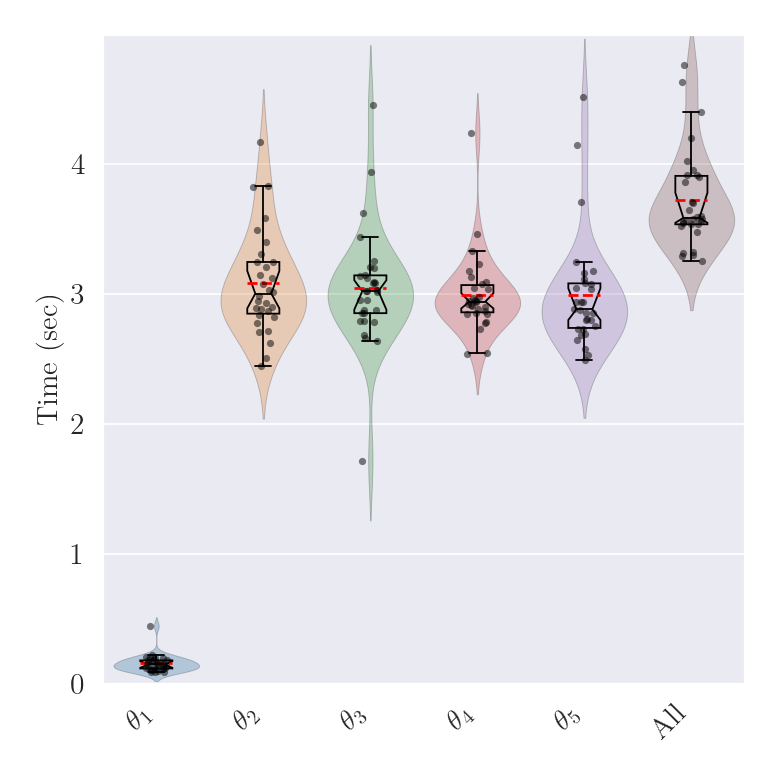}\label{fig:fpgsad_Time}}%
		\hfill
		\subfloat[HAPT]{\includegraphics[width=0.24\textwidth]{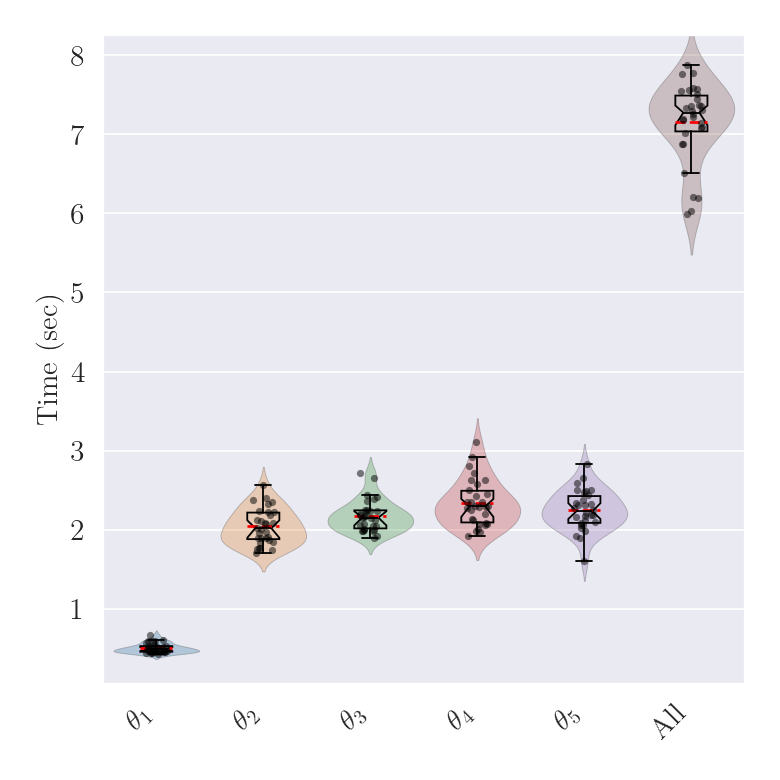}\label{fig:lrhapt_Time}}%
		\hfill
		\subfloat[ISOLET]{\includegraphics[width=0.24\textwidth]{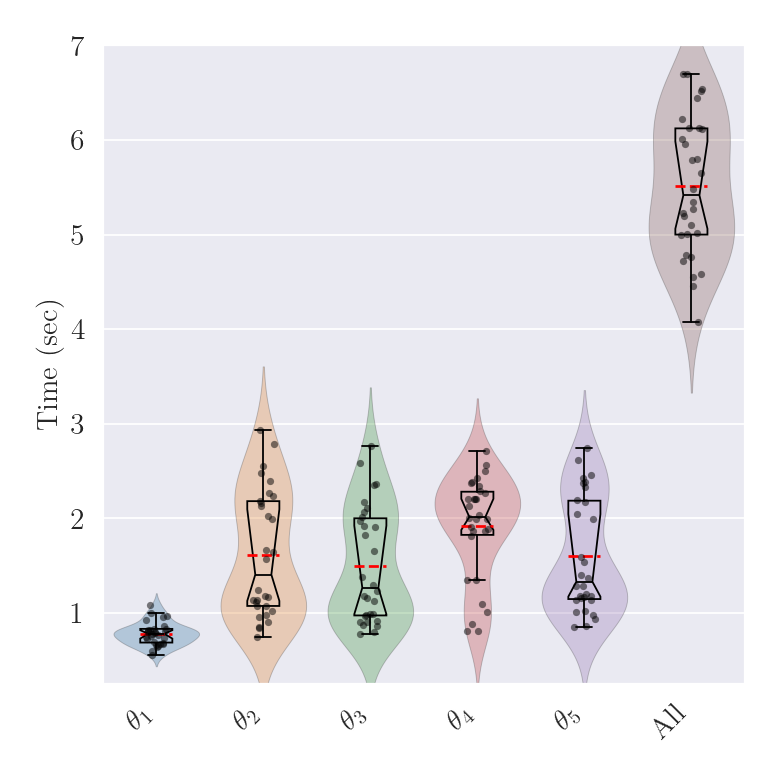}\label{fig:lrisolet_Time}}%
		\hfill
		\subfloat[PD]{\includegraphics[width=0.24\textwidth]{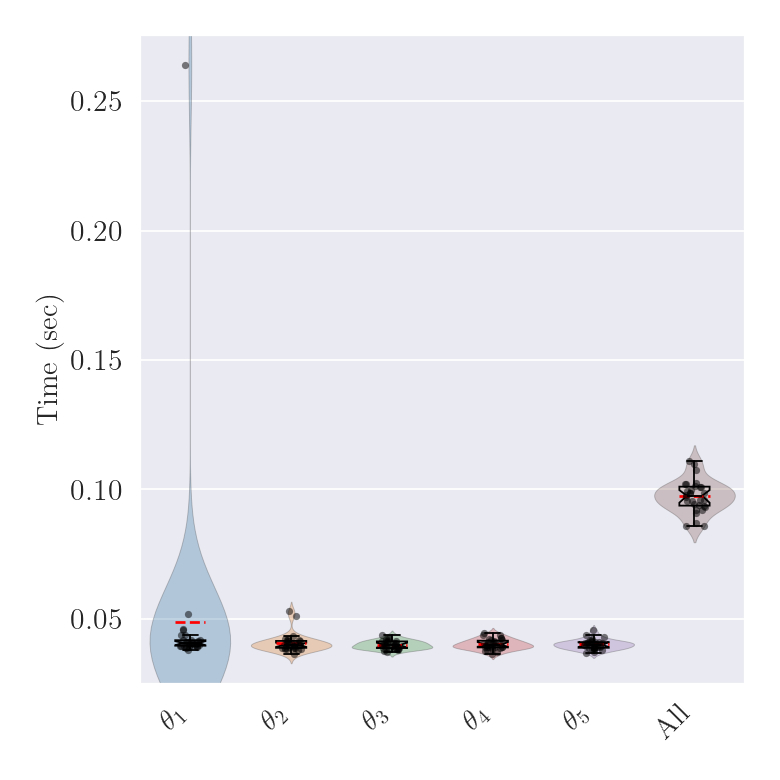}\label{fig:lrpd_Time}}
		%\end{comment}
		\caption[The distribution of the obtained running time for 30 Logistic Regression runs.]{The raincloud plot of running time results obtained from 30 Logistic Regression runs.}
		
		\label{fig:lr_Time}
	\end{figure*}
	%%%%%%%%%%%%%%%%%%%%%%%%%%%%%%%%%%%%%%%%%%%%%%%%%%%%%%%%%%%%%%%%%%%%%%%%%%%%%%
	
	%%%%%%%%%%%%%%%%%%%%%%%%%%%%%%%%%%%%%%%%%%%%%%%%%%%%%%%%%%%%%%%%%%%%%%%%%%%%%%
	\begin{figure*}[t] 
		\centering
		%\begin{comment}
		% First row
		\subfloat[APSF]{\includegraphics[width=0.24\textwidth]{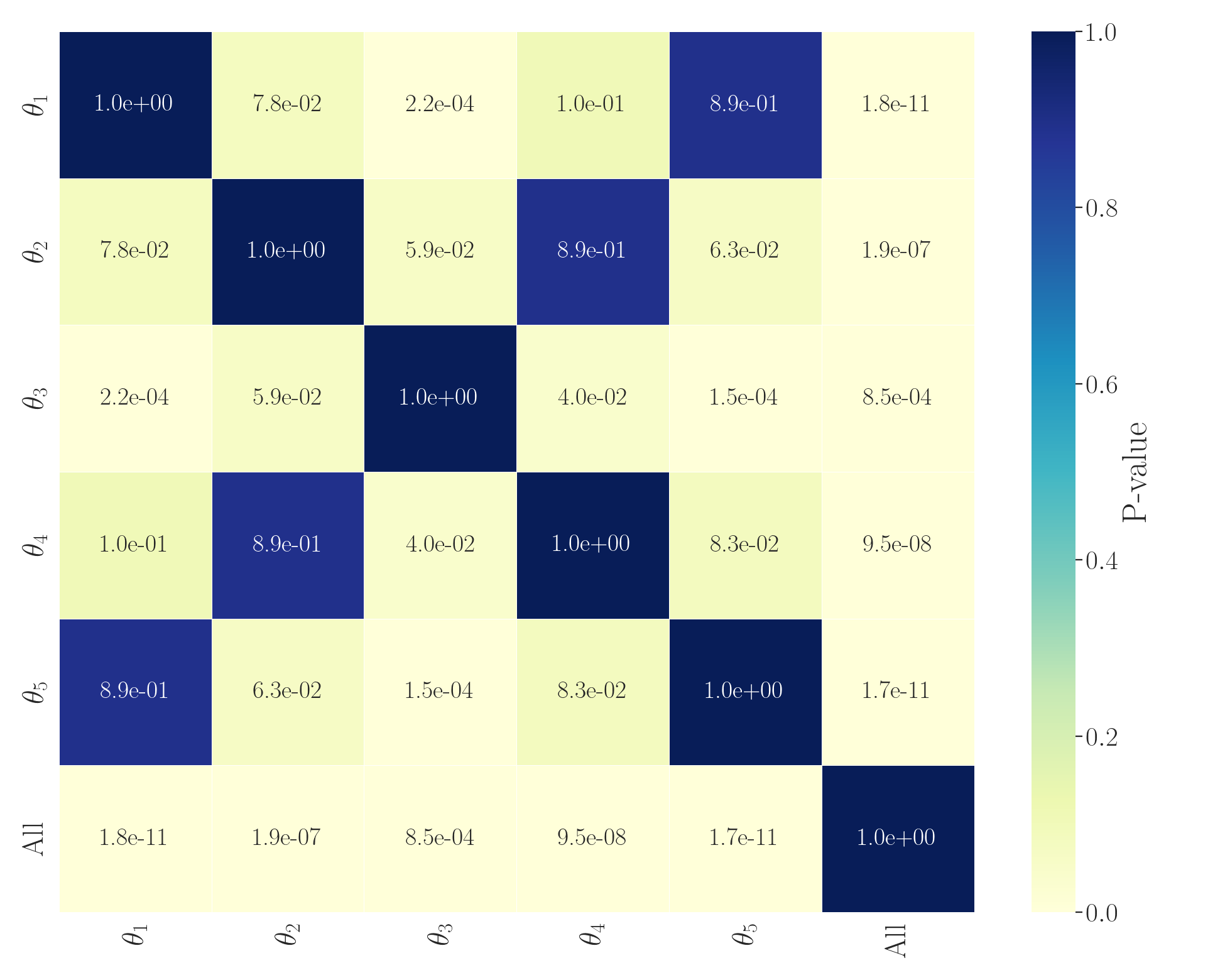}\label{fig:lrnemapsf_Time}}%
		\hfill
		\subfloat[ARWPM]{\includegraphics[width=0.24\textwidth]{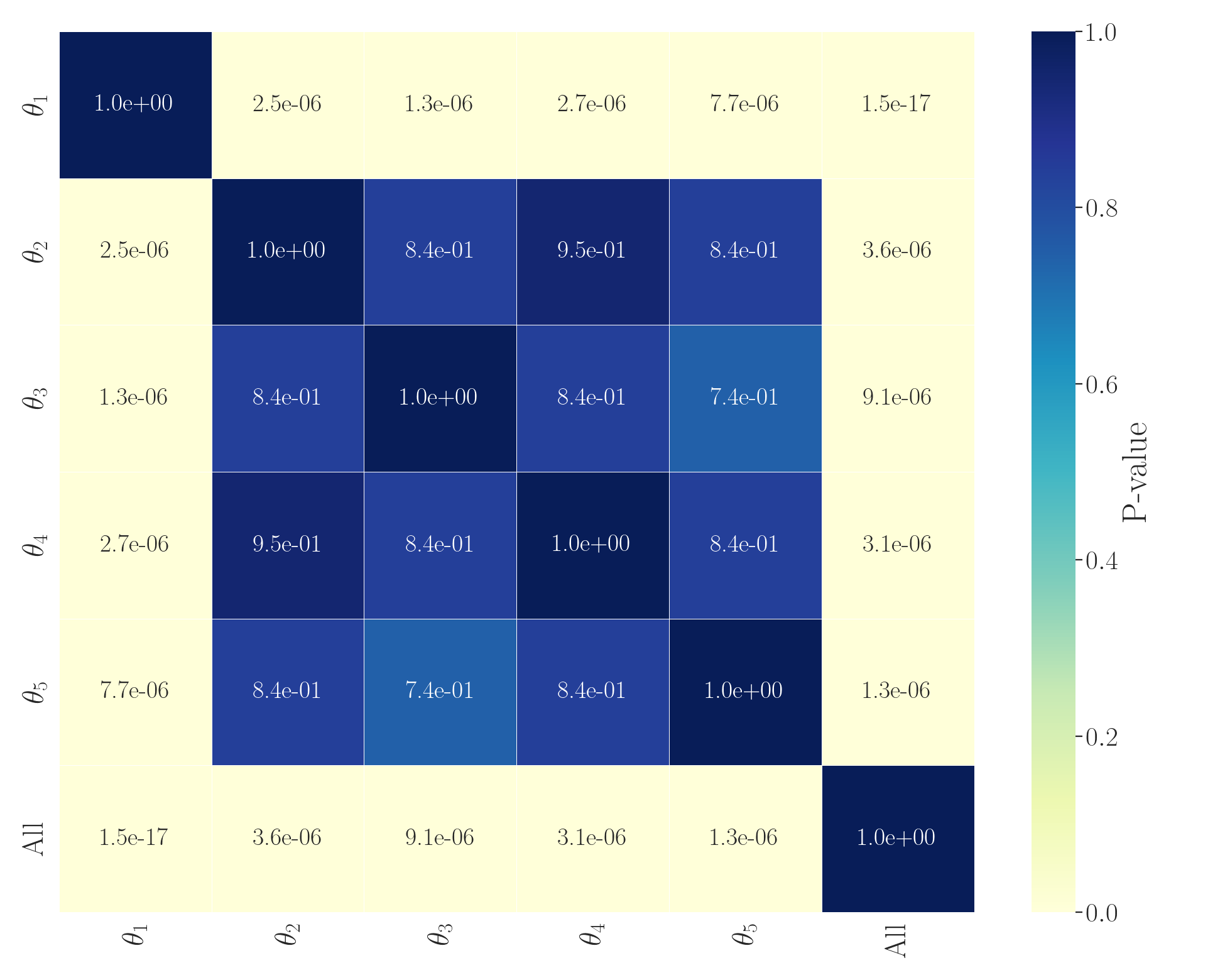}\label{fig:lrnemarwpm_Time}}%
		\hfill
		\subfloat[GECR]{\includegraphics[width=0.24\textwidth]{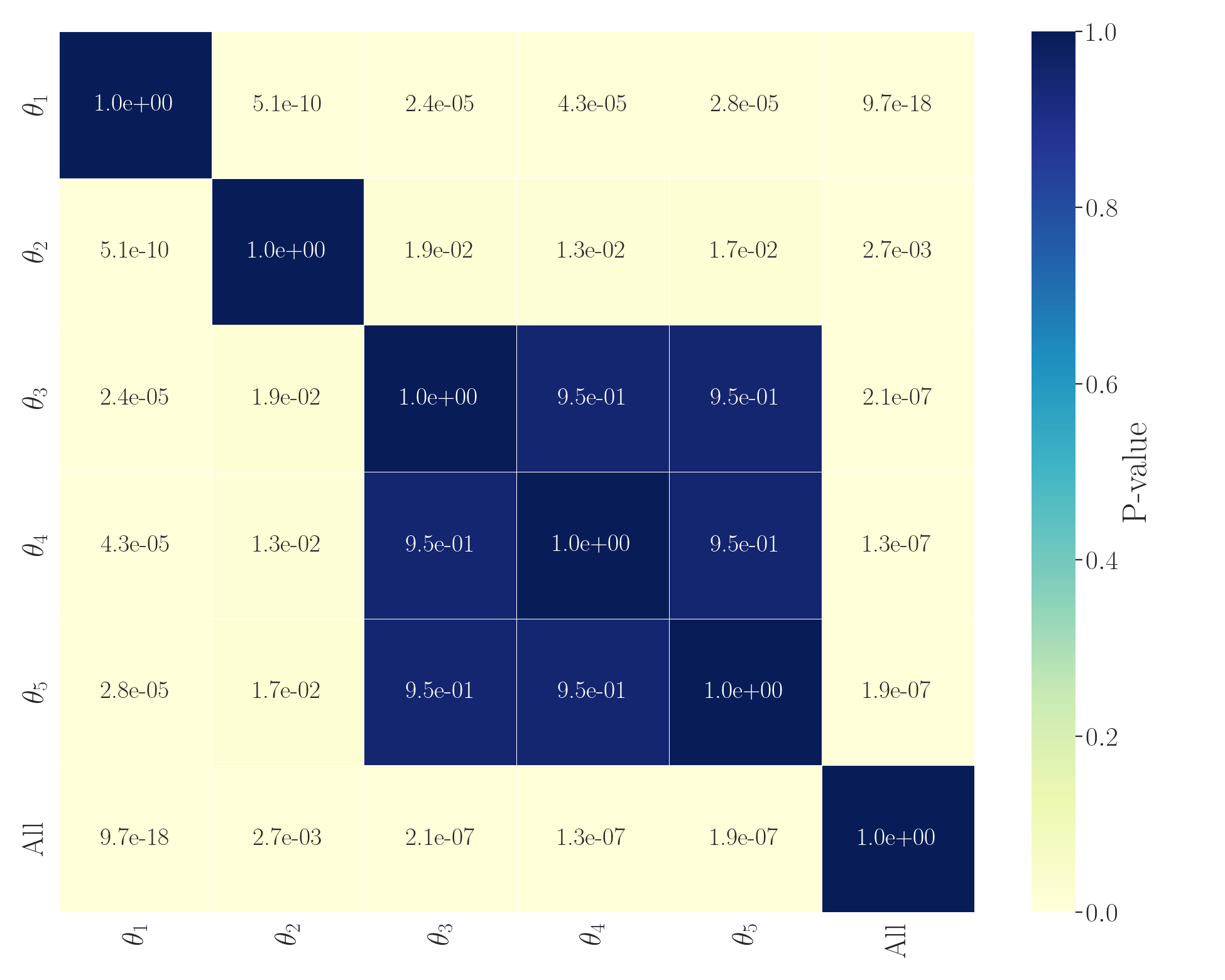}\label{fig:lrnemgecr_Time}}%
		\hfill
		\subfloat[GFE]{\includegraphics[width=0.24\textwidth]{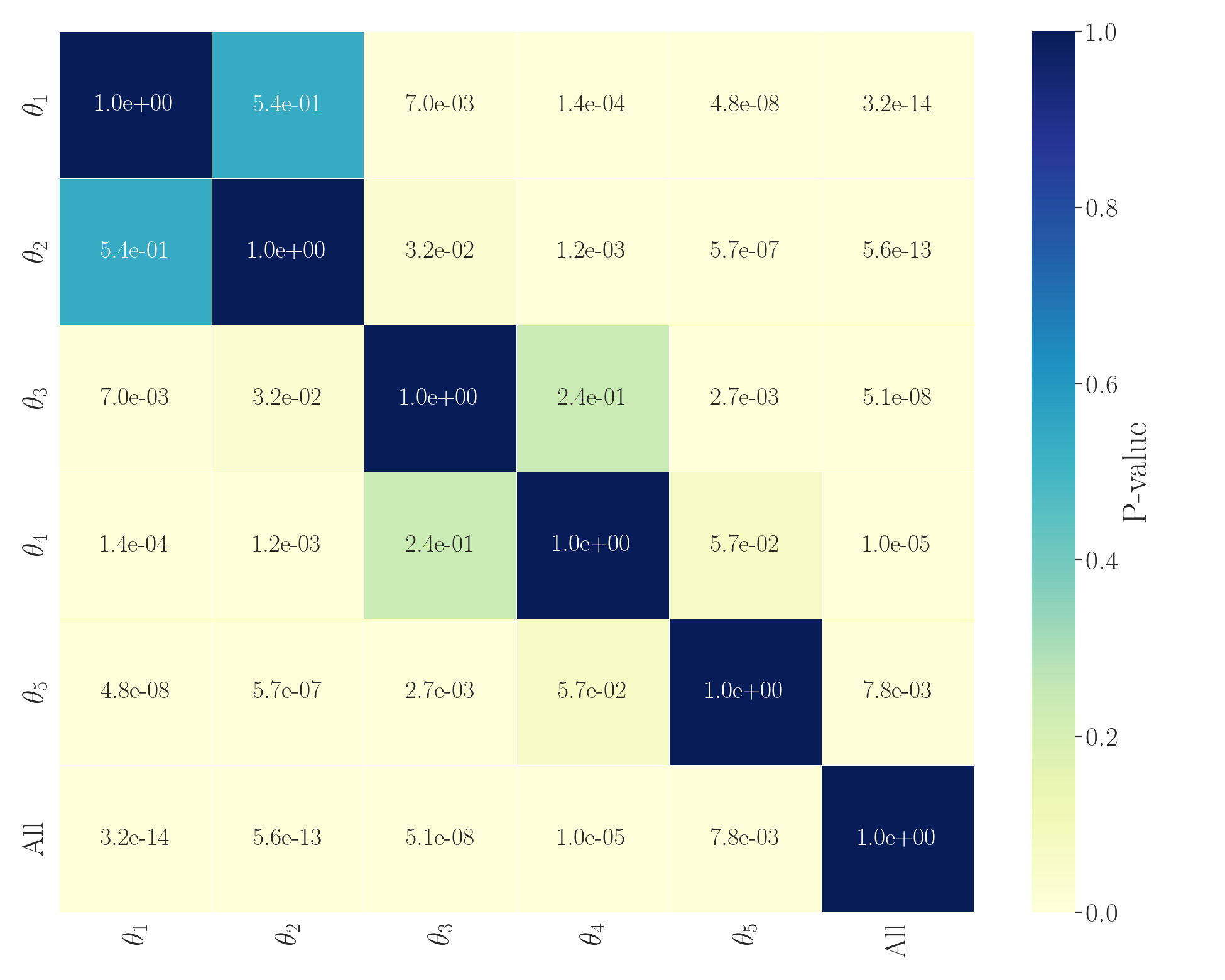}\label{fig:lrnemgfe_Time}}
		
		% Second row
		\subfloat[GSAD]{\includegraphics[width=0.24\textwidth]{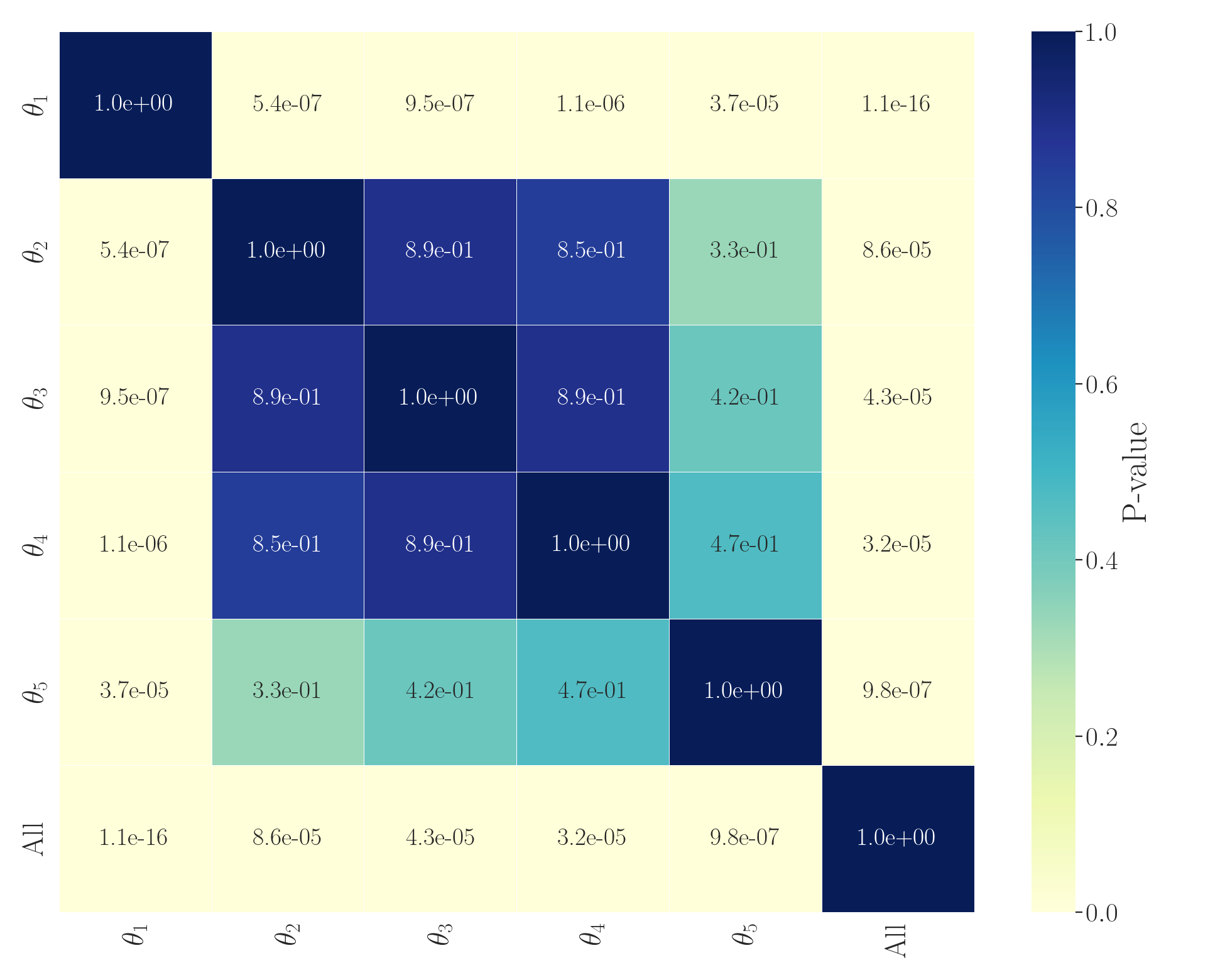}\label{fig:lrnemgsad_Time}}%
		\hfill
		\subfloat[HAPT]{\includegraphics[width=0.24\textwidth]{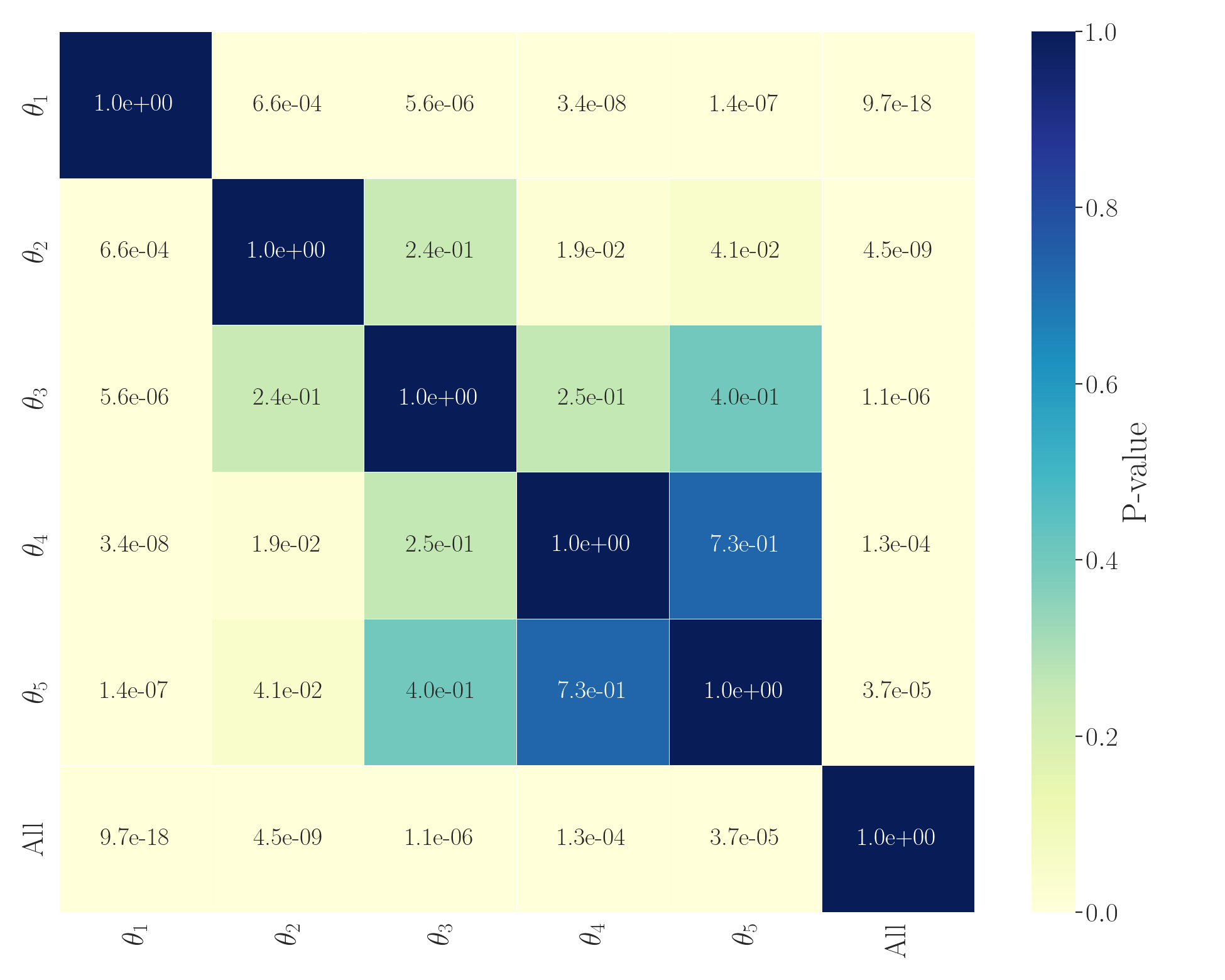}\label{fig:lrnemhapt_Time}}%
		\hfill
		\subfloat[ISOLET]{\includegraphics[width=0.24\textwidth]{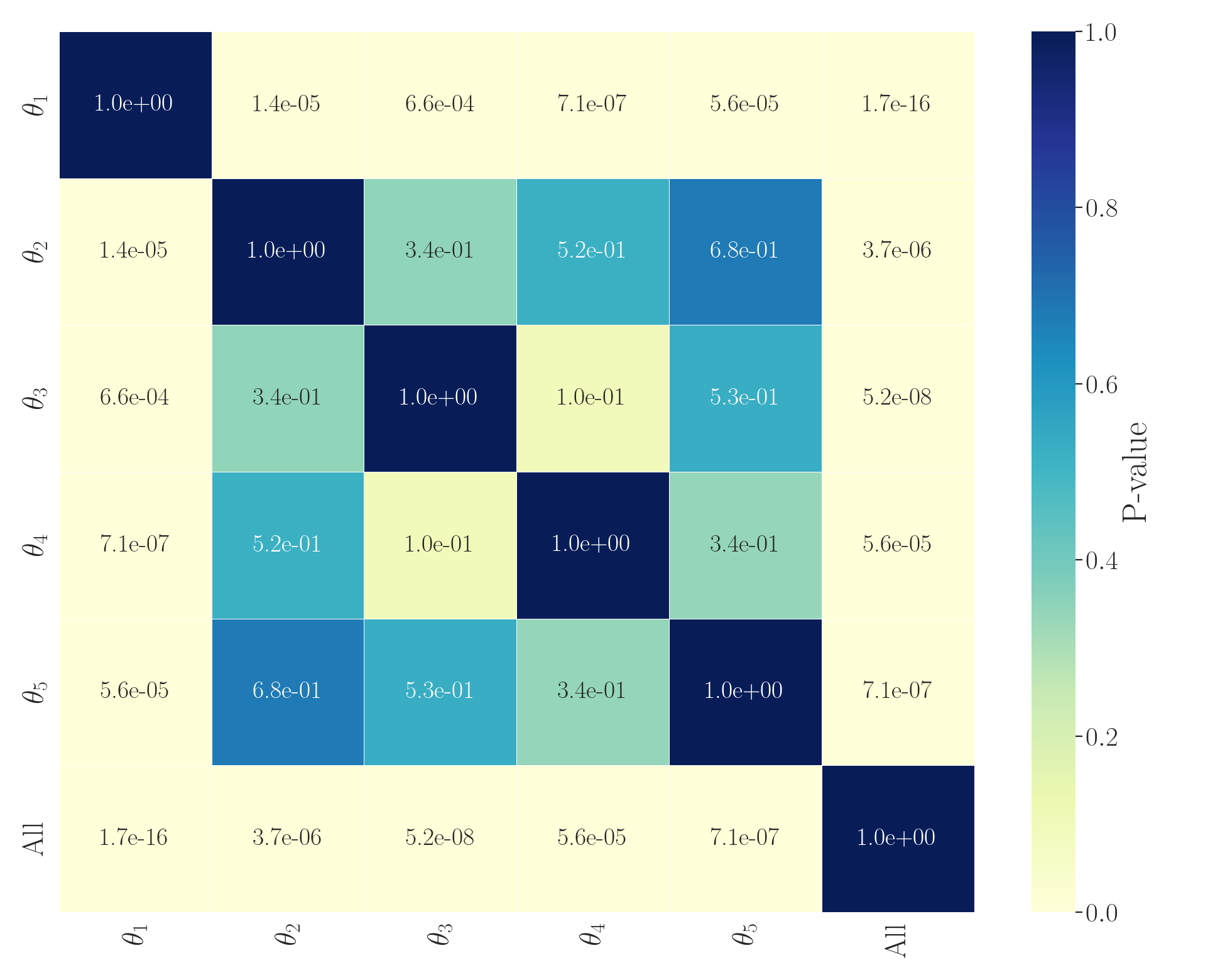}\label{fig:lrnemisolet_Time}}%
		\hfill
		\subfloat[PD]{\includegraphics[width=0.24\textwidth]{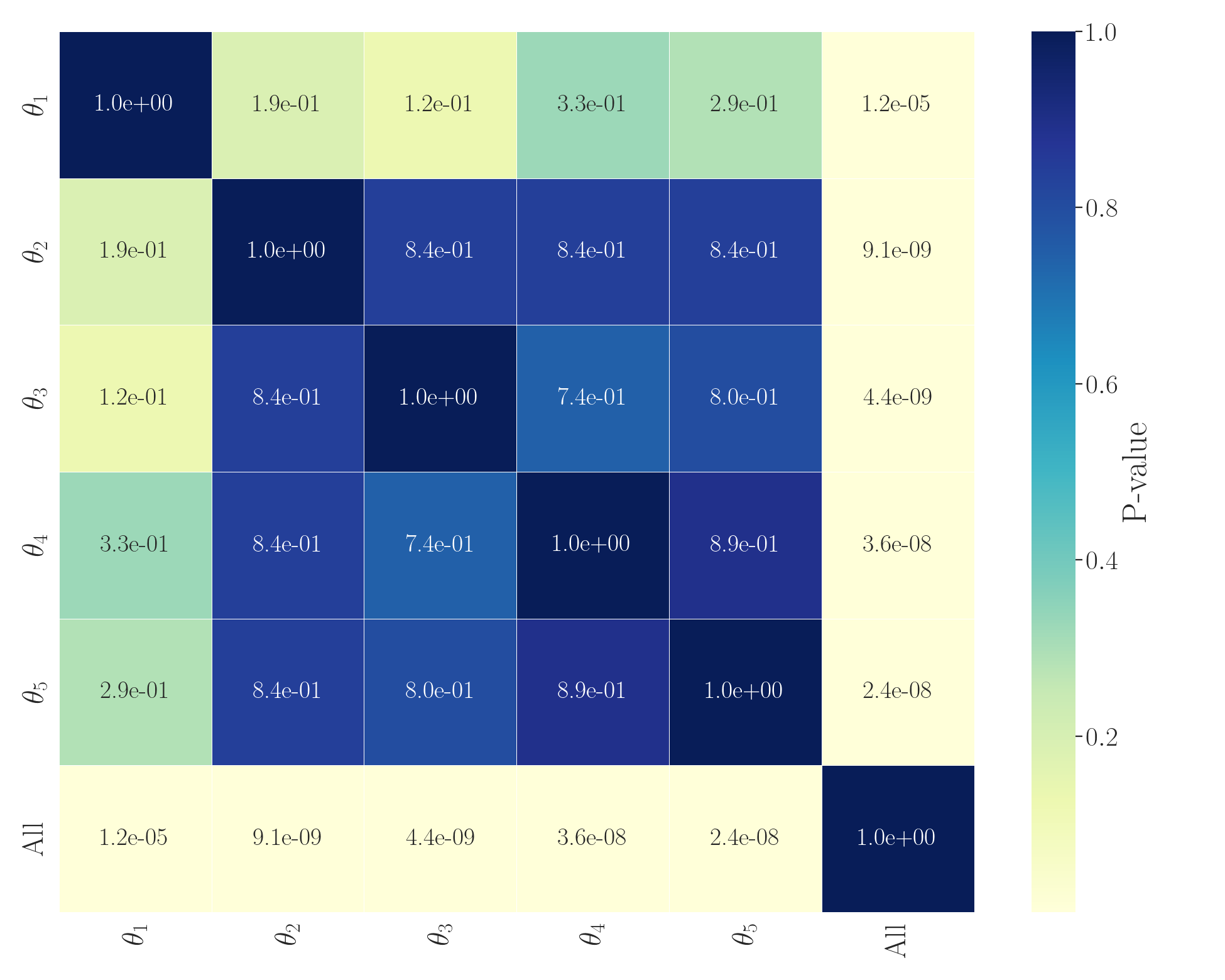}\label{fig:lrnempd_Time}}
		%\end{comment}
		\caption[The adjusted Conover's P-values for the running time of 30 Logistic Regression runs.]{The results of the Conover post-hoc test on testing data’s running time obtained from 30 Logistic Regression runs.}
		
		\label{fig:lrnem_Time}
	\end{figure*}
	\FloatBarrier
	%%%%%%%%%%%%%%%%%%%%%%%%%%%%%%%%%%%%%%%%%%%%%%%%%%%%%%%%%%%%%%%%%%%%%%%%%%%%%%
	
	%%%%%%%%%%%%%%%%%%%%%%%%%%%%%%%%%%%%%%%%%%%%%%%%%%%%%%%%%%%%%%%%%%%%%%%%%%%%%%
	\begin{figure*}[htbp] 
		\centering
		%\begin{comment}
		% First row
		\subfloat[APSF]{\includegraphics[width=0.24\textwidth]{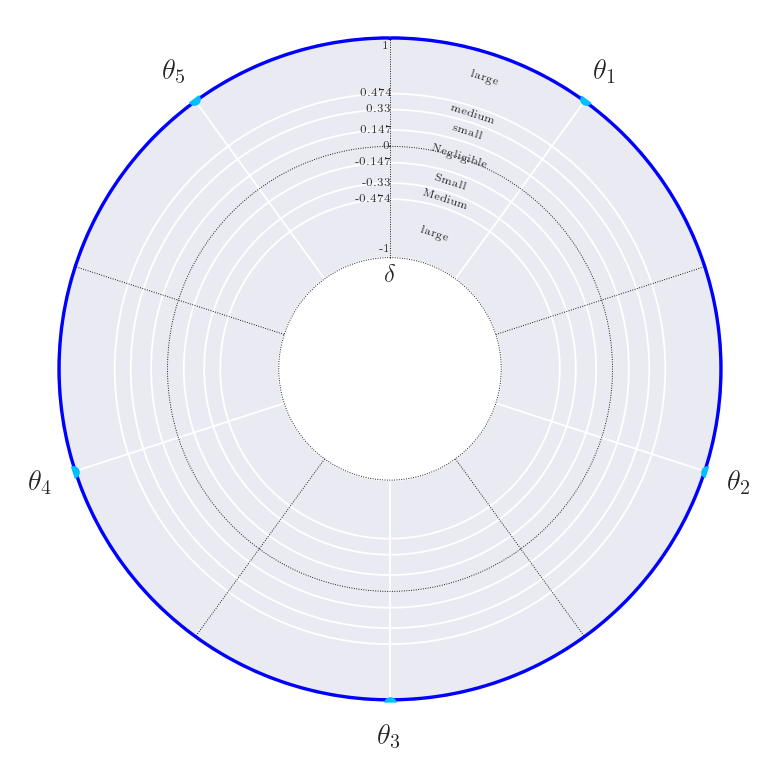}\label{fig:lrcliffapsf_Time}}%
		\hfill
		\subfloat[ARWPM]{\includegraphics[width=0.24\textwidth]{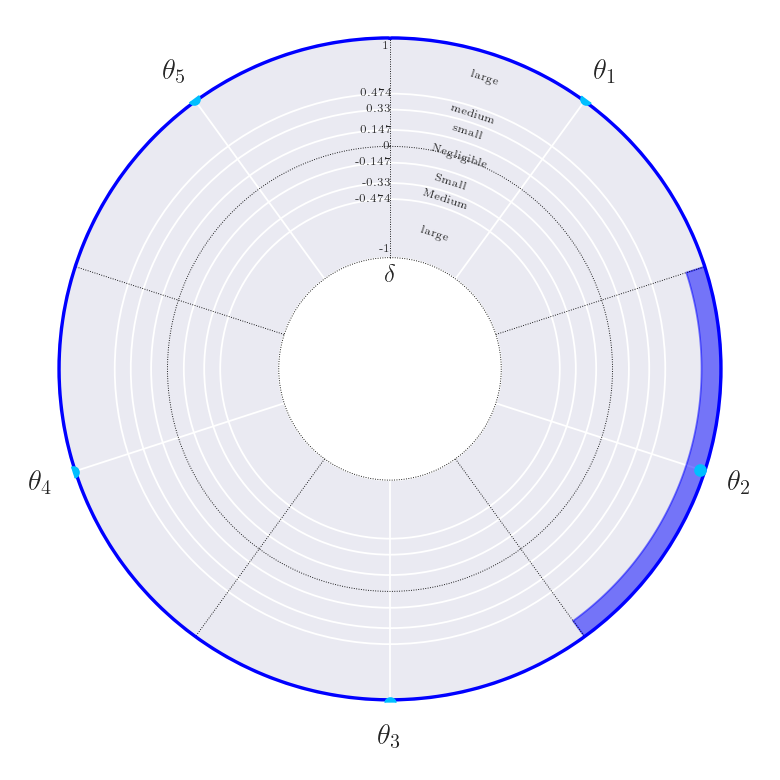}\label{fig:lrcliffarwpm_Time}}%
		\hfill
		\subfloat[GECR]{\includegraphics[width=0.24\textwidth]{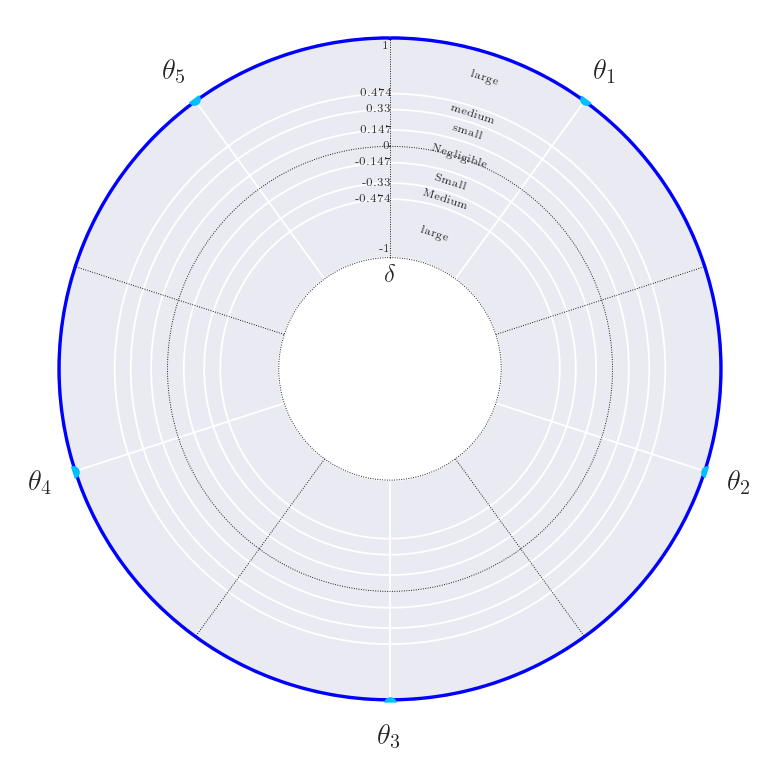}\label{fig:lrcliffgecr_Time}}%
		\hfill
		\subfloat[GFE]{\includegraphics[width=0.24\textwidth]{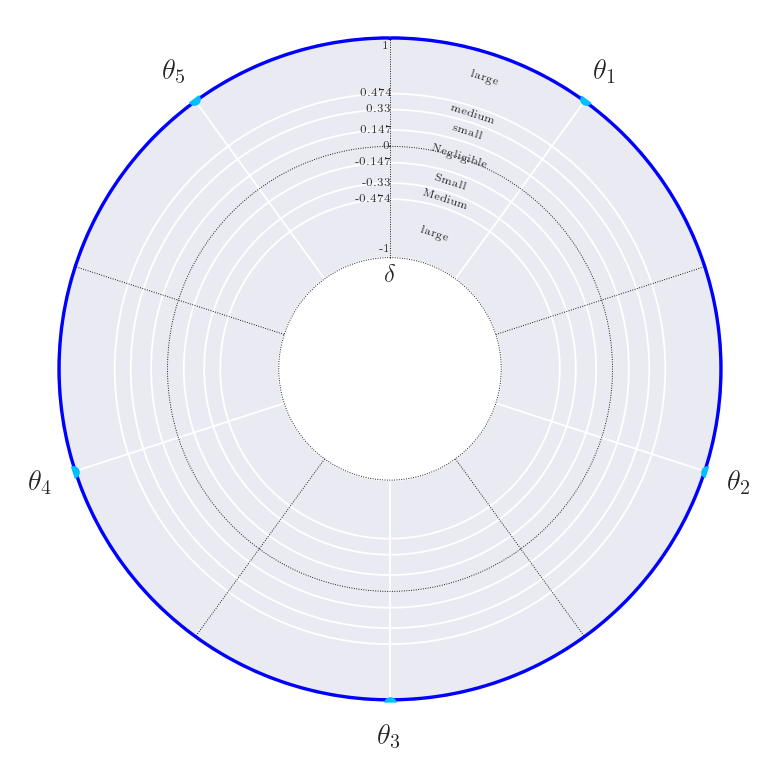}\label{fig:lrcliffgfe_Time}}
		
		% Second row
		\subfloat[GSAD]{\includegraphics[width=0.24\textwidth]{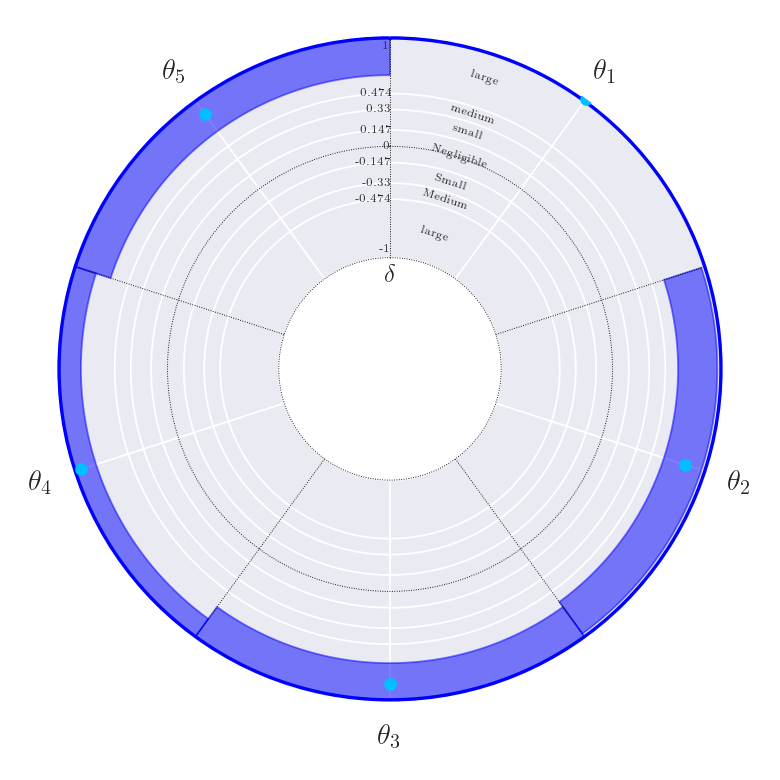}\label{fig:lrcliffgsad_Time}}%
		\hfill
		\subfloat[HAPT]{\includegraphics[width=0.24\textwidth]{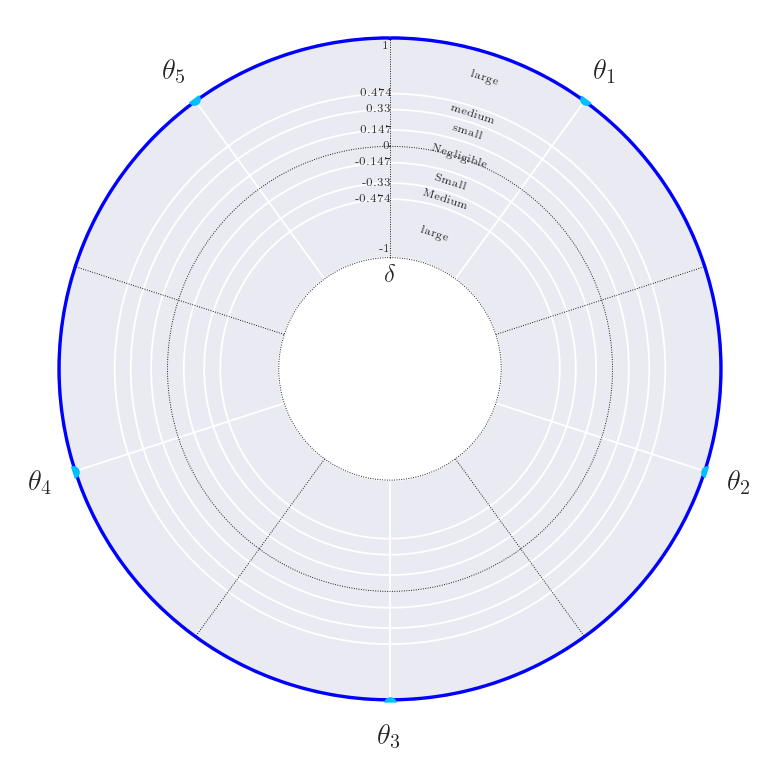}\label{fig:lrcliffhapt_Time}}%
		\hfill
		\subfloat[ISOLET]{\includegraphics[width=0.24\textwidth]{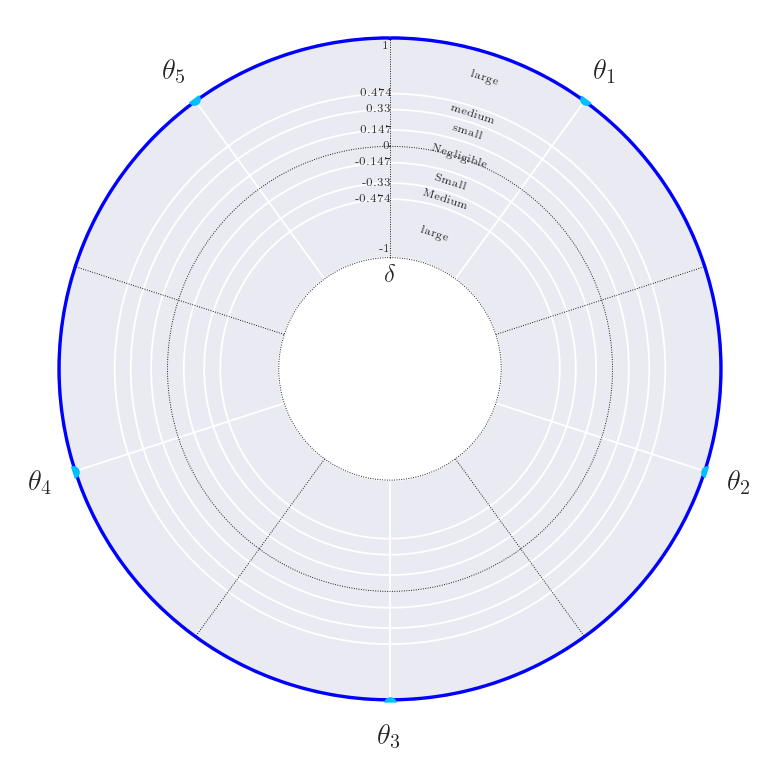}\label{fig:lrcliffisolet_Time}}%
		\hfill
		\subfloat[PD]{\includegraphics[width=0.24\textwidth]{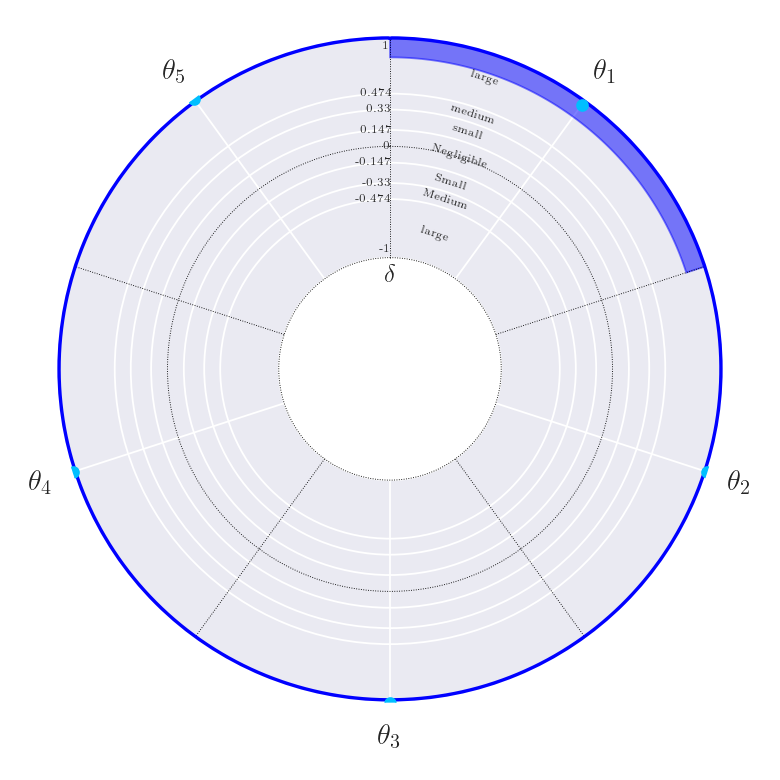}\label{fig:lrcliffpd_Time}}
		%\end{comment}
		\caption[The Cliff's $\delta$ effect size measure and its 95\% confidence intervals for the running time of 30 Logistic Regression runs.]{Effect size analysis of running time across 30 Logistic Regression runs using Cliff's $\delta$. Each point represents the actual value obtained, with segments denoting 95\% confidence intervals based on 10,000 bootstrap resamplings. The outer ring color visualizes the statistical significance: grey illustrates no significant difference (adjusted Friedman's P-value$>0.05$), while color indicates significant differences; blue indicates at least one view outperforms the benchmark (adjusted Conover's p-value$ < 0.05$, Cliff's $\delta > 0$), and red signifies all views underperform relative to the benchmark (adjusted Conover's p-value$ < 0.05$, Cliff's $\delta < 0$). Segment colors show performance difference against the benchmark: grey for no significant difference (adjusted Conover's p-value$  > 0.05$), blue for better performance (Cliff's $\delta > 0$), and red for worse performance (Cliff's $\delta < 0$).}
		
		\label{fig:lrcliff_Time}
	\end{figure*}
	%%%%%%%%%%%%%%%%%%%%%%%%%%%%%%%%%%%%%%%%%%%%%%%%%%%%%%%%%%%%%%%%%%%%%%%%%%%%%%
	
	%%%%%%%%%%%%%%%%%%%%%%%%%%%%%%%%%%%%%%%%%%%%%%%%%%%%%%%%%%%%%%%%%%%%%%%%%%%%%%
	\begin{table*}[htbp]
		\centering
		\caption[The results of Friedman and Conover tests and Cliff's $\delta$ analysis for the running time of 30 Logistic Regression runs.]{Statistical comparison of Running Time (seconds) for testing data obtained from Logistic Regression runs. W, T, and L denote win, tie, and loss based on Friedman and Conover's p-values. Effect sizes are calculated using Cliff's Delta method and are categorized as negligible, small, medium, or large.}
		\label{tab:lrtime}
		%\resizebox{\linewidth}{!}{%
			\begin{tabular}{c|ccccccccc}
				\hline
				\multicolumn{10}{c}{Logistic Regression's Running Time (seconds)}\\
				\hline
				Dataset & $\theta_1$ & $\theta_2$ & $\theta_3$ & $\theta_4$ & $\theta_5$ & $E_{1:2}$ & $E_{1:3}$ & $E_{1:4}$ & $E_{1:5}$ \\
				\hline
				APSF  & W (large) & W (large) & W (large) & W (large) & W (large) & --  & --  & --  & --  \\
				ARWPM  & W (large) & W (large) & W (large) & W (large) & W (large) & --  & --  & --  & --  \\
				GECR  & W (large) & W (large) & W (large) & W (large) & W (large) & --  & --  & --  & --  \\
				GFE  & W (large) & W (large) & W (large) & W (large) & W (large) & --  & --  & --  & --  \\
				GSAD  & W (large) & W (large) & W (large) & W (large) & W (large) & --  & --  & --  & --  \\
				HAPT  & W (large) & W (large) & W (large) & W (large) & W (large) & --  & --  & --  & --  \\
				ISOLET  & W (large) & W (large) & W (large) & W (large) & W (large) & --  & --  & --  & --  \\
				PD  & W (large) & W (large) & W (large) & W (large) & W (large) & --  & --  & --  & --  \\
				\hline
				W - T - L  & 8 - 0 - 0 & 8 - 0 - 0 & 8 - 0 - 0 & 8 - 0 - 0 & 8 - 0 - 0 & -- & -- & -- & -- \\
				\hline
			\end{tabular}
			%}
	\end{table*}
	%%%%%%%%%%%%%%%%%%%%%%%%%%%%%%%%%%%%%%%%%%%%%%%%%%%%%%%%%%%%%%%%%%%%%%%%%%%%%%

	%\subsection{The Summary of the results for Logistic Regression}
	%\label{ssub:lrsum}
	% Logit: Winners
	%%%%%%%%%%%%%%%%%%%%%%%%%%%%%%%%%%%%%%%%%%%%%%%%%%%%%%%%%%%%%%%%%%%%%%%%%%%%%%
	\begin{table*}[htbp]
		\centering
		\caption[The summary of statistical comparison of results obtained from 30 Logistic Regression runs.]{The summary of statistical comparison of results for testing data obtained from Logistic Regression runs. W, T, and L denote win, tie, and loss based on Friedman and Conover's adjusted p-values, and Cliff's $\delta$ effect size analysis.}
		\label{tab:lrwtl}
		%\resizebox{\textwidth}{!}{%
			\begin{tabular}{cccccccccc}
				\hline
				\multicolumn{10}{c}{Logistic Regression (Win - Tie - Loss)}
				\\
				\hline
				Metric & $\theta_1$ & $\theta_2$ & $\theta_3$ & $\theta_4$ & $\theta_5$ & $E_{1:2}$ & $E_{1:3}$ & $E_{1:4}$ & $E_{1:5}$ \\
				\hline
				$F_{1}$ Score & 1 - 2 - 5 & 1 - 3 - 4 & 0 - 3 - 5 & 0 - 2 - 6 & 0 - 1 - 7 & 1 - 3 - 4 & 3 - 2 - 3 & 3 - 2 - 3 & 4 - 4 - 0 \\
				AUC & 1 - 1 - 6 & 0 - 1 - 7 & 0 - 2 - 6 & 0 - 1 - 7 & 0 - 0 - 8 & 1 - 2 - 5 & 2 - 3 - 3 & 2 - 3 - 3 & 2 - 6 - 0 \\
				Loss & 2 - 1 - 5 & 1 - 1 - 6 & 0 - 3 - 5 & 0 - 3 - 5 & 0 - 2 - 6 & 2 - 2 - 4 & 3 - 1 - 4 & 4 - 0 - 4 & 4 - 1 - 3 \\
				MEC & 3 - 0 - 5 & 3 - 0 - 5 & 2 - 1 - 5 & 2 - 1 - 5 & 1 - 2 - 5 & 3 - 0 - 5 & 3 - 0 - 5 & 1 - 2 - 5 & 1 - 2 - 5 \\
				MEW & 4 - 1 - 3 & 3 - 2 - 3 & 4 - 2 - 2 & 2 - 4 - 2 & 3 - 5 - 0 & 5 - 1 - 2 & 4 - 2 - 2 & 4 - 2 - 2 & 4 - 2 - 2 \\
				Time & 8 - 0 - 0 & 8 - 0 - 0 & 8 - 0 - 0 & 8 - 0 - 0 & 8 - 0 - 0 & -- & -- & -- & -- \\
				\hline
			\end{tabular}
			%}
	\end{table*}
	\FloatBarrier
	%%%%%%%%%%%%%%%%%%%%%%%%%%%%%%%%%%%%%%%%%%%%%%%%%%%%%%%%%%%%%%%%%%%%%%%%%%%%%%
	
%\end{document}
\end{document}